\renewcommand{\P}{\mathbb{P}}
\newcommand{\E}{\mathbb{E}}
\newcommand{\Z}{\mathbb{Z}}
\newcommand{\R}{\mathbb{R}}
\newcommand{\C}{\mathbb{C}}
\newcommand{\eps}{\varepsilon} 
\def\id{{\mathbf I}}
\newcommand{\<}{\langle}
\renewcommand{\>}{\rangle}
\newcommand{\sgn}{\mathrm{sgn}}
\newcommand{\diag}{\text{diag}}
\newcommand{\op}{{\rm op}}
\newcommand{\ones}{{\boldsymbol 1}}
\def\sT{{\mathsf T}}
\def\bzero{{\boldsymbol 0}}
\DeclareMathOperator*{\argmin}{arg\,min}
\newtheorem{theorem}{Theorem}
\newtheorem*{theorem*}{Theorem}
\newtheorem{definition}[theorem]{Definition}
\newtheorem{proposition}[theorem]{Proposition}
\newtheorem{remark}[theorem]{Remark}
\newtheorem{lemma}[theorem]{Lemma}
\newtheorem{corollary}[theorem]{Corollary}
\newtheorem{example}[theorem]{Example}
\newtheorem{claim}[theorem]{Claim}
\DeclareSymbolFont{rsfs}{U}{rsfs}{m}{n}
\DeclareSymbolFontAlphabet{\mathscrsfs}{rsfs}
\newcommand{\poly}{\mathrm{poly}}
\def\bA{{\boldsymbol A}}
\def\bB{{\boldsymbol B}}
\def\bD{{\boldsymbol D}}
\def\bG{{\boldsymbol G}}
\def\bH{{\boldsymbol H}}
\def\bK{{\boldsymbol K}}
\def\bM{{\boldsymbol M}}
\def\bN{{\boldsymbol N}}
\def\bP{{\boldsymbol P}}
\def\bQ{{\boldsymbol Q}}
\def\bR{{\boldsymbol R}}
\def\bT{{\boldsymbol T}}
\def\bW{{\boldsymbol W}}
\def\ba{{\boldsymbol a}}
\def\bb{{\boldsymbol b}}
\def\bg{{\boldsymbol g}}
\def\bm{{\boldsymbol m}}
\def\bu{{\boldsymbol u}}
\def\bv{{\boldsymbol v}}
\def\bw{{\boldsymbol w}}
\def\bx{{\boldsymbol x}}
\def\by{{\boldsymbol y}}
\def\bz{{\boldsymbol z}}
\def\bbeta{{\boldsymbol \beta}}
\def\balpha{{\boldsymbol \alpha}}
\def\bgamma{{\boldsymbol \gamma}}
\def\bdelta{{\boldsymbol\delta}}
\def\bphi{{\boldsymbol \phi}}
\def\brho{{\boldsymbol \rho}}
\def\btheta{{\boldsymbol \theta}}
\def\bxi{{\boldsymbol \xi}}
\def\bnu{{\boldsymbol \nu}}
\def\bDelta{{\boldsymbol \Delta}}
\def\bLambda{{\boldsymbol \Lambda}}
\def\bTheta{{\boldsymbol \Theta}}
\def\hba{{\hat {\boldsymbol a}}}
\def\hf{{\hat f}}
\def\spn{{\rm span}}
\def\de{{\rm d}}
\def\de{{\rm d}}
\def\Unif{{\rm Unif}}
\def\spn{{\rm span}}
\def\cV{{\mathcal V}}
\def\cP{{\mathcal P}}
\def\cT{{\mathcal T}}
\def\cC{{\mathcal C}}
\def\cF{{\mathcal F}}
\def\cS{{\mathcal S}}
\def\cI{{\mathcal I}}
\def\cV{{\mathcal V}}
\def\cP{{\mathcal P}}
\def\cT{{\mathcal T}}
\def\cH{{\mathcal H}}
\def\Unif{{\sf Unif}}
\def\normal{{\sf N}}
\def\proj{{\mathsf P}}
\def\NN{{\sf NN}}
\def\naturals{{\mathbb N}}
\def\normal{{\sf N}}
\def\proj{{\mathsf P}}
\def\Unif{{\sf Unif}}
\def\normal{{\sf N}}
\def\proj{{\mathsf P}}
\def\NN{{\sf NN}}
\def\naturals{{\mathbb N}}
\def\proj{{\mathsf P}}
\def\hba{{\hat {\boldsymbol a}}}
\def\hf{{\hat f}}
\def\cX{{\mathcal X}}
\def\cF{{\mathcal F}}
\def\cS{{\mathcal S}}
\def\cI{{\mathcal I}}
\def\de{{\rm d}}
\def\Unif{{\rm Unif}}
\def\normal{{\sf N}}
\def\bDelta{{\boldsymbol \Delta}}
\def\cX{{\mathcal X}}
\def\bA{{\boldsymbol A}}
\def\btheta{{\boldsymbol \theta}}
\def\bTheta{{\boldsymbol \Theta}}
\def\bLambda{{\boldsymbol \Lambda}}
\def\cT{{\mathcal T}}
\def\cV{{\mathcal V}}
\def\bP{{\boldsymbol P}}
\def\diag{{\rm diag}}
\def\bD{{\boldsymbol D}}
\def\bb{{\boldsymbol b}}
\def\hf{\hat f}
\def\bR{{\boldsymbol R}}
\def\bc{{\boldsymbol c}}
\def\bzeta{{\boldsymbol \zeta}}
\def\ind{\mathbbm{1}}
\newcommand{\pd}[2]{\frac{\partial #1}{\partial #2}} 
\newcommand{\sm}{\setminus}
\newcommand{\fNN}{\hat{f}_{\NN}}
\newcommand{\EE}{\mathbb{E}}
\def\hatQ{\hat{Q}}
\def\hatbQ{\hat{\bQ}}
\def\hatbu{\hat{\bu}}
\def\hatbM{\hat{\bM}}
\def\barbw{\bar{\bw}}
\def\err{\mathsf{err}}
\def\br{{\boldsymbol r}}
\def\bn{{\boldsymbol n}}
\def\oa{\overline{a}}
\def\obu{\overline{\bu}}
\def\orho{\overline{\rho}}
\def\ubtheta{\underline{\btheta}}
\def\ubu{\underline{\bu}}
\def\hf{\hat f}
\def\os{\overline{s}}
\def\obtheta{\overline{\btheta}}
\def\bve{\big\vert}
\def\barbtheta{\bar{\btheta}}
\def\barbTheta{\bar{\bTheta}}
\def\barbu{\bar{\bu}}
\def\rA{{\rm A}}
\def\hh{\hat h}
\def\ou{\overline{u}}
\def\cR{\mathcal{R}}
\def\oR{\overline{R}}
\def\ocS{\overline{\cS}}
\def\hg{\hat{g}}
\def\hu{\hat{u}}
\def\hbu{\hat{\bu}}
\def\obH{\overline{\bH}}
\def\obLambda{\overline{\bLambda}}
\def\rD{{\rm D}}
\def\ubtheta{\underline{\obtheta}}
\def\om{\overline{m}}
\crefname{claim}{claim}{claims}
\crefname{fact}{fact}{facts}
\title{The merged-staircase property:   a necessary and nearly sufficient condition for SGD learning of sparse functions on two-layer neural networks}
\author{Emmanuel Abbe\thanks{Mathematics Institute, EPFL}, \;\; Enric Boix-Adser\`a\thanks{Department of Electrical Engineering and Computer Science, MIT}, \;\; Theodor Misiakiewicz\thanks{Department of
    Statistics, Stanford University} }
\begin{document}

\maketitle

\begin{abstract}
It is currently known how to characterize functions that neural networks can learn with SGD for two extremal parametrizations: neural networks in the linear regime, and neural networks with no structural constraints. However, for the main parametrization of interest ---non-linear but regular networks--- no tight characterization has yet been achieved, despite significant developments. 

We take a step in this direction by considering depth-2 neural networks trained by SGD in the mean-field regime. We consider functions on binary inputs that depend on a latent low-dimensional subspace (i.e., small number of coordinates). This regime is of interest since it is poorly understood how neural networks routinely tackle high-dimensional datasets and adapt to latent low-dimensional structure without suffering from the curse of dimensionality.
Accordingly, we study SGD-learnability with $O(d)$ sample complexity in a large ambient dimension $d$. 

Our main results characterize a hierarchical property ---the merged-staircase property--- that is both {\it necessary and nearly sufficient} for learning in this setting. 
 We further show that non-linear training is necessary: for this class of functions, linear methods on any feature map (e.g., the NTK) are not capable of learning efficiently. The key tools are a new ``dimension-free'' dynamics approximation result that applies to functions defined on a latent space of low-dimension, a proof of global convergence based on polynomial identity testing, and an improvement of lower bounds against linear methods for non-almost orthogonal functions.

\end{abstract}

\section{Introduction}

Major research activity has recently been devoted to understanding what function classes can be learned by SGD on neural networks. Two extremal cases are well understood. On one extreme, neural networks can be parametrized to collapse under SGD to linear models, for which a clear picture has been drawn \cite{jacot2018neural,li2018learning,du2018gradient,du2019gradient,allen2019convergence,allen2019learning,arora2019fine,zou2020gradient,oymak2020toward}. On the other extreme, neural networks with zero parametrization constraint (besides polynomial size) have been shown to be able to emulate essentially any efficient learning algorithm \cite{AS20,newAS} albeit with non-regular\footnote{Here we refer to `regular' for architectures used in tangent kernel results or more generally architectures used in neural network applications.} architectures. So both of these extremes admit a fairly complete characterization. However, none of these seem to capture the right behavior behind deep learning, or more specifically, behind {\it non-linear but regular networks}. Such networks are known to go beyond linear learning \cite{bach2017breaking,ghorbani2021neural,daniely20parities,refinetti2021classifying,allen19can,ghorbani2019limitations,yehudai19power,allen20backward,li20beyondntk} (even though the NTK can be competitive on several instances \cite{Geiger_2020}), and seem to exploit structural properties of the target functions in order to efficiently build their features.

Can we thus characterize learning in the non-linear regime for regular networks? Various important results have been developed in this direction, we focus here on the most relevant to us.   
\cite{chizat2018global,mei2018mean,rotskoff2018neural,sirignano2020mean} show that for a certain scaling at initialization, the SGD dynamics on large-width neural networks concentrates on a fully non-linear dynamics, the mean-field dynamics, described by a Wasserstein gradient flow, contrasting with the linear dynamics of the NTK regime \cite{jacot2018neural}.
In 
\cite{allen19can,allen2020backward}, the power of deep networks is demonstrated by showing how SGD and quadratic activations can efficiently learn a non-trivial teacher class hierarchically, with the notion of backward feature correction   \cite{allen2020backward}.

However, no tight necessary and sufficient characterization of what functions are learnable emerges from these works. The difficulty being that tight necessity results are difficult to obtain in such a setting since SQ-like arguments \cite{blum94weakly,query2,kearns1998efficient,parity_blum,feldman_gen,yang,vempala2,parity_conj,AS20,newAS,goel} are not expected to be tight (besides for the extreme case of unconstrained networks \cite{AS20,newAS}), and sufficiency results are significantly more difficult to obtain due to the more complex (non-linear) dynamics of SGD training.

\begin{center}
\emph{Is there hope to characterize tight necessary and sufficient conditions for function classes to be learnable by standard SGD on standard neural networks?}
\end{center}

As a first attempt in that direction, we focus in this paper on a natural setting: \textit{learning sparse functions} on the $d$-dimensional hypercube, i.e., functions that depend on a small latent (unknown) subset of coordinates of the input. We further restrict the optimization regime considered to \textit{two-layer neural networks} trained by one-pass batch-SGD in the \textit{mean-field regime}. This allows us to study a regime of optimization that goes beyond the linear regime while averaging out some of the complexity of studying non-linear SGD.

The motivation for the setting of learning sparse function is three-fold: (1) Linear (fixed features) methods do not adapt to latent sparsity, and suffer from the \textit{curse of dimensionality} \cite{bach2017breaking}. 
(2) On the contrary,  \cite{bach2017breaking,schmidt2020nonparametric} shows that neural networks can overcome this curse and learn sparse functions sample-efficiently. However, these works do not provide tractable algorithms and
the question of when SGD-trained neural networks can adapt to sparsity remains largely open. (3) Some sparse functions, such as monomials, are known to be much harder to learn than others from SQ-like lower bounds \cite{kearns1998efficient,blum94weakly,newAS}, and we expect SGD to inherit some of this complex behavior. Therefore, the problem of learning sparse functions presents a clear-cut separation between fixed-feature and feature learning methods, and can help understand the limits of SGD-training on neural networks. 

To gain insights on the interaction between SGD and the function structure that allows adaptivity to sparsity, we will ask the following question: \textit{Can one characterize necessary and sufficient conditions for a low-dimensional latent function to be learnable by standard SGD on standard neural networks in an arbitrarily large ambient dimension?} More precisely, we will consider a  $P$-dimensional latent function $h_* : \{+1,-1\}^P \to \R$ and consider learning sparse functions $f_* : \{+1,-1\}^d \to \R$ with $f_* (\bx) = h_* (\bz)$ for arbitrary ambient dimension $d$ and latent subset of coordinates $\bz := \bx_{\cI} = (x_{i_1} , \ldots , x_{i_P})$. As motivating examples, consider the  two functions:
\[
h_1 (\bz) = z_1 z_2 z_3 \, , \qquad h_2 (\bz) = z_1 + z_1z_2 + z_1z_2z_3 \, .
\]
Both of these functions depend on only 3 coordinates (i.e., they are 3-sparse), and because of the presence of the degree-3 monomial both require $\Omega(d^3)$ samples to be learned by a linear method. However, are these functions equivalent for SGD-trained neural networks? If not, can we obtain a fine-grained analysis that separates them?

In this paper, we introduce the following notion: we say that a latent function $h_*$ is \textbf{strongly SGD-learnable in $O(d)$-scaling}, if $O(d)$ samples are enough to  learn $f_* (\bx) = h_* (\bz) $ for arbitrary latent subspace $\bz$ and dimension $d$, using batch-SGD on a two-layer neural network in the mean-field regime. The main contribution of this paper is then to characterize with a {\bf necessary and nearly sufficient condition} the class of functions that are strongly SGD-learnable in $O(d)$-scaling. This is achieved with the {\bf merged-staircase property} (MSP), stating that the non-zero Fourier coefficients of $g$ can be ordered as subsets $\{S_1,\ldots,S_r\}$ such that for any $i \in [r]$, 
\begin{align*}
|S_i \sm \cup_{j=1}^{i-1} S_j  | \leq 1.
\end{align*}
For instance, $h_2(\bz)=z_1+z_1z_2+z_1z_2z_3$ has Fourier coefficients (monomials) that can be ordered as $\{\{1\},\{1,2\}, \{1,2,3\}\}$, and each new set is incremented only by one element at each time. So $h_2$ satisfies the MSP (or is an MSP function with a slight abuse of terminology) and so is the function $z_1+z_1z_2+z_2z_3 +z_1z_2z_3$. However, the function $h_1 (\bz) = z_1 z_2 z_3$ directly makes a leap to a degree-3 Fourier coefficient and does not therefore satisfy the MSP. Our main results thus imply that $h_2$ can be learned with $O(d)$ samples in this regime, but not $h_1$.
The \textit{near} sufficiency part in our result stands for the fact that the sufficiency result is proved for ``generic'' merged-staircase functions, i.e., excluding a measure zero subclass. This `genericity' is in fact needed, as we provide degenerate examples in Section \ref{sec:sufficient} for which the strong SGD-learnability in $O(d)$-scaling is indeed not achievable.

The terminology MSP comes from the fact that this condition generalizes the basic staircase property introduced in \cite{staircase1}, which only encompasses nested chains of coefficients with $|S_{i} \setminus S_{i-1}|=1$, such as the vanilla staircase function (e.g., $h_2$) and slight generalizations with multiple chains. 
In \cite{staircase1} it is shown that staircase functions are learnable by neural nets that are deep but sparse, and with an unconventional gradient-based training algorithm (see Section \ref{sec:related} for further discussion). Further \cite{staircase1} does not provide necessary conditions for learning, nor fine-grained complexity guarantees (beyond `polynomial').

Finally, while strong SGD-learnability is defined for a fixed latent function and fixed $P$, the number of samples required to fit MSP functions remains polynomial in $d$ for $P$ growing sufficiently slowly in $d$. This is of interest because in this regime, we can show that the considered functions are not learnable by any linear methods with any sample complexity (or feature space dimension) that is polynomial (using contribution (4) below). Thus the merged-staircase functions of such degree are efficiently learnable by SGD on networks of depth two but not by linear methods.

\subsection{Summary of main results}

Recall that any function $h_* :\{+1 , -1 \}^P \to \R$ can be decomposed in the Fourier-Walsh basis as
$
h_* ( \bz) = \sum_{S \subseteq [P]}\hh_* (S) \chi_S (\bz )\, , \text{where}\quad \hh_* (S) := \< h_* , \chi_S \> \, ,  \chi_S (\bz) := \prod_{i \in S} z_i
$, 
where we denoted the inner-product between two functions $\< f,g\> := \E_{\bz} [ f(\bz) g (\bz) ]$ with $\bz \sim \Unif ( \{ - 1, +1 \}^P )$. This corresponds to expressing the function $h_*(\bz)$ as a weighted sum of orthogonal monomials $\chi_S  (\bz)$, with weights $\hh_* (S)$ called the Fourier coefficients of $h_*$.

We now formally define the Merged-Staircase Property. Let us call any $\cS \subseteq 2^{[P]}$ a \textit{set structure}.
\begin{definition}
We say that $\cS = \{S_1 , \ldots , S_m\} \subseteq 2^{[P]}$ is a \emph{Merged-Staircase Property (MSP) set structure} if the sets can be ordered so that for each $i \in [m]$, $|S_i \setminus \cup_{i' < i} S_{i'}| \leq 1$.
\end{definition}

\begin{definition}[Merged-Staircase Property]
Let $\cS \subset 2^{[P]}$ be the non-zero Fourier coefficients of $h_*$, i.e., $\hh_* (S) \neq 0 $ iff $S \in \cS$. We say that $h_*$ satisfies the \emph{merged-staircase property} (MSP) if $\cS$ is a MSP set structure. \end{definition}

In words, $h_*$ satisfies the MSP if the monomials in its Fourier decomposition can be ordered sequentially such that the supports of the monomials grow by at most one at a time. Examples of MSP functions include vanilla staircases (i.e., $z_1 + z_1z_2 + \dots + \prod_{i=1}^P z_i$), $z_1 + z_1 z_2 + z_2 z_3 + z_3 z_4$, or $ z_1 + z_2 + z_3 + z_4 + z_1 z_2 z_3 z_4$, but not $ z_1  + z_1z_2 z_3 +z_1z_2z_3z_4$, $z_1 + z_1 z_2 + z_3 z_4$, or $z_1 z_2 z_3$. We briefly summarize our results here: 

\begin{description}
\item[(1) Dimension-free dynamics and equivalent characterization of strong SGD-learnability.] We introduce a \textit{dimension-free dynamics} (independent of $d$), which correspond to the gradient flow associated to learning $h_*$ with a certain two-layer neural network in the space of distributions on $\R^{P+2}$. We show $h_*$ is strongly $O(d)$-SGD learnable if and only if this dimension-free dynamics can reach $0$ risk when initialized with first-layer weights at $0$.

\item[(2) MSP necessity.] We show that for non-MSP $h_*$, the associated dimension-free dynamics stays bounded away from $0$. From the previous equivalence, we deduce that MSP is necessary for a function to be strongly $O(d)$-SGD-learnable.

\item[(3) MSP  near-sufficiency.] We first show that vanilla staircases are strongly $O(d)$-SGD-learnable for smooth activation functions as long as $\sigma^{(r)} (0) \neq 0$ for $r = 0 , \ldots, P$. 

For general MSP functions, however, some symmetric MSP functions have degenerate dynamics and are not strongly $O(d)$-SGD-learnable (see Section \ref{sec:sufficient}). We show instead that MSP $h_*$ are almost surely strongly $O(d)$-SGD-learnable. I.e., the degenerate examples are a measure-zero set. This is proved for generic degree-$L$ polynomial activations, and we explain how one can extend this result to generic smooth activations in the appendix.

\item[(4) Superpolynomial separation with linear methods.] 
One can take MSP functions (e.g., vanilla staircases) with $P$ slowly growing with $d$ so that the overall sample complexity of the above neural network results stay as $d^{O(1)}$, while we show that any linear method requires a sample complexity of $d^{\omega_d(1)}$.
\end{description}

These main results are further achieved with several side results of independent interest: (i) The approximation of the standard mean-field dynamics by the \textit{dimension-free dynamics}, valid for $P$-sparse target functions and $d \gg P$. We provide a new version of the non-asymptotic bounds from \cite{mei2018mean,mei2019mean}, which now compares SGD with this dimension-independent dynamics; (ii) A new proof technique to study layer-wise SGD dynamics which reduces the proof of global convergence to a polynomial identity testing problem, i.e., whether a certain polynomial is non-identically zero; (iii) An improvement of prior dimension lower-bounds for linear (kernel) methods \cite{hsu2021approximation,hsudimension,kamath_dim} that is tighter for function classes that are non-almost orthogonal (such as staircase functions, allowing for contribution (4) above). 

The rest of the paper is organized as follows. The next section overviews related work. Section \ref{sec:strong-SGD} provides a formal definition of strong SGD-learnability in $O(d)$-scaling. In Section \ref{sec:DF-PDE-Necessary}, we introduce the dimension-free dynamics and the equivalence with strong $O(d)$-SGD-learnability. The MSP necessary condition is then derived as a direct consequence of this equivalence. In Section \ref{sec:sufficient}, we provide our sufficient conditions for strong $O(d)$-SGD-learnability. In Section \ref{sec:linear-methods}, we discuss how this implies a separation with linear methods.

\subsection{Further related literature}
\label{sec:related}

\cite{staircase1} introduces a class of staircase functions, which our merged-staircase function class generalizes. They show that staircase functions are learnable by some neural nets with a gradient-based training algorithm. However, the approach remains non-standard: (i) the network's layers are sparse in order to guide the construction of the features; (ii) a coordinate descent variant of SGD is used that differs from the classical SGD algorithm. Further, the analysis is carried in the `polynomial scaling lens' rather than a finer sample complexity, and no necessity results are derived.  In contrast, we provide here both a necessary and nearly sufficient characterization for SGD-learning on a two-layer neural networks in the fine-grained $O(d)$-scaling.

Multiple works have used mean-field (also called distributional) dynamics to approximate the SGD trajectory. Relevant to us is \cite{chizat2020implicit} which showed that neural networks trained in the mean-field regime converge to a max-margin classifier that is independent of the dimension for latent low-dimensional target functions. However, these works do not provide quantitative results in terms of sample-complexity. A notable exception is \cite{mei2018mean} which studies classifying anisotropic gaussians: they show that the mean-field dynamics concentrates on a simplified low-dimensional dynamics as $d\to \infty$. However, this simplification is due to rotational invariance of the problem and not the sparsity of the target function.

In approximation theory, it has been understood for a long time that sparse functions are naturally well approximated by neural networks \cite{barron1993universal}. Recent work \cite{bach2017breaking,schmidt2020nonparametric,ghorbani2021neural,celentano2021minimum} have shown that neural networks can learn sparse functions more sample-efficiently than linear methods. However, these works do not provide tractable algorithms.

Finally, a string of works \cite{yehudai19power,allen19can,allen20backward,li20beyondntk,daniely20parities,refinetti2021classifying,ghorbani2021linearized, ghorbani2021neural,quantifying,karp2021local,suzuki2020benefit} have shown separation results between gradient-trained neural networks and fixed-features models. We refer to Appendix B of \cite{quantifying} for a detailed survey. In particular,  \cite{daniely20parities}  considers the learning of parity functions, with a modified input distribution that gives correlation to the response and allows for domain extraction; it also uses the population dynamics (infinite samples). In \cite{malach2020implications},  the learning of Boolean circuits of logarithmic depth is considered via neural networks with layer-wise gradient descent, but with an architecture that is required to match the Boolean circuit being learned, i.e., not with a `regular' or `blackbox' architecture. 
Lastly,  \cite{basri2019convergence,ijcai2021-304} show that during training, SGD on  2-layer $\mathsf{ReLU}$ networks learns faster the lower frequency components of a target function, in similar spirit to low degree monomials, but the approach relies on the linear regime rather than the non-linear regime of interest here, and  suffers from an exponential dependency on the degree.

\section{Strong SGD-learnability in $O(d)$-scaling}\label{sec:strong-SGD}

Consider $n$ iid data points $(\bx_i , y_i)_{i \in [n]}$ with covariates $\bx_i \sim \Unif ( \{+1 , -1 \}^d)$ and responses $y_i = f_* (\bx) + \eps_i$ with bounded independent noise $\E [ \eps_i ] = 0$. We assume that $f_* : \{ +1 , -1\}^d \to \R$ is a sparse function with latent $P$-dimensional function $h_*$, i.e., there exists an (unknown) subset of coordinates $\bz = \bx_{\cI} = (x_{i_1} , \ldots , x_{i_P})$ (the signal part of the input) such that $f(\bx) = h_* ( \bz)$. We consider fitting this data using a two-layer fully-connected neural network with $N$ hidden units and weights $\bTheta := (\btheta_j)_{j\in [N]} = (a_j,\bw_j)_{j \in [N]} \in \R^{N(d+1)}$:
\begin{equation}\label{eq:NN}
    \hf_\NN  (\bx ; \bTheta) = \frac{1}{N} \sum_{j \in [N]} a_j \sigma ( \< \bw_j , \bx \>)\, . \tag{2-NN}
\end{equation}
We train the parameters $\bTheta$ using batch-SGD with square loss and batch size $b$. We allow for time-varying step sizes $\{(\eta^a_k , \eta_k^w)\}_{k \geq 0}$, and layer-wise $\ell_2$-regularization with parameters $\lambda^a , \lambda^w \geq 0$. Given samples $\{ ( \bx_{ki} , y_{ki} )_{i \in [b]} \}_{k \geq 0}$ and initialization $(\btheta_j^0 )_{j \in [N]} \sim_{iid} \rho_0$, the weights are updated at each step:
\begin{equation}\label{eq:bSGD}\tag{bSGD}
\btheta_j^{k+1} = \btheta_j^k + \frac{1}{b} \sum_{i \in [b]} \{ y_{ki} - \hat f_{\NN} ( \bx_{ki} ; \bTheta^k ) \} \cdot \bH_k  \nabla_{\btheta} [a_j^k \sigma ( \< \bw_j^k , \bx_{ki} \> ) ] - \bH_k \bLambda \btheta_j^k\, ,
\end{equation}
where we introduced $\bH_k = \diag (\eta^a_k , \eta^w_k \cdot \id_d )$ and $\bLambda = \diag (\lambda^a , \lambda^w \cdot \id_d )$.
We will be interested in the prediction error (test error) $R ( f_* , \hf ) = \E_{\bx } \big[ \big\{ f_* ( \bx ) - \hf ( \bx ) \big\}^2 \big]$.

We first consider a general definition for a class of sparse functions to be learnable. We take $\{ P(d) \}_{d \geq 1}$ a sequence of integers (here, we allow the sparsity parameter $P$ to grow with $d$) and consider a general class of functions defined as $\cH = \{ \cH_{P(d)} \}_{d \geq 1}$ with $\cH_{P(d)} \subseteq L^2 ( \{+1,-1\}^{P(d)})$. 

\begin{definition}[SGD-learnability in $O(d^\alpha)$-scaling]
We say that a function class $\cH$ is \emph{SGD-learnable in $O(d^\alpha)$-scaling} if the following hold for some $C ( \cdot , \cH): \R_{>0} \to \R_{>0}$. For any $h \in \cH_{P(d)}$, $\eps >0$ there exist hyperparameters $(N,b, \sigma, \lambda^a, \lambda^w, \{ \eta_k^a , \eta_k^w   \}_{k \in [0,k_0]})$ and initialization $\rho_0$, such that: (1) for a sample size\footnote{Note that $n \leq bk_0$ with equality if we assume fresh samples at each iteration, as in the next definition.} $n  \leq C ( \eps, \cH)   d^{\alpha}$; and (2) for any $\cI \subseteq [d], | \cI | = P(d)$, and target function $f_*(\bx) = h (\bx_{\cI})$, $k_0$ steps of batch stochastic gradient descent \eqref{eq:bSGD} achieves prediction error $\eps$ with prob.\ at least $9/10$.
\end{definition}

This definition covers many scenarios that occur in practice where the practitioner is allowed to tune the hyperparameters of the dynamics. While this choice leaves the question of tractability open, we note that the requirement that learnability must hold uniformly over all possible latent subspaces excludes many irregular scenarios. Furthermore, the next definition will require strong regularity on the hyperparameters, and our sufficiency results will hold for simple choices of hyperparameters.

In order to introduce strong SGD-learnability, we will restrict the previous definition in three major ways: (1) we consider a fixed dimension $P$ and a fixed function $\cH = \{h_*\}$, which is still nontrivial to learn since we do not know the set $\cI \subseteq [d]$ such that $f_*(\bx) = h_*(\bx_{\cI})$; (2) we consider the scaling\footnote{Extending our results to $\alpha>1$, and establishing how this relates to the `leap' in the staircase definition (i.e., how can one jump monomial degrees) is a natural future direction to this work.} of $\alpha = 1$; (3) we restrain the hyperparameters to be in either of two regimes (i) small batch size $b = o(d)$ and step size $\eta = o(1)$ trained for $\Theta(1/\eta)$ steps (``continuous''); and (ii) large batch size $b = \Theta(d)$ and step size $\eta = \Theta(1)$ trained for a total number of $\Theta(1)$ steps (``discrete''). For the sake of presentation, we will only present the continuous regime in the main text and defer the presentation of the discrete regime to Appendix \ref{app:strong-discrete}. We will assume that the hyperparameters obey the following for some constant $K$ (independent of $d$):
\begin{itemize}

    \item[$\rA 0.$]  \textit{(Activation)} $\sigma:\R\to\R$ is three times differentiable with $\| \sigma^{(k)} \|_{\infty}\leq K$ for $k = 0 , \ldots , 3$. 
    
    \item[${\rm A1.}$] \textit{(One-pass)} We have fresh samples at each steps, meaning $\{ ( \bx_{k_i} , y_{k_i} )\}_{k \geq 0, i \in [b]}$ are iid. Furthermore, the response variable is bounded $|y| \leq K$.
    
    \item[${\rm A2.}$] \textit{(Initialization)} The initialization verifies $(a_i^0, \sqrt{d} \cdot \bw_i^0) \sim \mu_a \otimes  \mu_w^{\otimes d}$ where the distributions $\mu_a , \mu_w \in \cP (\R)$ are independent of $d$ with $|a|\leq K$ on the support of $\mu_a$ and $\mu_w$ is symmetric and $K^2$-sub-Gaussian. We will denote $m_2^w := \E_{W \sim \mu_w} [W^2]^{1/2}$.
    
    \item[${\rm A3.}$] \textit{(Boundedness and lipschitzness of hyperparameters)} There exists a constant $\eta>0$ such that $\eta_k^a, \eta_k^w \leq \eta K$, $|\eta_{k+1}^a - \eta_k^a | \leq \eta^2 K$ and $|\eta_{k+1}^w - \eta_k^w | \leq \eta^2 K$. Furthermore, $\lambda^a , \lambda^w \leq K$.
\end{itemize}

\begin{definition}[Strong SGD-learnability in $O(d)$-scaling]
We say that a function $h_* : \{ - 1, +1 \}^P \to \R$ is \emph{strongly $O(d)$-SGD-learnable} if the following hold for some $C (\cdot , h_*), T(\cdot,h_*):\R_{>0} \to \R_{>0}$. For any $\eps >0$, $d \geq C ( \eps, h_*)   $, $n \geq C ( \eps, h_*)   d$ and $e^d \geq N \geq C (\eps , h_*)$, there exists hyperparameters $(\sigma, b, \lambda^a, \lambda^w, \{ \eta_k^a , \eta_k^w   \}_{k \in [0,k_0]})$ and initialization $\rho_0$ satisfying $\rA0$-$\rA3$, $b \leq d$ and $k_0 = n/b  \leq  T ( \eps , h_*)/\eta$ s.t.\ for any $\cI \subseteq [d], | \cI | = P$ and target function $f_*(\bx) = h_* (\bx_{\cI})$, $k_0$ steps of batch stochastic gradient descent \eqref{eq:bSGD} achieves test error $\eps$ with prob.\ at least $9/10$.
\end{definition}

Conditions $\rA0$-$\rA3$ guarantee that as long as $d,n,N$ are taken sufficiently large, there exists a continuous mean-field dynamics that well-approximates batch-SGD up to (continuous) time $T$ depending on $\eps,h_*$. An analogous statement is true for strong-SGD-learnability in the ``discrete regime'', except convergence is to a family of limiting discrete-time dynamics (deferred to Appendix~\ref{app:strong-discrete}). This allows us to get a necessary condition for strong-learnability by studying the limiting dynamics (see next section). 

Finally, we note that for any degree-$k$ sparse function $h_*$, any linear method (e.g., arbitrary kernel or random feature methods) will require $\Omega(d^k)$ samples to fit functions $f_* (\bx) = h_* (\bz)$ uniformly well over all latent subspaces $\bz = \bx_{\cI}$ (see Section \ref{sec:linear-methods} for a formal statement). As emphasized in the introduction, this bound is not adaptive to the sparsity parameter $P$. In particular, any non-linear $h_*$ that is strongly $O(d)$-SGD-learnable provides a separation result between SGD-trained neural networks and linear methods. 

\section{Continuous dimension-free dynamics and necessary condition}
\label{sec:DF-PDE-Necessary}

For simplicity, the results in this section are stated in the `continuous regime' of strong SGD-learnability. Discrete versions can be found, with little modification, in Appendix \ref{app:strong-discrete}. 

\paragraph*{Mean-field approximation:} A recent line of work \cite{chizat2018global,mei2018mean,rotskoff2018neural,sirignano2020mean,mei2019mean} showed that one-pass batch-SGD \eqref{eq:bSGD} can be well approximated in some regime by a continuous dynamics in the space of probability distributions on $\R^{d+1}$, which we will refer to as the \textit{mean-field dynamics}.

Before describing this limiting dynamics, we first introduce a few definitions. To any distribution $\rho \in \cP (\R^{d+1} )$, we associate the infinite-width neural network
\begin{equation}
\hf_{\NN} (\bx ; \rho) = \int a \sigma ( \< \bw , \bx \>) \rho(\de a \, \de \bw ) \, .
\end{equation}
In particular, \eqref{eq:NN} corresponds to taking the empirical distribution $\hat \rho^{(N)} = N^{-1} \sum_{j \in [N]} \delta_{\btheta_j}$. We assume further that there exist functions $\xi^a , \xi^w : \R_{\geq 0} \to \R_{\geq 0}$ and a parameter $\eta>0$ (the time discretization) such that $\eta_k^a = \eta \xi^a (k\eta )$ and $\eta_k^w = \eta \xi^w (k\eta )$.
We replace Assumption $\rA 3$ by :
\begin{itemize}
    \item[$\rA 3 '$.] $\xi^a , \xi^w $ are bounded Lipschitz  $\| \xi^a \|_{\infty}, \| \xi^w \|_{\infty},\| \xi^a \|_{\text{Lip}}, \| \xi^w \|_{\text{Lip}} \leq K$ and $\lambda^a,\lambda^w \leq K$.
\end{itemize}
Note that for any $(\eta_k^a,\eta_k^w)_{k \geq 0}$ obeying $\rA3$, there exists functions $\xi^a,\xi^w$ such that $\rA3'$ holds with same constant $K$. Conversely, any $\eta$ discretization of $\xi^a,\xi^w$ obeys $\rA3$ with constants $\eta,K$.

Consider the empirical distribution of the weights $\hat \rho_k^{(N)} $ after $k$ batch-SGD steps, i.e., $\hat \rho_k^{(N)} = N^{-1} \sum_{j \in [N]} \delta_{\btheta_j^k}$. For large $N$ and small step size $\eta$, setting $k = t / \eta$, $\hat \rho_k^{(N)} $ is well approximated by a distribution $\rho_t \in \cP(\R^{d+1})$ that evolves according to the following PDE:
\begin{equation}\label{eq:MF-PDE}\tag{MF-PDE}
\begin{aligned}
    \partial_t \rho_t = &~ \nabla_\btheta \cdot (\rho_t  \bH (t) \nabla_\btheta \psi ( \btheta ; \rho_t ) ) \, ,\\
    \psi ( \btheta ; \rho_t ) = &~  a\E_{\bx} \Big[ \big\{\hat f_{\NN} ( \bx ; \rho_t) -  f_* (\bx) \big\} \sigma ( \< \bw , \bx \>) \Big] + \frac{1}{2} \btheta^\sT \bLambda \btheta \, ,
\end{aligned}
\end{equation}
with initial distribution $\rho_0$, and where we introduced $\bH(t) = \diag (\xi^a (t) , \xi^w (t) \id_d )$. This PDE corresponds to a Wasserstein gradient flow on the square-loss test error $R(\rho):= \E\big[ \big\{ f_* (\bx) - \hf_{\NN} (\bx ;  \rho) \big\}^2 \big]$ with regularization $\int \btheta^\sT \bLambda \btheta \rho (\de \btheta)$ and learning schedule $\bH(t)$.

\paragraph*{Dimension-free dynamics:} For a sparse function $f_* (\bx) := h_* (\bz)$, the \eqref{eq:MF-PDE} concentrates to a dimension-free dynamics when $d \to \infty$. Decompose the input $\bx = ( \bz , \br)$ and the weights $\bw_i = (\bu_i , \bv_i)$ with $\bu_i \in \R^{P}$ aligned with $\bz$ and $\bv_i \in \R^{d-P}$ with $\br$. By Assumption ${\rA 2}$, $\bw_i^0$ has iid symmetric coordinates, which implies that $\hf_{\NN} ( \bx ; \rho_0)$ does not depend on $\br$. In fact, by symmetry of \eqref{eq:MF-PDE}, the mean-field solution $\hf_{\NN} ( \bx ; \rho_t)$ stays independent of $\br$ throughout the dynamics:
\begin{equation}\label{eq:NN_MF_sym}
\forall t \geq 0\, , \qquad \hf_\NN (\bx ; \rho_t ) = \int a^t \sigma( \< \bx , \bw^t \>) \rho_t (\de \btheta^t) = \int a^t \E_\br \big[ \sigma ( \< \bu^t , \bz \> + \< \bv^t ,  \br \>  )\big]  \rho_t (  \de \btheta^t) \, ,
\end{equation}
and we denote with a slight abuse of notation, $\hf_{\NN} ( \bz ; \rho_t) := \hf_{\NN} ( \bx ; \rho_t)$.

 With $\br \sim \Unif ( \{-1 , +1 \}^{d-P} )$, one can show that $\< \bv^t , \br \>$ can be well approximated by $\| \bv^t \|_2 G$ with $G \sim \normal (0,1)$ when $d\gg P$ and $t = O_d(1)$.  We introduce effective parameters $\obtheta^t = ( \oa^t , \obu^t , \os^t )$ with distribution $\orho_t \in \cP(\R^{P+2})$, and we replace the neural network \eqref{eq:NN_MF_sym} by an effective neural network (with a slight abuse of notation)
\begin{equation}
    \hf_{\NN} (\bz ; \orho_t ) := \int \oa^t \E_G \big[ \sigma ( \< \obu^t , \bz \> + \os^t G) \big] \orho_t(\de \obtheta^t )\, .
\end{equation}
We see that $\hf_{\NN} (\cdot ; \orho_t )$ can be seen as a two layer neural network in dimension $P$, with adaptive Gaussian smoothing. Taking $d \to \infty$ with $P$ fixed, $(a^0, \bu^0 , \| \bv^0 \|_2 )$ (with distribution $\rho_0$ satisfying $\rA2$) converges in distribution to $(\oa^0 , \obu^0 , \os^0) \sim \orho_0$ with $\oa^0 \sim \mu_a$, $\obu^0 = \bzero$ and $\os^0 = m_2^w$, and the dynamics \eqref{eq:MF-PDE} simplifies into the following dimension-free dynamics 
\begin{equation}\tag{DF-PDE}\label{eq:DF-PDE}
\begin{aligned}
        \partial_t \orho_t = &~ \nabla_{\obtheta} \cdot \big(\orho_t  \obH (t) \nabla_{\obtheta} \psi ( \obtheta ; \orho_t ) \big) \, ,\\
    \psi ( \obtheta ; \orho_t ) = &~  \frac{1}{2}\E_{\bz,G} \Big[ \big\{\hat f_{\NN} ( \bz ; \orho_t) -  f_* (\bz) \big\} \oa \sigma ( \< \obu , \bz \> +\os G) \Big] + \frac{1}{2} \obtheta^\sT \obLambda \obtheta \, ,
    \end{aligned}
\end{equation}
where $\obH (t) = \diag (\xi^a (t) , \xi^w (t) \id_{P+1})$ and $\obLambda = \diag (\lambda^a , \lambda^w \id_{P+1} ) $. Equivalently, \eqref{eq:DF-PDE} can be seen as a Wasserstein gradient flow over the test error $R(\orho) = \E_{\bz} \big[ \big\{ h_* ( \bz) - \hf_{\NN} ( \bz ; \orho ) \big\}^2 \big]$ in the space $\bz \sim \Unif (\{+1,-1\}^P)$ with initialization $\orho_0$ and regularization $\int \obtheta^\sT \obLambda \obtheta \orho (\de \obtheta)$. We put further intuition for this result in Appendix \ref{app:intuition-DF-PDE}.

The following theorem provides a non-asymptotic bound between the \eqref{eq:bSGD} solution $\hf_{\NN} (\cdot ; \hat\rho^{(N)}_t)$ and the \eqref{eq:DF-PDE} solution $\hf_{\NN} (\cdot ; \orho_t )$:

\begin{theorem}\label{thm:bSGD-to-DF}
Assume conditions $\rA0$-$\rA2$,$\rA3'$ hold, and let $T \geq 1$. There exist constants $K_0$ and $K_1$ depending only on the constants in $\rA 0$-$\rA 2$,$\rA3'$ (in particular, independent of $d,P,T$), such that for any $b\leq d$, $N\leq e^d$, $\eta \leq e^{-K_0 T^3} b/(d+\log(N)) $,  we have
\begin{equation*}
\begin{aligned}
 \big\Vert \hat f_{\NN} (\cdot; \bTheta^k ) - \hat f_{\NN} (\cdot; \orho_{k\eta} ) \big\Vert_{L^2}
\leq&~  e^{K_1T^7} \left\{ \sqrt{\frac{P + \log(d)}{d}} + \sqrt{\frac{\log N}{N}} +  \sqrt{\frac{d+\log N}{b}}  \sqrt{ \eta} \right\} \, ,
\end{aligned}
\end{equation*}
for all $k \in [T/\eta] \cap \naturals$, with probability at least $1 - 1/N$.
\end{theorem}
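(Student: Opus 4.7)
The natural strategy is to prove the bound via triangle inequality, inserting the \eqref{eq:MF-PDE} solution $\rho_t$ as an intermediate object:
\begin{equation*}
\bigl\Vert \hat f_{\NN}(\cdot;\bTheta^k) - \hat f_{\NN}(\cdot;\orho_{k\eta}) \bigr\Vert_{L^2}
\leq \underbrace{\bigl\Vert \hat f_{\NN}(\cdot;\bTheta^k) - \hat f_{\NN}(\cdot;\rho_{k\eta}) \bigr\Vert_{L^2}}_{\text{(A): bSGD } \to \text{ MF-PDE}}
+ \underbrace{\bigl\Vert \hat f_{\NN}(\cdot;\rho_{k\eta}) - \hat f_{\NN}(\cdot;\orho_{k\eta}) \bigr\Vert_{L^2}}_{\text{(B): MF-PDE } \to \text{ DF-PDE}}.
\end{equation*}
Term (A) is dimension-dependent but $P$-independent; term (B) is the new, dimension-free approximation, and is where the $\sqrt{(P+\log d)/d}$ comes from. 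For each term I would use a Wasserstein-$2$ coupling of the two measures, then bound the $L^2$ error of the induced networks by the Wasserstein distance using the Lipschitzness of $(a,\bw) \mapsto a\sigma(\langle \bw,\bx\rangle)$ guaranteed by $\rA0$ and the boundedness of $a$, and finally propagate the Wasserstein error via Gr\"onwall.

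\paragraph*{Step (A): bSGD to MF-PDE.}
This follows the Mei--Montanari--Nguyen template but with the refined accounting needed to get an $e^{K_1 T^c}$ (rather than $e^{e^{\cdots T}}$) bound. I would couple the $N$ SGD particles $\bTheta^k$ with $N$ i.i.d.\ particles driven by the true population gradient field of the MF-PDE (synchronous coupling sharing the same initialization). Writing $\bDelta_j^k := \btheta_j^k - \tilde\btheta_j^{k\eta}$ and expanding one step of \eqref{eq:bSGD} against the drift of \eqref{eq:MF-PDE}, the error decomposes into four pieces: (i) the discretization error of the ODE (size $O(\eta)$ per step); (ii) the batch stochastic noise (a martingale increment with conditional variance $O(1/b)$ per step, yielding $\sqrt{\eta(d+\log N)/b}$ after maximal-inequality arguments over the $T/\eta$ steps and uniformly over the $N$ particles); (iii) the empirical-vs.-population error in the interaction term, controlled by standard propagation-of-chaos, giving $\sqrt{\log N/N}$; and (iv) the Lipschitzness of the drift in $\rho$ under $W_2$, which feeds the Gr\"onwall loop. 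The $\log(N)/N$ (rather than $1/\sqrt N$) arises from concentration of sub-Gaussian random feature sums $N^{-1}\sum_j a_j\sigma(\langle \bw_j,\bx\rangle)$ uniformly in $\bx\in\{\pm1\}^d$ via a union bound; the factor $d+\log N$ in (ii) comes from the sub-Gaussian norm of the stochastic gradient, which scales like $\sqrt d$ because $\|\bx\|_2=\sqrt d$.

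\paragraph*{Step (B): MF-PDE to DF-PDE.}
The initialization $\rA2$ makes the law of $\bw^0$ invariant under any permutation of coordinates fixing $\cI$ and under sign flips of those coordinates; since the MF dynamics only couples the $d-P$ off-signal coordinates of $\bw$ through the scalar $\langle \bv,\br\rangle$ with $\br\sim\Unif(\{\pm1\}^{d-P})$, this symmetry is preserved in time. Consequently the pushforward $\Phi_\#\rho_t$ under $\Phi(a,\bu,\bv)=(a,\bu,\|\bv\|_2)$ essentially captures all the information in $\rho_t$ that matters for $\hat f_{\NN}(\bz;\rho_t)$. I would construct a coupling $\pi_t$ between $\rho_t$ and $\orho_t$ by initialising with $(a,\bu,\|\bv\|_2)$ on one side and $(\bar a,\bar\bu,\bar s)$ on the other, propagating each by its own PDE, and comparing the velocity fields. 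The velocity field of the MF-PDE, restricted to the pushforward coordinates, differs from the DF-PDE velocity field by exactly the replacement of $\mathbb{E}_\br[\sigma(\langle \bu,\bz\rangle+\langle\bv,\br\rangle)]$ by $\mathbb{E}_G[\sigma(\langle \bar\bu,\bz\rangle+\bar s\, G)]$. Using $\rA0$ (three bounded derivatives of $\sigma$) and a quantitative CLT/Lindeberg swap on $\langle\bv,\br\rangle$ under the sub-Gaussianity of the $v_i$'s, this swap costs $O(\|\bv\|_3^3/\|\bv\|_2^3) = O(1/\sqrt{d-P})$ pointwise, and $O(\sqrt{(P+\log d)/d})$ uniformly over $\bz \in \{\pm1\}^P$ after a union bound combined with a separate $O(\sqrt{P/d})$ initialisation error for the coordinates of $\bu^0$.

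\paragraph*{Main obstacle and final assembly.}
The hard part is the Gr\"onwall loop for (B): the DF-PDE velocity field is Lipschitz in $\orho_t$ under $W_2$ only with a constant that depends polynomially on moments of $\orho_t$ and on $\|\sigma^{(j)}\|_\infty$, and these moments can grow in $T$. One must therefore prove \emph{a priori} polynomial-in-$T$ moment bounds on the parameters $(\oa^t,\obu^t,\os^t)$ (and on $\bw^t$ under $\rho_t$), by testing the PDE against polynomial functions and applying Gr\"onwall at the moment level; iterating this with the Wasserstein-level Gr\"onwall is what produces the $e^{K_1 T^7}$ factor (roughly: a constant from the $W_2$ Gr\"onwall, composed with polynomial-in-$T$ moment growth and with a similar moment/Gr\"onwall loop for step (A)). Assembling (A) and (B), applying the resulting Wasserstein bounds to the network functional via Lipschitzness, and optimising constants yields the stated inequality. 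The probability bound $1-1/N$ tracks the high-probability events used in (A) for uniform concentration of empirical averages and for the martingale term.
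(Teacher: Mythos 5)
Your proposal follows essentially the same route as the paper's proof: split via the intermediate \eqref{eq:MF-PDE} solution; bound term (A) by extending \cite{mei2019mean}'s propagation-of-chaos analysis to batch-SGD with anisotropic step sizes; and bound term (B) by a synchronous coupling of the high-dimensional and dimension-free dynamics, a Berry--Esseen/Stein swap of $\langle\bv^t,\br\rangle$ for $\|\bv^t\|_2\,G$ (giving the $\|\bv\|_3^3/\|\bv\|_2^2 \lesssim \max_i|v_i^t|$ cost), Gr\"onwall control of $\max_i|v_i^t|$ and of the coupling discrepancy, and sub-Gaussian max bounds at initialization to produce the $\sqrt{(P+\log d)/d}$ rate. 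One small technical note: the $e^{K_1 T^7}$ exponent does not come from ``polynomial-in-$T$ moment growth'' that must be proved separately at each moment order; the paper gets $|a^t|\vee|\bar a^t| \lesssim K(1+T)$ directly from the gradient-flow property ($R(\rho_t)\le R(\rho_0)$, Lemma \ref{lem:bound_evoluation_a}), and the $T^7$ arises from integrating the resulting $(1+T)^6$ Lipschitz constant in the Gr\"onwall loop for $\Delta(t)$ (Lemma \ref{lem:prop_chaos_bounds}). Also, the $\sqrt{(P+\log d)/d}$ term is obtained as an $L^2(\{\pm1\}^P)$ bound rather than a union bound uniform in $\bz$, though on a finite cube the two are interchangeable up to a $\sqrt{2^P}$ factor.
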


The proof of Thm.\ \ref{thm:bSGD-to-DF} can be found in App.\ \ref{app:proof-bSGD-to-Df}. We first extend the results in \cite{mei2019mean} to bound the difference between \eqref{eq:bSGD} and \eqref{eq:MF-PDE} dynamics, and then we use a propagation-of-chaos argument to bound the distance between the \eqref{eq:MF-PDE} and \eqref{eq:DF-PDE} solutions.

\begin{figure}[t]
\begin{center}
    \includegraphics[width=14cm]{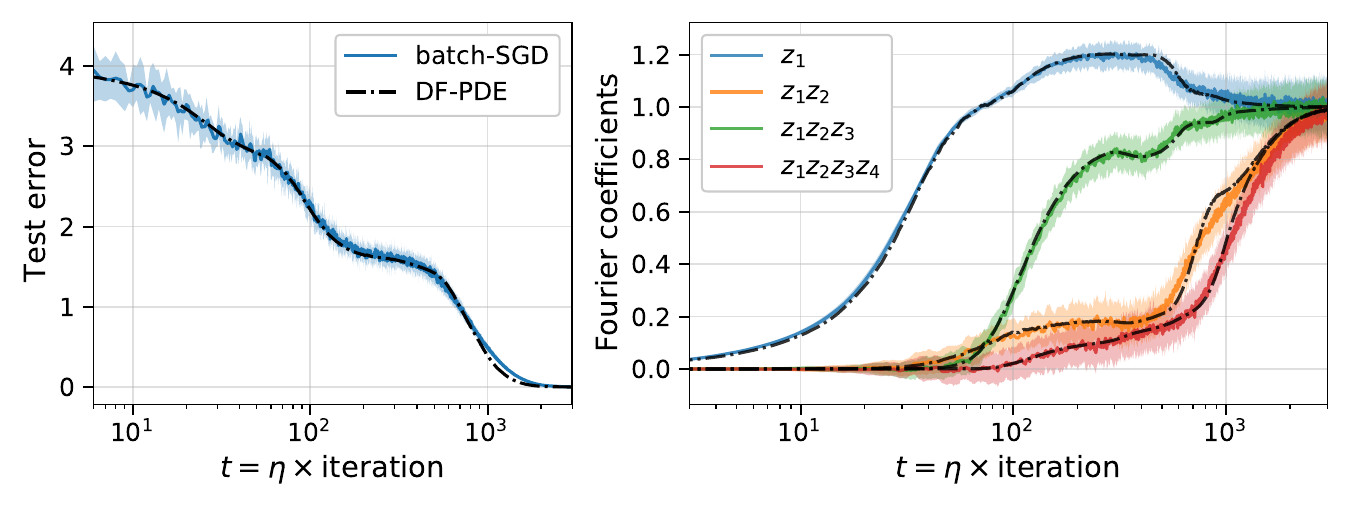}
\end{center}

\vspace{-25pt}
\caption{Comparison between \eqref{eq:bSGD} and \eqref{eq:DF-PDE} dynamics for $h_* (\bz) = z_1 + z_1 z_2 + z_1 z_2 z_3 + z_1 z_2 z_3 z_4$. Left: Test error. Right: Fourier coefficients of $\hf_{\NN} (\bx ; \bTheta^{t/\eta})$ and $\hf_{\NN} (\bz ; \orho_t)$. The dashed-dotted black lines correspond to \eqref{eq:DF-PDE} and the continuous colored line to \eqref{eq:bSGD}. The test errors and Fourier coefficients are evaluated with $m=300$ test samples and for \eqref{eq:bSGD}, we report the average and $95\%$ confidence interval over 10 experiments.  \label{fig:Staircase_SGDvsDF}}
\end{figure}

\paragraph*{Equivalence with SGD-learnability:} From Theorem \ref{thm:bSGD-to-DF}, \eqref{eq:DF-PDE} is a good approximation of \eqref{eq:bSGD} as long as $d,N,1/\eta$ are taken sufficiently large while keeping $T =\eta n /b$ bounded. This leads to the equivalence described in the introduction (the proof can be found in Appendix \ref{app:proof-bSGD-to-Df}):

\begin{theorem}\label{thm:equivalence} A function $h_* : \{+1,-1\}^P \to \R$ is strongly $O(d)$-SGD-learnable if and only if for any $\eps >0$, there exists $\lambda^a, \lambda^w \geq 0$ and Lipschitz $\xi^a, \xi^w : \R_{>0} \to \R_{>0}$, such that $\inf_{t \geq 0} R(\orho_t) < \eps$.
\end{theorem}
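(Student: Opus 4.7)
The theorem is a two-sided equivalence whose two directions will both reduce to Theorem~\ref{thm:bSGD-to-DF} combined with a triangle inequality in $L^2$. The key observation I will use is that \eqref{eq:DF-PDE} depends on the hyperparameters $(\sigma, \mu_a, \mu_w, \lambda^a, \lambda^w, \xi^a, \xi^w)$ but not on $d, N, \eta, b$, so a single choice of these can witness both sides. The remark just before Theorem~\ref{thm:bSGD-to-DF} will let me translate freely between the discrete step-size schedule of $\rA 3$ and the continuous schedule $(\xi^a, \xi^w)$ of $\rA 3'$ at no cost in constants.

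For the forward direction ($\Rightarrow$), I would fix $\eps>0$ and invoke strong learnability to obtain, for sufficiently large $d, N$ and small $\eta$, hyperparameters obeying $\rA 0$--$\rA 3$ that attain SGD test error $\leq \eps/4$ with probability $\geq 9/10$ at some step $k_0 \leq T(\eps/4, h_*)/\eta$. I would then reinterpret the step-size schedule as a Lipschitz $(\xi^a, \xi^w)$ obeying $\rA 3'$ and feed it, along with the same $(\sigma, \mu_a, \mu_w, \lambda^a, \lambda^w)$, into \eqref{eq:DF-PDE}. Choosing $d, N$ large and $\eta$ small enough that Theorem~\ref{thm:bSGD-to-DF}, specialized to $T = T(\eps/4, h_*)$, gives approximation error $\leq \sqrt{\eps}/4$, the triangle inequality on the joint event (probability $\geq 9/10 - 1/N > 0$) yields $R(\orho_{k_0\eta}) \leq \eps$. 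Since $R(\orho_\cdot)$ is deterministic this is a true bound on $\inf_t R(\orho_t)$, completing this direction.

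For the reverse direction ($\Leftarrow$), I would fix $\eps > 0$ and use the assumption to pick $(\lambda^a, \lambda^w, \xi^a, \xi^w)$ (together with an admissible $\sigma, \mu_a, \mu_w$) and a time $t^* \geq 0$ with $R(\orho_{t^*}) < \eps/4$; set $T := t^* + 1$ and define $T(\eps, h_*) := T$. Given any $d \geq C, n \geq Cd, N \geq C$ with $C = C(\eps, h_*)$ large enough, I would choose $b = \Theta(d)$ and $\eta > 0$ so that $k_0 := n/b$ satisfies $k_0 \eta = t^*$, and check the hypotheses of Theorem~\ref{thm:bSGD-to-DF}: $b \leq d$, $N \leq e^d$, $\eta \leq e^{-K_0 T^3} b/(d + \log N)$. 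Running \eqref{eq:bSGD} with the $\eta$-discretization of $(\xi^a, \xi^w)$, which obeys $\rA 3$ with constants $(\eta, K)$, Theorem~\ref{thm:bSGD-to-DF} gives $\|f_{\NN}(\cdot; \bTheta^{k_0}) - f_{\NN}(\cdot; \orho_{t^*})\|_{L^2} \leq \sqrt{\eps}/4$ with probability $\geq 1 - 1/N \geq 9/10$. Combined with $R(\orho_{t^*}) < \eps/4$ via the triangle inequality, this yields SGD test error $\leq \eps$ with probability $\geq 9/10$, which is strong $O(d)$-SGD-learnability.

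\textbf{Main obstacle.} Conceptually the argument is routine given Theorem~\ref{thm:bSGD-to-DF}; the only real care needed is bookkeeping: checking that the joint constraints $b \leq d$, $N \leq e^d$, $\eta \leq e^{-K_0 T^3} b/(d + \log N)$, $n \geq C(\eps, h_*) d$, and $k_0 \eta = t^*$ can be satisfied simultaneously in the allowed parameter regime. Since $T$ and $t^*$ depend only on $(\eps, h_*)$, taking $b = d$ makes $\eta = \Theta(e^{-K_0 T^3})$ a feasible constant depending only on $(\eps, h_*)$, and then $n = k_0 b = (t^*/\eta) d = O(d)$ automatically lives in the required $O(d)$-scaling regime.
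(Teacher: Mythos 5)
Your proposal is correct and follows essentially the same route as the paper's own proof: both directions reduce to Theorem~\ref{thm:bSGD-to-DF}, using the $\rA3\leftrightarrow\rA3'$ translation to move between discrete and continuous learning-rate schedules, and a triangle inequality in $L^2$ on the intersection of the SGD success event and the high-probability event of Theorem~\ref{thm:bSGD-to-DF}. The only differences from the paper's write-up are cosmetic (a choice of $\eps/4$ vs.\ $\eps/6$ in the splitting, and a slightly different way of fixing $T$ from the convergence time of \eqref{eq:DF-PDE}); your handling of the constraint $b\leq d$ is in fact slightly more careful than the paper's, which takes $b=d+\log N$.
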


For generic activation, we have $\inf_{\orho} R (\orho) = 0$. Hence, Theorem \ref{thm:equivalence} states that $h_*$ is strongly $O(d)$-SGD-learnable if and only if the global minimizer is dynamically reachable by a gradient flow initialized at $\orho_0 = \mu_a \otimes \delta_{\obu^0 = \bzero} \otimes \delta_{\os^0 = c}$. See Appendix~\ref{app:numerics} for additional discussions and numerical illustrations. 
In Figure \ref{fig:Staircase_SGDvsDF}, we plotted a comparison between \eqref{eq:bSGD} and \eqref{eq:DF-PDE} for $h_* ( \bz) = z_1 + z_1 z_2 + z_1 z_2 z_3 + z_1 z_2 z_3 z_4$ and shifted sigmoid activation $\sigma ( x) = (1 + e^{-x+0.5} )^{-1}$. We fix $d = N = 100$, $b = 150$, $\lambda^a = \lambda^w = 0$, $\eta^a_k  = \eta^w_k = 1/2$, $\mu_a = \Unif([+1,-1])$ and $\mu_w = \normal (0,1)$. Let us emphasize a few prominent features of this plot: 1) The \eqref{eq:DF-PDE} approximation tracks well \eqref{eq:bSGD} until convergence even for moderate $d,N,b/\eta$, despite a convergence with nontrivial structure. 2) The monomials are picked up sequentially to a nonnegligible amount with increasing degree, which agrees with the intuition that lower-degree monomials guide SGD to learn higher degree monomials. 3) \eqref{eq:DF-PDE} reaches a global minimum, which by Theorem \ref{thm:equivalence} implies that $h_*$ is strongly SGD-learnable in $O(d)$-scaling.

\paragraph*{MSP is necessary:} We can show that the \eqref{eq:DF-PDE} dynamics with $h_*$ without MSP cannot reach arbitrarily small test error when initialized with $\orho_0$. By Theorem \ref{thm:equivalence}, this implies that MSP is necessary for strong SGD-learnability in $O(d)$-scaling.

\begin{theorem}\label{thm:MSP_necessary}
Let $h_* : \{+1,-1\}^P \to \R$ be a function without MSP. Then there exists $c >0$ such that for any $\xi^a, \xi^w : \R \to \R$ and regularizations $\lambda^a, \lambda^w \geq 0$, we have $\inf_{t \geq 0} R(\orho_t) \geq c$.
\end{theorem}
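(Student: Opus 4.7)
The plan is to isolate a subset $U^* \subseteq [P]$ of latent coordinates that the dimension-free flow can reach from the initialization $\obu^0 = \bzero$, and then to show that the Fourier mass of $h_*$ lying outside $U^*$ provides a permanent floor on $R(\orho_t)$. Let $\cS \subseteq 2^{[P]}$ denote the Fourier support of $h_*$, and define $U^* \subseteq [P]$ as the fixed point of the saturation operator $U \mapsto U \cup \bigcup\{ S \in \cS : |S \setminus U| \leq 1 \}$ started at $\emptyset$. One checks that $U^*$ is independent of greedy choices and that MSP holds if and only if $U^* \supseteq \bigcup_{S \in \cS} S$; failure of MSP thus yields some $S^\star \in \cS$ with $S^\star \not\subseteq U^*$. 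The key combinatorial consequence of the fixed-point property is that \emph{no} $S \in \cS$ satisfies $|S \setminus U^*| = 1$, so every unabsorbed $S$ has $|S \setminus U^*| \geq 2$. Write $W := [P] \setminus U^*$.

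The heart of the proof is to show that the set
$$\cM := \{ \orho \in \cP(\R^{P+2}) : \supp(\orho) \subseteq \R \times V_W \times \R \}, \qquad V_W := \{\obu \in \R^P : u_j = 0\ \text{for all}\ j \in W \},$$
is forward-invariant under \eqref{eq:DF-PDE}. Condition $\rA 2$ guarantees $\orho_0 \in \cM$. For invariance, I verify that at any $\obtheta \in \supp(\orho_t)$ with $\orho_t \in \cM$ and $j \in W$, the velocity
$$\dot u_j = -\tfrac{1}{2}\xi^w(t)\,\E_{\bz,G}\bigl[\{\hf_\NN(\bz;\orho_t) - h_*(\bz)\}\, \oa\, z_j\, \sigma'(\langle \obu,\bz\rangle + \os G)\bigr] - \xi^w(t)\lambda^w u_j$$
vanishes. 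The regularization term is zero on $V_W$; since $\orho_t \in \cM$ forces $\hf_\NN(\cdot;\orho_t)$ and $\sigma'(\langle \obu,\bz\rangle + \os G)$ to depend on $\bz$ only through $\bz_{U^*}$, the $\hf_\NN$ contribution vanishes after integrating $z_j$ out of the $\bz_W$-marginal. For the $h_*$ contribution, expanding $h_*(\bz) = \sum_{S\in\cS} \hat h_*(S)\, \chi_{S\cap U^*}(\bz_{U^*})\, \chi_{S\cap W}(\bz_W)$ and conditioning on $\bz_{U^*}$ reduces each term to $\E_{\bz_W}[\chi_{S\cap W}(\bz_W)\, z_j] = \ind\{S \cap W = \{j\}\}$, and the combinatorial fact from the previous paragraph forces every such indicator to vanish. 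Hence $\dot u_j = 0$, the particle flow keeps the $\obu$-support in $V_W$, and a standard maximal-interval/continuity argument propagates the invariance of $\cM$ to all $t \geq 0$. Given $\orho_t \in \cM$, Fourier--Walsh orthogonality yields
$$R(\orho_t) = \bigl\| \hf_\NN(\cdot;\orho_t) - h_* \bigr\|_{L^2}^2 \;\geq\; \sum_{S \not\subseteq U^*} \hat h_*(S)^2 \;\geq\; \hat h_*(S^\star)^2 \;=:\; c \;>\; 0,$$
uniformly in $t$ and in $\xi^a, \xi^w, \lambda^a, \lambda^w$, which is the claim.

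The main obstacle is the invariance calculation: one must recognize $\cM$ (equivalently, the constraint $u_j = 0$ for $j \in W$) as the right invariant set and then pair the support restriction on $\orho_t$ with the Fourier expansion of $h_*$ so that the target-correlation term collapses to a sum of indicators that the combinatorial structure of $U^*$ annihilates. A more naive symmetry argument --- trying to preserve invariance of $\orho_t$ under individual sign flips $u_j \mapsto -u_j$ for $j \in W$ --- fails because $h_*$ itself need not be symmetric under the corresponding $z_j \mapsto -z_j$; the geometric ``particles confined to $V_W$'' picture above is what makes the cancellation clean. Everything else (the saturation construction, the Fourier lower bound, and the passage to non-learnability via Theorem~\ref{thm:equivalence}) is routine.
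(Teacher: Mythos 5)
Your proof is correct, and it takes essentially the same approach as the paper: isolate the coordinates $\Omega = W$ that the flow cannot reach from $\obu^0 = \bzero$, show they stay pinned at $0$ for all time, and then lower-bound the risk by the Fourier mass of $h_*$ outside the reachable subspace. The combinatorial object you construct via the saturation fixed point $U^*$ is exactly the complement of the paper's $\Omega = [P] \setminus \bigcup_{S \in \ocS_*} S$ (where $\ocS_*$ is a maximal MSP subcollection): both constructions give $\{S : S \subseteq U^*\} = \ocS_*$ and the same key fact that every unabsorbed $S$ has $|S \cap W| \geq 2$.

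The one genuine difference is in the dynamical step. The paper proceeds by a Gr\"onwall argument: it bounds $|\dot\ou_i^t|$ for $i \in \Omega$ by $K(1+t)^2 m_\Omega^t$ (via the mean-value theorem applied to $\sigma'$ and to the $\hf_\NN$ difference), where $m_\Omega^t$ should be read as the sup over the support of $\orho_t$, and then uses $m_\Omega^0 = 0$ to conclude $m_\Omega^t \equiv 0$. You instead prove forward-invariance of $\cM = \{\supp(\orho) \subseteq \R \times V_W \times \R\}$ by showing the velocity $\dot u_j$ vanishes \emph{exactly} whenever $\orho_t \in \cM$: the regularization term is trivially zero, the $\hf_\NN$ and $\sigma'$ factors then depend on $\bz$ only through $\bz_{U^*}$ so the $z_j$-marginal integrates to zero, and the $h_*$ contribution collapses to indicators $\ind\{S \cap W = \{j\}\}$ that the fixed-point property annihilates. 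This is conceptually a bit cleaner (you never touch $\|\sigma''\|_\infty$ or the per-particle/sup subtlety lurking in the paper's $m_\Omega^t$), at the mild cost of needing a uniqueness/continuation argument to upgrade the "self-consistent" pointwise vanishing into invariance of $\cM$ for all $t \geq 0$, which the Gr\"onwall route gets for free. Both routes are valid and deliver the same floor $R(\orho_t) \geq \sum_{S \not\subseteq U^*} \hh_*(S)^2 > 0$.
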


This result is based on the following simple observation: for $h_*$ without MSP and with $\obu^0 = \bzero$ initialization, some coordinates stay equal to $0$ throughout the dynamic, i.e., $\ou_i^t = 0$. In that case, any Fourier coefficient that contains $i \in S$ is not learned: $\E_{\bz} [ \hf_{\NN} ( \bz ; \orho_t) \chi_S (\bz) ] = 0 $. We report the proof to Appendix \ref{app:proof-MSP-necessary} and simply detail one example $h_* ( \bz) = z_1 + z_1 z_2 z_3$. Consider the first-layer weight evolution $\dot{\ou}_i^t = \oa^t \E [(z_1 +z_1z_2z_3)\sigma' (\< \obu^t , \bz \>) z_i ]$ (for the sake of intuition, we take $\hf_{\NN} ( \bz ;\orho_t) = 0$ and $\os^t = 0$). Notice that the evolution equations are symmetric under exchange $\ou_2^t \leftrightarrow \ou_3^t$ with $\ou_2^0 = \ou_3^0 = 0$ and therefore $\ou_2^t = \ou_3^t =: u_{23}^t$. Denoting $z_{23} = z_2 + z_3$ and integrating out $z_{23}$, 
\[
\dot{u}_{23}^t = \oa^t \E_{\bz} \big[ z_1 z_{23}\sigma' (z_1 \ou_1^t + z_{23} u_{23}^t) \big] =\oa^t  u_{23}^t  \E_{z_1} \big[ z_1 \sigma'' (z_1 \ou_1^t + r( u_{23}^t) ) \big]  \,,
\]
for some $r(u_{23}^t) \in [-2u_{23}^t,2u_{23}^t]$ using the mean value theorem. Recalling that $u_{23}^0 = 0$, we deduce that $u^t_{23} = \ou_2^t = \ou_3^t = 0$.

\section{Sufficient conditions for strong SGD-learnability}\label{sec:sufficient}

In the previous section, we saw that having MSP is necessary for strong $O(d)$-SGD-learnability. Is the converse true? Is any MSP function strongly SGD-learnable in the $O(d)$-scaling?

\paragraph{Degenerate cases:} It turns out that one first has to exclude some special cases. Some MSP functions present degenerate dynamics due to their symmetries and are not strongly $O(d)$-SGD-learnable. For example, take $h_* ( \bz) = z_1 + z_2 + z_1 z_3 + z_2 z_4$, which is invariant by permutation $(1,2,3,4) \leftrightarrow (2,1,4,3)$ of its input coordinates. During the \eqref{eq:DF-PDE} dynamics, $\ou_1^t = \ou_2^t$ and $\ou_3^t = \ou_4^t$, which implies that a solution with $\E_{\bz} [ \hf_{\NN} (\bz ;\orho_t) z_1 z_3 ]  = \E_{\bz} [ \hf_{\NN} (\bz ;\orho_t) z_2 z_3 ]$ is found and therefore the risk of the \eqref{eq:DF-PDE} dynamics is always bounded away from zero. See Section \ref{app:numerics} for numerical simulations and further discussion on degenerate MSPs.

\paragraph{Generic MSP functions are learnable:} To bypass this difficulty, we prove a learnability result that holds for ``generic'' MSP functions -- i.e., that holds almost surely over a random choice of non-zero Fourier coefficients. Formally, for any set structure $\cS = \{S_1,\ldots,S_m\} \subseteq 2^{[P]}$, let us define a measure over functions that have those Fourier coefficients.
\begin{definition}\label{def:leb-msp}
For any set structure $\cS \subseteq 2^{[P]}$ define the measure $\mu_{\cS}$ over functions $h_* : \{+1,-1\}^P \to \R$ induced by taking $h_*(\bz) = \sum_{S \subseteq [P]} \alpha_S \chi_S(\bx)$, where the Fourier coefficients satisfy $\alpha_S = 0$ if $S \not\in\cS$, and $(\alpha_S)_{S \in \cS}$ have Lebesgue measure on $\R^{|\cS|}$.
\end{definition}

Our main sufficiency result shows that the degenerate cases are a measure-zero set. In this sense, there are very few bad examples, and so MSP structure is ``nearly'' sufficient for strong $O(d)$-SGD-learnability.
\begin{theorem}\label{thm:discrete-msp}
For any MSP set structure $\cS \subseteq 2^{[P]}$, $h_*$ is strongly $O(d)$-SGD-learnable almost surely with respect to $\mu_{\cS}$, using activation function $\sigma(x) = (1+x)^L$ where $L = 2^{8P}$.\footnote{Technically speaking, for the strong SGD-learnability definition we cannot take $\sigma(x) = (1+x)^L$ as it is not bounded. However, we take an activation function that equals $(1+x)^L$ on the interval $(-1,1)$ and is bounded elsewhere.}
\end{theorem}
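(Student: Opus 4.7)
By Theorem \ref{thm:equivalence}, it suffices to exhibit, for $\mu_\cS$-almost every $h_*$, Lipschitz schedules $\xi^a,\xi^w$ and regularizations $\lambda^a,\lambda^w\geq 0$ under which $\inf_{t\geq 0} R(\orho_t)<\eps$ for every $\eps>0$. I would take $\lambda^a=\lambda^w=0$ and design a schedule consisting of $m+1$ consecutive blocks indexed by the MSP ordering $S_1,\ldots,S_m$. During the $i$-th block I Lipschitz-approximate a bang-bang control that essentially only moves the first-layer weights $(\obu,\os)$; in the final block I freeze the first layer and only evolve $\oa$, so that the dynamics becomes a convex Wasserstein flow for linear regression over frozen features.

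The heart of the proof is an induction on the MSP ordering: after block $i$, the distribution $\orho_{t_i}$ has ``activated'' exactly the coordinates in $T_i := \cup_{j\leq i} S_j$, meaning that its projection onto these coordinates is generic enough both to (a) enable the next block to activate the new coordinate $S_{i+1}\setminus T_i$ (if nonempty) and (b) ensure feature expressivity in the final block. The base case is direct: at $\obu^0=\bzero$ the gradient in $\ou_\ell$ along a particle equals, up to benign scalars,
\[
\oa\,\EE_{\bz,G}\bigl[\bigl(\hf_\NN(\bz;\orho_t)-h_*(\bz)\bigr)\,\sigma'(\langle\obu,\bz\rangle+\os G)\,z_\ell\bigr],
\]
which at $t=0$ reduces to a nonzero multiple of the Fourier coefficient $\hh_*(\{\ell\})$; so whenever $S_1=\{\ell\}$ this coefficient is $\mu_\cS$-a.s.\ nonzero and block $1$ activates $\ell$.

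The inductive step is the main obstacle, since the degenerate examples of Section \ref{sec:sufficient} show that symmetries among Fourier coefficients can cause the newly added coordinate's gradient to vanish identically in time on a positive-codimension subvariety of $(\alpha_S)_{S\in\cS}$. I would follow contribution (ii) of the introduction and reduce the induction step to polynomial identity testing: with $\sigma(x)=(1+x)^L$ the solution of the block-$i$ ODE admits a formal Taylor expansion in $t$ whose coefficients are polynomials in $(\alpha_S)_{S\in\cS}$ of bounded degree, and the leading Taylor coefficient of $\dot{\ou}_\ell^t$ equals a specific polynomial $P_{\cS,i}\in\mathbb{R}[(\alpha_S)_{S\in\cS}]$. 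Showing $P_{\cS,i}\not\equiv 0$ reduces to exhibiting a single explicit witness MSP function where the activation is known to occur -- for instance, a vanilla staircase with independent generic Fourier coefficients, for which a direct computation shows the first nonzero derivative of $\ou_\ell^t$ at $t=t_{i-1}$ is nonzero. Because a nonzero polynomial on $\mathbb{R}^{|\cS|}$ vanishes only on a Lebesgue-null set, the induction step then holds $\mu_\cS$-a.s.

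Once block $m$ has activated all of $T_m=\mathrm{supp}(h_*)$, I would choose $L=2^{8P}$ large enough that the Gaussian-smoothed features $\{\EE_G[\sigma(\langle\obu,\bz\rangle+\os G)]\}_{\obtheta\sim\orho_{t_m}}$ span the space of functions on $\{\pm 1\}^P$ supported on $T_m$; this is once again a polynomial identity statement on an interpolation-matrix determinant, handled by the same witness-plus-Lebesgue scheme. The terminal block is then a convex linear regression on a spanning feature family, which is driven to zero risk by standard mean-field convergence arguments. The exponent $L=2^{8P}$ is chosen to comfortably absorb the degree blow-ups accumulated across the $m\leq 2^P$ induction steps and the final interpolation step, while keeping all polynomial-identity witnesses available simultaneously.
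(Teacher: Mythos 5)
Your high-level scaffold — reduce to showing the dimension-free dynamics reach low risk, freeze the second layer while the first layer picks up coordinates, then do linear regression on frozen features, and resolve genericity by polynomial identity testing — correctly identifies the broad contours of the paper's argument. But your proposal hand-waves through precisely the step that makes the theorem nontrivial, and your chosen tool for that step does not work.

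The crux is the polynomial-identity claim. You propose to show $P_{\cS,i}\not\equiv 0$ (and later that the interpolation-matrix determinant is not identically zero) by exhibiting a vanilla staircase as a witness. This does not work for a general MSP structure $\cS$. The polynomial $P_{\cS,i}$ lives on $\R^{|\cS|}$ with coordinates $(\alpha_S)_{S\in\cS}$, so a witness must be a point of that parameter space. A vanilla staircase has nonzero coefficients only on the chain $\{1\}\subset\{1,2\}\subset\cdots$, which need not lie inside $\cS$: consider the minimal MSP structure $\cS=\{\{1\},\{1,2\},\{2,3\}\}$, which supports coordinate $3$ but does not contain $\{1,2,3\}$. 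There is simply no vanilla staircase on the variety cut out by $\cS$, so the witness you invoke does not exist. The paper's proof does not look for a witness at all. Instead it computes the low-order part of each weight trajectory from a recurrence (Lemma~\ref{lem:hatburecurrence}; see $\tilde q_{k,i}$ in Claim~\ref{claim:oi-monomial-distinct}), shows by an induction on the MSP ordering that these low-order parts are \emph{distinct nonzero monomials} in the Fourier coefficients, and then invokes the algebraic fact of Newman and Slater (Proposition~\ref{prop:newmanslater}) that, for $L\geq 8m^2$, the $L$-th powers of $m$ pairwise non-proportional polynomials are linearly independent. This is exactly what motivates the choice $\sigma(x)=(1+x)^L$: it makes each entry of the feature matrix an $L$-th power $(1+\tilde r_{\bz}(\zeta))^L$ of a low-degree polynomial, so the Wronskian determinant is nonzero, which is how the full-rank statement is proved. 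Your sketch does not surface the need for distinctness of the weight polynomials across coordinates (as opposed to mere nonvanishing), nor the power-of-a-linear-form structure that permits the Newman--Slater step; these two ideas are the load-bearing ones. The $L=2^{8P}$ exponent is dictated by the Newman--Slater threshold with $m=2^P$, not by degree blow-up over blocks.

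Two more minor points. First, the $m+1$-block ``activate one $S_i$ per block'' schedule is a structural departure from the paper's two-phase schedule (Phase~1 trains the first layer for only $k_1=P$ discrete steps; Phase~2 is linear training). Your per-block inductive invariant is harder to maintain than it looks, because the activated coordinates do not stay fixed while you go after the next one; the paper avoids this by analyzing the whole Phase~1 trajectory in one shot as a polynomial in $\eta a$. Second, the paper's Theorem~\ref{thm:discrete-msp} is proved via the discrete equivalence (Theorem~\ref{thm:discrete-equivalence}), whereas you invoke the continuous one. This is not a gap per se — the paper even sketches a continuous-time route in Appendix~\ref{ssec:genericmspproof} — but the continuous route there requires perturbing the activation, precisely because one loses the exact polynomial structure that the discrete argument with $\sigma(x)=(1+x)^L$ exploits.
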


The converse to this result is implied by the necessity result of the previous section, which states that for any $h_*$ with non-zero Fourier coefficients (set structure) $\cS$ that is \textit{not} MSP, $h_*$ is \textit{not} strongly $O(d)$-SGD-learnable. While we prove Theorem \ref{thm:discrete-msp} for a particular activation, we note that the proof implies that the same is true for any degree-$L$ polynomial activation almost surely over its $(L+1)$-coefficients (see Theorem \ref{thm:discrete-msp-more-activations} in Appendix \ref{app:discrete-msp-proof}). In Appendix~\ref{ssec:genericmspproof}
we show how this result extends to generic smooth (non-polynomial) activations as long as a certain polynomial is not identically $0$ for a given set structure (which we show with a small technical caveat).

\paragraph{Vanilla staircase, learnable without genericity:}
In the special case of functions with ``vanilla staircase''  structure we do not need a genericity assumption, and we require weaker assumptions on the activation function.
\begin{theorem}\label{thm:vanillastaircasesuffrestated}
Let $h_*$ be of the form $h_* ( \bz) = \alpha_1 z_1 + \alpha_2 z_1z_2 + \ldots + \alpha_P z_1 z_2 \cdots z_P$ where $\alpha_{i} \neq 0$ for $i \in [P]$. Then $h_*$ is strongly $O(d)$-SGD-learnable using any activation function $\sigma \in \cC^{2^{P-1}+1} (\R)$ with nonzero derivatives $\sigma^{(r)}(0) \neq 0$ for $r = 0 , \ldots, P$. 
\end{theorem}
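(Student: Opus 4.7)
The plan is to apply Theorem~\ref{thm:equivalence} and exhibit Lipschitz schedules $\xi^a,\xi^w$ together with regularizations $\lambda^a,\lambda^w\geq 0$ under which $\inf_{t\geq 0} R(\orho_t)<\eps$ along the \eqref{eq:DF-PDE} flow. I would use a (piecewise-constant, smoothly interpolated) two-phase schedule. In Phase~1 of duration $T_1$, freeze the second layer ($\xi^a\equiv 0$, $\lambda^a=\lambda^w=0$) and train only the first-layer weights, aiming to escape the degenerate initialization $\obu^0=\bzero$ and sequentially activate every coordinate $\ou_1^t,\dots,\ou_P^t$. In Phase~2 of duration $T_2$, freeze the first layer ($\xi^w\equiv 0$) and train only $\oa$ with small $\lambda^a>0$, exploiting the now-rich feature map produced by Phase~1 to fit $h_*$ by convex optimization.

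The heart of the proof is a staircase-climb induction for Phase~1. At $t=0$ a direct computation gives $\dot{\ou}_1^0 \propto \oa^0\,\alpha_1\,\E_G[\sigma'(\os^0 G)]$, nonzero for $\mu_a$-almost every $\oa^0$ using $\sigma'(0)\neq 0$, so $\ou_1^t$ becomes nonzero immediately. Inductively, suppose at some time $t_k\leq T_1$ we have $\ou_1^{t_k},\dots,\ou_k^{t_k}\neq 0$ while $\ou_{k+1}^{t_k}=\dots=\ou_P^{t_k}=0$; this is preserved up to $t_k$ because for $j>k+1$, both $\hf_{\NN}(\bz;\orho_t)$ and $\sigma'(\langle\obu^t,\bz\rangle+\os^t G)$ are independent of $z_j$, while $\E_{z_{k+1},\dots,z_P}[h_*(\bz)\,z_j]=0$ by the vanilla-staircase structure (only $z_1\cdots z_{k+1}\cdot z_{k+1}=z_1\cdots z_k$ survives integration). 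For $j=k+1$ the surviving gradient term is
\begin{equation*}
\dot{\ou}_{k+1}^{t_k} \;=\; \xi^w(t_k)\,\oa^{t_k}\,\alpha_{k+1}\,\E_{z_1,\dots,z_k,G}\big[z_1\cdots z_k\,\sigma'\big(\textstyle\sum_{i=1}^{k}\ou_i^{t_k}z_i+\os^{t_k}G\big)\big],
\end{equation*}
i.e.\ (up to prefactors) the Walsh coefficient of $\sigma'(\langle\obu^{t_k},\cdot\rangle+\os^{t_k}G)$ at $\chi_{[k]}$. Taylor expanding in $\ou_i^{t_k}$, this coefficient equals $\E_G[\sigma^{(k+1)}(\os^{t_k}G)]\,\ou_1^{t_k}\cdots\ou_k^{t_k}+O(\|\obu^{t_k}\|^{k+2})$, with leading factor nonzero by $\sigma^{(k+1)}(0)\neq 0$. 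Hence $\ou_{k+1}$ escapes $0$ immediately after $t_k$; iterating $P$ times yields a finite $T_1$ at which all $P$ coordinates are activated for $\mu_a$-almost every $\oa^0$.

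For Phase~2 only $\oa$ evolves with $(\obu,\os)$ frozen, giving a convex Wasserstein gradient flow in the second-layer coefficient, equivalent to ridge regression with kernel
\begin{equation*}
K(\bz,\bz') \;=\; \int \varphi(\obu,\os;\bz)\,\varphi(\obu,\os;\bz')\,\nu(\de\obu\,\de\os),\qquad \varphi(\obu,\os;\bz):=\E_G[\sigma(\langle\obu,\bz\rangle+\os G)],
\end{equation*}
where $\nu$ is the $(\obu,\os)$-marginal of $\orho_{T_1}$. The analogous Walsh computation yields $\E_\bz[\varphi(\obu,\os;\bz)\chi_S(\bz)]=\E_G[\sigma^{(|S|)}(\os G)]\prod_{i\in S}\ou_i+O(\|\obu\|^{|S|+2})$, nonzero for small $\obu$ with all $\ou_i\neq 0$ using $\sigma^{(|S|)}(0)\neq 0$ for $0\leq|S|\leq P$. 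Because the Phase~1 endpoint depends non-degenerately on $\oa^0$ and the leading behaviors $\prod_{i\in S}\ou_i$ are distinct as $S$ varies over $2^{[P]}$, the $2^P$ feature functions are linearly independent in $L^2(\nu)$; hence $K$ is strictly positive definite on $L^2(\{+1,-1\}^P)$, $h_*$ lies in its RKHS, and standard convex-flow convergence drives $R(\orho_{T_1+t})$ to the ridge minimum, whose value tends to $0$ as $\lambda^a\downarrow 0$.

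The main obstacle I expect is making the Phase~1 staircase climb fully quantitative: one must control the $O(\|\obu^t\|^{k+2})$ error terms uniformly in $t$ so that the leading sign of $\dot{\ou}_{k+1}$ persists throughout the escape window, and ensure that $\ou_1^t,\dots,\ou_k^t$ stay bounded away from $0$ long enough for $\ou_{k+1}^t$ to reach a prescribed threshold before advancing the induction. The smoothness assumption $\sigma\in\cC^{2^{P-1}+1}$ is used to handle nested Taylor expansions across the $P$ induction levels (with the effective order of differentiability roughly doubling per level, which matches $2^{P-1}+1$), while the $P+1$ nonvanishing-derivative conditions $\sigma^{(r)}(0)\neq 0$ are consumed at the $P$ Phase~1 rate computations (using $\sigma^{(1)},\dots,\sigma^{(P)}$) plus the Phase~2 constant-feature check (using $\sigma^{(0)}$).
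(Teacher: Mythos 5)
Your overall strategy (two training phases, escape-from-zero via the staircase hierarchy in Phase~1, linear fitting with a kernel lower bound in Phase~2) is the same as the paper's, but there is a genuine gap in the Phase~2 argument.

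The Phase~1 ``staircase-climb induction'' is morally right but misdescribes the dynamics. There is no sequence of positive times $t_1<t_2<\dots$ at which $\ou_{k+1}^{t_k}=\dots=\ou_P^{t_k}=0$ while $\ou_1^{t_k},\dots,\ou_k^{t_k}\neq 0$; from the initialization $\obu^0=\bzero$ \emph{all} coordinates become nonzero simultaneously for $t>0$, just with different leading orders in $t$. The paper (Proposition~\ref{prop:staircase_poly_approx}) tracks this precisely: $\ou_k^t(a) = \hat{u}_k^t(a) + O(t^{2^{k-1}+1})$ with $\hat{u}_k^t(a) = 2^{1-2^{k-1}}(at)^{2^{k-1}}\prod_{i\le k}(\alpha_{\{1,\dots,i\}}m_i)^{2^{(k-1-i)\vee 0}}$, i.e. the leading order in $t$ (and jointly in $a$) is $(at)^{2^{k-1}}$. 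Your exposition can be patched by reformulating the induction as a statement about leading orders rather than about exact vanishing at intermediate times, but more importantly this leading-order formula in $a$ is exactly what you need in Phase~2, and you do not derive it.

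The genuine gap is in the justification of $\lambda_{\min}(\bK^{T_1})>0$. You write that ``the Phase~1 endpoint depends non-degenerately on $\oa^0$'' and that ``the leading behaviors $\prod_{i\in S}\ou_i$ are distinct as $S$ varies over $2^{[P]}$, hence linearly independent,'' but neither claim is established, and the second does not follow from distinctness alone. What actually makes the argument work is the specific algebraic fact that $\ou_k^{T_1}(a) \sim c_k\,a^{2^{k-1}}$ to leading order, so $\prod_{i\in S}\ou_i^{T_1}(a)\sim c_S\, a^{\beta(S)}$ with $\beta(S)=\sum_{i\in S}2^{i-1}$, and crucially that $\beta:2^{[P]}\to\{0,\dots,2^P-1\}$ is a \emph{bijection} (binary representation). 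This is what forces the $2^P$ coefficient functions $a\mapsto\E_{\bz}[\chi_S(\bz)\sigma(\<\bu^{T_1}(a),\bz\>)]$ to have pairwise-distinct leading degrees in $a$, hence linear independence, and lets the paper reduce $\bK^{T_1}$ to $\bD(\bM+\bDelta)\bD$ where $\bM=(\E_a[a^{\beta(S)+\beta(S')}])_{S,S'}$ is the positive-definite Gram/Hankel matrix of $1,a,\dots,a^{2^P-1}$ on $L^2([-1,1],\Unif)$. Without this exponent bookkeeping your claim of linear independence is unjustified: distinct multilinear products $\prod_{i\in S}\ou_i$ of $P$ functions can perfectly well be linearly dependent in $L^2(\nu)$ in general. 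You should make the $a^{2^{k-1}}$ leading behavior and the bijectivity of $\beta$ explicit; the rest of your argument then goes through, whether you run Phase~2 with $\lambda^a=0$ (as the paper does, showing geometric decay $\exp(-\lambda_{\min}(\bK^{T_1})t)$) or as ridge regression with $\lambda^a\downarrow 0$ (your variant, which is an equivalent cosmetic choice).
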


\paragraph{Proof ideas}
The proofs for Theorems~\ref{thm:discrete-msp} and \ref{thm:vanillastaircasesuffrestated} follow a similar approach. From the equivalence stated in Theorem \ref{thm:equivalence}, it is sufficient to display, for each $\eps >0$, hyperparameters such that the \eqref{eq:DF-PDE} dynamics reaches $\eps$-risk. We choose $\lambda^a = \lambda^w = 0$ (no regularization) and initialization $\mu_a = \Unif ( [-1,+1])$ and $\mu_w = \delta_0$ (this choice simplifies the analysis as $\os^t = \os^0 = 0$). We split the learning in two phases: in \textit{Phase 1}, we train the first layer weights $\obu^t$ for time $t \in [0,T_1]$ while keeping $\oa^t=\oa^0$ fixed, and in \textit{Phase 2}, we train the second layer weights $\oa^t$ for time $t \in [T_1,T_2]$ while keeping $\obu^t= \obu^{T_1}$ fixed. 

At the end of Phase 1, denote $(\oa^0, \obu^{T_1} (\oa^0))$ the weights obtained from the evolution \eqref{eq:DF-PDE} from initialization $(\oa^0, \obu^0 = \bzero)$ (note that $\obu^{T_1} (\oa^0)$ is a deterministic function of $\oa^0$). Phase 2 corresponds to a linear training phase with kernel $K^{T_1}(\bz , \bz ') = \E_{\bar a^0 \sim \mu_a} [\sigma(\<\bar\bu^{T_1}(\bar a^0),\bz\>)\sigma(\<\bar\bu^{T_1}(\bar a^0),\bz'\>)]$. In particular, the risk decreases as $\exp ( - \lambda_{\min} (\bK^{T_1}) t)$ during this phase, where we denote by $\bK^{T_1} = (K^{T_1} (\bz , \bz') )_{\bz,\bz' \in \{+1,-1\}^P}$ the kernel matrix. Showing global convergence reduces to showing that $\lambda_{\min} (\bK^{T_1})>0$ for some $T_1$ and taking $T_2 = T_1 + \log(1/\eps)/\lambda_{\min} (\bK^{T_1})$.

The goal of the analysis in Phase 1 is therefore to prove this lower bound on the eigenvalues of the kernel matrix. Phase 1 corresponds to  a nonlinear dynamics, and is a priori unclear how to analyze. In the case of vanilla staircases, we show that it is enough to track the leading order in $t$ for each coordinates $(\ou_i^t)_{i \in [P]}$ and take $T_1$ small enough. For example, when learning $h_*(\bz) = z_1 + z_1z_2 + z_1z_2z_3$,  for small time $t \leq T_1$ we can roughly show that $\bar{u}_{1}^t(a) \propto a t$, that $\bar{u}_{2}^t(a) \propto a t^2$, and that $\bar{u}_{3}^t(a) \propto a t^4$. In other words, the weight corresponding to $z_1$ increases in magnitude the fastest, followed by the weight corresponding to $z_2$, and then weight corresponding to $z_3$. We can then use this explicit calculation to lower bound the eigenvalues of $\bK^{T_1}$, crucially using that the second-layer weights $\bar a^0 \sim \mu_a$ are chosen at random, which ensures that the neurons are diverse enough. See Appendix \ref{ssec:vanillaproof} for the detailed proof.

For general MSP set structure, it is not enough to only track the weights $\barbu^t(\bar a^0)$ to leading order in $t$. We show instead that it suffices to lower bound a kernel matrix $\hat{\bK}^{T_1}$ obtained from a simplified dynamics $\hat{\bu}^t (\oa^0)$. The weights $\hat{u}_i^t (\oa^0)$ can be written in terms of polynomials in the second-layer weights $\oa^0$, the Fourier coefficients $(\alpha_S)_{S \in \cS}$ and the derivatives of the activation $(\sigma^{(r)}(0))_{r = 0 , \ldots , L}$, with coefficients defined explicitly by a recurrence relation and only depending on the set structure $\cS$. Using algebraic facts about the linear independence of large powers of polynomials we show that $\det(\hat{\bK}^{T_1})$ is a nonzero polynomial in the second-layer weights and Fourier coefficients. Therefore, plugging in random second-layer weights $\oa^0 \sim \Unif([-1,1])$, and random Fourier coefficients we show that $\det ( \hat{\bK}^{T_1} ) \neq 0$ almost surely, by anti-concentration of polynomials. This implies in particular that $\sum_{S \in \cS} \alpha_S$ is almost surely strongly $O(d)$-SGD-learnable. See Appendix \ref{app:discrete-msp-proof}.

\section{Separation with linear methods}
\label{sec:linear-methods}

It is known that linear methods with $\poly(d)$ many features or samples cannot learn the class of degree-$P$ monomials if $P$ grows with the input dimension $d$ \cite{hsu2021approximation,hsudimension,kamath_dim}. One way of proving this is by using SQ lower bounds \cite{blum94weakly}, which imply lower bounds on linear methods \cite{kamath_dim}. However, this proof strategy fails for staircase functions of growing degree, since the hierarchical structure makes these efficiently SQ learnable by sequentially querying the monomials of increasing degree\footnote{Making at most $d$ queries per degree for vanilla staircases, e.g., at most $d \log d$ queries when $P=\log d$.}. We thus need a lower-bound on linear methods that goes beyond SQ lower-bounds, which we obtain by using subspace projections.

Consider a general linear method which is defined by a Hilbert space $(\cH , \< \cdot, \cdot \>_{\cH} )$, a feature map $\psi : \{+1 , -1 \}^d \to \cH$, an empirical loss function $L:\R^{2n} \to  \R \cup {\infty}$ and a regularization parameter $\lambda >0$. Given data points $(y_i , \bx_i )_{i \in [n]}$, the linear method construct a prediction model $\hf( \bx) := \< \hba , \psi (\bx ) \>_{\cH}$ where $\hba \in \cH$ is obtained by minimizing the regularized empirical risk functional 
\begin{equation}\label{eq:linear_method}
\hba = \argmin_{\ba \in \cH} \Big\{ L \big((y_i, \< \ba , \psi (\bx_i) \>)_{i\in [n]}\big) + \lambda \| \ba \|_{\cH}^2 \Big\}\, .
\end{equation}
We will further denote $q = \dim  (\cH)$. %
Popular examples include random feature models ($q$ is equal to the number of random features) and kernel methods ($q =\infty$ typically). While the optimization problem \eqref{eq:linear_method} is over a (potentially) infinite dimensional space $\cH$, it is an easy exercise to verify that $\hba \in \spn \{ \psi ( \bx_i ) : i \in [n] \}$ which has dimension bounded by $\min(n,q)$.

We consider learning a class of functions $\cF_M = \{ f_1 , \ldots , f_M \} \subseteq L^2 ( \cX)$ from $n$ evaluations at points $(\bx_i)_{i\in[n]}$. For any linear method which, for $j \in [M]$, outputs the model $\hf_j$ obtained by \eqref{eq:linear_method} on $(f_j(\bx_i) + \eps_{ij} , \bx_i)_{i \in [n]}$, we define the average prediction error on $\cF$ as $\oR_n ( \cF_M ) = \frac{1}{M} \sum_{j \in [M]} \E_{\cX} [ (f_j (\bx) - \hf_j ( \bx) )^2]$. Several lower bounds on the sample complexity have appeared in the literature such as \cite{hsu2021approximation,hsudimension,kamath_dim}, here we present an improvement on \cite{hsu2021approximation,hsudimension} that is tighter for target functions that are not almost orthogonal, and an improvement of \cite{kamath_dim} that is tighter for functions like vanilla staircases of growing degree (which have polynomial SQ-dimension).

\begin{proposition}\label{prop:dimens-lower-bound} 
Let $\Omega \subseteq L^2(\cX)$ a linear subspace. Let $\cF_M = \{f_1 , \ldots , f_M \} \subset L^2 (\cX)$ such that $\| \proj_{\Omega} f_j \|_{L^2}^2 = 1 - \kappa$ and $\| \proj_{\Omega}^\perp f_j \|_{L^2}^2 = \kappa$ for all $f_j \in \cF_M$. For any linear method, if $\oR_n ( \cF_M ) \leq 1 - \eta$, then we must have
\begin{equation}\label{eq:dim_r}
    \min ( n ,q)  \geq \frac{\eta - \kappa}{ \max_{i \in [M]} \frac{1}{M} \sum_{j \in [M]}  | \< f_i , \proj_{\Omega} f_j \>_{L^2} | }\, .
\end{equation}
\end{proposition}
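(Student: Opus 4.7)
The plan is to reduce the claim to a spectral bound on the $M\times M$ Gram matrix $B_{ij}:=\langle g_i,g_j\rangle_{L^2}$, where $g_j:=\proj_\Omega f_j$, via three steps: (i) realize each predictor $\hat f_j$ as an element of a (data-dependent) subspace $\hat V\subseteq L^2(\cX)$ of dimension at most $r:=\min(n,q)$; (ii) apply the orthogonal projection onto $\Omega$ to transfer the problem into $V':=\proj_\Omega(\hat V)\subseteq\Omega$, turning the risk upper bound into a lower bound on $\frac{1}{M}\sum_j\|\proj_{V'} g_j\|_{L^2}^2$; (iii) upper bound this quantity by $r\cdot\lambda_{\max}(M^{-1}B)$ and control $\lambda_{\max}(B)$ by the maximum row $\ell_1$-norm of $B$ via Gershgorin, exploiting $B\succeq 0$.

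For step (i), the representer-type observation is that $\hatba_j\in\spn\{\psi(\bx_i):i\in[n]\}$, so as a function of $\bx$, $\hat f_j(\bx)=\langle\hatba_j,\psi(\bx)\rangle_\cH$ lies in $\hat V:=\spn\{\bx\mapsto\langle\psi(\bx_i),\psi(\bx)\rangle_\cH:i\in[n]\}\subseteq L^2(\cX)$; expanding $\hatba_j$ in any basis of $\cH$ gives the complementary bound $\dim\hat V\leq q$. Since $\hat f_j\in\hat V$ and $\proj_{\hat V} f_j$ is the best $L^2$-approximant of $f_j$ in $\hat V$, one has $\|f_j-\hat f_j\|_{L^2}^2\geq\|f_j-\proj_{\hat V} f_j\|_{L^2}^2$ pointwise in the samples and noise.

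For step (ii), I would use $\|v\|_{L^2}^2\geq\|\proj_\Omega v\|_{L^2}^2$ on $v=f_j-\proj_{\hat V}f_j$ to get
\[
\|f_j-\proj_{\hat V}f_j\|_{L^2}^2\;\geq\;\|g_j-\proj_\Omega\proj_{\hat V}f_j\|_{L^2}^2\;\geq\;\|g_j-\proj_{V'}g_j\|_{L^2}^2\;=\;(1-\kappa)-\|\proj_{V'}g_j\|_{L^2}^2,
\]
using $\proj_\Omega\proj_{\hat V}f_j\in V'$ and the fact that $\proj_{V'}g_j$ is the closest element of $V'\subseteq\Omega$ to $g_j\in\Omega$, together with $\|g_j\|_{L^2}^2=1-\kappa$. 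Averaging in $j$ and combining with $\oR_n(\cF_M)\leq 1-\eta$ (applied to the expected risk; all preceding inequalities hold pointwise, so expectations commute) yields $\frac{1}{M}\sum_j\|\proj_{V'}g_j\|_{L^2}^2\geq\eta-\kappa$.

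For step (iii), I write the left-hand side as $\tr(\proj_{V'}A\proj_{V'})$ with $A:=\frac{1}{M}\sum_j g_j\otimes g_j$. Since $\proj_{V'}$ has rank at most $r$, this is at most $r\cdot\lambda_{\max}(A)$. The operators $A$ and $M^{-1}B$ are the two Gramians of the same map (columns $g_j$), so they share nonzero eigenvalues and $\lambda_{\max}(A)=M^{-1}\lambda_{\max}(B)$. Since $B$ is PSD with nonnegative diagonal, Gershgorin's theorem gives $\lambda_{\max}(B)\leq\max_i\sum_j|B_{ij}|$. Combining the inequalities, $\eta-\kappa\leq r\cdot\max_i\frac{1}{M}\sum_j|\langle f_i,\proj_\Omega f_j\rangle_{L^2}|$, which rearranges to the claim. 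I do not anticipate any serious obstacle: the whole argument is a careful linear-algebra computation. The genuinely new ingredient compared to earlier dimension lower bounds (e.g.\ \cite{hsu2021approximation,hsudimension}) is the $\Omega$-projection step, which replaces the full Gram matrix of $\{f_j\}$ by that of $\{\proj_\Omega f_j\}$, and this is what makes the bound nontrivial for highly correlated families like staircase functions.
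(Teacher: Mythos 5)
Your proof is correct and follows essentially the same route as the paper: realize the predictor in a data-dependent subspace $\hat V$ of dimension $\leq\min(n,q)$, project everything onto $\Omega$, and bound the total energy captured by the rank-$\leq r$ projection $\proj_{V'}$ via $r\cdot\lambda_{\max}$ of the Gram operator of the $\proj_\Omega f_j$, finishing with the row-sum (Gershgorin/$\ell_1$--$\ell_\infty$) bound on $\|\bG\|_{\op}$. The paper presents this as a corollary of its abstract Proposition (which in turn modifies the Boas--Bellman step in the cited references), whereas you give the self-contained unpacking — the ``improved Boas--Bellman'' inequality $\sum_j\langle g,g_j\rangle^2\leq\|\bG\|_{\op}\|g\|^2$ summed over an orthonormal basis of $V'$ is exactly your trace bound $\tr(\proj_{V'}A\proj_{V'})\leq r\lambda_{\max}(A)$ — so there is no substantive difference.
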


Define $\Omega_k$ to be the subspace spanned by all degree-$k$ monomials $L^2 (\{+1 , -1\}^P)$. Consider a function $h_{{\text{Poly-}k}} \in L^2 (\{+1 , -1\}^P)$ such that $\| \proj_{\Omega_k} h_{\text{Poly-}k} \|_{L^2}^2 = \| \proj_{\Omega_k}^\perp h_{\text{Poly-}k} \|_{L^2}^2 = 1/2$ and $\proj_{\Omega_k} h_{\text{Poly-}k}$ is supported over $m$ monomials. Second, consider $h_{\text{Str-}P} (\bz) = P^{-1/2} ( z_1 + z_1 z_2 + \ldots + z_1 \cdots z_P )$ the degree-$P$ staircase. We consider two sets of functions obtained by $h_{{\text{Poly-}k}}$ and $h_{\text{Str-}P}$ with all the permutations of their input signal:
$
\cF_{\ell} = \Big\{ f = h_\ell \circ \tau  : \tau \in \Pi (d) \Big\} \, , \ell \in \{\text{Poly-}k ,\text{Str-}P\} \,$,
where $\Pi (d)$ corresponds to the group of all permutation on $[d]$. Applying Proposition \ref{prop:dimens-lower-bound}, we get the following sample-complexity lower bounds:
\begin{proposition}\label{prop:lower_bound_stairs}
For any linear method, if $\oR_n ( \cF_{{\text{Poly-}k}} ) \leq 1/2\cdot (1 - \eta)$ then we must have $\min ( n, q) \geq \frac{\eta }{m} {{d}\choose{k}}$.
Similarly, if $\oR_n ( \cF_{{\text{Str-}P}} ) \leq 1 - \eta$ then we must have $\min(n,q) \geq \frac{\eta}{2}{{d}\choose{\lfloor \frac{\eta P}{2} \rfloor}}$.
\end{proposition}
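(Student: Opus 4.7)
The strategy is to apply Proposition~\ref{prop:dimens-lower-bound} to each family with a carefully chosen subspace $\Omega \subseteq L^2(\{+1,-1\}^d)$, and then to bound the denominator $M^{-1}\sum_j |\langle f_i,\proj_\Omega f_j\rangle_{L^2}|$ via the action of $\Pi(d)$ on $k$-subsets. The key combinatorial input is that for $\tau$ uniform on $\Pi(d)$ and any $S,T\subseteq[d]$ with $|S|=|T|=k$,
\[
\mathbb{P}[\tau(S)=T]=k!(d-k)!/d!=1/\binom{d}{k}.
\]
Since both $\Omega$'s below depend only on monomial degree, $\proj_\Omega$ commutes with precomposition by $\Pi(d)$, so the max over $i$ in \eqref{eq:dim_r} coincides with the average for any fixed $f_i$, and it suffices to average the overlaps over $\tau_j\in\Pi(d)$.

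\textbf{Polynomial class.} Take $\Omega=\spn\{\chi_T:T\subseteq[d],\,|T|=k\}$. Permutations preserve degree, so $\|\proj_\Omega f_j\|_{L^2}^2=\|\proj_{\Omega_k}h_{\text{Poly-}k}\|_{L^2}^2=1/2$ for all $f_j\in\cF_{\text{Poly-}k}$, giving $\kappa=1/2$. The hypothesis $\oR_n\le(1-\eta)/2$ rewrites as $\oR_n\le 1-\tilde\eta$ with $\tilde\eta=(1+\eta)/2$, so $\tilde\eta-\kappa=\eta/2>0$. Writing $\proj_{\Omega_k}h_{\text{Poly-}k}=\sum_{S\in\cS}c_S\chi_S$ with $|\cS|=m$ and $\sum c_S^2=1/2$, the permuted projections satisfy
\[
|\langle f_i,\proj_\Omega f_j\rangle_{L^2}|\le \sum_{S,S'\in\cS}|c_Sc_{S'}|\,\ind[\tau_i(S)=\tau_j(S')].
\]
Averaging over $\tau_j$ uniform on $\Pi(d)$ and applying Cauchy-Schwarz $(\sum_S|c_S|)^2\le m\sum_S c_S^2=m/2$ bounds $M^{-1}\sum_j|\langle\cdot\,,\,\cdot\rangle|$ by $m/(2\binom{d}{k})$. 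Plugging into \eqref{eq:dim_r} yields $\min(n,q)\ge\eta\binom{d}{k}/m$.

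\textbf{Staircase class.} Set $\ell=\lfloor\eta P/2\rfloor$ and take $\Omega=\spn\{\chi_T:|T|>\ell\}$. Since $f_j=P^{-1/2}\sum_{k=1}^P\chi_{\tau_j([k])}$, we have $\|\proj_\Omega f_j\|_{L^2}^2=(P-\ell)/P$, so $\kappa=\ell/P\le\eta/2$ and hence $\eta-\kappa\ge\eta/2$. Because $|\tau_i([k])|=k$ forces $k=k'$ in the cross terms,
\[
\langle\proj_\Omega f_i,\proj_\Omega f_j\rangle_{L^2}=P^{-1}\sum_{k=\ell+1}^P\ind[\tau_i([k])=\tau_j([k])]\ge 0.
\]
Averaging over $\tau_j$ and using that $1/\binom{d}{k}$ is decreasing on $\{0,\ldots,\lfloor d/2\rfloor\}$ bounds the average by $(P-\ell)/(P\binom{d}{\ell+1})\le 1/\binom{d}{\ell+1}$. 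Plugging into \eqref{eq:dim_r} and using $\binom{d}{\ell+1}\ge\binom{d}{\ell}=\binom{d}{\lfloor\eta P/2\rfloor}$ gives $\min(n,q)\ge(\eta/2)\binom{d}{\lfloor\eta P/2\rfloor}$.

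\textbf{Main obstacle.} The argument is largely mechanical once the right $\Omega$ is chosen. The delicate points are (i) in the polynomial case $\kappa=1/2$ is as large as plausible, and the peculiar threshold $(1-\eta)/2$ in the hypothesis is exactly what makes $\tilde\eta-\kappa=\eta/2>0$, while the Cauchy-Schwarz step is what produces the $1/m$ factor; (ii) for the staircase, the choice $\ell=\lfloor\eta P/2\rfloor$ balances the requirement $\kappa=\ell/P\le\eta/2$ (pushing $\ell$ down) against the concentration of binomial mass on the smallest retained index $\ell+1$ (pushing $\ell$ up), landing on the claimed degree $\lfloor\eta P/2\rfloor$ in the binomial coefficient.
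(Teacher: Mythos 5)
Your proof is correct and follows essentially the same route as the paper's (Propositions \ref{prop:F1} and \ref{prop:FP} in the appendix): pick a degree-graded subspace $\Omega$, invoke Proposition~\ref{prop:dimens-lower-bound}, use that $\proj_{\Omega}$ commutes with permutation precomposition so the max over $i$ equals the average, and bound the averaged overlap via $\mathbb P_{\tau}[\tau(S)=T]=1/\binom{d}{k}$, with Cauchy--Schwarz producing the $1/m$ factor in the polynomial case. The only cosmetic differences are that you keep degrees $>\ell$ where the paper keeps degrees $\geq\ell$ (which you absorb via $\binom{d}{\ell+1}\geq\binom{d}{\ell}$), and that you correctly insert a triangle inequality where the paper's displayed derivation writes an equality with the absolute value inside the expectation — a harmless slip there, since the indicator expectations are nonnegative, but your version is the clean one.
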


Note that kernel and random features methods achieve the lower bound for $\cF_{{\text{Poly-}k}}$ \cite{ghorbani2021linearized,mei2021generalization}. Comparing Proposition \ref{prop:dimens-lower-bound} with the result of Section \ref{sec:sufficient}, we get the following separation results between SGD-trained neural networks and linear methods:
\begin{itemize}
    \item[(1)] SGD on two-layer neural networks outperforms linear methods almost surely on non-linear MSP functions ($n = O(d)$ versus $n = \Omega_d (d^k)$ for degree-$k$ MSP).
    
    \item[(2)] We obtain from Proposition \ref{prop:lower_bound_stairs} that for any $P=\omega_d(1)$, linear methods must have $\min(n,q)=d^{\omega_d(1)}$ to learn the vanilla staircase of degree $P$, while Theorem \ref{thm:vanillastaircasesuffrestated} can still guarantee a sample complexity of $d^{O(1)}$ for $P$ growing slowly enough with $d$. 
\end{itemize}

\section{Conclusion and future directions}\label{sec:open-problems}

In this paper, we considered learning sparse functions in arbitrarily large ambient dimension, using two-layer neural networks trained by batch-SGD in the mean-field regime. We proved that the merged-staircase property is a necessary and nearly-sufficient condition for such functions to be learnable on such models in $O(d)$ sample-complexity. The near sufficiency part, which excludes a measure-zero subset, is unavoidable as there exist symmetric MSP functions with degenerate dynamics that are not strongly SGD-learnable in $O(d)$-scaling. This provides a regime where one can achieve a tight characterization of functions that are learnable by regular SGD on regular neural networks, while going beyond the linear regime.

One venue for future work is to characterize more precisely the set of degenerate MSP functions: current examples correspond to MSP functions with some group invariance (see Appendix \ref{app:numerics}) which arise naturally in applications. More importantly, the current MSP condition hinges on the particular setting considered in this paper: $P$ fixed (or sufficiently slowly growing), $O(d)$ sample-complexity and two-layer neural networks. In particular, this definition ignores composition-order, i.e., how many monomials are composed to create a new one, and how many fresh variables are involved in such a composition. We anticipate more complex categories to appear as we move away from this setting. For example, we conjecture that $l$-leap MSP (i.e., $|S_i \sm \cup_{j=1}^{i-1} S_j | \leq l$) are SGD-learnable in $O(\varphi_l(d))$-scaling with $\varphi_2(d) = d \log (d)$ and $\varphi_l(d) = \Tilde{O} (d^{l/2})$ for $l >2$ (this paper considers $l=1$ and showed $\varphi_1 (d) = d$ is tight in the mean-field regime).
Furthermore, the compositionality-order, i.e., the number $k$ of monomials that can be composed in order to produce new monomials, should also be factored in for a finer complexity analysis once $P$ (and $k$) are no longer constant. 
The depth of the architecture is also expected to play a role when $P$ is diverging: for instance, it is shown in \cite{staircase1} that vanilla staircases (i.e., nested chains with $k=l=1$) are learnable with $P$-layer neural networks (but unconventional gradient-based training) in $\poly (d,P,1/\eps)$ samples, while our proof techniques (Theorem \ref{thm:discrete-msp} and Theorem \ref{thm:vanillastaircasesuffrestated}, see Appendix \ref{sec:sample-complexity-explicit} for the statement of the explicit bounds) yield $O( e^{e^{P}} d/ \eps^C)$ sample-complexity\footnote{One can show $e^{O(P)}$-dependency is sufficient if all degree-$1$ monomials are included in the MSP set structure -- see Appendix~\ref{ssec:msp-one-included}. We conjecture that such an exponential scaling is needed, i.e., deeper neural networks are required to achieve a $\poly (P)$ dependency.}  with two-layer neural networks. More generally, the sparsity parameter $P$ will not be necessarily the right complexity measure for deeper networks: for example, some functions with small leap $l$ and large $P$ will be easier to learn than some functions with smaller $P$ but larger $l$.

Finally, it is natural to seek counterparts of the results in this work and counterparts of the staircase notions for other Hilbert spaces, such as $L^2$ functions with respect to the Gaussian measure.

\section*{Acknowledgements} 
We thank Guy Bresler, Dheeraj Nagaraj, and Nati Srebro for stimulating discussions. We thank the Simons Foundations and the NSF for supporting us through the Collaboration on the Theoretical Foundations of Deep Learning (deepfoundations.ai). This work was done (in part) while E.B. and T.M. were visiting the Simons Institute for the Theory of Computing and while E.B.\ was visiting the Bernoulli Center at EPFL.

\bibliography{bibliography.bib}

\newcommand{\etalchar}[1]{$^{#1}$}
\begin{thebibliography}{GMMM21b}

\bibitem[ABB{\etalchar{+}}21]{staircase1}
Emmanuel Abbe, Enric Boix{-}Adser{\`{a}}, Matthew~S. Brennan, Guy Bresler, and
  Dheeraj Nagaraj.
\newblock The staircase property: How hierarchical structure can guide deep
  learning.
\newblock {\em CoRR}, abs/2108.10573, 2021.

\bibitem[ADH{\etalchar{+}}19]{arora2019fine}
Sanjeev Arora, Simon Du, Wei Hu, Zhiyuan Li, and Ruosong Wang.
\newblock Fine-grained analysis of optimization and generalization for
  overparameterized two-layer neural networks.
\newblock In {\em International Conference on Machine Learning}, pages
  322--332. PMLR, 2019.

\bibitem[AKM{\etalchar{+}}21]{newAS}
Emmanuel Abbe, Pritish Kamath, Eran Malach, Colin Sandon, and Nathan Srebro.
\newblock On the power of differentiable learning versus {PAC} and {SQ}
  learning.
\newblock {\em CoRR}, abs/2108.04190, 2021.

\bibitem[AL20]{allen20backward}
Zeyuan Allen{-}Zhu and Yuanzhi Li.
\newblock Backward feature correction: How deep learning performs deep
  learning.
\newblock {\em arXiv}, abs/2001.04413, 2020.

\bibitem[AS20]{AS20}
Emmanuel Abbe and Colin Sandon.
\newblock On the universality of deep learning.
\newblock In H.~Larochelle, M.~Ranzato, R.~Hadsell, M.~F. Balcan, and H.~Lin,
  editors, {\em Advances in Neural Information Processing Systems}, volume~33,
  pages 20061--20072. Curran Associates, Inc., 2020.

\bibitem[AZL19]{allen19can}
Zeyuan Allen-Zhu and Yuanzhi Li.
\newblock What can resnet learn efficiently, going beyond kernels?
\newblock In H.~Wallach, H.~Larochelle, A.~Beygelzimer, F.~d\textquotesingle
  Alch\'{e}-Buc, E.~Fox, and R.~Garnett, editors, {\em Advances in Neural
  Information Processing Systems}, volume~32. Curran Associates, Inc., 2019.

\bibitem[AZL20]{allen2020backward}
Zeyuan Allen-Zhu and Yuanzhi Li.
\newblock Backward feature correction: How deep learning performs deep
  learning.
\newblock {\em arXiv preprint arXiv:2001.04413}, 2020.

\bibitem[AZLL19]{allen2019learning}
Zeyuan Allen-Zhu, Yuanzhi Li, and Yingyu Liang.
\newblock Learning and generalization in overparameterized neural networks,
  going beyond two layers.
\newblock In {\em Proceedings of the 33rd International Conference on Neural
  Information Processing Systems}, pages 6158--6169, 2019.

\bibitem[AZLS19]{allen2019convergence}
Zeyuan Allen-Zhu, Yuanzhi Li, and Zhao Song.
\newblock A convergence theory for deep learning via over-parameterization.
\newblock In {\em International Conference on Machine Learning}, pages
  242--252. PMLR, 2019.

\bibitem[Bac17]{bach2017breaking}
Francis Bach.
\newblock Breaking the curse of dimensionality with convex neural networks.
\newblock {\em The Journal of Machine Learning Research}, 18(1):629--681, 2017.

\bibitem[Bar93]{barron1993universal}
Andrew~R Barron.
\newblock Universal approximation bounds for superpositions of a sigmoidal
  function.
\newblock {\em IEEE Transactions on Information theory}, 39(3):930--945, 1993.

\bibitem[BFJ{\etalchar{+}}94]{blum94weakly}
Avrim Blum, Merrick~L. Furst, Jeffrey~C. Jackson, Michael~J. Kearns, Yishay
  Mansour, and Steven Rudich.
\newblock Weakly learning {DNF} and characterizing statistical query learning
  using fourier analysis.
\newblock In {\em Proceedings of the Twenty-Sixth Annual {ACM} Symposium on
  Theory of Computing, 23-25 May 1994, Montr{\'{e}}al, Qu{\'{e}}bec, Canada},
  pages 253--262, 1994.

\bibitem[BJKK19]{basri2019convergence}
Ronen Basri, David Jacobs, Yoni Kasten, and Shira Kritchman.
\newblock The convergence rate of neural networks for learned functions of
  different frequencies.
\newblock {\em arXiv preprint arXiv:1906.00425}, 2019.

\bibitem[BKW03]{parity_blum}
Avrim Blum, Adam Kalai, and Hal Wasserman.
\newblock Noise-tolerant learning, the parity problem, and the statistical
  query model.
\newblock {\em J. ACM}, 50(4):506--519, July 2003.

\bibitem[BM13]{bach2013non}
Francis Bach and Eric Moulines.
\newblock Non-strongly-convex smooth stochastic approximation with convergence
  rate o (1/n).
\newblock {\em Advances in neural information processing systems}, 26, 2013.

\bibitem[CB18]{chizat2018global}
L{\'e}na{\"i}c Chizat and Francis Bach.
\newblock On the global convergence of gradient descent for over-parameterized
  models using optimal transport.
\newblock {\em Advances in Neural Information Processing Systems},
  31:3036--3046, 2018.

\bibitem[CB20]{chizat2020implicit}
Lenaic Chizat and Francis Bach.
\newblock Implicit bias of gradient descent for wide two-layer neural networks
  trained with the logistic loss.
\newblock In {\em Conference on Learning Theory}, pages 1305--1338. PMLR, 2020.

\bibitem[CFW{\etalchar{+}}21]{ijcai2021-304}
Yuan Cao, Zhiying Fang, Yue Wu, Ding-Xuan Zhou, and Quanquan Gu.
\newblock Towards understanding the spectral bias of deep learning.
\newblock In {\em Proceedings of the Thirtieth International Joint Conference
  on Artificial Intelligence, {IJCAI-21}}, pages 2205--2211, 2021.

\bibitem[CMM21]{celentano2021minimum}
Michael Celentano, Theodor Misiakiewicz, and Andrea Montanari.
\newblock Minimum complexity interpolation in random features models.
\newblock {\em arXiv preprint arXiv:2103.15996}, 2021.

\bibitem[CW01]{carbery2001distributional}
Anthony Carbery and James Wright.
\newblock Distributional and lq norm inequalities for polynomials over convex
  bodies in rn.
\newblock {\em Mathematical research letters}, 8(3):233--248, 2001.

\bibitem[DLL{\etalchar{+}}19]{du2019gradient}
Simon Du, Jason Lee, Haochuan Li, Liwei Wang, and Xiyu Zhai.
\newblock Gradient descent finds global minima of deep neural networks.
\newblock In {\em International Conference on Machine Learning}, pages
  1675--1685. PMLR, 2019.

\bibitem[DM20]{daniely20parities}
Amit Daniely and Eran Malach.
\newblock Learning parities with neural networks.
\newblock In {\em Advances in Neural Information Processing Systems 33: Annual
  Conference on Neural Information Processing Systems 2020, NeurIPS 2020,
  December 6-12, 2020, virtual}, 2020.

\bibitem[DZPS18]{du2018gradient}
Simon~S Du, Xiyu Zhai, Barnabas Poczos, and Aarti Singh.
\newblock Gradient descent provably optimizes over-parameterized neural
  networks.
\newblock In {\em International Conference on Learning Representations}, 2018.

\bibitem[Fel16]{feldman_gen}
Vitaly Feldman.
\newblock A general characterization of the statistical query complexity.
\newblock {\em arXiv preprint arXiv:1608.02198}, 2016.

\bibitem[FGR{\etalchar{+}}17]{vempala2}
Vitaly Feldman, Elena Grigorescu, Lev Reyzin, Santosh~S. Vempala, and Ying
  Xiao.
\newblock Statistical algorithms and a lower bound for detecting planted
  cliques.
\newblock {\em J. ACM}, 64(2):8:1--8:37, April 2017.

\bibitem[FGV17]{query2}
Vitaly Feldman, Crist\'{o}bal Guzm\'{a}n, and Santosh Vempala.
\newblock Statistical query algorithms for mean vector estimation and
  stochastic convex optimization.
\newblock In {\em Proceedings of the Twenty-Eighth Annual ACM-SIAM Symposium on
  Discrete Algorithms}, SODA '17, pages 1265--1277, Philadelphia, PA, USA,
  2017. Society for Industrial and Applied Mathematics.

\bibitem[GGJ{\etalchar{+}}20]{goel}
Surbhi Goel, Aravind Gollakota, Zhihan Jin, Sushrut Karmalkar, and Adam
  Klivans.
\newblock Superpolynomial lower bounds for learning one-layer neural networks
  using gradient descent.
\newblock In Hal~Daumé III and Aarti Singh, editors, {\em Proceedings of the
  37th International Conference on Machine Learning}, volume 119 of {\em
  Proceedings of Machine Learning Research}, pages 3587--3596. PMLR, 13--18 Jul
  2020.

\bibitem[GMMM19]{ghorbani2019limitations}
Behrooz Ghorbani, Song Mei, Theodor Misiakiewicz, and Andrea Montanari.
\newblock Limitations of lazy training of two-layers neural networks.
\newblock In {\em Proceedings of the 33rd International Conference on Neural
  Information Processing Systems}, pages 9111--9121, 2019.

\bibitem[GMMM21a]{ghorbani2021linearized}
Behrooz Ghorbani, Song Mei, Theodor Misiakiewicz, and Andrea Montanari.
\newblock Linearized two-layers neural networks in high dimension.
\newblock {\em The Annals of Statistics}, 49(2):1029--1054, 2021.

\bibitem[GMMM21b]{ghorbani2021neural}
Behrooz Ghorbani, Song Mei, Theodor Misiakiewicz, and Andrea Montanari.
\newblock When do neural networks outperform kernel methods?
\newblock {\em Journal of Statistical Mechanics: Theory and Experiment},
  2021(12):124009, 2021.

\bibitem[GSJW20]{Geiger_2020}
Mario Geiger, Stefano Spigler, Arthur Jacot, and Matthieu Wyart.
\newblock Disentangling feature and lazy training in deep neural networks.
\newblock {\em Journal of Statistical Mechanics: Theory and Experiment},
  2020(11):113301, nov 2020.

\bibitem[HSSVG21]{hsu2021approximation}
Daniel Hsu, Clayton Sanford, Rocco~A Servedio, and Emmanouil-Vasileios
  Vlatakis-Gkaragkounis.
\newblock On the approximation power of two-layer networks of random relus.
\newblock {\em arXiv preprint arXiv:2102.02336}, 2021.

\bibitem[Hsu]{hsudimension}
Daniel Hsu.
\newblock Dimension lower bounds for linear approaches to function
  approximation.

\bibitem[JGH18]{jacot2018neural}
Arthur Jacot, Franck Gabriel, and Cl{\'e}ment Hongler.
\newblock Neural tangent kernel: Convergence and generalization in neural
  networks.
\newblock In {\em Advances in neural information processing systems}, pages
  8571--8580, 2018.

\bibitem[JKK{\etalchar{+}}17]{jain2017markov}
Prateek Jain, Sham~M Kakade, Rahul Kidambi, Praneeth Netrapalli,
  Venkata~Krishna Pillutla, and Aaron Sidford.
\newblock A markov chain theory approach to characterizing the minimax
  optimality of stochastic gradient descent (for least squares).
\newblock {\em arXiv preprint arXiv:1710.09430}, 2017.

\bibitem[JKK{\etalchar{+}}18]{jain2018parallelizing}
Prateek Jain, Sham Kakade, Rahul Kidambi, Praneeth Netrapalli, and Aaron
  Sidford.
\newblock Parallelizing stochastic gradient descent for least squares
  regression: mini-batching, averaging, and model misspecification.
\newblock {\em Journal of Machine Learning Research}, 18, 2018.

\bibitem[JNN19]{jain2019making}
Prateek Jain, Dheeraj Nagaraj, and Praneeth Netrapalli.
\newblock Making the last iterate of sgd information theoretically optimal.
\newblock In {\em Conference on Learning Theory}, pages 1752--1755. PMLR, 2019.

\bibitem[Kea98]{kearns1998efficient}
Michael Kearns.
\newblock Efficient noise-tolerant learning from statistical queries.
\newblock {\em Journal of the ACM (JACM)}, 45(6):983--1006, 1998.

\bibitem[KMS20]{kamath_dim}
Pritish Kamath, Omar Montasser, and Nathan Srebro.
\newblock {Approximate is Good Enough: Probabilistic Variants of Dimensional
  and Margin Complexity}.
\newblock In Jacob Abernethy and Shivani Agarwal, editors, {\em Proceedings of
  Thirty Third Conference on Learning Theory}, volume 125 of {\em Proceedings
  of Machine Learning Research}, pages 2236--2262. PMLR, 09--12 Jul 2020.

\bibitem[KWLS21]{karp2021local}
Stefani Karp, Ezra Winston, Yuanzhi Li, and Aarti Singh.
\newblock Local signal adaptivity: Provable feature learning in neural networks
  beyond kernels.
\newblock {\em Advances in Neural Information Processing Systems}, 34, 2021.

\bibitem[LL18]{li2018learning}
Yuanzhi Li and Yingyu Liang.
\newblock Learning overparameterized neural networks via stochastic gradient
  descent on structured data.
\newblock In {\em Proceedings of the 32nd International Conference on Neural
  Information Processing Systems}, pages 8168--8177, 2018.

\bibitem[LMZ20]{li20beyondntk}
Yuanzhi Li, Tengyu Ma, and Hongyang~R. Zhang.
\newblock Learning over-parametrized two-layer neural networks beyond {NTK}.
\newblock In {\em Conference on Learning Theory, {COLT} 2020, 9-12 July 2020,
  Virtual Event [Graz, Austria]}, volume 125 of {\em Proceedings of Machine
  Learning Research}, pages 2613--2682. {PMLR}, 2020.

\bibitem[MKAS21]{quantifying}
Eran Malach, Pritish Kamath, Emmanuel Abbe, and Nathan Srebro.
\newblock Quantifying the benefit of using differentiable learning over tangent
  kernels.
\newblock In Marina Meila and Tong Zhang, editors, {\em Proceedings of the 38th
  International Conference on Machine Learning, {ICML} 2021, 18-24 July 2021,
  Virtual Event}, volume 139 of {\em Proceedings of Machine Learning Research},
  pages 7379--7389. {PMLR}, 2021.

\bibitem[MMM19]{mei2019mean}
Song Mei, Theodor Misiakiewicz, and Andrea Montanari.
\newblock Mean-field theory of two-layers neural networks: dimension-free
  bounds and kernel limit.
\newblock In {\em Conference on Learning Theory}, pages 2388--2464. PMLR, 2019.

\bibitem[MMM21]{mei2021generalization}
Song Mei, Theodor Misiakiewicz, and Andrea Montanari.
\newblock Generalization error of random feature and kernel methods:
  Hypercontractivity and kernel matrix concentration.
\newblock {\em Applied and Computational Harmonic Analysis}, 2021.

\bibitem[MMN18]{mei2018mean}
Song Mei, Andrea Montanari, and Phan-Minh Nguyen.
\newblock A mean field view of the landscape of two-layer neural networks.
\newblock {\em Proceedings of the National Academy of Sciences},
  115(33):E7665--E7671, 2018.

\bibitem[MSS20]{malach2020implications}
Eran Malach and Shai Shalev-Shwartz.
\newblock The implications of local correlation on learning some deep
  functions.
\newblock {\em Advances in Neural Information Processing Systems}, 33, 2020.

\bibitem[NP20]{nguyen2020rigorous}
Phan-Minh Nguyen and Huy~Tuan Pham.
\newblock A rigorous framework for the mean field limit of multilayer neural
  networks.
\newblock {\em arXiv preprint arXiv:2001.11443}, 2020.

\bibitem[NS79]{newman1979waring}
DJ~Newman and Morton Slater.
\newblock Waring's problem for the ring of polynomials.
\newblock {\em Journal of Number Theory}, 11(4):477--487, 1979.

\bibitem[OS20]{oymak2020toward}
Samet Oymak and Mahdi Soltanolkotabi.
\newblock Toward moderate overparameterization: Global convergence guarantees
  for training shallow neural networks.
\newblock {\em IEEE Journal on Selected Areas in Information Theory},
  1(1):84--105, 2020.

\bibitem[RGKZ21]{refinetti2021classifying}
Maria Refinetti, Sebastian Goldt, Florent Krzakala, and Lenka Zdeborova.
\newblock Classifying high-dimensional gaussian mixtures: Where kernel methods
  fail and neural networks succeed.
\newblock In {\em Proceedings of the 38th International Conference on Machine
  Learning}, volume 139 of {\em Proceedings of Machine Learning Research},
  pages 8936--8947. PMLR, 18--24 Jul 2021.

\bibitem[RVE18]{rotskoff2018neural}
Grant~M Rotskoff and Eric Vanden-Eijnden.
\newblock Neural networks as interacting particle systems: Asymptotic convexity
  of the loss landscape and universal scaling of the approximation error.
\newblock {\em stat}, 1050:22, 2018.

\bibitem[SA20]{suzuki2020benefit}
Taiji Suzuki and Shunta Akiyama.
\newblock Benefit of deep learning with non-convex noisy gradient descent:
  Provable excess risk bound and superiority to kernel methods.
\newblock {\em arXiv preprint arXiv:2012.03224}, 2020.

\bibitem[SH20]{schmidt2020nonparametric}
Johannes Schmidt-Hieber.
\newblock Nonparametric regression using deep neural networks with relu
  activation function.
\newblock {\em The Annals of Statistics}, 48(4):1875--1897, 2020.

\bibitem[SS20]{sirignano2020mean}
Justin Sirignano and Konstantinos Spiliopoulos.
\newblock Mean field analysis of neural networks: A central limit theorem.
\newblock {\em Stochastic Processes and their Applications}, 130(3):1820--1852,
  2020.

\bibitem[SVW15]{parity_conj}
Jacob Steinhardt, Gregory Valiant, and Stefan Wager.
\newblock Memory, communication, and statistical queries.
\newblock In {\em Electronic Colloquium on Computational Complexity}, 2015.

\bibitem[SZ13]{shamir2013stochastic}
Ohad Shamir and Tong Zhang.
\newblock Stochastic gradient descent for non-smooth optimization: Convergence
  results and optimal averaging schemes.
\newblock In {\em International conference on machine learning}, pages 71--79.
  PMLR, 2013.

\bibitem[Ver10]{vershynin2010introduction}
Roman Vershynin.
\newblock Introduction to the non-asymptotic analysis of random matrices.
\newblock {\em arXiv preprint arXiv:1011.3027}, 2010.

\bibitem[Woj20]{wojtowytsch2020convergence}
Stephan Wojtowytsch.
\newblock On the convergence of gradient descent training for two-layer
  relu-networks in the mean field regime.
\newblock {\em arXiv preprint arXiv:2005.13530}, 2020.

\bibitem[Yan05]{yang}
Ke~Yang.
\newblock New lower bounds for statistical query learning.
\newblock {\em Journal of Computer and System Sciences}, 70(4):485--509, 2005.

\bibitem[YS19]{yehudai19power}
Gilad Yehudai and Ohad Shamir.
\newblock On the power and limitations of random features for understanding
  neural networks.
\newblock In {\em Advances in Neural Information Processing Systems 32: Annual
  Conference on Neural Information Processing Systems 2019, NeurIPS 2019, 8-14
  December 2019, Vancouver, BC, Canada}, pages 6594--6604, 2019.

\bibitem[ZCZG20]{zou2020gradient}
Difan Zou, Yuan Cao, Dongruo Zhou, and Quanquan Gu.
\newblock Gradient descent optimizes over-parameterized deep relu networks.
\newblock {\em Machine Learning}, 109(3):467--492, 2020.

\end{thebibliography}
\bibliographystyle{alpha}

\clearpage

\tableofcontents

\clearpage 

\appendix

\section{Additional numerical simulations}\label{app:numerics}

In this Appendix, we provide further background and numerical illustrations on the strong $O(d)$-SGD learning setting, merged-staircase functions and the dimension-free dynamics.

\paragraph*{Global convergence of the dimension-free dynamics.} As stated in Theorem \ref{thm:equivalence}, a function $h_* : \{ +1 , -1\}^P \to \R$ is strongly $O(d)$-SGD-learnable if and only if a Wasserstein gradient flow on $R(\orho) = \E_{\bz} [ (h_* (\bz) - \hf_{\NN} ( \bz ; \orho ) )^2 ]$ (the \eqref{eq:DF-PDE} dynamics) can converge to the global optimizer when initialized with $\oa^0 \sim \mu_a$, $\obu^0 = \bzero$ and $\os^0 = m_2^w$ (in this paragraph, we consider the continuous regime, as this is the regime that has attracted the most attention). Showing global convergence results for such dynamics is generically challenging: $R(\orho)$ presents many bad stationary points (e.g., measures $\orho$ that are distributed on an insufficient number of atoms to represent $h_*$). While some progress has been made to show such results in the context of mean-field neural networks \cite{chizat2018global,nguyen2020rigorous,wojtowytsch2020convergence}, existing global convergence results assume typically that (1) the dynamics converges to a limiting distribution as $t\to \infty$; and (2) we are given a good ``spread-out'' initialization. Condition (2) usually holds for initialization with bounded density on an open set around $0$. Condition (1) is more challenging and presents counter-examples that are difficult to rule out. In Section \ref{sec:sufficient}, we avoid these difficulties by considering layer-wise training: global convergence reduces to showing that the final linear-training phase converges to $0$-risk, which is implied by a certain kernel matrix being full rank.

While global convergence proofs are challenging, the \eqref{eq:DF-PDE} dynamics is a low dimensional problem and can be efficiently solved numerically. In the rest of this section, we provide a few numerical simulations to illustrate phenomena alluded to in the main text. We will fix the activation to be a shifted sigmoid $\sigma ( x) = (1 + e^{-x+1} )^{-1}$, and choose learning schedules $\xi^a(t) = \xi^w (t) = 1$, zero regularization parameters $\lambda^a = \lambda^w = 0$, and initialization $\mu_a = \Unif([+1,-1])$ and $\os^0 = 1$. 
In Figure \ref{fig:MSP_app}, we consider four MSP functions and plot the evolution of their Fourier coefficients during the \eqref{eq:DF-PDE} dynamics. In particular, the two top row examples, $h_1 (\bz) = z_1 + z_1 z_2 +z_3 +z_1z_2z_3z_4$ and $h_2 (\bz) = z_1 + z_1 z_2 +z_2z_3 +z_3z_4+ z_1z_2z_3z_4$, converge to the global minimum and are therefore strongly $O(d)$-SGD-learnable. The bottom row examples, $h_3 (\bz) = z_1 + z_1 z_2 +z_3 +z_3z_4$ and $h_4 (\bz) = z_1 + z_2 +z_3 + z_1z_2z_3$, do not converge and have risks bounded away from $0$. Functions $h_3$ and $h_4$ are two examples of $G$-invariant MSP functions.

\begin{figure}[t]
\begin{center}
    \includegraphics[width=14cm]{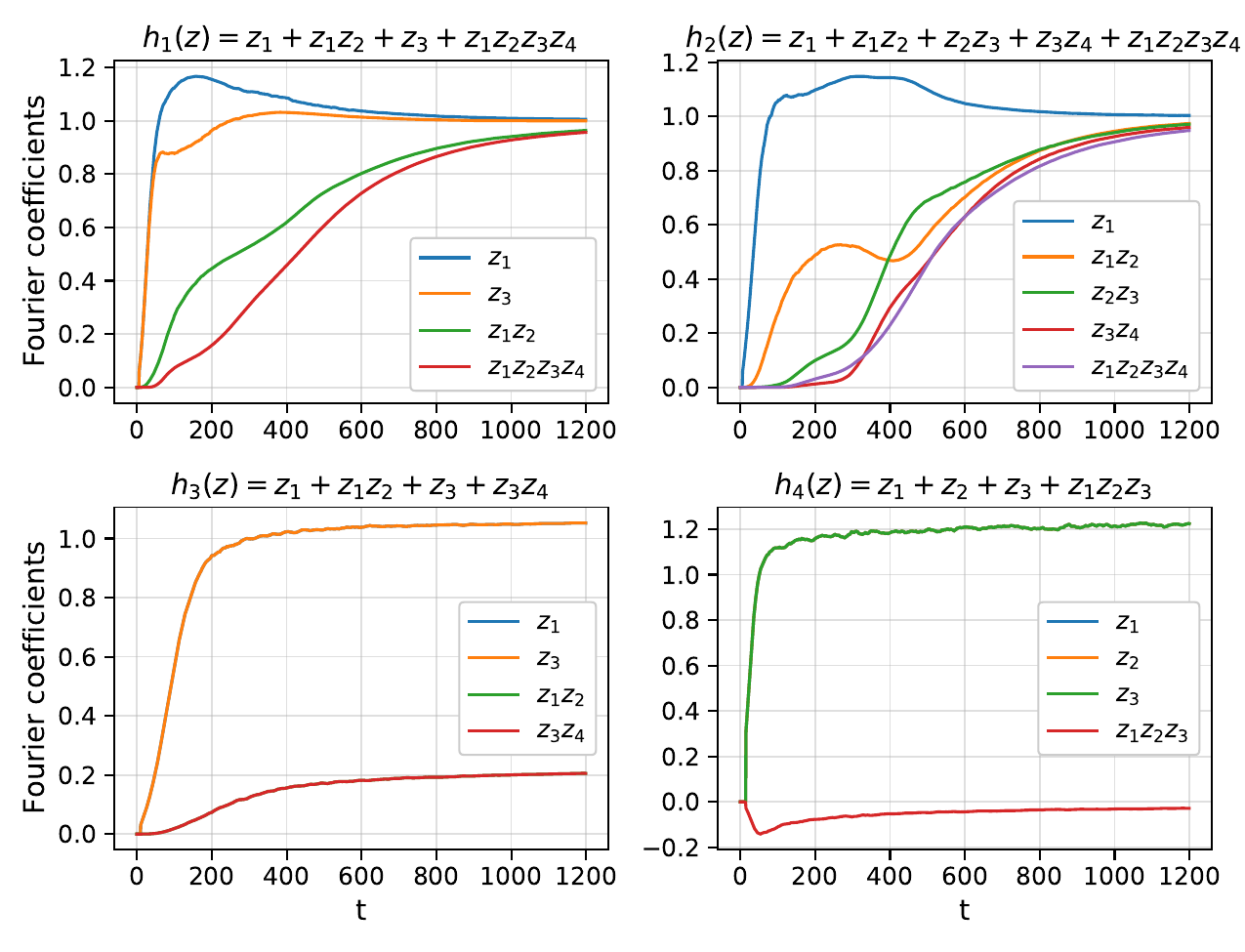}
\end{center}

\vspace{-25pt}
\caption{Evolution of the Fourier coefficients during the \eqref{eq:DF-PDE} dynamics for $4$ MSP functions. \label{fig:MSP_app}}
\end{figure}

\paragraph*{$G$-invariant MSP functions.} We call $h_*$ a $G$-invariant MSP function if $h_*$ is invariant under a group of transformations, i.e., there exists $\tau : \{+1,-1\}^P \to \{+1 , -1 \}^P$ (invertible and $\tau \neq \text{id}$) such that $h_* (\tau(\bz)) = h_* ( \bz)$. For these functions, the $\obu^t$ weight distribution remains invariant by this same group of transformations during the \eqref{eq:DF-PDE} dynamics, regardless of the choice of parameters. For example, $h_3$ is invariant by permutation $(1,2,3,4) \leftrightarrow (3,4,1,2)$ of its input, and $\ou_1^t  = \ou_3^t$, $\ou_2^t = \ou_4^t$ for all $t \geq 0$. $h_4$ is invariant by any permutation, and $\ou_1^t = \ou_2^t = \ou_3^t$ for all $t \geq 0$. For $G$-invariant MSP functions, the weights $\obu^t$ remain constrained in a linear subspace of dimension $<P$. We can then prove that a function is not strongly $O(d)$-SGD-learnable if no global minimizers lie on this subspace. For example, this is the case of $h_3$: as argued in Section \ref{sec:sufficient} of the main text, $\E_{\bz} [ \hf_{\NN} (\bz ; \orho_t) z_1 z_2] =  \E_{\bz} [ \hf_{\NN} (\bz ; \orho_t) z_1 z_3]$, and $\hf_{\NN} (\cdot ; \orho_t)$ can never coincide with $h_3$. For $h_4$, however, there exists $\orho$ with $\obu = u (1,1,1)$ that achieves $0$ risk (indeed, define $X = z_1 +z_2 +z_3$, then $h_* (X) = (X^2 -1)/2 + (-1)^{(X+3)/2}$ and can be fitted with a cubic polynomial). It is difficult to check whether for such functions, the \eqref{eq:DF-PDE} dynamics will converge to $0$-risk for some initialization and activation function. (Let us just mention that some $G$-invariant MSP functions are indeed strongly $O(d)$-SGD-learnable, such as $h_* (\bz) = z_1 + z_2$.)

Such $G$-invariant functions appear naturally in applications and we believe that understanding their dynamics is an important future direction. However, in this paper we consider instead to perturb the Fourier coefficients, which breaks the symmetries, and we show that any MSP function is strongly $O(d)$-SGD-learnable almost surely over this perturbation. In Figure \ref{fig:smoothMSP}, we plot the evolution of the Fourier coefficients of the original $h_4 (\bz) = z_1 + z_2 +z_3 + z_1z_2z_3$ and its perturbation $\Tilde{h}_4 (\bz) = z_1 + 0.99 z_2 + 1.01 z_3 + z_1z_2z_3$. We see that $\Tilde{h}_4$ is no longer $G$-invariant and $\Tilde{h}_4$ is strongly $O(d)$-SGD-learnable.

\begin{remark}
In this paper, we only prove that the set of MSP functions that are not strongly $O(d)$-SGD-learnable is of Lebesgue measure $0$. We do not characterize this set beyond this and do not prove that $G$-invariant MSP functions coincides with this set (in particular, we do not show that $G$-invariant MSP functions are the only functions that might not be strongly $O(d)$-SGD-learnable).
\end{remark}

\begin{figure}
\begin{center}
    \includegraphics[width=15cm]{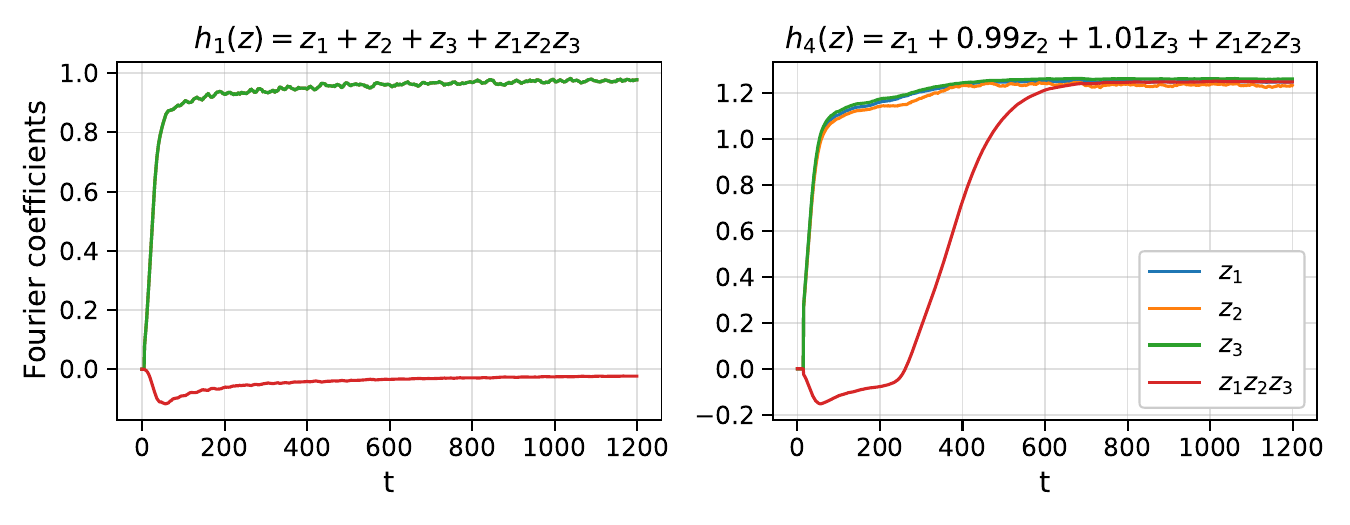}
\end{center}

\vspace{-20pt}
\caption{Evolution of Fourier coefficients during the \eqref{eq:DF-PDE} dynamics for the degenerate MSP $h_4 (\bz) = z_1 + z_2 + z_3 + z_1z_2z_3$ (left) and perturbed MSP function $\Tilde{h}_4 (\bz) = z_1 + 0.99 z_2 + 1.01 z_3 + z_1z_2z_3$ (right). \label{fig:smoothMSP}}
\end{figure}

We conclude this section with a final comment about the necessity condition of MSP, which holds only when considering \textit{arbitrarily large $d$}.

\paragraph*{Escaping the saddle-space.} The proof that non-MSP functions are not strongly $O(d)$-SGD-learnable relies on the fact that, when $d$ goes to infinity, the initialization $u_i^0 \to 0$ for $i \in [P]$. However, for $d$ fixed, $u_i^0 \approx 1/ \sqrt{d}$ and waiting sufficiently long, one-pass \eqref{eq:bSGD} escapes the neighborhood of the subspace $u_i^0 = 0$. In this case, the time to escape the subspace has to grow with $d$, and we are not in the $O(d)$-scaling anymore (indeed $n = Tb/\eta \approx Td$ for one pass \eqref{eq:bSGD}). In Figure \ref{fig:NonMSP_SGDvsDF}, we consider the same experimental setting as Figure \ref{fig:Staircase_SGDvsDF} but with $h_*$ missing one (left) or two (right) stairs. We see that \eqref{eq:DF-PDE} remains trapped in the saddle-space, while one-pass \eqref{eq:bSGD} escapes around $n \approx d^{2}$ and $n\approx d^{2.7}$ respectively. This agrees with the intuition that staircases with larger leaps are harder to learn with SGD.

\begin{figure}
\begin{center}
    \includegraphics[width=15cm]{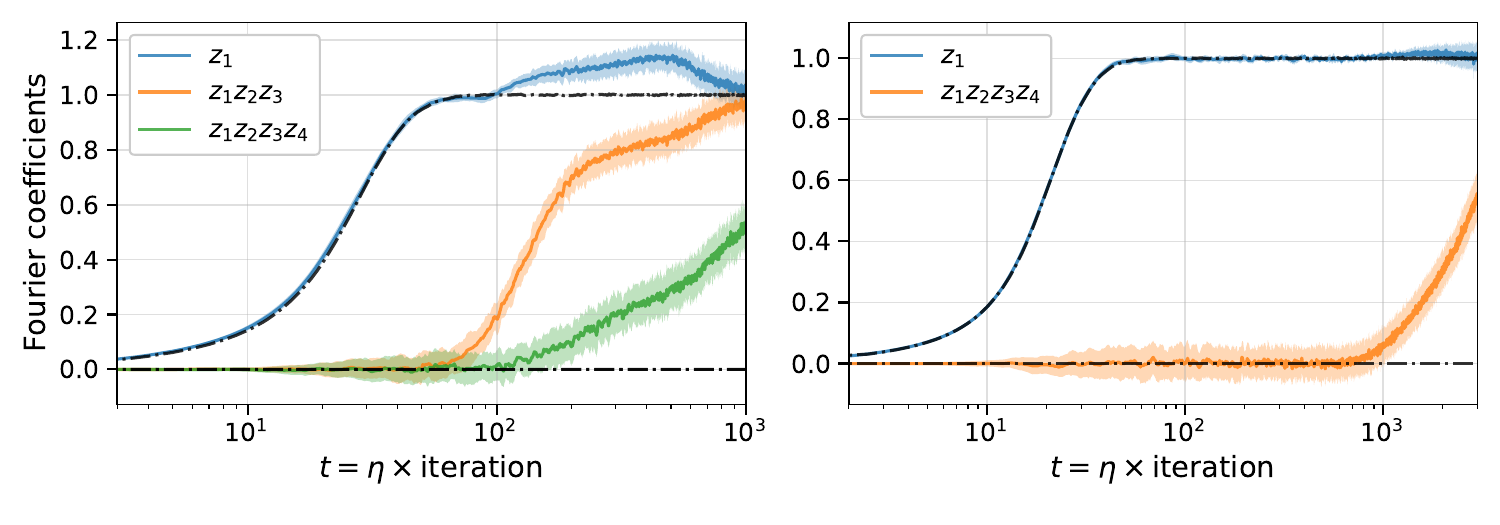}
\end{center}

\vspace{-20pt}
\caption{Fourier coefficients of one-pass \eqref{eq:bSGD} and \eqref{eq:DF-PDE} solutions throughout the dynamics for $h_* (\bz) = z_1 + z_1 z_2 z_3 + z_1 z_2 z_3 z_4$ (left) and $h_* (\bz) = z_1 + z_1 z_2 z_3 z_4$ (right). \label{fig:NonMSP_SGDvsDF}}
\end{figure}

\clearpage

\section{Proofs for continuous mean-field and dimension-free dynamics}
\label{app:MF-DF}

In this appendix, we provide proofs and discussions for the results presented in Section \ref{sec:DF-PDE-Necessary}, which corresponds to the `continuous-time regime' of strong $O(d)$-SGD-learnability. A discrete version of these results and proofs are presented in Appendix \ref{app:strong-discrete} and require little modifications.

Throughout this section, we will denote by $K$ a constant that depends only on the constants in Assumptions $\rA0$-$\rA2$,$\rA3'$ (in particular, $K$ is independent of $d,P,T$). The value of this constant is allowed to change from line to line.

\subsection{Justification for the dimension-free dynamics equations}
\label{app:intuition-DF-PDE}

Here, we provide more details and intuition on how to derive the equations of the dimension-free dynamics \eqref{eq:DF-PDE}. We report to Section \ref{app:proof-bSGD-to-Df} a rigorous proof of Theorem \ref{thm:bSGD-to-DF}, which shows a non-asymptotic bound between \eqref{eq:bSGD} and \eqref{eq:DF-PDE} dynamics.

First, by Assumption $\rA2$, the coordinates of $\bw^0$ are iid and symmetric and therefore 
\[
\begin{aligned}
\hf_{\NN} (\bx ; \rho_0) =&~ \int a^0 \sigma ( \< \bu^0 , \bz \> + \< \bv^0 , \br \>) \rho_0(\de \btheta^0) \\
=&~ \int a^0 \E_\br \big[ \sigma ( \< \bu^0 , \bz \> + \< \bv^0 ,  \br \>  )\big]  \rho_t (  \de \btheta^0) =: \hf_\NN (\bz ; \rho_0 )\, .
\end{aligned}
\]
By symmetry of \eqref{eq:MF-PDE}, the following lemma shows that the neural network stays independent of the uninformative part $\br$ of the input during the whole trajectory.

\begin{lemma}\label{lem:PDE_solution_symmetry}
The solution $(\rho_t)_{t \geq 0}$ of \eqref{eq:MF-PDE} with initialization $\rho_0$ satisfying $\rA 2$, obeys:
\begin{equation}\label{eq:NN_MF_sym_app}
\forall t \geq 0\, , \qquad \hf_\NN (\bx ; \rho_t ) = \int a^t \E_\br \big[ \sigma ( \< \bu^t , \bz \> + \< \bv^t ,  \br \>  )\big]  \rho_t (  \de \btheta^t) =: \hf_\NN (\bz ; \rho_t )\, .
\end{equation}
\end{lemma}

\begin{proof}[Proof of Lemma \ref{lem:PDE_solution_symmetry}] It is sufficient to show that for any $\br \in \{ -1 , +1 \}^{d-P}$, the weights $(a^t , \bu^t , \bv^t  \odot \br)$ have the same distribution as $(a^t,\bu^t , \bv^t ) \sim \rho_t$ where $ \bv^t \odot \br = (v_1^t r_1 , \ldots , v_{d-P}^t r_{d-P})$. Consider $\rho^{\# \br}_t = \rho_t \circ \varphi_{\br}$ where $\varphi_{\br} (\btheta) = (a, \bu , \bv \odot \br)$. First, notice that by assumption on $\rho_0$, we have $\rho^{\# \br}_0 = \rho_0$. Second, it is easy to check that for any bounded function $h : \R^{d+1} \to \R $, we have
\[
\begin{aligned}
\frac{\de }{\de t} \int h ( \btheta ) \rho_t^{\# \br} (\de \btheta ) =&~ \frac{\de }{\de t} \int h ( \varphi_{\br} (\btheta) ) \rho_t (\de \btheta ) \\
 = &~ - \int \< \nabla_\btheta h ( \varphi_{\br} (\btheta) ) , \bH(t) \nabla_{\btheta} \psi ( \btheta ; \rho_t ) \> \rho_t (\de \btheta) \\
 = &~ - \int \< (\ones , \br )\odot \nabla_\btheta h ( \btheta ) , \bH(t)  (\ones , \br )\odot \nabla_{\btheta} \psi ( \btheta ; \rho_t^{\# \br} ) \> \rho_t^{\# \br} (\de \btheta) \\
 = &~ - \int \<  \nabla_\btheta h ( \btheta ) , \bH(t)   \nabla_{\btheta} \psi ( \btheta ; \rho_t^{\# \br} ) \> \rho_t^{\# \br} (\de \btheta) \, ,
\end{aligned}
\]
where we used in the third line that $\psi ( \varphi_{\br}(\btheta) ; \rho_t ) =  \psi ( \btheta ; \rho_t^{\# \br} ) $. Hence $\rho_t^{\# \br}$ is the solution of the \eqref{eq:MF-PDE} dynamics with initialization $\rho^{\# \br}_0 = \rho_0$. Hence by uniqueness of the solution, we deduce that $\rho^{\# \br}_t = \rho_t$ for any $t\geq 0$.
\end{proof}

As mentioned in the main text, one can show that conditional on $\bv^t$, the noise part of the signal $\< \bv^t , \br \>$ for $\br \sim \Unif ( \{ -1 , +1 \}^{d-P})$ is well approximated by $\| \bv^t \|_2 G$ with $G \sim \normal (0,1)$, as long as $\max_i | v_i^t|/ \| \bv^t \|_2$ remains small. This is the case at $t = 0$ by Assumption $\rA2$ that the $v_i^0$ are iid and sub-Gaussian, and we show in Section \ref{app:proof-MF-DF-comp} that it remains true up to times $T = O_d(1)$. 
This motivates the introduction of effective parameters $\obtheta^t = (\oa^t, \obu^t, \os^t)$ with effective distribution $\orho_t \in \cP ( \R^{P+2})$. The new parameter $\os^t$ plays the role of $\| \bv^t \|_2 $ and we replace the neural network \eqref{eq:NN_MF_sym_app} by an effective neural network 
\begin{equation}\label{eq:effNN_app}
\hf_\NN (\bz ; \orho_t ) = \int \oa^t \E_G \big[ \sigma ( \< \obu^t , \bz \> + \os^t G  )\big]  \orho_t (  \de \obtheta^t)\, .
\end{equation}

The evolution equations of $(a^t, \bu^t , \| \bv^t \|_2)$ associated to the \eqref{eq:MF-PDE} dynamics are given by 
\begin{equation}\label{eq:evolution_MF}
\begin{aligned}
\frac{\de}{\de t} a^t =&~ \xi^a (t) \E_{\bz , \br } \Big[ \big\{ f_* (\bz) - \hf_{\NN} (\bz ; \rho_t) \big\} \sigma ( \< \bu^t , \bz \> + \< \bv^t , \br \> ) \Big] - \xi^a (t)   \lambda^a a^t  \, , \\
\frac{\de}{\de t } \bu^t =&~ \xi^w(t) a^t \E_{\bz , \br} \Big[ \big\{ f_* (\bz) - \hf_{\NN} (\bz ; \rho_t) \big\} \sigma' ( \< \bu^t , \bz \> + \< \bv^t , \br \> ) \bz \Big] - \xi^w (t) \lambda^w  \bu^t \, , \\
\frac{\de}{\de t } \|\bv^t \|_2 =&~ \xi^w (t)  a^t \E_{\bz , \br} \Big[ \big\{ f_* (\bz) - \hf_{\NN} (\bz ; \rho_t) \big\} \sigma' ( \< \bu^t , \bz \> + \< \bv^t , \br \> ) \< \bv^t/ \| \bv^t \|_2 , \br \> \Big]  \\
&~ -\xi^w (t)  \lambda^w \| \bv^t \|_2  \, ,
\end{aligned}
\end{equation}
where we used that $\frac{\de}{\de t } \|\bv^t \|_2  = \frac{1}{\| \bv ^t \|_2} \< \bv^t , \frac{\de}{\de t } \bv^t \> $ to write the last equation. 

For $P$ fixed and $d\to \infty$, we see that the distribution of $( a^0 , \bu^0 , \| \bv^0 \|_2 )$ converges in distribution to $(a^0 , \bzero , m_2^w ) $ which we denote $\orho_0$ and we recall that $m_2^w = \E_{\mu_w} [ W^2]^{1/2}$. As argued above, the mean-field neural network $\hf_{\NN} ( \bz; \rho_t)$ converges to $\hf_{\NN} (\bz ; \orho_t)$ for any $t \geq 0$. The evolution equations of $\orho_t$ can be obtained by taking $d \to \infty$ in Eq.~\eqref{eq:evolution_MF}, and replacing $\< \bv^t , \br \>$ by $\os^t G$:
\begin{equation}\label{eq:evolution_effective_MF}
\begin{aligned}
\frac{\de}{\de t} \oa^t =&~ \xi^a (t) \E_{\bz , G } \Big[ \big\{ f_* (\bz) - \hf_{\NN} (\bz ; \orho_t) \big\} \sigma ( \< \obu^t , \bz \> + \os^t G ) \Big] - \xi^a (t) \lambda^a \oa^t \, , \\
\frac{\de}{\de t } \obu^t =&~ \xi^w(t) \oa^t\E_{\bz , G} \Big[ \big\{ f_* (\bz) - \hf_{\NN} (\bz ; \orho_t) \big\} \sigma' ( \< \obu^t , \bz \> + \os^t G ) \bz \Big] - \xi^w(t) \lambda^w \obu^t \, , \\
\frac{\de}{\de t } \os^t =&~ \xi^w(t) \oa^t\E_{\bz , \br} \Big[ \big\{ f_* (\bz) - \hf_{\NN} (\bz ; \orho_t) \big\} \sigma' ( \< \obu^t , \bz \> + \os^t G ) G \Big] - \xi^w(t) \lambda^w \os^t \, .
\end{aligned}
\end{equation}

Denoting $\obH (t ) = \diag (\xi^a (t) , \xi^w (t) \id_{P+1})$ and regularization parameters $\obLambda = \diag (\lambda^a , \lambda^w \id_{P+1} ) $, the equations \eqref{eq:evolution_effective_MF} are the parameter evolution equations associated to the following PDE in the space of probability distributions on $\R^{P+2}$:
\begin{equation}\label{eq:DF-PDE_app}
\begin{aligned}
        \partial_t \orho_t = &~ \nabla_{\obtheta} \cdot \big(\orho_t  \obH (t) \nabla_{\obtheta} \psi ( \obtheta ; \orho_t ) \big) \, ,\\
    \psi ( \obtheta ; \orho_t ) = &~  \frac{1}{2}\E_{\bz,G} \Big[ \big\{\hat f_{\NN} ( \bz ; \orho_t) -  f_* (\bz) \big\} \oa \sigma ( \< \obu , \bz \> +\os G) \Big] + \frac{1}{2} \obtheta^\sT \obLambda \obtheta \, ,
    \end{aligned}
\end{equation}

Equivalently, this PDE corresponds to the gradient flow in the Wasserstein space (with $\obH(t)$ rescaling) over the regularized risk functional:
\[
E (\orho) = \frac{1}{2}\E_{\bz} \big[ \big\{ h_* (\bz)  - \hf_{\NN }  ( \bz ; \orho ) \big\}^2 \big] + \frac{1}{2}\int \obtheta^\sT \obLambda \obtheta \orho (\de \obtheta) \, .
\]

\subsection{Proof of the results in Section \ref{sec:DF-PDE-Necessary}} 
\label{app:proof-results-DF-MSP}

In this section, we gather the proofs for the results on the dimension free dynamics and the necessity condition. The longer and more technical arguments are deferred to Sections \ref{app:proof-MF-approx} and \ref{app:proof-MF-DF-comp}.

\subsubsection{Proof of Theorem \ref{thm:bSGD-to-DF}}
\label{app:proof-bSGD-to-Df}

We use the mean-field dynamics \eqref{eq:MF-PDE} as an intermediary dynamics for the bound. Theorem \ref{thm:bSGD-to-DF} is a direct consequence of the following two bounds:

\begin{proposition}\label{prop:general-PDE-SGD-bound}
Assume conditions $\rA0$-$\rA2$,$\rA3'$, and let $T \geq 1$. There exists constants $K_0$ and $K_1$ depending only on the constants in $\rA 0$-$\rA 2$,$\rA3'$ (in particular, independent of $d,P,T$), such that for any $\eta \leq e^{-K_0 T^3} \big[ \{b/(d+\log(N))\} \wedge 1 \big]$, we have 
\[
\sup_{k \in [T/\eta]\cap \naturals } \big\Vert \hat f_{\NN} (\cdot; \bTheta^k ) - \hat f_{\NN} (\cdot; \rho_{k\eta} ) \big\Vert_{L^2} \leq K_1 e^{K_1T^3} \left\{ \sqrt{\frac{\log N}{N}} + \left[ \sqrt{\frac{d+\log N}{b}} \vee 1 \right] \sqrt{ \eta} \right\} \, ,
\]
with probability at least $1 - 1/N$.
\end{proposition}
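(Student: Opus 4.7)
My plan is to adapt the non-asymptotic propagation-of-chaos framework of \cite{mei2018mean,mei2019mean} by introducing an intermediate continuous-time, nonlinearly-coupled particle system and then bounding both (a) the distance from the bSGD iterates to this particle system, and (b) the distance from this system to the PDE solution \eqref{eq:MF-PDE}.

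More precisely, I would define coupling particles $\tilde\btheta_j^t \in \R^{d+1}$ solving the nonlinear ODE $\dot{\tilde\btheta}_j^t = -\bH(t)\nabla_\btheta\psi(\tilde\btheta_j^t;\rho_t)$, with the exact same initialization $\tilde\btheta_j^0 = \btheta_j^0\sim_{\text{iid}}\rho_0$ used by \eqref{eq:bSGD}. Note that the self-interaction term uses $\rho_t$ (not the empirical measure), so the $\tilde\btheta_j^t$ are iid. Under $\rA0$-$\rA2$,$\rA3'$, $\nabla_\btheta\psi(\btheta;\rho)$ is bounded and Lipschitz in $\btheta$ (with constant depending on the parameter norms) and Lipschitz in $\rho$ for a suitable Wasserstein or bounded-Lipschitz metric on the $L^2(\bx)$ action. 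A first a priori estimate, obtained by a Grönwall argument applied directly to the ODE, controls $\sup_{t\le T}\max_j(|\tilde a_j^t|+\|\tilde\bw_j^t\|_2)$ by a polynomial in $T$; an analogous high-probability bound propagates to the SGD iterates $\btheta_j^k$ once the noise terms below are controlled.

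The bSGD-to-intermediate bound proceeds as follows. Write $\btheta_j^{k+1} = \btheta_j^k - \eta \bH_k\nabla_\btheta\psi(\btheta_j^k;\hat\rho_k^{(N)}) + \text{(noise)}$, where the noise decomposes into (i) a mean-zero martingale increment coming from the finite batch $b$ (replacing the population expectation over $(\bx,y)$ by an empirical average), with conditional variance $O(1/b)$ per coordinate, hence scaling as $\sqrt{d/b}$ after summing over coordinates; and (ii) a bias $\nabla_\btheta\psi(\btheta_j^k;\hat\rho_k^{(N)}) - \nabla_\btheta\psi(\btheta_j^k;\rho_{k\eta})$. For (i) I would apply Azuma--Hoeffding/Bernstein martingale concentration and union-bound over $k\le T/\eta$ and $j\le N$, producing the $\sqrt{(d+\log N)/b}\,\sqrt{\eta}$ contribution (the extra $\sqrt{\eta}$ comes from the $\eta$-discretization of the gradient step accumulating $O(1/\eta)$ increments of variance $\eta^2/b$). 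The discretization error between $\btheta_j^k$ and $\tilde\btheta_j^{k\eta}$ is $O(\eta)$ per step; combined with the Lipschitz constant, Grönwall gives a multiplicative $e^{C(T)T}$ factor. Term (ii) requires the propagation-of-chaos bound developed next, closing a Grönwall-type loop.

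For the intermediate-to-PDE piece, since the $\tilde\btheta_j^t$ are iid with law $\rho_t$, one can bound $\|\hat f_{\NN}(\cdot;\hat{\tilde\rho}_t^{(N)}) - \hat f_{\NN}(\cdot;\rho_t)\|_{L^2}$ directly by a Monte Carlo/Hoeffding argument over $\bx\in\{\pm1\}^d$, using boundedness of $a\sigma(\<\bw,\bx\>)$ on the high-probability set where the parameter norms are controlled; this yields the $\sqrt{\log N/N}$ rate after a union bound. The Lipschitz-in-$\rho$ estimate translates the same bound into control on $\|\nabla_\btheta\psi(\cdot;\hat\rho_k^{(N)}) - \nabla_\btheta\psi(\cdot;\rho_{k\eta})\|$, closing the circular dependence in (ii). Finally, combining the two bounds by Grönwall over $k\in[0,T/\eta]$ yields the stated estimate.

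The main obstacle is maintaining quantitative a priori bounds on the parameter norms over the full interval $[0,T]$ when $\lambda^w,\lambda^a$ may vanish: the Lipschitz constant of $\nabla_\btheta\psi$ depends on $\max_j\|\btheta_j\|$, which in turn grows (polynomially in $T$) because of the $a$-coupling in the $\bw$-gradient. This is what generates the $e^{K_1T^3}$ prefactor: a polynomial-in-$T$ Lipschitz constant appears inside a Grönwall exponential over a horizon of length $T$, and one further power of $T$ appears when controlling the propagation-of-chaos term via a second Grönwall over the circular dependence (ii). The smallness requirement $\eta\le e^{-K_0 T^3}b/(d+\log N)$ is precisely what is needed so that the per-step discretization and noise increments are dominated by the linearized ODE dynamics, making Grönwall applicable on the full horizon.
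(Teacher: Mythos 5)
Your sketch is a correct re-derivation of the propagation-of-chaos machinery behind Theorem~1.(B) of \cite{mei2019mean}: the intermediate iid nonlinear particle system, martingale concentration for the finite-batch noise, Monte-Carlo/Hoeffding for the mean-field approximation, and a priori bounds propagated by Gr\"onwall — this is exactly the argument the paper's proof invokes. The paper takes the short route: it records the two modifications to \cite{mei2019mean} that matter here (monotonicity of the regularized risk $E(\rho_t)$ under anisotropic learning rates $\bH(t)$, replacing the a priori norm control that an isotropic gradient flow would give for free, and the extra $1/b$ factor in the sub-Gaussian constant for batch averaging), then simply verifies the boundedness/Lipschitz/sub-Gaussian hypotheses on $\nabla v$, $\nabla u$ required by that theorem; you have instead re-run the interior of the cited argument, which is the same approach at a finer level of detail.
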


This proposition follows from a straightforward extension of \cite{mei2019mean} to batch-SGD and anisotropic step sizes, and can be found in Section \ref{app:proof-MF-approx}. In particular, Proposition \ref{prop:general-PDE-SGD-bound} implies that, if we consider $T,K = O_d(1)$, then $N = \Omega_{d} (1) $ and $1 / \eta = \Omega_{d} ( d/b)$ are sufficient for the mean-field PDE to be an accurate approximation of batch-SGD up to time $T$ (recall that $T = \eta n/b$ by one-pass assumption and therefore $n = O_d (d)$).

\begin{theorem}\label{thm:MF-dimension-free}
Assume conditions $\rA0$-$\rA2$,$\rA3'$, and let $T \geq 1$. There exists a constant $K_1$ depending only on the constants in $\rA0$-$\rA2$,$\rA3'$ (in particular, independent of $d,P,T$), such that 
\[
\sup_{t \in [0,T]} \big\| \hf_{\NN} ( \cdot ; \rho_t ) - \hf_{\NN} ( \cdot ; \orho_t ) \big\|_{L^2} \leq K_1e^{K_1 T^7}  \sqrt{\frac{P + \log (d)}{d}}\, .
\]
\end{theorem}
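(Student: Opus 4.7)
The plan is a coupling/propagation-of-chaos argument: I will jointly realize both the mean-field characteristic ODEs (driving \eqref{eq:MF-PDE}) and the dimension-free characteristic ODEs (driving \eqref{eq:DF-PDE}) from a common initial particle $(a^0, \tilde\bw^0)\sim \mu_a\otimes \mu_w^{\otimes d}$, setting $\bw^0=\tilde\bw^0/\sqrt d=(\bu^0,\bv^0)$ on the mean-field side and $(\bar a^0,\bar\bu^0,\bar s^0)=(a^0,\bzero,m_2^w)$ on the dimension-free side. The difference $\hat f_{\NN}(\bz;\rho_t)-\hat f_{\NN}(\bz;\bar\rho_t)$ is then decomposed as $\mathrm{I}(\bz,t)+\mathrm{II}(\bz,t)$ with
\begin{align*}
\mathrm{I}(\bz,t) &= \E\!\left[a^t\sigma(\langle\bu^t,\bz\rangle+\langle\bv^t,\br\rangle)-a^t\,\E_G\sigma(\langle\bu^t,\bz\rangle+\|\bv^t\|_2 G)\right],\\
\mathrm{II}(\bz,t) &= \E\!\left[a^t\,\E_G\sigma(\langle\bu^t,\bz\rangle+\|\bv^t\|_2 G)-\bar a^t\,\E_G\sigma(\langle\bar\bu^t,\bz\rangle+\bar s^t G)\right].
\end{align*}
Term $\mathrm{I}$ is a Gaussian-approximation error (for a fixed realization of $\bv^t$) and term $\mathrm{II}$ is a coupling error between effective parameters, controlled by Gronwall.

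\textbf{Bounding $\mathrm{I}$.} By assumption $\rA0$, $\sigma$ has bounded third derivative, so a smooth Lindeberg/Stein exchange argument gives
$$
\left|\E_\br[\sigma(c+\langle\bv^t,\br\rangle)]-\E_G[\sigma(c+\|\bv^t\|_2 G)]\right|\;\lesssim\;\sum_{i}|v_i^t|^3\;\lesssim\;\|\bv^t\|_\infty\cdot\|\bv^t\|_2^2,
$$
uniformly in $c$. At initialization, sub-Gaussianity of $\mu_w$ (assumption $\rA2$) yields $\|\bv^0\|_\infty\lesssim\sqrt{\log(d)/d}$ and $\|\bv^0\|_2=O(1)$ with high probability, so $|\mathrm{I}(\bz,0)|\lesssim\sqrt{\log(d)/d}$. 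For $t>0$, I track an auxiliary quantity $M_t:=\|\bv^t\|_\infty$ using the characteristic ODE $\dot v_i^t=\xi^w(t)a^t\,\E_{\bz,\br}[\{f_*-\hat f_{\NN}\}\sigma'(\cdots)r_i]-\xi^w(t)\lambda^w v_i^t$; expanding $\sigma'$ around $\<\bv^t,\br\>_{-i}$ shows that $\dot v_i^t$ is itself of order $|v_i^t|$ (the term independent of $r_i$ integrates out), so each $v_i^t$ remains of the same order as $v_i^0$. Combined with the exchangeability of the noise coordinates (preserved by the symmetry of the dynamics, analogous to Lemma \ref{lem:PDE_solution_symmetry}), this controls $M_t$ and gives $\sup_{t\le T}|\mathrm{I}(\bz,t)|\lesssim e^{K T^{?}}\sqrt{(P+\log d)/d}$ after absorbing the cost through $[0,T]$.

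\textbf{Bounding $\mathrm{II}$ via Gronwall.} Introduce $\Delta_t:=\E\bigl[(a^t-\bar a^t)^2+\|\bu^t-\bar\bu^t\|_2^2+(\|\bv^t\|_2-\bar s^t)^2\bigr]$ and compute $\partial_t\Delta_t$ by differencing the characteristic ODEs \eqref{eq:evolution_MF} and \eqref{eq:evolution_effective_MF}. Using that $\sigma,\sigma'$ and their derivatives are bounded Lipschitz (from $\rA0$) and that $\xi^a,\xi^w$ are bounded Lipschitz (from $\rA3'$), each drift-difference splits into (a) a Lipschitz contribution in $(a^t-\bar a^t,\bu^t-\bar\bu^t,\|\bv^t\|_2-\bar s^t)$, (b) a contribution $\|\hat f_{\NN}(\cdot;\rho_t)-\hat f_{\NN}(\cdot;\bar\rho_t)\|_{L^2}^2$, and (c) a Gaussian-approximation remainder of the same flavor as $\mathrm{I}$ (arising from replacing $\langle\bv^t,\br\rangle$ by $\|\bv^t\|_2 G$ inside the $\bv$-drift). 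Since $|\hat f_{\NN}(\bz;\rho_t)-\hat f_{\NN}(\bz;\bar\rho_t)|^2\lesssim \Delta_t+\sup_\bz|\mathrm{I}(\bz,t)|^2$, this yields a scalar Gronwall inequality
$$
\partial_t\Delta_t\;\le\;C(1+\xi^a+\xi^w)\,\Delta_t\;+\;C\cdot(P+\log d)/d,
$$
which integrates to $\sup_{t\le T}\Delta_t\lesssim e^{CT}\cdot(P+\log d)/d$, and closing back through $\hat f_{\NN}$ gives the stated $L^2$ bound.

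\textbf{Main obstacle.} The delicate part is step $\mathrm{I}$: controlling $\|\bv^t\|_\infty/\|\bv^t\|_2$ uniformly on $[0,T]$ and ensuring that $\|\bv^t\|_2$ does not collapse. This requires tracking exchangeability of the noise coordinates throughout training (so that no $v_i^t$ becomes atypically large), together with a lower bound on $\|\bv^t\|_2$ obtained from the fact that the multiplicative drift $\sigma''$-term in $\dot v_i^t$ is bounded. Propagating these estimates through the Gronwall loop, together with re-use of the same estimate to bound the right-hand side of the Gronwall inequality, is the source of the strong $e^{K_1 T^7}$ dependence in the final bound.
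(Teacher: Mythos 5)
Your overall approach matches the paper's: coupling the two characteristic ODEs through a common initialization $a^0=\bar a^0$, using a Stein/Berry--Esseen bound to control the Gaussian approximation $\E_\br[\sigma(\cdot+\langle\bv^t,\br\rangle)]\approx\E_G[\sigma(\cdot+\|\bv^t\|_2 G)]$, tracking $\max_i|v_i^t|$ through the characteristic dynamics (this is exactly the paper's Lemma \ref{lem:bound_thirdMoment_evolution}), and closing with a Gr\"onwall estimate on the coupled parameter distance $\Delta_t$ (the paper's Lemma \ref{lem:prop_chaos_bounds}). Your $\mathrm{I}+\mathrm{II}$ decomposition at the NN-output level is, up to reordering the interpolation, the same telescoping as the paper's Eq.~\eqref{eq:decompo_interpolation}.

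Two points where your sketch does not actually deliver the stated bound. First, the Gr\"onwall inequality you write, $\partial_t\Delta_t\le C(1+\xi^a+\xi^w)\Delta_t+C(P+\log d)/d$, would integrate to $e^{CT}$ and cannot yield $e^{K_1 T^7}$; the large power of $T$ comes from the Gr\"onwall \emph{coefficient itself} being polynomial in $T$, namely $K(1+T)^6$, because the drift-difference bounds contain products of factors ($\|a^t\|_\infty$, $\os^t$, $\|\hg(\cdot;\rho_t)\|_\infty$) each growing like $(1+T)$ --- see the paper's Steps 2--4 and the resulting Eq.~\eqref{eq:bound_on_Delta}, where the $\os^t$-evolution is the bottleneck. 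Attributing it to ``re-use of the same estimate'' is not quite the mechanism. Second, the worry about $\|\bv^t\|_2$ ``collapsing'' is spurious: the Berry--Esseen $W_1$ bound is $3\|\bv^t\|_3^3/\|\bv^t\|_2^2\le 3\max_i|v_i^t|$, which degrades gracefully as $\|\bv^t\|_2\to 0$, so no lower bound on $\|\bv^t\|_2$ is needed (and the paper proves none). These are fixable bookkeeping issues rather than wrong ideas, but as written the sketch does not produce the $T^7$ exponent.
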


The proof of Theorem \ref{thm:MF-dimension-free} can be found in Section \ref{app:proof-MF-DF-comp}.

\subsubsection{Proof of Theorem \ref{thm:equivalence}}
\label{app:proof-equivalence}

Fix $\eps >0$. Consider Lipschitz $\xi^a, \xi^w : \R_{>0} \to \R_{>0}$ and $\lambda^a , \lambda^w \geq 0$ such that $\lim_{t \to \infty} R (\orho_t )< \eps/6$. Take $T$ such that $R( \orho_T ) = \eps/6$. Let $K$ be sufficiently large such that $\rA_0-\rA_2$ are satisfied, and $\rA_3'$ is satisfied on $[0,T]$. By Theorem \ref{thm:bSGD-to-DF}, there exists constants $K_0,K_1$ that only depend on $K$ such that the bound holds with probability at least $1 - 1/N$ for $\eta \leq e^{-K_0T^3} [\{ b / (d + \log(N)) \} \wedge 1 ]$. Consider $c_1 $ such that $c_1 /\log(c_1) = 81 K_1^2 e^{2K_1T^3} / \eps$ and take $C ( h_* , \eps ) = 2(T\vee 1) ( c_1 \vee e^{K_0 T^3})$ and $T (\eps, h_*) = T $. Then for any $d\geq C(\eps,h_*)$, $n \geq C(\eps , h_*)d$ and $e^d \geq N \geq C(\eps, h_*)$, taking $b = d + \log (N)$, we have $\sqrt{\eta} \leq \eps e^{-K_1 T^6}/(9K_1)$, and with probability $1 - 1/N$, taking $k_0 = \lfloor T/\eta \rfloor$,
\[
\| \hf_{\NN} (\cdot ; \bTheta^{k_0} ) - \hf_{\NN} (\cdot ; \orho_{\eta k_0} ) \|_{L^2} \leq  \frac{\sqrt{\eps}}{3}\, .
\]
Hence
\[
\begin{aligned}
R ( h_* ,  \hf_{\NN} (\cdot ; \bTheta^{k_0} )) \leq&~ 3R ( h_* ,  \hf_{\NN} (\cdot ; \orho_{T} )) + 3\| \hf_{\NN} (\cdot ; \bTheta^{k_0} ) - \hf_{\NN} (\cdot ; \orho_{\eta k_0} ) \|_{L^2}^2 \\
&~ +3\| \hf_{\NN} (\cdot ; \orho_T ) - \hf_{\NN} (\cdot ; \orho_{\eta k_0} ) \|_{L^2}^2\\
\leq&~ \frac{\eps}{2} + \frac{\eps}{3} + \frac{\eps}{9}\leq \eps \, .
\end{aligned}
\]

Conversely, assume that $h_*$ is strongly SGD-learnable in $O(d)$-scaling. Let $(b, \lambda^a, \lambda^w, \{ \eta_k^a , \eta_k^w   \}_{k \in [0,k_0]})$ be the hyperparameters that satisfy strong learnability for $\eps/4$: in particular, $k_0 = n/b$ and $R( \bTheta^{k_0} ) \leq \eps/4$ with probability at least $9/10$. Take $T= n\eta / b \leq T (h_* , \eps/ 4)$ and let $\eta^a,\eta^w$ be piecewise linear functions such that $\eta^a (\eta k) = \eta_k^a /\eta $ and $\eta^w (\eta k) = \eta_k^w /\eta $. Consider $(\orho_t)_{t \geq 0}$ the solution of \eqref{eq:DF-PDE} with $\eta^a,\eta^w,\lambda^a,\lambda^w$. From Theorem \ref{thm:bSGD-to-DF}, there exists constants $K_0$ and $K_1$ that only depend on $\eta^a,\eta^w,\lambda^a,\lambda^w$ through the constants in assumption $\rA3$, such that 
\[
\begin{aligned}
&~\| \hf_{\NN} (\cdot; \bTheta^{k_0} ) - \hf_{\NN} (\cdot ; \orho_{\eta k_0} ) \|_{L^2} \\
\leq&~ K_1 e^{K_1 c(h_*, \eps/4)^6}
\left\{ \sqrt{\frac{P + \log(d)}{d}} + \sqrt{\frac{\log N}{N}} + \left[ \sqrt{\frac{d+\log N}{b}} \vee 1 \right] \sqrt{ \eta} \right\} \, ,
\end{aligned}
\]
with probability at least $1 - 1/N$. We can therefore take $d,N,n$ sufficiently big such that the right-hand side is less than $\sqrt{\eps}/2$. On the intersection of this event and the event $R( \bTheta^{k_0} ) \leq \eps/4$ (which happens with positive probability), we have
\[
\begin{aligned}
R ( \orho_{\eta k_0}) \leq&~ 2R (\bTheta^{k_0}) + 2\| \hf_{\NN} (\cdot ; \bTheta^{k_0} ) - \hf_{\NN} (\cdot ; \orho_{\eta k_0} ) \|_{L^2}^2 
\leq \frac{\eps}{2} + \frac{\eps}{2} = \eps \, ,
\end{aligned}
\]
which finishes the proof.

\subsubsection{Proof of Theorem \ref{thm:MSP_necessary}}
\label{app:proof-MSP-necessary}

Consider $h_* : \{ +1 , -1 \}^P \to \R$ that is not MSP, and denote by $\{ \hh (S) \}_{S \in \cS^*}$ its non-zero Fourier coefficients $h_* ( \bz) = \sum_{S \in \cS_*} \hh (S)\chi_S (\bz)$. Denote by $\ocS_* \subset \cS_*$ the biggest subset of $\cS_*$ such that $\ocS_* = \{S_1,\ldots,S_r\}$ can be ordered with $|S_i \sm \cup_{i=1}^{r-1} S_i| \leq 1$ for any $i \in [r]$. By assumption $\cS_* \setminus \ocS_*$ is not empty, and for any $S\in \cS_* \setminus \ocS_*$, there exists at least two coordinates $i_1,i_2 \in S$ such that $i_1, i_2 \not\in \bigcup_{S \in \ocS_*}S$. Denote $\Omega = [ P] \setminus \big( \bigcup_{S \in \ocS_*}S \big)$. We show that $\ou^t_i = 0 $ during the whole dynamics for every $i \in \Omega$. In particular, this implies that for any $S\in \cS_* \setminus \ocS_*$, $\E_{\bz} [ \chi_S (\bz) \hf_{\NN} (\bz ; \orho_t ) ] = 0$, and
\[
R(h_* ; \hf ( \cdot ; \orho_t) ) \geq \sum_{S\in \cS_* \setminus \ocS_*}  \hh (S)^2 > 0 \, .
\]
This lower bound does not depend on the details of the dynamics (parameters $\xi^a,\xi^w,\lambda^a,\lambda^w$, activation and initialization $\mu_a$,$m_2^w$). Let $\bz_{i,+}$ and $\bz_{i,-}$ denote the vector $\bz \in \{+1,-1\}^P$, with $z_i = +1$ and $z_i = -1$ respectively, and note that by Lemma \ref{lem:bound_evoluation_a}, $|\oa^t| \leq K (1 +t )$. Using Assumption $\rA0$, we have by integrating out $z_i$:
\[
\begin{aligned}
\big\vert \E_{z_i} \big[ \hf_{\NN} ( \bz ; \orho_t) \sigma ' ( \< \bz , \obu^t \>) z_i \big] \big\vert \leq&~ \frac{1}{2}\big\vert \hf_{\NN} ( \bz_{i,+} ; \orho_t) \big\{ \sigma ' ( \< \bz_{i,+} , \obu^t \>) - \sigma ' ( \< \bz_{i,-} , \obu^t \>)  \big\} \big\vert  \\
&~ + ~ \frac{1}{2}\big\vert \big\{ \hf_{\NN} ( \bz_{i,+} ; \orho_t)  - \hf_{\NN} ( \bz_{i,-} ; \orho_t) \big\}  \sigma ' ( \< \bz_{i,-} , \obu^t \>)  \big\vert  \\
\leq &~ K (1 +t) \| \sigma \|_{\infty} \| \sigma ''\|_{\infty} |  \ou_i^t | +K (1 +t) \| \sigma' \|_{\infty} \| \sigma '\|_{\infty} |  \ou_i^t |\\
\leq&~ K (1 +t) |  \ou_i^t | \, .
\end{aligned}
\]
Similarly, for $i \in \Omega$ and $S \in \ocS_*$ (in particular, since $i \not\in S$)
\[
\big\vert \E_{\bz} [ \chi_S ( \bz) \sigma ' (\< \bz , \obu^t \> ) z_i ] \big\vert  \leq \| \sigma '' \|_\infty | \ou_i^t | \, ,
\]
while if $S \in \cS_* \setminus \ocS_*$, then there exists $j \in \Omega \cap S$ with $j\neq i$, hence
\[
\big\vert \E_{\bz} [ \chi_S ( \bz) \sigma ' (\< \bz , \obu^t \> ) z_i ] \big\vert  \leq \| \sigma '' \|_\infty | \ou_j^t | \, .
\]
Denoting $m_{\Omega}^t = \max_{i \in \Omega} | \ou_i^t |$ (recall $m_{\Omega}^0 = 0$), we conclude that for any $i \in \Omega $:
\[
\Big\vert \frac{\de }{\de t } \ou_i^t \Big\vert = | \oa^t | \big\vert \E_{\bz} \big[ (h_* (\bz) - \hf_{\NN} (\bz ; \orho_t) ) \sigma ' ( \< \obu^t , \bz \> ) z_i \big]\big\vert \leq K(1+t)^2 m_{\Omega}^t \, ,
\]
and therefore $m_{\Omega}^t = 0$ during the whole dynamics.

\subsection{Proof of Proposition \ref{prop:general-PDE-SGD-bound}}
\label{app:proof-MF-approx}

The proof is an application of an extension of Theorem 1.(B) in \cite{mei2019mean} to batch-SGD and anisotropic step sizes. This extension is straightforward and we simply list below the two main differences with the proof in Appendix C of \cite{mei2019mean}:

\begin{itemize}
    \item Recall that we defined the regularized risk $E(\rho) := \frac{1}{2} R(\rho) + \frac{1}{2} \int \btheta^\sT \bLambda \btheta \rho (\de \btheta)$. We have
 \[
 \frac{\de}{\de t } E(\rho_t) = - \int \|\nabla_{\btheta} \psi (\btheta^t ; \rho_t ) \|^2_{\bH(t)} \rho_t (\de \btheta^t ) \leq 0\, ,
 \]
where $\psi (\btheta^t ; \rho_t )$ is defined in Eq.~\eqref{eq:MF-PDE} and we denoted $\|\bv \|_{\bA} = \| \bA^{1/2} \bv \|_2 $. We conclude that $E(\rho_t)$ is nonincreasing. The rest of the proof only uses that $\bH (t)$ verifies $\| \bH \|_{\infty}, \| \bH \|_{\text{Lip}} \leq K$.

\item The concentration between the batch-SGD and gradient descent  (Appendix C.5 in \cite{mei2019mean}) uses that there is an extra $1/b$ factor in the sub-Gaussian constant.

\end{itemize}

The proof of Proposition \ref{prop:general-PDE-SGD-bound} simply amounts to checking that our setting (with Assumptions $\rA0$-$\rA2,\rA3'$) falls under the general framework of Theorem 1.(B) in \cite{mei2019mean}.

\begin{proof}[Proof of Proposition \ref{prop:general-PDE-SGD-bound}] 
First, from conditions $\rA0$ and $\rA1$, we have $\| \sigma \|_{\infty} \leq K$ and $|y_k | \leq  K$. Furthermore, note that $\bx$ is a sub-Gaussian vector and $\sigma ' $ is bounded ($\| \sigma ' \|_{\infty} \leq K$ by condition $\rA0$). Then, for any $\bw \in \R^d$, the gradient $\nabla_\bw \sigma ( \< \bx , \bw \> ) = \bx \sigma ' ( \< \bx , \bw \>)$ is $K$-sub-Gaussian. Hence, assumption ${\rm A_2}$ of \cite{mei2019mean} is verified.

Denote $v(\bw) = \E_{\bx} [ f_* ( \bx) \sigma ( \< \bx , \bw \>)]$ and $u(\bw_1 , \bw_2 ) = \E_{\bx} [ \sigma ( \< \bx , \bw_1\>) \sigma ( \< \bx , \bw_2 \>)]$. Consider $\bn \in \R^d$ with $\| \bn \|_2 = 1$. Then, we have
\[
\begin{aligned}
    \< \nabla v ( \bw) , \bn \> = &~ \E_{\bx} [ f_* ( \bx) \sigma ' ( \< \bx , \bw \>) \< \bn , \bx \> ] \leq K \E [ \< \bn , \bx \>^2 ]^{1/2} = K \, , \\
    \< \nabla_{\bw_1} u (\bw_1 , \bw_2 ) , \bn \> = &~  \E_{\bx} [  \sigma ' ( \< \bx , \bw_1 \>) \< \bn , \bx \> \sigma  ( \< \bx , \bw_2 \>) ] \leq K^2 \E [ \< \bn , \bx \>^2 ]^{1/2} = K^2 \, , \\
    |\< \nabla^2 v ( \bw) , \bn^{\otimes 2} \>| = &~ \E_{\bx} [ |f_* ( \bx) \sigma '' ( \< \bx , \bw \>)| \< \bn , \bx \>^2 ] \leq K \E [ \< \bn , \bx \>^2 ] =PK \, .
\end{aligned}
\]
Finally, consider $\bn_1 , \bn_2 \in \R^{d}$ with $\| \bn_1 \|_2^2 + \| \bn_2 \|_2^2$. Then,
\[
\begin{aligned}
|\< \nabla_{(\bw_1,\bw_2)}^2 u (\bw_1 , \bw_2 ) , (\bn_1 , \bn_2)^{\otimes 2} \>| \leq &~  \E_{\bx} [ | \sigma '' ( \< \bx , \bw_1 \>) \sigma ( \< \bx , \bw_2 \>) | \< \bn_1 , \bx \>^2  ] \\
&~ + 2 \E_{\bx} [  | \sigma ' ( \< \bx , \bw_1 \>) \sigma '  ( \< \bx , \bw_2 \>) \< \bn_1 , \bx \> \< \bn_2 , \bx \> | ] \\
&~ + \E_{\bx} [  | \sigma  ( \< \bx , \bw_1 \>) \sigma '' ( \< \bx , \bw_2 \>) | \< \bn_2 , \bx \>^2  ] \\
\leq &~ 4 K^2 \, .
\end{aligned}
\]
We conclude that $\| \nabla v (\bw) \|_2 ,\| \nabla u (\bw_1 , \bw_2) \|_2,\|\nabla^2 v (\bw) \|_\op ,\| \nabla^2 u (\bw_1 , \bw_2) \|_{\op} \leq K$, and assumption ${\rm A3}$ in \cite{mei2019mean} is verified.
\end{proof}

\subsection{Proof of Theorem \ref{thm:MF-dimension-free}: bound between \eqref{eq:MF-PDE} and \eqref{eq:DF-PDE} dynamics}
\label{app:proof-MF-DF-comp}

We will assume throughout this section that the assumptions and the setting of Theorem \ref{thm:MF-dimension-free} hold. In particular, we will use Assumptions $\rA0$-$\rA2,\rA3'$ without mention when clear from context. For clarity, we will write the proof in the case $\xi^a (t) = \xi^w (t) = 1$ and $\lambda^a = \lambda^w = 0$. The general case follows easily, using $\| \xi^a  \|_{\infty} , \|  \xi^w \|_{\infty} , \lambda^a,\lambda^w \leq K$ by Assumption $\rA3'$.

We bound the distance between the mean-field and the dimension-free solutions by coupling the two dynamics through their initialization. Denote $\btheta^t = ( a^t , \bu^t , \bv^t)$ and $\obtheta^t = (\oa^t , \obu^t , \os^t)$ the parameters obtained by the evolution equations \eqref{eq:evolution_MF} and \eqref{eq:evolution_effective_MF} from initial parameters $\btheta^0 = ( a^0 , \bu^0 , \bv^0)$ and $\obtheta^0 = (\oa^0, \obu^0 , \os^0)$ respectively. Recall that we initialize independently $a^0 \sim \mu_a$ and $\sqrt{d} \cdot (\bu^0, \bv^0 ) \sim \mu_w^{\otimes d}$, and $\oa^0 \sim \mu_a$, $\obu^0 = \bzero$ and $\os^0 = m^w_2 := \E_{W \sim \mu_w} [ W^2 ]^{1/2}$. We couple the two dynamics by taking $a^0 = \oa^0$ (because of $(\bu^t, \bv^t)$ this coupling is not deterministic), and denote $\gamma_t$ the obtained joint distribution on $(\btheta^t , \obtheta^t)$.

The goal is to bound
\begin{equation}\label{eq:distanceNNs_integral}
\begin{aligned}
&~\Big\| \hf_{\NN} ( \cdot ; \rho_t ) - \hf_{\NN} (\cdot ; \orho_t ) \Big\|_{L^2}^2 \\
=&~ \E_{\bz} \Big[ \Big( \int \Big\{ a^t \E_{\br} [\sigma ( \< \bu^t , \bz\> + \< \bv^t , \br \>) ] - \oa^t \E_{G} [ \sigma ( \< \obu^t , \bz \> + \os^t G) ] \Big\} \gamma_t (\de \btheta^t \de \obtheta^t ) \Big)^2 \Big]  \, ,
\end{aligned}
\end{equation}
where we used Lemma \ref{lem:PDE_solution_symmetry} to remove the dependency in $\br$. It will be useful to introduce the residuals of the dynamics: $\hg ( \bz ; \rho_t) = h_* (\bz) - \hf_{\NN} (\bz ; \rho_t)$  and $\hg(\bz ; \orho_t ) = h_* (\bz) - \hf_{\NN} (\bz ; \orho_t)$. Recall that we denote by $R (\rho_t) = \E [ (f_* (\bz) - \hf_{\NN} (\bz ; \rho_t))^2] = \E_{\bz} [\hg ( \bz ; \rho_t)^2]$ and $R (\orho_t)= \E_{\bz} [\hg ( \bz ; \orho_t)^2]$ the prediction risks at time $t$.

The value of the integrand in Eq.~\eqref{eq:distanceNNs_integral} only depends on $\{(a^t, \bu^t, \< \bv^t , \br \> )\}_{t \geq 0}$ and $\{(\oa^t, \obu^t, \os^t G) \}_{t \geq 0}$ with $\br \sim \Unif ( \{ -1 , +1 \}^{d-P})$ and $G \sim \normal(0,1)$ independent of $\{(\btheta^t, \obtheta^t)\}_{t \geq 0}$.  Conditioning on $(\btheta^t, \obtheta^t)$, we consider the $1$-Wasserstein distance
\begin{equation}
W_1 \big( \< \bv^t , \br \> , \os^t G \big) \leq  \sqrt{2 \pi} \big\vert \| \bv^t \|_2 - \os^t \big\vert + W_1 \big( \< \bv^t , \br \> , \| \bv^t \|_2 G \big) \, ,
\end{equation}
where we recall that $W_1$ is defined by
\[
\begin{aligned}
W_1 (X,Y) =&~ \inf_{\gamma \in \Gamma (X,Y) } \E_{(X,Y) \sim \gamma} \big[ | X - Y | \big] \\
=&~  \sup_{f: \R \to \R, \| f \|_{\text{Lip}} \leq 1} \big\vert \E [ f(X)] - \E [ f(Y) ] \big\vert \, .
\end{aligned}
\]
Lemma \ref{lem:Berry-Esseen} in Section \ref{app:auxiliary_DF_PDE} shows that
\begin{equation}\label{eq:W1_bound_max_v_i}
\begin{aligned}
W_1 \big( \< \bv^t , \br \> , \| \bv^t \|_2 G \big) \leq &~ 3 \frac{\| \bv^t \|_3^3}{\| \bv^t \|_2^2 } \leq 3 \max_{i \in [d-P]} | v^t_i | \, .
\end{aligned}
\end{equation}
The following lemma bounds the right hand-side through the value of $\bv^t$ at initialization:
\begin{lemma}\label{lem:bound_thirdMoment_evolution}
Consider the same setting and assumptions as Theorem \ref{thm:MF-dimension-free}. There exists a constant $K$ independent of $d,P$ and depending only on the Assumptions $\rA0$-$\rA2,\rA3'$  such that for any $T \geq 0$,
\[
\sup_{t \in [0,T]} \max_{i \in [d-P]} | v_i^t| \leq   K e^{KT^2} \max_{i \in [d-P]} | v_i^0| \, .
\]
\end{lemma}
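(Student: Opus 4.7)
The plan is to derive a scalar differential inequality of the form $|\dot v_i^t|\le c(t)\,|v_i^t|$ for each coordinate $i\in[d-P]$, and then conclude by Grönwall. The parameter-level evolution underlying \eqref{eq:MF-PDE} gives, for the $i$-th coordinate of $\bv^t$,
\begin{equation*}
\frac{d}{dt} v_i^t \;=\; \xi^w(t)\, a^t \,\E_{\bz,\br}\!\bigl[\hat g(\bz;\rho_t)\,\sigma'(\langle \bu^t,\bz\rangle+\langle \bv^t,\br\rangle)\,r_i\bigr] \;-\; \xi^w(t)\,\lambda^w\, v_i^t,
\end{equation*}
where $\hat g(\bz;\rho_t) := f_*(\bz)-\hat f_{\NN}(\bz;\rho_t)$. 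The regularization term is already of the desired form $O(|v_i^t|)$, so the only place a $|v_i^t|$ factor has to be extracted is from the expectation, by integrating out $r_i$ while conditioning on everything else.

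Concretely, fix $\bz$ and $\br_{-i}$, set $A := \langle \bu^t,\bz\rangle + \langle \bv^t_{-i},\br_{-i}\rangle$, and use that $r_i\in\{\pm 1\}$ is uniform to write
\begin{equation*}
\E_{r_i}\!\bigl[\sigma'(A+v_i^t r_i)\,r_i\bigr] \;=\; \tfrac{1}{2}\bigl[\sigma'(A+v_i^t)-\sigma'(A-v_i^t)\bigr],
\end{equation*}
which by the mean value theorem and Assumption $\rA0$ is bounded in absolute value by $\|\sigma''\|_\infty |v_i^t|\le K\,|v_i^t|$. This is exactly the symmetry-plus-MVT argument already used inside the proof of Theorem~\ref{thm:MSP_necessary} to show that the null coordinates of $\obu^t$ remain null along the dimension-free flow. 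Bringing the absolute value inside the outer expectation over $(\bz,\br_{-i})$ yields a pointwise bound $|\dot v_i^t|\le K\,\bigl(|a^t|\cdot\|\hat g(\cdot;\rho_t)\|_\infty + 1\bigr)\,|v_i^t|$.

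It remains to control the two prefactors $|a^t|$ and $\|\hat g(\cdot;\rho_t)\|_\infty$ a priori along the mean-field trajectory. The bound $|a^t|\le K(1+t)$, the mean-field analogue of Lemma~\ref{lem:bound_evoluation_a}, follows directly from the first line of \eqref{eq:evolution_MF} together with $\|\sigma\|_\infty,|y|,\lambda^a,\xi^a\le K$ and Grönwall on $a^t$ itself; plugging this into $\hat f_{\NN}(\bz;\rho_t)=\int a\,\E_\br[\sigma(\cdot)]\rho_t(d\btheta)$ then yields $|\hat f_{\NN}(\bz;\rho_t)|\le K(1+t)$, and hence $|\hat g(\bz;\rho_t)|\le K(1+t)$. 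Combining everything,
\begin{equation*}
\Bigl|\frac{d}{dt} v_i^t\Bigr| \;\le\; K(1+t)^2\,|v_i^t|,
\end{equation*}
and Grönwall gives $\sup_{t\in[0,T]}|v_i^t|\le|v_i^0|\exp(KT^3)$; maximizing over $i\in[d-P]$ concludes the proof up to the exact form of the exponent.

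The one place where care is needed is sharpening this exponent to the claimed $Ke^{KT^2}$ rate rather than $Ke^{KT^3}$. This requires a slightly tighter a priori control on the residual $\|\hat g(\cdot;\rho_t)\|_\infty$, most naturally by exploiting the monotonicity of the regularized risk $E(\rho_t)$ along the mean-field gradient flow to absorb one power of $(1+t)$ in either $|a^t|$ or the $L^\infty$ bound on $\hat g$. However, the overall qualitative shape $\sup_{t\in[0,T]}|v_i^t|\le \mathrm{const}(T)\cdot|v_i^0|$---which is all that is actually used in \eqref{eq:W1_bound_max_v_i} to feed the downstream coupling between \eqref{eq:MF-PDE} and \eqref{eq:DF-PDE}---is insensitive to this refinement, so the potential obstacle is purely one of constants, not of strategy.
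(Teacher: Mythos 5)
Your proof follows the same strategy as the paper's---integrate out $r_i$ via the mean value theorem to extract a factor of $|v_i^t|$, then close by Gr\"onwall---and is correct in shape. The single place you lose a power of $T$ is in bounding the remaining expectation: you use $\|\hat g(\cdot;\rho_t)\|_\infty \le K(1+t)$, whereas the paper applies Cauchy--Schwarz to reach the $L^2$ norm,
\[
\bigl|\E_{\bz,\br_{-i}}[\hat g(\bz;\rho_t)\,\sigma''(\cdot)]\bigr|
\;\le\; \|\sigma''\|_\infty\,\E_{\bz}\bigl[\hat g(\bz;\rho_t)^2\bigr]^{1/2}
\;\le\; K,
\]
where the last inequality is Eq.~\eqref{eq:GF_bound_risk} in Lemma~\ref{lem:bound_evoluation_a}: along the Wasserstein gradient flow the (regularized) risk is non-increasing, so $\|\hat g(\cdot;\rho_t)\|_{L^2} \le \|\hat g(\cdot;\rho_0)\|_{L^2} \le K$ uniformly in $t$, without any $(1+t)$ growth. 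Only $|a^t| \le K(1+T)$ remains as a time-growing prefactor, giving $|\dot v_i^t| \le K(1+T)|v_i^t|$ and hence $e^{KT^2}$ rather than your $e^{KT^3}$. You correctly diagnosed that the gap is in controlling the residual via risk monotonicity; the precise remedy is not to tighten the $L^\infty$ bound but to swap it for the $L^2$ bound. As you also note, either exponent suffices for the downstream use in Theorem~\ref{thm:MF-dimension-free}.
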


\begin{proof}[Proof of Lemma \ref{lem:bound_thirdMoment_evolution}]
We have 
\[
\begin{aligned}
\Big\vert \frac{\de }{\de t } v_i^t \Big\vert = &~ \Big\vert a^t \E_{\bz,\br} \big[ \hg (\bz ; \rho_t) \sigma' ( \< \bu^t , \bz \> + \< \bv^t , \br \> ) r_i \big]  \Big\vert \\
=&~ \Big\vert a^t  v_i^t  \E_{\bz,\br_{-i}} \Big[ \hg (\bz ; \rho_t) \sigma'' \big( \< \bu^t , \bz \> + \< \bv^t_{-i} , \br_{-i} \> + \xi_{v_i^t} \big)  \Big] \Big\vert \\
\leq &~ |a^t| \cdot | v_i^t|  \cdot \E_{\bz} \big[ \hg (\bz ; \rho_t)^2 \big]^{1/2} \|  \sigma'' \|_{\infty} 
\leq K (1 +T) | v_i^t| \, ,
\end{aligned}
\]
where we expanded the expectation on $r_i$ in the second line and used the mean value theorem, and used Eq.~\eqref{eq:bound_evol_a} in Lemma \ref{lem:bound_evoluation_a} in the last line. We deduce that
\[
| v_i^t| \leq e^{K(1+T)t} |v_i^0| \, ,
\]
which concludes the proof.
\end{proof}

Using Lemma \ref{lem:bound_thirdMoment_evolution} in the bound \eqref{eq:W1_bound_max_v_i} yields (conditional on $\btheta^0$):
\begin{equation}
\sup_{t \in [0,T]} W_1 \big( \< \bv^t , \br \> , \| \bv^t \|_2 G \big) \leq K e^{KT^2}  \max_{i \in [d- P]} | v_i^0 |\, .
\end{equation}
By Lemma \ref{lem:exp_max_subG} in Section \ref{app:auxiliary_DF_PDE}, the following holds for any fixed $q \in \naturals$, $q \leq K$,
\begin{equation} \label{eq:bound_W1}
\begin{aligned}
\int \Big\{ \sup_{t \in [0,T]}  W_1 \big( \< \bv^t , \br \> , \| \bv^t \|_2 G \big)  \Big\}^{q}  \rho_t ( \de \btheta ) \leq&~ Ke^{KT^2} \E_{\sqrt{d} \cdot \bv^0 \sim \mu_w^{\otimes (d-P)}} \Big[ \max_{i \in [d- P]} | v_i^0 |^{2q} \Big] \\
\leq&~  K e^{KT^2}  \left(\frac{\log d}{d}\right)^{q/2}\, .
\end{aligned}
\end{equation}

Using Eq.~\eqref{eq:bound_W1} and the coupling described above, we will bound \eqref{eq:distanceNNs_integral}. Introduce the random quantity
\begin{equation}
\delta (t) =  \big\vert a^t - \oa^t  \big\vert \vee \big\| \bu^t - \obu^t \big\|_2 \vee  \big\vert \os^t -\| \bv^t \|_2  \big\vert \,, 
\end{equation}
and the square root of its second moment
\begin{equation}
\Delta (t) = \Big( \int \delta (t)^2 \gamma_t ( \de \btheta^t \,\de \obtheta^t) \Big)^{1/2}\, .
\end{equation}

We will show the following technical bounds:
\begin{lemma}\label{lem:prop_chaos_bounds}
Consider the same setting and assumptions as Theorem \ref{thm:MF-dimension-free}. There exists a constant $K$ independent of $d,P$ and depending only on the Assumptions $\rA0$-$\rA2,\rA3'$ such that for any $T \geq 0$,
\begin{equation}\label{eq:bound_dist_NN_DF_MF}
\begin{aligned}
\big\| \hf_{\NN} ( \cdot ; \rho_t) - \hf_{\NN} (\cdot ; \orho_t) \big\|_{L^2} =&~ \E_{\bz} \Big[ \Big\{ \hg (\bz ; \rho_t ) - \hg ( \bz ; \orho_t ) \Big\}^2 \Big]^{1/2} \\
 \leq&~ K(1+T) \Delta (t) + K e^{KT^2} \sqrt{\frac{\log d}{d}} \, , \\
\end{aligned}
\end{equation}
where 
\begin{equation}\label{eq:bound_on_Delta}
\begin{aligned}
\frac{\de }{\de t } \Delta (t) \leq K ( 1+T)^6 \Delta (t) +  K e^{KT^2} \sqrt{\frac{\log d}{d}}\, .
\end{aligned}
\end{equation}
\end{lemma}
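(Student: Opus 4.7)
My strategy is a propagation-of-chaos argument: compare the two dynamics through the synchronous coupling $\gamma_t$ already introduced, and observe that the integrand in \eqref{eq:distanceNNs_integral} suffers from only two sources of discrepancy: (i) mismatched parameters, measured by $\delta(t)$, and (ii) the law of the ``perpendicular'' direction $\langle\bv^t,\br\rangle$ vs.\ $\os^t G$, measured by the Wasserstein bound \eqref{eq:bound_W1}. Throughout I exploit $\|\sigma^{(k)}\|_\infty\leq K$ for $k\leq 3$ from $\rA 0$, the a-priori estimate $|a^t|,|\oa^t|\leq K(1+T)$ (hence $|\hf_{\NN}|,|\hg|\leq K(1+T)$) from Lemma~\ref{lem:bound_evoluation_a}, and the moment bound \eqref{eq:bound_W1}.

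\textbf{First bound (NN comparison).} Applying Jensen to the integration in \eqref{eq:distanceNNs_integral},
\[
\|\hf_{\NN}(\cdot;\rho_t)-\hf_{\NN}(\cdot;\orho_t)\|_{L^2}^2 \;\leq\; \int \|A^t-\bar A^t\|_{L^2(\bz)}^2\, \gamma_t(\d\btheta^t\,\d\obtheta^t),
\]
where $A^t(\bz):=a^t\,\E_{\br}\sigma(\langle\bu^t,\bz\rangle+\langle\bv^t,\br\rangle)$ and analogously $\bar A^t(\bz)$. I decompose $A^t-\bar A^t$ into four pieces corresponding to the successive changes $a^t\to\oa^t$, $\bu^t\to\obu^t$, law change (Rademacher $\to$ Gaussian for fixed scale $\|\bv^t\|_2$), and scale change $\|\bv^t\|_2\to\os^t$. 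Each piece is estimated via Taylor's theorem and $\rA 0$; the $\bu$-piece uses the dimension-free identity $\|\langle\bu^t-\obu^t,\bz\rangle\|_{L^2(\bz)}=\|\bu^t-\obu^t\|_2$ to avoid a spurious $\sqrt P$, and the law-change piece yields a factor $W_1(\langle\bv^t,\br\rangle,\|\bv^t\|_2 G)$ after pushing the Lipschitz $\sigma$ through the expectation. Collecting, $\|A^t-\bar A^t\|_{L^2(\bz)}\leq K(1+T)\delta(t) + K(1+T)W_1^t$; squaring, integrating against $\gamma_t$, and applying \eqref{eq:bound_W1} yields \eqref{eq:bound_dist_NN_DF_MF}.

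\textbf{Second bound (differential inequality).} Because $\delta(t)$ is a pointwise maximum and hence not smooth, I pass to the equivalent surrogate $\tilde\Delta(t)^2 := \int(\delta_a^2+\delta_u^2+\delta_s^2)\,\gamma_t$ with $\delta_a=|a^t-\oa^t|$, $\delta_u=\|\bu^t-\obu^t\|_2$, $\delta_s=\big|\|\bv^t\|_2-\os^t\big|$; note $\Delta\leq\tilde\Delta\leq\sqrt 3\,\Delta$. Using \eqref{eq:evolution_MF}-\eqref{eq:evolution_effective_MF}, I compute $\tfrac{\d}{\d t}\tilde\Delta^2$ by expanding the cross-term $\langle\bu^t-\obu^t,\dot\bu^t-\dot{\obu}^t\rangle$ (plus its $a$ and $\os$ analogues) via the same four-way decomposition. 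Most resulting pieces are bounded pointwise in $(\btheta^t,\obtheta^t)$ by $K(1+T)^c[\delta(t)^2+\delta(t)W_1^t]$ using Taylor of $\sigma,\sigma'$ and $|\hg|,|\oa|\leq K(1+T)$; the single remaining piece contains the NN-residual difference $\hg(\cdot;\rho_t)-\hg(\cdot;\orho_t)=-(\hf_{\NN}(\cdot;\rho_t)-\hf_{\NN}(\cdot;\orho_t))$, which is controlled by the first bound of the lemma (already proven). Again the key trick is to pull $\langle\bu^t-\obu^t,\bz\rangle$ inside the $L^2(\bz)$-expectation rather than bound $\|\dot\bu^t-\dot{\obu}^t\|_2$ by Cauchy--Schwarz (which would produce $\sqrt P$). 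Summing the three components, Cauchy--Schwarz, and \eqref{eq:bound_W1} produce $\tfrac{\d}{\d t}\tilde\Delta^2\leq K(1+T)^6\,\tilde\Delta^2 + K(1+T)^6 e^{KT^2}\sqrt{\log d/d}\,\tilde\Delta$; dividing by $2\tilde\Delta$ and using $\Delta\leq\tilde\Delta\leq\sqrt 3\,\Delta$ delivers \eqref{eq:bound_on_Delta}.

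\textbf{Main obstacle.} The chief difficulty is the combinatorial bookkeeping in the four-way decomposition of $\dot\bu^t-\dot{\obu}^t$ and of $\dot a^t-\dot{\oa}^t$, $\frac{\d}{\d t}\|\bv^t\|_2-\dot{\os}^t$, together with the imperative of keeping scalar products $\langle\bu^t-\obu^t,\bz\rangle$ and the law-comparison $W_1^t$ strictly \emph{inside} the expectations; any premature Cauchy--Schwarz on $\|\dot\bu-\dot{\obu}\|_2$ would introduce a $\sqrt P$ factor that ruins \eqref{eq:bound_dist_NN_DF_MF}. A secondary technicality---the non-smoothness of the pointwise max defining $\delta(t)$---is handled cleanly by passing to the equivalent surrogate $\tilde\Delta$, which is absolutely continuous as an integral of polynomial functions of the parameters.
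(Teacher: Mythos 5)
Your plan follows essentially the same route as the paper: compare the two dynamics through the synchronous coupling $\gamma_t$, decompose the discrepancy into parameter-mismatch pieces and a law-change piece measured by $W_1$, push Jensen/Cauchy--Schwarz in carefully to keep $\langle\bu^t-\obu^t,\bz\rangle$ inside the $L^2(\bz)$-expectation (using $\E_\bz[\langle\bu^t-\obu^t,\bz\rangle^2]=\|\bu^t-\obu^t\|_2^2$ to avoid $\sqrt P$), and replace the pointwise max $\delta(t)$ by the sum-of-squares surrogate before differentiating. The paper's Step~1 is your first bound with the same four pieces (it packages the $\bu$-, law-, and scale-changes into Eq.~\eqref{eq:decompo_interpolation} and treats $a\to\oa$ separately), and its Steps~2--5 are your second bound: it differentiates each of $\int(a^t-\oa^t)^2$, $\int\|\bu^t-\obu^t\|_2^2$, $\int|\|\bv^t\|_2-\os^t|^2$ separately, integrates, and sums — which is exactly your $\tilde\Delta^2$ argument, just run in integral rather than differential form.

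There is, however, one genuine gap in your plan, in the $\|\bv^t\|_2$-versus-$\os^t$ component. You claim every ``law-change'' piece can be bounded by $W_1(\langle\bv^t,\br\rangle,\|\bv^t\|_2 G)$ ``after pushing the Lipschitz $\sigma$ through the expectation.'' This works for the $a$- and $\bu$-components of the derivative, whose integrands involve only $\sigma$ or $\sigma'$. But the evolution equations for $\|\bv^t\|_2$ and $\os^t$ have integrand $\sigma'(\cdots)\cdot(\text{noise variable})$: for $\os^t$ it is $\sigma'(\langle\obu^t,\bz\rangle+\os^t G)\,G$, and for $\|\bv^t\|_2$ it is $\sigma'(\langle\bu^t,\bz\rangle+\langle\bv^t,\br\rangle)\,\langle\bv^t/\|\bv^t\|_2,\br\rangle$. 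The map $X\mapsto\sigma'(c+X)\,X/\|\bv^t\|_2$ grows linearly in $X$, so it is not globally Lipschitz, and the $W_1$-duality bound does not apply to this piece. The paper circumvents this by rewriting both derivatives via an integration-by-parts identity: Gaussian integration by parts turns $\E_G[\sigma'(\cdot)G]$ into $\os^t\E_G[\sigma''(\cdot)]$ (Eq.~\eqref{eq:GPP}), while expanding coordinate-by-coordinate in $\br$ with the mean-value theorem turns $\E_\br[\sigma'(\cdot)\langle\bv^t/\|\bv^t\|_2,\br\rangle]$ into $\|\bv^t\|_2\E_\br[\sigma''(\cdot)]$ \emph{plus} a remainder $M_t$ (Eqs.~\eqref{eq:GPP_vt}--\eqref{eq:stp4_Mt_def}). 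Controlling $M_t$ requires $\sigma'''$ (this is where the full strength of $\rA0$ enters) and the a-priori bound $\sup_{t\le T}\max_i|v_i^t|\le Ke^{KT^2}\max_i|v_i^0|$ of Lemma~\ref{lem:bound_thirdMoment_evolution}. You would hit this obstruction as soon as you tried to write the ``law-change'' piece of $\frac{\de}{\de t}(\|\bv^t\|_2-\os^t)$ and notice the test function is unbounded; your plan should explicitly include the discrete/Gaussian integration-by-parts comparison and the remainder $M_t$.

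Two smaller remarks. First, your surrogate $\tilde\Delta$ with $\Delta\le\tilde\Delta\le\sqrt3\,\Delta$ is a clean way to state what the paper does implicitly via $\max(x,y,z)^2\le x^2+y^2+z^2$; dividing by $2\tilde\Delta$ to obtain the differential form \eqref{eq:bound_on_Delta} is also fine (the paper works with the integrated form, which is equivalent for Gr\"onwall purposes). Second, note that after Cauchy--Schwarz the cross-term $2\eta K e^{KT^2}\sqrt{\log d/d}\,\tilde\Delta$ should be dominated by $\tilde\Delta^2 + Ke^{KT^2}\log d/d$ via Young's inequality, which is how the paper lands on a purely additive error of order $\sqrt{\log d/d}$ after taking a square root.
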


From this lemma, we can now complete the proof of Theorem \ref{thm:MF-dimension-free}:

\begin{proof}[Proof of Theorem \ref{thm:MF-dimension-free}]
From Gronwall's lemma applied to Eq.~\eqref{eq:bound_on_Delta} in Lemma \ref{lem:prop_chaos_bounds}, we have
\[
\Delta (t) \leq \Big[ \Delta (0) + K e^{KT^2} \sqrt{\frac{\log d}{d}} \Big] e^{K(1+T)^6 t} \, ,
\]
where 
\[
\begin{aligned}
\Delta (0)^2 =&~  \int \Big\{ \big\vert a^0 - \oa^0 \big\vert  \vee \big\Vert \bu^0 - \obu^0  \big\Vert_2 \vee \big\vert \os^t - \| \bv^0 \|_2 \big\vert  \Big\}^2 \gamma_0 (\de \btheta^0 \, \de \obtheta^0 )  \\
\leq&~ \E_{\sqrt{d} \cdot \bu^0 \sim \mu_w^{\otimes P}} \big[ \| \bu^0 \|_2^2 \big] +  \E_{\sqrt{d} \cdot \bv^0 \sim \mu_w^{\otimes P}} \big[ (\| \bv^0 \|_2 - m^w_2)^2 \big] \\
\leq&~ K\frac{P}{d} + \frac{K}{d} + \frac{KP}{d} \leq K \frac{P}{d}\, .
\end{aligned}
\]
Injecting this bound in Eq.~\eqref{eq:bound_dist_NN_DF_MF} concludes the proof. 
\end{proof}

\subsubsection{Proof of Lemma \ref{lem:prop_chaos_bounds}}
\label{app:DFproof_main_lemma}

Throughout the proof, we will use the following decomposition for any differentiable $\varphi : \R \to \R$: 
\begin{equation}\label{eq:decompo_interpolation}
\begin{aligned}
&~ \Big\vert \E_{\br,G} [ \varphi ( \< \bu^t , \bz \> + \< \bv^t , \br \>) - \varphi ( \< \obu^t , \bz \>  + \os^t G ) ]  \Big\vert\\
 \leq&~ \Big\vert \E_{\br} [ \varphi ( \< \bu^t , \bz \> + \< \bv^t , \br \>) - \varphi ( \< \obu^t , \bz \>  + \< \bv^t , \br \>) ]  \Big\vert \\
&~ + \Big\vert \E_{\br,G} [ \varphi ( \< \obu^t , \bz \>  + \< \bv^t , \br \>) - \varphi ( \< \obu^t , \bz \>  + \|\bv^t \|_2 G ) ]  \Big\vert\\
&~ + \Big\vert \E_{\br,G} [ \varphi ( \< \obu^t , \bz \>  + \|\bv^t \|_2 G ) - \varphi ( \< \obu^t , \bz \>  + \os^t G ) ]  \Big\vert \\
\leq &~ \| \varphi ' \|_{\infty} \big\vert \< \bu^t - \obu^t , \bz \> \big\vert + \| \varphi ' \|_{\infty} W_1 ( \< \bv^t , \br \> , \| \bv^t \|_2 G ) +  \| \varphi ' \|_{\infty} \big\vert  \| \bv^t \|_2 - \os^t \big\vert  \E [ |G|]\, .
\end{aligned}
\end{equation}
The proof consists in carefully bounding the evolution of the distance between the parameters in the two dynamics.

\noindent
{\bf Step 1. Bound on $\| \hf_{\NN} ( \cdot ; \rho_t ) - \hf_{\NN} (\cdot ; \orho_t ) \|_{L^2}$.}

We can bound the difference between the two functions with
\[
\begin{aligned}
&~ | \hf_{\NN} ( \bz ; \rho_t )  - \hf_{\NN} ( \bz ; \orho_t ) | \\
= &~ \Big\vert \int a^t \E_{\br} \big[ \sigma ( \< \bu^t , \bz \> + \< \bv^t , \br \>) \big] \rho_t ( \de \btheta^t ) - \int \oa^t \E_{G} \big[ \sigma ( \< \obu^t , \bz \> + \os^t G ) \big] \orho_t ( \de \obtheta^t ) \Big\vert \\
\leq &~ ({\rm I})+({\rm II})\, ,
\end{aligned}
\]
where
\[
\begin{aligned}
({\rm I}) = &~ \Big\vert \int (a^t - \oa^t) \E_{\br} \big[ \sigma ( \< \bu^t , \bz \> + \< \bv^t , \br \>) \big] \de \gamma_t \Big\vert \, , \\
({\rm II}) =  &~  \Big\vert \int \oa^t  \E_{\br,G} \big[ \big\{ \sigma ( \< \bu^t , \bz \> + \< \bv^t , \br \>) - \sigma ( \< \obu^t , \bz \> + \os^t G )\big\} \big] \de \gamma_t  \Big\vert \, .
\end{aligned}
\]
The first term can simply be bounded by
\begin{equation}\label{eq:stp1_b1}
\begin{aligned}
({\rm I})  \leq &~ \| \sigma \|_{\infty} \int | a^t - \oa^t | \de \gamma_t  \leq K \Delta (t) \, ,
\end{aligned}
\end{equation}
while we use Eq.~\eqref{eq:decompo_interpolation} for the second term
\begin{equation}\label{eq:stp1_b2}
\begin{aligned}
({\rm II})  \leq&~  \| \oa^t \|_{\infty}  \| \sigma ' \|_{\infty} \int \Big\{ \big\vert \< \bu^t - \obu^t , \bz \> \big\vert + \big\vert \| \bv^t  \|_2 - \os^t \bve + W_1 \big( \< \bv^t , \br \> , \| \bv^t \|_2 G \big) \Big\} \de \gamma_t \\
\leq&~ K(1+T) \int \big\vert \< \bu^t - \obu^t , \bz \> \big\vert \de \gamma_t +  K (1+T) \Delta(t) +  K e^{KT^2}  \sqrt{\frac{\log d}{d}}\, ,
\end{aligned}
\end{equation}
where we used Eq.~\eqref{eq:bound_evol_a} in Lemma \ref{lem:bound_evoluation_a} and Eq.~\eqref{eq:bound_W1} with $q=1$.

Combining bounds \eqref{eq:stp1_b1} and \eqref{eq:stp1_b2} and by Jensen's inequality,
\[
\begin{aligned}
&~\big\| \hf_{\NN} ( \cdot ; \rho_t ) - \hf_{\NN} (\cdot ; \orho_t ) \big\|_{L^2} \\
\leq&~ K(1+T) \Big( \int \E_{\bz} [ | \< \bu^t - \obu^t , \bz \>|^2 ] \de \gamma_t \Big)^{1/2} + K (1+T) \Delta (t) + K e^{KT^2}  \sqrt{\frac{\log d}{d}} \\
\leq&~ K (1+T) \Delta (t) + K e^{KT^2}  \sqrt{\frac{\log d}{d}} \, ,
\end{aligned}
\]
which proves Eq.~\eqref{eq:bound_dist_NN_DF_MF}.

\noindent
{\bf Step 2. Bound on $(a^t - \oa^t)^2$.}

Let us bound the derivative
\[
\begin{aligned}
\Big\vert \frac{\de}{\de t } (a^t - \oa^t ) \Big\vert =&~ \Big\vert (a^t - \oa^t ) \E_{\bz,\br,G} \Big[ \hg (\bz ; \rho_t) \sigma ( \< \bu^t, \bz\> + \< \bv^t , \br \> ) - \hg ( \bz ; \orho_t ) \sigma ( \< \obu^t, \bz\> + \os^t G ) \Big] \Big\vert \\
\leq &~ ({\rm I})  + ({\rm II}) \, ,
\end{aligned}
\]
where 
\[
\begin{aligned}
({\rm I})  =&~ \Big\vert \E_{\bz,\br,G} \Big[ \Big\{ \hg (\bz ; \rho_t)  - \hg ( \bz ; \orho_t ) \Big\} \sigma ( \< \bu^t, \bz\> + \< \bv^t , \br \> ) \Big] \Big\vert  \, ,\\
({\rm II})  =&~ \Big\vert  \E_{\bz,\br,G} \Big[ \hg ( \bz ; \orho_t )  \Big\{ \sigma ( \< \bu^t, \bz\> + \< \bv^t , \br \> ) - \sigma ( \< \obu^t, \bz\> + \os^t G ) \Big\} \Big] \Big\vert  \, .
\end{aligned}
\]
Noting that $\hg (\bz ; \rho_t)  - \hg ( \bz ; \orho_t ) = \hf_{\NN} (\bz ; \orho_t) - \hf_{\NN} (\bz ; \rho_t)$, the first term can be bounded as in step 1 by 
\begin{equation}\label{eq:stp2_b1}
\begin{aligned}
({\rm I})  \leq &~  \| \sigma \|_{\infty} \E_{\bz} \Big[ \Big\{ \hg (\bz ; \rho_t)  - \hg ( \bz ; \orho_t ) \Big\}^2 \Big]^{1/2} 
\leq K (1+T) \Delta (t) + K e^{KT^2}  \sqrt{\frac{\log d}{d}}\, .
\end{aligned}
\end{equation}
For the second term, we use Eq.~\eqref{eq:bound_evol_hR} in Lemma \ref{lem:bound_evoluation_a} and the decomposition \eqref{eq:decompo_interpolation}:
\begin{equation}\label{eq:stp2_b2}
\begin{aligned}
({\rm II})  \leq &~   \|\hg ( \cdot ; \orho_t )  \|_{\infty} \E_{\bz,\br,G} \Big[ \big\vert \sigma ( \< \bu^t, \bz\> + \< \bv^t , \br \> ) - \sigma ( \< \obu^t, \bz\> + \os^t G ) \big\vert \Big]\\ 
\leq &~ K (1+T) \Big\{ \E_{\bz} \big[ \bve \< \bu^t - \obu^t ; \bz \> \bve \big] + \bve \| \bv^t \|_2 - \os^t \bve + W_1 \big( \< \bv^t , \br \> , \| \bv^t \|_2 G \big) \Big\} \\
\leq&~  K (1+T) \delta(t) +  K (1+T)W_1 \big( \< \bv^t , \br \> , \| \bv^t \|_2 G \big) \, .
\end{aligned}
\end{equation}
Combining Eqs.~\eqref{eq:stp2_b1} and \eqref{eq:stp2_b2} and applying Cauchy-Schwarz inequality yield
\begin{equation}
\begin{aligned}
&~ \frac{1}{2 }\Big\vert\frac{\de}{\de t } \int ( a^t - \oa^t )^2 \de \gamma_t  \Big\vert\\
 \leq&~  \Delta(t)^2 + \int \Big\vert \frac{\de}{\de t } (a^t - \oa^t ) \Big\vert^2 \de \gamma_t \\
\leq &~ K (1+T)^2 \Delta(t)^2 + Ke^{KT^2} \frac{\log d}{d} + K(1+T)^2 \int W_1 \big( \< \bv^t , \br \> , \| \bv^t \|_2 G \big)^2 \gamma_t ( \de \btheta^t \, \de \obtheta^t ) \\
\leq &~K (1+T)^2 \Delta(t)^2 + Ke^{KT^2} \frac{\log d}{d} \, ,
\end{aligned}
\end{equation}
where we used the bound \eqref{eq:W1_bound_max_v_i} on $W_1$ and Eq.~\eqref{eq:bound_W1} in Lemma \ref{lem:exp_max_subG} in the last line. We deduce that for $t \in [0,T]$,
\begin{equation}\label{eq:Gronwall_a}
\int ( a^t - \oa^t )^2 \de \gamma_t  \leq Ke^{KT^2} \frac{\log d}{d} +K(1+T)^2 \int_0^T \Delta(t)^2 \de t\, , 
\end{equation}
where we used that $a^0 = \oa^0$ at initialization.

\noindent
{\bf Step 3. Bound on $\| \bu^t - \obu^t\|_2^2$.}

Again we first bound the derivative:
\[
\begin{aligned}
\Big\vert \frac{1}{2}\frac{\de}{\de t} \| \bu^t - \obu^t \|_2^2 \Big\vert  =&~ \Big\vert \< \bu^t - \obu^t , \frac{\de}{\de t} ( \bu^t - \obu^t ) \> \Big\vert
\leq ({\rm I}) + ({\rm II}) + ({\rm III})  \, ,
\end{aligned}
\]
where
\[
\begin{aligned}
({\rm I}) =&~ \Big\vert (a^t - \oa^t ) \E_{\bz, \br} \big[ \hg (\bz ; \rho_t) \sigma ' ( \< \bu^t , \bz \> + \< \bv^t , \br \>) \< \bz , \bu^t - \obu^t \> \big] \Big\vert \, , \\
({\rm II}) =&~ \Big\vert  \oa^t \E_{\bz, \br} \big[ \big\{ \hg (\bz ; \rho_t) - \hg (\bz ; \orho_t) \big\} \sigma ' ( \< \bu^t , \bz \> + \< \bv^t , \br \>) \< \bz , \bu^t - \obu^t \> \big] \Big\vert \, , \\
({\rm III}) =&~ \Big\vert  \oa^t \E_{\bz, \br,G} \big[  \hg (\bz ; \orho_t) \big\{ \sigma ' ( \< \bu^t , \bz \> + \< \bv^t , \br \>) - \sigma ' ( \< \obu^t , \bz \> + \os^t G) \big\} \< \bz , \bu^t - \obu^t \> \big] \Big\vert \,  .
\end{aligned}
\]
These terms are bounded respectively by
\begin{equation}\label{eq:stp3_b1}
\begin{aligned}
({\rm I})  \leq &~ | a^t - \oa^t | \| \hg (\cdot ; \rho_t ) \|_{\infty} \| \sigma ' \|_{\infty} \E [ \< \bz , \bu^t - \obu^t \>^2 ]^{1/2}
\\
\leq&~  K(1+T) \delta(t)^2 \, , \\
({\rm II})  \leq &~ \| \oa^t \|_{\infty} \| \sigma ' \|_{\infty} \E_{\bz} \Big[  \big\{ \hg (\bz ; \rho_t) - \hg (\bz ; \orho_t) \big\}^2 \Big]^{1/2} \E_\bz [ \< \bz , \bu^t - \obu^t \>^2 ]^{1/2} \\
\leq&~ K (1+T)^2  \delta(t ) \Delta (t) + Ke^{KT^2} \delta(t) \sqrt{\frac{\log d}{d }} \\
\leq &~ K (1+T)^2  \big[ \Delta (t)^2 + \delta (t)^2 \big] + Ke^{KT^2} \frac{\log d}{d } \, , 
\end{aligned}
\end{equation}
and 
\begin{equation}\label{eq:stp3_b2}
\begin{aligned}
({\rm III}) 
\leq &~ \| \oa^t \|_{\infty} \| \hg ( \cdot ; \orho_t) \|_{\infty}\|\bu^t - \obu^t \|_2 \\
&~ \phantom{AAAAAA} \times \E_{\bz} \Big[ \big( \E_{\br , G} \big[  \sigma ' ( \< \bu^t , \bz \> + \< \bv^t , \br \>) - \sigma ' ( \< \obu^t , \bz \> + \os^t G) \big] \big)^2 \Big]^{1/2} \\
\leq &~ K(1+T)^2 \delta (t)^2 + K(1+T)^2 W_1 \big( \< \bv^t , \br \> , \| \bv^t \|_2 G \big)^2 \, .
\end{aligned}
\end{equation}

Combining inequalities \eqref{eq:stp3_b1} and \eqref{eq:stp3_b2} yields 
\begin{equation}
\begin{aligned}
 \frac{1}{2 }\Big\vert\frac{\de}{\de t } \int \|\bu^t - \obu^t \|_2^2\,  \de \gamma_t  \Big\vert
\leq &~K (1+T)^2 \Delta(t)^2 + Ke^{KT^2} \frac{\log d}{d} \, ,
\end{aligned}
\end{equation}
where we again used the bound \eqref{eq:bound_W1}. We deduce that for $t \in [0,T]$,
\begin{equation}\label{eq:Gronwall_u}
\int \| \bu^t - \obu^t \|_2^2 \de \gamma_t  \leq Ke^{KT^2} \frac{\log d}{d} + K \frac{P}{d}+ K(1+T)^2 \int_0^T \Delta(t)^2 \de t \, , 
\end{equation}
where we used that $\obu^0 = \bzero$ and $\sqrt{d} \cdot \bu^0 \sim \mu_w^{\otimes P}$ at initialization, and $\E_{\mu_w^{\otimes P}} [ \| \bu \|_2^2 ] \leq K P/d$.

\noindent
{\bf Step 4. Bound on $\big\vert \os^t - \| \bv^t \|_2 \big\vert^2 $.}

First, notice that we have the following simple upper bounds on the evolution of $\| \bv^t \|_2$ and $\os^t$:
\[
\begin{aligned}
\Big\vert \frac{\de}{\de t} \| \bv^t \|_2 \Big\vert \leq&~ \| a^t \|_{\infty} \| \sigma '\|_{\infty} \E_{\bz}  \big[ \hg (\bz ; \rho_t )^2 \big]^{1/2} \E_{\br} \big[ \< \bv^t , \br \>^2 / \| \bv^t \|_2^2 \big]^{1/2} 
\leq  K(1 +T) \, , \\
\Big\vert \frac{\de}{\de t} \os^t \Big\vert \leq&~ \| \oa^t \|_{\infty}  \| \sigma '\|_{\infty} \E_{\bz}  \big[ \hg (\bz ; \orho_t )^2 \big]^{1/2} \E_{G} \big[ G^2 \big]^{1/2} 
\leq K(1 +T) \, , \\
\end{aligned}
\]
which yields
\begin{equation}\label{eq:bound_vt_ost}
\sup_{t \in [0,T]}  \| \bv^t \|_2  \leq \| \bv^0 \|_2 + K (1+T)^2 \, , \qquad\qquad \sup_{t \in [0,T]}  \os^t  \leq K (1+T)^2\, .
\end{equation}
Furthermore, we have by Gaussian integration by part
\begin{equation}\label{eq:GPP}
\frac{\de}{\de t} \os^t 
=   \os^t \oa^t \E_{\bz, G} \big[ \hg ( \bz ; \orho_t ) \sigma '' ( \< \obu^t , \bz \> + \os^t G )   \big]  \, .
\end{equation}
Similarly,
we have by expanding the expectation over the $r_i$'s and using the mean-value theorem:
\begin{equation}\label{eq:GPP_vt}
\begin{aligned}
\frac{\de }{\de t } \| \bv^t \|_2= &~  a^t \E_{\bz, \br } \big[ \hg (\bz ; \rho_t ) \sigma ' ( \< \bu^t , \bz \> + \< \bv^t , \br \> ) \< \bv^t / \| \bv^t \|_2 , \br \> \big] \\
= &~ \frac{a^t }{\| \bv^t \|_2} \sum_{i \in [d-P]}  \E_{\bz,\br} \big[ \hg (\bz ; \rho_t ) \sigma ' ( \< \bu^t , \bz \> + \< \bv^t , \br \> ) v^t_i r_i \big]   \\
=&~  \frac{a^t }{\| \bv^t \|_2}  \sum_{i \in [d-P]}  (v_i^t)^2 \E_{\bz,\br} \big[ \hg (\bz ; \rho_t ) \sigma '' ( \< \bu^t , \bz \> + \< \bv^t_{-i} , \br_{-i} \> + \xi_i )\big]    \\
=&~ a^t \| \bv^t \|_2 \E_{\bz,\br} \big[ \hg (\bz ; \rho_t ) \sigma '' ( \< \bu^t , \bz \> + \< \bv^t, \br\>)\big]  + M_t\, ,
\end{aligned}
\end{equation}
where
\begin{equation}\label{eq:stp4_Mt_def}
\begin{aligned}
M_t
=&~ \frac{a^t }{\| \bv^t \|_2}  \sum_{i \in [d-P]}  (v_i^t)^2 \\
&~ \phantom{AAAA} \times \E_{\bz,\br} \Big[ \hg (\bz ; \rho_t ) \big\{ \sigma '' ( \< \bu^t , \bz \> + \< \bv^t_{-i} , \br_{-i} \> + \xi_i )  - \sigma '' ( \< \bu^t , \bz \> + \< \bv^t , \br\> )  \big\} \Big] 
\end{aligned}
\end{equation}

We can now bound the evolution in time of $(\| \bv^t \|_2 - \os^t)$. Using the expressions in Eqs.~\eqref{eq:GPP} and \eqref{eq:GPP_vt}, we decompose
\[
\begin{aligned}
&~ \Big\vert \frac{\de}{\de t} \big\{ \| \bv^t \|_2 - \os^t \big\} \Big\vert \\
=&~ \Big\vert a^t \| \bv^t \|_2  \E_{\bz,\br} \big[ \hg ( \bz ; \rho_t ) \sigma'' ( \< \bu^t , \bz\> + \< \bv^t , \br \>) \big] + M_t - \oa^t  \os^t \E_{\bz,G} \big[ \hg ( \bz ; \orho_t ) \sigma'' ( \< \obu^t , \bz\> + \os^t G) \big]\Big\vert \\
\leq &~  ({\rm I}) + ({\rm II}) + ({\rm III})  + ({\rm IV}) + |M_t| \, ,
\end{aligned}
\]
where
\[
\begin{aligned}
({\rm I}) =&~ \Big\vert (\| \bv^t \|_2 - \os^t ) a^t \E_{\bz, \br} \Big[ \hg (\bz ; \rho_t) \sigma '' ( \< \bu^t , \bz \> + \< \bv^t , \br \>)  \Big] \Big\vert \, , \\
({\rm II}) =&~ \Big\vert (a^t - \oa^t ) \os^t \E_{\bz, \br} \Big[ \hg (\bz ; \rho_t) \sigma '' ( \< \bu^t , \bz \> + \< \bv^t , \br \>)  \Big] \Big\vert \, , \\
({\rm III}) =&~ \Big\vert \os^t \oa^t \E_{\bz, \br} \Big[ \big\{ \hg (\bz ; \rho_t) - \hg (\bz ; \orho_t) \big\} \sigma '' ( \< \bu^t , \bz \> + \< \bv^t , \br \>)\Big] \Big\vert \, , \\
({\rm IV}) =&~ \Big\vert  \oa^t \os^t \E_{\bz, \br,G} \Big[  \hg (\bz ; \orho_t) \big\{ \sigma '' ( \< \bu^t , \bz \> + \< \bv^t , \br \>)  - \sigma '' ( \< \obu^t , \bz \> + \os^t G)  \big\} \Big] \Big\vert \,  .
\end{aligned}
\]
These four quantities can be bounded as previously:
\begin{equation}\label{eq:stp4_b1}
\begin{aligned}
({\rm I}) \leq &~ \big\vert \| \bv^t \|_2 - \os^t \big\vert \| a^t \|_{\infty} \| \sigma '' \|_{\infty} \|\hg (\cdot ; \rho_t) \|_{L^2} 
\leq  K(1+T) \delta (t) \, , \\
({\rm II}) \leq &~ \big\vert a^t - \oa^t \big\vert \cdot| \os^t | \cdot \| \sigma '' \|_{\infty} \|\hg (\cdot ; \rho_t) \|_{L^2} 
\leq  K(1+T) \delta (t) \, , \\
({\rm III}) \leq &~ \| \oa^t \|_{\infty} \cdot | \os^t | \cdot \| \sigma '' \|_{\infty} \|\hg (\cdot ; \rho_t)  -\hg (\cdot ; \orho_t)  \|_{L^2} \\
    \leq &~ K(1+T)^3 \Delta (t) + K e^{KT^2} \sqrt{\frac{\log d}{d}}\, , \\
({\rm IV}) \leq &~ \| \oa^t \|_{\infty} \cdot | \os^t | \cdot \| \sigma '' \|_{\infty} \|\hg (\cdot ; \orho_t)  \|_{L^2} \\
&~ \phantom{AAAAAAA} \times\E_{\bz,\br,G} \Big[ \big\{ \sigma '' ( \< \bu^t , \bz \> + \< \bv^t , \br \>)  - \sigma '' ( \< \obu^t , \bz \> + \os^t G)  \big\}^2 \Big]^{1/2}\\
\leq &~ K(1+T)^2 \delta (t) + K (1+T)^2 W_1 ( \< \bv^t , \br \> , \| \bv^t \|_2 G) \, .
\end{aligned}
\end{equation}
For the last term, we use Eq.~\eqref{eq:stp4_Mt_def} and that $| \xi_i | \leq |v_i^t|$:
\begin{equation}\label{eq:stp4_b2}
\begin{aligned}
&~| M_t | \\
\leq&~ \frac{\|a^t\|_{\infty} }{\| \bv^t \|_2}  \sum_{i \in [d-P]}  (v_i^t)^2 \\
&~\phantom{AAAA} \times \Big\vert \E_{\bz,\br} \Big[ \hg (\bz ; \rho_t ) \big\{ \sigma '' ( \< \bu^t , \bz \> + \< \bv^t_{-i} , \br_{-i} \> + \xi_i )  - \sigma '' ( \< \bu^t , \bz \> + \< \bv^t , \br\> )  \big\} \Big]  \Big\vert  \\
\leq&~ 2 \frac{\|a^t\|_{\infty} }{\| \bv^t \|_2}  \| \hg (\cdot ; \rho_t ) \|_{L^2} \|\sigma '''\|_{\infty} \sum_{i \in [d-P]}  |v_i^t|^3 \\
\leq &~ K \|a^t\|_{\infty}  \|\bv^t\|_{2} \max_{i \in [d-P]} | v_i^t | \leq K(1+T) [ \| \bv^0 \|_2  + K (1+T) ] \max_{i \in [d-P]} | v_i^t |\, .
\end{aligned}
\end{equation}
Combining Eqs.~\eqref{eq:stp4_b1} and \eqref{eq:stp4_b2} and applying Cauchy-Schwarz inequality yield
\begin{equation}
\begin{aligned}
 \frac{1}{2 }\Big\vert\frac{\de}{\de t } \int ( \| \bv^t \|_2 - \os^t )^2 \de \gamma_t  \Big\vert
 \leq&~  \Delta(t)^2 + \int \Big\vert \frac{\de}{\de t } \{ \| \bv^t \|_2 - \os^t \} \Big\vert^2 \de \gamma_t \\
\leq &~K (1+T)^6 \Delta(t)^2 + Ke^{KT^2} \frac{\log d}{d} \, ,
\end{aligned}
\end{equation}
where we used that
\[
\int \|\bv^0 \|_2^2 \max_{i \in [d-P]} | v_i^t |^2 \de \rho_t \leq \Big( \int \|\bv^0 \|_2^4 \de \rho_t \Big)^{1/2} \Big( \int  \max_{i \in [d-P]} | v_i^t |^4 \de \rho_t \Big)^{1/2}  \leq K e^{KT^2} \frac{\log d}{d} \, .
\]
We deduce that for $t \in [0,T]$,
\begin{equation}\label{eq:Gronwall_s}
\int \big\vert \| \bv^t \|_2 - \os^t \big\vert^2 \de \gamma_t  \leq Ke^{KT^2} \frac{\log d}{d} + K \frac{P}{d}+ K(1+T)^6 \int_0^T \Delta(t)^2 \de t \, , 
\end{equation}
where we used that $\sqrt{d} \cdot \bv^0 \sim \mu_w^{\otimes (d-P)}$ and $\os^0 =  \E_{\mu_w} [ W^2 ]^{1/2}$ at initialization, and
\[
\begin{aligned}
\int  \big\vert \| \bv^0 \|_2 - \os^0 \big\vert^2 \de \gamma_0 \leq&~  \E_{\bv^0} \Big[ \big\vert \| \bv^0 \|_2^2 - (\os^0)^2 \big\vert \Big]\\
 \leq&~ \E_{\bv^0} \Big[ \big\{ \| \bv^0 \|_2^2 - \E_{\bv^0} [ \| \bv^0 \|_2^2 ] \big\}^2 \Big]^{1/2}  + \Big\vert \os^0 - \E_{\bv^0} [ \| \bv^0 \|_2^2 ]\Big\vert 
 \leq K \frac{P}{d} \, .
\end{aligned}
\]

\noindent
{\bf Step 5. Concluding the proof.}

We can now combine inequalities \eqref{eq:Gronwall_a}, \eqref{eq:Gronwall_u} and \eqref{eq:Gronwall_u} to get
\[
\begin{aligned}
\Delta (t)^2 =&~ \int \big\{ \big\vert a^t - \oa^t \big\vert \vee \| \bu^t - \obu^t \|_2 \vee \big\vert \| \bv^t \|_2 - \os^t \big\vert \big\}^2 \de \gamma_t \\
\leq &~ Ke^{KT^2} \frac{\log d}{d} + K \frac{P}{d}+ K(1+T)^6 \int_0^T \Delta(t)^2 \de t \, ,
\end{aligned}
\]
which concludes the proof. 

\subsection{Auxiliary lemmas} 
\label{app:auxiliary_DF_PDE}

\begin{lemma}\label{lem:bound_evoluation_a} Denote the residuals of the dynamics $\hg ( \bz ; \rho_t ) = h_* ( \bz) - \hf_{\NN} ( \bz ; \rho_t)$ and $\hg ( \bz ; \orho_t ) = h_* ( \bz) - \hf_{\NN} ( \bz ; \orho_t)$. By the properties of gradient flows, the risks 
\begin{equation}\label{eq:GF_bound_risk}
\| \hg ( \cdot ; \rho_t ) \|_{L^2} \leq \| \hg ( \cdot ; \rho_0 ) \|_{L^2} \leq K\, , \qquad \| \hg ( \cdot ; \orho_t ) \|_{L^2} \leq \| \hg ( \cdot ; \orho_0 ) \|_{L^2} \leq K \, .
\end{equation}
In particular, this implies
\begin{align}
\sup_{t \in [0,T]} \| a^t \|_{\infty} \vee \| \oa^t \|_{\infty}  \leq&~ K (1 +T)\, , \label{eq:bound_evol_a}\\
 \sup_{t \in [0,T]} \| \hg ( \cdot ; \rho_t ) \|_{\infty} \vee \| \hg ( \cdot ; \orho_t ) \|_{\infty} \leq&~ K(1+T) \label{eq:bound_evol_hR}\, .
\end{align}
\end{lemma}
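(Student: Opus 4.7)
The plan is to get the $L^2$ risk bound from the Wasserstein gradient-flow structure, then bootstrap a particle-wise ODE argument for $a^t$ (and $\oa^t$), and finally use this to pass from an $L^2$ to an $L^\infty$ bound on the residuals. The same three steps handle both \eqref{eq:MF-PDE} and \eqref{eq:DF-PDE}, since the two PDEs are structurally identical (a Wasserstein gradient flow of a regularized quadratic risk), so I will write the argument for $\rho_t$ and remark on the effective version at the end.

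\textbf{Step 1 (energy monotonicity and the $L^2$ bound).} Using the gradient-flow identity already recorded in Section~\ref{app:proof-MF-approx},
$$\frac{\de}{\de t}E(\rho_t)=-\int\|\nabla_\btheta\psi(\btheta;\rho_t)\|_{\bH(t)}^2\,\rho_t(\de\btheta)\le 0,$$
so $R(\rho_t)\le 2E(\rho_t)\le 2E(\rho_0)$. At initialization, $|\hf_{\NN}(\bx;\rho_0)|\le\|\sigma\|_\infty\int|a^0|\,\rho_0\le K^2$ by $\rA 0$ and the boundedness of $a^0$ in $\rA 2$; since $|y|\le K$ and $\E[\eps]=0$ imply $\|f_*\|_\infty\le K$, one has $\|\hg(\cdot;\rho_0)\|_{L^2}\le K$. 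The regularization contribution at $t=0$ is also $O(1)$: $|a^0|\le K$ and $\sqrt{d}\,\bw^0\sim\mu_w^{\otimes d}$ with $\mu_w$ being $K^2$-sub-Gaussian give $\E_{\rho_0}[\btheta^\sT\bLambda\btheta]\le K$. Combined, $\|\hg(\cdot;\rho_t)\|_{L^2}^2 = R(\rho_t)\le 2E(\rho_0)\le K^2$, which is \eqref{eq:GF_bound_risk}.

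\textbf{Step 2 (uniform $L^\infty$ bound on $a^t$).} Along the characteristic ODE \eqref{eq:evolution_MF},
$$\Big|\frac{\de a^t}{\de t}\Big|\le |\xi^a(t)|\cdot\|\hg(\cdot;\rho_t)\|_{L^2}\cdot\|\sigma\|_\infty+|\xi^a(t)|\lambda^a|a^t|\le K^3+K^2|a^t|,$$
by Cauchy--Schwarz, Step~1, and $\rA 3'$. Gronwall gives $|a^t|\le(|a^0|+K T)e^{K^2 T}\le K(1+T)$ after absorbing constants (or more cleanly: drop the dissipative $-\xi^a\lambda^a a^t$, which only strengthens the upper bound, and integrate the remaining constant $K^3$ in time). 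Crucially the RHS is bounded \emph{independently of the particle}, so the estimate is uniform over $\supp(\rho_0)$ and hence over $\supp(\rho_t)$, yielding \eqref{eq:bound_evol_a}.

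\textbf{Step 3 (from $L^\infty$ on $a^t$ to $L^\infty$ on $\hg$, and the DF counterpart).} For any $\bx$,
$$|\hf_{\NN}(\bx;\rho_t)|\le\|\sigma\|_\infty\int|a^t|\,\rho_t(\de\btheta^t)\le K\cdot K(1+T)\le K(1+T),$$
which combined with $\|f_*\|_\infty\le K$ yields \eqref{eq:bound_evol_hR}. The effective dynamics is handled verbatim: $\orho_0$ has $|\oa^0|\le K$ and finite second moment ($\os^0=m_2^w\le K$) so $E(\orho_0)=O(1)$, the evolution equations \eqref{eq:evolution_effective_MF} have the same shape as \eqref{eq:evolution_MF} with $\<\bv^t,\br\>$ replaced by $\os^t G$, and the same bounds on $\sigma,\xi,\lambda$ apply. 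The only subtlety to check is that the effective neural network at initialization is bounded: indeed, since $\obu^0=\bzero$, $|\hf_{\NN}(\bz;\orho_0)|\le\|\sigma\|_\infty\int|\oa^0|\,\orho_0\le K^2$.

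\textbf{Anticipated difficulty.} There is no serious obstruction; the only care needed is to use Step~1 \emph{before} Step~2 (since the $L^\infty$ bound on $a^t$ relies on the $L^2$ bound on $\hg$), and then Step~3 \emph{after} Step~2 (since the $L^\infty$ bound on $\hg$ uses the uniform-in-particle bound on $a^t$). The inequality $\int|a^t|\,\rho_t\le\|a^t\|_{L^\infty(\rho_t)}$ is what lets us convert a uniform-in-particle bound into a pointwise-in-$\bx$ bound on $\hf_{\NN}$, so the key observation is that the ODE for $a^t$ has a RHS controlled by data independent of the particle.
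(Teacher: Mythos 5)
Your proof follows essentially the same route as the paper's: energy dissipation for the $L^2$ bound, a particle-wise ODE argument for $\|a^t\|_\infty$, and then the trivial $L^\infty$ bound on the residual. Two small remarks. First, your Step~1 is actually slightly more careful than the paper's: the paper writes $\tfrac{\de}{\de t}R(\rho_t)\le 0$, which presumes $\lambda^a=\lambda^w=0$ (as the paper adopts implicitly at the start of the section), whereas your route through $R\le 2E\le 2E(\rho_0)$ handles the regularized case; note however that this gives $\|\hg(\cdot;\rho_t)\|_{L^2}\le K$ directly rather than the literal chain $\|\hg(\cdot;\rho_t)\|_{L^2}\le\|\hg(\cdot;\rho_0)\|_{L^2}$ stated in the lemma, which is only exactly true when $\lambda=0$ — harmless here since only the final bound $\le K$ is used downstream. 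Second, in Step~2 the Gronwall display $(|a^0|+KT)e^{K^2T}\le K(1+T)$ is false as written (the exponential in $T$ cannot be absorbed into a constant); your parenthetical alternative — drop the dissipative $-\xi^a\lambda^a a^t$ term and integrate the remaining $O(1)$ bound to get $|a^t|\le|a^0|+Kt\le K(1+T)$ — is the correct argument and is precisely what the paper does.
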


\begin{proof}[Proof of Lemma \ref{lem:bound_evoluation_a}]
By definition $(\rho_t)_{t \geq 0}$ and $(\orho_t)_{t \geq 0}$ are the solutions of a gradient flow:
\[
\frac{\de}{\de t} R ( \rho_t) = - \int \| \psi ( \btheta ; \rho_t ) \|_2^2 \rho_t (\de \btheta ) \leq 0 \, , 
\]
and therefore
\[
R ( \rho_t) = \| \hg ( \cdot ; \rho_t ) \|_{L^2}^2 \leq \| \hg ( \cdot ; \rho_0 ) \|_{L^2}^2 \leq 2\| f_* \|_{\infty}^2 + 2\| a^0 \|_{\infty}^2 \| \sigma \|_{\infty}^2 \leq K \, ,
\]
and similarly for $\| \hg ( \cdot ; \orho_t ) \|_{L^2}$. 

Furthermore, by Jensen inequality,
\[
\Big\vert \frac{\de}{\de t} a^t \Big\vert = \Big\vert \E_{\bz,\br} \big[ \hg(\bz ; \rho_t) \sigma  ( \< \bu^t , \bz \> + \< \bv^t , \br \>) \big]\Big\vert \leq \| \sigma \|_{\infty} \| \hg ( \cdot ; \rho_t ) \|_{L^2} \leq K \, .
\]
We deduce that 
\[
| a^t | \leq |a^0| + \int_0^t \Big\vert \frac{\de}{\de s} a^s \Big\vert \de s \leq K(1+T) \, .
\]
A similar result holds for $\oa^t$. Finally,
\[
\| \hg ( \cdot ; \rho_t ) \|_{\infty} \leq \| a^t \|_{\infty} \| \sigma \|_{\infty} + \| f_* \|_{\infty} \leq K(1+T) \, ,
\]
which concludes the proof.
\end{proof}

\begin{lemma}[Berry-Esseen bound in Wasserstein metric]\label{lem:Berry-Esseen}
Let $(X_i)_{i \geq 1}$ be independent random variables with mean zero. Denote $v_n = \sum_{i \in [n]} \E[ X_i^2]$ and $S = v_n^{-1/2} \sum_{i \in [n]} X_i$. Then
\begin{equation}
W_1 ( S , G ) \leq \frac{3}{v_n^{3/2}}  \sum_{i \in [n]} \E [ | X_i|^3]\, ,
\end{equation}
where we denoted $G \sim \normal (0,1)$.
\end{lemma}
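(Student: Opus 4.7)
The plan is to apply Stein's method via the Kantorovich--Rubinstein dual representation
\[
W_1(S, G) = \sup_{\|h\|_{\mathrm{Lip}} \leq 1} \bigl| \E[h(S)] - \E[h(G)] \bigr|.
\]
For each $1$-Lipschitz $h$, let $f_h$ denote the solution of Stein's equation for the standard normal,
\[
f_h'(x) - x f_h(x) = h(x) - \E[h(G)].
\]
I will use the classical regularity bound $\|f_h''\|_\infty \leq 2\|h'\|_\infty \leq 2$, which follows from the explicit representation $f_h(x) = e^{x^2/2}\int_{-\infty}^x [h(y) - \E h(G)] e^{-y^2/2} \, dy$. Thus it suffices to bound $|\E[f_h'(S) - S f_h(S)]|$.

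For the cross term, I will use a leave-one-out decomposition. Write $S = T_i + X_i/\sqrt{v_n}$ with $T_i := v_n^{-1/2}\sum_{j \ne i} X_j$, so that $T_i$ is independent of $X_i$. Since $\E[X_i] = 0$, independence gives
\[
\E[S f_h(S)] = \sum_{i=1}^n \frac{1}{\sqrt{v_n}} \E\bigl[ X_i (f_h(S) - f_h(T_i)) \bigr].
\]
A first-order Taylor expansion yields $f_h(S) - f_h(T_i) = f_h'(T_i) X_i/\sqrt{v_n} + R_i$ with $|R_i| \leq \tfrac{1}{2}\|f_h''\|_\infty X_i^2/v_n$. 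Multiplying by $X_i$, taking expectations, and using $\E[X_i^2] \cdot \E[f_h'(T_i)] = \E[X_i^2 f_h'(T_i)]$ (by independence), this gives
\[
\E[S f_h(S)] = \sum_i \frac{\E[X_i^2]}{v_n} \E[f_h'(T_i)] + \varepsilon_1, \qquad |\varepsilon_1| \leq \frac{\|f_h''\|_\infty}{2 v_n^{3/2}} \sum_i \E[|X_i|^3].
\]

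For the term $\E[f_h'(S)]$, since $\sum_i \E[X_i^2]/v_n = 1$, I write
\[
\E[f_h'(S)] - \sum_i \frac{\E[X_i^2]}{v_n} \E[f_h'(T_i)] = \sum_i \frac{\E[X_i^2]}{v_n} \E[f_h'(S) - f_h'(T_i)],
\]
and use $|f_h'(S) - f_h'(T_i)| \leq \|f_h''\|_\infty |X_i|/\sqrt{v_n}$ together with H\"older's inequality $\E[X_i^2] \cdot \E[|X_i|] \leq \E[|X_i|^3]$ (i.e.\ $\E[X_i^2]\E[|X_i|] \leq \E[|X_i|^3]^{2/3}\E[|X_i|^3]^{1/3}$ via the appropriate application of H\"older, or directly by $\E[X_i^2] \leq \E[|X_i|^3]^{2/3}$ and $\E[|X_i|] \leq \E[|X_i|^3]^{1/3}$). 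Combining this with the bound on $\varepsilon_1$ and $\|f_h''\|_\infty \leq 2$ gives
\[
\bigl|\E[h(S)] - \E[h(G)]\bigr| \leq 2 \cdot \frac{1}{v_n^{3/2}} \sum_i \E[|X_i|^3] + 2 \cdot \frac{1}{2 v_n^{3/2}} \sum_i \E[|X_i|^3] = \frac{3}{v_n^{3/2}} \sum_i \E[|X_i|^3],
\]
and taking the supremum over $1$-Lipschitz $h$ yields the claim. The only substantive input beyond elementary Taylor expansion is the Stein regularity bound $\|f_h''\|_\infty \leq 2$; everything else is direct bookkeeping.
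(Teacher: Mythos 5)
Your proposal is correct and is essentially the same argument the paper gives: Stein's equation with the regularity bound $\|f_h''\|_\infty \leq 2$ for $1$-Lipschitz $h$, the leave-one-out decomposition $S = T_i + X_i/\sqrt{v_n}$, a first-order Taylor expansion to isolate $\sum_i \E[X_i^2]\E[f_h'(T_i)]/v_n$ with remainder $O(v_n^{-3/2}\sum_i\E|X_i|^3)$, and the normalization $\sum_i\E[X_i^2]/v_n = 1$ combined with $\E[X_i^2]\E|X_i| \leq \E|X_i|^3$ to control the replacement of $T_i$ by $S$ in the derivative. The paper's write-up is slightly more terse (it states the Stein regularity bounds and then says "the result follows by Stein's lemma" rather than spelling out the Kantorovich--Rubinstein dual), but the decomposition, the two error terms with coefficients $1$ and $2$, and the final constant $3$ match exactly.
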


\begin{proof}[Proof of Lemma \ref{lem:Berry-Esseen}]
This is a simple application of Stein's method. Consider $f $ twice differentiable such that $\| f \|_{\infty} \leq 1 $, $\| f ' \|_{\infty} \leq \sqrt{2/\pi}$ and $\| f '' \|_{\infty} \leq 2$. Introduce $S_i = S - v_n^{-1/2} X_i  = v_n^{-1/2} \sum_{j \neq i} X_j$. By expanding, we get
\begin{equation}\label{eq:BE_W1}
\begin{aligned}
\E [ S f(S)] = &~ v_n^{-1/2} \sum_{i \in [n]} \E [ X_i f(S)] \\
=&~ v_n^{-1/2} \sum_{i \in [n]}  \E [ X_i (S - S_i) f' (S_i)] + M  \\
=&~ v_n^{-1} \sum_{i \in [n]}  \E [ X_i^2] \E[ f' (S_i)] + M \, ,
\end{aligned}
\end{equation}
where, by Taylor's theorem,
\begin{equation}\label{eq:BE_W2}
\begin{aligned}
|M| = &~ \Big\vert v_n^{-1/2} \sum_{i \in [n]} \E \big[ X_i \{ f(S) - f(S_i) - (S-S_i) f'(S_i) \}\big] \Big\vert \\
\leq&~ \frac{1}{2} v_n^{-1/2} \sum_{i \in [n]}  \E \big[ | X_i (S-S_i)^2| \| f''\|_{\infty} \big]\\
\leq &~ v_n^{-3/2} \sum_{i \in [n]} \E [ |X_i|^3] \, .
\end{aligned}
\end{equation}
Finally, note that 
\begin{equation}\label{eq:BE_W3}
\begin{aligned}
\Big\vert v_n^{-1} \sum_{i \in [n]} \E[ X_i^2] \E [ f' (S_i)] - \E [ f'(S)] \Big\vert \leq&~ v_n^{-1} \| f'' \|_{\infty}  \sum_{i \in [n]} \E [ X_i^2] \E [| S - S_i |] \\
\leq&~ 2 v_n^{-3/2} \sum_{i\in [n]} \E [ X_i^2] \E [ |X_i| ]\\
\leq &~ 2 v_n^{-3/2} \sum_{i \in [n]} \E [ | X_i|^3] \, ,
\end{aligned}
\end{equation}
where we used Jensen's inequality in the last line. Combining bounds Eqs.~\eqref{eq:BE_W2} and \eqref{eq:BE_W3} in the identity \eqref{eq:BE_W1} yields
\[
\Big\vert \E [ S f(S) ] - \E [ f' (S) ] \Big\vert \leq 3 v_n^{-3/2} \sum_{i \in [n]} \E [ | X_i|^3] \, .
\]
The result follows by Stein's lemma.
\end{proof}

\begin{lemma}\label{lem:exp_max_subG}
Let $d \geq 2$ be an integer. Consider $\{X_i\}_{i \in [d]}$ iid $(\tau^2/d)$-sub-Gaussian random variables with $0$ mean. Then for any $q\in \naturals$, there exists a universal constant $C_q>0$ such that 
\[
\E \Big[ \max_{i \in [d]} |X_i |^q \Big] \leq C_q \left( \tau^2 \frac{\log d}{d} \right)^{q/2}\, .
\]
\end{lemma}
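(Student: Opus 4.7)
The plan is to combine a standard sub-Gaussian tail bound with a union bound, and then integrate the resulting tail estimate against the layer-cake formula $\E[Y^q] = \int_0^\infty q t^{q-1} \P(Y \geq t)\, dt$ for $Y = \max_{i \in [d]} |X_i|$. Since each $X_i$ is $(\tau^2/d)$-sub-Gaussian with zero mean, the basic one-sided tail bound gives $\P(|X_i| \geq t) \leq 2 \exp(-d t^2 / (2 \tau^2))$, and the union bound over $d$ coordinates yields
\[
\P\!\left( Y \geq t \right) \leq \min\!\left( 1, \, 2 d \exp(-d t^2/(2\tau^2)) \right).
\]

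The natural cutoff is the threshold where the tail bound becomes nontrivial. I would pick $t_0 = \sqrt{4\tau^2 \log(d)/d}$, so that $2 d \exp(-d t_0^2/(2 \tau^2)) = 2/d$. Splitting the layer-cake integral at $t_0$, the ``bulk'' contribution is
\[
\int_0^{t_0} q t^{q-1} \P(Y \geq t)\, dt \leq \int_0^{t_0} q t^{q-1}\, dt = t_0^q = \left( \frac{4\tau^2 \log d}{d} \right)^{q/2},
\]
which already has the advertised order.

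For the tail contribution, change variables to $s = t \sqrt{d/(2\tau^2)}$, so $s_0 = \sqrt{2 \log d}$ and
\[
\int_{t_0}^\infty q t^{q-1}\, 2d \exp(-d t^2/(2\tau^2))\, dt
= 2 d \cdot q \left( \frac{2\tau^2}{d}\right)^{q/2} \int_{\sqrt{2\log d}}^\infty s^{q-1} e^{-s^2}\, ds.
\]
For $s \geq \sqrt{2\log d}$ we have $e^{-s^2} = e^{-s^2/2} e^{-s^2/2} \leq e^{-s^2/2}/d$, so the remaining integral is at most $d^{-1} \int_0^\infty s^{q-1} e^{-s^2/2}\, ds = c_q / d$. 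This yields a bound of order $\tau^q / d^{q/2}$, which is dominated by the bulk term (the $(\log d)^{q/2}$ factor is exactly what makes the first piece dominate). Summing both pieces and taking $C_q$ large enough gives the claim. The only mild subtlety is keeping track of constants so that a single universal $C_q$ (depending only on $q$) absorbs both the coefficient $4^{q/2}$ from the bulk and the $c_q$ from the Gaussian moment integral; nothing here requires the specific sub-Gaussian coupling to the earlier dynamics analysis.
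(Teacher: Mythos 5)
Your proof is correct and takes essentially the same approach as the paper: a sub-Gaussian tail bound plus a union bound, then splitting the layer-cake integral at a cutoff of order $\sqrt{\tau^2\log d/d}$, with the bulk term $t_0^q$ giving the stated order and the tail being lower-order. The only (cosmetic) difference is that you integrate $\P(Y\ge t)$ against $qt^{q-1}\,dt$ and substitute $s = t\sqrt{d/(2\tau^2)}$ to land on a Gaussian-moment integral, whereas the paper writes $\E[Y^q]=\int_0^\infty\P(Y^q>t)\,dt$ and bounds the resulting $\int e^{-c\,d\,t^{2/q}/\tau^2}\,dt$ directly via a constant $\kappa_q$; your version is arguably a touch cleaner but the skeleton and the cutoff are the same.
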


\begin{proof}[Proof of Lemma \ref{lem:exp_max_subG}]
By sub-Gaussianity, there exists a universal constant $c>0$ such that
\[
\begin{aligned}
\P \Big( \max_{i \in [d]} |X_i |^q  >t \Big) = 1 - \{1 - \P ( |X_i | >t^{1/q} )\}^d \leq&~ 1 - \Big\{ 1 - 2 e^{ - c d t^{2/q} / \tau^2 } \Big\}^d\\
\leq &~ 1 \wedge \Big( 2d e^{ - cdt^{2/q} / \tau^2 } \Big) \, .
\end{aligned}
\]
Consider $t_c = \Big( \tau^2 \frac{\log(2d)}{\kappa_q cd} \Big)^{q/2}$ with $\kappa_q = 2^{2(q-2)/q^2 \vee 0 }$, such that $2d e^{ - cd\kappa_q t_c^{2/q} / \tau^2 } =1$. Then, we have the following upper bound: 
\[
\begin{aligned}
\E \Big[ \max_{i \in [d]} |X_i |^q \Big]  = &~ \int_0^\infty \P \Big( \max_{i \in [d]} |X_i |^q  >t \Big) \de t \\
\leq &~ t_c + \int_{t_c}^\infty  2d e^{ - cdt^{2/q} / \tau^2} \de t \\
\leq&~ t_c +2d e^{ - cd \kappa_q t_c^{2/q} / \tau^2} \int_{0}^\infty   e^{ - cd \kappa_q t^{2/q} / \tau^2} \de t  
= \left( \tau^2 \frac{\log(2d)}{\kappa_q cd}\right)^{q/2} + c_q \left( \frac{\tau^2}{\kappa_q cd} \right)^{q/2}\, ,
\end{aligned}
\]
which concludes the proof.
\end{proof}

\clearpage

\section{Strong SGD-learnability in the discrete-time regime}\label{app:strong-discrete}

In this appendix, we define strong SGD-learnability in the discrete-time regime, i.e., for large batch size $b$ and large $\eta$. We keep the same assumptions $\rA0$-$\rA2$, and replace Assumption $\rA3$ by 
\begin{itemize}
\item[$\rD 3.$] \textit{(Boundedness of hyperparameters)} We have $\eta_k^a, \eta_k^w \leq  K$ and $\lambda^a , \lambda^w \leq K$.
\end{itemize}

While the continuous-time regime requires step size $\eta$ to be small enough compared to $n/b$, the discrete-time regime requires the batch size $b$ to be big enough compared to $cn$ for $c \ll 1$ (recall $b\leq n$ by one-pass assumption) in the discrete regime. 

\begin{definition}[Strong SGD-learnability in $O(d)$-scaling (discrete time)]
We say that a function $h_* : \{ - 1, +1 \}^P \to \R$ is \emph{strongly $O(d)$-SGD-learnable} if the following hold for some $C (\cdot , h_*), T(\cdot,h_*):\R_{>0} \to \R_{>0}$. For any $\eps >0$, $d \geq C ( \eps, h_*)   $, $n \geq C ( \eps, h_*)   d$ and $e^d \geq N \geq C (\eps , h_*)$, there exists hyperparameters $(\sigma, b, \lambda^a, \lambda^w, \{ \eta_k^a , \eta_k^w   \}_{k \in [0,k_0]})$ and initialization $\rho_0$ satisfying $\rA0$-$\rA2, \rD3$ and $k_0 = n/b    \leq T ( \eps , h_*)$ such that for any $\cI \subseteq [d], | \cI | = P$ and target function $f_*(\bx) = h_* (\bx_{\cI})$, $k_0$ steps of batch stochastic gradient descent \eqref{eq:bSGD} achieves test error $\eps$ with probability at least $9/10$.
\end{definition}

Again, conditions $\rA0$-$\rA2,\rD3$ guarantee that as long as $d,n,N$ are taken sufficiently large, there exist a discrete mean-field dynamics that well-approximates batch-SGD up to a constant number of steps that depends on $\eps,h_*$.

\subsection{Discrete time mean-field and dimension-free dynamics}

We first give the discrete time mean-field dynamics to which batch-SGD converges. Recall that when $N \to \infty$ and $\eta \to 0$, the dynamics converge to the continuous \eqref{eq:MF-PDE}. Here instead, we fix the step sizes and consider $N,b \to \infty $, and get the following discrete mean-field dynamics $(\rho_k)_{k \geq 0}$  (with $\rho_k \in \cP (\R^{d+1})$) described by the initialization $(a^0,\bw^0) \sim \rho_0$ and the recurrence relation: $(a^{k+1},\bw^{k+1}) \sim \rho_{k+1}$ the distribution of the updated weights 
\begin{equation}\label{eq:d-MF-PDE}\tag{d-MF-PDE}
    \begin{aligned}
    a^{k+1} =&~ (1 - \eta_k^a \lambda^a) a^k + \eta_k^a \E_{\bx} \big[ \big\{ f_* (\bx) - \hf_{\NN} (\bx ; \rho_k) \big\} \sigma ( \< \bx , \bw^k \> ) \big] \, ,\\
    \bw^{k+1} =&~ (1 - \eta_k^w \lambda^w) \bw^k + \eta_k^w a^k \E_{\bx} \big[ \big\{ f_* (\bx) - \hf_{\NN} (\bx ; \rho_k) \big\} \sigma' ( \< \bx , \bw^k \> ) \bx \big] \, ,
    \end{aligned}
\end{equation}
where $(a^k, \bw^k) \sim \rho_k$.

Similarly to the continuous regime, the discrete dynamics simplify when $d \to \infty$ with $P$ fixed, to the following discrete dimension-free dynamics $(\orho_k)_{k \geq0}$ (with $\orho_k \in \cP( \R^{P+2})$) defined by the initialization $(\oa^0 , \obu^0 , \os^0) \sim \orho_0$ (with $\oa^0 \sim \mu_a$, $\obu^0 = \bzero$ and $\os^0 = m_2^w$) and the recurrence relation
\begin{equation}\label{eq:d-DF-PDE}\tag{d-DF-PDE}
    \begin{aligned}
    \oa^{k+1} =&~ (1 - \eta_k^a \lambda^a) \oa^k + \eta_k^a \E_{\bz,G} \big[ \big\{ h_* (\bz) - \hf_{\NN} (\bz ; \orho_k) \big\} \sigma ( \< \bz , \obu^k \> + \os^k G) \big] \, ,\\
    \obu^{k+1} =&~ (1 - \eta_k^w \lambda^w) \obu^k + \eta_k^w \oa^k \E_{\bz,G} \big[ \big\{ h_* (\bz) - \hf_{\NN} (\bz ; \orho_k) \big\} \sigma' ( \< \bz , \obu^k \>  + \os^k G) \obu^k \big] \, ,\\
    \os^{k+1} =&~ (1 - \eta_k^w \lambda^w) \os^k + \eta_k^w \oa^k \E_{\bz,G} \big[ \big\{ h_* (\bz) - \hf_{\NN} (\bz ; \orho_k) \big\} \sigma' ( \< \bz , \obu^k \>  + \os^k G) G \big]\, .
    \end{aligned}
\end{equation}

We have the new non-asymptotic bound between the \eqref{eq:bSGD} and \eqref{eq:d-DF-PDE} dynamics, analogous to Theorem \ref{thm:bSGD-to-DF}, but with a worse dependency on the number of iterations.

\begin{theorem}\label{thm:discrete-bSGD-to-DF}
Assume conditions $\rA0$-$\rA2$,$\rD3$ hold, and let $k_0 \geq 0$. There exists a constant $K$ depending only on the constants in $\rA 0$-$\rA 2$,$\rD3$ (in particular, independent of $d,P,k_0$), such that 
\begin{equation}\label{eq:d-bSGD_DF_comp}
\begin{aligned}
&~\sup_{k = 0 , \ldots , k_0 } \big\Vert \hat f_{\NN} (\cdot; \bTheta^k ) - \hat f_{\NN} (\cdot; \orho_{k} ) \big\Vert_{L^2}\\
\leq&~ K e^{e^{K k_0^2}} \left\{ \sqrt{\frac{P + \log(d)}{d}} + \sqrt{\frac{\log N}{N}} +  \sqrt{\frac{d+\log N}{b}}  \right\} \, ,
\end{aligned}
\end{equation}
with probability at least $1 - 1/N$.
\end{theorem}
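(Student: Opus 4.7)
The plan is to follow the two-step strategy used for Theorem \ref{thm:bSGD-to-DF}, with \eqref{eq:d-MF-PDE} playing the role of \eqref{eq:MF-PDE}. Writing
\[
\hat f_{\NN}(\cdot;\bTheta^k) - \hat f_{\NN}(\cdot;\orho_k) = \bigl[\hat f_{\NN}(\cdot;\bTheta^k) - \hat f_{\NN}(\cdot;\rho_k)\bigr] + \bigl[\hat f_{\NN}(\cdot;\rho_k) - \hat f_{\NN}(\cdot;\orho_k)\bigr],
\]
I would bound the two terms separately and conclude by the triangle inequality. The simplification compared to the continuous case is that no time-discretization error appears (we never send $\eta\to 0$), so only finite-width, finite-batch, and finite-dimension errors need to be tracked.

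For the first bracketed term (bSGD vs.\ \eqref{eq:d-MF-PDE}) I would adapt the concentration argument behind Proposition \ref{prop:general-PDE-SGD-bound} to the discrete, large-$\eta$ regime. Couple the $N$ bSGD particles $\{\btheta_j^k\}$ with $N$ independent trajectories $\{\tilde\btheta_j^k\}$ of \eqref{eq:d-MF-PDE} initialized at the same samples $\btheta_j^0\sim\rho_0$. At each step the discrepancy picks up two contributions: a batch-sampling term of order $\sqrt{(d+\log N)/b}$ (sub-Gaussian concentration, using boundedness of $\sigma,\sigma'$ from $\rA0$ and sub-Gaussianity of $\bx$) and a propagation-of-chaos term of order $\sqrt{\log N/N}$ coming from replacing $\hat f_{\NN}(\cdot;\hat\rho^{(N)}_k)$ by $\hat f_{\NN}(\cdot;\rho_k)$ inside the update. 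A union bound over $k\le k_0$ and $j\le N$, together with a discrete-Gronwall recursion for the particle discrepancies, closes the loop.

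For the second bracketed term (\eqref{eq:d-MF-PDE} vs.\ \eqref{eq:d-DF-PDE}) I would repeat the coupling of Section \ref{app:proof-MF-DF-comp} at the discrete level: initialize with $a^0=\oa^0$, split $\bw^0=(\bu^0,\bv^0)$ into signal and noise parts, and track $\Delta_k = \bigl(\int (|a^k-\oa^k|\vee\|\bu^k-\obu^k\|_2\vee\bigl|\|\bv^k\|_2-\os^k\bigr|)^2\,\de\gamma_k\bigr)^{1/2}$. The Berry--Esseen step of Lemma \ref{lem:Berry-Esseen} gives $W_1(\langle\bv^k,\br\rangle,\|\bv^k\|_2 G) \lesssim \max_i|v_i^k|$, and a discrete analogue of Lemma \ref{lem:bound_thirdMoment_evolution}, obtained by expanding $v_i^{k+1}-v_i^k$ via a Taylor step of $\sigma'$ and using $\|\sigma''\|_\infty\le K$, shows $\max_i|v_i^k|$ stays of order $\sqrt{\log d/d}$ up to a $k_0$-dependent factor. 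The four-step decomposition behind Lemma \ref{lem:prop_chaos_bounds} then carries over verbatim, each time-derivative inequality being replaced by its one-step increment analogue, and yields a recursion of the form $\Delta_{k+1}\le (1+\eta L_k)\Delta_k + \eps_k$.

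The main obstacle is the quality of the resulting discrete-Gronwall constant. In the continuous proof one integrates $\dot\Delta(t)\le K(1+t)^6\Delta(t)+\text{(noise)}$ to obtain $e^{KT^7}$. In the discrete setting with $\eta=\Theta(1)$ the multiplicative factors $(1+\eta L_k)$ cannot be absorbed cheaply: even with the polynomial one-step bounds $|a^k|,\|\bu^k\|_2,\|\hat g(\cdot;\rho_k)\|_\infty\lesssim K(1+k)$ and $\os^k\lesssim K(1+k)^2$, the Lipschitz constant $L_k$ of the update map involves products of these growing quantities together with their own stability bounds, which are themselves controlled by a first round of discrete Gronwall. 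Telescoping this inner Gronwall gives $L_k\le e^{Kk}$, and a second application of discrete Gronwall to $\Delta_k$ then produces the outer double-exponential factor $e^{e^{Kk_0^2}}$. Sharpening this to a single exponential would require exploiting dissipativity of the gradient flow at finite step size, which is delicate but unnecessary for our applications, since Theorem \ref{thm:discrete-bSGD-to-DF} is only invoked for $k_0=O(1)$.
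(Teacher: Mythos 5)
Your two-step decomposition through the discrete mean-field dynamics \eqref{eq:d-MF-PDE}, the coupling with the Berry--Esseen control of $W_1(\langle\bv^k,\br\rangle,\|\bv^k\|_2 G)$, and the nested discrete-Gr\"onwall telescoping are exactly the structure of the paper's proof (which combines Proposition~\ref{prop:discrete-PDE-SGD-bound} and Theorem~\ref{thm:discrete-MF-dimension-free} via Lemma~\ref{lem:discrete-prop_chaos_bounds}). One imprecision worth flagging: you assert ``polynomial one-step bounds $|a^k|,\|\hat g(\cdot;\rho_k)\|_\infty\lesssim K(1+k)$'' before invoking a first Gr\"onwall round, but in the discrete large-$\eta$ regime those bounds fail --- they relied on the continuous-time gradient-flow monotonicity $R(\rho_t)\le R(\rho_0)$, which is not available when $\eta=\Theta(1)$. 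Instead one only has the self-referential one-step bound $|a^{k+1}|\le|a^k|+K+K\|a^k\|_\infty$, and the first discrete Gr\"onwall already forces $\|a^k\|_\infty,\|\hat g(\cdot;\rho_k)\|_\infty\le Ke^{Kk_0}$. This is precisely where the first exponential layer is born; the second comes from feeding these exponentially-growing Lipschitz constants into the Gr\"onwall recursion for $\Delta_k$, as you correctly sketch. So your account of the inner/outer nesting is right, but the explanation of the source of the first exponential should be corrected: it is the loss of risk monotonicity, not an accumulation over $k$ of polynomially-bounded Lipschitz constants.
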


From there, it is straightforward, following the same arguments as for Theorems \ref{thm:equivalence} and \ref{thm:MSP_necessary},
to get the equivalence of strong $O(d)$-SGD-learnability in the discrete-time regime and global convergence of the  discrete \eqref{eq:d-DF-PDE} dynamics, and the MSP necessary condition:

\begin{theorem}\label{thm:discrete-equivalence} A function $h_* : \{+1,-1\}^P \to \R$ is strongly $O(d)$-SGD-learnable in the discrete-time regime if and only if for any $\eps >0$, there exists $\lambda^a, \lambda^w \geq 0$ and bounded step-sizes $\{\eta_k^a, \eta_k^w \}_{k\geq 0}$, such that $\inf_{k \in \naturals} R(\orho_k) < \eps$, where $\orho_k$ is the solution of the discrete \eqref{eq:d-DF-PDE} dynamics.
\end{theorem}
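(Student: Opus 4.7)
My plan is to mirror the proof of Theorem \ref{thm:equivalence} verbatim, with Theorem \ref{thm:discrete-bSGD-to-DF} playing the role of the continuous approximation bound. A structural simplification compared to the continuous case is that here the total number of steps $k_0$ is itself a constant (there is no $T/\eta$ scaling), so that the prefactor $e^{e^{K k_0^2}}$ in Eq.~\eqref{eq:d-bSGD_DF_comp}, however crude, is a bona fide constant depending only on $(\eps, h_*)$.

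\textbf{Forward direction.} Suppose that for every $\eps>0$ there exist regularizations $\lambda^a,\lambda^w \geq 0$ and bounded step sizes $\{\eta_k^a,\eta_k^w\}_{k \geq 0}$ (satisfying $\rD 3$ for some constant $K$) such that $\inf_{k} R(\orho_k) < \eps/6$. Let $K^\star = K^\star(\eps, h_*)$ be the first index achieving this, and freeze all hyperparameters up to $k_0 = K^\star$. Since $K^\star$ is a constant, Theorem \ref{thm:discrete-bSGD-to-DF} provides a constant $C_1 = C_1(\eps,h_*)$ such that, choosing $b = \lceil C_1 (d + \log N)\rceil$, for all $d, N$ large enough,
\[
\bigl\Vert \hf_\NN(\cdot; \bTheta^{K^\star}) - \hf_\NN(\cdot; \orho_{K^\star}) \bigr\Vert_{L^2}^2 \;\leq\; \eps/6 \,
\]
with probability at least $1 - 1/N \geq 9/10$. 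The sample complexity is then $n = b \cdot K^\star = \Theta(d)$ with an $\eps, h_*$-dependent constant, so one-pass batch-SGD is compatible with $b \leq n$. By the triangle inequality on $L^2$ and $R(\bTheta^{K^\star}) \leq 2 R(\orho_{K^\star}) + 2 \|\hf_\NN(\cdot; \bTheta^{K^\star}) - \hf_\NN(\cdot; \orho_{K^\star})\|_{L^2}^2$, we obtain $R(\bTheta^{K^\star}) \leq \eps$, certifying strong $O(d)$-SGD-learnability.

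\textbf{Backward direction.} Conversely, assume $h_*$ is strongly $O(d)$-SGD-learnable. Fix $\eps>0$ and let $(b, \lambda^a, \lambda^w, \{\eta_k^a, \eta_k^w\}_{k\in [0,k_0]})$ be the hyperparameters witnessing learnability at level $\eps/4$, in particular $k_0 \leq T(\eps/4, h_*)$ is a constant and $R(\bTheta^{k_0}) \leq \eps/4$ with probability at least $9/10$. Run \eqref{eq:d-DF-PDE} with the same step sizes and regularization. Since $k_0$ is bounded, Theorem \ref{thm:discrete-bSGD-to-DF} gives a constant $C_2 = C_2(\eps, h_*)$ such that for all $d, N, b/C_2 \geq C_2(\eps, h_*)$,
\[
\bigl\Vert \hf_\NN(\cdot; \bTheta^{k_0}) - \hf_\NN(\cdot; \orho_{k_0}) \bigr\Vert_{L^2}^2 \;\leq\; \eps/4 \,
\]
with probability at least $1 - 1/N \geq 9/10$. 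The two high-probability events intersect with strictly positive probability, and on that intersection $R(\orho_{k_0}) \leq 2 R(\bTheta^{k_0}) + 2\|\hf_\NN(\cdot; \bTheta^{k_0}) - \hf_\NN(\cdot; \orho_{k_0})\|_{L^2}^2 \leq \eps$. Thus $\inf_{k} R(\orho_k) < \eps$, as required.

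\textbf{Main obstacle.} No substantive difficulty arises beyond careful bookkeeping of constants: the argument is a transcription of the proof of Theorem \ref{thm:equivalence}. The only point one must check is that the doubly exponential factor $e^{e^{K k_0^2}}$, much worse than the continuous $e^{K_1 T^7}$, is nevertheless a constant once $k_0 \leq T(\eps, h_*)$ is fixed, and that the choices of batch size $b$ are consistent both with the one-pass assumption $b \leq n$ and with $\rD3$. The price paid is a potentially much worse quantitative dependence of the constants $C(\eps, h_*), T(\eps, h_*)$ on $\eps$, but the qualitative equivalence is unaffected.
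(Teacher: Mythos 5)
Your proposal is correct and takes exactly the approach the paper intends: the paper explicitly defers this proof to "the same arguments as for Theorems \ref{thm:equivalence} and \ref{thm:MSP_necessary}", and your argument is a faithful transcription of the continuous-time proof of Theorem~\ref{thm:equivalence} with Theorem~\ref{thm:discrete-bSGD-to-DF} playing the role of the approximation bound, with the correct observation that the $e^{e^{Kk_0^2}}$ prefactor is a bona fide constant once $k_0\leq T(\eps,h_*)$ is fixed, and that choosing $b=n/k_0=\Theta(d)$ keeps both the one-pass constraint and the error bound under control.
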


\begin{theorem}\label{thm:MSP_necessary-discrete}
Let $h_* : \{+1,-1\}^P \to \R$ be a function without MSP. Then there exists $c >0$ such that for any regularizations $\lambda^a, \lambda^w \geq 0$ and step-sizes $\{\eta_k^a, \eta_k^w \}_{k\geq 0}$, we have $\inf_{k \in \naturals} R(\orho_k) \geq c$.
\end{theorem}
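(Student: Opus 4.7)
}
The plan is to mimic the argument for Theorem \ref{thm:MSP_necessary}, but to replace the Gronwall-type ODE argument with a direct induction on $k$ that is cleaner in discrete time. Let $\cS_*$ be the support of the Fourier decomposition of $h_*$, let $\ocS_* \subseteq \cS_*$ be the maximal subset that admits an MSP ordering, and define the \emph{frozen coordinates}
\[
\Omega := [P] \setminus \Big( \bigcup_{S \in \ocS_*} S \Big).
\]
Since $h_*$ fails to be MSP, $\cS_* \setminus \ocS_*$ is nonempty, and by maximality of $\ocS_*$, every $S \in \cS_* \setminus \ocS_*$ must contain \emph{at least two} elements of $\Omega$.

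The key claim, proved by induction on $k$, is that for every $\oa^0$ in the support of $\mu_a$, the trajectory $\obu^k(\oa^0)$ produced by \eqref{eq:d-DF-PDE} satisfies $\ou^k_j(\oa^0) = 0$ for all $j \in \Omega$. The base case is immediate since $\obu^0 = \bzero$. For the induction step, suppose this holds at step $k$; then $\<\bz, \obu^k\>$ depends only on $\bz_{[P] \setminus \Omega}$, both for the atom considered and for every other atom in $\supp(\orho_k)$. Consequently $\sigma'(\<\bz, \obu^k\> + \os^k G)$ and $\hf_{\NN}(\bz;\orho_k)$ are both independent of $\bz_\Omega$. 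For any $i \in \Omega$, the update gives
\[
\ou_i^{k+1} = \eta_k^w \, \oa^k \sum_{S \in \cS_*, \, i \in S} \hh_*(S) \, \E_{\bz_{-i}, G}\Big[ \sigma'(\<\bz, \obu^k\>+\os^k G) \, \chi_{S \setminus \{i\}}(\bz_{-i}) \Big],
\]
using $\E_{z_i}[\chi_S(\bz) z_i] = \chi_{S\setminus\{i\}}$ if $i\in S$ and $0$ otherwise, and that $\hf_{\NN}(\bz;\orho_k)$ contributes nothing because of its independence from $z_i$. Each $S$ appearing in the sum has $i \in S \cap \Omega$ and thus (by the two-element property above) contains some $j \in \Omega$ with $j \neq i$; integrating $z_j$ out of $\chi_{S \setminus \{i\}}(\bz_{-i})$ against a factor that does not depend on $z_j$ kills the term. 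Hence $\ou_i^{k+1} = 0$, closing the induction.

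Given the claim, $\hf_{\NN}(\bz;\orho_k)$ is independent of $\bz_\Omega$ for every $k$, so its Fourier coefficient on any monomial $\chi_S$ with $S \cap \Omega \neq \emptyset$ vanishes. Since every $S \in \cS_* \setminus \ocS_*$ satisfies $S \cap \Omega \neq \emptyset$, Parseval yields
\[
R(\orho_k) \geq \sum_{S \in \cS_* \setminus \ocS_*} \hh_*(S)^2 =: c > 0,
\]
uniformly in $k$, in $\lambda^a, \lambda^w$, and in the step-size schedule. I do not anticipate a real obstacle here: the argument is purely algebraic and the discrete-time setting eliminates the delicate Gronwall bookkeeping used in the continuous proof, since the invariant subspace $\{\ou_j = 0 : j \in \Omega\}$ is preserved exactly by one step of \eqref{eq:d-DF-PDE}. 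The only point requiring slight care is that the induction is carried out at the level of individual particles $\oa^0 \mapsto \obu^k(\oa^0)$ rather than the measure $\orho_k$, so that when bounding $\hf_{\NN}(\bz;\orho_k)$ one uses the fact that every atom (not just a generic one) lies in the invariant subspace.
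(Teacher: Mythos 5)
Your argument is correct and matches the approach the paper takes for the continuous-time version (Theorem~\ref{thm:MSP_necessary}); the paper only sketches the discrete case as ``following similarly,'' and your direct induction on $k$ is exactly the natural way to make that precise. You correctly identify the simplification available in discrete time (exact preservation of the invariant $\{\ou_j = 0 : j \in \Omega\}$ at each step replaces the Gr\"onwall estimate used in continuous time), and you correctly flag that the induction must be carried out over all atoms in $\supp(\orho_k)$ simultaneously so that $\hf_{\NN}(\cdot;\orho_k)$ is itself independent of $\bz_\Omega$.
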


\subsection{Proof of Theorem \ref{thm:discrete-bSGD-to-DF} }

The proof relies on first comparing the \eqref{eq:bSGD} dynamics to the discrete mean-field dynamics \eqref{eq:d-MF-PDE}, using an extension of the results in \cite{mei2019mean} to the discrete  \eqref{eq:d-DF-PDE} dynamics (see Appendix \ref{app:discrete-prop-of-chaos}).

\begin{proposition}\label{prop:discrete-PDE-SGD-bound}
Assume conditions $\rA0$-$\rA2$,$\rD3$, and let $k_0 \in \naturals$. There exists a constant $K$ depending only on the constants in $\rA 0$-$\rA 2$,$\rD3$ (in particular, independent of $d,P,T$), such that 
\[
\sup_{k = 0, \ldots , k_0 } \big\Vert \hat f_{\NN} (\cdot; \bTheta^k ) - \hat f_{\NN} (\cdot; \rho_{k} ) \big\Vert_{L^2} \leq K e^{e^{K k_0}} \left\{ \sqrt{\frac{\log N}{N}} +  \sqrt{\frac{d+\log N}{b}}  \right\} \, ,
\]
with probability at least $1 - 1/N$.
\end{proposition}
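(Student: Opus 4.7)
The plan is to adapt the propagation-of-chaos argument of \cite{mei2019mean} (Appendix C) to the discrete-time, $\Theta(1)$-step-size regime of $\rD3$. As in that work, I introduce the intermediate McKean-coupled particle system $(\tilde\btheta_j^k)_{j \in [N], k \geq 0}$ in which each particle $\tilde\btheta_j^k$ evolves under the population/infinite-width recursion \eqref{eq:d-MF-PDE} with coupled initialization $\tilde\btheta_j^0 = \btheta_j^0$ (the same initialization as bSGD). Writing $\tilde\rho_k^{(N)} = N^{-1}\sum_{j} \delta_{\tilde\btheta_j^k}$, the triangle inequality gives
\[
\|\hat f_\NN(\cdot;\bTheta^k) - \hat f_\NN(\cdot;\rho_k)\|_{L^2} \leq \|\hat f_\NN(\cdot;\hat\rho_k^{(N)}) - \hat f_\NN(\cdot;\tilde\rho_k^{(N)})\|_{L^2} + \|\hat f_\NN(\cdot;\tilde\rho_k^{(N)}) - \hat f_\NN(\cdot;\rho_k)\|_{L^2}.
\]
The second term is controlled by concentration of an iid average and produces the $\sqrt{\log N/N}$ contribution; the first is controlled by a recursion on the pairwise particle distances $e_k := \max_j \|\btheta_j^k - \tilde\btheta_j^k\|$, which will produce the $\sqrt{(d+\log N)/b}$ contribution together with the double-exponential prefactor. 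The key difference from the continuous version is that no time-discretization error arises, since both \eqref{eq:bSGD} and \eqref{eq:d-MF-PDE} are discrete recursions driven by the same schedule $\{\eta_k^a,\eta_k^w\}$, so the comparison is iterate-by-iterate.

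\textbf{A priori bounds and Lipschitz constants.} The source of the double exponential is that, with $\Theta(1)$ step sizes, parameter norms can themselves grow geometrically. By $\rA0$, $\rA1$, $\rD3$ and the bound $|y - \hat f_\NN(\bx;\rho_k)| \leq K(1 + \max_j|a_j^k|)$, the recursions \eqref{eq:bSGD} and \eqref{eq:d-MF-PDE} both satisfy $A_{k+1} \leq (1+\eta K^2) A_k + \eta K^2$ with $A_k := \max_j \|a_j^k\|_\infty \vee \|\bw_j^k\|_\infty$, giving $A_k \leq C_1 e^{C_1 k}$. Consequently, the drift map $F(\btheta,\rho) := \E_{\bx,y}[\{y-\hat f_\NN(\bx;\rho)\}\nabla_\btheta(a\sigma(\langle\bw,\bx\rangle))] - \bLambda\btheta$ is Lipschitz in $(\btheta,\rho)$ (in, say, the bounded-Lipschitz metric on $\rho$) with Lipschitz constant $L_k \leq C_2 e^{C_2 k}$, where the exponential comes from the amplitude $A_k$ multiplying $\sigma$ and $\sigma'$. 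A standard sub-Gaussian concentration for iid sums, together with the bound on $A_k$, then yields
\[
\sup_{k \leq k_0} \|\hat f_\NN(\cdot;\tilde\rho_k^{(N)}) - \hat f_\NN(\cdot;\rho_k)\|_{L^2} \leq C e^{C k_0} \sqrt{\log N / N}
\]
with probability at least $1 - (2N)^{-1}$, after a union bound over $\bx \in \{+1,-1\}^d$ (whose cardinality contributes only logarithmically, $d \leq \log(2^d)$, so that nothing polynomial in $d$ is introduced here).

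\textbf{Per-step recursion and iteration.} For the main term $e_k$, subtract the two updates:
\[
\btheta_j^{k+1} - \tilde\btheta_j^{k+1} = \underbrace{\big[\hat F_k(\btheta_j^k,\hat\rho_k^{(N)}) - F_k(\btheta_j^k,\hat\rho_k^{(N)})\big]}_{\text{(batch noise)}} + \underbrace{\big[F_k(\btheta_j^k,\hat\rho_k^{(N)}) - F_k(\tilde\btheta_j^k,\rho_k)\big]}_{\text{(Lipschitz propagation)}},
\]
where $\hat F_k$ uses the size-$b$ batch average and $F_k$ the population expectation. The batch-noise term is, for each fixed $\btheta$, a sub-Gaussian vector in $\reals^{d+1}$ with parameter $O(\eta K/\sqrt b)$, and a covering argument over $\bw$ combined with a union bound over $j \in [N]$ produces a uniform bound $C\sqrt{(d+\log N)/b}$ on this contribution (this is exactly the source of the $\sqrt{d+\log N}$ factor, as in \cite{mei2019mean}). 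The Lipschitz term is bounded by $L_k(e_k + \|\hat\rho_k^{(N)} - \rho_k\|_{\mathrm{BL}}) \leq L_k(2e_k + C e^{Ck}\sqrt{\log N/N})$ by the triangle inequality and the previous paragraph. This gives the recursion
\[
e_{k+1} \leq (1 + 2L_k) e_k + C L_k e^{Ck}\sqrt{\log N/N} + C\sqrt{(d+\log N)/b},
\]
and iterating from $e_0 = 0$ yields $e_{k_0} \leq \prod_{k<k_0}(1+2L_k) \cdot C\big[\sqrt{\log N/N} + \sqrt{(d+\log N)/b}\big] \leq \exp(C\sum_{k<k_0}e^{Ck}) \cdot C(\cdots) \leq e^{e^{Ck_0}} \cdot C(\cdots)$. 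Bounding $\|\hat f_\NN(\cdot;\hat\rho_k^{(N)}) - \hat f_\NN(\cdot;\tilde\rho_k^{(N)})\|_{L^2} \lesssim A_k e_k$ and combining with the concentration bound of the previous paragraph completes the proof.

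\textbf{Main obstacle.} The principal delicate point is the compounding: in the continuous regime $\eta \to 0$ causes the per-step amplification $1+\eta L_k$ to telescope gracefully into $e^{\int L_t dt}$ with polynomial-in-$T$ exponent, whereas in the discrete regime the Lipschitz constants themselves are already exponential in $k$ and the discrete Gronwall exponentiates this growth once more, producing the unavoidable $e^{e^{Kk_0}}$. Making sure this is the only place where exponentials are generated—in particular that the sub-Gaussian covering argument does not introduce additional dependence on $d$ beyond $\sqrt{d + \log N}$—requires carefully recycling the sub-Gaussianity of the covariates and the boundedness of $\sigma^{(\cdot)}$ from $\rA0$, as in \cite{mei2019mean}.
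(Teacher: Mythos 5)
Your proof is correct and follows essentially the same route as the paper: both adapt the propagation-of-chaos argument of \cite{mei2019mean} to the discrete-time regime via a McKean coupling, and both identify the same source of the double exponential, namely that $\|a^k\|_\infty$ grows like $e^{Kk}$ (discrete Gr\"onwall on the amplitude recursion), which makes the Lipschitz constant of the drift grow exponentially in $k$, and iterating the particle-error recursion then exponentiates this once more. Two small imprecisions that do not affect the conclusion: the claimed linear recursion $A_{k+1} \le (1+\eta K^2)A_k + \eta K^2$ for $A_k := \max_j \|a_j^k\|_\infty \vee \|\bw_j^k\|_\infty$ is not quite right (the $\bw$-gradient scales with $|a^k|$, producing a term quadratic in $A_k$), though $A_k \le Ce^{Ck}$ still holds and in fact only the bound on $a^k$ is needed since $\sigma,\sigma'$ are bounded; and the parenthetical about a union bound over $\bx \in \{+1,-1\}^d$ ``contributing only logarithmically, $d \le \log(2^d)$'' is confused ($\log(2^d) = d\log 2$ is linear in $d$), but no such union bound is needed since the $L^2(\bx)$ norm is an average, not a supremum, over $\bx$.
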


The proof of this proposition follows from applying Proposition \ref{prop:discrete_prop}, with the assumptions already verified in Appendix \ref{app:proof-MF-approx}.

The proof of Theorem \ref{thm:discrete-bSGD-to-DF} then follows by combining the above result with the following bound between the discrete mean-field dynamics \eqref{eq:d-MF-PDE} and the discrete dimension-free dynamics \eqref{eq:d-DF-PDE}:

\begin{theorem}\label{thm:discrete-MF-dimension-free}
Assume conditions $\rA0$-$\rA2$,$\rD3$, and let $k_0 \geq 0$. There exists a constant $K$ depending only on the constants in $\rA0$-$\rA2$,$\rD3$ (in particular, independent of $d,P,T$), such that 
\[
\sup_{k = 0, \ldots , k_0 } \big\| \hf_{\NN} ( \cdot ; \rho_k ) - \hf_{\NN} ( \cdot ; \orho_k ) \big\|_{L^2} \leq K e^{e^{K k_0^2}}  \sqrt{\frac{P + \log (d)}{d}}\, .
\]
\end{theorem}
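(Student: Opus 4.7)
The plan is to adapt the coupling and propagation-of-chaos strategy from the proof of Theorem \ref{thm:MF-dimension-free} (Section \ref{app:proof-MF-DF-comp}) to the discrete-time setting. I would first couple the initializations by taking $a^0 = \oa^0$, identifying $(\bu^0,\bv^0)$ with the full $d$-dimensional weight $\bw^0$, while $\obu^0=\bzero$ and $\os^0 = m_2^w$. Let $\gamma_k$ denote the joint law of $(\btheta^k,\obtheta^k) = ((a^k,\bu^k,\bv^k),(\oa^k,\obu^k,\os^k))$ under this coupling. As in the continuous case, \eqref{eq:d-MF-PDE} preserves the sign-flip symmetry of Lemma \ref{lem:PDE_solution_symmetry} (which is a purely algebraic statement about the update rule, not about continuity in time), so $\hat f_{\NN}(\bx;\rho_k)$ depends on $\bx$ only through $\bz$.

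Next, I would control the transport quantity
\[
\Delta(k)^2 \;:=\; \int \Big[\,|a^k-\oa^k|^2 \vee \|\bu^k-\obu^k\|_2^2 \vee \big|\|\bv^k\|_2-\os^k\big|^2\,\Big]\,d\gamma_k,
\]
by deriving a discrete recursion $\Delta(k+1) \le (1+A_k)\Delta(k) + B_k$. The Berry--Esseen bound (Lemma \ref{lem:Berry-Esseen}) still gives $W_1(\langle\bv^k,\br\rangle,\|\bv^k\|_2 G) \le 3\max_i|v_i^k|$, and the discrete analogue of Lemma \ref{lem:bound_thirdMoment_evolution} must show $\max_i|v_i^k| \lesssim C_k \cdot \max_i|v_i^0|$ for some $C_k$: expanding the $r_i$-conditional expectation in the $\bw^{k+1}$ update and applying the mean value theorem yields $|v_i^{k+1}| \le (1+\eta_k^w K\|a^k\|_\infty\|\sigma''\|_\infty)|v_i^k|$. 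Since Assumption $\rA3$ is replaced by the non-infinitesimal condition $\rD3$, $\|a^k\|_\infty$ only admits a bound of the form $\|a^{k+1}\|_\infty \le (1+\eta_k^a\lambda^a)\|a^k\|_\infty + \eta_k^a K$, which iterates to an \emph{exponential} bound $\|a^k\|_\infty \le e^{Kk}$; plugged back, this gives $\max_i|v_i^k| \le e^{e^{Kk}}\max_i|v_i^0|$ and, by Lemma \ref{lem:exp_max_subG}, expectations of its powers scale as $(e^{e^{Kk}}\sqrt{\log d/d})^q$.

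The per-step analysis then mirrors Steps 1--4 of Appendix \ref{app:DFproof_main_lemma}: each of the updates for $a^{k+1},\bu^{k+1},\|\bv^{k+1}\|_2$ and $\oa^{k+1},\obu^{k+1},\os^{k+1}$ is decomposed into four pieces, arising from (I) $a^k\ne\oa^k$, (II) $\hat g(\cdot;\rho_k)\ne\hat g(\cdot;\orho_k)$, (III) $\langle\bu^k,\bz\rangle\ne\langle\obu^k,\bz\rangle$, and (IV) the Berry--Esseen comparison between $\langle\bv^k,\br\rangle$ and $\os^k G$. The $\|\bv^{k+1}\|_2$ update requires, as in Eq.~\eqref{eq:GPP_vt}, a mean-value-theorem expansion that isolates a main term $a^k\|\bv^k\|_2\,\E[\hat g\cdot\sigma'']$ parallel to the Gaussian integration-by-parts identity \eqref{eq:GPP} for $\os^{k+1}$, plus a remainder of order $\max_i|v_i^k|$. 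The first piece (I) of each update contributes a multiplicative $(1+e^{e^{Kk}})$ factor to $\Delta(k)$, while the fourth piece (IV) produces the noise term $B_k \lesssim e^{e^{Kk}}\sqrt{(P+\log d)/d}$; piece (II) can be fed back into $\Delta(k)$ via a discrete version of Eq.~\eqref{eq:bound_dist_NN_DF_MF}.

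The main obstacle is tracking the \emph{doubly-exponential} $e^{e^{Kk_0^2}}$ dependence advertised in the theorem. Unlike the continuous regime where bounded derivatives yield only polynomial growth of $\|a^t\|_\infty$ (giving $e^{KT^7}$ via Gronwall), discrete updates with $O(1)$ step sizes can only be bounded as $(1+\eta K)\|a^k\|_\infty$ per step, leading to $e^{Kk}$ growth of parameters and hence $e^{e^{Kk}}$ Lipschitz factors per step; a discrete Gronwall over $k_0$ steps then accumulates an outer exponential of the form $\exp\big(\sum_{k=0}^{k_0-1} e^{Kk}\big)$, and combined with the $k_0$-fold accumulation of noise terms gives the claimed $e^{e^{Kk_0^2}}$. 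Carefully bookkeeping these growing constants and ensuring that the noise term retains its $\sqrt{(P+\log d)/d}$ scaling (rather than being degraded by $\log d$ from the moment bound) is the most delicate part of the argument.
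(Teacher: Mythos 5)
Your proposal is correct and follows essentially the same route as the paper's proof: couple the two dynamics at initialization, derive the discrete exponential bound $\|a^k\|_\infty \le Ke^{Kk_0}$ (replacing the linear bound from Lemma~\ref{lem:bound_evoluation_a}), obtain the double-exponential bound $\max_i|v_i^k| \le Ke^{e^{Kk_0}}\max_i|v_i^0|$ (replacing Lemma~\ref{lem:bound_thirdMoment_evolution}), and close a recursion on $\Delta(k)$ by discrete Gr\"onwall, exactly as the paper does via Lemma~\ref{lem:discrete-prop_chaos_bounds}.

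One place where you are a bit cavalier: in Step~4 there is no discrete analogue of the derivative formula $\frac{\de}{\de t}\|\bv^t\|_2 = \|\bv^t\|_2^{-1}\langle\bv^t,\dot\bv^t\rangle$ in Eq.~\eqref{eq:GPP_vt}. The paper instead expands the \emph{squared} norm,
$\|\bv^{k+1}\|_2^2 = \|\bv^k\|_2^2(1+\eta_k m_k)^2 + 2\eta_k(1+\eta_k m_k)\langle\bv^k,\bT^k\rangle + \eta_k^2\|\bT^k\|_2^2$,
writes the discrete update for $\os^{k+1} = \os^k(1+\eta_k\om_k)$ via Gaussian integration by parts, and only then takes square roots, splitting $|\|\bv^{k+1}\|_2 - \os^{k+1}|$ into a remainder (I) from linearizing the square root and a main discrepancy (II). Simply invoking ``a mean-value-theorem expansion'' as in the continuous derivation skips this algebraic reorganization, which is where the $\max_i|v_i^k|$ remainder and the $\bT^k$ cross terms actually appear; that said, your high-level plan is consistent with it and leads to the same recursion and the same (loose) $e^{e^{Kk_0^2}}$ constant.
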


\begin{proof}[Proof of Theorem \ref{thm:discrete-MF-dimension-free}]
The proof follows similarly to the proof in the continuous case (see Section \ref{app:proof-MF-DF-comp}) and we will simply highlight the differences. First, by the same argument as in the proof of Proposition \ref{prop:discrete_prop}, we replace the bounds from Lemma \ref{lem:bound_evoluation_a} by 
\begin{align}
\sup_{k = 0, \ldots , k_0} \| a^k \|_{\infty} \vee \| \oa^k \|_{\infty}  \leq&~ K e^{Kk_0}\, , \label{eq:d-bound_evol_a}\\
 \sup_{k = 0, \ldots , k_0} \| \hg ( \cdot ; \rho_k ) \|_{\infty} \vee \| \hg ( \cdot ; \orho_k ) \|_{\infty} \leq&~ K e^{Kk_0} \label{eq:d-bound_evol_hR}\, .
\end{align}
and from Lemma \ref{lem:bound_thirdMoment_evolution} by
\[
 \sup_{k = 0, \ldots , k_0} \max_{i \in [d-P]} | v_i^k| \leq   K e^{e^{Kk_0}} \max_{i \in [d-P]} | v_i^0| \, .
\]
We define 
\begin{equation}
\delta (k) =  \big\vert a^k - \oa^k  \big\vert \vee \big\| \bu^k - \obu^k \big\|_2 \vee  \big\vert \os^k -\| \bv^k \|_2  \big\vert \,, 
\end{equation}
and the square root of its second moment
\begin{equation}
\Delta (k) = \Big( \int \delta (k)^2 \gamma_t ( \de \btheta^k \,\de \obtheta^k) \Big)^{1/2}\, .
\end{equation}
The proof follows by using discrete Gr\"onwall lemma in Lemma \ref{lem:discrete-prop_chaos_bounds} stated in the next section, which is the analogous of Lemma \ref{lem:prop_chaos_bounds} in discrete time.
\end{proof}

\subsection{Auxiliary lemma}

\begin{lemma}\label{lem:discrete-prop_chaos_bounds}
Consider the same setting and assumptions as Theorem \ref{thm:discrete-MF-dimension-free}. There exists a constant $K$ independent of $d,P$ and depending only on the Assumptions $\rA0$-$\rA2,\rD3$ such that for any $k_0 \in \naturals$,
\begin{equation}\label{eq:d-bound_dist_NN_DF_MF}
\begin{aligned}
\big\| \hf_{\NN} ( \cdot ; \rho_k) - \hf_{\NN} (\cdot ; \orho_k) \big\|_{L^2} =&~ \E_{\bz} \Big[ \Big\{ \hg (\bz ; \rho_k ) - \hg ( \bz ; \orho_k ) \Big\}^2 \Big]^{1/2} \\
 \leq&~ Ke^{K k_0} \Delta (k) + K e^{e^{K k_0}} \sqrt{\frac{\log d}{d}} \, , \\
\end{aligned}
\end{equation}
where 
\begin{equation}\label{eq:bound_on_Delta2}
\begin{aligned}
\Delta (k) \leq   K e^{e^{Kk_0}} \sqrt{\frac{P + \log d}{d}}  + K e^{Kk_0^2} \sum_{j = 0}^{k-1} \Delta (j)\, .
\end{aligned}
\end{equation}
\end{lemma}

\begin{proof}[Proof of Lemma \ref{lem:discrete-prop_chaos_bounds}]
The proof proceeds similarly to the proof of Lemma \ref{lem:prop_chaos_bounds} in Section \ref{app:DFproof_main_lemma}, where we use discrete Gr\"onwall instead. Step 1 to Step 3 are very similar, using that
\[
\begin{aligned}
&~ \| \bu^{k+1} - \obu^{k+1} \|_2^2 \\
\leq&~\| \bu^{k} - \obu^{k} \|_2^2 + | \< \bu^{k+1} - \obu^{k+1} , (\bu^{k+1} - \bu^k) - (\obu^{k+1} - \obu^k) \> \, .
\end{aligned}
\]
For Step 4, notice that, denoting $\bP^k = a^k \E_{\bz,\br} [ \hg (\bz ; \rho_k) \sigma ' ( \< \bu^k , \bz \> + \< \bv^k, \br \>) \br ]$,
\[
\begin{aligned}
\| \bv^{k+1} \|_2^2 = \| \bv^k \|_2^2 + 2 \eta_k \< \bv_k , \bP^k \> + \eta_k^2 \| \bP^k \|_2^2 \, .
\end{aligned}
\]
Note that integrating out $r_i$ and with mean-value theorem, we get $P_i^k = v_i^k \E_{\bz,\br} \big[ \hg (\bz ; \rho_k ) \sigma '' ( \< \bu^k , \bz \> + \< \bv^k_{-i} , \br_{-i} \> + \xi_i )\big]$. Denote $m_k = a^k \E_{\bz,\br} [ \hg (\bz ; \rho_k) \sigma '' ( \< \bu^k , \bz \> + \< \bv^k, \br \>)  ]$ and $\bT^k = (T^k_i) = \bP^k - m_k \bv^k$ with
\[
\begin{aligned}
T^k_i = a^k v_i^k \E_{\bz,\br} [ \hg (\bz ; \rho_k) (\sigma '( \< \bu^k , \bz \> + \< \bv^k, \br \> + \xi_i) - \sigma '' ( \< \bu^k , \bz \> + \< \bv^k, \br \>) ) ]  \,.
\end{aligned}
\]
Then, we can decompose
\[
\| \bv^{k+1} \|_2^2 = \| \bv^k \|_2^2 (1 + \eta_k m_k)^2 + 2 \eta_k (1 + \eta_k m_k) \< \bv^k , \bT^k \> + \eta_k^2 \| \bT^k \|_2^2 \, .
\]
Denote $\om_k = \oa^k \E_{\bz,G} [ \hg (\bz ; \orho_k) \sigma '' ( \< \obu^k , \bz \> + \os^k G)  ]$ and note that $\os^{k+1} = \os^k (1 + \eta_k \om_k)$ (using Gaussian integration by parts). 

We decompose:
\[
\begin{aligned}
 \big\vert \| \bv^{k+1} \|_2 - \os^{k+1} \big\vert
\leq &~ ({\rm I}) + ({\rm II}) \, ,
\end{aligned}
\]
where 
\[
\begin{aligned}
({\rm I}) =&~  \| \bv^{k} \|_2 \Big\vert  \sqrt{(1 + \eta_k m_k)^2 +  2 \eta_k (1 + \eta_k m_k) \< \bv^k / \| \bv^k \|_2^2 , \bT^k \> + \eta_k^2 \| \bT^k \|_2^2 / \| \bv_k \|_2^2} - |1 + \eta_k m_k| \Big\vert\, , \\
({\rm II}) =&~  \Big\vert \| \bv^{k} \|_2 |1 + \eta_k m_k| - \os^k |1 + \om_k | \Big\vert\, .
\end{aligned}
\]
The first term is bounded by
\[
({\rm I}) \leq ~ 2 \eta_k | \< \bv_k / \| \bv^k \|_2 , \bT^k \> |  + \eta_k \| \bT^k \|_2 \, .
\]
Note that $\< \bv_k / \| \bv^k \|_2 , \bT^k \> =M_k$ as defined in Eq.~\eqref{eq:stp4_Mt_def} and we can use the bound in Eq.~\eqref{eq:stp4_b2}:
\[
\begin{aligned}
| \< \bv_k / \| \bv^k \|_2 , \bT^k \> |  \leq Ke^{Kk_0} [ \| \bv^0 \|_2  + 1 ] \max_{i \in [d-P]} | v_i^k | \, .
\end{aligned}
\]
Similarly, we have
\[
\begin{aligned}
\| \bT^k \|_2^2 =&~ (a^k)^2  \sum_{i \in [d-P]}  (v_i^t)^2 \\
&~ \phantom{AAAA} \times \E_{\bz,\br} \Big[ \hg (\bz ; \rho_k ) \big\{ \sigma '' ( \< \bu^k , \bz \> + \< \bv^k_{-i} , \br_{-i} \> + \xi_i )  - \sigma '' ( \< \bu^k , \bz \> + \< \bv^k , \br\> )  \big\} \Big]^2  \\
\leq&~ 2 \|a^k\|_{\infty}^2   \| \hg (\cdot ; \rho_k ) \|_{L^2}^2 \|\sigma '''\|_{\infty}^2 \sum_{i \in [d-P]}  |v_i^k|^4 \\
\leq&~ K \|a^k\|_{\infty}^2  \|\bv^k\|_{2}^2 \max_{i \in [d-P]} | v_i^k |^2 \leq Ke^{Kk_0} [ \| \bv^0 \|_2^2  + 1 ] \max_{i \in [d-P]} | v_i^k |^2 \, .
\end{aligned}
\]
We deduce
\begin{equation}\label{eq:d-s-b1}
    ({\rm I}) \leq K e^{Kk_0} [ \| \bv^0 \|_2  + 1 ] \max_{i \in [d-P]} | v_i^k | \, .
\end{equation}
We can further bound $({\rm II}) $ using the same decomposition as in Eq.~\eqref{eq:stp4_b1}:
\begin{equation}\label{eq:d-s-b2}
 ({\rm II})  \leq K e^{Kk_0} \Big[ \delta (k ) + \Delta(k) +  W_1 ( \< \bv^t , \br \> , \| \bv^t \|_2 G) \Big] + K e^{e^{Kk_0}} \sqrt{\frac{\log d}{d}} \, .
\end{equation}

Combining Eqs.~\eqref{eq:d-s-b1} and \eqref{eq:d-s-b2}, we obtain
\[
\Delta ( k+1)^2 \leq K e^{Kk_0} \Delta (k)^2 +  K e^{e^{Kk_0}} \frac{\log d}{d} + K e^{e^{Kk_0}} \frac{P}{d}\, .
\]
This concludes the proof.
\end{proof}

\clearpage

\section{Vanilla staircase functions are strongly $O(d)$-SGD-learnable: Proof of Theorem~\ref{thm:vanillastaircasesuffrestated}}\label{ssec:vanillaproof}

We start by providing the proof that vanilla staircases are strongly $O(d)$-SGD-learnabile, as described in Theorem~\ref{thm:vanillastaircasesuffrestated}. This proof will outline the main ideas behind our global convergence results, without the technical complexity of dealing with general MSP set structure.

\paragraph{Assumption on activation function} We will assume the following hold for the activation $\sigma$:

\begin{itemize}
    \item[$\rA$0'.] Let $\sigma: \R \to \R$ be an activation function that satisfies Assumption $\rA 0$. Furthermore, assume that for some $L \in \naturals$ and $\eta >0$ such that $\sigma$ is $L+1$ differentiable on $(-\eta,\eta)$ with
    \[
    \sup_{x \in (-\eta, \eta)} | \sigma^{(L+1)} (x)  | \leq K \, .
    \]
    We will denote $m_r = \sigma^{(r)} (0)$ and $\bm = (m_0 , \ldots, m_{L} ) \in \R^{L+1}$.
\end{itemize}

In particular, this assumption implies that we have the following polynomial approximations of $\sigma$ and $\sigma '$ around $0$: for any $x \in (-\eta, \eta)$,
\begin{equation}\label{eq:poly_approx_sigma}
    \begin{aligned}
     \Big\vert \sigma (x) - \sum_{r = 0 }^{L} \frac{m_r}{r!} x^r \Big\vert \leq&~ K |x|^{L+1}\, , \\
     \Big\vert \sigma ' (x) - \sum_{r = 0 }^{L-1} \frac{m_{r+1}}{r!} x^r \Big\vert \leq&~ K |x|^{L}\, .
    \end{aligned}
\end{equation}

The Assumption $\rA 0'$ is simply to connect with the definition of strong $O(d)$-SGD-learnability. However, in the proof below, we will choose hyperparameters such that $|\< \ubu^t, \bz \>| < \eta$ (the input of the activation) during the whole dynamics, so that $\rA 0 $ can be lifted. In particular, any activation that is $\cC^{L+1} (\R)$ will satisfy $\rA 0'$.

Now recall that vanilla staircases $h_* : \{+1,-1\}^P \to \R$ are those functions of the form
\begin{equation}\label{eq:vanillastaircasestructure}
h_* (\bz) = \alpha_{\{1\}} z_1 + \alpha_{\{1,2\}} z_1z_2 + \alpha_{\{1,2,3\}} z_1z_2z_3 + \ldots + \alpha_{\{1,\ldots,P\}} z_1 z_2 \cdots z_P\, .
\end{equation}
for some Fourier coefficients $\alpha_{\{1\}},\ldots,\alpha_{\{1,\ldots,P\}} \in \R \sm \{0\}$. We will prove that any such function $h_*$ is strongly SGD-learnable in the $O(d)$-scaling.

\paragraph{Choice of hyperparameters}  Recall from the equivalence with \eqref{eq:DF-PDE} (Theorem \ref{thm:equivalence}) that it is sufficient to show for any $\eps >0$, there exist hyperparameters satisfying $\rA0$-$\rA2$, $\rA3'$ such that \eqref{eq:DF-PDE} dynamics reaches $\eps$-risk. We consider the following hyperparameters:
\begin{itemize}
    \item We do not regularize, i.e., $\lambda^a = \lambda^w= 0$.
    \item We initialize the first layer to deterministically $0$ weights, and the second layer to uniform random weights. I.e., we take $\mu_a = \Unif ( [ +1, -1])$ and $\mu_W = \delta_0$. Although initializing the first layer to 0 may at first glance seem restrictive, there turns out to be enough randomness in the initialization of the second layer to ensure that the neural network learns. For the dimension-free dynamics, this corresponds to taking  $(\oa^0 , \obu^0 , \os^0 ) \sim \orho_0$ with $\oa^0 \sim \Unif ( [ +1, -1])$, $\obu^0 = \bzero$ and $\os^0 = 0$. In particular, $\os^t = 0$ during the whole dynamics, which allows for a simpler analysis.
    \item Our learning rate schedule has two phases:
\begin{description}
\item[Phase 1:] We train the first layer weights $\obu^t$ while keeping the second layer weights fixed $\oa^t = \oa^0$. We set $\xi^a (t) = 0$ and $\xi^w (t) = 1$ for $t \in [0,T_1]$. 

\item[Phase 2:] We train the second layer weights $\oa^t$ while keeping the first layer weights fixed at $\obu^t = \obu^{T_1}$. We set $\xi^a (t) = 1$ and $\xi^w (t) = 0$ for $t \in [T_1,T_2]$.
\end{description}
\end{itemize}

\begin{remark}
As written above, the learning rate schedules $\xi^a,\xi^w$ are not Lipschitz at $T_1$. Note that we can always do the following change of time variable on $[0,T_1]$: $t ' =2 t T_1 - t^2$ such that $\xi^w (t') = 2(\sqrt{T_1} - t' )_+$ is Lipschitz on $\R_{\geq 0}$ (and we have now $T_1' = T_1^2$). Similarly, we can do a change of time variable on $[T_1',T_2']$ such that $\xi_a (t) = 2 \min ( (t - \sqrt{T_1})_+ , 1)$. We will proceed with the simpler learning schedule $\xi^a (t) = \ind_{t \geq T_1}$ and $\xi^w (t) = \ind_{t \leq T_1}$ with the understanding that we can do the above change of variables to obtain Lipschitz learning schedules and therefore fall under the assumptions of strong SGD learnability.
\end{remark}

We restate the sufficient condition in the case of the vanilla staircase.
\begin{theorem}[Theorem~\ref{thm:vanillastaircasesuffrestated} restated]\label{thm:vanillastaircasesuffrestated-restated}
Suppose that $h_* : \{+1,-1\}^P \to \R$ has the vanilla staircase structure \eqref{eq:vanillastaircasestructure}. Suppose also that the activation function $\sigma$ has nonzero derivatives $m_r \neq 0$ for $r =0 , \ldots, P$ and satisfies $\rA$0' for some $L > 2^{P-1}$. Then, for any $\eps > 0$, there are $T_1,T_2 > 0$ such that training the dimension-free PDE with the above hyperparameters will learn $h_*$ to accuracy $\eps$. Therefore, $h_*$ is strongly $O(d)$-SGD-learnable.
\end{theorem}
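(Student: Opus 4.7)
The plan is to apply Theorem~\ref{thm:equivalence} and exhibit, for every $\eps>0$, hyperparameters that drive the \eqref{eq:DF-PDE} dynamics to risk at most $\eps$ in finite time. The initialization $\mu_w=\delta_0$ together with $\lambda^w=0$ produces a self-consistent simplification: Gaussian integration by parts gives $\dot{\os}^t=\oa^t\os^t\,\E_{\bz,G}[\hat g(\bz;\orho_t)\,\sigma''(\<\obu^t,\bz\>+\os^t G)]$, so $\os^0=0$ forces $\os^t\equiv 0$ and the effective network collapses to $\hat f_\NN(\bz;\orho_t)=\int \oa^t\sigma(\<\obu^t,\bz\>)\,\orho_t(d\obtheta)$. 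Under the two-phase schedule, Phase~1 evolves $\obu^t(\oa^0)$ deterministically with $\oa^t\equiv\oa^0\sim\Unif[-1,1]$ frozen, while Phase~2 is a pure linear regression of $h_*$ against the random feature map $\phi_{\oa^0}(\bz):=\sigma(\<\obu^{T_1}(\oa^0),\bz\>)$. For Phase~2 the residual satisfies $\partial_t(h_*-\hat f^t)=-\bK^{T_1}(h_*-\hat f^t)$ with kernel operator $K^{T_1}(\bz,\bz')=\E_{\oa^0}[\phi_{\oa^0}(\bz)\phi_{\oa^0}(\bz')]$, so risk decays as $e^{-2\lambda_{\min}(\bK^{T_1})(t-T_1)}$; choosing $T_2-T_1=O(\log(1/\eps)/\lambda_{\min}(\bK^{T_1}))$ drives risk below $\eps$, and the proof reduces to establishing $\lambda_{\min}(\bK^{T_1})>0$ for some small $T_1>0$.

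To extract enough structure from Phase~1, I would inductively prove the asymptotic $\ou_k^t(\oa^0)=\tilde c_k(\oa^0 t)^{2^{k-1}}+O(t^{2^{k-1}+1})$ with $\tilde c_k\ne 0$, uniformly for $\oa^0\in[-1,1]$ and $t$ in a small right-neighborhood of $0$. Taylor-expanding $\sigma'$ using $\rA 0'$ and tracking the lowest-order contribution to $\dot{\ou}_k^t$ shows that among all staircase monomials only $\alpha_{\{1,\dots,k\}}z_1\cdots z_k$ contributes at leading order, giving $\dot{\ou}_k^t\approx\oa^0\,\alpha_{\{1,\dots,k\}}\,m_k\,\prod_{j<k}\ou_j^t$; the residual $\hat f^t$ and other staircase levels $j\ne k$ contribute strictly higher powers of $t$. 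Inserting $\ou_k^t\sim\tilde c_k(\oa^0)^{e_k}t^{e_k}$ yields the recursion $e_k=1+\sum_{j<k}e_j$ with $e_1=1$, whose solution is $e_k=2^{k-1}$; the prefactor $\tilde c_k$ is an explicit nonzero product of $\{m_j,\alpha_{\{1,\dots,j\}}\}_{j\le k}$ under the hypothesis. The differentiability requirement $L>2^{P-1}$ matches exactly the maximal power $e_P=2^{P-1}$ and supplies the Taylor-remainder bound needed to close the induction uniformly in $\oa^0$.

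The main obstacle is to transfer this asymptotic into a positive lower bound on $\lambda_{\min}(\bK^{T_1})$; this is the ``simple algebraic fact'' alluded to in the paper. In the Fourier basis $\bK^{T_1}$ is represented by the Gram matrix $\bM^{T_1}_{S,S'}=\E_{\oa^0}[\widehat{\phi_{\oa^0}}(S)\widehat{\phi_{\oa^0}}(S')]$, so positive-definiteness is equivalent to $L^2(\Unif[-1,1])$-linear independence of the polynomials $\{\widehat{\phi_{\oa^0}}(S)\}_{S\subseteq[P]}$ in $\oa^0$. Substituting the Phase-1 expansion into a Fa\`a di Bruno expansion of $\sigma(\<\obu^{T_1},\bz\>)$, each Fourier coefficient takes the form
\[
\widehat{\phi_{\oa^0}}(S)=m_{|S|}\Big(\prod_{k\in S}\tilde c_k\Big)\,T_1^{n_S}(\oa^0)^{n_S}+O\bigl(T_1^{n_S+1}\bigr),\qquad n_S:=\sum_{k\in S}2^{k-1},
\]
where the leading term arises from the unique ``canonical'' multi-index $c_k=\ind[k\in S]$ minimizing $\sum_k 2^{k-1}c_k$ subject to the parity constraint $\{k:c_k\text{ odd}\}=S$, and is nonzero by the hypothesis $m_r\ne 0$ for $r\le P$. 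The crucial combinatorial input is the binary-representation bijection $S\leftrightarrow n_S$ between $2^{[P]}$ and $\{0,1,\dots,2^P-1\}$: distinct subsets yield distinct leading $\oa^0$-degrees. After rescaling by $T_1^{-n_S}$, the Fourier coefficients converge in $L^2$ to pairwise distinct monomials $m_{|S|}\prod_{k\in S}\tilde c_k\cdot(\oa^0)^{n_S}$, which are automatically linearly independent, and factoring $\det(\bM^{T_1})=T_1^{2\sum_S n_S}\det(\bM^{T_1,\mathrm{resc}})$ together with a continuity argument gives $\lambda_{\min}(\bK^{T_1})>0$ for all sufficiently small $T_1>0$, completing the proof.
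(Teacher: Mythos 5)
Your proposal is correct and follows essentially the same route as the paper's proof in Appendix~\ref{ssec:vanillaproof}: same hyperparameter choices (zero regularization, $\mu_w = \delta_0$ forcing $\os^t\equiv 0$, two-phase schedule), same leading-order expansion $\ou_k^t\sim c_k(\oa^0 t)^{2^{k-1}}$ proved by the recursion $e_k=1+\sum_{j<k}e_j$, same reduction of Phase~2 to lower-bounding $\lambda_{\min}(\bK^{T_1})$, and the same key observation that the binary-representation bijection $S\mapsto\sum_{k\in S}2^{k-1}$ makes the leading Fourier coefficients distinct monomials in $\oa^0$, hence linearly independent in $L^2([-1,1])$. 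The only cosmetic differences are your invocation of Fa\`a di Bruno (the paper works directly with the Taylor coefficients) and phrasing the last step as rescaling-plus-continuity rather than the paper's explicit factorization $\bK^{T_1}=\bD(\bM+\bDelta)\bD$ with $\|\bDelta\|_{\op}=O(T_1)$; these are equivalent.
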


\subsection{Outline of the proof}
\label{app:outline_staircase}

Consider $(\orho_t)_{t \geq 0}$ the solution of \eqref{eq:DF-PDE} with the hyperparameters described above. Denote $\obu^t (\oa^0)$ the solution of the evolution equations \eqref{eq:evolution_effective_MF} obtained from initialization $ (\oa^0, \bzero, 0)$. For clarity, we will suppress some notations in the proof: we will denote $\bu$ instead of $\obu$, and $a$ instead of $\oa^0$. We will further forget about $\os^t = 0$ and simply consider $\orho_t \in \cP(\R^{P+1})$ the distribution of $(a,\bu)$. This last simplification can be done since we initialize the first-layer weights to 0, so in particular $\os^0 = 0$, and by the evolution equation of \eqref{eq:evolution_effective_MF} we have $\os^t = 0$ throughout training. Furthermore, we will denote $K$ a generic constant that only depends on $P$ and the constants in the assumptions. The value of $K$ can change from line to line.

The proof analyzes Phase 1 and Phase 2 of training separately.

\paragraph*{Phase 1 (nonlinear dynamics):} In this phase, we train the first layer, which has nonlinear dynamics, and so it is a priori unclear how to analyze. Nevertheless, since $h_*$ is specially structured, the structure in the weights during training is particularly simple and it is enough to track the smallest order terms in the weights.

Specifically, in Proposition \ref{prop:staircase_poly_approx} (see next section), we prove that there exist constants $c,C>0$ such that for all $t \leq c$ and $k \in [P]$, we have $|u_k^t (a) - \hat{u}_k^t (a) | \leq  C t^{2^{k-1}+1}$, where
\begin{equation}
    \hat{u}_k^t (a) = 2^{1 - 2^{k-1}} (at)^{2^{k-1}} \cdot \prod_{i \in [k]} (m_i\alpha_{\{1,\ldots,i\}})^{2^{\max((k-1-i),0)}} \, .
\end{equation}
Denote $\nu_k (t) = 2^{1 - 2^{k-1}} t^{2^{k-1}}\prod_{i \in [k]} (m_i\alpha_{\{1,\ldots,i\}})^{2^{\max((k-1-i),0)}}$ such that $u_k^t (a) = \nu_k (t) ( a^{2^{k-1}} + O(t) )$.

\paragraph*{Phase 2 (linear dynamics):}

In this phase, we train the second layer, and the training has linear dynamics. Denote $g_t ( \bz ) : = h_* (\bz) - \hf_{\NN} (\bz ; \orho_t)$ the residual function at time $t$. During this phase, we have the following evolution on the risk:
\begin{equation}
    \frac{\de}{\de t} R ( \orho_t) = - \E_{\bz,\bz'} \big[ g_t ( \bz ) K^{T_1} (\bz , \bz ' ) g_t (\bz ' )\big] \, ,
\end{equation}
where the kernel is given by
\begin{equation}
K^{T_1} ( \bz , \bz ' ) = \E_{a \sim \mu_a} \big[ \sigma ( \< \bu^{T_1} (a) , \bz \> ) \sigma ( \< \bu^{T_1} (a) , \bz ' \> ) \big] \, .
\end{equation}
(This is indeed the kernel, since at the end of Phase 1, the distribution $\orho_{T_1}$ of the parameters is given by $(a, \bu^{T_1} (a))$ with $a \sim \mu_a$, and the first-layer weights are kept constant during Phase 2.)

Let us decompose these quantities in the Fourier basis: denote $g_t(S) = \E_{\bz} [ g_t (\bz) \chi_S (\bz) ] $ and $K^{T_1} (S,S') = \E_{\bz,\bz' } [ K^{T_1} (\bz , \bz' ) \chi_S(\bz) \chi_{S'} (\bz') ]$, and the vector $\bg_t = (g_t (S))_{S\subseteq [P]}$ and matrix $\bK^{T_1} = ( K^{T_1} (S,S') )_{S,S' \subseteq[P]}$. Noting that $R ( \orho_t) = \| \bg_t \|_{2}^2$, we have
\begin{equation}
    \frac{\de}{\de t}  \| \bg_t \|_{2}^2 = - \bg_t^\sT \bK^{T_1} \bg_t \leq - \lambda_{\min} ( \bK^{T_1}  )  \| \bg_t \|_{2}^2\, .
\end{equation}
This implies that for $t \geq T_1$, we have $\| \bg_t \|_{2}^2 \leq e^{-\lambda_{\min} ( \bK^{T_1}  ) (t-T_1)} \| \bg_{T_1} \|_{2}^2$. By assumption we have $\| \bg_{T_1} \|_{2}^2 \leq K$ for a constant $K$.\footnote{This is since $\|h_*\|_{\infty} \leq K$ and $\|\fNN(\cdot;\orho_t)\|_{\infty} \leq \E[|a|\|\sigma\|_{\infty}] \leq K$, because we choose initialization with $|a| \leq 1$. So $\|\bg_{T_1}\|_{2} \leq \sqrt{2^P}\|\bg_{T_1}\|_{\infty} \leq \sqrt{2^P}(2K)$.} We deduce that, if we prove that $\lambda_{\min} (\bK^{T_1}) > c$ for some constant $c > 0$, then it is sufficient to consider $T_2 = T_1+ \log(K/\eps)/c$, to guarantee that $R (\orho_{T_2}) \leq \eps$. This would conclude the proof of strong $O(d)$-SGD-learnability.

\paragraph*{Lower bound on $\lambda_{\min} (\bK^{T_1})$:} It only remains to lower-bound $\lambda_{\min}(\bK^{T_1})$. For this we use the structure on $\bu^{T_1}$ that we prove holds in Phase 1. For all $S \subseteq [P]$, denote $$\nu_S (T_1) = \prod_{k \in S} \nu_k (T_1) \mbox{ and } \beta (S) = \sum_{k \in S} 2^{k-1}.$$ From Lemma \ref{lem:fourier_coeff_approx} (see next section), there exists a constant $C>0$ depending only on $P$ (and independent of $T_1$) such that for any $S \subseteq [P]$,
\[
\Big\vert \E_{\bz} \big[ \chi_S (\bz) \sigma ( \< \bu^{T_1}(a) , \bz \>) \big] - m_{|S|} \nu_S (T_1)  a^{\beta(S)} \Big\vert \leq C m_{|S|} \nu_S (T_1) T_1 \, .
\]
Denote $D_S = m_{|S|} \nu_S (T_1)$ and $\bD = \diag( (D_S)_{S\subseteq [P]} )$. We have $$|K^{T_1} (S,S') - D_S D_{S'} \E_a [ a^{\beta(S) + \beta(S')} ] | \leq C D_{S} D_{S'} T_1\, .$$
Introduce $\bM = ( \E_{a \sim \mu_a} [ a^{\beta(S) + \beta(S')} ] )_{S,S' \subseteq [P]}$, then we have
\[
\bK^{T_1} = \bD ( \bM + \bDelta ) \bD \, ,
\]
where $\| \bDelta \|_{\op} \leq C T_1 P$.

Note that $\beta(S)$ takes value $\{0, \ldots , 2^{P}-1\}$, and $\bM$ is the Gram matrix of the monomials $[1,X , \ldots , X^{2^P-1}]$ in $L^2 ( [+1,-1], \Unif)$, which are linearly independent. We deduce that $\lambda_{\min} (\bM)$ is bounded away from $0$ (independent of $T_1$). We can therefore take $T_1 \leq  \lambda_{\min} (\bM)/(2P) $, so that $ \lambda_{\min} ( \bM + \bDelta ) \geq  \lambda_{\min} ( \bM )/2$, and $\lambda_{\min} (\bK^{T_1}) \geq \{ \min_S D_S^2\} \lambda_{\min} ( \bM  )/2  > 0$.

\subsection{Approximating the $u_i^t$ with polynomials}

First, we have the following simple bound on $\| \bu^t \|_1$:

\begin{claim}\label{claim:bound_ut_l1}
There exists a constant $C$ depending on $K,P$ such that $\| \bu^t \|_1 \leq Ct$. 
\end{claim}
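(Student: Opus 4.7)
The plan is to directly differentiate each coordinate of $\bu^t$ and show the derivative is uniformly bounded by a constant depending only on $P$ and the ambient constants, so that integration from the zero initialization gives the claimed linear-in-$t$ bound.

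First I would write down the evolution equation for each coordinate of $\bu^t$ in Phase 1. Since $\xi^a(t) = 0$ and $\xi^w(t) = 1$ in Phase 1, the second-layer weight is frozen at $\oa^t = \oa^0 = a$ with $|a| \leq 1$, and $\os^t$ is frozen at $0$. Hence from \eqref{eq:evolution_effective_MF},
\begin{equation*}
\frac{\de}{\de t} u_i^t \; = \; a \, \E_{\bz}\bigl[\{h_*(\bz) - \hf_{\NN}(\bz;\orho_t)\}\, \sigma'(\langle \bu^t, \bz\rangle)\, z_i\bigr].
\end{equation*}

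Next I would bound the factors in absolute value. By Assumption $\rA 0'$, $\|\sigma\|_\infty, \|\sigma'\|_\infty \leq K$, and $|z_i| \leq 1$. Because $\oa^t$ does not evolve in Phase 1, $|a| \leq 1$, and consequently $|\hf_{\NN}(\bz;\orho_t)| \leq |a|\,\|\sigma\|_\infty \leq K$ pointwise. Combined with $\|h_*\|_\infty \leq \sum_{S}|\alpha_S| \leq C(h_*)$ (a constant depending only on $h_*$), the residual satisfies $\|h_* - \hf_{\NN}(\cdot;\orho_t)\|_\infty \leq K'$ for some constant $K'$ depending only on $K$ and $h_*$. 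Therefore
\begin{equation*}
\Big|\tfrac{\de}{\de t} u_i^t\Big| \;\leq\; |a| \cdot \|h_* - \hf_{\NN}(\cdot;\orho_t)\|_\infty \cdot \|\sigma'\|_\infty \;\leq\; K'',
\end{equation*}
where $K''$ depends only on $K$ and $h_*$.

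Finally, integrating from $u_i^0 = 0$ yields $|u_i^t| \leq K'' t$ for each $i \in [P]$, and summing over the $P$ coordinates gives $\|\bu^t\|_1 \leq P K'' t =: C t$. There is no real obstacle here: the argument is a one-line Gronwall-free integration, made possible by the fact that during Phase 1 the second-layer weight is fixed at its bounded initialization and hence the gradient of the loss with respect to $\bu$ is trivially uniformly bounded. The bound holds for all $t \geq 0$ in Phase 1; the later analysis in Proposition \ref{prop:staircase_poly_approx} will restrict to $t \leq c$ for a small constant $c$ so as to keep $\langle \bu^t, \bz\rangle$ inside the neighborhood $(-\eta,\eta)$ where Assumption $\rA 0'$ provides Taylor expansions of $\sigma$, but this claim itself does not require any such restriction.
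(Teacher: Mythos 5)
Your proof is correct and follows the same route as the paper: bound $|\tfrac{\de}{\de t} u_i^t|$ uniformly by a constant via boundedness of $\sigma'$, the residual, and the frozen $|a|\le 1$, then integrate from the zero initialization and sum over the $P$ coordinates. The only cosmetic difference is that the paper bounds $\|h_*\|_\infty \leq K$ directly from Assumption $\rA1$ (bounded response), so the final constant depends only on $K$ and $P$, whereas you routed through $\sum_S|\alpha_S|$ and introduced a dependence on $h_*$ that is not actually needed.
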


\begin{proof}[Proof of Claim \ref{claim:bound_ut_l1}]
By Assumptions $\rA0$ and $\rA1$, we have $\| \sigma'\|_{\infty}, \| h_* \|_{\infty} \leq K$ and also $\| \hf_{\NN} ( \cdot ; \orho_t) \|_{\infty} \leq \int |a| \| \sigma \|_{\infty } \de \mu_a \leq K$. Combining these bounds, we get for $t \leq T_1$:
\[
\Big\vert \frac{\de}{\de t} u_k^t \Big\vert = \big\vert a \E_{\bz} \big[ g_t(\bz) \sigma ' ( \< \bu^t , \bz \> ) z_k \big] \big\vert  \leq K \,
\]
and therefore $\frac{\de}{\de t} \| \bu^t \|_1 \leq P K$. Recalling, $\bu^0 = \bzero$, we conclude $\| \bu^t \|_1 \leq KP t $.
\end{proof}

The following lemma give the leading order in $t$ approximation of the Fourier coefficients of $\sigma ( \< \bu^t , \bz\>)$:

\begin{lemma}\label{lem:fourier_coeff_approx}
There exists a constant $c>0$ that depend on $\eta,K,P$ such that for any $t \leq c$, $S \subseteq [P]$ and $i \in \{0,1\}$,
\begin{equation}
 \E_{\bz} [ \chi_S (\bz) \sigma^{(i)} ( \< \bu^t , \bz \> ) ]  = m_{|S|+i} \Big( \prod_{k \in S } u_k^t \Big)\cdot (1 + O(t) ) + O (t^L)\, .
\end{equation}
\end{lemma}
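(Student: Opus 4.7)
The plan is to Taylor expand $\sigma^{(i)}$ around $0$, extract the Fourier coefficient $S$ by parity of monomials in $\bz$, and read off the lowest-order contribution in $t$.

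First, by Claim~\ref{claim:bound_ut_l1} we have $|\< \bu^t, \bz \>| \leq \|\bu^t\|_1 \leq C_0 t$ for every $\bz \in \{\pm 1\}^P$. Choosing $c = \eta/(2C_0)$ keeps the argument in $(-\eta,\eta)$ for all $t \leq c$, so Assumption $\rA 0'$ yields the Taylor expansion with remainder
\[
\sigma^{(i)}(x) = \sum_{r=0}^{L-i}\frac{m_{r+i}}{r!} x^r + R_i(x), \qquad |R_i(x)| \leq K |x|^{L-i+1}.
\]

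Next I would substitute $x = \< \bu^t, \bz \>$ and expand
\[
\< \bu^t, \bz \>^r = \sum_{k_1,\ldots,k_r \in [P]} u_{k_1}^t \cdots u_{k_r}^t\, z_{k_1}\cdots z_{k_r}.
\]
Multiplying by $\chi_S(\bz)$ and taking expectation over $\bz \sim \Unif(\{\pm 1\}^P)$, by orthogonality of the Fourier basis only tuples $(k_1,\ldots,k_r)$ whose multiset has odd multiplicity on every element of $S$ and even multiplicity on $[P] \sm S$ survive. This forces $r \geq |S|$ with $r - |S|$ even. The minimal case $r = |S|$ contains exactly $|S|!$ admissible tuples (the permutations of $S$), contributing the leading term $m_{|S|+i}\prod_{k \in S} u_k^t$.

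Every admissible tuple with $r \geq |S|+2$ still requires each $k \in S$ to appear at least once, so it factorizes as $\prod_{k \in S} u_k^t$ times a product of $r - |S| \geq 2$ additional $u_j^t$ factors. Using $\|\bu^t\|_\infty \leq C_0 t$, the total higher-order correction is bounded by $C \prod_{k \in S}|u_k^t| \cdot t^2$, which I absorb into $\prod_{k \in S} u_k^t \cdot O(t)$. The Taylor remainder contributes $|\E_{\bz}[\chi_S(\bz) R_i(\< \bu^t, \bz \>)]| \leq K (C_0 t)^{L-i+1}$, which is $O(t^L)$ since $i \in \{0,1\}$. Combining the three pieces gives the claim.

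The main bookkeeping point is recognizing that every admissible configuration with $r > |S|$ still contains $\prod_{k \in S} u_k^t$ as a factor, which makes the correction multiplicative $(1 + O(t))$ rather than only an additive $O(t^{|S|+2})$ term; this multiplicative form is precisely what is needed when the lemma is invoked (together with Proposition~\ref{prop:staircase_poly_approx}) to factor out $\prod_{k \in S} \nu_k(T_1) a^{\beta(S)}$ in the lower bound on $\lambda_{\min}(\bK^{T_1})$.
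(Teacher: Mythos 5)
Your proof is correct and follows essentially the same route as the paper's: both invoke Claim~\ref{claim:bound_ut_l1} to keep the argument of $\sigma^{(i)}$ in the Taylor window, expand $\langle\bu^t,\bz\rangle^r$, use parity ($r\geq|S|$, $r-|S|$ even), identify the $r=|S|$ term as the leading contribution $m_{|S|+i}\prod_{k\in S}u_k^t$, bound the $r>|S|$ terms by $\prod_{k\in S}|u_k^t|\cdot O(t)$ after factoring out one copy of each $u_k^t$ for $k\in S$, and absorb the Taylor remainder into $O(t^L)$. The paper bounds the higher-order terms via $\|\bu^t\|_2^{2s}$ with an explicit pairing count, while you use $\|\bu^t\|_\infty\leq C_0t$ with a constant number of tuples; this is a cosmetic difference only.
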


\begin{proof}[Proof of Lemma \ref{lem:fourier_coeff_approx}]
From Claim \ref{claim:bound_ut_l1}, we can choose $c$ sufficiently small such that $\| \bu^t \|_1 < \eta$, and $| \< \bz , \bu^t \> | < \eta$. We can therefore use the polynomial approximation Eq.~\eqref{eq:poly_approx_sigma} of $\sigma^{(i)}$:
\begin{equation}\label{eq:four_deco}
    \begin{aligned}
     \E_{\bz} [ \chi_S (\bz) \sigma ( \< \bu^t , \bz \> ) ] = &~ \sum_{r=0}^{L-1} \frac{m_r}{r!} \E_{\bz} [ \chi_S (\bz)   \< \bu^t , \bz \>^r ] + O(t^L) \, .
    \end{aligned}
\end{equation}
Note that $\E_{\bz} [ \chi_S (\bz)   \< \bu^t , \bz \>^r ] =0$ for $r < |S|$, $\E_{\bz} [ \chi_S (\bz)   \< \bu^t , \bz \>^r ] =r! \prod_{k \in S} u^t_k$ for $r = |S|$, and for $|S| = l < r$ (such that $r - l = 2s$)
\[
\begin{aligned}
\big\vert \E_{\bz} [ \chi_S (\bz)   \< \bu^t , \bz \>^r ] \big\vert  \leq&~ \frac{r!}{(2s)!} \cdot \Big(\prod_{k \in S} u^t_k\Big) \frac{(2s)!}{s!2^s} \sum_{i_1, \ldots , i_s \in [P]} (u_{i_1}^t)^2 \cdots (u_{i_s}^t )^2 \\
=&~ \frac{r!}{s! 2^s}  \cdot \Big(\prod_{k \in S} u^t_k\Big) \cdot \| \bu^t \|_2^{2s} = \Big(\prod_{k \in S} u^t_k\Big) \cdot O(t) \, .
\end{aligned}
\]
Injecting these bounds in Eq.~\eqref{eq:four_deco} yields the result.
\end{proof}

We can now prove the main structural result on the $\bu^t$, on which the rest of the proof relies.

\begin{proposition}\label{prop:staircase_poly_approx}
Assume $L > 2^{P-1}$ and denote
\begin{equation}
    \hu_k^t (a) = 2^{1 - 2^{k-1}} (at)^{2^{k-1}} \cdot \Big\{ \prod_{i \in [k]} (\alpha_{\{1,\ldots,i\}} m_i )^{2^{(k-1-i) \vee 0}} \Big\} \, .
\end{equation}
There exists constants $c,C>0$ depending on $\eta,K,P$, such that for all $t \leq c$ and $k \in [P]$, $| u_k^t (a) - \hu_k^t (a)  | \leq C t^{2^{k-1} +1}$.
\end{proposition}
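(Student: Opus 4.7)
}
My plan is to induct on $k$, tracking the leading order in $t$ of the ODE
\[
\dot{u}_k^t = a\,\E_{\bz}\big[g_t(\bz)\sigma'(\<\bu^t,\bz\>)z_k\big],
\]
where $g_t = h_* - \hf_{\NN}(\cdot;\orho_t)$ is the residual and $a=\oa^0$ stays fixed in Phase~1. For the base case $k=1$, since $\bu^0 = \bzero$ and $g_0 = h_*$ we get $\dot{u}_1^0 = a m_1 \alpha_{\{1\}}$; Claim~\ref{claim:bound_ut_l1} ensures $\|\bu^t\|_1 \leq Ct$ so $|\<\bu^t,\bz\>|<\eta$, and Taylor's theorem with the bounded higher derivatives supplied by assumption $\rA 0'$ yields $u_1^t = \hat{u}_1^t + O(t^2)$.

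For the inductive step, assume $u_j^t = \hat{u}_j^t + O(t^{2^{j-1}+1})$ for all $j<k$. Expanding $g_t$ in the Fourier basis, the ODE becomes
\[
\dot{u}_k^t = a\sum_{S \subseteq [P]} \hat{g}_t(S)\,\E_{\bz}\!\big[\chi_{S\triangle\{k\}}(\bz)\sigma'(\<\bu^t,\bz\>)\big],
\]
where $\hat{g}_t(S) = \alpha_S - \E_{a'}\big[a'\,\E_{\bz}[\chi_S(\bz)\sigma(\<\bu^t(a'),\bz\>)]\big]$, and $\alpha_S\neq 0$ only for $S \in \cS_* := \{\{1\},\{1,2\},\ldots,\{1,\ldots,P\}\}$. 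I apply Lemma~\ref{lem:fourier_coeff_approx} to both Fourier expectations and set $\beta(T) := \sum_{j\in T} 2^{j-1}$; the induction hypothesis then gives $\prod_{j\in T} u_j^t = O(t^{\beta(T)})$.

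A term-by-term ordering identifies a unique leading $t^{2^{k-1}-1}$ contribution, arising from $S = \{1,\ldots,k\} \in \cS_*$ and equal to $a\,\alpha_{\{1,\ldots,k\}} m_k \prod_{j=1}^{k-1} u_j^t(1+O(t))$. The remaining $S \in \cS_*$ contribute at order $t^{2^{k-1}}$ or higher (read off directly from $\beta(S\triangle\{k\})$, whose next smallest values are $2^{k-1}$, from $S = \{1\}$ with $k \geq 2$, and $3\cdot 2^{k-1}-1$, from $S = \{1,\ldots,k+1\}$). For $S \notin \cS_*$ with $S \neq \emptyset$, I bound $|\hat{g}_t(S)| \leq Ct^{\beta(S)}$ via Lemma~\ref{lem:fourier_coeff_approx} and combine it with the elementary inequality $\beta(S)+\beta(S\triangle\{k\}) \geq 2^{k-1}$ (casework on whether $k \in S$); the case $S = \emptyset$ gives $\hat{g}_t(\emptyset) = 0$ by the symmetry of $\mu_a = \Unif([-1,1])$. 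The Taylor remainders $O(t^L)$ are absorbed since $L > 2^{P-1} \geq 2^{k-1}$.

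Replacing $\prod_{j=1}^{k-1} u_j^t$ by $\prod_{j=1}^{k-1} \hat{u}_j^t + O(t^{2^{k-1}})$ and using the algebraic identity $a\,\alpha_{\{1,\ldots,k\}} m_k \prod_{j=1}^{k-1}\hat{u}_j^t(a) = \tfrac{d}{dt}\hat{u}_k^t(a)$ (a direct check from the closed form of $\hat{u}_j^t$, which is in fact the recursion that defines the constant $2^{1-2^{k-1}}$ and the exponents $2^{(k-1-i)\vee 0}$), I get $\dot{u}_k^t = \dot{\hat{u}}_k^t + O(t^{2^{k-1}})$; integrating from $0$ with $u_k^0 = \hat{u}_k^0 = 0$ yields $|u_k^t - \hat{u}_k^t| \leq C t^{2^{k-1}+1}$. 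The hard part will be the bookkeeping that confirms no sub-leading term---especially contributions from $S \notin \cS_*$ with $1 \in S$, where $\hf_{\NN}(\cdot;\orho_t)(S)$ builds up along the trajectory---produces a $t^{2^{k-1}-1}$ contribution that would alter the leading coefficient; this reduces to the arithmetic inequality above together with the separate handling of $S = \emptyset$.
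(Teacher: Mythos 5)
Your plan follows the same architecture as the paper's proof—Fourier expansion, identifying the leading $t^{2^{k-1}-1}$ contribution from $S=\{1,\ldots,k\}$, and an induction on $k$ built on the arithmetic of $\beta(T)=\sum_{j\in T}2^{j-1}$. The algebraic identity $\dot{\hat u}_k^t=a\,\alpha_{\{1,\ldots,k\}}m_k\prod_{j<k}\hat u_j^t$ checks out, and the ordering of $\beta(S\triangle\{k\})$ correctly isolates the leading term. However, there are two genuine gaps.

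First, the claimed bound $|\hat g_t(S)|\leq Ct^{\beta(S)}$ for $S\notin\cS_*$ is circular. By Lemma~\ref{lem:fourier_coeff_approx}, $|\hf_{\NN}(S;\orho_t)|\lesssim \sup_{a'}\prod_{j\in S}|u_j^t(a')|+O(t^L)$, and getting $Ct^{\beta(S)}$ would require $|u_j^t|\lesssim t^{2^{j-1}}$ for every $j\in S$. The induction hypothesis supplies this only for $j<k$. For $j=k$ you are trying to prove precisely this, and for $j>k$ you only know the crude $|u_j^t|\leq Ct$ from Claim~\ref{claim:bound_ut_l1}. A concrete failure: for $k\geq 3$ and $S=\{1,k\}$, the contribution $\hat g_t(S)\E_\bz[\chi_{\{1\}}\sigma']$ is of size $\approx t^2|u_k^t|\lesssim t^2(\Delta_k^t+t^{2^{k-1}})$, and with only the a priori $\Delta_k^t\leq Ct$ this is $O(t^3)$, which dominates $t^{2^{k-1}}$ for $k\geq 3$. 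The paper avoids this by always decomposing $u_k^t=\hat u_k^t+(u_k^t-\hat u_k^t)$ and carrying $\Delta_k^t:=\sup_{s\leq t,a}|u_k^s-\hat u_k^s|$ explicitly through every bound, arriving at $|\dot u_k^t-\dot{\hat u}_k^t|\leq K(t^{2^{k-1}}+\Delta_k^t)+O(t^L)$ rather than the unconditional $O(t^{2^{k-1}})$ you claim.

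Second, and consequentially, the concluding step ``integrating from $0$ yields $|u_k^t-\hat u_k^t|\leq Ct^{2^{k-1}+1}$'' is incomplete. Since the error bound depends on $\Delta_k^t$ itself, a direct integral does not close; you need a Gr\"onwall argument (or an explicit bootstrap iterating the integration) to absorb the self-referential term. This is exactly how the paper closes the induction. A lesser issue: your dismissal of $\hat g_t(\emptyset)=0$ ``by symmetry of $\mu_a$'' glosses over the fact that this requires the exact parity of $u_j^t(\cdot)$ in $a$ (odd for $j=1$, even for $j\geq 2$), which must itself be established by a symmetry argument on the ODE; but in fact you do not need this claim at all, since $\hat g_t(\emptyset)\E_\bz[\chi_{\{k\}}\sigma']$ carries a factor of $u_k^t$ and is already absorbed once $\Delta_k^t$ is tracked.
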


\begin{proof}[Proof of Proposition \ref{prop:staircase_poly_approx}]
Denote $\hbu^t = (\hu_k^t)_{k \in [P]}$. Notice that 
\[
\frac{\de}{\de t} \hu_k^t (a) =  a \alpha_{\{1, \ldots ,k\}} m_k \prod_{j<k} \hu_j^t (a) \, .
\]
Denote $\Delta_k^t = \sup_{s \in [0,t]} \sup_{a \in [-1,+1]} |u_k^t(a) - \hu_k^t (a)|$. By Gr\"onwall's lemma, it is sufficient to show that $ \frac{\de}{\de t} \Delta_k^t \leq K (t^{2^{k-1}} + \Delta_k^t)$ for some constant $K>0$. We will consider $c$ sufficiently small to apply Lemma \ref{lem:fourier_coeff_approx}.

We recall the evolution equations: 
\[
\begin{aligned}
\frac{\de}{\de t} u_k^t = a\E_{\bz} [h_* (\bz) z_k \sigma ' ( \< \bu^t, \bz \>)] - a\E_{\bz} [\hf_{\NN} ( \bz ; \orho_t) z_k \sigma ' ( \< \bu^t, \bz \>)] \, .
\end{aligned}
\]
Let us first show that $|\E_{\bz} [\hf_{\NN} ( \bz) z_k \sigma ' ( \< \bu^t, \bz \>)]  | \leq \Delta_k^t + O(t^L)$. Denote the Fourier coefficients $\hf_{\NN} (S ; \orho_t) = \E_{\bz} [ \chi_S (\bz) \hf_{\NN} (\bz ; \orho_t)]$. If $k \in S$, then
\begin{equation}\label{eq:S_fNN_kin}
\begin{aligned}
 \big\vert \hf_{\NN} (S ; \orho_t) \big\vert  \leq &~ \int \big\vert a \E_{\bz} [ \chi_S (\bz) \sigma ( \< \bu^t (a) , \bz \> ) ] \big\vert \mu_a (\de a ) \\
 \leq &~ \int \Big\vert  m_{|S|} \prod_{ i \in S} u_i^t (a) \cdot (1 + O(t)) \Big\vert \mu_a (\de a )  + O(t^L)
 \leq  K \Delta_k^t + O(t^L) \, ,
 \end{aligned}
\end{equation}
where we used Lemma \ref{lem:fourier_coeff_approx}. Furthermore, note that $|\hf_{\NN} (S ; \orho_t) | \leq K$ for any $S \subseteq [P]$. By expanding $\hf_{\NN}$ in the Fourier basis, we get
\[
\begin{aligned}
|\E_{\bz} [\hf_{\NN} ( \bz) z_k \sigma ' ( \< \bu^t, \bz \>)]  | \leq&~ \sum_{S \subseteq [P]} \big\vert  \hf_{\NN} (S ; \orho_t)  \E_{\bz} [ \chi_{S \oplus k} (\bz) \sigma ' ( \< \bu^t , \bz \> )] \big\vert \\
\leq &~ K \Delta_k^t + O(t^L) + \sum_{S \subseteq [P], k\not\in S} K \big\vert \E_{\bz} [ \chi_{S \cup k} (\bz) \sigma ' ( \< \bu^t , \bz \> )] \big\vert \\
\leq &~ K \Delta_k^t + O(t^L) \, ,
\end{aligned}
\]
where we used Eq.~\eqref{eq:S_fNN_kin} in the second line and Lemma \ref{lem:fourier_coeff_approx} in the third line. We see therefore that 
\[
\Big\vert \frac{\de}{\de t } ( u_k^t - \hu_k^t ) \Big\vert \leq \Big\vert a \E_{\bz} [h_* (\bz) z_k \sigma ' ( \< \bu^t, \bz \>)] - a \alpha_{\{1, \ldots ,k\}} m_k \prod_{j<k} \hu_j^t (a) \Big\vert + K \Delta_k^t + O(t^L)\, .
\]
We can separate the first term into three contributions:
\[
\Big\vert \E_{\bz} [h_* (\bz) z_k \sigma ' ( \< \bu^t, \bz \>)] -  \alpha_{\{1, \ldots ,k\}} m_k \prod_{j<k} \hu_j^t (a) \Big\vert  \leq ({\rm I}) + ({\rm II}) + ({ \rm III}) \, ,
\]
where 
\[
\begin{aligned}
({\rm I}) =&~ \sum_{i < k } \Big\vert \alpha_{\{1 , \ldots , i\}} \E_{\bz} \big[ \chi_{\{1 , \ldots , i\} \cup \{ k\}} (\bz) \sigma ' ( \< \bu^t, \bz \>) \big] \Big\vert \leq K \Delta_k^t + O(t^L) \, , \\
({\rm II}) = &~  \Big\vert  \alpha_{\{1 , \ldots , k\}} \E_{\bz} \big[ \chi_{\{1 , \ldots , k-1\} } (\bz) \sigma ' ( \< \bu^t, \bz \>) \big]  -  \alpha_{\{1 , \ldots , k\}} m_k \prod_{j<k} \hu_j^t (a) \Big\vert \\
\leq &~ K \Big\vert m_k \prod_{i \in [k-1]} u_i^t  \cdot (1+ O(t)) - \prod_{i \in [k-1]} \hu_i^t \Big\vert + O(t^L) \\
\leq &~ K \sum_{i \in [k-1]}  \Delta_i^t \prod_{j \in [k-1], j\neq i} (\Delta_j^t + | \hu_j^t | ) +O(t^L) \, , \\
({\rm III}) = &~ \sum_{i >k} \Big\vert \alpha_{\{1 , \ldots , i\}} \E_{\bz} \big[ \chi_{\{1 , \ldots , i\} \setminus \{ k\}} (\bz) \sigma ' ( \< \bu^t, \bz \>) \big] \Big\vert \\
\leq &~ K \prod_{j \in [k-1]} (\Delta_j^t + | \hu_j^t | )  \sum_{i>k+1} | u_i^t |   +O(t^L) \leq K t \prod_{i \in [k-1]} (\Delta_i^t + | \hu_i^t | ) + O(t^L) \, ,
\end{aligned}
\]
where we used in the last line that $\| \bu^t \|_1 \leq Ct$ from Claim \ref{claim:bound_ut_l1}. In particular, notice that for any $i < k$, $\frac{\de}{\de t} \Delta_k^t \leq \Delta_i^t$. We can therefore prove recursively that $| \Delta_k^t | \leq O (t^{2^{k-1}+1})$ by noting that 1) $\Delta_1^t \leq K t^2 $; 2) $|\hu^t_k | = \Theta ( t^{2^{k-1}})$ and $\prod_{j<k} |\hu^t_j | \leq K t^{2^{k-1} - 1}$; and 3) $t^L = O( t^{2^{k-1}})$ for any $k \in [P]$, and do not contribute to the leading terms.
\end{proof}

\clearpage

\section{Generic MSP functions are strongly $O(d)$-SGD-learnable: Proof of Theorem~\ref{thm:discrete-msp} (discrete-time regime)}\label{app:discrete-msp-proof}

In this appendix, we prove Theorem~\ref{thm:discrete-msp}, which states that generic functions with MSP structure are strongly SGD-learnable in the $O(d)$-scaling. While the proof for vanilla staircases in Appendix \ref{ssec:vanillaproof} is done in the continuous-time regime, we use here the discrete-time regime as defined in Appendix \ref{app:strong-discrete}, with $O(1)$-steps of size $\eta = \Theta (1)$. Furthermore, we will consider the activation function to be a degree-$L$ polynomial, with $L$ sufficiently large. In Appendix \ref{ssec:genericmspproof}, we provide a more general proof of this result for smooth (non-polynomial) activations (see Theorem~\ref{thm:genericmspsuffrestated}) and using the continuous-time regime, with one technical caveat: the activation function needs to be perturbed at some point during training (the result holds almost surely over this perturbation, see Appendix \ref{ssec:discussionperturb} for a discussion on this technical caveat). 

Recall the definition of an MSP set structure.

\begin{definition}
We say that $\cS = \{S_1,\ldots,S_m\}$ is a \emph{Merged-Staircase Property (MSP) set structure} on the variables $z_1,\ldots,z_P$ if the sets are (without loss of generality) ordered so that for each $i \in [m]$, $|S_i \setminus \cup_{i' < i} S_{i'}| \leq 1$.
\end{definition}
Given an MSP set structure $\cS \subset 2^{[P]}$ and a function $h_* : \{+1,-1\}^P \to \R$, we say that $h_*$ has MSP structure $\cS$ if $h_*$ can be written as 
\[
h_* (\bz) = \sum_{S \in \cS} \alpha_S \chi_S (\bz) \, ,
\]
where $\alpha_S \in \R \sm \{0\}$ for all $S \in \cS$. In other words, $h_*$ has MSP structure $\cS$ if its nonzero Fourier coefficients are $\cS$.

Ideally, we would like prove that for any MSP set structure $\cS$, then any function $h_*$ with nonzero Fourier coefficients $\cS$ is strongly $O(d)$-SGD-learnable. However, there are degenerate examples of functions such as $h_*(\bz) = z_1 + z_2 + z_1z_3 + z_2 z_4$ which satisfy MSP structure but are not strongly $O(d)$-SGD-learnable (see Section~\ref{app:numerics}). Therefore, it is not possible to prove a result that holds for every MSP function. The existence of degenerate functions satisfying MSP also adds difficulty to the problem of showing that specific functions satisfying MSP are learnable.

Nevertheless, in this section we are able to show that for any MSP set structure $\cS$ there are very few degenerate functions $h_*$. In fact, almost all functions with MSP structure $\cS$ are non-degenerate and are strongly $O(d)$-SGD-learnable. 

More precisely, for any set structure $\cS \subseteq 2^{[P]}$, define the following measure over functions:

\begin{definition}[Definition~\ref{def:leb-msp} restated]
For any set structure $\cS \subseteq 2^{[P]}$ define the measure $\mu_{\cS}$ over functions $h_* : \{+1,-1\}^P \to \R$ induced by taking $h_*(\bz) = \sum_{S \subseteq [P]} \alpha_S \chi_S(\bx)$, where the Fourier coefficients satisfy $\alpha_S = 0$ if $S \not\in\cS$, and $(\alpha_S)_{S \in \cS}$ have Lebesgue measure on $\R^{|\cS|}$.
\end{definition}
For any MSP structure $\cS$, we prove that $h_*$ is almost surely strongly $O(d)$-SGD-learnable with respect to $\mu_{\cS}$:

\begin{theorem}[Theorem~\ref{thm:discrete-msp} restated]\label{thm:discrete-msp-restated}
For any MSP set structure $\cS \subseteq 2^{[P]}$, $h_*$ is strongly $O(d)$-SGD-learnable almost surely with respect to $\mu_{\cS}$, using activation function $\sigma(x) = (1+x)^L$ where $L = 2^{8P}$.
\end{theorem}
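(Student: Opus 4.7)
The plan is to transport the two-phase blueprint from the vanilla-staircase proof (Appendix~\ref{ssec:vanillaproof}) to the discrete-time regime introduced in Appendix~\ref{app:strong-discrete}. By Theorem~\ref{thm:discrete-equivalence} it suffices to exhibit, for each $\eps > 0$, discrete hyperparameters under which the \eqref{eq:d-DF-PDE} dynamics reaches $R(\orho_{k_0}) < \eps$. I take $\lambda^a = \lambda^w = 0$, $\mu_a = \Unif([-1,1])$, $\mu_w = \delta_0$ (so $\os^k \equiv 0$ throughout), and a two-phase schedule: in \emph{Phase~1} only the first-layer weights $\obu$ are updated for $T_1$ steps with a small step size $\eta_1$; in \emph{Phase~2} only the second-layer weights $\oa$ are updated for a further $k_0 - T_1$ steps.

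Phase~2 is a linear problem, exactly as in the vanilla case. Writing $\bg_k$ for the Fourier vector of the residual $h_*(\bz) - \hf_\NN(\bz;\orho_k)$, one has $\|\bg_{k+1}\|_2^2 \le (1 - c\eta_2)\|\bg_k\|_2^2$ with $c = \lambda_{\min}(\bK^{T_1})$, where $\bK^{T_1}(\bz,\bz') = \E_{a\sim\mu_a}[\sigma(\langle \obu^{T_1}(a),\bz\rangle)\sigma(\langle \obu^{T_1}(a),\bz'\rangle)]$. Hence the entire task reduces to producing a finite $T_1$ with $\lambda_{\min}(\bK^{T_1}) > 0$ for $\mu_{\cS}$-almost every $(\alpha_S)_{S\in\cS}$, after which $k_0 = T_1 + O(\log(1/\eps)/(c\eta_2))$ suffices.

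Phase~1 is the nonlinear part. The key structural fact enabled by the polynomial activation $\sigma(x) = (1+x)^L$ is that each discrete update is a polynomial map in $(\obu,\oa,\alpha, \eta_1)$, so $\obu^{T_1}(a)$ is a polynomial in $a$ whose coefficients are themselves polynomials in $(\alpha_S)_{S\in\cS}$. I will introduce a simplified dynamics $\hbu^k(a)$ obtained by truncating each step to the ``canonical'' contribution dictated by the MSP ordering $S_1,\ldots,S_m$ of $\cS$ (generalizing the leading-order approximation $\hu_k^t$ used for vanilla staircases, where at each step a Fourier set $S_j$ injects at most one new coordinate, giving a clean recursion for its associated weight). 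By construction $\hbu^{T_1}(a)$ is an explicit polynomial tuple; I build the corresponding kernel matrix $\hat\bK^{T_1}$, and a Gr\"onwall-style comparison in the discrete recursion shows that, for $\eta_1$ taken small enough as a function of $T_1$, $\lambda_{\min}(\bK^{T_1}) \ge \tfrac12 \lambda_{\min}(\hat\bK^{T_1})$. The problem then reduces to showing $\det(\hat\bK^{T_1}) \not\equiv 0$ as a polynomial in $(\alpha_S)_{S\in\cS}$ and $a$.

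The crux, and where I expect the main difficulty to lie, is this polynomial identity testing step for \emph{arbitrary} MSP structures. My plan is to combine a Schwartz--Zippel argument with an algebraic fact on linear independence of high-degree powers of polynomials: for $L$ sufficiently large --- and the choice $L = 2^{8P}$ is calibrated to leave ample room uniformly in $\cS$ --- the family $\{\prod_{i\in S} \hu_i^{T_1}(a)\}_{S \subseteq [P]}$ arising from Fourier-expanding $\sigma(\langle \hbu^{T_1}(a),\bz\rangle)$ stays linearly independent when $(\alpha_S)$ are treated as formal indeterminates, yielding a nonzero polynomial witness for $\det(\hat\bK^{T_1})$. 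Anti-concentration then implies the determinant is nonzero for Lebesgue-almost every $(\alpha_S)_{S\in\cS}$, and integrating $a \sim \Unif([-1,1])$ (which is itself anti-concentrated) preserves strict positivity. The genuinely combinatorial obstacle is pushing the leading-order induction through a general MSP ordering rather than a nested chain, since each $S_j$ may attach a fresh variable to any previously-``learned'' subset; one must identify a surviving leading monomial in $\det(\hat\bK^{T_1})$ uniformly over the choice of $\cS$, which is where the rapid growth $L = 2^{8P}$ is used to guarantee enough degree separation between the $2^P$ distinct products indexed by $S \subseteq [P]$.
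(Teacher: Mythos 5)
Your two-phase skeleton matches the paper's proof in every structural respect: discrete-time equivalence (Theorem~\ref{thm:discrete-equivalence}), Phase~1 first-layer training with $\oa$ frozen and $k_1 = P$ steps, Phase~2 linear training reduced to $\lambda_{\min}(\bK^{T_1})>0$, polynomial activation making the trained weights explicit polynomials in $a$ and the Fourier coefficients, a simplified dynamics $\hbu$ dropping the $\fNN$ interaction, and anti-concentration to pass from a nonzero polynomial witness to almost-sure learnability. Even the citation of ``linear independence of high-degree powers of polynomials'' is the right tool (Proposition~\ref{prop:newmanslater}, due to \cite{newman1979waring}). What is missing, and wrong as stated, is the one step that actually makes the argument go through for general MSP rather than vanilla staircases.

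You propose to show that the family $\{\prod_{i\in S}\hu_i^{T_1}(a)\}_{S\subseteq[P]}$ is linearly independent and that $L=2^{8P}$ ``guarantees enough degree separation between the $2^P$ distinct products.'' Two problems. First, the $a$-degree of $\prod_{i\in S}\hu_i$ is $\sum_{i\in S}o_i$ (with $o_i$ as in Claim~\ref{claim:oi-def}), which is fixed by the set structure $\cS$ and collides for non-vanilla MSPs regardless of $L$ --- e.g.\ $\cS=\{\{1\},\{2\},\{1,3\},\{2,4\}\}$ gives $o_1=o_2=1$, $o_3=o_4=2$, so $\hu_1\hu_2$, $\hu_3$, $\hu_4$ all have $a$-degree $2$; the paper notes exactly this failure mode of the vanilla-staircase argument. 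Increasing $L$ separates nothing here. Second, even if you establish linear independence of $\{\prod_{i\in S}\hu_i(a,\balpha)\}_S$ as polynomials in $(a,\balpha)$ jointly, this does \emph{not} imply $\det(\hat\bK^{T_1}(\balpha))\not\equiv 0$ as a polynomial in $\balpha$: the kernel averages over $a$ at fixed $\balpha$, and $\{a,\alpha a\}$ is already a counterexample (independent in $(a,\alpha)$, dependent in $a$ for every $\alpha$). The paper's mechanism is different and both subtleties are why: $\sigma(x)=(1+x)^L$ makes each feature $\sigma(\<\hbu(a),\bz\>)=(1+\tilde r_{\bz}(a))^L$ an $L$-th power of a low-degree polynomial; Newman--Slater is applied to the $\bz$-indexed family $\{(1+\tilde r_{\bz})^L\}_{\bz\in\{\pm1\}^P}$, with $m=2^P$ so $L\geq 8m^2 = 2^{2P+3}$ suffices (this is what $L=2^{8P}$ is calibrated for, not degree separation among products); the required pairwise non-proportionality is Claim~\ref{claim:rzdistinct-discrete}, proved by showing the leading monomials $\tilde q_{k_1,i}(\bphi,\bm)$ are pairwise distinct in $\bphi$ (Claims~\ref{claim:oi-def}--\ref{claim:oi-monomial-distinct}); and the passage to a determinant witness goes through a Wronskian over the $\bz$-indexed $L$-th powers, pulled back to $\det(\bN(\bzeta,\bphi,\bm))\not\equiv0$ by differentiating in the $\zeta_j$'s. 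Without the Newman--Slater/Wronskian step over the $\bz$-indexed family (rather than the $S$-indexed products), you have no mechanism to rescue the degree collisions. A smaller but real issue: the proposed bound $\lambda_{\min}(\bK^{T_1})\geq\tfrac12\lambda_{\min}(\hat\bK^{T_1})$ cannot follow from a crude Gr\"onwall comparison once $\lambda_{\min}(\hat\bK^{T_1})=\Theta(\eta^{2k})$ for $k>1$, since the $\fNN$ interaction contributes an $O(\eta)$ error that would swamp it; the paper instead compares determinant coefficients term-by-term (Lemma~\ref{lem:poly-anticoncentration-and-bar-hat-comparison}), weighting the $\bgamma$-coefficient by $\eta^{\|\bgamma\|_1}$.
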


\begin{remark}
We note that although $\sigma(x) = (1+x)^L$ does not satisfy Assumption $\rA 0$, we can instead use an activation function such that $\sigma(x) = (1+x)^L$ in the interval $(-1,1)$, and $\sigma (x)$ is smoothly thresholded outside this interval. In the proof, we control the growth of the first-layer weights and the input of the activation remains $|x| \leq 1$, so such a thresholding does not impact training.
\end{remark}

We also prove the following variation on the theorem, which shows that we can take activation function that is a polynomial of degree $L \geq 2^{8P}$ with random coefficients. This proves that almost surely any polynomial activation will work, so it does not hold just for activation $(1+x)^L$:

\begin{theorem}\label{thm:discrete-msp-more-activations}
For any MSP structure $\cS \subseteq 2^{[P]}$, and any $L \geq 2^{8P}$, if we draw $\bm \sim \mathrm{Unif}[-1,1]^{\otimes L+1}$, then $h_*$ is strongly $O(d)$-SGD-learnable almost surely with respect to $\mu_{\cS}$, using activation function $\sigma(x) = \sum_{i=0}^{L} m_i x^i$.
\end{theorem}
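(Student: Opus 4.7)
The plan is to lift the proof of Theorem~\ref{thm:discrete-msp-restated} from the specific activation $\sigma(x)=(1+x)^L$ to a generic degree-$L$ polynomial activation by viewing the key non-degeneracy condition as a polynomial identity in the joint variables $(\alpha,\bm)\in\R^{|\cS|}\times\R^{L+1}$. By Theorem~\ref{thm:discrete-equivalence}, it suffices to show that for a.e.\ $(\alpha,\bm)$ the \eqref{eq:d-DF-PDE} dynamics can be driven to arbitrarily small risk. I will use the same two-phase hyperparameter schedule as in Theorem~\ref{thm:discrete-msp-restated}: no regularization, $\mu_a=\Unif([-1,1])$, $\mu_w=\delta_0$ (so $\obu^0=\bzero$ and $\os^k\equiv 0$), then $k_1=k_1(\cS)$ steps training only the first-layer weights with a small constant step size $\eta_0$, followed by linear second-layer training to convergence.

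Phase~2 is linear, and the risk decays as $\exp(-c\,\lambda_{\min}(\bK^{k_1}))$ where $\bK^{k_1}$ is the $2^P\times 2^P$ kernel matrix obtained from $K^{k_1}(\bz,\bz')=\E_{a\sim\mu_a}[\sigma(\<\bu^{k_1}(a),\bz\>)\sigma(\<\bu^{k_1}(a),\bz'\>)]$. Because $\sigma$ is a polynomial and each Phase-1 step of \eqref{eq:d-DF-PDE} is a polynomial update, $\bu^{k_1}(a)$ is a polynomial in $a$ whose coefficients are polynomials in $(\alpha,\bm)$. Integrating against $\mu_a$ and taking the determinant, I obtain a polynomial $Q(\alpha,\bm)$ on $\R^{|\cS|+L+1}$ with $\det(\bK^{k_1})=Q(\alpha,\bm)$, whose degree is controlled purely by $k_1,L$, and $P$.

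The crux is to show that $Q$ is not identically zero. For this I will invoke Theorem~\ref{thm:discrete-msp-restated} as a witness: at the point $\bm^\star=(\binom{L}{i})_{i=0}^{L}$ corresponding to $\sigma(x)=(1+x)^L$, the same choices of $k_1$ and $\eta_0$ yield $\det(\bK^{k_1})\neq 0$ for $\mu_{\cS}$-almost every $\alpha$, so in particular there exists $\alpha^\star$ with $Q(\alpha^\star,\bm^\star)\neq 0$. Hence $Q$ is a non-identically-zero real polynomial on $\R^{|\cS|+L+1}$, and its zero set has Lebesgue measure zero. By Tonelli this gives: for $\Unif([-1,1])^{\otimes L+1}$-a.e.\ $\bm$, the section $\alpha\mapsto Q(\alpha,\bm)$ is a non-identically-zero polynomial, hence nonzero for $\mu_{\cS}$-a.e.\ $\alpha$, which is exactly the statement of the theorem.

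The main obstacle will be verifying that the $k_1$ and $\eta_0$ produced by the proof of Theorem~\ref{thm:discrete-msp-restated} depend only on the combinatorial data $(\cS,L,P)$ and not on the special value $\bm^\star$, so that the single polynomial $Q$ on the joint space is well-defined and the witness argument applies cleanly. A secondary technical issue is that $\sigma(x)=\sum m_ix^i$ is unbounded and so does not satisfy $\rA 0$; as in the remark following Theorem~\ref{thm:discrete-msp-restated}, I will smoothly truncate $\sigma$ outside $[-1,1]$ and argue by induction on the $k_1$ discrete steps that $|\<\bu^k(a),\bz\>|\leq 1$ throughout training (which holds uniformly for $|a|\leq 1$, $|m_i|\leq 1$, and small enough $\eta_0$, in the spirit of Claim~\ref{claim:bound_ut_l1}), so the truncation never affects the dynamics.
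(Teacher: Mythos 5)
Your overall strategy matches the paper's: treat the non-degeneracy of the Phase-2 kernel as a polynomial identity in the \emph{joint} variables $(\balpha,\bm)$, exhibit a witness point coming from the $(1+x)^L$ case to show the polynomial is nonzero, and then apply a Fubini/Tonelli argument to conclude for almost every $\bm$. The paper's proof of \cref{cor:nonzero-poly-discrete-msp} is exactly this witness-plus-specialization argument, applied not to $\det(\bK^{k_1})$ but to the \emph{formal} polynomial $\det(\bN(\bzeta,\bphi,\brho))$ defined through the recurrence in \cref{lem:pki-baru-discrete}. The difference matters: the formal object is a polynomial in $(\bzeta,\bphi,\brho)$ by construction and \cref{lem:nonzero-poly-discrete-msp} evaluates it at $\brho=\bm^\star$ without needing any statement about actual training trajectories, whereas your $Q(\balpha,\bm)=\det(\bK^{k_1})$ only equals a polynomial expression in the regime where the dynamics stay in $(-1,1)$ (\cref{claim:ubarwtbound}), which depends on $\eta_0$, $\|\balpha\|_\infty$, and $\|\bm\|_\infty$. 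This is precisely where your flagged ``main obstacle'' bites, and there is a second, unflagged issue: the witness $\bm^\star=(\binom{L}{i})_i$ has entries of order $2^L$ with $L=2^{8P}$, so $\bm^\star$ lies \emph{very far} outside the cube $[-1,1]^{L+1}$. To make $Q(\alpha^\star,\bm^\star)\ne 0$ meaningful, you must choose $\eta_0$ so small (say $\eta_0\lesssim 2^{-L}$) that the dynamics stay in $(-1,1)$ even for $\bm^\star$, and then check that the same $\eta_0$ still gives the polynomial identity $Q=\det(\bK^{k_1})$ on the relevant generic region; and for unbounded $\balpha$ (Lebesgue measure on $\R^{|\cS|}$) you must additionally localize to compacts $[-R,R]^{|\cS|}$ and take a countable union. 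None of these steps fails, but they are real work that the paper avoids entirely by keeping $\bzeta$ as a free variable and deferring the choice of $\eta$ to \cref{lem:intermediate-reduction}, where it is allowed to depend on $\balpha$. So: same idea, but the paper's factoring (formal polynomial identity in \cref{lem:nonzero-poly-discrete-msp}, then $\eta$-selection in \cref{lem:intermediate-reduction}) cleanly separates the algebra from the analysis, while your version entangles them and leaves a nontrivial gap at exactly the points you identified.
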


\subsection{Outline of the proof}

\paragraph{Choice of hyperparameters}

We train in the discrete-time regime with $\Theta(1)$ steps of size $\Theta(1)$ and $\Theta(d)$ batch size. Recall from \eqref{eq:d-DF-PDE} (Theorem \ref{thm:discrete-equivalence}) that it is sufficient to show for any $\eps >0$, there exist hyperparameters satisfying $\rA0$-$\rA2$, $\rD3$ such that \eqref{eq:d-DF-PDE} reaches $\eps$-risk. We consider the following hyperparameters. 

\begin{itemize}
    \item We do not regularize. I.e., $\lambda^w= 0$, and $\lambda^a = \lambda^w = 0$, same as Section~\ref{ssec:vanillaproof}.
    \item We initialize the first layer to deterministically $0$ weights, and the second layer to uniform random weights. I.e., we take $\mu_a = \Unif ( [ +1, -1])$ and $\mu_W = \delta_0$. This is the same as in the vanilla staircase proof of Section~\ref{ssec:vanillaproof}. For the dimension-free dynamics, this corresponds to taking  $(\oa^0 , \obu^0 , \os^0 ) \sim \orho_0$ with $\oa^0 \sim \Unif ( [ +1, -1])$, $\obu^0 = \bzero$ and $\os^0 = 0$. In particular, $\os^k = 0$ during the whole dynamics, which lets us ignore it and allows for a simpler analysis.
    \item Our learning rate schedule has two phases, with learning rate given by parameter $\eta > 0$:
\begin{description}
\item[Phase 1:] For $k_1$ steps we train the first layer weights $\obu^k$ while keeping the second layer weights fixed $\oa^k = \oa^0$. We set $\eta^a_k = 0$ and $\eta^w_k = \eta$ for $k \in \{0,\ldots,k_1-1\}$. 

\item[Phase 2:] For $k_2$ steps we train the second layer weights $\oa^k$ while keeping the first layer weights fixed at $\obu^k = \obu^{k_1}$. We set $\eta^a_k = \eta$ and $\eta^w_k = 0$ for $k \in \{k_1,\ldots,k_2 - 1\}$.
\end{description}
\end{itemize}

We also take $\eta > 0$ to be a small enough constant, and $b = \Omega(d)$ for a large enough constant depending on $P,\eps,\eta$. For the first phase, we will train for $k_1 = P$ time steps, since this turns out to be sufficient to prove learnability. For the second phase, we train for $k_2 = \Theta(1)$ time steps, where $k_2$ is a constant depending on $\eta, \eps$, and $P$, to be determined later. We prove that \eqref{eq:d-DF-PDE} with such hyperparameters will reach $\eps$-risk, which, by the equivalence stated Theorem~\ref{thm:discrete-equivalence}, implies the strong SGD-learnability in $O(d)$-scaling.

\paragraph{Assumption on the activation}
We will assume that on the interval $(-1,1)$ our activation is given by a polynomial of degree at most $L$. I.e., for all $x \in (-1,1)$, we have $\sigma(x) = \sum_{i=0}^L \frac{m_i}{i!} x^i$ for $\bm = [m_0,\ldots,m_L] = [\sigma(0),\sigma^{(1)}(0),\ldots,\sigma^{(L)}(0)]$.

\subsubsection{Phase 2 (linear training)}\label{sssec:phase2-discrete-msp}

Let us first present the analysis of Phase 2. We train the second layer and keep the first layer weights fixed. This is kernel gradient descent with kernel $K^{k_1} : \{+1,-1\}^P \times \{+1,-1\}^P \to \R$ given by
\begin{align*}
K^{k_1}(\bz,\bz') = \E_{a \sim \mu_a}[\sigma(\<\bu^{k_1}(a),\bz\>)\sigma(\<\bu^{k_1}(a),\bz'\>)].
\end{align*}
So the residual $g_k(\bz) = h_*(\bz) - \fNN(\bz;\bar\rho_k)$, evolves, for any $k \in \{k_1,\ldots,k_2-1\}$, as:
\begin{align*}
g_{k+1}(\bz) = g_k(\bz) - \eta \E_{\bz'}[K^{k_1}(\bz,\bz')g_k(\bz')].
\end{align*}
The evolution of the risk is given by:
\begin{align*}
R(\bar\rho_{k+1}) &= \frac{1}{2} \E[g_{k+1}(\bz)^2] \\
&= R(\bar\rho_k) - \eta \E_{\bz,\bz'}[g_{k}(\bz)K^{k_1}(\bz,\bz')g_{k}(\bz')] \\
&\qquad\qquad+ \frac{\eta^2}{2} \E_{\bz,\bz',\bz''}[K^{k_1}(\bz,\bz')K^{k_1}(\bz,\bz'')g_k(\bz')g_k(\bz'')] \\
&\leq R(\bar\rho_k) - \eta \Big(1 - \frac{\eta \lambda_{\mathrm{max}}(\bK^{k_1})}{2^{P+1}} \Big) \E_{\bz,\bz'}[g_{k}(\bz)K^{k_1}(\bz,\bz')g_{k}(\bz')],
\end{align*}
where $\bK^{k_1} = (K^{k_1}(\bz,\bz'))_{\bz,\bz'}$ is the $2^P \times 2^P$ kernel matrix. Note that $\lambda_{\mathrm{max}}(\bK^{k_1}) \leq \|\bK^{k_1}\|_F \leq 2^P \|\sigma\|_{\infty}^2 \leq 2^P K^2$. So if we take any learning rate $\eta \leq 1 / K^2$, we have
\begin{align*}
R(\bar\rho_{k+1}) &\leq R(\bar\rho_k) - \frac{\eta}{2} \E_{\bz,\bz'}[g_{k}(\bz)K^{k_1}(\bz,\bz')g_{k}(\bz')] \\
&\leq \Big(1-\frac{\eta \lambda_{\mathrm{min}}(\bK^{k_1})}{2^P} \Big)R(\bar\rho_{k}).
\end{align*}

Finally, note that $R(\bar\rho_{k_1}) \leq \frac{1}{2}(\|\sigma\|_{\infty} + \|h_*\|_{\infty})^2 \leq 2K^2$, so if we take any $k_2 \geq k_1 + \log(\eps / 2K^2) / \log(1 - \eta \lambda_{\mathrm{min}}(\bK^{k_1}) / 2^P)$, we ensure that $R(\bar\rho_{k_2}) \leq \eps$. It remains only to show that $\lambda_{\mathrm{min}}(\bK^{k_1}) \geq c$, for a constant $c > 0$ depending only on $\eta,h_*,P,k_1$, and $\bm$.

\subsubsection{Phase 1 (nonlinear training)}

Now let us show how to analyze Phase 1, and in particular how to prove that $\lambda_{\mathrm{min}}(\bK^{k_1})$ is bounded away from $0$.

\paragraph{Writing the weight evolution with a polynomial} First, we show that if we train for a constant number $k_1$ of steps, then we can write the weights obtained by the dimension-free dynamics as a constant-degree polynomial in the second-layer weights. This is because the activation is a polynomial in the interval $(-1,1)$, and the weights of the first layer do not grow enough to leave this interval.
\begin{lemma}[Training dynamics are given by a polynomial]\label{lem:pki-baru-discrete}
Let $\bxi = (\xi_{S,k})_{S \subseteq [P], 0 \leq k \leq k_1 -1} \in \R^{2^P k_1}$, $\zeta \in \R$, and $\brho \in \R^{L+1}$ be variables.

For each $i \in [P]$ define $p_{0,i}(\zeta,\bxi,\brho) \equiv 0$. For each $0 \leq k \leq k_1-1$, define $p_{k+1,i}(\zeta,\bxi,\brho)$ with the recurrence relation:
\begin{equation*}
\begin{aligned}
p_{k+1,i}(\zeta,\bxi,\brho) =&~ p_{k,i}(\zeta,\bxi,\brho) + \zeta \rho_1 \xi_{\{i\},k} \\
&~ + \zeta \sum_{r=1}^{L-1} \frac{\rho_{r+1}}{r!} \sum_{(i_1,\ldots,i_r) \in [P]^r} \xi_{\{i\} \oplus \{i_1\} \oplus \dots \{i_r\},k} \prod_{l=1}^r p_{k,i_l}(\zeta,\bxi,\brho)
\end{aligned}
\end{equation*}
There is a constant $c > 0$ depending only on $k_1,P,K$, such that for any $0 < \eta < c$,
\begin{align*}
\bar u_{i}^k (a)  = p_{k,i}(\eta a, \bbeta, \bm)
\end{align*}
where $\bbeta = (\beta_{S,k})_{S \subseteq [P], 0 \leq k \leq k_1 - 1}$ has values given by, for all $S \subseteq [P]$,
\begin{align*}
\beta_{S,k} &= \E[(-\fNN(\bz;\bar\rho_k) + h_*(\bz))\chi_S(\bz)].
\end{align*}
\end{lemma}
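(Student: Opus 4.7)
The plan is to prove the identity $\bar u_i^k(a) = p_{k,i}(\eta a, \bbeta, \bm)$ by induction on $k$. The base case $k=0$ is immediate: the chosen initialization gives $\obu^0 = \bzero$, matching $p_{0,i} \equiv 0$. For the inductive step, I would first simplify the \eqref{eq:d-DF-PDE} update under our Phase 1 hyperparameters ($\lambda^w = 0$, $\eta^w_k = \eta$, $\eta^a_k = 0$, $\os^0 = 0$, which force $\oa^k \equiv a$ and $\os^k \equiv 0$ throughout the phase) to obtain
\[
\bar u_i^{k+1} = \bar u_i^k + \eta a\, \E_{\bz}\bigl[g_k(\bz)\, \sigma'(\langle \bz, \obu^k\rangle)\, z_i\bigr],
\]
where $g_k(\bz) = h_*(\bz) - \fNN(\bz;\orho_k)$.

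The core of the proof is then a Fourier expansion of this expectation. Assuming (as justified in the next paragraph) that $|\langle \bz, \obu^k\rangle| < 1$ for every $\bz \in \{+1,-1\}^P$, I can substitute the polynomial representation $\sigma'(x) = \sum_{r=0}^{L-1} \frac{m_{r+1}}{r!} x^r$ which is valid on $(-1,1)$. Expanding $\langle \bz, \obu^k\rangle^r = \sum_{(i_1,\ldots,i_r) \in [P]^r} z_{i_1}\cdots z_{i_r}\, \bar u_{i_1}^k \cdots \bar u_{i_r}^k$ and using the Boolean identity $z_i z_{i_1}\cdots z_{i_r} = \chi_{\{i\}\oplus \{i_1\}\oplus \cdots\oplus \{i_r\}}(\bz)$ together with the definition $\beta_{S,k} = \E[g_k(\bz)\chi_S(\bz)]$, the $r=0$ term yields $m_1 \beta_{\{i\},k}$ and the $r\geq 1$ terms yield
\[
\sum_{r=1}^{L-1} \frac{m_{r+1}}{r!} \sum_{(i_1,\ldots,i_r)} \beta_{\{i\}\oplus\{i_1\}\oplus\cdots\oplus\{i_r\},k} \prod_{l=1}^r \bar u_{i_l}^k.
\]
Multiplying by $\eta a$, invoking the inductive hypothesis $\bar u_{i_l}^k = p_{k,i_l}(\eta a, \bbeta, \bm)$, and adding $\bar u_i^k = p_{k,i}(\eta a, \bbeta, \bm)$ on both sides reproduces exactly the recurrence defining $p_{k+1,i}$ under the substitution $\zeta = \eta a$, $\xi_{S,k} = \beta_{S,k}$, $\rho_r = m_r$.

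The only delicate step is to justify using the polynomial expansion of $\sigma'$ at each time, which requires $\max_i |\bar u_i^k| < 1/P$. I would handle this by a simple auxiliary induction: using $\|\sigma'\|_\infty \leq K$ and $|g_k(\bz)| \leq \|h_*\|_\infty + \|\fNN\|_\infty \leq K'$ (bounds that hold since $|a| \leq 1$ and $\sigma$ is bounded on the relevant interval), we get the a priori estimate $|\bar u_i^{k+1} - \bar u_i^k| \leq \eta K K'$, and therefore $\max_i|\bar u_i^k| \leq k_1 \eta K K'$ for every $k \leq k_1$. Choosing any $c < 1/(P k_1 K K' + 1)$ and restricting to $\eta < c$ guarantees $|\langle \bz, \obu^k\rangle| \leq P\max_i|\bar u_i^k| < 1$ throughout Phase 1, which validates the polynomial expansion at each inductive step and closes the argument. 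I expect no further obstacles: the remainder is a direct computation matching coefficients against the recurrence.
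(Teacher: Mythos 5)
Your proposal is correct and follows essentially the same route as the paper: both arguments establish an a priori boundedness estimate on $\obu^k$ (the paper's Claim~\ref{claim:ubarwtbound} controls $\|\obu^k\|_1$, you control $\|\obu^k\|_\infty$, but these differ only by a factor of $P$) to justify replacing $\sigma'$ by its degree-$(L-1)$ polynomial on $(-1,1)$, and then match the Fourier expansion of $\E_\bz[g_k(\bz)\langle\bz,\obu^k\rangle^r z_i]$ against the recurrence defining $p_{k+1,i}$ by induction on $k$. The only organizational difference is that the paper abstracts the update into a generic recurrence with a generic source $s_k(\bz)=\sum_S\xi_{S,k}\chi_S(\bz)$ so that Lemmas~\ref{lem:pki-baru-discrete} and~\ref{lem:pki-hatu-discrete} are proved in one shot, while you specialize immediately to $s_k=g_k$; since only Lemma~\ref{lem:pki-baru-discrete} was asked for, this is fine.
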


Because of the term $\fNN(\cdot;\bar\rho_k)$, which evolves nonlinearly, this is nontrivial to directly analyze. However, if the step size $\eta$ is taken small enough, then the interaction term $\fNN(\cdot; \bar\rho_k)$ is small, of order $O(\eta k)$, and we show that it can be ignored. Formally, we define the simplified dynamics $\hatbu^k(a)$ for each $0 \leq k \leq k_1$ by letting $\hatbu^0(a) = \bzero$ and inductively setting for each $k \in \{0,\ldots,k_1-1\}$,
\begin{align*}
\hatbu^{k+1}(a) = \hatbu^k(a) - \eta a_j^0 \EE_{\bx}[-h_*(\bz) \sigma'(\<\hatbu^k(a),\bz\>)\bz].
\end{align*}
This differs from the definition of the dynamics for $\bar\bu^k$ in that we have dropped the $\fNN(\bz;\rho_k)$ term in the update equation. By a similar argument, we may show:
\begin{lemma}[Simplified training dynamics are given by a polynomial]\label{lem:pki-hatu-discrete}
There is a constant $c > 0$ depending only on $k_1,P,K$, such that for any $0 < \eta < c$, any $i \in [P]$ and any $0 \leq k \leq k_1$, we have
$$\hat u_{i}^k(a) = p_{k,i}(\eta a, \balpha,\bm),$$
where we abuse notation (since $\balpha = (\alpha_S)_{S \subseteq [P]}$ otherwise) and let $\balpha = (\alpha_{S,k})_{S \subseteq [P], 0 \leq k \leq k_1 - 1}$ be given by
\begin{align*}
\alpha_{S,k} &= \alpha_S = \E[h_*(\bz) \chi_S(\bz)]
\end{align*}
\end{lemma}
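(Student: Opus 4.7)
\textbf{Proof plan for Lemma~\ref{lem:pki-hatu-discrete}.}
The proof will proceed by induction on $k$, mimicking the proof of Lemma~\ref{lem:pki-baru-discrete} but with the simplification that the interaction term $\fNN(\cdot; \bar\rho_k)$ no longer appears. The base case $k = 0$ is immediate since $\hat u_i^0(a) = 0$ and $p_{0,i}(\zeta, \bxi, \brho) \equiv 0$ by definition. For the inductive step, I write the update rule
\[
\hat u_i^{k+1}(a) = \hat u_i^k(a) + \eta a \,\EE_{\bz}\big[h_*(\bz)\,\sigma'(\langle \hatbu^k(a), \bz\rangle)\, z_i\big],
\]
and the task is to identify the expectation on the right with the polynomial expression in the recurrence defining $p_{k+1,i}$.

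To that end, I first need to ensure that the input $\langle \hatbu^k(a), \bz\rangle$ lies in the interval $(-1,1)$ on which $\sigma$ agrees with the polynomial $\sum_{r=0}^L (m_r/r!) x^r$. A crude bound $|\EE_\bz[h_*(\bz)\sigma'(\cdot)z_i]| \leq \|h_*\|_\infty \|\sigma'\|_\infty \leq K^2$ in each coordinate combined with $|a| \leq 1$ yields $\|\hat\bu^k(a)\|_\infty \leq \eta k K^2$, so $|\langle \hatbu^k(a), \bz\rangle| \leq P \eta k_1 K^2$, which is at most $1$ provided $\eta \leq c := 1/(2 P k_1 K^2)$. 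Thus for this choice of $c$, the polynomial expansion
\[
\sigma'(\langle \hatbu^k(a), \bz\rangle) = \sum_{r=0}^{L-1} \frac{m_{r+1}}{r!} \langle \hatbu^k(a), \bz\rangle^r
\]
is valid for all $k \leq k_1$.

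Substituting this expansion, expanding $\langle \hatbu^k(a), \bz\rangle^r = \sum_{(i_1,\ldots,i_r) \in [P]^r} z_{i_1}\cdots z_{i_r} \prod_{l=1}^r \hat u_{i_l}^k(a)$, and using that on the Boolean cube $z_i z_{i_1} \cdots z_{i_r} = \chi_{\{i\} \oplus \{i_1\} \oplus \cdots \oplus \{i_r\}}(\bz)$ together with $\EE_\bz[h_*(\bz) \chi_S(\bz)] = \alpha_S$, gives
\[
\EE_\bz\big[h_*(\bz)\sigma'(\langle \hatbu^k(a),\bz\rangle) z_i\big] = m_1 \alpha_{\{i\}} + \sum_{r=1}^{L-1} \frac{m_{r+1}}{r!} \sum_{(i_1,\ldots,i_r)} \alpha_{\{i\}\oplus\{i_1\}\oplus\cdots\oplus\{i_r\}} \prod_{l=1}^r \hat u_{i_l}^k(a).
\]
Multiplying by $\eta a$ and invoking the inductive hypothesis $\hat u_{i_l}^k(a) = p_{k,i_l}(\eta a, \balpha, \bm)$ matches the recurrence defining $p_{k+1,i}(\eta a, \balpha, \bm)$ exactly, where the role of the variables $\xi_{S,k}$ is played by $\alpha_{S,k} = \alpha_S$ (which is independent of $k$ in this simplified dynamics, as opposed to the $\beta_{S,k}$ in Lemma~\ref{lem:pki-baru-discrete}). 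This completes the inductive step.

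The proof is essentially bookkeeping once the boundedness of $\|\hatbu^k(a)\|_\infty$ is established; the only place where smallness of $\eta$ is used is in ensuring the validity of the polynomial form of $\sigma'$ on the relevant interval, so the constant $c$ can be chosen depending only on $k_1, P, K$. I do not anticipate any genuine obstacle here: the proof is strictly simpler than that of Lemma~\ref{lem:pki-baru-discrete} precisely because the $\fNN(\cdot;\bar\rho_k)$ term is absent, which is what allows the Fourier coefficients $\beta_{S,k}$ to be replaced by the fixed coefficients $\alpha_S$ of $h_*$.
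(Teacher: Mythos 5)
Your proof is correct and follows essentially the same approach as the paper: an a priori bound keeping $\langle\hatbu^k(a),\bz\rangle$ in $(-1,1)$ (the paper's Claim~\ref{claim:ubarwtbound}, phrased with $\|\cdot\|_1$ rather than your $\|\cdot\|_\infty$, though the two yield the same conclusion here), followed by Taylor-expanding $\sigma'$ and matching the resulting Fourier expansion to the recurrence for $p_{k,i}$ by induction on $k$. The only cosmetic difference is that the paper proves Lemmas~\ref{lem:pki-baru-discrete} and \ref{lem:pki-hatu-discrete} simultaneously through a single recurrence \eqref{eq:genericrecrelation} with abstract coefficients $\xi_{S,k}$, whereas you specialize directly to the simplified dynamics with $\xi_{S,k}=\alpha_S$; this is a presentational choice with no mathematical content.
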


We now show that the simplified dynamics $\hat\bu^k$ is a good enough approximation to $\bar\bu^k$, and it suffices to analyze $\hat\bu^k$.

\paragraph{Reducing to analyzing simplified dynamics} We lower-bound $\lambda_{\mathrm{min}}(\bK^{k_1})$ in terms of the determinant of a certain random matrix. Let $\bzeta = [\zeta_1,\ldots,\zeta_{2^P}]$ be a vector of $2^P$ variables. Define $\bM = \bM(\bzeta,\bxi,\brho) \in \R^{2^P \times 2^P}$ to be the matrix indexed by $\bz \in \{+1,-1\}^P$ and $j \in [2^P]$ with entries
\begin{align}\label{eq:bM-def-MSP}
M_{\bz,j}(\bzeta,\bxi,\brho) = \sum_{r=0}^{L} \frac{\rho_r}{r!} \left(\sum_{i=1}^P z_i p_{k,i}(\zeta_j,\bxi,\brho)\right)^r.
\end{align}

This matrix is motivated by the following fact:
\begin{lemma}\label{lem:plug-into-M}
There is a constant $c > 0$ depending only on $k_1,P,K$, such that for any $0 < \eta < c$, and any $\ba = [a_1,\ldots,a_{2^P}] \in [-1,1]^{2^P}$, we have
\begin{align*}
M_{\bz,j}(\eta \ba, \bbeta, \bm) &= \sigma(\<\barbu^{k_1}(a_j), \bz\>) \\
M_{\bz,j}(\eta \ba, \balpha, \bm) &= \sigma(\<\hatbu^{k_1}(a_j), \bz\>)
\end{align*}
\end{lemma}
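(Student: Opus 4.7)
}
The plan is to observe that this lemma is essentially a matter of substituting the identities from Lemma~\ref{lem:pki-baru-discrete} and Lemma~\ref{lem:pki-hatu-discrete} into the definition of the matrix entry $M_{\bz,j}$, and then checking that the arguments fed into $\sigma$ remain inside the interval $(-1,1)$ on which the equality $\sigma(x) = \sum_{r=0}^L \frac{m_r}{r!} x^r$ holds. No further dynamical analysis is needed.

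First I would apply Lemma~\ref{lem:pki-baru-discrete}: for $\eta$ smaller than the constant $c_0$ supplied by that lemma and for any $a_j \in [-1,1]$, we have $p_{k_1,i}(\eta a_j,\bbeta,\bm) = \bar u_i^{k_1}(a_j)$ for every $i \in [P]$. Substituting this into the definition
$$M_{\bz,j}(\eta\ba,\bbeta,\bm) = \sum_{r=0}^L \frac{m_r}{r!}\left(\sum_{i=1}^P z_i \, p_{k_1,i}(\eta a_j,\bbeta,\bm)\right)^{\!r}$$
immediately gives $M_{\bz,j}(\eta\ba,\bbeta,\bm) = \sum_{r=0}^L \frac{m_r}{r!} \langle \barbu^{k_1}(a_j),\bz\rangle^{r}$. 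The analogous substitution using Lemma~\ref{lem:pki-hatu-discrete} yields $M_{\bz,j}(\eta\ba,\balpha,\bm) = \sum_{r=0}^L \frac{m_r}{r!} \langle \hatbu^{k_1}(a_j),\bz\rangle^{r}$.

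To conclude it then suffices to show that $|\langle \barbu^{k_1}(a_j),\bz\rangle| < 1$ and $|\langle \hatbu^{k_1}(a_j),\bz\rangle| < 1$, because then the polynomial $\sum_{r=0}^L \frac{m_r}{r!} x^r$ coincides with $\sigma(x)$ on the relevant inputs. Using $\|h_*\|_\infty, \|\sigma'\|_\infty, \|\sigma\|_\infty \leq K$ and $|a_j| \leq 1$, a one-step bound on the update rule gives $\|\barbu^{k+1}(a_j)-\barbu^{k}(a_j)\|_\infty \leq \eta K'$ for a constant $K'$ depending only on $K$ (and similarly for $\hatbu^{k}$), so $\|\barbu^{k_1}(a_j)\|_\infty \leq \eta k_1 K'$. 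Since $\bz \in \{+1,-1\}^P$, this yields $|\langle \barbu^{k_1}(a_j),\bz\rangle| \leq P\eta k_1 K'$, which is less than $1$ whenever $\eta < c := \min(c_0, 1/(2Pk_1 K'))$. The same bound holds for $\hatbu^{k_1}(a_j)$.

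There is no real obstacle here; the lemma is a bookkeeping check that the polynomials $p_{k_1,i}$ were designed to implement exactly one step of the (simplified or true) dimension-free dynamics, and that the weights do not grow large enough to leave the polynomial window of $\sigma$. The same constant $c$ can be taken as the minimum of the constants furnished by Lemmas~\ref{lem:pki-baru-discrete} and~\ref{lem:pki-hatu-discrete} and the window-control bound above, so it depends only on $k_1, P, K$ as required.
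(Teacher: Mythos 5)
Your proposal is correct and follows essentially the same route as the paper: invoke Lemmas~\ref{lem:pki-baru-discrete} and~\ref{lem:pki-hatu-discrete} to rewrite $M_{\bz,j}$ in terms of $\<\barbu^{k_1},\bz\>$ and $\<\hatbu^{k_1},\bz\>$, then observe that these stay inside $(-1,1)$ where $\sigma$ agrees with its polynomial expansion. The only cosmetic difference is that the paper cites Claim~\ref{claim:ubarwtbound} (which gives $\|\barbu^{k_1}\|_1, \|\hatbu^{k_1}\|_1 \leq 1/2$ directly) while you re-derive an equivalent $\ell_\infty$ bound inline.
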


Using this we can show:
\begin{lemma}\label{lem:k-min-from-det-m}
There is a constant $c > 0$ depending on $K,P$ such that for any $0 < \eta < c$,
$$\lambda_{\mathrm{min}}(\bK^{k_1}) \geq c \E_{\ba \sim \mu_a^{\otimes 2^P}}[\det(\bM(\eta \ba, \bbeta, \bm))^2].$$
\end{lemma}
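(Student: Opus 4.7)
The plan is to lower bound $\lambda_{\min}(\bK^{k_1})$ by rewriting it as the expectation of a random square Gram matrix, and then using a simple singular-value inequality to extract the determinant. Concretely, the kernel can be symmetrized as follows: draw $2^P$ i.i.d.\ samples $\ba = (a_1,\ldots,a_{2^P}) \sim \mu_a^{\otimes 2^P}$, and build the $2^P \times 2^P$ matrix $\bF(\ba)$ indexed by $\bz \in \{+1,-1\}^P$ (rows) and $j \in [2^P]$ (columns) with entries
\[
F_{\bz,j}(\ba) = \sigma(\<\bar\bu^{k_1}(a_j),\bz\>).
\]
A direct expansion of the product shows $\E_{\ba}[\bF(\ba)\bF(\ba)^\sT] = 2^P \, \bK^{k_1}$, so it suffices to lower bound $\lambda_{\min}$ of the left-hand side.

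For any square $\bG \in \R^{2^P \times 2^P}$ with singular values $\sigma_1 \geq \cdots \geq \sigma_{2^P}$, we have $\det(\bG)^2 = \prod_i \sigma_i^2 \leq \sigma_{2^P}^2 \cdot \|\bG\|_{\op}^{2(2^P-1)}$, which yields the pointwise matrix inequality
\[
\bG\bG^\sT \;\succeq\; \sigma_{2^P}^2 \, \bI \;\succeq\; \frac{\det(\bG)^2}{\|\bG\|_{\op}^{2(2^P - 1)}}\, \bI.
\]
Applied to $\bG = \bF(\ba)$, we need a deterministic upper bound on $\|\bF(\ba)\|_{\op}$. Since Phase 1 only runs for $k_1 = P$ steps of size $\eta$, a discrete analogue of Claim~\ref{claim:bound_ut_l1} shows $\|\bar\bu^{k_1}(a)\|_1 \leq C\eta k_1$, so that for $\eta$ smaller than some constant $c_1 = c_1(K,P)$ we have $|\<\bar\bu^{k_1}(a),\bz\>| < 1$ and hence $|F_{\bz,j}(\ba)| \leq C$ for a constant $C = C(K,P)$. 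This gives $\|\bF(\ba)\|_{\op} \leq \|\bF(\ba)\|_F \leq 2^P C$ deterministically, and taking expectations,
\[
2^P \, \bK^{k_1} \;=\; \E_{\ba}[\bF(\ba)\bF(\ba)^\sT] \;\succeq\; \frac{\E_{\ba}[\det(\bF(\ba))^2]}{(2^P C)^{2(2^P-1)}}\, \bI.
\]

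To conclude, Lemma~\ref{lem:plug-into-M} (applicable for $\eta < c_2$, another constant depending only on $k_1, P, K$) identifies $F_{\bz,j}(\ba) = M_{\bz,j}(\eta\ba,\bbeta,\bm)$, whence $\det(\bF(\ba)) = \det(\bM(\eta\ba,\bbeta,\bm))$. Choosing $c := \min\{c_1,\,c_2,\,(2^P (2^P C)^{2(2^P-1)})^{-1}\}$ and taking any $\eta < c$, we obtain
\[
\lambda_{\min}(\bK^{k_1}) \;\geq\; c \cdot \E_{\ba \sim \mu_a^{\otimes 2^P}}\!\big[\det(\bM(\eta\ba,\bbeta,\bm))^2\big],
\]
as claimed. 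The argument is essentially mechanical; the only nontrivial ingredient is verifying that $\bar\bu^{k_1}$ stays small enough in $\ell^1$ to keep the activation in its polynomial regime, which is needed both for the operator norm bound and for invoking Lemma~\ref{lem:plug-into-M}.
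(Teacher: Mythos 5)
Your proof is correct and follows essentially the same route as the paper's: symmetrize the kernel as $2^P\,\bK^{k_1} = \E_{\ba}[\bF(\ba)\bF(\ba)^\sT]$, identify $\bF(\ba)$ with $\bM(\eta\ba,\bbeta,\bm)$ via Lemma~\ref{lem:plug-into-M}, then lower-bound $\lambda_{\min}(\bF\bF^\sT)$ by $\det(\bF)^2/\|\bF\|_{\op}^{2(2^P-1)}$ and use a deterministic operator-norm bound on $\bF$ to pull the constant out of the expectation. The only cosmetic quibbles are that the relevant first-layer $\ell^1$ bound in this section is Claim~\ref{claim:ubarwtbound} rather than Claim~\ref{claim:bound_ut_l1} (which is its continuous-time analogue), and the paper gets the entrywise bound $|F_{\bz,j}|\leq K$ directly from $\|\sigma\|_\infty\leq K$ rather than re-deriving it from the weight bound — but these do not affect correctness.
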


On the other hand, we can prove a lower-bound on $\E[\det(\bM(\eta \ba, \bbeta, \bm))^2]$ simply by lower-bounding the sum of magnitudes of coefficients of $\det(\bM(\bzeta, \balpha, \bm))$ when viewed as a polynomial in $\bzeta$. This is because of (a) the fact that $\det(\bM(\bzeta, \balpha, \bm))$ and $\det(\bM(\bzeta, \bbeta, \bm))$ have coefficients in $\bzeta$ that are $O(\eta)$-close for $\eta$ small, and (b) the fact that polynomials anti-concentrate over random inputs:
\begin{lemma}\label{lem:poly-anticoncentration-and-bar-hat-comparison}
There is $D > 0$ depending only on $P,k_1,L$, and there are $C,c > 0$ depending only on $P,k_1,K,L$ such that if we write $$\det(\bM(\bzeta, \balpha, \bm)) = \sum_{\bgamma \in \{0,\ldots,D\}^{2^P}} h_{\bgamma} \bzeta^{\bgamma},$$ then
$$\E_{\ba \sim \mu_a^{\otimes 2^P}}[\det(\bM(\eta \ba, \bbeta, \bm))^2] \geq c \sum_{\bgamma \in \{0,\ldots,D\}^{2^P}} \eta^{2\|\bgamma\|_1}\max(0,|h_{\bgamma}| - C\eta)^2.$$
\end{lemma}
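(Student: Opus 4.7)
The plan is to combine a polynomial anti-concentration argument (on the random initialization $\ba$) with a perturbation estimate between the coefficients of $\det(\bM(\bzeta,\balpha,\bm))$ and of $\det(\bM(\bzeta,\bbeta,\bm))$, viewed as polynomials in $\bzeta$. Write $\tilde q(\bzeta) := \det(\bM(\bzeta,\bbeta,\bm)) = \sum_\bgamma \tilde h_\bgamma\,\bzeta^\bgamma$ and $q(\bzeta) := \det(\bM(\bzeta,\balpha,\bm)) = \sum_\bgamma h_\bgamma\,\bzeta^\bgamma$. Reading off the recursion for $p_{k,i}$ in Lemma \ref{lem:pki-baru-discrete} and the definition $M_{\bz,j}(\bzeta,\bxi,\brho)=\sum_{r=0}^L \frac{\rho_r}{r!}(\sum_i z_i\, p_{k_1,i}(\zeta_j,\bxi,\brho))^r$, both $q$ and $\tilde q$ have total degree in $\bzeta$ bounded by some $D = D(P,k_1,L)$; this is the $D$ in the statement.

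Next I would establish the coefficient perturbation $|\tilde h_\bgamma - h_\bgamma| \leq C\eta$. The key input is $\|\bbeta-\balpha\|_\infty \leq C'\eta$, which I would prove as follows: at $k=0$ we have $\bar\bu^0 = \bzero$ and, since $\mu_a = \Unif([-1,1])$ is symmetric, $\fNN(\cdot;\bar\rho_0) \equiv 0$, so $\beta_{S,0} = \alpha_S$; by induction on $k \leq k_1$, the discrete update yields $\|\bar\bu^k(a)\|_\infty \leq C''\eta$, and expanding $\fNN(\bz;\bar\rho_k) = \int \oa\sigma(\<\bar\bu^k(\oa),\bz\>)\,\mu_a(\de \oa)$ around $\bar\bu^k = \bzero$ gives $\|\fNN(\cdot;\bar\rho_k)\|_\infty = O(\eta)$, hence $|\beta_{S,k}-\alpha_S| = O(\eta)$. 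Because each coefficient of $\det(\bM(\bzeta,\bxi,\bm))$ in $\bzeta$ is itself a polynomial in $\bxi$ with $\bxi$-degree bounded in terms of $(P,k_1,L)$ and with all quantities ($\bxi,\bm$) lying in a bounded set (by $\rA 0$, $\rA 1$, and boundedness of $\balpha,\bbeta$), this polynomial map is Lipschitz in $\bxi$ on the relevant compact domain, giving $|\tilde h_\bgamma - h_\bgamma| \leq C\eta$ with $C = C(P,k_1,K,L)$.

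For the anti-concentration step, I use that the monomials $\{\bzeta^\bgamma : \bgamma\in\{0,\ldots,D\}^{2^P}\}$ are linearly independent in $L^2(\Unif([-1,1])^{\otimes 2^P})$, so their Gram matrix $\bG$ with entries $G_{\bgamma,\bgamma'} = \E_{\ba}[\ba^{\bgamma+\bgamma'}]$ satisfies $\lambda := \lambda_{\min}(\bG) > 0$, where $\lambda$ depends only on $D$ and $P$. Plugging $\bzeta = \eta\ba$ into $\tilde q$ and expanding,
\begin{equation*}
\E_{\ba}[\tilde q(\eta\ba)^2] \;=\; \sum_{\bgamma,\bgamma'} \eta^{\|\bgamma\|_1+\|\bgamma'\|_1}\,\tilde h_\bgamma\tilde h_{\bgamma'}\,G_{\bgamma,\bgamma'} \;\geq\; \lambda \sum_\bgamma \eta^{2\|\bgamma\|_1}\,\tilde h_\bgamma^{\,2},
\end{equation*}
by positive definiteness of $\bG$ applied to the rescaled coefficient vector $(\eta^{\|\bgamma\|_1}\tilde h_\bgamma)_\bgamma$.

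Finally I combine the two pieces: by the reverse triangle inequality, $|\tilde h_\bgamma| \geq \max(0,|h_\bgamma|-C\eta)$, so $\tilde h_\bgamma^{\,2} \geq \max(0,|h_\bgamma|-C\eta)^2$, and substituting into the previous display gives the claim with $c := \lambda$. The main obstacle is the coefficient-perturbation step: tracking how $\bbeta = \balpha + O(\eta)$ propagates through the multilinear structure of the determinant cleanly enough to get a uniform constant $C$ depending only on $(P,k_1,K,L)$. The anti-concentration part, by contrast, is a clean Gram-matrix argument with a dimension-free rescaling in $\eta$.
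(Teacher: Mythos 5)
Your proof is correct and follows the same two-step structure as the paper's argument (an $O(\eta)$ perturbation bound on the coefficients of the determinant as a polynomial in $\bzeta$, followed by an anti-concentration estimate). The only substantive difference is in how you carry out the anti-concentration step. The paper proves it by expanding the polynomial in the tensor-product Legendre basis (Lemma~\ref{lem:polynomial-anticoncentration}), using orthogonality of Legendre polynomials to get $\E[h(\bu)^2] = \sum g_\balpha^2$ and then a coefficient-comparison between the monomial and Legendre bases, and afterwards scaling via Lemma~\ref{lem:not-shifted-polynomial-anticoncentration}. You instead observe directly that $\E_{\ba}[\tilde q(\eta\ba)^2] = \bv^\sT \bG \bv$ where $\bv_\bgamma = \eta^{\|\bgamma\|_1}\tilde h_\bgamma$ and $\bG$ is the monomial Gram matrix in $L^2(\Unif([-1,1]^{\otimes 2^P}))$, and invoke $\lambda_{\min}(\bG) > 0$ by linear independence of the monomials. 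This is a bit more self-contained (no Legendre expansion needed) and makes the $\eta^{\|\bgamma\|_1}$ rescaling entirely transparent by absorbing it into the coefficient vector; the paper's version, by contrast, delivers a slightly stronger inequality in terms of $(\sum_\bgamma |\tilde h_\bgamma|\eta^{\|\bgamma\|_1})^2$, which is not needed here. Both give the same dependence of the constant $c$ on $(P,k_1,L)$ through $D$. The rest of your argument --- $\fNN(\cdot;\bar\rho_0)\equiv 0$ by symmetry of $\mu_a$, $\|\bar\bu^k\|$ remaining $O(\eta)$ for $k \leq k_1$, hence $\|\fNN(\cdot;\bar\rho_k)\|_\infty = O(\eta)$ and $\|\bbeta-\balpha\|_\infty = O(\eta)$, and then the Lipschitz estimate for the polynomial map $\bxi\mapsto h_\bgamma(\bxi)$ on the compact domain $\|\bxi\|_\infty\le 2K$, $\|\bm\|_\infty \le K$ --- matches Claim~\ref{claim:alpha-beta-close} and the paper's coefficient-comparison step, and the final reverse-triangle step is identical.
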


Combining the above lemmas, it holds that if $\det(\bM(\bzeta,\balpha,\bm))$ is a nonzero polynomial in $\bzeta$, then $h_*$ is strongly-$O(d)$ learnable:

\begin{lemma}\label{lem:intermediate-reduction}
Suppose that $\det(\bM(\bzeta,\balpha,\bm)) \not\equiv 0$ as a polynomial in $\bzeta$. Then the function $h_*(\bz) = \sum_{S \subseteq [P]} \alpha_S \chi_S(\bz)$ is strongly $O(d)$-SGD-learnable with any activation function $\sigma$ that is equal to $\sigma(x) = \sum_{i=0}^L \frac{m_i}{i!}x^i$ on the interval $x \in (-1,1)$.
\end{lemma}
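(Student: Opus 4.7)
The plan is to chain together the preceding lemmas and invoke the equivalence between strong $O(d)$-SGD-learnability and global convergence of \eqref{eq:d-DF-PDE} (Theorem~\ref{thm:discrete-equivalence}). Fix $\eps > 0$. I will exhibit hyperparameters $(\sigma, b, \lambda^a, \lambda^w, \{\eta^a_k, \eta^w_k\}_{k \in [0,k_0]})$ satisfying $\rA0$--$\rA2$, $\rD3$ such that the \eqref{eq:d-DF-PDE} dynamics reaches risk at most $\eps$. I will follow the two-phase schedule described in the outline: set $\lambda^a = \lambda^w = 0$, initialize with $\mu_a = \mathrm{Unif}([-1,1])$ and $\mu_W = \delta_0$, train the first layer for $k_1 = P$ steps with step size $\eta$ (Phase~1), then train the second layer for $k_2$ additional steps with the same step size (Phase~2), where $\eta$ and $k_2$ are to be chosen.

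For the first step I would fix the step size. By assumption, $\det(\bM(\bzeta, \balpha, \bm))$ is a nonzero polynomial in $\bzeta$, so there exists at least one multi-index $\bgamma^* \in \{0,\ldots,D\}^{2^P}$ with $h_{\bgamma^*} \neq 0$. Using Lemma~\ref{lem:poly-anticoncentration-and-bar-hat-comparison}, for any $\eta$ small enough that $C\eta \le |h_{\bgamma^*}|/2$, one obtains
\[
\E_{\ba \sim \mu_a^{\otimes 2^P}}\big[\det(\bM(\eta \ba, \bbeta, \bm))^2\big] \;\ge\; c\, \eta^{2\|\bgamma^*\|_1} \bigl(|h_{\bgamma^*}|/2\bigr)^2 \;=:\; c_0(\eta) > 0.
\]
Feeding this into Lemma~\ref{lem:k-min-from-det-m} yields $\lambda_{\min}(\bK^{k_1}) \ge c' c_0(\eta) =: \kappa > 0$, provided $\eta$ is also smaller than the constants required by Lemmas~\ref{lem:pki-baru-discrete}--\ref{lem:k-min-from-det-m}. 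So I would fix $\eta$ to be the minimum of all these constants and of $1/K^2$ (the constant required by the Phase~2 analysis in Section~\ref{sssec:phase2-discrete-msp}).

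With $\eta$ fixed, the Phase~2 analysis of Section~\ref{sssec:phase2-discrete-msp} gives the geometric decay
\[
R(\orho_{k+1}) \;\le\; \Bigl(1 - \frac{\eta \kappa}{2^P}\Bigr) R(\orho_k), \qquad k \in \{k_1, \ldots, k_1 + k_2 - 1\},
\]
starting from $R(\orho_{k_1}) \le 2K^2$. Hence choosing
\[
k_2 \;\ge\; \frac{\log(2K^2/\eps)}{-\log\!\bigl(1 - \eta\kappa/2^P\bigr)}
\]
ensures $R(\orho_{k_1 + k_2}) \le \eps$. All hyperparameters chosen (the activation is fixed, $b$ is taken large enough that Theorem~\ref{thm:discrete-bSGD-to-DF} gives a close approximation, $\lambda^a = \lambda^w = 0$, and the step sizes and number of steps as above) satisfy $\rA0$--$\rA2$ and $\rD3$ with constants depending only on $P,\eps,\eta,\kappa$. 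By Theorem~\ref{thm:discrete-equivalence}, this establishes strong $O(d)$-SGD-learnability of $h_*$.

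The only delicate point of the argument is tracking the dependencies: $\eta$ must be small enough simultaneously to (i)~make the polynomial representations of Lemmas~\ref{lem:pki-baru-discrete} and \ref{lem:pki-hatu-discrete} valid (the input of the activation stays in $(-1,1)$), (ii)~satisfy the hypothesis of Lemma~\ref{lem:plug-into-M} and Lemma~\ref{lem:k-min-from-det-m}, (iii)~allow the anti-concentration bound of Lemma~\ref{lem:poly-anticoncentration-and-bar-hat-comparison} to survive the perturbation from $\balpha$ to $\bbeta$, and (iv)~be smaller than $1/K^2$ so that the Phase~2 risk decay holds. All four constraints give upper bounds on $\eta$ that depend only on $P, k_1, K, L, h_*$ (through $|h_{\bgamma^*}|$), so a common valid choice exists. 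There is no genuine obstacle here beyond bookkeeping, since the heavy lifting is already carried out by the preceding lemmas; the present step merely assembles them and invokes the equivalence theorem.
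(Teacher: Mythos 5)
Your proof is correct and takes essentially the same approach as the paper: fix a nonzero coefficient $h_{\bgamma^*}$, apply Lemma~\ref{lem:poly-anticoncentration-and-bar-hat-comparison} and Lemma~\ref{lem:k-min-from-det-m} with $\eta$ below the several constant thresholds to get $\lambda_{\min}(\bK^{k_1})$ bounded away from zero, then use the Phase~2 geometric decay and invoke Theorem~\ref{thm:discrete-equivalence}. The only cosmetic differences are that you pick an arbitrary nonzero $\bgamma^*$ rather than the argmax of $|h_{\bgamma}|$ and use the tighter exponent $2\|\bgamma^*\|_1$ in place of the paper's cruder $2^P D$, neither of which changes the argument.
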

\begin{proof}
Let $k_1$ be a constant depending on $P$, and let $C,c > 0$ be constants depending on $k_1,P,K,L$ such that Lemmas~\ref{lem:k-min-from-det-m} and \ref{lem:poly-anticoncentration-and-bar-hat-comparison} hold. Then taking any learning rate $$0 < \eta < \min \Big(c,\max_{\bgamma \in \{0,\ldots,D\}^{2^P}} |h_{\bgamma}| / (2C) \Big),$$ we have
\begin{align}\label{eq:Kmin-bound-smoothed-msp}
\lambda_{\mathrm{min}}(\bK^{k_1}) \geq c^2 \eta^{2^P D} |h_{\bgamma}|^2 / 4 > 0,
\end{align}
which is a nonnegative constant that does not depend on $d$. So by the analysis of Phase 2 in Section~\ref{sssec:phase2-discrete-msp}, we can set $k_2$ to be a large enough constant that $R(\bar\rho_{k_2}) \leq \eps$. By Theorem~\ref{thm:discrete-equivalence} (which gives the equivalence between \eqref{eq:d-DF-PDE} and strong $O(d)$-SGD-learnability in the discrete-time setting), this implies strong $O(d)$-SGD-learnability.
\end{proof}

\paragraph{Analyzing simplified dynamics}
By the above arguments, the problem has been reduced to proving that $\det(\bM(\bzeta,\balpha,\bm)) \not\equiv 0$ as a polynomial in $\bzeta$. In other words, by Lemma~\ref{lem:plug-into-M}, this means that it suffices to analyze the simplified dynamics $\hat\bu^k$.

We wish to prove that $\det(\bM(\bzeta,\balpha,\bm)) \not\equiv 0$ almost surely over the choice of $\balpha$. Since we take $h_*(\bz) = \sum_{S \in \cS} \alpha_S \chi_S(\bz)$ to be a generic function satisfying MSP, we could hope that it would be sufficient to prove that $\det(\bM(\bzeta,\bxi,\bm)) \not\equiv 0$ as a polynomial over $\bzeta$ and $\bxi$. However, there is an important technical subtlety. Although $\balpha = (\balpha_S)_{S \in \cS}$ can be chosen to be generic, the vector $(\balpha_{S,k})_{S \subseteq [P], k \in \{0,\ldots,k_1-1\}}$ has the constraints that $\alpha_{S,k} = \alpha_S$ for all $S,k$, and that $\alpha_{S} = 0$ for all $S \not\in \cS$. To take this into account, let $\bphi = (\phi_S)_{S \in \cS}$ be a vector of variables and define the following matrix $\bN(\bzeta,\bphi,\brho) \in \R^{2^P \times 2^P}$, indexed by $\bz \in \{+1,-1\}^P$ and $j \in [2^P]$:
\begin{align}\label{eq:N-construction-discrete-msp}
N_{\bz,j}(\bzeta,\bphi,\brho) = M_{\bz,j}(\bzeta,\bxi,\brho) \mid_{\xi_{S,k} = 0 \mbox{ for all } S \not\in \cS, \mbox{ and } \xi_{S,k} = \phi_S \mbox{ for all } S \in \cS}.
\end{align}
The matrix $\bN$ differs from $\bM$ only in that we have changed the variables from $(\xi_{S,k})_{S,k}$ to variables $(\phi_S)_{S \in \cS}$, effectively incorporating the constraints on $\balpha$. This is helpful, because suppose that we can prove that
\begin{align}\label{eq:suffices-poly-nonzero}
\det(\bN(\bzeta,\bphi,\bm)) \not\equiv 0 \mbox{  as a polynomial in $\bzeta$ and $\bphi$}.
\end{align}
Then almost surely over the Lebesgue measure on $(\alpha_S)_{S \in \cS}$, we have that
$\det(\bN(\bzeta,\balpha,\bm)) \not\equiv 0$ as a polynomial over $\bzeta$. And indeed, $\det(\bN(\bzeta,\balpha,\bm)) \equiv \det(\bM(\bzeta,\balpha,\bm)) \not\equiv 0$, which is what we wanted to show. So it suffices to prove \eqref{eq:suffices-poly-nonzero}.

We prove \eqref{eq:suffices-poly-nonzero} by analyzing the recurrence relations for $p_{k,i}$ to show that to first-order the polynomials $p_{k_1,i}$ are distinct for all $i \in [P]$, and then leveraging the algebraic result of \cite{newman1979waring} that large powers of distinct polynomials are linearly independent. We show:

\begin{lemma}\label{lem:nonzero-poly-discrete-msp}
Suppose that $L \geq 2^{8P}$ and let $m_i = i! \binom{L}{i}$ for all $0 \leq i \leq L$, corresponding to activation function $\sigma(x) = (1+x)^L$. Also let $k_1 = P$. Then $\det(\bN(\bzeta,\bphi,\bm)) \not\equiv 0$ (i.e., \eqref{eq:suffices-poly-nonzero} holds).
\end{lemma}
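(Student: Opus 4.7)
The strategy is to reduce $\det(\bN) \not\equiv 0$ to an $\R$-linear independence statement about the polynomials $\{p_{P,i}\}_{i \in [P]}$, and then to establish this linear independence by an inductive analysis using the MSP structure.

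\textbf{Stage 1 (Waring-type reduction).} With $\sigma(x)=(1+x)^L$ we have $m_r = L!/(L-r)!$ and hence
$N_{\bz,j}(\bzeta,\bphi,\bm) = \bigl(1+\sum_{i\in[P]} z_i\, p_{P,i}(\zeta_j,\bphi,\bm)\bigr)^L =: g_\bz(\zeta_j,\bphi)^L$.
Suppose we can show that the $P$ polynomials $\{p_{P,i}(\zeta,\bphi,\bm)\}_{i\in[P]}$ are $\R$-linearly independent in $\R[\zeta,\bphi]$. Then for distinct $\bz,\bz'\in\{\pm 1\}^P$, the difference $g_\bz - g_{\bz'} = \sum_i (z_i - z_i')\, p_{P,i}$ is a nontrivial $\{\pm 2,0\}$-combination, hence nonzero, so the $g_\bz$ are pairwise distinct polynomials. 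Specializing $\bphi \leftarrow \bphi^\star$ generically preserves this distinctness in $\R[\zeta]$, and a Newman-style result on linear independence of high powers of pairwise distinct univariate polynomials (the $L$-th powers are $\R$-linearly independent as soon as $L$ exceeds an explicit function of the number and degrees of the polynomials, which $L \geq 2^{8P}$ amply covers for $n=2^P$ polynomials of degree $\leq 2^{O(P)}$) implies that the $2^P$ functions $\zeta \mapsto g_\bz(\zeta,\bphi^\star)^L$ are linearly independent. Choosing $\zeta_1,\ldots,\zeta_{2^P}$ accordingly produces an invertible specialization of $\bN$, so $\det(\bN)\not\equiv 0$.

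\textbf{Stage 2 (Normalizing the MSP ordering).} WLOG $\cup\cS=[P]$. A bubble-sort argument rearranges $\cS=(S_1,\ldots,S_m)$, preserving MSP, so that the first $P$ sets each introduce a new variable: whenever $S_k\subseteq T_{k-1}$ (non-introducing) and some $k'>k$ satisfies $|S_{k'}\setminus T_{k'-1}|=1$, swap them; the new prefix-unions only grow, so the MSP inequalities at all subsequent steps remain valid. Iterating places the $P$ introducing sets first. Denote by $j_1,\ldots,j_P\in[P]$ the new variables (a permutation of $[P]$), and let $k(i)\in[P]$ be the step introducing $i$.

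\textbf{Stage 3 (Linear independence via leading $\zeta$-coefficients).} A short induction, combined with the fact that $p_{k',j}\equiv 0$ for $k'<k(j)$, shows that the leading-in-$\zeta$ term of $p_{P,j_l}$ equals that of $p_{l,j_l}$, namely $\zeta^{w_l}Q_l(\bphi)$ for an integer $w_l$ and nonzero polynomial $Q_l\in\R[\bphi]$ given by
$Q_l(\bphi)=\sum_{S'\in\mathcal{A}_l} m_{|S'|}\,|S'\setminus\{j_l\}|!\,\phi_{S'}\prod_{i\in S'\setminus\{j_l\}} Q_{k(i)}(\bphi)$,
where $\mathcal{A}_l\subseteq\cS$ is the set of $S'\ni j_l$ with $S'\setminus\{j_l\}\subseteq T_{l-1}$ that minimize $1+\sum_{i\in S'\setminus\{j_l\}}w_i=w_l$. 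The crucial structural point is that the collections of minimizers are pairwise disjoint across $l$: any $S'\in\cS$ valid as a minimizer at two steps $l<l'$ would satisfy $S'\subseteq T_{l-1}\cup\{j_l\}$, forcing $j_{l'}\in T_{l-1}\cup\{j_l\}$ --- impossible since $j_{l'}\notin T_{l'-1}\supseteq T_{l-1}$ and $j_{l'}\neq j_l$. This disjointness, combined with a parity/XOR argument (any recurrence contribution of $\phi_{S'}$ to $p_{k',j_{l''}}$ with $l''<l$ and $S'$ a minimizer at step $l$ would require $j_l$ in the XOR, which vanishes because $p_{k'-1,j_l}\equiv 0$), ensures that each $Q_l$ carries a distinguished $\phi_{S^\star}$-variable absent from all $Q_{l'}$ at the same $\zeta$-level. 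Hence $\{Q_l\}$ are $\R$-linearly independent, and grouping a hypothetical relation $\sum_l c_l p_{P,j_l}=0$ by $\zeta$-degree yields $c_l=0$ for all $l$.

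\textbf{Main obstacle.} The delicate point is Stage 3: verifying that the leading coefficients $\{Q_l\}$ are $\R$-linearly independent when multiple sets in $\cS$ may simultaneously achieve the minimum $\zeta$-degree at a given step (so $Q_l$ is a sum of several monomials rather than a single one). The parity/XOR argument underlying the claim that a distinguished $\phi_{S^\star_l}$ survives in $Q_l$ but in no other $Q_{l'}$ at the same $\zeta$-level requires careful case analysis and is where the MSP hypothesis is truly used. Pinning down the precise Waring threshold to match $L\geq 2^{8P}$ is a secondary technical chore.
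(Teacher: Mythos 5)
Your high-level strategy matches the paper's: fix $\bm$ so that $\sigma(x)=(1+x)^L$, observe $N_{\bz,j}=(1+r_\bz(\zeta_j))^L$, show the $r_\bz$ are pairwise distinct (equivalently, no nonzero $\{-2,0,2\}$-combination of the $p_{P,i}$ vanishes), invoke the Newman--Slater result that large powers of pairwise non-proportional polynomials are linearly independent, and deduce $\det \bN\not\equiv 0$. Where you diverge is in the reduction used to control the leading $\zeta$-coefficients of the $p_{P,i}$, and this is exactly where your argument is incomplete.

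The paper's reduction is sharper and avoids the obstacle you flag. Rather than bubble-sorting the full $\cS$ and then reasoning about minimizer collections $\mathcal{A}_l$, the paper first specializes $\phi_S \gets 0$ for all $S$ outside a \emph{minimal} MSP sub-structure $\cS'\subseteq\cS$ consisting of exactly $P$ sets, one per variable, with $i\in S_i$ and $S_i\subseteq[i]$. After this specialization, the recurrence for the leading $\zeta$-coefficient of $p_{k,i}$ (their $\tilde q_{k,i}$) involves exactly one contributing set, so by a one-line induction $\tilde q_{k_1,i}$ is a single nonzero \emph{monomial} in $\bphi$, and the distinctness of these monomials for $i\ne i'$ is immediate from a divisibility check ($\phi_{S_i}$ divides $\tilde q_{k_1,i}$ but not $\tilde q_{k_1,i'}$). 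Your Stage~3 keeps all sets, so your $Q_l$ is a sum of monomials, and establishing the needed independence at a fixed $\zeta$-level requires the disjointness-of-minimizers plus parity arguments you sketch — and, as you note yourself, this is the delicate part you do not carry out. (Your formula for $Q_l$ is also not quite right: the leading $\zeta$-coefficient accumulates over the $k_1$ training steps via a recurrence, not in one shot, so the coefficients are not simply $m_{|S'|}|S'\setminus\{j_l\}|!$.) In short: the paper's minimal-MSP specialization makes the ``multiple-minimizers'' headache disappear, and that is the key idea your proposal is missing.

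Two smaller points. First, Newman--Slater requires that the polynomials be pairwise non-\emph{proportional}, not merely distinct; the paper handles this by noting that $\zeta$ divides every $p_{k,i}$, so $(1+r_\bz)(0)=1$ for all $\bz$, forcing the constant of proportionality to be $1$ and reducing to distinctness — you should say this explicitly. Second, your concluding ``choose $\zeta_1,\dots,\zeta_{2^P}$'' step is valid but the paper's Wronskian computation ($\det\bH(\zeta)$ being a mixed partial of $\det\bN$ evaluated on the diagonal) is cleaner and constructive; either works.
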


This also yields the immediate corollary:

\begin{corollary}\label{cor:nonzero-poly-discrete-msp}
Suppose that $L \geq 2^{8P}$, and let $\bm \sim \mathrm{Unif}[-1,1]^{\otimes L+1}$, corresponding to a random polynomial activation function. Then $\det(\bN(\bzeta,\bphi,\bm)) \not \equiv 0$ (i.e., \eqref{eq:suffices-poly-nonzero} holds) almost surely over $\bm$.
\end{corollary}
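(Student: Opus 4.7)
The plan is to reduce the corollary to Lemma~\ref{lem:nonzero-poly-discrete-msp} via a standard Zariski/measure-zero argument in the coefficient variables $\brho$. First I would observe that, by construction of the polynomials $p_{k,i}$ through the recurrence in Lemma~\ref{lem:pki-hatu-discrete}, each entry $N_{\bz,j}(\bzeta,\bphi,\brho)$ is a polynomial in the joint variables $(\bzeta,\bphi,\brho)$; consequently $\det(\bN(\bzeta,\bphi,\brho))$ is itself a polynomial in $(\bzeta,\bphi,\brho)$. Expanding in the $(\bzeta,\bphi)$-monomials I would write
$$\det(\bN(\bzeta,\bphi,\brho)) \;=\; \sum_{\bgamma,\bdelta} Q_{\bgamma,\bdelta}(\brho)\; \bzeta^{\bgamma}\,\bphi^{\bdelta},$$
where each $Q_{\bgamma,\bdelta}$ is a polynomial in $\brho \in \R^{L+1}$. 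The event $\det(\bN(\bzeta,\bphi,\bm)) \equiv 0$ (as a polynomial in $(\bzeta,\bphi)$) is exactly the event that $Q_{\bgamma,\bdelta}(\bm) = 0$ simultaneously for every $(\bgamma,\bdelta)$.

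Next I would invoke Lemma~\ref{lem:nonzero-poly-discrete-msp} at the specific choice $m_i^\star = i!\binom{L}{i}$: this produces a point $\bm^\star \in \R^{L+1}$ at which $\det(\bN(\bzeta,\bphi,\bm^\star)) \not\equiv 0$, and hence at least one $Q_{\bgamma^\star,\bdelta^\star}$ satisfies $Q_{\bgamma^\star,\bdelta^\star}(\bm^\star) \neq 0$. Thus $Q_{\bgamma^\star,\bdelta^\star}$ is a non-identically-zero polynomial on $\R^{L+1}$. The zero locus of any such polynomial has Lebesgue measure zero (a standard induction on the number of variables using Fubini and the fact that a nonzero univariate polynomial has finitely many roots). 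Since the law of $\bm \sim \mathrm{Unif}([-1,1]^{L+1})$ is absolutely continuous with respect to Lebesgue measure, $Q_{\bgamma^\star,\bdelta^\star}(\bm) \neq 0$ almost surely, which forces $\det(\bN(\bzeta,\bphi,\bm)) \not\equiv 0$ in $(\bzeta,\bphi)$ almost surely. No genuine obstacle arises here; the only point requiring a moment's verification is the polynomiality of the entries of $\bN$ in the triple $(\bzeta,\bphi,\brho)$, which follows directly by induction on $k$ from the recurrence for $p_{k,i}$.
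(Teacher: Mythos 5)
Your proof is correct and follows the same approach as the paper's, which simply notes (in two sentences) that Lemma~\ref{lem:nonzero-poly-discrete-msp} witnesses $\det(\bN(\bzeta,\bphi,\brho)) \not\equiv 0$ as a polynomial in $(\bzeta,\bphi,\brho)$ and then appeals to the fact that its zero set over $\brho$ is Lebesgue-null. You make explicit the coefficient-wise decomposition $\sum_{\bgamma,\bdelta} Q_{\bgamma,\bdelta}(\brho)\,\bzeta^{\bgamma}\bphi^{\bdelta}$ and the measure-zero argument for the nonzero coefficient polynomial, which is precisely the detail the paper leaves implicit.
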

\begin{proof}
Lemma~\ref{lem:nonzero-poly-discrete-msp} implies that $\det(\bN(\bzeta,\bphi,\brho))$ is a nonzero polynomial in $\bzeta,\bphi,\brho$. Since we choose $\bm \sim \mathrm{Unif}[-1,1]^{\otimes L+1}$, this means that $\det(\bN(\bzeta,\bphi,\bm)) \not\equiv 0$ almost surely over the choice of $\bm$.
\end{proof}

This allows us to prove Theorems~\ref{thm:discrete-msp} and \ref{thm:discrete-msp-more-activations}.

\begin{proof}[Proof of Theorem~\ref{thm:discrete-msp}]
Taking $m_i = i! \binom{L}{i}$ corresponds to activation function $\sigma(x) = (1+x)^L$. By Lemma~\ref{lem:nonzero-poly-discrete-msp}, we have $\det(\bN(\bzeta,\balpha,\bm)) \not\equiv 0$ almost surely over $\balpha$ with respect to the Lebesgue measure. So by Lemma~\ref{lem:intermediate-reduction}, $h_*(\bz) = \sum_{S \in \cS} \alpha_S \chi_S(\bz)$ is strongly $O(d)$-SGD-learnable with activation $\sigma(x) = (1+x)^L$, almost surely over $h_*$ with respect to $\mu_{\cS}$.
\end{proof}

\begin{proof}[Proof of Theorem~\ref{thm:discrete-msp-more-activations}]
The argument is the same, except using Corollary~\ref{cor:nonzero-poly-discrete-msp}.
\end{proof}

\subsection{Proof of Lemmas~\ref{lem:pki-baru-discrete}, \ref{lem:pki-hatu-discrete}, and \ref{lem:plug-into-M}}

We show that if the learning rate $\eta$ is small then for $0 \leq k \leq k_1$ the weights of $\barbu^k$ and $\hatbu^k$ remain small enough that the activation $\sigma$ only ever has inputs in the range $(-1,1)$, meaning that we can treat the activation $\sigma$ as exactly given by the polynomial $\sum_{i=0}^L \frac{m_i}{i!} x^i$.
\begin{claim}\label{claim:ubarwtbound}
For any time step $0 \leq k \leq k_1$ any $j \in [N]$, and any learning rate $\eta < 1/ (4 K^2  P k )$, and any $a \in [-1,1]$ we have $$\|\barbu^k(a)\|_1, \|\hatbu^k(a)\|_1 \leq 2\eta K^2 P k \leq 1/2.$$
\end{claim}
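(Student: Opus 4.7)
\textbf{Proof plan for Claim~\ref{claim:ubarwtbound}.} The plan is to prove both bounds simultaneously by a straightforward induction on $k$, using only the boundedness of $\sigma$, $\sigma'$, the target $h_*$, and the support of $\mu_a$. The base case $k=0$ is immediate since $\barbu^0(a) = \hatbu^0(a) = \bzero$. For the inductive step, I would examine the \eqref{eq:d-DF-PDE} update equation coordinate-by-coordinate.

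Concretely, for the simplified dynamics $\hatbu^k$, the $i$-th coordinate of the update is
\begin{equation*}
\hat u_i^{k+1}(a) - \hat u_i^k(a) = \eta a \, \E_{\bz}\bigl[h_*(\bz) \, \sigma'(\<\hatbu^k(a),\bz\>) \, z_i\bigr],
\end{equation*}
and since $|a|\leq 1$, $|z_i|=1$, $\|h_*\|_\infty \leq K$ (from $\rA 1$), and $\|\sigma'\|_\infty \leq K$ (from $\rA 0$), each coordinate moves by at most $\eta K^2$ in absolute value, so $\|\hatbu^{k+1}(a) - \hatbu^k(a)\|_1 \leq \eta K^2 P$. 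Telescoping from $\hatbu^0 = \bzero$ yields $\|\hatbu^k(a)\|_1 \leq \eta K^2 P k \leq 2\eta K^2 P k$.

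For the true dimension-free dynamics $\barbu^k$, the update has the extra residual factor $h_*(\bz) - \fNN(\bz; \bar\rho_k)$. The key observation is that $\fNN(\bz;\bar\rho_k)$ is uniformly bounded: since the second layer weights remain fixed at the initialization $\oa^0 \sim \Unif([-1,+1])$ throughout Phase 1, we have
\begin{equation*}
|\fNN(\bz;\bar\rho_k)| \leq \E_{\oa^0 \sim \mu_a}\bigl[|\oa^0|\cdot|\sigma(\<\barbu^k(\oa^0),\bz\>)|\bigr] \leq \|\sigma\|_\infty \leq K,
\end{equation*}
independently of $k$ (this uses only $\rA 0$ and boundedness of the support of $\mu_a$, not any inductive hypothesis on $\barbu^k$). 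Therefore $\|h_* - \fNN(\cdot;\bar\rho_k)\|_\infty \leq 2K$, and the same coordinate-wise estimate gives $\|\barbu^{k+1}(a) - \barbu^k(a)\|_1 \leq 2\eta K^2 P$, yielding $\|\barbu^k(a)\|_1 \leq 2\eta K^2 P k$ by telescoping. The assumption $\eta < 1/(4K^2 P k)$ then gives $2\eta K^2 P k \leq 1/2$, completing the claim.

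There is no real obstacle here; the only subtlety worth flagging is that the bound on $\fNN$ must be established \emph{before} invoking any induction on $\barbu^k$, which is why it is useful that Phase 1 keeps the second layer frozen at the initialization (so the crude bound $\|\fNN\|_\infty \leq \|\sigma\|_\infty$ holds uniformly in $k$).
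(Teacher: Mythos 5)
Your proof is correct and follows essentially the same approach as the paper's: induction (equivalently, telescoping) on $k$, with the key observation that $|\fNN(\bz;\bar\rho_k)| \leq \|\sigma\|_\infty \leq K$ holds unconditionally from $\rA0$ and $|a| \leq 1$, not from any inductive hypothesis. Your explicit remark that the frozen second layer is what makes this crude $\fNN$ bound uniform in $k$ is a correct and slightly sharper articulation of a point the paper leaves implicit.
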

\begin{proof}
The proof is by induction on $k$. The base case is clear since $\barbu^0 = \hatbu^0 = \bzero$. For the inductive step, $\fNN(\bz;\bar\rho_k) \leq \EE_{a \sim \mu_a}[|a| |\sigma(\<\barbu^k,\bz\>)|] \leq \|\sigma\|_{\infty} \leq K$, since $a \sim \mathrm{Unif}[-1,1]$.
Therefore
\begin{align*}
\|\barbu^{k+1}(a)\|_1 &\leq \|\barbu^k(a)\|_1 + \eta \|\EE_{\bx}[(\fNN(\bz;\bar\rho_k) + h_*(\bx))a \sigma'(\<\barbu^k, \bz\>) \bz]\|_1 \\
&\leq \|\barbu^k(a)\|_1 + 2\eta K^2 P \leq 2\eta K^2 P k.
\end{align*}
The bound for $\|\hat\bu^k(a)\|_1$ is similar. 
\end{proof}

This allows us to prove Lemmas~\ref{lem:pki-baru-discrete} and \ref{lem:pki-hatu-discrete}.

\begin{proof}[Proof of Lemmas~\ref{lem:pki-baru-discrete} and \ref{lem:pki-hatu-discrete}]
Let $\zeta \in \R$, $\bxi = (\xi_{S,k})_{S \subseteq [P], k \in \{0,\ldots,k_1-1\}}$, and $\brho \in \R^{L+1}$ be variables. Define $s_0,\ldots,s_{k_1-1} : \{+1,-1\}^P \to \R$ to be $s_k(\bz) = \sum_{S \subseteq [P]} \xi_{S,k} \chi_S(\bz)$. Consider the recurrence relation $\bnu^k \in \R^P$, where we initialize $\bnu^0 = \bzero$ and, for $0 \leq k \leq k_1-1$,
\begin{align}\label{eq:genericrecrelation}
\bnu^{k+1} = \bnu^k + \zeta \E_{\bz}\Big[s_k(\bz) \sum_{r=0}^{L-1} \frac{\rho_{r+1}}{r!}\<\bnu^k,\bz\>^r \bz \Big].
\end{align}
Substituting in $\zeta = \eta a$ and $\brho = \bm$, this recurrence relation is satisfied by $\bar\bu^k(a)$ with $s_k(\bz) = -\fNN(\bz;\bar\rho_k) + h_*(\bz) = \sum_{S} \beta_{S,k} \chi_S(\bz)$ and by $\hat\bu^k(a)$ with $s_k(\bz) = h_*(\bz) = \sum_{S} \alpha_S \chi_S(\bz)$. This is because $|\<\barbu^k,\bz\>|, |\<\hatbu^k,\bz\>| \leq 1/2 < 1$ by Claim~\ref{claim:ubarwtbound} and in the interval $(-1,1)$ $\sigma(x) = \sum_{r=0}^L \frac{m_r}{r!} x^r$. 

It remains to show that
$$\nu_i^k = p_{k,i}(\zeta,\bxi,\brho).$$
The proof is by induction on $k$. For $k = 0$, it is true that $p_{0,i}(\zeta, \bxi) = 0 = \nu_i^0$. For the inductive step, notice that for any $r \geq 1$ and $i \in [d]$, we can write
\begin{align*}
\E_{\bz}[s_k(\bz)\<\bnu^k,\bz\>^r z_i] &= \E_{\bz} \Big[s_k(\bz) z_i \sum_{(i_1,\ldots,i_r) \in [P]^r} \prod_{l=1}^r \nu_{i_{l}}^k z_{i_{l}} \Big] \\
&= \sum_{(i_1,\ldots,i_r) \in [P]^r}
\E_{\bz} \Big[s_k(\bz)\chi_i(\bz)\prod_{l=1}^r \chi_{i_l}(\bz)\Big] \prod_{l=1}^r p_{k,i_l}(\zeta,\bxi,\brho) \\
&= \sum_{(i_1,\ldots,i_r) \in [P]^r} \xi_{\{i\} \oplus \{i_1\} \oplus \dots \oplus \{i_r\},k} \prod_{l=1}^r p_{k,i_l}(\zeta,\bxi,\brho),
\end{align*}
and $\E_{\bz}[s_k(\bz) \<\bnu^k,\bz\>^0 z_i] = \E_{\bz}[s_k(\bz) z_i] = \xi_{\{i\},k}$.
The inductive step follows by linearity of expectation.
\end{proof}

Finally, we prove Lemma~\ref{lem:plug-into-M}:
\begin{proof}[Proof of Lemma~\ref{lem:plug-into-M}]
This is immediate from Lemmas~\ref{lem:pki-baru-discrete} and \ref{lem:pki-hatu-discrete}, using the fact from Claim~\ref{claim:ubarwtbound} that $\|\barbu^{k_1}(a)\|_1, \|\hatbu^{k_1}(a)\|_1 \leq 1/2$, so $\<\barbu^{k_1}(a),\bz\>, \<\hatbu^{k_1}(a),\bz\> \in (-1/2,1/2)$, and in this interval $\sigma(x) = \sum_{i=0}^L \frac{m_i}{i!}x^i$.
\end{proof}

\subsection{Proof of Lemma~\ref{lem:k-min-from-det-m}}

\begin{proof}[Proof of Lemma~\ref{lem:k-min-from-det-m}]
For short-hand write $\bB(\ba) = \bM(\eta \ba, \bbeta, \bm)$. By Lemma~\ref{lem:plug-into-M}, $B_{\bz,j}(\ba) = \sigma(\<\bar\bu^{k_1}(a_j),\bz\>)$, so
\begin{align*}
\bK^{k_1}_{\bz,\bz'} &= \E_{a \sim \mu_a}[\sigma(\<\bar\bu^{k_1}(a),\bz\>)\sigma(\<\bar\bu^{k_1}(a),\bz'\>)] \\
&= \E_{\ba \sim \mu_a^{\otimes 2^P}}\Big[\frac{1}{2^P}\sum_{j=1}^{2^P}\sigma(\<\bar\bu^{k_1}(a_j),\bz\>)\sigma(\<\bar\bu^{k_1}(a_j),\bz'\>)\Big] \\
&= \frac{1}{2^P}\E_{\ba \sim \mu_a^{\otimes 2^P}}[\bB(\ba) \bB(\ba)^{\top}].
\end{align*}
So $\lambda_{\mathrm{min}}(\bK^{k_1}) \geq \frac{1}{2^P}\E_{\ba \sim \mu_a^{\otimes 2^P}}[\lambda_{\mathrm{min}}(\bB(\ba) \bB(\ba)^{\top})]$, and $$\lambda_{\mathrm{min}}(\bB(\ba) \bB(\ba)^{\top}) \geq \det(\bB(\ba))^2 / (\lambda_{\mathrm{max}}(\bB(\ba)))^{2^{P+1}-2} \geq c \det(\bB(\ba))^2,$$ for $c = 1 / (2^P K)^{2^{P+1}-2} > 0$.
\end{proof}

\subsection{Proof of Lemma~\ref{lem:poly-anticoncentration-and-bar-hat-comparison}}

Let us first show that $\beta_{S,k} = \E[(-\fNN(\bz;\bar\rho_k) + h_*(\bz))\chi_S(\bz)]$ is close to $\alpha_{S,k} = \alpha_S = \E[h_*(\bz)\chi_S(\bz)]$.
\begin{claim}\label{claim:alpha-beta-close}
There are constants $C,c > 0$ depending on $k_1,P,K$ such that for any $0 < \eta < c$, any $S \subseteq [P]$, and any $k \in \{0,\ldots,k_1-1\}$,
\begin{align*}
|\beta_{S,k} - \alpha_{S,k}| \leq C\eta
\end{align*}
\end{claim}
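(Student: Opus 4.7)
}

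The plan is to rewrite the difference as an inner product with the current network, and then exploit two facts: (i) the first-layer weights $\barbu^k(a)$ have $\ell_1$-norm of order $\eta$ throughout Phase 1, and (ii) the second-layer initialization $\mu_a = \Unif([-1,1])$ is mean-zero. Concretely, writing out the two quantities,
\begin{equation*}
\beta_{S,k} - \alpha_{S,k} \;=\; \E_\bz\!\bigl[(-\fNN(\bz;\bar\rho_k) + h_*(\bz))\chi_S(\bz)\bigr] - \E_\bz\!\bigl[h_*(\bz)\chi_S(\bz)\bigr] \;=\; -\,\E_\bz\!\bigl[\fNN(\bz;\bar\rho_k)\chi_S(\bz)\bigr],
\end{equation*}
so it suffices to show that $\|\fNN(\cdot;\bar\rho_k)\|_\infty \leq C\eta$ for some constant $C = C(k_1,P,K)$; the claim then follows since $|\chi_S| \leq 1$.

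Next, I would invoke Claim~\ref{claim:ubarwtbound} (which was already established in this appendix) to bound $\|\barbu^k(a)\|_1 \leq 2\eta K^2 P k_1$ uniformly in $a \in [-1,1]$ and in $k \in \{0,\ldots,k_1\}$, provided $\eta$ is smaller than the absolute constant appearing in that claim. Since $|\<\barbu^k(a),\bz\>| \leq \|\barbu^k(a)\|_1 \leq 1/2$, Assumption $\rA 0$ (which gives $\|\sigma'\|_\infty \leq K$) yields the expansion
\begin{equation*}
\sigma\bigl(\<\barbu^k(a),\bz\>\bigr) \;=\; \sigma(0) + \Delta(a,\bz), \qquad |\Delta(a,\bz)| \leq K\,\|\barbu^k(a)\|_1 \leq 2\eta K^3 P k_1.
\end{equation*}

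Plugging this into the definition of $\fNN$ and using that $\E_{a\sim\mu_a}[a] = 0$ by symmetry of $\mu_a = \Unif([-1,1])$, the $\sigma(0)$ contribution vanishes:
\begin{equation*}
\fNN(\bz;\bar\rho_k) \;=\; \E_{a\sim\mu_a}\!\bigl[a\,\sigma(\<\barbu^k(a),\bz\>)\bigr] \;=\; \sigma(0)\,\E_a[a] \,+\, \E_a[a\,\Delta(a,\bz)] \;=\; \E_a[a\,\Delta(a,\bz)].
\end{equation*}
Bounding the remaining expectation by $\E_a[|a|]\cdot \sup_{a,\bz}|\Delta(a,\bz)| \leq 2\eta K^3 P k_1$ gives $\|\fNN(\cdot;\bar\rho_k)\|_\infty \leq C\eta$ with $C = 2K^3 P k_1$, and combining with the opening identity yields the claim with the same constant $C$ (and $c$ the threshold from Claim~\ref{claim:ubarwtbound}).

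There is no real obstacle here: the argument is essentially one short computation. The only subtlety worth flagging is that the $O(\eta)$ bound requires the cancellation coming from $\E_a[a]=0$; without the symmetric initialization of the second layer one would only get $\|\fNN\|_\infty = O(1)$, which would not be strong enough for the downstream application in Lemma~\ref{lem:poly-anticoncentration-and-bar-hat-comparison}.
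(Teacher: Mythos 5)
Your proof is correct and essentially identical to the paper's argument: reduce to bounding $\|\fNN(\cdot;\bar\rho_k)\|_\infty$, use Claim~\ref{claim:ubarwtbound} for the $\ell_1$-bound on $\barbu^k(a)$, Taylor-expand $\sigma$ around $0$ and use $\E_{a\sim\mu_a}[a]=0$ to kill the $\sigma(0)$ term. Your explicit flagging of the $\E_a[a]=0$ cancellation is a point the paper leaves implicit in the displayed chain of inequalities, but the underlying reasoning is the same.
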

\begin{proof}
It suffices to show that $\|\fNN(\cdot;\bar\rho_k)\|_{\infty} \leq C \eta$. This is true since Claim~\ref{claim:ubarwtbound} implies $\|\hat\bu^k(a)\|_1 \leq C\eta$, so $|\fNN(\bz;\bar\rho_k)| \leq \E_{a \sim \mu_a} [a \sigma(0) + |a|\|\sigma'\|_{\infty} |\<\bar\bu^k(a),\bz\>|] \leq K\|\bar\bu^k(a)\|_1 \leq KC\eta \leq C\eta$.
\end{proof}

We now show the lemma.

\begin{proof}[Proof of Lemma~\ref{lem:poly-anticoncentration-and-bar-hat-comparison}]

Write $\det(\bM(\bzeta,\balpha,\bm)) = \sum_{\bgamma \in \{0,\ldots,D\}^{2^P}} \hat{h}_{\bgamma} \bzeta^{\bgamma}$. Let us prove that there is a constant $C$ depending on $k_1,P,K,L$ such that $|h_{\bgamma} - \bar h_{\bgamma}| \leq C\eta$ for all $\bgamma$. To see this, notice that $\det(\bM(\bzeta,\bxi,\brho))$ is a polynomial in $\bzeta,\bxi,\brho$, whose degree and coefficients depend only on $k_1,P,L$ (this is because each entry of $\bM(\bzeta,\bxi,\brho)$ is a polynomial in $\bzeta,\bxi,\brho$ with coefficients depending on $k_1,P,L$, and it is a $2^P \times 2^P$ matrix). Since $\|\bm\|_{\infty} \leq K$ and $\|\balpha\|_{\infty}, \|\bbeta\|_{\infty} \leq 2K$, and $\|\balpha - \bbeta\|_{\infty} \leq C\eta$ by Claim~\ref{claim:alpha-beta-close}, we conclude that there is a constant $C$ depending on $k_1,P,K,L$ such that $|h_{\bgamma} - \bar h_{\bgamma}| \leq C\eta$ for all $\bgamma$.

By anti-concentration of polynomials (i.e., Lemma~\ref{lem:not-shifted-polynomial-anticoncentration}), we have that there exists a constant $c > 0$ depending on $k_1,L,P$ such that
\begin{align*}
\E_{\ba \sim \mu_a^{\otimes 2^P}}[\det(\bM(\eta \ba,\balpha,\bm))^2] &\geq c \sum_{\bgamma \in \{0,\ldots,D\}^{2^P}} \eta^{2\|\bgamma\|_1} |\bar h_{\bgamma}|^2\\
&\geq c \sum_{\bgamma \in \{0,\ldots,D\}^{2^P}} \eta^{2\|\bgamma\|_1} \max(0,|h_{\bgamma}|-C\eta)^2,
\end{align*}
concluding the lemma.
\end{proof}

\subsection{Proof of Lemma~\ref{lem:nonzero-poly-discrete-msp}}

For this section, fix $\bm \in \R^{L+1}$ to be $m_i = i! \binom{L}{i}$ for all $i \in \{0,\ldots,L\}$. This corresponds to the activation function $\sigma(x) = (1+x)^L$.

\subsubsection{Reducing to minimal MSP set structures}
To show that $\det(\bN(\bzeta,\bphi,\bm)) \not\equiv 0$, we first show that it suffices to consider ``minimal'' MSP set structures.

\begin{claim}
Let $\cS' \subseteq \cS$ be such that $\cS'$ is an MSP set structure. Then if $$\det(\bN(\bzeta,\bphi,\bm))\mid_{\phi_S = 0 \mbox{ for all } S \in \cS \sm \cS'} \not\equiv 0,$$ we have
$$\det(\bN(\bzeta,\bphi,\bm)) \not\equiv 0.$$
\end{claim}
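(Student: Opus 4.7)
The statement is essentially the contrapositive of the obvious fact that specializing variables in a polynomial that is identically zero produces a polynomial that is still identically zero. The plan is therefore a one-line argument: assume for contradiction that $\det(\bN(\bzeta,\bphi,\bm)) \equiv 0$ as a polynomial in $(\bzeta,\bphi)$; then substituting $\phi_S = 0$ for all $S \in \cS\sm\cS'$ and leaving the remaining variables free must also yield the zero polynomial, contradicting the hypothesis. Equivalently, one can view the restricted determinant as a sub-collection of the monomials of $\det(\bN(\bzeta,\bphi,\bm))$ in the variables $\bzeta$ and $(\phi_S)_{S \in \cS'}$, so if any coefficient of the restriction is nonzero then the same coefficient of the unrestricted determinant is nonzero as well.

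The role of the claim is organizational rather than technical: it allows us, when proving $\det(\bN(\bzeta,\bphi,\bm)) \not\equiv 0$, to replace $\cS$ by any MSP sub-structure $\cS'\subseteq \cS$ that is convenient. In the next stage of the argument we exploit this freedom to restrict attention to \emph{minimal} MSP set structures---those that introduce exactly one new coordinate at every step of the MSP ordering---on which the recurrence for the simplified dynamics $\hatbu^k$ has its cleanest form and the algebraic identity to be verified is as small as possible. Since the claim itself is a tautology, there is no real obstacle here; the substantive combinatorial and algebraic work of proving nonvanishing on minimal MSP structures is deferred to Lemma~\ref{lem:nonzero-poly-discrete-msp}.
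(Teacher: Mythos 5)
Your proof is correct and matches the paper's own one-line argument: specializing a polynomial by setting some variables to zero sends the zero polynomial to the zero polynomial, so the contrapositive gives the claim. The surrounding commentary about minimal MSP set structures is accurate context but not part of the proof itself.
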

\begin{proof}
Substituting 0 for $\phi_S$ for all $S \in \cS \sm \cS'$.
\end{proof}

Therefore it suffices to prove the lemma for minimal MSP structures. Without loss of generality (up to permutation of the variables), we assume that we can write $$\cS' = \{S_1,\ldots,S_P\},$$ where, for all $i \in [P]$, $$i \in S_i \mbox{ and } S_i \subseteq [i].$$ Otherwise, we could remove a set from $\cS$ and still have a MSP set structure.

\subsubsection{Computing the weights to leading order}
Let us define the polynomials $q_{k,i}$ in variables $\bzeta,\bphi,\brho$. For all $k \in \{0,\ldots,k_1-1\}$ and $i \in [P]$,
\begin{align*}
q_{k,i}(\zeta,\bphi,\brho) = p_{k,i}(\zeta,\bxi,\brho) \mid_{\xi_{S,k} = 0 \mbox{ for all } S \not\in \cS \mbox{ and } \xi_{S,k} = \phi_S \mbox{ for all } S \in \cS}.
\end{align*}
Therefore $\bN(\bzeta,\bphi,\brho)$ has entries $N_{\bz,j}(\bzeta,\bphi,\brho) = \sum_{r=0}^L \frac{\rho_r}{r!} \left(\sum_{i=1}^Pq_{k,i}(\zeta_j,\bphi,\brho)\right)^r$.
Let us explicitly compute the nonzero term of $q_{k,i}$ that is of lowest-degree in $\zeta$. First, we show that many terms are zero.

\begin{claim}\label{claim:oi-def}
Recursively define $o_i = 1 + \sum_{i' \in S_i \sm \{i\}} o_{i'}$ for all $i \in [P]$.\footnote{The sum over an empty set is $0$ by convention.} Then $q_{k,i}(\zeta,\bphi,\bm)$ has no nonzero terms of degree less than $o_i$ in $\zeta$.
\end{claim}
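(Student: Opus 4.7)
The plan is to prove the claim by induction on $k$, carried simultaneously over all $i \in [P]$, with inductive hypothesis that $q_{k,i}(\zeta,\bphi,\brho)$ (viewed as a polynomial in $\zeta,\bphi,\brho$, not just at $\brho=\bm$) has no nonzero monomial of $\zeta$-degree less than $o_i$. The base case $k=0$ is trivial since $q_{0,i}\equiv 0$. Substituting $\brho = \bm$ at the end does not create lower-degree monomials, so proving the claim over general $\brho$ is sufficient.

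For the inductive step, I first translate the recurrence of Lemma~\ref{lem:pki-baru-discrete} after setting $\xi_{S,k} = \phi_S$ when $S\in\cS'$ and $\xi_{S,k}=0$ otherwise. The increment $q_{k+1,i} - q_{k,i}$ is a sum of monomials
\[
\zeta \cdot \frac{\rho_{r+1}}{r!}\cdot \phi_T \cdot \prod_{l=1}^r q_{k,i_l}(\zeta,\bphi,\brho),
\]
indexed by tuples $(i_1,\ldots,i_r)\in[P]^r$ such that $T := \{i\}\oplus\{i_1\}\oplus\cdots\oplus\{i_r\} \in \cS'$, together with a degree-$1$ ``direct'' contribution $\zeta\rho_1\phi_{\{i\}}\ind\{\{i\}\in\cS'\}$ corresponding to $r=0$. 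By the inductive hypothesis, each such monomial has $\zeta$-degree at least $1+\sum_{l=1}^r o_{i_l}$, so it suffices to check that $1+\sum_l o_{i_l} \geq o_i$ whenever $T\in\cS'$. For the direct term, note that $\{i\}\in\cS'$ forces $S_i=\{i\}$ (since every $S_j\in\cS'$ satisfies $j\in S_j\subseteq[j]$), so $o_i = 1$ and the degree matches.

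The heart of the argument, and the main obstacle, is a case analysis on $T$ that uses a multiset counting observation: letting $M = \{\!\{i,i_1,\ldots,i_r\}\!\}$, an element $e$ lies in $T$ iff $e$ appears an odd number of times in $M$. Writing $T = S_j$ for some $j\in[P]$ according to the indexing of $\cS'$, I split:
\begin{itemize}[leftmargin=*]
\item If $T = S_i$, then $i$ appears an even number of times in $\{i_1,\ldots,i_r\}$, while every $e\in S_i\setminus\{i\}$ appears an odd, hence $\geq 1$, number of times. Thus $\sum_l o_{i_l}\geq \sum_{e\in S_i\setminus\{i\}} o_e = o_i-1$, which gives $1+\sum_l o_{i_l}\geq o_i$.
\item If $T = S_j$ with $i \in S_j$ and $j\neq i$ (necessarily $j>i$ since $S_j\subseteq[j]$), then $j\in S_j\setminus\{i\}$ and the multiset argument gives $\sum_l o_{i_l}\geq \sum_{e\in S_j\setminus\{i\}} o_e \geq o_j$. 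Since $i\in S_j\setminus\{j\}$, the recursive definition yields $o_j\geq 1+o_i$, so $1+\sum_l o_{i_l}\geq 2+o_i > o_i$.
\item If $i\notin T$, then $i$ appears an odd number of times, so at least once, in $\{i_1,\ldots,i_r\}$, giving $\sum_l o_{i_l}\geq o_i$ and hence $1+\sum_l o_{i_l}\geq 1+o_i>o_i$.
\end{itemize}

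In all three cases the required lower bound holds with at least the claimed slack, closing the induction and proving the claim. No other step is delicate: the recurrence structure is handed to us by Lemma~\ref{lem:pki-baru-discrete}, the only combinatorial content is the parity/multiset observation above, and the recursive definition $o_i = 1 + \sum_{i'\in S_i\setminus\{i\}} o_{i'}$ is tailored so that the bookkeeping matches exactly in Case~$T=S_i$, which is where the inequality is tight.
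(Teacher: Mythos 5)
Your proof is correct and takes essentially the same approach as the paper's: induction on $k$ using the recurrence from Lemma~\ref{lem:pki-baru-discrete}, followed by a case analysis on $T = \{i\}\oplus\{i_1\}\oplus\cdots\oplus\{i_r\}$ driven by the parity observation that $e\in T$ iff $e$ occurs an odd number of times among $i,i_1,\ldots,i_r$. Your three-way split ($T=S_i$; $T=S_j$ with $i\in S_j$ and $j\neq i$; $i\notin T$) is a minor reorganization of the paper's Case~a / Case~b-with-sub-cases, and carrying the induction over general $\brho$ before specializing to $\bm$ is a harmless variation with no real effect on the argument.
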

\begin{proof}
The proof is by induction on $k$. In the base case of $k = 0$ it is true since $q_{0,i} \equiv 0$. In the inductive step, we assume it is true for all $k' \in \{0,\ldots,k\}$ and we prove the claim for $k+1$. By the recurrence dynamics,
\begin{align*}
&q_{k+1,i}(\zeta,\bphi,\bm) \\
&\quad= q_{k,i}(\zeta,\bphi,\bm) +  \zeta m_1 \phi_{\{i\}} \ones(\{i\} \in \cS) \\
&\quad\quad\quad + \zeta \sum_{r=1}^{L-1} \frac{m_{r+1}}{r!} \sum_{(i_1,\ldots,i_r) \in [P]^r} \phi_{\{i\} \oplus \{i_1\} \oplus \dots \{i_r\}} \ones(\{i\} \oplus \{i_1\} \oplus \dots \{i_r\} \in \cS) \prod_{l=1}^r q_{k,i_l}(\zeta,\bphi).
\end{align*}
The first term, $q_{k,i}(\zeta,\bphi,\bm)$, is handled by the inductive hypothesis. The second term is nonzero only in the case that $\{i\} \in \cS$, in which case $S_i = \{i\} \not\in \cS'$ and $o_i = 1$, so we do not have a contradiction.
The last terms can be handled by the inductive hypothesis: for any $(i_1,\ldots,i_r)$, each $q_{k,i_l}$ has no terms of degree less than $o_{i_l}$ in $\zeta$. So $\zeta \prod_{l} q_{k,i_l}(\zeta,\balpha)$ has no terms of degree less than $1 + \sum_{l=1}^r o_{i_l}$ in $\zeta$. 
We break into cases. \textit{Case a}. If $\{i\}\oplus \{i_1\} \oplus \dots \{i_r\} = S_i$, then $S_i \sm \{i\} \subset \{i_1,\ldots,i_r\}$, so $1 + \sum_{i=1}^l o_{i_l} \geq o_i$, and so no new terms of degree less than $o_i$ are added. \textit{Case b}. If $\{i\}\oplus \{i_1\} \oplus \dots \{i_r\} = S_{i'}$ for some $i' \neq i$, then either $i \in \{i_1,\ldots,i_r\}$, in which case $1 + \sum_{l=1}^r o_{i_{l}} > o_i$. Otherwise, we must have $i' > i$. But in this case $o_{i'} > o_i$ since $i \in S_{i'}$, so we also have $\sum_{l=1}^r o_{i_{l}} > o_i$ and again no new terms of degree less than $o_i$ are added. In fact, only terms of degree strictly more than $o_i$ are added.
\end{proof}

Finally, we give a recurrence for the degree-$o_i$ term in $\zeta$ of $q_{k,i}(\zeta,\bphi,\bm)$. Because of the previous claim, when this term is nonzero, it is the smallest-degree nonzero term. Denote this term by $\tilde{q}_{k,i}(\bphi,\bm) = [\zeta^{o_i}]q_{k,i}(\zeta,\bphi,\bm)$.
\begin{claim}\label{claim:oi-monomial-distinct}
If $k_1 \geq P$, then $\tilde{q}_{k_1,i}(\bphi,\bm)$ is a nonzero monomial in the variables $\bphi$. Furthermore, for any $i \neq i' \in [P]$, the monomials $\tilde{q}_{k_1,i}(\bphi,\bm)$ and $\tilde{q}_{k_1,i'}(\bphi,\bm)$ are not constant multiples of each other.
\end{claim}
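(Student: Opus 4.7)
The plan is to prove Claim~\ref{claim:oi-monomial-distinct} by induction on $i$, deriving an explicit recurrence for the leading-order coefficient $\tilde q_{k,i}(\bphi,\bm) = [\zeta^{o_i}] q_{k,i}(\zeta,\bphi,\bm)$. First I will refine the counting argument from Claim~\ref{claim:oi-def} to pin down exactly which terms in the recurrence for $q_{k+1,i}$ contribute to the $\zeta^{o_i}$-coefficient. Concretely, a contribution arises from a tuple $(i_1,\ldots,i_r)$ with $T := \{i\} \oplus \{i_1\} \oplus \cdots \oplus \{i_r\} = S_j$ for some $j$, and the total $\zeta$-degree equals $1 + \sum_l o_{i_l}$. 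If $j = i$ the minimum is attained uniquely at $r = |S_i| - 1$, $\{i_1,\ldots,i_r\}$ a permutation of $S_i \setminus \{i\}$ (each element once), contributing exactly $o_i$. If $j \neq i$, I handle two subcases: when $j < i$ we have $i \not\in S_j$ (since $S_j \subseteq [j]$), so $i$ must appear an odd number of times in the tuple, adding at least $o_i$ to $\sum_l o_{i_l}$, hence exceeding the budget $o_i - 1$; when $j > i$ and $i \in S_j$, then $j \in S_j \setminus \{i\}$ must appear odd times, and since $i \in S_j \setminus \{j\}$ forces $o_j \geq 1 + o_i$, the total again exceeds $o_i - 1$. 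This yields the clean recurrence
\begin{equation*}
\tilde q_{k+1,i}(\bphi,\bm) = \tilde q_{k,i}(\bphi,\bm) + m_{|S_i|}\,\phi_{S_i} \prod_{i' \in S_i \setminus \{i\}} \tilde q_{k,i'}(\bphi,\bm),
\end{equation*}
where the product is empty (equal to 1) if $S_i = \{i\}$.

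Next I will prove by induction on $i$ that for every $k \geq i$, $\tilde q_{k,i}(\bphi,\bm) = c_{k,i}\,\mu_i(\bphi)$, where $\mu_i$ is a fixed monomial defined recursively by $\mu_1 = \phi_{\{1\}}$ and $\mu_i = \phi_{S_i} \prod_{i' \in S_i \setminus \{i\}} \mu_{i'}$, and $c_{k,i} > 0$. The base case is immediate since $\tilde q_{k,1} = k m_1 \phi_{\{1\}}$ and $m_1 = L > 0$. For the inductive step, by hypothesis each $\tilde q_{k,i'}$ with $i' \in S_i \setminus \{i\}$ is a positive multiple of the fixed monomial $\mu_{i'}$ for $k \geq i' \leq i-1$, so the recurrence adds a positive scalar multiple of $\mu_i$ at each step $k \geq i - 1$; using $m_{|S_i|} = |S_i|! \binom{L}{|S_i|} > 0$ (since $|S_i| \leq P \leq L$), one checks that $c_{k,i} > 0$ for $k \geq i$.

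Finally, I will establish the distinctness of the monomials $\mu_i$. By a straightforward induction based on $\mu_i = \phi_{S_i} \prod_{i' \in S_i \setminus \{i\}} \mu_{i'}$ and $S_i \subseteq [i]$, the set of variables appearing in $\mu_i$ is contained in $\{\phi_{S_{i''}} : i'' \leq i\}$, while $\phi_{S_i}$ itself appears in $\mu_i$ with positive degree. Since minimality of $\cS'$ ensures the sets $S_1,\ldots,S_P$ are distinct, for $i < j$ the variable $\phi_{S_j}$ has positive degree in $\mu_j$ but degree $0$ in $\mu_i$, so $\mu_i$ and $\mu_j$ are distinct monomials and cannot be scalar multiples of each other. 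Combining with the previous step, $\tilde q_{k_1,i}$ and $\tilde q_{k_1,i'}$ are nonzero positive multiples of distinct monomials whenever $k_1 \geq P$, proving the claim.

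The main obstacle I anticipate is the careful book-keeping of the cases in the first step, where one must rule out contributions from $T = S_j$ with $j \neq i$ and confirm uniqueness of the minimizer in the $T = S_i$ case (in particular, ensuring no cancellation among contributions with different tuples $(i_1,\ldots,i_r)$ mapping to the same multiset $S_i \setminus \{i\}$, which is handled by the multinomial coefficient precisely cancelling the $1/r!$ in the recurrence). The remaining inductive arguments are mechanical once the recurrence for $\tilde q_{k,i}$ is established.
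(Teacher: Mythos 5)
Your approach is essentially the same as the paper's: derive the simplified recurrence $\tilde q_{k+1,i} = \tilde q_{k,i} + m_{|S_i|}\phi_{S_i}\prod_{i'\in S_i\setminus\{i\}}\tilde q_{k,i'}$ by tracking only the $\zeta^{o_i}$-coefficient, then induct to show each $\tilde q_{k_1,i}$ is a positive multiple of a fixed monomial $\mu_i$, and finally use that $\phi_{S_j}$ divides $\mu_j$ but not $\mu_i$ for $i<j$ (the paper phrases the distinctness via $s_i\geq s_{i'}$ rather than $i<j$, but the content is the same). The derivation of the recurrence and the distinctness argument are both fine, modulo one small omission in the case analysis: you treat $j<i$ and the subcase $j>i$ with $i\in S_j$, but never the subcase $j>i$ with $i\notin S_j$. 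That case is handled by the same parity argument as $j<i$ (if $i\notin S_j$ then $i$ must appear an odd number of times in the tuple), so it's a trivial fix, but it should be stated.

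The more substantive gap is in the induction on $i$. You claim for $k\geq i$ that $\tilde q_{k,i}=c_{k,i}\mu_i$ with $c_{k,i}>0$, and for the inductive step you observe that for $k\geq i-1$ the increment $m_{|S_i|}\phi_{S_i}\prod_{i'\in S_i\setminus\{i\}}\tilde q_{k,i'}$ is a positive multiple of $\mu_i$. But to conclude anything about $\tilde q_{i,i}=\tilde q_{i-1,i}+(\text{increment})$, you must know what $\tilde q_{i-1,i}$ is, and your hypothesis only covers $\tilde q_{k,i'}$ for $k\geq i'$. For $k<i-1$, some of the factors $\tilde q_{k,i'}$ with $i'\in S_i\setminus\{i\}$ are not controlled by your hypothesis, so a priori $\tilde q_{i-1,i}$ could fail to be a nonnegative multiple of $\mu_i$ and the conclusion would break. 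The paper closes this exactly by first proving, via a separate induction, that $\tilde q_{k,i}\equiv 0$ for all $k<s_i$ where $s_i=1$ if $|S_i|=1$ and $s_i=1+\max\{s_{i'}:i'\in S_i\setminus\{i\}\}$ otherwise; with that in hand, the build-up from $k=s_i$ onward proceeds cleanly. You can repair your version by strengthening the inductive claim to ``for all $k\geq 0$, $\tilde q_{k,i}$ is a nonnegative multiple of $\mu_i$, and it is positive once $k\geq i$''; then the increment at every step $k$ is itself a nonnegative multiple of $\mu_i$ (positive once $k\geq i-1$), and the conclusion follows from $\tilde q_{0,i}=0$. As currently stated, however, the inductive step does not go through.
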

\begin{proof}Following the analysis of the previous claim used to prove that $[\zeta^{l}] q_{k,i}(\zeta,\bphi,\bm) = 0$ for all $l < o_i$, only certain terms contribute in the recurrence. So we can simplify it to:
\begin{align*}
\tilde{q}_{k+1,i}(\bphi,\bm) &= \tilde{q}_{k,i}(\bphi,\bm) +  m_{|S_i|} \phi_{S_i} \prod_{i' \in S_i \sm \{i\}} \tilde{q}_{k,i'}(\bphi,\bm).
\end{align*}
Define $s_i = 1$ for all $i$ such that $|S_i| = 1$. And recursively define $s_i = 1 + \max \{s_{i'} : i' \in S_i \sm \{i\}\}$ for all other $i \in [P]$. Inductively on $k$, for all $k < s_i$ we have $\tilde{q}_{k,i} \equiv 0$. This is clear from the base case $\tilde{q}_{0,i} \equiv 0$ and the recurrence.

Next, for all $k \geq s_i$ we prove that 
\begin{align}\label{eq:monomial-recurrence}
\tilde{q}_{k,i}(\bphi,\bm) = \gamma_{k,i} \phi_{S_i} \prod_{i' \in S_i \sm \{i\}} \tilde{q}_{s_{i'},i'}(\bphi,\bm) \not\equiv 0 \end{align}
for some nonzero constant $\gamma_{k,i} > 0$ that depends on $\bm$. This is proved inductively on $k$. For $k = s_i > 1$, we have $$\tilde{q}_{k,i}(\bphi,\bm) = m_{|S_i|} \phi_{S_i} \prod_{i' \in S_i \sm \{i\}} \tilde{q}_{k-1,i'}(\bphi,\bm) = m_{|S_i|} \phi_{S_i}\prod_{i' \in S_i \sm \{i\}} \gamma_{k-1,i'} \tilde{q}_{s_{i'},i'}(\bphi,\bm),$$ so it is true since $m_{|S_i|} > 0$. For the inductive step, if $k > s_i$, $$\tilde{q}_{k,i}(\bphi,\bm) = \Big(\gamma_{k-1,i} + m_{|S_i|}\prod_{i' \in S_i \sm \{i\}} \gamma_{k-1,i'}\Big)\phi_{S_i} \prod_{i' \in S_i \sm \{i\}} \tilde{q}_{s_{i'},i'}.$$ So $\gamma_{k,i} =\gamma_{k-1,i} + m_{|S_i|}\prod_{i' \in S_i \sm \{i\}} \gamma_{k-1,i'} > 0$ since $m_{|S_i|} > 0$ by nonnegativity. This concludes the induction for \eqref{eq:monomial-recurrence}.

Using this recurrence relation \eqref{eq:monomial-recurrence} for $\tilde{q}_{k,i}$, by induction on $k$ we conclude that for any $k \geq P > s_i$ we have that $\tilde{q}_{k,i}(\bphi,\bm)$ is a nonzero monomial. Also, $\tilde{q}_{k,i}(\bphi,\bm)$ and $\tilde{q}_{k,i'}(\bphi,\bm)$ are distinct for all $i \neq i'$, since if $s_i \geq s_{i'}$ then $\phi_{S_i}$ divides $\tilde{q}_{k,i}$, but it does not divide $\tilde{q}_{k,i'}$.
\end{proof}

Recall that the interpretation of $q_{k,i}$ with respect to the simplified dynamics: for any second-layer weight $a \in [-1,1]$, the first-layer weights after training the simplified dynamics are $\hat{\bu}_i^{k_1}(a) = q_{k,i}(\eta a,\balpha,\bm)$. What we have shown in the previous two claims is that for any $i \neq i'$ to leading order $\hat{\bu}_i^{k_1}(a)$ and $\hat{\bu}_{i'}^{k_1}$ have different dependence on the Fourier coefficients $\balpha$ of the target function $h_*$.  Now we use this to essentially show that $\<\hat\bu^{k_1}(a),\bz\>$ and $\<\hat\bu^{k_1}(a),\bz'\>$ are distinct for all $\bz \neq \bz'$. 
\begin{claim}\label{claim:rzdistinct-discrete}
Define $$r_{\bz}(\zeta,\bphi,\bm) = \sum_i z_i q_{k_1,i}(\zeta,\bphi,\bm).$$
Then, for each distinct pair $\bz,\bz' \in \{+1,-1\}^P$, we have
$r_{\bz}(\zeta,\bphi,\bm) - r_{\bz'}(\zeta,\bphi,\bm) \not\equiv 0$ as a polynomial in $\zeta$ and $\bphi$.
\end{claim}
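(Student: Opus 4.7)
\textbf{Proof proposal for Claim~\ref{claim:rzdistinct-discrete}.}

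The plan is to identify, for each pair $\bz \neq \bz'$, a specific power of $\zeta$ whose coefficient in $r_{\bz}(\zeta,\bphi,\bm) - r_{\bz'}(\zeta,\bphi,\bm)$ is visibly a nonzero polynomial in $\bphi$. The leading-order structure established in Claims~\ref{claim:oi-def} and~\ref{claim:oi-monomial-distinct} makes this straightforward: $q_{k_1,i}(\zeta,\bphi,\bm)$ vanishes in $\zeta$ below degree $o_i$, and its degree-$o_i$ coefficient $\tilde{q}_{k_1,i}(\bphi,\bm)$ is a nonzero monomial in $\bphi$, with distinct $i$ giving monomials that are not constant multiples of each other.

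First, let $I = \{i \in [P] : z_i \neq z'_i\}$, which is nonempty by assumption, and choose $i^* \in I$ minimizing $o_{i^*}$. I then plan to compute the coefficient of $\zeta^{o_{i^*}}$ in $r_{\bz} - r_{\bz'} = \sum_{i=1}^P (z_i - z'_i)\, q_{k_1,i}(\zeta,\bphi,\bm)$. The indices split into three groups: (i) for $i \notin I$, the scalar coefficient $z_i - z'_i$ vanishes, so these terms contribute nothing; (ii) for $i \in I$ with $o_i > o_{i^*}$, Claim~\ref{claim:oi-def} gives $[\zeta^{o_{i^*}}] q_{k_1,i}(\zeta,\bphi,\bm) = 0$; and (iii) for $i \in I$ with $o_i = o_{i^*}$ (which is the only remaining case, by minimality of $o_{i^*}$ over $I$), the coefficient of $\zeta^{o_{i^*}}$ is exactly the leading monomial $\tilde{q}_{k_1,i}(\bphi,\bm)$. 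Thus
\begin{equation*}
[\zeta^{o_{i^*}}]\big(r_{\bz}(\zeta,\bphi,\bm) - r_{\bz'}(\zeta,\bphi,\bm)\big) = \sum_{i \in I,\, o_i = o_{i^*}} (z_i - z'_i)\, \tilde{q}_{k_1,i}(\bphi,\bm),
\end{equation*}
where each $z_i - z'_i \in \{-2, +2\}$ is nonzero.

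To finish, I would invoke Claim~\ref{claim:oi-monomial-distinct}: the $\tilde{q}_{k_1,i}(\bphi,\bm)$ are nonzero monomials in $\bphi$, and for distinct indices they are not constant multiples of each other, hence they are linearly independent as elements of $\R[\bphi]$. A linear combination with all coefficients in $\{\pm 2\}$ (and the index set nonempty, since $i^* \in I$ contributes) is therefore a nonzero polynomial in $\bphi$. This shows that the $\zeta^{o_{i^*}}$ coefficient of $r_{\bz} - r_{\bz'}$ is not identically zero in $\bphi$, so $r_{\bz} - r_{\bz'} \not\equiv 0$ as a polynomial in $\zeta$ and $\bphi$.

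The only subtle step is the second group above, verifying that no ``higher-$o_i$'' term accidentally contributes to the coefficient of $\zeta^{o_{i^*}}$; this is immediate from Claim~\ref{claim:oi-def}, which was precisely designed to control such cancellations. The rest of the argument is purely combinatorial bookkeeping, so I do not anticipate any serious obstacle.
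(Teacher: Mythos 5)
Your proposal is correct and follows essentially the same route as the paper: choose the index $i^*$ minimizing $o_i$ among the coordinates where $\bz$ and $\bz'$ differ, extract the coefficient of $\zeta^{o_{i^*}}$, and invoke Claim~\ref{claim:oi-monomial-distinct} to see that the resulting $\pm 2$-weighted combination of distinct nonzero monomials in $\bphi$ is nonzero. Your three-way case split merely spells out a step the paper leaves implicit; the substance is identical.
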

\begin{proof}
Recall the definition of $o_i$ from Claim~\ref{claim:oi-def}. Let $i \in [P]$ be such that $z_i \neq z'_i$ and $o_i$ is minimized. By Claim~\ref{claim:oi-def},
\begin{align*}
[\zeta^{o_i}](r_{\bz}(\zeta,\bphi,\bm) - r_{\bz'}(\zeta,\bphi,\bm)) &= \sum_{i' \mbox{ s.t. } o_{i'} = o_i, z_{i'} \neq z'_{i'}} (z_{i'} - z'_{i'}) [\zeta^{o_i}]q_{k_1,i}(\zeta,\bphi,\bm) \\
&= \sum_{i' \mbox{ s.t. } o_{i'} = o_i, z_{i'} \neq z'_{i'}} (z_{i'} - z'_{i'}) \tilde{q}_{k_1,i'}(\bphi,\bm),
\end{align*}
but $\tilde{q}_{k_1,i'}$ are distinct nonzero monomials in $\bphi$ by Claim~\ref{claim:oi-monomial-distinct}. So $r_{\bz}(\zeta,\bphi,\bm) - r_{\bz}(\zeta,\bphi,\bm) \not\equiv 0$.
\end{proof}

\subsubsection{Applying linear independence of powers of polynomials}

We conclude the proof of the lemma by using the following result of \cite{newman1979waring} showing that large powers of distinct polynomials are linearly independent. \begin{proposition}[Remark 5.2 in \cite{newman1979waring}]\label{prop:newmanslater}
Let $R_1,\ldots,R_m \in \C[\zeta]$ be non-constant polynomials such that for all $i \neq i' \in [m]$ we have $R_i(\zeta)$ is not a constant multiple of $R_{i'}(\zeta)$. Then for $L \geq 8m^2$ we have that $(R_1)^L,\ldots,(R_m)^L \in \C[\zeta]$ are $\C$-linearly independent.
\end{proposition}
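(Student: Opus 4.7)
The statement is the classical theorem of Newman and Slater on Waring's problem for $\C[\zeta]$, so in practice my plan is to cite it. For completeness I sketch the Wronskian-based route, which naturally recovers the qualitative statement, and flag where the sharp quadratic bound enters.

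First, I would run an induction on $m$ to reduce to the case of a hypothetical dependence $\sum_{i=1}^m c_i R_i^L \equiv 0$ with all $c_i \neq 0$ and no vanishing proper subsum: if some $c_i = 0$ or a proper subsum vanishes, apply the inductive hypothesis, which is available because $L \geq 8m^2 \geq 8(m')^2$ for every $m' < m$. The base case $m=2$ is direct: $W(R_1^L, R_2^L) = L\, R_1^{L-1} R_2^{L-1}(R_1 R_2' - R_1' R_2)$ is nonzero for any $L \geq 1$ since $R_1, R_2$ non-proportional forces $R_1 R_2' - R_1' R_2 \not\equiv 0$.

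Next I would use the Wronskian criterion: linear dependence is equivalent to $W(L,\zeta) := \det\!\bigl((R_i^L)^{(j-1)}\bigr)_{i,j=1}^m \equiv 0$. Factoring $R_i^L$ out of the $i$-th row gives $W(L,\zeta) = \bigl(\prod_i R_i^L\bigr)\cdot\det\!\bigl(S_{ij}(L,\zeta)\bigr)$, where $S_{ij}(L,\zeta) = (R_i^L)^{(j-1)}/R_i^L$ is a polynomial in $L$ of degree $j-1$ over $\C(\zeta)$ with leading-in-$L$ coefficient $(R_i'/R_i)^{j-1}$. The top-degree-in-$L$ coefficient of $\det(S_{ij})$ is then the Vandermonde
\begin{equation*}
\det\!\bigl((R_i'/R_i)^{j-1}\bigr)_{i,j} \;=\; \prod_{1 \leq i < i' \leq m}\Bigl(\tfrac{R_{i'}'}{R_{i'}} - \tfrac{R_i'}{R_i}\Bigr),
\end{equation*}
which is nonzero in $\C(\zeta)$ because $R_i'/R_i = R_{i'}'/R_{i'}$ would give $(R_i/R_{i'})' = 0$ and hence proportionality. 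So $\det(S_{ij}) \in \C(\zeta)[L]$ is a nonzero polynomial of degree $m(m-1)/2$ in $L$, vanishing for at most $m(m-1)/2$ integer values of $L$. In particular $R_1^L,\ldots,R_m^L$ are linearly independent for all but finitely many $L$.

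The main obstacle is quantitative: the Wronskian calculation bounds only the \emph{number} of bad $L$'s, not their \emph{locations}. An alternative route via the Brownawell--Masser generalization of Mason--Stothers, applied to $\sum_i c_i R_i^L = 0$ after reducing to $\gcd(R_1,\ldots,R_m)=1$, yields $L \cdot \max_i\deg R_i \leq \binom{m-1}{2}\bigl(\sum_i \deg R_i - 1\bigr)$, hence $L = O(m^3)$ uniformly in the $R_i$. Pushing this down to the sharp $O(m^2)$ dependence with explicit constant $8$ requires the more delicate combinatorial argument of Newman and Slater, which I would invoke as a black box rather than try to reconstruct here.
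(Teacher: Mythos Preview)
Your plan matches the paper exactly: Proposition~\ref{prop:newmanslater} is stated as Remark~5.2 of Newman--Slater and invoked as a black box, with no proof given. Your Wronskian sketch and Brownawell--Masser remark are a nice bonus, but the paper itself supplies nothing beyond the citation.
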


We are ready to prove that $\det(\bN(\bzeta,\bphi,\bm)) \not\equiv 0$.

\begin{proof}[Proof of Lemma~\ref{lem:nonzero-poly-discrete-msp}]

Let us fix $\balpha = (\alpha_S)_{S \in \cS}$ such that for all $\bz \neq \bz'$ we have $r_{\bz}(\zeta,\balpha,\bm) - r_{\bz'}(\zeta,\balpha,\bm) \not\equiv 0$ as polynomials in $\zeta$. This can be ensured by drawing $\alpha_S \sim \mathrm{Unif}[-1,+1]$ for all $S \in \cS$, since for all $\bz \neq \bz'$ we have $r_{\bz}(\zeta,\bphi,\bm) - r_{\bz'}(\zeta,\bphi,\bm) \not\equiv 0$ as polynomials in $\zeta,\bphi$ by Claim~\ref{claim:rzdistinct-discrete}. Let us write $\tilde{r}_{\bz}(\zeta) = r_{\bz}(\zeta,\balpha,\bm)$ to emphasize that we have fixed the variables $\bphi = \balpha$ and $\brho = \bm$, and that we are looking at a polynomial over $\zeta$.

Since we have chosen $m_i = i! \binom{L}{i}$ for all $i \in \{0,\ldots,L\}$, we have $$N_{\bz,j}(\bzeta,\balpha,\bm) = (1 + \tilde{r}_{\bz}(\zeta))^L.$$ From the recurrence relations $\zeta$ divides $q_{k_1,i}(\zeta,\bphi,\bm)$ for each $i \in [P]$, so $\zeta$ divides $\tilde{r}_{\bz}(\zeta) = \sum_{i=1}^P z_i q_{k_1,i}(\zeta,\balpha,\bm)$. Therefore, no two polynomials $(1 + \tilde{r}_{\bz}(\zeta)) ,(1 + \tilde{r}_{\bz'}(\zeta))$ are constant multiples of each other for each distinct $\bz,\bz'$. Otherwise, if $(1 + \tilde{r}_{\bz}(\zeta)) \equiv c (1 + \tilde{r}_{\bz'}(\zeta))$, then we would have $1 = (1 + \tilde{r}_{\bz}(0)) = c (1 + \tilde{r}_{\bz'}(0)) = c$, which would imply $c = 1$, but $(1 + \tilde{r}_{\bz}(\zeta)) \not\equiv (1 + \tilde{r}_{\bz'}(\zeta))$ since $\tilde{r}_{\bz}(\zeta)$ and $\tilde{r}_{\bz'}(\zeta)$ are distinct.

Construct the Wronskian matrix over the $L$th power polynomials $\{(1 + \tilde{r}_{\bz}(\zeta))^L\}_{\bz \in \{+1,-1\}^P}$. This is a $2^P \times 2^P$ matrix $\bH(\zeta)$ whose entries are indexed by $\bz$ and $j \in [2^P]$ and defined by:
\begin{align*}
H_{\bz,j}(\zeta) = \pd{^{j-1}}{\zeta^{j-1}} (1 + r_{\bz}(\zeta))^L.
\end{align*}
By Proposition~\ref{prop:newmanslater}, the polynomials $\{(1 +\tilde{r}_{\bz}(\zeta))^L\}_{\bz \in \{+1,-1\}^P}$ are linearly-independent, so the Wronskian determinant is nonzero as a polynomial in $\zeta$:
\begin{align*}
\det(\bH(\zeta)) \not\equiv 0.
\end{align*}
Finally notice that we can write $\det(\bH(\zeta)) = \pd{}{\zeta_2} \pd{}{\zeta_3^2} \dots \pd{^{2^P-1}}{\zeta_{2^P}} \det(\bN(\bzeta,\balpha,\bm)) \mid_{\zeta = \zeta_1 = \dots = \zeta_{2^P}}$. 

Therefore $\det(\bN(\bzeta,\balpha,\bm)) \not\equiv 0$ as a polynomial in $\bzeta$. So $\det(\bN(\bzeta,\bphi,\bm)) \not\equiv 0$ as a polynomial in $\bzeta$ and $\bphi$.
\end{proof}

\clearpage

\section{Generic MSP functions are strongly $O(d)$-SGD-learnable with continuous-time dynamics and activation perturbation}\label{ssec:genericmspproof}

In this appendix, we provide a more general approach to proving strong $O(d)$-SGD-learnability for generic MSP functions that goes beyond polynomial activation functions. The reason to include this second approach is two-fold:
\begin{enumerate}
    \item[1.] We consider the continuous-time regime (as opposed to the discrete-time regime as in Appendix \ref{app:discrete-msp-proof}), which is closer to practice, with small batch and step sizes. (Note that the extension to non-polynomial activations would also hold in discrete time.)
    
    \item[2.] For continuous time and non-polynomial activations, the first layer weights $\obu^t$ are not polynomials in $\oa^0$ anymore. However, we show that they can still be approximated by polynomials and that global convergence reduces to showing that certain (universal) polynomials are not identically $0$.
\end{enumerate}

Using this approach, we show in Theorem \ref{thm:genericmspsuffrestated} that generic MSP functions are strongly $O(d)$-SGD-learnable for smooth activation functions (as long as $\sigma^{(r)} (0) \neq 0$ for $r= 0 , \ldots , P$), with one technical caveat: we need to introduce a random perturbation to the activation function at one point during the training dynamics. While unnatural, this modification allows us to prove that the polynomials are non-zero for general MSP structure, using a ``Vandermonde trick''. See Section~\ref{ssec:discussionperturb} for a discussion on this technicality.

\subsection{Statement of the result}

Recall the definition of the measure over functions with MSP set structure $\cS$:
\begin{definition}[Definition~\ref{def:leb-msp} restated]
For any set structure $\cS \subseteq 2^{[P]}$ define the measure $\mu_{\cS}$ over functions $h_* : \{+1,-1\}^P \to \R$ induced by taking $h_*(\bz) = \sum_{S \subseteq [P]} \alpha_S \chi_S(\bx)$, where the Fourier coefficients satisfy $\alpha_S = 0$ if $S \not\in\cS$, and $(\alpha_S)_{S \in \cS}$ have Lebesgue measure on $\R^{|\cS|}$.
\end{definition}

\paragraph{Choice of hyperparameters:}  Recall from the equivalence with \eqref{eq:DF-PDE} (Theorem \ref{thm:equivalence}) that it is sufficient to show for any $\eps >0$, there exists hyperparameters satisfying $\rA3'$ such that \eqref{eq:DF-PDE} reaches $\eps$-risk. We consider the following hyperparameters, which are the same as in the proof for the vanilla staircase in Section~\ref{ssec:vanillaproof}:
\begin{itemize}
    \item We do not regularize, i.e., $\lambda^a = \lambda^w= 0$, same as Section~\ref{ssec:vanillaproof}.
    \item We initialize the first layer to deterministically $\obu^0 =\bzero$, and the second layer to uniform random weights on $[-1,-1]$, i.e., $\mu_a = \Unif ( [ +1, -1])$ and $\mu_W = \delta_0$.
    \item Our learning rate schedule is the same as in Section~\ref{ssec:vanillaproof},
\begin{description}
\item[Phase 1:] We train the first layer weights $\obu^t$ while keeping the second layer weights fixed $\oa^t = \oa^0$. We set $\xi^a (t) = 0$ and $\xi^w (t) = 1$ for $t \in [0,T_1]$.

\item[Phase 2:] We train the second layer weights $\oa^t$ while keeping the first layer weights fixed at $\obu^t = \obu^{T_1}$. We set $\xi^a (t) = 1$ and $\xi^w (t) = 0$ for $t \in [T_1,T_2]$.
\end{description}
\end{itemize}

\begin{remark}
As in Section~\ref{ssec:vanillaproof}, the learning rate schedules can be made Lipschitz at $T_1$ with a change of variables, falling under the assumptions of strong SGD learnability.
\end{remark}

\paragraph{Perturbing the activation:} We consider an activation function $\sigma$ that verifies $\rA0'$, i.e., that is sufficiently smooth in a neighborhood of $0$. However, unlike the proof for the vanilla staircase, we add the following technical caveat. At time $T_1$, we randomly perturb the activation $\sigma$ to get an activation $\sigma_{pert}$. We use activation $\sigma$ in the training of Phase 1 when training \eqref{eq:DF-PDE} during time $[0,T_1]$ but we use the perturbed activation function $\sigma_{pert}$ when training \eqref{eq:DF-PDE} during time $[T_1,T_2]$ in Phase 2. By perturbing the activation, we mean the following: let $0 < \tau_{pert} < 1$ be a parameter that controls the amount of perturbation. Draw $\rho_{i} \sim \mathrm{Unif}([-\tau_{pert},\tau_{pert}])$ for each $i \in \{0,\ldots,2^{8P}\}$. The perturbed activation is defined as $\sigma_{pert}(x) = \sigma(x) + \sum_{r=0}^{2^{8P}} \frac{\rho_r}{r!} x^r$.

The dynamics of \eqref{eq:DF-PDE} in time $[0,T_1]$ with activation $\sigma$ stitched together with the dynamics in time $[T_1,T_2]$ with activation $\sigma_{pert}$ corresponds to an algorithm that falls under the definition of strong $O(d)$-SGD-learnability, when extended to allow such a perturbation (in particular, the equivalent characterization and necessary condition in Theorems \ref{thm:equivalence} and \ref{thm:MSP_necessary} would still hold). See Section~\ref{ssec:discussionperturb} for more discussion.
\\

We restate the sufficient condition, proving that for any MSP set structure $\cS$, generic functions $h_*$ with that set structure $\cS$ are strongly $O(d)$-SGD-learnable:

\begin{theorem}\label{thm:genericmspsuffrestated}
Consider $\cS \subseteq 2^{[P]}$ a MSP set structure, and $0 < \tau_{pert} < 1$ a perturbation parameter. Assume that the activation function $\sigma$ satisfies $\rA$0' and has nonzero derivatives $\sigma^{(r)} (0) \neq 0$ for $r =0 , \ldots, P$. Then, almost surely for $h_*$ with respect to to $\mu_{\cS}$ and almost surely for perturbation $\brho \sim \Unif ( [-\tau_{pert},\tau_{pert}]^{2^{8P}} ) $, the following hold: for any $\eps > 0$, there exist $T_1,T_2 > 0$ such that training with the above hyperparameters and activation perturbation will learn $h_*$ to accuracy $\eps$.

This implies that almost surely over $\mu_{\cS}$, $h_*$ is strongly $O(d)$-SGD-learnable (under the expanded definition of $O(d)$-SGD-learnability where the SGD algorithm is allowed to perturb the activation function once).
\end{theorem}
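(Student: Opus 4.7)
The proof strategy mirrors the two-phase approach of Section~\ref{ssec:vanillaproof} and Appendix~\ref{app:discrete-msp-proof}, now carried out in continuous time. By Theorem~\ref{thm:equivalence} (extended to permit the prescribed one-time activation perturbation at $T_1$), it suffices to exhibit, for every $\eps>0$, times $T_1$ and $T_2$ such that the \eqref{eq:DF-PDE} dynamics with activation $\sigma$ on $[0,T_1]$ and $\sigma_{\mathrm{pert}}$ on $[T_1,T_2]$ reaches risk at most $\eps$. Phase 2 is a linear (kernel) training phase with kernel matrix $\bK^{T_1}(\bz,\bz')=\E_{a\sim\mu_a}[\sigma_{\mathrm{pert}}(\<\obu^{T_1}(a),\bz\>)\sigma_{\mathrm{pert}}(\<\obu^{T_1}(a),\bz'\>)]$, and by the same exponential decay argument of Section~\ref{ssec:vanillaproof} it is enough to establish $\lambda_{\min}(\bK^{T_1})\geq c>0$ for some constant $c$ independent of $\eps$.

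For Phase 1 I will Taylor-expand the weight evolution \eqref{eq:evolution_effective_MF} in powers of $T_1$. Since $\os^t=0$ along the trajectory and $\|\obu^t\|_1=O(t)$ by Claim~\ref{claim:bound_ut_l1}, for $T_1$ small enough the activation is evaluated in the neighborhood where $\rA0'$ supplies a degree-$L$ polynomial approximation with $L=2^{8P}$. Following the template of Lemma~\ref{lem:pki-baru-discrete}, each coordinate $\ou^{T_1}_i(a)$ agrees with a universal polynomial $\sum_{n\geq 1} T_1^n p_{n,i}(a,\bbeta,\bm)$ up to an error $O(T_1^{L+1})$, where $\bbeta$ collects the Fourier coefficients of the residual $h_*-\hf_{\NN}(\cdot;\orho_t)$ along the trajectory and $\bm=(\sigma^{(r)}(0))_{r\leq L}$. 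Dropping the residual-interaction term yields a simplified dynamics $\hbu^t$ satisfying the same recurrence with $\bbeta$ replaced by the target Fourier vector $\balpha$, and $\|\obu^t-\hbu^t\|_\infty=O(T_1)$ coordinate-wise, so it suffices to analyze $\hbu^{T_1}$. Defining $\bN(\bzeta,\bphi,\btau)\in\R^{2^P\times 2^P}$ exactly as in Appendix~\ref{app:discrete-msp-proof}, with the MSP constraint $\phi_S=0$ for $S\notin\cS$ already incorporated, and substituting $\bzeta\mapsto T_1\ba$, $\bphi\mapsto\balpha$, $\btau\mapsto\bm_{\mathrm{pert}}$, one recovers the Phase 2 kernel entries up to $O(T_1^{L+1})$ corrections. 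Running the chain of Lemmas~\ref{lem:k-min-from-det-m}--\ref{lem:intermediate-reduction} (anti-concentration of polynomials plus the $O(T_1)$ closeness of $\bbeta$ to $\balpha$) then reduces the desired bound on $\lambda_{\min}(\bK^{T_1})$ to the statement that $\det(\bN(\bzeta,\balpha,\bm_{\mathrm{pert}}))\not\equiv 0$ as a polynomial in $\bzeta$.

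The main obstacle is exactly this polynomial non-vanishing, since Lemma~\ref{lem:nonzero-poly-discrete-msp} exploits the specific structure of $\sigma(x)=(1+x)^L$ through a Wronskian/\cite{newman1979waring} argument that does not directly apply to a generic activation. The role of the perturbation is to restore this freedom: viewing $\det(\bN(\bzeta,\bphi,\btau))$ as a polynomial in all three variable groups, Lemma~\ref{lem:nonzero-poly-discrete-msp} asserts that it is nonzero at the specific point $\tau_r=r!\binom{L}{r}$, hence it is not identically zero. Since $\btau\mapsto\bm+\btau$ is an invertible affine change of variables, the polynomial $\det(\bN(\bzeta,\bphi,\bm+\brho))$ in $(\bzeta,\bphi,\brho)$ is also not identically zero; standard Zariski-closure arguments applied successively to $\brho\sim\Unif([-\tau_{\mathrm{pert}},\tau_{\mathrm{pert}}]^{2^{8P}+1})$ and then to $(\alpha_S)_{S\in\cS}$ (Lebesgue-distributed under $\mu_\cS$) then yield the claimed almost-sure conclusions and complete the proof.
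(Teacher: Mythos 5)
The high-level two-phase structure (train the first layer, kernel-train the second, reduce global convergence to nonvanishing of a determinant) matches the paper. But the crux—that the relevant determinant is not identically zero—is where your proof has a gap. You cite Lemma~\ref{lem:nonzero-poly-discrete-msp} to claim the determinant, viewed as a polynomial in all variable groups, is nonzero at $\tau_r = r!\binom{L}{r}$. However, that lemma is proved for the \emph{discrete-time} polynomials $p_{k,i}$ obeying the $k$-step recurrence of Lemma~\ref{lem:pki-baru-discrete}. In the continuous-time theorem at hand, the simplified weights are governed by the \emph{continuous} recurrence of Lemma~\ref{lem:hatburecurrence}, giving Taylor coefficients $p_{il}(\balpha,\bm)$ with the $1/l$ integration factor; the determinant built from them is a genuinely different polynomial, and nonvanishing of the discrete one does not transfer. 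Your phrase ``defining $\bN$ exactly as in Appendix~\ref{app:discrete-msp-proof}'' and your claim that it reproduces the Phase-2 kernel up to $O(T_1^{L+1})$ cannot both hold: if $\bN$ uses the discrete $p_{k,i}$ it does not approximate the continuous kernel; if it uses the continuous $p_{il}$ then Lemma~\ref{lem:nonzero-poly-discrete-msp} says nothing about it.

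There is also a structural point that blocks a quick fix: the perturbation $\brho$ enters only the Phase-2 kernel $\hat\sigma_{pert}(\<\hat\bu^{T_1}(a),\bz\>)$, not the Phase-1 trained weights $\hat\bu^{T_1}$, which are computed with the fixed unperturbed $\bm$. So the affine substitution $\btau\mapsto\bm+\brho$ cannot reset the \emph{inner} dynamics to the $(1+x)^L$ case; to show nonvanishing at $\rho^*_r = r!\binom{L}{r}-m_r$ you would need to reprove a continuous analogue of Lemma~\ref{lem:nonzero-poly-discrete-msp} from scratch, combining the continuous distinctness claim (this is Claim~\ref{claim:qzminusqzprime}) with the Newman--Slater power-independence argument. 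This is exactly the work the paper's Section~\ref{sec:mainmspclaim} avoids: it instead analyzes the Wronskian matrix $\bN_{\bz,j}=\partial_a^{j-1}\exp(\nu q_\bz)$, uses the substitution $\rho_r=-m_r+\nu^r$ to collapse the perturbation onto a single auxiliary variable $\nu$, and extracts the top-degree coefficient in $\nu$ as a Vandermonde determinant $\prod_{\bz\neq\bz'}\partial_a(q_\bz - q_{\bz'})$, whose nonvanishing is precisely Claim~\ref{claim:qzminusqzprime}. That sidesteps Newman--Slater entirely in the continuous case. You also omit the degree control in $T_1$: Lemma~\ref{lem:mainmspclaim} needs a nonzero term of degree $l_\cS\leq 2^{3P}<L/2$ so that the anti-concentration step (Lemma~\ref{lem:finalhatsuffbound}) yields a usable lower bound on $\lambda_{\min}(\bK^{T_1})$; knowing only that the determinant is nonzero at some unbounded order does not give a quantitative eigenvalue bound.
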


\subsection{Discussion on the perturbation of the activation}\label{ssec:discussionperturb}

The perturbation is convenient to show that a polynomial is not identically zero for arbitrary MSP set structure. Note that given a set structure $\cS$, these polynomials are fully explicit (given by recurrence relations) and one can verify by hand that they have a non zero coefficient. It is an interesting direction to show this result directly without relying on perturbing the activation function. In the setting of discrete-time regime and polynomial activations (cf. Theorem~\ref{thm:discrete-msp}), such a perturbation is not needed: the weights $\obu^{T_1}$ are exact polynomials of $\oa^0$ and one can use algebraic tricks involving linear independence of powers of polynomials (see Proposition \ref{prop:newmanslater}).

Note that we can extend the definition of strong SGD-learnability in $O(d)$-scaling to allow such a perturbation. In that case, the dimension-free dynamics \eqref{eq:DF-PDE} corresponds to gluing two dynamics with activations $\sigma$ between $[0,T_1]$ and $\sigma_{pert}$ between $[T_1,T_2]$. The equivalent characterization (Theorem \ref{thm:equivalence}) and necessary condition (Theorem \ref{thm:MSP_necessary}) still hold using this extended definition.

\subsection{Outline of the proof} 

Similarly to the proof for the vanilla staircase in Section~\ref{ssec:vanillaproof}, the proof of Theorem~\ref{thm:genericmspsuffrestated} follows by analyzing the solution $\obu^t(\oa^0)$ to the evolution equations \eqref{eq:evolution_effective_MF} obtained from initialization $ (\oa^0, \bzero, 0)$. Again, for clarity, we will suppress some notations: we denote $\bu$ instead of $\obu$, and $a$ instead of $\oa^0$. We also forget about $\os^t = 0$ and simply consider $\orho_t \in \cP(\R^{P+1})$ the distribution of $(a,\bu)$. This last simplification can be done since we initialize the first-layer weights to 0, so in particular $\os^0 = 0$, and by the evolution equation of \eqref{eq:evolution_effective_MF} we have $\os^t = 0$ throughout training. Furthermore, we will denote $K$ a generic constant that only depends on $P$ and the constants in the assumptions. The value of $K$ can change from line to line.

For MSP functions beyond the vanilla staircase, the approach used to prove \cref{thm:vanillastaircasesuffrestated} no longer works, and a finer-grained analysis is needed.\footnote{Indeed, for MSP functions that are not vanilla staircases, $\bM = (\E_a [ a^{\beta(S)+\beta(S')}])_{S,S' \subseteq [P]}$ (introduced in Section \ref{app:outline_staircase}) can have some sets $S \neq S'$ such that $\beta(S) = \beta(S')$, and $\bM$ is not a positive matrix anymore.} The argument is more involved because we need to track higher-order corrections to $\bu^t$. We present here the finer-grained analysis.

The proof analyzes Phase 1 and Phase 2 of training separately.

\paragraph{Phase 1 (nonlinear dynamics)}

We break our analysis of the nonlinear training in Phase 1 into several parts. The goal is to understand the evolution under the dimension-free PDE of each neuron's weights $(a,\bu^t(a))$. Because we initialize the first layer to $0$, it suffices to study the dynamics of $\bu^t$, ignoring the dynamics of $\os^t$ since it stays at $\os^t = \os^0 = 0$ throughout. The dynamics of $\bu^t$ are given by
\begin{align}\label{eq:butdynamics}
\frac{d}{dt} \bu^t = a\E_{\bz}[g_t(\bz)\sigma'(\<\bu^t,\bz\>)\bz],
\end{align}
where $g_t(\bz) = h_*(\bz) - \fNN(\bz;\bar\rho_t)$ is the residual at time $t$.
\\

\textit{Reducing to analyzing with polynomial approximation}. Our first step is to analyze a polynomial approximation of $\bu^t$ instead of analyzing $\bu^t$ directly. Let $L > 0$ be an integer governing the degree of approximation. We will choose $L$ to be a large enough constant depending on $P$. We first prove in Section~\ref{ssec:capprox-one} that for small times $t$ we can approximate the dynamics of $\bu^t(a)$ by an approximate dynamics $\tilde{\bu}^t$ defined as
\begin{align*}
\tilde{\bu}^t(a) = \bQ^t [a, a^2, a^3, \ldots, a^L]^{\top},
\end{align*}
where $[a, a^2, a^3, \ldots,a^L]$ denotes the vector with the powers of $a$, and $\bQ^t \in \R^{P \times L}$ is a time-dependent matrix with $\bQ^0 = \bzero$ and which is updated according to a certain non-linear dynamics defined as follows (this corresponds essentially to truncating the dynamics of $\bu^t$ by only keeping the order-$L$ approximation). Let $g_t(\bz) = h_*(\bz) - \fNN(\bz;\bar\rho_t)$ denote the residual at time $t$. For $l = 1$,
\begin{align}\label{eq:Qdynamics1}
\frac{d}{dt} Q_{i1}^t = \E_{\bz}[z_i g_t(\bz) m_1]\, ,
\end{align}
and for $2 \leq l \leq L$,
\begin{align}\label{eq:Qdynamics2}
\frac{d}{dt} Q_{il}^t =  \E_{\bz}\left[z_i g_t(\bz) \sum_{1 \leq r \leq L-1} \frac{m_{r+1}}{r!} \sum_{i_1,\ldots,i_r \in [P]} \sum_{\substack{l_1,\ldots,l_r \in [L] \\ \sum_{r'=1}^r l_{r'} = l-1}} \prod_{r'=1}^r z_{i_{r'}} Q_{i_{r'}l_{r'}}^t\right] \, .
\end{align}
We prove in Claim~\ref{claim:buapproxtildebu} that we have $\|\bu^t(a) - \tilde{\bu}^t(a)\| \leq O(t^{L})$ for small enough times $t$, so it suffices to study $\tilde{\bu}^t$ instead of $\bu^t$. Of course, the dynamics of $\tilde{\bu}^t$ still present a challenge to analyze.
\\

\textit{Reducing to analyzing the simplified dynamics}. One significant challenge is that the residual $g_t$ is time-dependent, as it depends on $\fNN(\cdot;\bar\rho_t)$. This interaction term complicates the picture significantly. However, if we train for small time $t$, then $|\fNN(\bz;\bar\rho_t)| \leq O(t)$, and we can expect the contribution of this term to be negligible. To make this intuition precise, in Section~\ref{ssec:ignorefnn} we introduce a time-dependent matrix $\hat\bQ^t \in \R^{P \times L}$ which is initialized at $\hat\bQ^0 = \bzero$ and which has the same evolution equations \eqref{eq:Qdynamics1} and \eqref{eq:Qdynamics2} as $\bQ^t$, except with $g_t$ replaced by $h_*$. We obtain a ``simplified dynamics'' by letting $\hat\bu^t(a) = \hat\bQ^t [a,a^2,a^3,\ldots,a^L]^{\top}$. This is easier to analyze since it neglects the interaction term.

However, unlike the comparison of $\bu^t$ to its polynomial approximation $\tilde\bu^t$, where we could prove that $\|\bu^t - \tilde\bu^t\| \leq O(t^L)$, it is not the case that the simplified dynamics $\hat\bu$ give such a good accuracy approximation to $\bu$ in $L_2$ norm. Indeed, we may unfortunately have $\|\bu^t - \hat\bu^t\| \geq \Omega(t^2)$, which is a bound that would be far too loose for our analysis of higher-order terms in the dynamics. To overcome this issue, we prove that $|Q_{il}^t - \hatQ_{il}^t| \leq O(t^{l+1})$ for each $i \in [P], l \in [L]$. We then use the fact that $\tilde{\bu}^t$ and $\hat{\bu}^t$ are both polynomials in $a$ with coefficients $\bQ$ and $\hat{\bQ}$, respectively, to reduce to analyzing the $\hat{\bu}^t$ dynamics (see Section~\ref{ssec:reductiontosimplified} for details).
\\

\textit{Analyzing the simplified dynamics with a recurrence relation}. We analyze the $\hat{\bu}^t$ dynamics by deriving recurrence relations for the coefficients $\hatQ_{il}^t$. In particular, we may express each coefficient $\hatQ_{il}^t$ as a polynomial in $a$, $t$, and the nonzero Fourier coefficients $\{\alpha_S\}_{s \in \cS}$ of $h_*$ (see Section~\ref{ssec:continuousrecurrence}). This allows us to prove that almost surely over the choice of $h_*$ each coordinate $\hat{\bu}_i^t$ has distinct dynamics: namely, $\hat{\bu}_i^t - \hat{\bu}_{i'} \not\equiv 0$ for all $i \neq i' \in [P]$. This is where we must use the fact that the MSP function $h_*$ is ``generic'', i.e., the coefficients $\{\alpha_S\}_{S \in \cS}$ are chosen randomly. (In fact, we prove and use the stronger result that for any $\bz \neq \bz' \in \{+1,-1\}^P$, we have
$\<\hat{\bu}^t,\bz - \bz'\> \not\equiv 0$,
and this difference has nonzero low-degree terms.)

\paragraph{Phase 2 (linear dynamics)} The linear dynamics are analyzed by showing a lower-bound on $\lambda_{\mathrm{min}}(\bK^{T_1})$, as was the case for the proof of the vanilla staircase in Section~\ref{ssec:vanillaproof}. We show in Section \ref{ssec:reductiontosimplified} (and similarly to the discrete case) that it is sufficient to show that a polynomial depending on the simplified dynamics is non-zero. In Sections \ref{ssec:mspconclusion} and \ref{sec:mainmspclaim}, we show how this can be achieved using the perturbation on the activation function: one of the coefficient of the polynomial can be rewritten as the determinant of a Vandermonde matrix with entries $\{ \< \hat{\bu}^t , \bz \> \}_{ \bz \in \{+1,-1\}^P} $. Using that $\<\hat{\bu}^t,\bz - \bz'\> \not\equiv 0$ for $\bz \neq \bz'$, this determinant is non zero and we conclude the proof.

\subsection{Approximating the $u_i^t$ with polynomials}\label{ssec:capprox-one}

As outlined above, we study the dynamics of the dimension-free PDE. Let us first analyze Phase 1, when we train for time $T_1$ using activation function $\sigma$, and keep the second layer fixed. In particular, we analyze the dynamics of $\bu^t(a)$ given by \cref{eq:butdynamics} and the initialization $\bu^t = \bzero$. In the proof below, we sometimes omit the dependence on $a$ and time $t$, e.g., writing $\bu$ instead of $\bu^t (a)$, when the dependence on $t$ and $a$ is clear.

The first step of the proof is to approximate $\bu^t(a)$ with a polynomial in $a$. Let $L > 0$ be an integer which corresponds to the degree of approximation. We prove in this section that we have the approximation $\bu^t \approx \tilde{\bu}^t$, where we define $\tilde\bu^t$ as:
\begin{align*}
\tilde\bu^t = \bQ^t [a,a^2,a^3,\ldots,a^L]^{\top}.
\end{align*}
Here, recall that $\bQ^t \in \R^{P \times L}$ is given by initializing $\bQ^0 = \bzero$ and training with \cref{eq:Qdynamics1} and \cref{eq:Qdynamics2}.

We first prove for each $l \in [L], i \in [L]$, that each coefficient $Q_{il}^t$ of $a^l$ scales as $O(t^l)$. 

\begin{claim}\label{claim:qilbound}
There is a constant $C$ depending on $K,L,P$ such that for any $i \in [P]$, $l \in [L]$, and $0 \leq t \leq T_1$, $|Q_{il}^t| \leq Ct^l$.
\end{claim}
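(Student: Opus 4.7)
The plan is a direct induction on $l \in [L]$, combined with uniform $L^\infty$ control of the residual $g_t$. Throughout, I will use that $|a| \leq 1$ (since $\mu_a = \Unif([-1,+1])$), that $|m_r| \leq K$ for $0 \leq r \leq L$ (by Assumption~$\rA0'$), and that $\|h_*\|_\infty, \|\sigma\|_\infty \leq K$, so the residual satisfies
\[
|g_t(\bz)| \leq \|h_*\|_\infty + \|\fNN(\cdot;\bar\rho_t)\|_\infty \leq \|h_*\|_\infty + \E_{a \sim \mu_a}[|a|\|\sigma\|_\infty] \leq 2K.
\]

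For the base case $l = 1$, equation~\eqref{eq:Qdynamics1} immediately gives
$\big|\tfrac{d}{dt}Q_{i1}^t\big| \leq |m_1| \cdot \E_{\bz}[|g_t(\bz)|] \leq 2K^2,$
so integrating from $0$ with $Q_{i1}^0 = 0$ yields $|Q_{i1}^t| \leq 2K^2 \, t$, which is of the desired form.

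For the inductive step, suppose that $|Q_{i'l'}^t| \leq C_{l-1} t^{l'}$ holds for every $i' \in [P]$ and every $l' \in \{1,\ldots,l-1\}$. Applying this to each factor in the product inside \eqref{eq:Qdynamics2} and using $|z_i z_{i_1} \cdots z_{i_r}| = 1$, the contribution of any fixed tuple $(i_1,\ldots,i_r,l_1,\ldots,l_r)$ with $\sum_{r'} l_{r'} = l-1$ is at most $C_{l-1}^{\,r} \, t^{l-1}$. Since each $l_{r'} \geq 1$ forces $r \leq l-1 \leq L-1$, and the number of such tuples is bounded by $P^{\,r} \binom{l-2}{r-1}$, we obtain
\[
\Big|\tfrac{d}{dt} Q_{il}^t\Big| \leq 2K \sum_{r=1}^{l-1} \frac{K}{r!} P^{\,r} \binom{l-2}{r-1} C_{l-1}^{\,r} \, t^{l-1} \;\leq\; \tilde C_l \, t^{l-1},
\]
for a constant $\tilde C_l$ depending only on $K$, $L$, $P$, and $C_{l-1}$. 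Integrating from $0$ with $Q_{il}^0 = 0$ gives $|Q_{il}^t| \leq (\tilde C_l / l) \, t^l =: C_l \, t^l$, completing the induction. Setting $C = \max_{l \in [L]} C_l$ yields the claim.

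There is no real obstacle here; the bookkeeping that $C_l$ depends only on $C_{l-1}, K, L, P$ and that the induction terminates after $L$ steps guarantees that the final constant $C$ depends only on $K, L, P$ (and not on $t$ or $T_1$). The only subtlety worth flagging is that the bound on $|g_t|$ is uniform in $t$ because of the choice of initialization $|a|\leq 1$ and the boundedness of $\sigma$; no gradient-flow energy estimate is needed for this particular claim.
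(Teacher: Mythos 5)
Your proposal is correct and follows essentially the same route as the paper: induction on $l$, bounding $|\tfrac{d}{dt}Q_{il}^t|$ via the inductive hypothesis applied to each factor in the product, then integrating from $Q_{il}^0 = 0$. The only difference is cosmetic bookkeeping — you track the tuple count with $P^r\binom{l-2}{r-1}$ where the paper crudely uses $L(2P)^L$, and you bound $\|g_t\|_\infty \leq 2K$ directly from $\|h_*\|_\infty + \|\fNN\|_\infty$ rather than invoking a gradient-flow estimate — but neither changes the argument.
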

\begin{proof}
We prove this by induction on $l$. For the base case of $l = 1$, we know that
\begin{align*}
\Big\vert \pd{Q_{i1}^t}{t} \Big\vert  = |\E_{\bz}[z_i g_t(\bz) m_1]| \leq K,
\end{align*}
since $\|g_t\|_{\infty} \leq K$ throughout the dynamics, and $|m_1| \leq K$. So $|Q_{i1}^t| \leq K t \leq C_1 t$ for a constant $C_1$. For the inductive step, let $2 \leq l \leq L$ and suppose $|Q_{il'}^t| \leq C_{l'} t^{l'}$ for all $1 \leq l' < l$. Then
\begin{align*}
\Big\vert\pd{Q_{il}^t}{t}\Big\vert &\leq K\|g_t\|_{\infty}\E_{\bz}\Big[\sum_{1 \leq r \leq L-1}  \sum_{i_1,\ldots,i_r \in [P]} \sum_{\substack{l_1,\ldots,l_r \in [L] \\ \sum_{r'=1}^r l_{r'} = l-1}} \prod_{r'=1}^r |Q_{i_{r'}l_{r'}}^t|\Big] \\
&\leq K\|g_t\|_{\infty}\E_{\bz}\Big[\sum_{1 \leq r \leq L-1}  \sum_{i_1,\ldots,i_r \in [P]} \sum_{\substack{l_1,\ldots,l_r \in [L] \\ \sum_{r'=1}^r l_{r'} = l-1}} t^{l-1} \prod_{r'=1}^r |C_{l_{r'}}|\Big] \\
&\leq K L (2P)^L
\max(|C_1|,\ldots,|C_{l-1}|)^L  t^{l-1} \leq C_l t^{l-1}.
\end{align*}
So $|Q_{il}^t| \leq C_l t^l$, defining $C_l$ appropriately.
\end{proof}

Let us prove that $\bu^t$ and $\tilde{\bu}^t$ have norm $O(t)$.
\begin{claim}\label{claim:butildebuOt}
There is a constant $C$ depending only on $K,P$ and a constant $C'$ depending only on $K,L,P$ such that for any $0 \leq t \leq T_1$,
$\|\bu^t\| \leq Ct, \|\tilde{\bu}^t\| \leq C't$.
\end{claim}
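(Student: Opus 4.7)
The plan is to handle the two bounds separately: the bound on $\|\bu^t\|$ follows directly from the evolution equation \eqref{eq:butdynamics}, and the bound on $\|\tilde{\bu}^t\|$ follows from Claim~\ref{claim:qilbound} together with $|a| \leq 1$.

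For $\|\bu^t\|$, I would argue exactly as in Claim~\ref{claim:bound_ut_l1}. From \eqref{eq:butdynamics}, for each coordinate $i \in [P]$,
\[
\Big| \frac{d}{dt} u_i^t \Big| = \big| a\,\E_{\bz}\big[ g_t(\bz) \sigma'(\<\bu^t, \bz\>) z_i \big] \big| \leq |a|\cdot \|\sigma'\|_\infty \cdot \|g_t\|_\infty \leq K,
\]
using $|a| \leq 1$, Assumption $\rA$0 to bound $\|\sigma'\|_\infty \leq K$, and Lemma~\ref{lem:bound_evoluation_a} (or the elementary bound $\|g_t\|_\infty \leq \|h_*\|_\infty + \|\fNN(\cdot;\orho_t)\|_\infty \leq K$) to bound $\|g_t\|_\infty$. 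Integrating from $\bu^0 = \bzero$ gives $|u_i^t| \leq Kt$, hence $\|\bu^t\| \leq \sqrt{P}\,Kt \leq Ct$ for a constant $C$ depending only on $K,P$.

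For $\|\tilde{\bu}^t\|$, write $\tilde{u}_i^t = \sum_{l=1}^L Q_{il}^t a^l$. Using $|a| \leq 1$ and Claim~\ref{claim:qilbound}, which provides $|Q_{il}^t| \leq C_0 t^l$ for a constant $C_0$ depending on $K,L,P$,
\[
|\tilde u_i^t| \leq \sum_{l=1}^{L} |Q_{il}^t| \cdot |a|^l \leq \sum_{l=1}^{L} C_0 t^l \leq C_0 \sum_{l=1}^L T_1^{l-1} \cdot t \leq C' t / \sqrt{P},
\]
since $T_1$ is bounded (say $T_1 \leq 1$, which we are free to assume in Phase 1 as the sufficiency argument only requires $T_1$ sufficiently small). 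Summing over $i \in [P]$ yields $\|\tilde{\bu}^t\| \leq C' t$ for a constant $C'$ depending only on $K,L,P$. No step here should present a real obstacle; the only thing to be careful about is that $C'$ depends on $L$ through the geometric series bound, which is consistent with the claim's dependencies.
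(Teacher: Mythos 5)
Your proof is correct and follows essentially the same route as the paper: for $\|\bu^t\|$, bound the time derivative by $|a|\,\|\sigma'\|_\infty\,\|g_t\|_\infty\,\|\bz\|$ and integrate from $\bu^0 = \bzero$; for $\|\tilde{\bu}^t\|$, apply Claim~\ref{claim:qilbound} and sum the $O(t^l) = O(t)$ bounds over $i \in [P]$, $l \in [L]$ (implicitly using $t \leq T_1 \leq 1$, which the paper also does). The only cosmetic difference is that you argue coordinate-wise for the first bound while the paper bounds $\|\partial\bu^t/\partial t\|$ directly; the resulting constants and dependencies match.
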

\begin{proof}
Note $\bu^0 = \bzero$ and $\|\pd{\bu^t}{t}\| \leq |a|\|g_t(\bz)\|_{\infty}\|\sigma'\|_{\infty} \|\bz\| \leq (2K)K\sqrt{P} \leq C$. So $\|\bu^t\| \leq Ct$. Similarly, $\|\tilde{\bu}^t\| \leq \sum_{i \in [P]}, \sum_{l \in [L]} |Q_{il}^t| \leq PLCt \leq C't$ by Claim~\ref{claim:qilbound}.
\end{proof}

Let us prove that $\bu^t \approx \tilde{\bu}^t$ throughout the dynamics.
\begin{claim}\label{claim:buapproxtildebu}
There are constants $c,C > 0$ depending on $K,L,P$ such that if $T_1 \leq c$ then for any $0 \leq t \leq T_1$,
$\|\bu^t - \tilde{\bu}^t\| \leq C t^{L}$.
\end{claim}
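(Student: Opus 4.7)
The plan is to derive a Grönwall-type differential inequality for $\Delta^t(a) := \bu^t(a) - \tilde{\bu}^t(a)$ of the form $\|\tfrac{d}{dt}\Delta^t\| \leq K\|\Delta^t\| + K' t^L$, which combined with $\Delta^0 = \bzero$ and the choice of small $T_1$ yields $\|\Delta^t\| \leq C t^{L+1} \leq C t^L$ on $[0,T_1]$. The three ingredients to control are: the Taylor remainder of $\sigma'$ near $0$ from \eqref{eq:poly_approx_sigma}; the nonlinear Lipschitz comparison between $\langle \bu^t, \bz\rangle$ and $\langle \tilde{\bu}^t, \bz\rangle$; and the truncation error inherent in the definition of $\bQ^t$.

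First, by Claim~\ref{claim:butildebuOt} we have $\|\bu^t\|, \|\tilde{\bu}^t\| \leq C_0 t$, so by choosing $T_1 \leq c$ small enough that $C_0 t \sqrt{P} < \eta$, the inputs $\langle \bu^t, \bz\rangle$ and $\langle \tilde{\bu}^t, \bz\rangle$ both lie in the interval $(-\eta,\eta)$ where \eqref{eq:poly_approx_sigma} applies. Using the Taylor expansion of $\sigma'$, the true evolution can be written as
\[
\pd{}{t} u_i^t \;=\; a \sum_{r=0}^{L-1} \frac{m_{r+1}}{r!}\, \E_\bz\big[g_t(\bz)\, z_i\, \langle \bu^t,\bz\rangle^r\big] + E_{R,i}^t,
\]
with $|E_{R,i}^t| \leq K|a|\,\|g_t\|_\infty\,\|\bu^t\|^L \leq C_1 t^L$ by the remainder bound in \eqref{eq:poly_approx_sigma}.

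Second, one needs to verify that the dynamics \eqref{eq:Qdynamics1}--\eqref{eq:Qdynamics2} for $\bQ^t$ are precisely the degree-$\leq L$ truncation in $a$ of the Taylor expansion of the nonlinear dynamics applied to $\tilde{\bu}^t$. Expanding $\langle \tilde{\bu}^t, \bz\rangle^r = \sum_{i_1,\ldots,i_r}\prod_{r'} z_{i_{r'}} \sum_{l_{r'} \in [L]} Q_{i_{r'} l_{r'}}^t a^{l_{r'}}$ and collecting by total degree in $a$, the constraint $\sum_{r'} l_{r'} = l-1$ in \eqref{eq:Qdynamics2} exactly matches the coefficients of $a^l$ for $l \leq L$; the omitted monomials all have degree $\geq L+1$ in $a$, and each $\prod Q_{i_{r'}l_{r'}}^t$ is bounded by $\prod C t^{l_{r'}} = C^r t^{\sum l_{r'}} \leq C'' t^{L}$ by Claim~\ref{claim:qilbound}, giving a truncation error of size $O(t^L)$. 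Consequently,
\[
\pd{}{t}\tilde{u}_i^t \;=\; a\sum_{r=0}^{L-1}\frac{m_{r+1}}{r!}\,\E_\bz\big[g_t(\bz)\,z_i\,\langle\tilde{\bu}^t,\bz\rangle^r\big] + E_{T,i}^t,
\]
with $|E_{T,i}^t|\leq C_2 t^L$. Subtracting the two expressions and using the identity $\alpha^r-\beta^r=(\alpha-\beta)\sum_{s=0}^{r-1}\alpha^s\beta^{r-1-s}$ together with the uniform bound $|\langle \bu^t,\bz\rangle|,|\langle \tilde{\bu}^t,\bz\rangle|\leq C_0 t \sqrt P \leq K$, each summand is Lipschitz in $\Delta^t$ with constant depending only on $K,L,P$. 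This yields the differential inequality $\|\tfrac{d}{dt}\Delta^t\|\leq K\|\Delta^t\| + K' t^L$, and Grönwall's lemma closes the argument.

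The main obstacle is the bookkeeping in the middle step: verifying that the recurrence for $\bQ^t$ really is the Taylor expansion truncated at degree $L$ in $a$, and that everything not captured by $\bQ^t$ is an $O(t^L)$ error term. The remainder of the proof is routine, since the Taylor remainder of $\sigma'$ and the Lipschitz estimate for the polynomial difference $\langle \bu^t,\bz\rangle^r - \langle \tilde{\bu}^t,\bz\rangle^r$ are standard once $\|\bu^t\|,\|\tilde{\bu}^t\|$ are controlled by Claim~\ref{claim:butildebuOt} and $|Q_{il}^t|$ by Claim~\ref{claim:qilbound}.
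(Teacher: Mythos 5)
Your proof is correct and follows essentially the same route as the paper: Grönwall on $\Delta^t = \bu^t - \tilde\bu^t$, with the derivative of $\Delta^t$ decomposed into (a) the Taylor remainder of $\sigma'$ on $(-\eta,\eta)$, (b) the truncation error from dropping monomials of $a$-degree $> L$ in the $\bQ^t$ dynamics, bounded via the $|Q_{il}^t|\le Ct^l$ bound from Claim~\ref{claim:qilbound}, and (c) a Lipschitz comparison of $\<\bu^t,\bz\>^r$ and $\<\tilde\bu^t,\bz\>^r$ using the bounds from Claim~\ref{claim:butildebuOt}. The paper organizes (b) and (c) through the intermediate object $h^t_{L,r}(\bz)$ (its degree-$(L-1)$-in-$a$ truncation of $\<\tilde\bu^t,\bz\>^r$), whereas you label the two error sources as $E_R$ and $E_T$, but the bookkeeping is the same. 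One small thing in your favor: you make the Taylor remainder of $\sigma'$ explicit as $E_R$; the paper's displayed chain of inequalities actually jumps from $\sigma'(\<\bu^t,\bz\>)$ to the truncated polynomial $\sum_{r\le L-1}\frac{m_{r+1}}{r!}\<\bu^t,\bz\>^r$ without writing out that $O(t^L)$ remainder, which your version supplies cleanly.
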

\begin{proof}
The proof will use Gronwall's inequality. First, by triangle inequality
\begin{align*}
\pd{}{t} \|\bu^t - \tilde{\bu}^t\| \leq \Big\|\pd{\tilde{\bu}^t}{t} - \pd{\bu^t}{t}\Big\|.
\end{align*}
Notice that
\begin{align*}
&~\Big\|\pd{\tilde{\bu}^t}{t} - \pd{\bu^t}{t}\Big\| \\
=&~ \Big\|\E[\bz g_t(\bz) m_1] + \sum_{l=1}^L a^l \E_{\bz}\Big[\bz g_t(\bz) \sum_{1 \leq r \leq L-1} \frac{m_{r+1}}{r!} \sum_{i_1,\ldots,i_r \in [P]} \sum_{\substack{l_1,\ldots,l_r \in [L] \\ \sum_{r'=1}^r l_{r'} = l-1}} \prod_{r'=1}^r z_{i_{r'}} Q_{i_{r'}l_{r'}}^t\Big] - \pd{\bu^t}{t}\Big\| \\
\leq&~ \Big\|\E[\bz g_t(\bz) m_1] + \E_{\bz}\Big[\bz g_t(\bz) \sum_{1 \leq r \leq L-1} \frac{m_{r+1}}{r!} h_{L,r}^t(\bz)\Big] - \pd{\bu^t}{t}\Big\| \, ,
\end{align*}
where for any $1 \leq r \leq L-1$,
\begin{align*}
h_{L,r}^t(\bz) =  \sum_{l=1}^L a^l \sum_{i_1,\ldots,i_r \in [P]} \sum_{\substack{l_1,\ldots,l_r \in [L] \\ \sum_{r'=1}^r l_{r'} = l-1}} \prod_{r'=1}^r z_{i_{r'}} Q_{i_{r'}l_{r'}}^t,
\end{align*}
which can be thought of as a degree-$(L-1)$ approximation to $\<\tilde{\bu}^t,\bz\>^r$, in the sense that
\begin{align*}
&~|\<\tilde{\bu}^t,\bz\>^r - h_{L,r}^t(\bz)| \\
=&~ \Big\vert \<\tilde{\bu}^t,\bz\>^r - \sum_{i_1,\ldots,i_r \in [P]} \sum_{\substack{l_1,\ldots,l_r \in [L] \\ 0 \leq \sum_{r'=1}^r l_{r'} \leq L-1}} \prod_{r'=1}^r z_{i_{r'}} Q_{i_{r'}l_{r'}}^t a^{l_{r'}}\Big\vert \\
=&~ \Big\vert\sum_{i_1,\ldots,i_r \in [P]} \Big\{\sum_{l_1,\ldots,l_r \in [L]} \prod_{r'=1}^r z_{i_{r'}} Q_{i_{r'}l_{r'}}^t a^{l_{r'}} -  \sum_{\substack{l_1,\ldots,l_r \in [L] \\ 0 \leq \sum_{r'=1}^r l_{r'} \leq L-1}} \prod_{r'=1}^r z_{i_{r'}} Q_{i_{r'}l_{r'}}^t a^{l_{r'}}\Big\}\Big\vert \\
=&~ \Big\vert\sum_{i_1,\ldots,i_r \in [P]}  \sum_{\substack{l_1,\ldots,l_r \in [L] \\ L \leq \sum_{r'=1}^r l_{r'} \leq rL}} \prod_{r'=1}^r z_{i_{r'}} Q_{i_{r'}l_{r'}}^t a^{l_{r'}}\Big\vert \\
\leq&~ \sum_{i_1,\ldots,i_r \in [P]}  \sum_{\substack{l_1,\ldots,l_r \in [L] \\ L \leq \sum_{r'=1}^r l_{r'} \leq rL}} \prod_{r'=1}^r C t^{l_{r'}} |a|^{l_{r'}} \\
\leq&~ P^r 2^{rL} C^r |a|^L t^L \\
\leq&~ P^L 2^{L^2} C^L t^L \\
\leq&~ C t^L \, ,
\end{align*}
for a constant $C$ depending on $K,L,P$, where used Claim~\ref{claim:qilbound} to bound $Q_{i_{r'}l_{r'}}^t$ and that $|a| \leq 1$ and $t < 1$ in the final bound.

We conclude that 
\begin{align*}
\Big\|\pd{\tilde{\bu}^t}{t} - \pd{\bu^t}{t}\Big\| &\leq \Big\|\E_{\bz}\Big[\bz g_t(\bz) \sum_{1 \leq r \leq L-1} \frac{m_{r+1}}{r!} (h_{L,r}^t(\bz) - \<\bu^t,\bz\>^r)\Big]\Big\| \\
&\leq 2K^2PL \max_{1 \leq r \leq L-1,\bz \in \{+1,-1\}^P} |h_{L,r}^t(\bz) - \<\bu^t,\bz\>^r| \\
&\leq 2K^2PL \max_{1 \leq r \leq L-1,\bz \in \{+1,-1\}^P} |h_{L,r}^t(\bz) - \<\tilde{\bu}^t,\bz\>^r| + |\<\tilde{\bu}^t,\bz\>^r - \<\bu^t,\bz\>^r| \\
&\leq 2K^2PL( Ct^L + rP^r \|\tilde{\bu}^t - \bu^t\|) \\
&\leq Ct^L + C\|\tilde{\bu}^t - \bu^t\|\, ,
\end{align*}
where for the second-to-last line we have used $\|\tilde{\bu}^t\|, \|\bu^t\| \leq Ct \leq 1/(r\sqrt{P})$ if we take small enough time $T_1 \leq c$ for a constant $c > 0$ depending on $L,P,K$. The claim follows by Gronwall's inequality, since $\tilde{\bu}^0 = \bu^0 = \bzero$ and we train for time $T_1 \leq c < 1$.
\end{proof}

\subsection{Simplified dynamics without interaction term}\label{ssec:ignorefnn}

We have introduced the dynamics $\tilde{\bu}^t$ and proved that they give a $O(t^L)$-approximation of the true dynamics $\bu^t$. We now reduce further, to analyzing the dynamics of $\hat{\bu}^t$, where we have dropped the $\fNN$ term, replacing $g_t(\bz)$ with $h_*(\bz)$ in the definition of the dynamics \eqref{eq:Qdynamics1} and \eqref{eq:Qdynamics2}:
\begin{align*}
    \hat{\bu}(t) = \hatbQ^t[a,a^2,a^3,\ldots,a^L]^{\top},
\end{align*}
where for $l = 1$,
\begin{align*}
\pd{\hatQ_{i1}^t}{t} = \E_{\bz}[z_i h_*(\bz) m_1],
\end{align*}
and for $2 \leq l \leq L$,
\begin{align*}
\pd{\hatQ_{il}^t}{t} =  \E_{\bz}\Big[z_i h_*(\bz) \sum_{1 \leq r \leq L-1} \frac{m_{r+1}}{r!} \sum_{i_1,\ldots,i_r \in [P]} \sum_{\substack{l_1,\ldots,l_r \in [L] \\ \sum_{r'=1}^r l_{r'} = l-1}} \prod_{r'=1}^r z_{i_{r'}} \hatQ_{i_{r'}l_{r'}}^t\Big].
\end{align*}

To show that the new dynamics is close to the old dynamics, we first show that $\|\fNN(\bz;\rho_t)\|_{\infty} = O(t)$, is small when $t$ is small:
\begin{claim}\label{claim:fnnOt}
There is a constant $C$ depending on $K,L,P$ such that for all $0 \leq t \leq T_1$, $\|\fNN(\bz;\rho_t)\|_{\infty} \leq Ct$.
\end{claim}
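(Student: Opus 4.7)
The plan is to exploit two ingredients: (i) the specialized initialization in this section reduces $\fNN(\cdot;\bar\rho_t)$ to a one-dimensional average over $a \sim \Unif([-1,1])$; and (ii) Claim~\ref{claim:butildebuOt} gives the a~priori bound $\|\bu^t(a)\| \leq Ct$ uniformly in $a \in [-1,1]$. Combined with a first-order Taylor expansion of $\sigma$ around $0$ and the symmetry of $\mu_a$, these two facts immediately produce the desired $O(t)$ bound.

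First I would verify that throughout Phase~1 we have $\bar{s}^t \equiv 0$, so that the dimension-free network simplifies. This is immediate from the evolution equation \eqref{eq:evolution_effective_MF} for $\bar{s}^t$: since $\bar{s}^0 = m_2^w = 0$ (we chose $\mu_w = \delta_0$), and since $\E_G[\sigma'(\<\bar\bu^t,\bz\>+\bar s^t G)G]$ vanishes whenever $\bar s^t = 0$ (Gaussian integration by parts, or direct computation using that $G$ has mean zero and the integrand no longer depends on $G$), the PDE forces $\bar{s}^t = 0$ for all $t \in [0,T_1]$. Consequently
\[
\fNN(\bz;\bar\rho_t) = \E_{a \sim \Unif[-1,1]}\bigl[a\,\sigma(\<\bu^t(a),\bz\>)\bigr].
\]

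Next I would observe that at $t = 0$ we have $\bu^0(a) = \bzero$, and so
\[
\fNN(\bz;\bar\rho_0) = \sigma(0)\,\E_{a \sim \Unif[-1,1]}[a] = 0
\]
by symmetry of $\mu_a$ about the origin. Subtracting this vanishing zeroth-order term and applying the mean value theorem coordinate-wise gives
\[
|\fNN(\bz;\bar\rho_t)| = \bigl|\E_a[a\,(\sigma(\<\bu^t(a),\bz\>) - \sigma(0))]\bigr| \leq \|\sigma'\|_\infty \E_a\bigl[|a|\cdot\|\bu^t(a)\|\cdot\|\bz\|\bigr].
\]
Bounding $\|\sigma'\|_\infty \leq K$ by Assumption $\rA0$, $|a| \leq 1$, $\|\bz\| = \sqrt{P}$, and $\|\bu^t(a)\| \leq Ct$ uniformly in $a \in [-1,1]$ by Claim~\ref{claim:butildebuOt}, we obtain $|\fNN(\bz;\bar\rho_t)| \leq K\sqrt{P}\,C\,t$, which is the claim (after redefining the constant).

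There is no substantive obstacle here: the argument is a one-line first-order Taylor expansion, and the only mildly delicate point is the justification that $\bar{s}^t \equiv 0$, which is automatic from the initialization. The result relies crucially on the symmetric choice $\mu_a = \Unif([-1,1])$, which kills the constant term of the Taylor expansion — without such a symmetry one would only get $|\fNN| = O(1)$.
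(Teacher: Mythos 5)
Your proof is correct and follows essentially the same route as the paper: both compare $\fNN(\bz;\bar\rho_t)$ to the vanishing initial value $\fNN(\bz;\bar\rho_0) = \E_a[a\,\sigma(0)] = 0$ (using symmetry of $\mu_a$), then invoke Lipschitzness of $\sigma$ together with the a~priori bound $\|\bu^t(a)\| \leq Ct$ from Claim~\ref{claim:butildebuOt}. The only cosmetic difference is your use of the $\ell_2$ norm and Cauchy--Schwarz where the paper uses the $\ell_1$ norm (and $|z_i|=1$), which changes the constant by a factor of $\sqrt{P}$ versus $P$; the preliminary check that $\bar s^t \equiv 0$ is already established in the section's setup, so it is fine to cite rather than reprove.
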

\begin{proof}
For any $\bz$, $|\fNN(\bz;\rho_t) - \fNN(\bz;\rho_0)| \leq \E_{a}[|a \sigma(\<\bu^t(a),\bz\>) - a\sigma(\<\bu^0(a),\bz\>)|] \leq \E_{a}[K| \<\bu^t(a)-\bu^0(a),\bz\>|] \leq K\|\bu^t(a) - \bzero\|_1 \leq KPCt \leq Ct$ by Claim~\ref{claim:butildebuOt} and $K$-Lipschitzness. And $\fNN(\bz;\rho_0) = \E_{a}[a \sigma(0)] = 0$, since $\E_a[a] = 0$ and $\bu^0(a) = \bzero$.
\end{proof}

We also prove the analogue of Claim~\ref{claim:qilbound} for $\hatQ$:
\begin{claim}\label{claim:hatqilhatbubound}
There is a constant $C$ depending on $K,L,P$ such that for all $i \in [P]$, $l \in [L]$, and $0 \leq t \leq T_1$, $|\hatQ_{il}^t| \leq Ct^l$. Also, $\|\hat{\bu}^t\| \leq Ct$.
\end{claim}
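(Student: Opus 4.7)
The plan is to mimic essentially verbatim the argument used for Claim~\ref{claim:qilbound} (the analogous bound on $Q_{il}^t$), exploiting the fact that the only difference between the $\bQ^t$ and $\hatbQ^t$ dynamics is that the residual $g_t(\bz)$ is replaced by $h_*(\bz)$. Since $\|h_*\|_\infty \le K$ by assumption, all the bounds used for $g_t$ (via $\|g_t\|_\infty \le K$) carry through with no change. So the proof is a straightforward induction on $l$ rather than requiring any new idea.

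For the base case $l=1$, the evolution equation gives
\[
\Big|\pd{\hatQ_{i1}^t}{t}\Big| = |\E_\bz[z_i h_*(\bz) m_1]| \le \|h_*\|_\infty |m_1| \le K^2,
\]
so $|\hatQ_{i1}^t| \le C_1 t$ for a constant $C_1$ depending on $K$. For the inductive step, assume $|\hatQ_{il'}^t| \le C_{l'} t^{l'}$ for all $1 \le l' < l$. Plugging into the recurrence for $\pd{\hatQ_{il}^t}{t}$ and bounding each factor,
\[
\Big|\pd{\hatQ_{il}^t}{t}\Big| \le K\|h_*\|_\infty \sum_{r=1}^{L-1}\frac{|m_{r+1}|}{r!} \sum_{i_1,\dots,i_r \in [P]} \sum_{\substack{l_1,\dots,l_r \in [L]\\ \sum l_{r'} = l-1}} \prod_{r'=1}^r C_{l_{r'}} t^{l_{r'}} \le C_l\, t^{l-1},
\]
where $C_l$ depends only on $K,L,P$ and the previously-chosen constants $C_1,\dots,C_{l-1}$. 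Integrating from $0$ (using $\hatbQ^0 = \bzero$) gives $|\hatQ_{il}^t| \le C_l t^l$, completing the induction. Choosing $C = \max_{l \in [L]} C_l$ gives the uniform bound $|\hatQ_{il}^t| \le C t^l$ claimed.

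For the second statement, since $\hat{\bu}^t = \hatbQ^t [a, a^2,\dots, a^L]^{\top}$, for $|a| \le 1$ and $t \le T_1 \le 1$ (we may shrink $T_1$ if needed) we can bound the Euclidean norm by the $\ell_1$-bound
\[
\|\hat{\bu}^t\| \le \sum_{i \in [P]} \sum_{l \in [L]} |\hatQ_{il}^t|\, |a|^l \le \sum_{i \in [P]} \sum_{l \in [L]} C t^l \le P L C t,
\]
using that $t^l \le t$ for $l \ge 1$ and $t \le 1$. Redefining $C$ to absorb the factor $PL$ yields $\|\hat{\bu}^t\| \le C t$. There is no real obstacle: the whole point of the simplified dynamics $\hatbu^t$ is that it is structurally identical to $\tilde\bu^t$ (and therefore amenable to the same inductive bookkeeping), with only the source term swapped from $g_t$ to $h_*$; the bound $\|h_*\|_\infty \le K$ makes the inductive step go through unchanged.
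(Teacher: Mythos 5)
Your proposal is correct and takes essentially the same approach as the paper: the paper's own proof simply states that the bound on $|\hatQ_{il}^t|$ follows by repeating the induction of Claim~\ref{claim:qilbound} with $\|h_*\|_\infty \le K$ replacing $\|g_t\|_\infty \le 2K$, and that the bound on $\|\hat{\bu}^t\|$ follows as in Claim~\ref{claim:butildebuOt}, which is exactly what you spell out.
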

\begin{proof}
The bound on $|\hatQ_{il}|$ is the same as Claim~\ref{claim:qilbound}, but using the bound $\|h_*\|_{\infty} \leq K$ instead of the bound $\|g_t\|_{\infty} \leq 2K$. The bound on $\|\hat{\bu}^t\|$ is the same as Claim~\ref{claim:butildebuOt}, using the bound on $|\hatQ_{il}^t|$.
\end{proof}

We show that $|Q_{il}^t - \hatQ_{il}^t| \leq O(t^{l+1})$ for each $l \in [L]$:
\begin{lemma}\label{lem:qqhatclose}
There is a constant $C$ depending on $K,L,P$ such that for any $i \in [P]$, $l \in [L]$, $|Q_{il}^t - \hatQ_{il}^t| \leq Ct^{l+1}$.
\end{lemma}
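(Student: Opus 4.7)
The plan is to prove Lemma~\ref{lem:qqhatclose} by induction on $l$, exploiting two facts: (i) the only structural difference between the $Q$ and $\hatQ$ dynamics is that $g_t(\bz) = h_*(\bz) - \fNN(\bz;\bar\rho_t)$ is replaced by $h_*(\bz)$, and by Claim~\ref{claim:fnnOt}, $\|\fNN(\cdot;\bar\rho_t)\|_\infty \leq Ct$; and (ii) for each $l' < l$, Claim~\ref{claim:qilbound} gives $|Q_{il'}^t| \leq Ct^{l'}$ and the inductive hypothesis gives $|Q_{i l'}^t - \hatQ_{i l'}^t| \leq C t^{l'+1}$. One extra power of $t$ then comes either from the $\fNN$ factor or from the inductive hypothesis itself, and integrating from $0$ in time gives the final factor that lifts $t^l$ to $t^{l+1}$.

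The base case $l=1$ is immediate: both $\pd{Q_{i1}^t}{t} = m_1 \E_{\bz}[z_i g_t(\bz)]$ and $\pd{\hatQ_{i1}^t}{t} = m_1 \E_{\bz}[z_i h_*(\bz)]$ hold with $Q_{i1}^0 = \hatQ_{i1}^0 = 0$, so subtracting and using $|m_1| \leq K$ and $\|\fNN(\cdot;\bar\rho_t)\|_\infty \leq Ct$ gives $|\pd{}{t}(Q_{i1}^t - \hatQ_{i1}^t)| \leq Ct$, whence integrating yields $|Q_{i1}^t - \hatQ_{i1}^t| \leq Ct^2$.

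For the inductive step, I would subtract the two evolution equations \eqref{eq:Qdynamics2} and decompose the difference into two pieces by a standard add-and-subtract argument:
\begin{align*}
\pd{}{t}(Q_{il}^t - \hatQ_{il}^t) &= \underbrace{-\E_{\bz}\Big[z_i \fNN(\bz;\bar\rho_t) \sum_{r=1}^{L-1}\frac{m_{r+1}}{r!}\!\!\sum_{\substack{i_1,\ldots,i_r \in [P]\\ l_1+\cdots+l_r = l-1}}\!\! \prod_{r'=1}^r z_{i_{r'}} Q_{i_{r'} l_{r'}}^t\Big]}_{\text{(A)}} \\
&\quad + \underbrace{\E_{\bz}\Big[z_i h_*(\bz) \sum_{r=1}^{L-1}\frac{m_{r+1}}{r!}\!\!\sum_{\substack{i_1,\ldots,i_r \in [P]\\ l_1+\cdots+l_r = l-1}}\!\! \Big(\prod_{r'=1}^r z_{i_{r'}} Q_{i_{r'}l_{r'}}^t - \prod_{r'=1}^r z_{i_{r'}} \hatQ_{i_{r'}l_{r'}}^t\Big)\Big]}_{\text{(B)}}.
\end{align*}
For term (A), Claim~\ref{claim:fnnOt} bounds $\fNN$ by $Ct$, and Claim~\ref{claim:qilbound} gives $\prod_{r'} |Q_{i_{r'} l_{r'}}^t| \leq C^r t^{l-1}$; summing over the finitely many indices $(r, i_1,\ldots,i_r, l_1,\ldots,l_r)$ and absorbing constants yields $|\text{(A)}| \leq C t \cdot t^{l-1} = C t^l$. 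For term (B), I would telescope each product difference as $\prod Q - \prod \hatQ = \sum_{r'} (Q_{i_{r'} l_{r'}}^t - \hatQ_{i_{r'} l_{r'}}^t) \prod_{k' < r'} Q_{i_{k'} l_{k'}}^t \prod_{k' > r'} \hatQ_{i_{k'} l_{k'}}^t$; by the inductive hypothesis $|Q_{i_{r'} l_{r'}}^t - \hatQ_{i_{r'} l_{r'}}^t| \leq C t^{l_{r'} + 1}$, and the remaining factors are bounded by Claims~\ref{claim:qilbound} and~\ref{claim:hatqilhatbubound}, yielding $C t^{l_{r'}+1} \prod_{k' \neq r'} t^{l_{k'}} = C t^l$ per term. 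Summing over the finitely many indices gives $|\text{(B)}| \leq C t^l$. Combining, $|\pd{}{t}(Q_{il}^t - \hatQ_{il}^t)| \leq Ct^l$, and integrating from $Q_{il}^0 = \hatQ_{il}^0 = 0$ yields $|Q_{il}^t - \hatQ_{il}^t| \leq C t^{l+1}/(l+1) \leq C t^{l+1}$, completing the induction.

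There is no real obstacle here: the argument is purely a bookkeeping induction, and all required bounds (boundedness of $g_t, h_*, \fNN/t$, and the polynomial-in-$t$ bounds on $Q_{il}^t$ and $\hatQ_{il}^t$) have already been established in the preceding claims. The only care needed is to check that in term (B) the telescoping picks up exactly one extra factor of $t$ relative to the estimate on $\pd{}{t}\hatQ_{il}^t$, which follows because the inductive hypothesis gives an improvement of one power of $t$ on each $l_{r'}$ slot used in the telescope.
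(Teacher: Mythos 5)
Your proof is correct and follows essentially the same approach as the paper's: induction on $l$, a one-term add-and-subtract to isolate the $\fNN$ contribution from the propagated error in the coefficients, a telescoping of the product difference that draws one extra power of $t$ from the inductive hypothesis, and integration in time. The only difference is cosmetic (you attach the $\fNN$ factor to $\prod Q$ while the paper attaches $(h_*-g_t)$ to $\prod\hatQ$); the bounds invoked and the final estimate are identical.
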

\begin{proof}
We prove this by induction on $l$. For $l = 1$,
\begin{align*}
\Big\vert \pd{\hatQ_{i1}^t}{t} - \pd{Q_{i1}^t}{t}\Big\vert \leq \|h_*(\bz) - g_t(\bz)\|_{\infty} |m_1| \leq K \|\fNN(\bz;\rho_t)\|_{\infty} \leq CKt \leq C_1 t,
\end{align*}
by Claim~\ref{claim:fnnOt}, for some large enough constant $C_1$. Therefore $|\hatQ_{i1}^t - Q_{i1}^t| \leq C_1 t^2$.
For the inductive step, let $2 \leq l \leq L$, and assume that $|\hatQ^t_{il'} - Q_{il'}^t| \leq C_1 t^{l'+1}$ for all $1 \leq l' \leq l-1$. Then
\begin{align*}
&~\Big\vert \pd{\hatQ_{il}^t}{t} - \pd{Q_{il}^t}{t}\Big\vert \\
=&~ \Big\vert \E_{\bz}\Big[z_i \sum_{1 \leq r \leq L-1} \frac{m_{r+1}}{r!} \sum_{i_1,\ldots,i_r \in [P]} \sum_{\substack{l_1,\ldots,l_r \in [L] \\ \sum_{r'=1}^r l_{r'} = l-1}} \Big\{ h_*(\bz)\prod_{r'=1}^r z_{i_{r'}} \hatQ^t_{i_{r'}l_{r'}} - g_t(\bz)\prod_{r'=1}^r z_{i_{r'}} Q^t_{i_{r'}l_{r'}}\Big\}\Big]\Big\vert  \\
\leq&~ KL P^L \sum_{\substack{l_1,\ldots,l_r \in [L] \\ \sum_{r'=1}^r l_{r'} = l-1}} \max_{\bz} \left|h_*(\bz)\prod_{r'=1}^r \hatQ^t_{i_{r'}l_{r'}} - g_t(\bz)\prod_{r'=1}^r Q^t_{i_{r'}l_{r'}}\right| \\
\leq&~ C \sum_{\substack{l_1,\ldots,l_r \in [L] \\ \sum_{r'=1}^r l_{r'} = l-1}} \left(\max_{\bz} |h_*(\bz) - g_t(\bz)|\left|\prod_{r'=1}^r \hatQ^t_{i_{r'}l_{r'}}\right| + |g_t(\bz)| \left|\prod_{r'=1}^r \hatQ^t_{i_{r'}l_{r'}} - \prod_{r'=1}^r Q^t_{i_{r'}l_{r'}}\right|\right) \\
\leq&~ \sum_{\substack{l_1,\ldots,l_r \in [L] \\ \sum_{r'=1}^r l_{r'} = l-1}} \left(Ct C^r t^{l-1} + (2K) \left|\prod_{r'=1}^r \hatQ^t_{i_{r'}l_{r'}} - \prod_{r'=1}^r Q^t_{i_{r'}l_{r'}}\right|\right) \\
\leq&~ Ct^l + C\sum_{\substack{l_1,\ldots,l_r \in [L] \\ \sum_{r'=1}^r l_{r'} = l-1}} \sum_{r''=1}^{r}\left|\prod_{r'=1}^{r''} \hatQ^t_{i_{r'}l_{r'}} \prod_{r'=r''+1}^r Q^t_{i_{r'}l_{r'}} - \prod_{r'=1}^{r''-1} \hatQ^t_{i_{r'}l_{r'}} \prod_{r'=r''}^r Q^t_{i_{r'}l_{r'}}\right| \\
\leq&~ Ct^l + C\sum_{\substack{l_1,\ldots,l_r \in [L] \\ \sum_{r'=1}^r l_{r'} = l-1}} \sum_{r''=1}^{r}\left| \hatQ^t_{i_{r''}l_{r''}} - Q^t_{i_{r''}l_{r''}}\right| C^{r-1} t^{l-1-l_{r''}} \\
\leq&~ Ct^l + C\sum_{\substack{l_1,\ldots,l_r \in [L] \\ \sum_{r'=1}^r l_{r'} = l-1}} \sum_{r''=1}^{r}\left( C_{l_{r''}}t^{l_{r''}+1}\right) C^{r-1} t^{l-1-l_{r''}} \\
\leq&~ C_lt^l\, ,
\end{align*}
where the second-to-last-line was by the inductive hypothesis. Since $Q_{il}^0 = \hatQ_{il}^0 = 0$, we conclude $|\hatQ_{il}^t - Q_{il}^t| \leq C_lt^{l+1}$.
\end{proof}

The above lemma will be used in Section~\ref{ssec:reductiontosimplified} to show that it suffices to analyze the dynamics of $\hat\bQ^t$ instead of the dynamics of $\bQ^t$, and in turn instead of the dynamics of $\bu^t$.

\subsection{Recurrence relation of the coefficients in the simplified dynamics}\label{ssec:continuousrecurrence}

We prove that each entry of the matrix $\hatQ^t$ is a polynomial in $t$ and the Fourier coefficients of $h_*$, and we give a recurrence relation for the coefficients. Define $\balpha = \{\alpha_S\}_{S \subseteq [P]} \in \R^{2^P}$ where $\alpha_S = \E_{\bz}[\chi_S(\bz) h_*(\bz)]$, and $\bm = ( m_0 , \ldots , m_{L}) \in \R^{L+1}$ where $m_i = \sigma^{(i)}(0) $ for all $i \in \{0,\ldots,L\}$..

\begin{lemma}\label{lem:hatburecurrence}
For each $i \in [P]$, $l \in [L]$, we have $\hatQ^t_{il} = t^l p_{il}(\balpha,\bm)$, where $p$ is a polynomial in the Fourier coefficients $\balpha$ of $h_*$ and in the first $L$ derivatives $\bm$ of $\sigma$. Furthermore, $\{p_{il}\}_{i \in [P], l \in [L]}$ satisfies the recurrence relations $p_{i1} = \alpha_{\{i\}} m_1$ and
\begin{align*}
&~ p_{il}(\balpha,\bm) \\
=&~ \frac{1}{l}\sum_{S \subseteq [P]} \sum_{1 \leq r \leq L-1} \sum_{i_1,\ldots,i_r \in [P]} \sum_{\substack{l_1,\ldots,l_r \in [L] \\ \sum_{r'=1}^r l_{r'} = l-1}} 1(\{i\} \oplus S \oplus \{i_1\} \dots \oplus \{i_r\} = \emptyset)\frac{m_{r+1}}{r!} \alpha_{S}  \prod_{r'=1}^{r} p_{i_{r'}l_{r'}}(\balpha,\bm).
\end{align*}
\end{lemma}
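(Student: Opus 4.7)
The statement is a purely algebraic identity about the solution of the ODE system defining $\hat{\bQ}^t$, so the natural approach is induction on $l$, reading off the polynomial structure from the recurrence \eqref{eq:Qdynamics1}--\eqref{eq:Qdynamics2} (with $g_t$ replaced by $h_*$). The key observation is that $h_*$ is independent of $t$, so once the lower indices have the claimed form $\hat{Q}^t_{il'} = t^{l'} p_{il'}(\balpha,\bm)$, the right-hand side of the ODE for $\hat{Q}^t_{il}$ becomes a pure monomial in $t$ of degree $l-1$, and integration in time produces the factor $t^l/l$.

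The base case $l=1$ is immediate: expanding $h_*(\bz) = \sum_{S}\alpha_S \chi_S(\bz)$ and using orthogonality of the Fourier--Walsh basis gives $\E_{\bz}[z_i h_*(\bz)] = \alpha_{\{i\}}$, so $\partial_t \hat{Q}^t_{i1} = m_1 \alpha_{\{i\}}$, and with $\hat{Q}^0_{i1}=0$ we obtain $\hat{Q}^t_{i1} = t\,m_1 \alpha_{\{i\}}$, which is exactly $t\,p_{i1}(\balpha,\bm)$.

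For the inductive step, fix $2 \leq l \leq L$ and assume $\hat{Q}^t_{il'} = t^{l'} p_{il'}(\balpha,\bm)$ for all $1 \leq l' < l$. Substituting into the ODE \eqref{eq:Qdynamics2} (with $g_t$ replaced by $h_*$), the product $\prod_{r'=1}^{r} \hat{Q}^t_{i_{r'}l_{r'}} = t^{\sum_{r'} l_{r'}} \prod_{r'} p_{i_{r'}l_{r'}}(\balpha,\bm) = t^{l-1} \prod_{r'} p_{i_{r'}l_{r'}}(\balpha,\bm)$ under the constraint $\sum_{r'} l_{r'} = l-1$. The remaining expectation is purely Fourier: expanding $h_*(\bz)=\sum_S \alpha_S\chi_S(\bz)$ and using
\[
\E_{\bz}\big[\chi_{\{i\}}(\bz)\chi_S(\bz)\prod_{r'=1}^r \chi_{\{i_{r'}\}}(\bz)\big] = \ind\big(\{i\}\oplus S\oplus\{i_1\}\oplus\cdots\oplus\{i_r\}=\emptyset\big),
\]
the derivative factors as $\partial_t \hat{Q}^t_{il} = t^{l-1} \cdot l\, p_{il}(\balpha,\bm)$, where $p_{il}$ is defined by the displayed recurrence in the statement. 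Integrating from $0$ to $t$ using $\hat{Q}^0_{il}=0$ yields $\hat{Q}^t_{il} = t^l p_{il}(\balpha,\bm)$, completing the induction.

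The argument is essentially book-keeping: the only things to verify carefully are that the $\sum_{r'} l_{r'} = l-1$ constraint exactly produces the $t^{l-1}$ factor, and that the Fourier-expansion step in $\bz$ is handled correctly (here the term $z_i$ coming from the gradient contributes the $\{i\}$ in the symmetric difference). There is no real analytic obstacle; the main thing to be careful about is tracking that $p_{il}$ is indeed a polynomial in the finite vectors $\balpha\in\R^{2^P}$ and $\bm\in\R^{L+1}$, which follows immediately since the right-hand side of the recurrence is a finite sum of products of such polynomials of lower index $l' < l$ together with entries of $\balpha$ and $\bm$.
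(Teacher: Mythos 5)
Your proof is correct and follows essentially the same route as the paper's: induction on $l$, substitution of the inductive hypothesis into the ODE for $\hat{Q}^t_{il}$, a Fourier expansion of $h_*$ with orthogonality collapsing the expectation to the indicator of the symmetric difference being empty, and a final time integration that produces the $1/l$ factor. No gaps.
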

\begin{proof}
The proof is by induction on $l$. In the base case, for any $i \in [P]$,
\begin{align*}
\pd{\hatQ_{i1}^t}{t} = \E_{\bz}[z_i h_*(\bz) m_1] = \alpha_{\{i\}} m_1,
\end{align*}
so $\hatQ_{i1}^t = t \alpha_{\{i\}} m_1$. For the inductive step, suppose that the lemma is true for all $i \in [P]$ and $l' \in \{1,\ldots,l-1\}$. Then
\begin{align*}
&~\pd{\hatQ_{il}^t}{t} \\
=&~ \E_{\bz}\Big[z_i \Big\{\sum_{S \subseteq [P]} \alpha_{S} \chi_{S}(\bz)\Big\} \sum_{1 \leq r \leq L-1} \frac{m_{r+1}}{r!} \sum_{i_1,\ldots,i_r \in [P]} \sum_{\substack{l_1,\ldots,l_r \in [L] \\ \sum_{r'=1}^r l_{r'} = l-1}} \prod_{r'=1}^r z_{i_{r'}} t^{l_{r'}} p_{i_{r'}l_{r'}}(\balpha,\bm) \Big] \\
=&~ t^{l-1} \sum_{S \subseteq [P]} \sum_{1 \leq r \leq L-1} \sum_{i_1,\ldots,i_r \in [P]} \sum_{\substack{l_1,\ldots,l_r \in [L] \\ \sum_{r'=1}^r l_{r'} = l-1}} \Big\{\E_{\bz}\Big[z_i \chi_{S}(\bz) \prod_{r'=1}^r z_{i_{r'}} \Big]\Big\} \frac{m_{r+1}}{r!}  \alpha_{S} \prod_{r'=1}^{r} p_{i_{r'}l_{r'}}(\balpha,\bm) \\
=&~ t^{l-1} \sum_{S \subseteq [P]} \sum_{1 \leq r \leq L-1} \sum_{i_1,\ldots,i_r \in [P]} \sum_{\substack{l_1,\ldots,l_r \in [L] \\ \sum_{r'=1}^r l_{r'} = l-1}} 1(\{i\} \oplus S \oplus \{i_1\} \dots \oplus \{i_r\} = \emptyset)\frac{m_{r+1}}{r!} \alpha_{S}  \prod_{r'=1}^{r} p_{i_{r'}l_{r'}}(\balpha,\bm)\, .
\end{align*}
The recurrence relation follows by integrating with respect to $t$.
\end{proof}

We will subsequently prove that it suffices to study $\hatbu$, for which the recurrence relation in Lemma \ref{lem:hatburecurrence} becomes useful.

\subsection{Reduction to analyzing the simplified dynamics}\label{ssec:reductiontosimplified}

Let us study the training in Phase 2, where we train the second layer from time $T_1$ to time $T_2$, while keeping the first layer fixed. Furthermore, we train with the perturbed activation function $\sigma_{pert}$. In order to prove that the training of the second layer converges, it is sufficient to prove that the kernel obtained as the linearization of the second layer weights, after the training in Phase 1 has condition number bounded by a constant $C$ depending only on $K,L,P$. Define the kernel $\bK^{T_1} : \{+1,-1\}^P \times \{+1,-1\}^P \to \R$ for times $t \geq T_1$ as
\begin{align*}
\bK^{T_1}(\bz,\bz') = \E_{(a,\bu^{T_1}) \sim \bar\rho_{T_1}}[\sigma_{pert}(\<\bu^{T_1},\bz\>)\sigma_{pert}(\<\bu^{T_1},\bz'\>)].
\end{align*}

In order to bound the learning in Phase 2, it is sufficient to bound the minimum eigenvalue of $\bK^{T_1}$. To this aim, define the kernel $\tilde{\bK}^{T_1} : \{+1,-1\}^P \times \{+1,-1\}^P \to \R$ corresponding to the $\tilde{\bu}^t$ dynamics as:
\begin{align*}
\tilde{\bK}^{T_1}(\bz,\bz') &= \E_{a \sim \mu_a}[\hat\sigma_{pert}(\<\tilde\bu^{T_1}(a),\bz\>)\hat\sigma_{pert}(\<\tilde\bu^{T_1}(a),\bz'\>)],
\end{align*}
where
$$\hat\sigma_{pert}(s) = \sum_{r=0}^{L} \frac{m_r + \rho_r}{r!} s^r$$ is the degree-$L$ approximation of the perturbed activation function $\sigma_{pert}$. Recall that the perturbation $\brho$ is chosen so that $\rho_r \sim \mathrm{Unif}[-\tau_{pert},\tau_{pert}]$ for all $r \in \{0,\ldots,2^{8P}\}$ and $0$ otherwise.\\

We bound the minimum eigenvalue of $\bK^{T_1}$ by the minimum eigenvalue of $\tilde{\bK}^{T_1}$ by showing that the kernel $\tilde{\bK}^{T_1}$ is $O(t^L)$ close in spectral norm to the kernel $\bK^{T_1}$.
\begin{claim}\label{claim:Kbound}
There are constants $c,C > 0$ depending on $K,L,P$ such that, for all $T_1 \leq c$,
\begin{align*}
\lambda_{\mathrm{min}}(\bK^{T_1}) \geq \lambda_{\mathrm{min}}(\tilde{\bK}^{T_1}) - C(T_1)^L
\end{align*}
\end{claim}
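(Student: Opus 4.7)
The strategy is to bound the entrywise (and hence operator-norm) difference between $\bK^{T_1}$ and $\tilde{\bK}^{T_1}$ by $O(T_1^L)$, and then apply Weyl's inequality $|\lambda_{\min}(\bK^{T_1}) - \lambda_{\min}(\tilde{\bK}^{T_1})| \leq \|\bK^{T_1} - \tilde{\bK}^{T_1}\|_{\op}$. Since both matrices are $2^P \times 2^P$ with $P$ constant, bounding each entry by $C T_1^L$ is enough. First, observe that because we initialize with $\mu_w = \delta_0$ and keep $\os^t \equiv 0$, the distribution $\bar\rho_{T_1}$ is parameterized deterministically by $a \sim \mu_a$: specifically $\bu^{T_1}$ is a function $\bu^{T_1}(a)$, so one may rewrite $\bK^{T_1}(\bz,\bz') = \E_{a \sim \mu_a}[\sigma_{pert}(\<\bu^{T_1}(a),\bz\>)\sigma_{pert}(\<\bu^{T_1}(a),\bz'\>)]$, which puts the two kernels on the same footing.

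Next I would control the difference in two independent pieces. For the \emph{weight} error, Claim~\ref{claim:buapproxtildebu} gives $\|\bu^{T_1}(a) - \tilde{\bu}^{T_1}(a)\| \leq C T_1^L$, and Claims~\ref{claim:butildebuOt}--\ref{claim:hatqilhatbubound} give $\|\bu^{T_1}(a)\|_1, \|\tilde{\bu}^{T_1}(a)\|_1 \leq C T_1$. Choosing $T_1 \leq c$ small enough ensures $|\<\bu^{T_1}(a),\bz\>|, |\<\tilde{\bu}^{T_1}(a),\bz\>| < \eta$, so that the polynomial expansion from Assumption $\rA 0'$ applies to both $\sigma$ and the added polynomial perturbation $\sum_{r\leq 2^{8P}} \tfrac{\rho_r}{r!} x^r$; hence $\sigma_{pert}$ is $(L+1)$-times differentiable on that interval with uniformly bounded derivatives (depending on $K,L,P,\tau_{pert}$). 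For the \emph{activation} error, Taylor's theorem on $\sigma_{pert}$ around $0$ yields
\[
\big|\sigma_{pert}(s) - \hat\sigma_{pert}(s)\big| \leq C |s|^{L+1} \quad \text{for } |s| < \eta.
\]

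I would then telescope:
\begin{align*}
&\big|\sigma_{pert}(\<\bu^{T_1},\bz\>) - \hat\sigma_{pert}(\<\tilde\bu^{T_1},\bz\>)\big| \\
&\quad\leq \big|\sigma_{pert}(\<\bu^{T_1},\bz\>) - \sigma_{pert}(\<\tilde\bu^{T_1},\bz\>)\big| + \big|\sigma_{pert}(\<\tilde\bu^{T_1},\bz\>) - \hat\sigma_{pert}(\<\tilde\bu^{T_1},\bz\>)\big| \\
&\quad\leq \|\sigma'_{pert}\|_\infty \sqrt{P}\,\|\bu^{T_1}-\tilde\bu^{T_1}\| + C\|\tilde\bu^{T_1}\|_1^{L+1} \leq C T_1^L,
\end{align*}
using the weight error and Taylor bounds above. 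Since $\|\sigma_{pert}\|_\infty$ and $\|\hat\sigma_{pert}\|_\infty$ are bounded on the relevant input range, a standard product telescope $|fg - \tilde f\tilde g| \leq |f - \tilde f||g| + |\tilde f||g - \tilde g|$ and taking expectation over $a \sim \mu_a$ yields $|\bK^{T_1}(\bz,\bz') - \tilde{\bK}^{T_1}(\bz,\bz')| \leq C T_1^L$ for every pair $\bz,\bz' \in \{+1,-1\}^P$. Summing over the $2^P \times 2^P$ entries (or using $\|\cdot\|_{\op} \leq \|\cdot\|_F$) bounds $\|\bK^{T_1} - \tilde\bK^{T_1}\|_{\op} \leq C T_1^L$, and Weyl's inequality gives the claim.

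The main (but minor) obstacle is the bookkeeping of constants: we need $T_1$ chosen small enough that the inner products $\<\bu^{T_1}(a),\bz\>$ remain inside the interval where both the polynomial expansion of $\sigma$ and the Taylor remainder bound for $\sigma_{pert}$ are available; this forces the constant $c$ to depend on $K,L,P$ (and weakly on $\tau_{pert}$ through the bound on the derivatives of the perturbation). Otherwise the argument is a routine combination of the approximation $\bu^{T_1} \approx \tilde\bu^{T_1}$ already established in Claim~\ref{claim:buapproxtildebu} with the Taylor approximation $\sigma_{pert} \approx \hat\sigma_{pert}$.
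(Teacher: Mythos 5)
Your proof is correct and follows essentially the same route as the paper's: bound the kernel difference entrywise via a telescoping argument combining Claim~\ref{claim:buapproxtildebu} (weight error $\le C T_1^L$) with a Taylor remainder for $\sigma_{pert} \approx \hat\sigma_{pert}$, sum up to a Frobenius/operator-norm bound, and conclude via Weyl's inequality. The only stylistic differences are that you telescope at the level of single activations rather than products, and you are somewhat more careful than the paper in noting that the uniform bound on $\sigma'_{pert}$ is only needed locally, on the interval $(-\eta,\eta)$ where the inputs $\<\bu^{T_1}(a),\bz\>$ and $\<\tilde\bu^{T_1}(a),\bz\>$ live after choosing $T_1$ small — which is indeed required since the polynomial perturbation is unbounded globally.
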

\begin{proof}
This follows by proving that $\bK^{T_1}$ and $\tilde{\bK}^{T_1}$ are close in spectral norm: i.e., $\|\bK^{T_1} - \tilde{\bK}^{T_1}\| \leq C(T_1)^L$. For any $\bz,\bz'$,
\begin{align*}
|&\bK^{T_1}(\bz,\bz') - \tilde{\bK}^{T_1}(\bz,\bz')| \\
&\leq \E_{a \sim \mu_a}[|\sigma_{pert}(\<\bu^{T_1}(a),\bz\>)\sigma_{pert}(\<\bu^{T_1}(a),\bz'\>) - \hat\sigma_{pert}(\<\tilde\bu^{T_1}(a),\bz\>)\hat\sigma_{pert}(\<\tilde\bu^{T_1}(a),\bz'\>)|] \\
&\leq \E_{a \sim \mu_a}[|\sigma_{pert}(\<\bu^{T_1}(a),\bz\>)\sigma_{pert}(\<\bu^{T_1}(a),\bz'\>) - \sigma_{pert}(\<\tilde\bu^{T_1}(a),\bz\>)\sigma_{pert}(\<\tilde\bu^{T_1}(a),\bz'\>)| \\
&\quad\quad + |\sigma_{pert}(\<\tilde{\bu}^{T_1}(a),\bz\>)\sigma_{pert}(\<\tilde{\bu}^{T_1}(a),\bz'\>) - \hat\sigma_{pert}(\<\tilde\bu^{T_1}(a),\bz\>)\hat\sigma_{pert}(\<\tilde\bu^{T_1}(a),\bz'\>)|]  \\
&\leq \E_{a \sim \mu_a}[2K^2\sqrt{P}\|\bu^{T_1}(a)-\tilde\bu^{T_1}(a)\| + C\|\tilde\bu^{T_1}(a)\|^{L+1}] \\
&\leq C(T_1)^L,
\end{align*}
where we use that $\|\sigma\|_{\infty}, \|\sigma'\|_{\infty} \leq K$,  $\|\bz\|,\|\bz'\| \leq \sqrt{P}$, and also the Taylor series error bound and the fact that $\|\tilde{\bu}^t\| \leq Ct$ (by Claim~\ref{claim:butildebuOt}), and $\|\bu - \tilde{\bu}\| \leq Ct^L$ by Claim~\ref{claim:buapproxtildebu} for $0 \leq t \leq T_1$.
So $\|\bK^{T_1} - \tilde{\bK}^{T_1}\| \leq \|\bK^{T_1} - \tilde{\bK}^{T_1}\|_F \leq 2^P C(T_1)^L \leq C(T_1)^L$.
\end{proof}

So if we can prove that $\lambda_{\mathrm{min}}(\tilde{\bK}^{T_1}) \geq \Omega((T_1)^l)$ for any $l < L$, then for sufficiently small $T_1$ this implies that $\lambda_{min}(\bK^{T_1}) > c(T_1)^l$ for some constant $c > 0$ depending on $K,L,P$. This would prove that the condition number of $\bK^{T_1}$ is bounded by a constant independent of $d$.

We now show a strategy to prove that $\lambda_{\mathrm{min}}(\tilde{\bK}^{T_1}) \geq \Omega((T_1)^l)$, by analyzing the $\hat\bu^t$ dynamics instead of the $\tilde\bu^t$ dynamics. We must use a much more delicate argument than the bound used to compare $\bK^{T_1}$ and $\tilde{\bK}^{T_1}$. The reason is that we used $\|\bu^{T_1} - \tilde{\bu}^{T_1}\| \leq O((T_1)^L)$, but it is not necessarily true that $\tilde{\bu}^{T_1}$ and $\hat{\bu}^{T_1}$ are $O((T_1)^L)$-close in $L_2$ norm. In fact, we typically have $\|\tilde\bu^{T_1} - \hat\bu^{T_1}\| \geq \Omega((T_1)^2)$. So we instead use the fact that $\tilde{\bu}^{T_1}$ and $\hat{\bu}^{T_1}$ are polynomials in $a$, and their coefficients are close as polynomials in $a$ (previously proved in Lemma~\ref{lem:qqhatclose}).

Let us first prove a lower-bound on $\lambda_{\mathrm{min}}(\tilde{\bK}^{T_1})$ in terms of the determinant of a certain ``feature matrix'' $\tilde{\bM} : \{+1,-1\}^P \times [2^P] \to \R$ indexed by $\bz \in \{+1,-1\}^P$ and $j \in [2^P]$ as
\begin{align*}
\tilde{M}(\bz,j) &= \hat{\sigma}_{pert}(\<\tilde{\bu}^{T_1}(a_j),\bz\>),
\end{align*}
where $\ba = [a_1,\ldots,a_{2^P}] \in \R^{2^P}$ is a vector of indeterminate variables. We bound $\lambda_{\mathrm{min}}(\tilde{K}^{T_1})$ in terms of the expected magnitude of the determinant of $\tilde{M}$, for random $\ba \sim \mu_a^{\otimes 2^P} = \mathrm{Unif}([-1,1]^{\otimes 2^P})$.
\begin{claim}\label{claim:tildeKbound}
There is a constant $c > 0$ depending on $K,L,P$ such that, for all $T_1 \leq c$,
\begin{align*}
\lambda_{\mathrm{min}}(\tilde{\bK}^{T_1}) \geq c\E_{\ba \sim \mathrm{Unif}([-1,1]^{\otimes 2^P})}[\det(\tilde{\bM})^2].
\end{align*}
\end{claim}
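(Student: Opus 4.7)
The plan is to realize $\tilde\bK^{T_1}$ as an averaged outer product of feature vectors, reduce the bound on its smallest eigenvalue to a bound on the determinant of the associated feature matrix, and then control the other eigenvalues by a trivial $L^\infty$ estimate on the entries of $\tilde\bM$. This mirrors the argument behind Lemma~\ref{lem:k-min-from-det-m} in the discrete-time case.

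First, define the feature vector $\bphi(a) \in \R^{2^P}$ indexed by $\bz \in \{+1,-1\}^P$ with entries $\phi_{\bz}(a) = \hat\sigma_{pert}(\<\tilde\bu^{T_1}(a),\bz\>)$. By definition of $\tilde\bK^{T_1}$ we have $\tilde\bK^{T_1} = \E_{a\sim\mu_a}[\bphi(a)\bphi(a)^\sT]$. Since $a_1,\ldots,a_{2^P}$ are i.i.d.\ draws from $\mu_a$, we can rewrite this as
\begin{equation*}
\tilde\bK^{T_1} = \frac{1}{2^P} \E_{\ba\sim\mu_a^{\otimes 2^P}}\Big[ \sum_{j=1}^{2^P} \bphi(a_j)\bphi(a_j)^\sT \Big] = \frac{1}{2^P}\E_{\ba}[\tilde\bM\tilde\bM^\sT],
\end{equation*}
since the $j$-th column of $\tilde\bM$ is exactly $\bphi(a_j)$. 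In particular $\lambda_{\min}(\tilde\bK^{T_1}) \geq 2^{-P}\E_{\ba}[\lambda_{\min}(\tilde\bM\tilde\bM^\sT)]$.

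Second, for the symmetric positive semidefinite matrix $\tilde\bM\tilde\bM^\sT$ one has $\lambda_{\min}(\tilde\bM\tilde\bM^\sT) \geq \det(\tilde\bM\tilde\bM^\sT)/\lambda_{\max}(\tilde\bM\tilde\bM^\sT)^{2^P-1} = \det(\tilde\bM)^2 / \lambda_{\max}(\tilde\bM\tilde\bM^\sT)^{2^P-1}$. To control $\lambda_{\max}(\tilde\bM\tilde\bM^\sT)$ by a constant independent of $\ba$, I would bound the entries of $\tilde\bM$ uniformly: by Claim~\ref{claim:butildebuOt} we have $\|\tilde\bu^{T_1}(a_j)\|\leq C'T_1 \leq 1$ for $T_1$ small enough, so $|\<\tilde\bu^{T_1}(a_j),\bz\>| \leq \sqrt{P}$ and $|\hat\sigma_{pert}(\<\tilde\bu^{T_1}(a_j),\bz\>)|$ is uniformly bounded by a constant $C''$ depending only on $K,L,P,\tau_{pert}$ (using that $\hat\sigma_{pert}$ is a fixed polynomial whose coefficients are bounded by $K+\tau_{pert}\leq K+1$). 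Consequently $\lambda_{\max}(\tilde\bM\tilde\bM^\sT) \leq \|\tilde\bM\|_F^2 \leq 2^{2P}(C'')^2$.

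Combining the two steps gives
\begin{equation*}
\lambda_{\min}(\tilde\bK^{T_1}) \;\geq\; \frac{1}{2^P\,[2^{2P}(C'')^2]^{2^P-1}}\E_{\ba\sim\mu_a^{\otimes 2^P}}[\det(\tilde\bM)^2] \;=\; c\,\E_{\ba}[\det(\tilde\bM)^2]
\end{equation*}
for a constant $c>0$ depending only on $K,L,P,\tau_{pert}$, as required. The only mild obstacle is ensuring that the input to $\hat\sigma_{pert}$ stays in a bounded range so that $\lambda_{\max}$ of $\tilde\bM\tilde\bM^\sT$ is genuinely $O(1)$; this is handled by choosing $T_1\leq c$ small enough so that Claim~\ref{claim:butildebuOt} yields $\|\tilde\bu^{T_1}\|\leq 1$, after which all estimates are elementary.
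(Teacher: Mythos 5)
Your proof is correct and follows essentially the same route as the paper's: express $\tilde\bK^{T_1}$ as $\tfrac{1}{2^P}\E_{\ba}[\tilde\bM\tilde\bM^\sT]$, lower-bound $\lambda_{\min}(\tilde\bM\tilde\bM^\sT)$ by $\det(\tilde\bM)^2 / \lambda_{\max}(\tilde\bM\tilde\bM^\sT)^{2^P-1}$, and control $\lambda_{\max}$ via the Frobenius norm after uniformly bounding the entries of $\tilde\bM$ using the $\|\tilde\bu^{T_1}\| = O(T_1)$ bound from Claim~\ref{claim:butildebuOt}. The only cosmetic difference is that the paper shrinks $T_1$ so that $|\<\tilde\bu^{T_1}(a),\bz\>|$ is small, whereas you only need it bounded since $\hat\sigma_{pert}$ is a fixed-degree polynomial with bounded coefficients; both give the required constant.
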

\begin{proof}
Since
\begin{align*}[\tilde{\bM}\tilde{\bM}^{\top}](\bz,\bz') = \sum_{j \in [2^P]} \hat\sigma_{pert}(\<\tilde{\bu}^{T_1}(a_j),\bz\>)\hat\sigma_{pert}(\<\tilde{\bu}^{T_1}(a_j)\bz'\>),\end{align*}
we can write $\tilde{\bK}^{T_1}$ in terms of this matrix product
\begin{align*}
    \tilde{\bK}^{T_1} = \frac{1}{2^P} \E_{\ba \sim \mathrm{Unif}[-1,1]^{\otimes 2^P}}[\tilde{\bM}\tilde{\bM}^{\top}].
\end{align*}
So, since $\tilde{\bM} \tilde{\bM}^{\top}$ is p.s.d.,
\begin{align}\label{eq:lambdaminkt1complambdaminmmt}
\lambda_{\mathrm{min}}(\tilde{\bK}^{T_1}) \geq \frac{1}{2^P} \E_{\ba \sim \mathrm{Unif}^{\otimes 2^P}}[\lambda_{\mathrm{min}}(\tilde{\bM}\tilde{\bM}^{\top})].
\end{align}
For any $a_j \in [-1,1]$ and assuming $T_1 \leq c$ is small enough, we have $|\<\tilde{\bu}^{T_1}(a),\bz\>| \leq c$ for some small enough constant $c > 0$ so that $|\hat\sigma_{pert}(\<\tilde{\bu}^{T_1}(a_j),\bz\>)| \leq C$ for some large enough constant $C$ depending on $K,L,P$. This means $\lambda_{\mathrm{max}}(\tilde{\bM}\tilde{\bM}^{\top}) \leq 2^{2P} C^2 \leq C$ almost surely. So $\lambda_{\mathrm{min}}(\tilde{\bM}\tilde{\bM}^{\top}) \geq (\lambda_{\mathrm{max}}(\tilde{\bM}\tilde{\bM}^{\top}))^{-2^P+1}\prod_{i=1}^{2^P} \lambda_i(\tilde{\bM} {\bM}^{\top}) \geq \det(\tilde{\bM})^2 / C^{2^P - 1} \geq c \det(\tilde{\bM})^2$. This proves the claim when combined with the lower bound \eqref{eq:lambdaminkt1complambdaminmmt}.
\end{proof}

It remains to lower-bound the magnitude of the determinant of $\tilde{\bM}$, for $\ba \sim \mathrm{Unif}([-1,1]^{\otimes 2^P})$. First, we note that the determinant is a polynomial in $\ba$.
\begin{claim}\label{claim:tildebMpolyzeta}
For each $\bgamma \in \{0,\ldots,L^2\}^{2^P}$, there is a coefficient $\tilde{h}_{\bgamma}$ depending only on $t$, $h_*$, $\bm$, and $\brho$ such that 
\begin{align*}
\det(\tilde{\bM}) = \sum_{\bgamma \in \{0,\ldots,L^2\}^{2^P}} \tilde{h}_{\bgamma} \ba^{\bgamma}.
\end{align*}
In other words, the determinant is a polynomial in $\ba$ of individual degree at most $L^2$.
\end{claim}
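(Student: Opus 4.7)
The plan is to observe that each column of $\tilde{\bM}$ depends on only a single coordinate of $\ba$, and then control the individual degree by expanding the determinant via the Leibniz formula.

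First, recall from the definition in Section~\ref{ssec:capprox-one} that
\[
\tilde{\bu}^{T_1}(a_j) = \bQ^{T_1}\bigl[a_j, a_j^2, a_j^3, \ldots, a_j^L\bigr]^{\sT},
\]
where $\bQ^{T_1}\in\R^{P\times L}$ is a deterministic matrix that depends on $T_1, h_*, \bm$ but not on $\ba$. Consequently, for each $\bz \in \{+1,-1\}^P$, the inner product $\langle \tilde{\bu}^{T_1}(a_j),\bz\rangle = \sum_{i\in[P]} z_i \sum_{l\in[L]} Q^{T_1}_{il} a_j^l$ is a polynomial in the single variable $a_j$ of degree at most $L$, with coefficients depending only on $T_1, h_*, \bm$.

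Next, since $\hat\sigma_{pert}(s) = \sum_{r=0}^L \frac{m_r + \rho_r}{r!} s^r$ is a polynomial of degree at most $L$, the composition $\tilde{M}(\bz,j) = \hat\sigma_{pert}\bigl(\langle \tilde\bu^{T_1}(a_j),\bz\rangle\bigr)$ is a polynomial in $a_j$ of degree at most $L\cdot L = L^2$, whose coefficients depend only on $T_1,h_*,\bm,\brho$.

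Finally, I would apply the Leibniz expansion
\[
\det(\tilde{\bM}) = \sum_{\pi \in \mathrm{Sym}(\{+1,-1\}^P)} \mathrm{sgn}(\pi) \prod_{\bz \in \{+1,-1\}^P} \tilde{M}(\bz, \pi(\bz)).
\]
In each term of this sum, the $2^P$ factors involve pairwise distinct column indices $\pi(\bz)\in[2^P]$, hence they depend on pairwise distinct variables $a_{\pi(\bz)}$. Each factor has degree at most $L^2$ in its own variable, so each variable $a_j$ appears with individual degree at most $L^2$ in every term, and therefore in the sum. Collecting coefficients $\tilde{h}_{\bgamma}$ over the multi-index $\bgamma\in\{0,\ldots,L^2\}^{2^P}$ completes the claim. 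There is no real obstacle here—the argument is pure bookkeeping once one notices that (i) each column of $\tilde\bM$ depends on only one entry of $\ba$ and (ii) both the inner affine map and the outer activation are polynomials of degree at most $L$ in their respective arguments, giving the product bound $L^2$.
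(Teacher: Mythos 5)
Your proof is correct and follows essentially the same route as the paper's (which observes that each entry $\tilde{M}(\bz,j)$ is a polynomial in $a_j$ alone, of degree $\leq L^2$ as a composition of two degree-$\leq L$ polynomials, and then concludes via the determinant). You have actually spelled out the Leibniz-formula step that pins down the individual degree bound, which the paper's proof leaves implicit.
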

\begin{proof}
For each $i \in [P]$, and $j \in [2^P]$, recall that $\tilde{\bu}_{i}^t(a_j) = \sum_{l \in [L]} (a_j)^l Q_{il}^t$. Here $Q_{il}^t$ depends only on $i,l,t,\balpha,\bm$ and does not depend on $\ba$. So for each $\bz \in \{+1,-1\}^P$, $\tilde{\bM}(\bz,j) = \hat\sigma_{pert}(\<\tilde{\bu}^t(a_j),\bz\>) = \sum_{r=0}^L \frac{m_r + \rho_r}{r!} \<\tilde{\bu}^t(a_j),\bz\>^r = \sum_{r=0}^L \frac{m_r + \rho_r}{r!} \left(\sum_{i \in [P]} z_i \sum_{l \in [L]} (a_j)^l Q_{il}^t\right)^r$. So since each entry of $\tilde{\bM}$ is a polynomial in $\ba$, the determinant is also a polynomial in $\ba$.
\end{proof}

We can prove that in expectation over $\ba \sim \mathrm{Unif}([-1,1]^{\otimes 2^P})$ this determinant is nonzero if it has nonzero coefficients of low degree:
\begin{claim}\label{claim:anticoncentration-in-expectation}
There is a constant $c > 0$ depending on $L$ and $P$ such that for all $0 \leq t \leq T_1$,
\begin{align*}
\E[\det(\tilde{\bM})^2] \geq c \sum_{\bgamma \in \{0,\ldots,L^2\}^{2^P}} |\tilde{h}_{\bgamma}|^2
\end{align*}
\end{claim}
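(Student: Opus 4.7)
\textbf{Proof proposal for Claim~\ref{claim:anticoncentration-in-expectation}.}

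The plan is to reduce this to a standard polynomial anti-concentration fact over the product uniform measure, analogous to the use of Lemma~\ref{lem:not-shifted-polynomial-anticoncentration} in the discrete-time analysis (Lemma~\ref{lem:poly-anticoncentration-and-bar-hat-comparison}). First I would note that by Claim~\ref{claim:tildebMpolyzeta} the random variable $\det(\tilde\bM)$ is a polynomial in $\ba = (a_1,\ldots,a_{2^P})$ of individual degree at most $L^2$, and by construction the variables $a_1,\ldots,a_{2^P}$ are i.i.d.\ uniform on $[-1,1]$. So the claim reduces to showing that for any polynomial $p(\ba) = \sum_{\bgamma \in \{0,\ldots,L^2\}^{2^P}} h_{\bgamma} \ba^{\bgamma}$, we have $\E[p(\ba)^2] \geq c \sum_{\bgamma} |h_{\bgamma}|^2$ for a positive constant $c$ depending only on $L,P$.

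The main step is to change from the (non-orthogonal) monomial basis to the (orthogonal) tensorized Legendre basis. Let $\{L_l\}_{l=0}^{L^2}$ be the Legendre polynomials on $[-1,1]$, which are orthogonal under $\Unif[-1,1]$ with known normalizing constants $\|L_l\|_{L^2}^2 = 2/(2l+1)$. Since $\{1,a,\ldots,a^{L^2}\}$ and $\{L_0,\ldots,L_{L^2}\}$ are two bases of the same $(L^2+1)$-dimensional space of univariate polynomials, there is an invertible matrix $\bC \in \R^{(L^2+1)\times(L^2+1)}$ (depending only on $L$) expressing the monomial basis in the Legendre basis. Tensorizing, for any multi-index $\bgamma \in \{0,\ldots,L^2\}^{2^P}$ one can write $\ba^{\bgamma} = \sum_{\bdelta} (\bC^{\otimes 2^P})_{\bgamma,\bdelta} \prod_{j=1}^{2^P} L_{\delta_j}(a_j)$.

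With this, writing $p(\ba) = \sum_{\bdelta} q_{\bdelta} \prod_j L_{\delta_j}(a_j)$ with $\bq = (\bC^{\otimes 2^P})^{\top} \bh$, the orthogonality under the product uniform measure gives
\begin{equation*}
\E[p(\ba)^2] \;=\; \sum_{\bdelta} q_{\bdelta}^2 \prod_{j=1}^{2^P} \|L_{\delta_j}\|_{L^2}^2 \;\geq\; \Big(\min_{0\leq l\leq L^2} \tfrac{2}{2l+1}\Big)^{2^P} \|\bq\|_2^2 \;\geq\; c_1 \sigma_{\min}(\bC^{\otimes 2^P})^2 \|\bh\|_2^2,
\end{equation*}
and $\sigma_{\min}(\bC^{\otimes 2^P}) = \sigma_{\min}(\bC)^{2^P} > 0$ is a positive constant depending only on $L$ and $P$. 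Taking $c$ to be the resulting constant yields the claim.

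The only ``obstacle'' is bookkeeping the two constants (the Legendre norms and $\sigma_{\min}(\bC)$), but neither depends on $d$, on $h_*$, on $t$, or on $\brho$, so this is purely a statement about the product uniform measure on $[-1,1]^{2^P}$. Applied to $\bh = \tilde h$ from Claim~\ref{claim:tildebMpolyzeta}, this gives exactly the stated inequality.
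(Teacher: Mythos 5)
Your proof is correct, and at the core it follows the same strategy the paper uses: expand the determinant polynomial in the tensorized Legendre basis, exploit its orthogonality under the product uniform measure on $[-1,1]^{2^P}$, and absorb the basis change into a constant depending only on $L,P$. The place where your argument diverges, in an instructive way, is the bookkeeping of the basis change. The paper's proof of Claim~\ref{claim:anticoncentration-in-expectation} is a one-line application of Lemma~\ref{lem:polynomial-anticoncentration}, which actually proves the \emph{stronger} $\ell^1$ anti-concentration bound $\E[p(\bu)^2] \geq c\big(\sum_{\bgamma}|h_{\bgamma}|\big)^2$; it does so by comparing $\ell^1$-norms of coefficient vectors across the basis change using a crude entrywise bound $(D+1)^m \max_{\balpha,\bbeta}|p_{\balpha,\bbeta}|$ together with Cauchy--Schwarz, and the $\ell^2$ version stated in the claim then follows trivially since $\big(\sum_{\bgamma}|h_{\bgamma}|\big)^2 \geq \sum_{\bgamma}|h_{\bgamma}|^2$. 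You instead stay entirely in the $\ell^2$ world and control the change-of-basis map by its smallest singular value, invoking $\sigma_{\min}(\bC^{\otimes 2^P}) = \sigma_{\min}(\bC)^{2^P} > 0$; this is arguably cleaner for the $\ell^2$ bound that the claim actually asserts, and correctly gives a constant independent of $d$, $t$, $\bm$, $\brho$, and $h_*$. The only loss is that your route does not hand you the stronger $\ell^1$ version for free. (A minor nit: in the display you mix the unnormalized $L^2([-1,1])$ norm $\|L_l\|^2 = 2/(2l+1)$ with the expectation under the probability measure $\Unif[-1,1]$, for which $\E[L_l^2] = 1/(2l+1)$; the resulting factor of $2^{2^P}$ is harmlessly absorbed into $c_1$.)
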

\begin{proof}
The proof is by writing $\det(\tilde{\bM})$ in the Legendre basis, lower-bounding its coefficients in this basis, and using the orthogonality of the Legendre polynomials. This is Lemma~\ref{lem:polynomial-anticoncentration}.
\end{proof}

This leaves the question of how to prove that $\det(\tilde{\bM})$ is a nonzero polynomial with some nonzero term of degree $t^l$ where $l < 2L$. Here we show that this problem can be reduced to analyzing the $\hat{\bu}$ dynamics, which are simpler to analyze since they do not have the dependence on $\fNN$ and admit the recurrence relations of Lemma~\ref{lem:hatburecurrence}. Similarly to the definition of $\tilde\bM$, we can define $\hat\bM : \{+1,-1\}^P \times [2^P] \to \R$ by
\begin{align*}
\hat\bM(\bz,j) &= \hat{\sigma}(\<\tilde{\bu}^{T_1}(a_j),\bz\>).
\end{align*}
Similarly to $\tilde\bM$, we can prove that each entry of $\hat\bM$ is a polynomial in $\ba$.
\begin{claim}
For each $\bgamma \in \{0,\ldots,L^2\}^{2^P}$, there is a coefficient $\hat{h}_{\bgamma}$ depending only on $t$, $h_*$, $\bm$ and $\brho$ such that 
\begin{align*}
\det(\hat\bM) = \sum_{\bgamma \in \{0,\ldots,L^2\}^{2^P}} \hat{h}_{\bgamma} \ba^{\bgamma}.
\end{align*}
In other words, the determinant is a polynomial in $\ba$ of individual degree at most $L^2$.
\end{claim}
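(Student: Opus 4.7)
The plan is to follow the exact same template as the (omitted) proof of Claim~\ref{claim:tildebMpolyzeta}, substituting the simplified dynamics $\hat\bu^t$ in place of $\tilde\bu^t$. The argument decomposes into three routine steps, each leveraging a structural fact already established earlier in the paper.

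First, I would invoke Lemma~\ref{lem:hatburecurrence}, which gives the explicit polynomial form $\hat u_i^t(a) = \sum_{l \in [L]} a^l \hat Q_{il}^t$ with $\hat Q_{il}^t = t^l p_{il}(\balpha, \bm)$ depending only on $t$, on the Fourier coefficients $\balpha$ of $h_*$, and on the Taylor coefficients $\bm$ of $\sigma$ at the origin. Crucially, none of the $\hat Q_{il}^t$ depend on $\ba$, so the dependence on $\ba$ in $\hat u_i^t(a_j)$ is entirely captured by the explicit powers $a_j^l$ in the sum.

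Second, I would fix a column $j \in [2^P]$ and a row index $\bz \in \{+1,-1\}^P$ and observe that
\[
\<\hat\bu^{T_1}(a_j), \bz\> = \sum_{i \in [P]} z_i \sum_{l \in [L]} a_j^l \hat Q_{il}^{T_1}
\]
is a polynomial in the single variable $a_j$ of degree at most $L$, with coefficients depending only on $t$ and $h_*$ (through $\balpha$) and on $\bm$. Composing with the degree-$L$ polynomial $\hat\sigma(s) = \sum_{r=0}^L \frac{m_r + \rho_r}{r!} s^r$ then produces a polynomial in $a_j$ of degree at most $L^2$, whose coefficients depend only on $t, h_*, \bm, \brho$.

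Finally, I would expand $\det(\hat\bM)$ via the Leibniz formula as an alternating sum, over bijections $\pi : [2^P] \to \{+1,-1\}^P$, of products $\prod_{j=1}^{2^P} \hat\bM(\pi(j), j)$. Since $a_j$ appears only in column $j$, each such product is a polynomial in $\ba$ of individual degree at most $L^2$ in each $a_j$; summing over $\pi$ preserves this degree bound, so the determinant does too. Collecting coefficients of the monomials $\ba^{\bgamma}$ yields the claimed expansion, with coefficients $\hat h_{\bgamma}$ inheriting the dependence only on $t, h_*, \bm, \brho$. There is no substantive obstacle here: the claim is a structural observation that follows immediately from the polynomial form of the simplified dynamics (Lemma~\ref{lem:hatburecurrence}) combined with multilinearity of the determinant in its columns, and the proof is essentially a verbatim copy of that of Claim~\ref{claim:tildebMpolyzeta} with $\hat\bu$ swapped in for $\tilde\bu$ and $\hat\sigma$ swapped in for $\hat\sigma_{pert}$.
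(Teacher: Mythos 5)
Your proposal is correct and follows exactly the route the paper takes: the paper's proof is literally the one-line deferral ``Same as the proof of Claim~\ref{claim:tildebMpolyzeta},'' and you have accurately unpacked that proof, substituting the simplified coefficients $\hatQ_{il}^t$ (via Lemma~\ref{lem:hatburecurrence}) for the $Q_{il}^t$ and the perturbed degree-$L$ truncation for the activation. The degree bound (degree $\leq L$ from $\<\hatbu^{T_1}(a_j),\bz\>$, raised to power at most $L$, and $a_j$ confined to column $j$ in the Leibniz expansion) is exactly the right accounting.
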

\begin{proof}
Same as the proof of Claim~\ref{claim:tildebMpolyzeta}.
\end{proof}

This is useful, since we can show that the coefficients of $\det(\hat\bM)$ are close to those of $\det(\tilde{\bM})$.

\begin{claim}\label{claim:tildehapproxhath}
There is a constant $C > 0$ depending on $K, L, P$, such that for any $\bgamma \in \{0,\ldots,L^2\}^{2^P}$,
\begin{align*}
|\tilde{h}_{\bgamma} - \hat{h}_{\bgamma}| \leq C(T_1)^{\|\bgamma\|_1+1}.
\end{align*}
\end{claim}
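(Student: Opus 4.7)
The plan is to exploit the fact that both $\det(\tilde \bM)$ and $\det(\hat \bM)$ are polynomials in $\ba$ whose coefficients admit explicit expressions in terms of the entries of $\bQ^{T_1}$ and $\hat \bQ^{T_1}$ respectively, and then to combine the pointwise bounds $|Q_{il}^{T_1}| \leq C T_1^l$ (Claim~\ref{claim:qilbound}) and $|\hat Q_{il}^{T_1}| \leq C T_1^l$ (Claim~\ref{claim:hatqilhatbubound}) with the refined estimate $|Q_{il}^{T_1} - \hat Q_{il}^{T_1}| \leq C T_1^{l+1}$ (Lemma~\ref{lem:qqhatclose}) through a telescoping argument.

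More concretely, I would first expand each entry as follows. Since $\tilde \bu^{T_1}(a_j) = \bQ^{T_1}[a_j, a_j^2, \ldots, a_j^L]^\top$, expanding $\hat \sigma_{pert}$ in its Taylor series gives
\[
\tilde M(\bz, j) = \sum_{r=0}^{L} \frac{m_r + \rho_r}{r!} \sum_{(i_1,\ldots,i_r) \in [P]^r} \sum_{(l_1,\ldots,l_r) \in [L]^r} \Big(\prod_{s=1}^{r} z_{i_s} Q_{i_s l_s}^{T_1}\Big) \, a_j^{l_1 + \cdots + l_r},
\]
and the analogous formula with $\hat Q_{i_s l_s}^{T_1}$ holds for $\hat M(\bz, j)$. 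Writing $\tilde M(\bz, j) = \sum_{\gamma \geq 0} \tilde M^{\gamma}(\bz, j) \, a_j^{\gamma}$, each coefficient $\tilde M^{\gamma}(\bz, j)$ is a finite sum of terms of the form $\prod_s z_{i_s} Q_{i_s l_s}^{T_1}$ with $\sum_s l_s = \gamma$. From the pointwise bounds on the $Q$'s, each such term is bounded in absolute value by $C^r T_1^{\gamma}$; the number of terms depends only on $P$ and $L$, so there is a constant $C$ (depending only on $K, L, P$) such that $|\tilde M^{\gamma}(\bz, j)| \leq C T_1^{\gamma}$ and similarly $|\hat M^{\gamma}(\bz, j)| \leq C T_1^{\gamma}$. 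Moreover, writing $\prod_s Q_{i_s l_s}^{T_1} - \prod_s \hat Q_{i_s l_s}^{T_1}$ as a telescoping sum and using $|Q_{il}^{T_1} - \hat Q_{il}^{T_1}| \leq C T_1^{l+1}$, I obtain the refined bound
\[
|\tilde M^{\gamma}(\bz, j) - \hat M^{\gamma}(\bz, j)| \leq C T_1^{\gamma + 1}.
\]

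Next, I would propagate these estimates through the determinant. Fixing an enumeration $\{\bz_1, \ldots, \bz_{2^P}\}$ of $\{+1,-1\}^P$, the Leibniz formula gives
\[
\tilde h_{\bgamma} = \sum_{\pi \in S_{2^P}} \sign(\pi) \prod_{j=1}^{2^P} \tilde M^{\gamma_{\pi(j)}}(\bz_j, \pi(j)),
\]
and identically for $\hat h_{\bgamma}$ with $\hat M$. Using the telescoping identity
\[
\prod_{j} \tilde M^{\gamma_{\pi(j)}}(\bz_j, \pi(j)) - \prod_{j} \hat M^{\gamma_{\pi(j)}}(\bz_j, \pi(j)) = \sum_{j^\star} \big(\tilde M^{\gamma_{\pi(j^\star)}} - \hat M^{\gamma_{\pi(j^\star)}}\big) \prod_{j < j^\star} \hat M^{\gamma_{\pi(j)}} \prod_{j > j^\star} \tilde M^{\gamma_{\pi(j)}},
\]
and applying the two bounds above term by term, each summand in this telescoping decomposition is of magnitude at most $C \, T_1^{\gamma_{\pi(j^\star)} + 1} \prod_{j \neq j^\star} C \, T_1^{\gamma_{\pi(j)}} = C^{2^P} T_1^{\|\bgamma\|_1 + 1}$. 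Summing over the $2^P$ choices of $j^\star$ and the $(2^P)!$ permutations yields $|\tilde h_{\bgamma} - \hat h_{\bgamma}| \leq C T_1^{\|\bgamma\|_1 + 1}$ after adjusting $C$ to absorb these combinatorial factors (which depend only on $P$).

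This reduction is essentially bookkeeping, so there is no real conceptual obstacle. The only subtlety worth verifying is that the bounds $|Q_{il}^t| \leq C t^l$, $|\hat Q_{il}^t| \leq C t^l$, and $|Q_{il}^t - \hat Q_{il}^t| \leq C t^{l+1}$ hold for $t = T_1$ with a constant $C$ that does not depend on $T_1$ (once $T_1 \leq c$ for the small constant $c$ from Claim~\ref{claim:buapproxtildebu}); this is granted by Lemma~\ref{lem:qqhatclose}. Tracking the dependence of the constant on $K$, $L$, and $P$ through both the pointwise coefficient bounds and the Leibniz expansion is routine, as the number of permutations and of terms in the Taylor expansion depends only on these quantities.
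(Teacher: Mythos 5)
Your argument is correct and follows essentially the same route as the paper's: both proofs expand $\det(\tilde\bM)$ via the Leibniz formula, isolate the coefficient of $\ba^{\bgamma}$ as a sum over permutations of products of $Q$-dependent terms, and then combine the pointwise bounds $|Q_{il}^{T_1}|, |\hat Q_{il}^{T_1}| \le C T_1^{l}$ with $|Q_{il}^{T_1} - \hat Q_{il}^{T_1}| \le C T_1^{l+1}$ through a telescoping sum. The only cosmetic difference is that you introduce the intermediate quantities $\tilde M^\gamma(\bz,j)$, $\hat M^\gamma(\bz,j)$ (coefficients of $a_j^\gamma$ in each matrix entry) and telescope in two stages, whereas the paper expands the whole determinant coefficient in one formula and telescopes once; this changes nothing of substance.
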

\begin{proof}
We write $\det(\tilde{\bM})$ as a sum over permutations $\tau$,
\begin{align*}
\det(\tilde{\bM}) &= \sum_{\tau \in S_{2^P}} \sgn(\tau) \prod_{j=1}^{2^P} \sum_{r=0}^L \frac{m_r + \rho_r}{r!} \left(\sum_{i \in [P]} \tau(j)_i \sum_{l \in [L]} (a_j)^l Q_{il}^{T_1}\right)^r.
\end{align*}
Therefore,
\begin{align*}
\tilde{h}_{\bgamma} 
=&~ \sum_{\tau \in S_{2^P}} \sgn(\tau) \sum_{r_1,\ldots,r_{2^P} \in \{0,\ldots,L\}}  \\
&~\phantom{AAAAAAA} \prod_{j=1}^{2^P} \frac{m_{r_j} + \rho_{r_j}}{r_j!} \times \Big[\delta_{r_j,0} + (1 - \delta_{r_j,0})\Big\{\sum_{i_1,\ldots,i_{r_j} \in [P]} \sum_{\substack{l_1,\ldots,l_{r_j} \in [L] \\ \sum_{r'=1}^{r_j} l_{r'} = \gamma_j}} \prod_{r'=1}^{r_j} \tau(j)_{i_{r'}} Q_{i_{r'}l_{r'}}^{T_1}\Big\} \Big]\, ,
\end{align*}
and the same expression holds for $\hat{h}_{\bgamma}$, with $\hatQ$ replacing $Q$. Since $Q_{il}^t,\hatQ_{il}^t \leq Ct^l$ by Claim~\ref{claim:qilbound} and \ref{claim:hatqilhatbubound} and $Q_{il}^t - \hatQ_{il}^t \leq Ct^{l+1}$ by Lemma~\ref{lem:qqhatclose}, we conclude by a triangle inequality and telescoping that
$|\tilde{h}_{\bgamma} - \hat{h}_{\bgamma}| \leq C(T_1)^{l+1}$ for a constant $C$ depending only on $K,L,P$.
\end{proof}

Furthermore, in fact $\det(\hatbM)$ has the special structure that each coefficient $\hat{h}_{\bgamma}$ is of size proportional to $(T_1)^{\|\bgamma\|_1}$ if it is nonzero:
\begin{claim}\label{claim:hathpoly}
For any $\bgamma \in \{0,\ldots,L^2\}^{2^P}$, there is a polynomial $q_{\bgamma}(\balpha,\bm,\brho)$ such that $$\hat{h}_{\bgamma} = (T_1)^{\|\bgamma\|_1} q_{\bgamma}(\balpha,\bm,\brho).$$
\end{claim}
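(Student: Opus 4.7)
The plan is to exploit the homogeneity in $t$ of the simplified dynamics $\hat{\bu}^t$ that is already captured by Lemma~\ref{lem:hatburecurrence}. That lemma states that $\hatQ^{T_1}_{il} = (T_1)^l\, p_{il}(\balpha,\bm)$ for a polynomial $p_{il}$ not depending on $T_1$. Consequently, for each $i \in [P]$ and $j \in [2^P]$,
\begin{equation*}
\hat{u}_i^{T_1}(a_j) \;=\; \sum_{l=1}^{L} (a_j)^l \hatQ^{T_1}_{il} \;=\; \sum_{l=1}^L (a_j T_1)^l \, p_{il}(\balpha,\bm),
\end{equation*}
so $\hat{u}_i^{T_1}(a_j)$ is a polynomial in the single product variable $a_j T_1$ with coefficients that are polynomials in $\balpha$ and $\bm$.

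Next, I would plug this into the definition $\hat{\bM}(\bz,j) = \hat{\sigma}_{\mathrm{pert}}(\langle \hat{\bu}^{T_1}(a_j),\bz\rangle)$ and observe that $\langle \hat{\bu}^{T_1}(a_j),\bz\rangle = \sum_{i,l} z_i\, (a_j T_1)^l\, p_{il}(\balpha,\bm)$ is likewise a polynomial in $a_j T_1$ with coefficients polynomial in $\balpha,\bm$ (and depending on the fixed $\bz$). Since $\hat{\sigma}_{\mathrm{pert}}$ is a polynomial of degree $L$ in its argument with coefficients in $\bm,\brho$, composing yields a polynomial in $a_j T_1$ of degree at most $L^2$, with coefficients polynomial in $\balpha,\bm,\brho$. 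Thus we may write
\begin{equation*}
\hat{\bM}(\bz,j) \;=\; \sum_{k=0}^{L^2} (a_j T_1)^k \, R_{\bz,k}(\balpha,\bm,\brho),
\end{equation*}
for suitable polynomials $R_{\bz,k}$.

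Finally, I would apply the Leibniz expansion of the determinant. Since only the $j$-th column depends on $a_j$, we obtain
\begin{equation*}
\det(\hat{\bM}) \;=\; \sum_{\tau \in S_{2^P}} \mathrm{sgn}(\tau) \prod_{j=1}^{2^P} \hat{\bM}(\tau(j),j) \;=\; \sum_{\bgamma \in \{0,\ldots,L^2\}^{2^P}} (T_1)^{\|\bgamma\|_1} \ba^{\bgamma} \, q_{\bgamma}(\balpha,\bm,\brho),
\end{equation*}
where $q_{\bgamma}(\balpha,\bm,\brho) = \sum_{\tau \in S_{2^P}} \mathrm{sgn}(\tau) \prod_{j=1}^{2^P} R_{\tau(j),\gamma_j}(\balpha,\bm,\brho)$ is a polynomial in $\balpha,\bm,\brho$ not depending on $T_1$ or $\ba$. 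Comparing with the definition of $\hat{h}_{\bgamma}$ as the coefficient of $\ba^{\bgamma}$ yields the claim.

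The content of the argument is purely a scaling/homogeneity bookkeeping: once Lemma~\ref{lem:hatburecurrence} is invoked, every power of $a_j$ is automatically accompanied by a matching power of $T_1$, and this property is preserved by the polynomial operations of composition and determinant expansion. There is no serious obstacle; the only care needed is to verify that no step reintroduces $T_1$ with a different power, which is immediate because $a_j$ and $T_1$ only ever appear bundled as $a_j T_1$ inside the entries of $\hat{\bM}$.
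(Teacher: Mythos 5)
Your proof is correct and relies on the same key ingredient as the paper's proof, namely Lemma~\ref{lem:hatburecurrence} showing $\hatQ_{il}^{T_1} = (T_1)^l p_{il}(\balpha,\bm)$, combined with the Leibniz expansion of $\det(\hat\bM)$. The paper carries out the expansion fully explicitly (writing out the sum over $r_1,\ldots,r_{2^P}$, the Kronecker deltas, and the multi-index sums) and then factors $(T_1)^{\gamma_j}$ out of each bracket; you instead make the cleaner observation that $a_j$ and $T_1$ only ever enter the entries of $\hat\bM$ through the bundled variable $a_j T_1$, which immediately forces the powers of $T_1$ and $a_j$ to match in any coefficient of the determinant. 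This is the same argument organized more conceptually, and saves some notational overhead while making the mechanism behind the claim transparent.
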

\begin{proof}
By direct calculation,
\begin{align*}
\hat{h}_{\bgamma} &= \sum_{\tau \in S_{2^P}} \sgn(\tau) \sum_{r_1,\ldots,r_{2^P} \in \{0,\ldots,L\}} \\
&\quad\quad\quad\quad \prod_{j=1}^{2^P} \frac{m_{r_j} + \rho_{r_j}}{r_j!} \Big[\delta_{r_j,0} + (1 - \delta_{r_j,0})\Big\{\sum_{i_1,\ldots,i_{r_j} \in [P]} \sum_{\substack{l_1,\ldots,l_{r_j} \in [L] \\ \sum_{r'=1}^{r_j} l_{r'} = \gamma_j}} \prod_{r'=1}^{r_j} \tau(j)_{i_{r'}} \hatQ_{i_{r'}l_{r'}}^{T_1}\Big\}\Big].
\end{align*}
Since $\hatQ_{i_{r'}l_{r'}} = (T_1)^{l_{r'}} p_{i_{r'}l_{r'}}(\balpha,\bm)$ by Lemma~\ref{lem:hatburecurrence}, we have
\begin{align*}
\hat{h}_{\bgamma} &= \sum_{\tau \in S_{2^P}} \sgn(\tau) \sum_{r_1,\ldots,r_{2^P} \in \{0,\ldots,L\}} \\
&\quad\quad\quad\prod_{j=1}^{2^P} \frac{m_{r_j} + \rho_{r_j}}{r_j!} (T_1)^{\gamma_j} \Big[\delta_{r_j,0} + (1 - \delta_{r_j,0})\Big\{\sum_{i_1,\ldots,i_{r_j} \in [P]} \sum_{\substack{l_1,\ldots,l_{r_j} \in [L] \\ \sum_{r'=1}^{r_j} l_{r'} = \gamma_j}} \prod_{r'=1}^{r_j} \tau(j)_{i_{r'}} p_{i_{r'}l_{r'}}(\balpha,\bm)\Big\} \Big]\, .
\end{align*}
We deduce that
\begin{align*}
\hat{h}_{\bgamma} = (T_1)^{\|\bgamma\|_1} q_{\bgamma}(\balpha,\bm,\brho),
\end{align*}
where $q_{\bgamma}(\balpha,\bm,\brho)$ is the polynomial defined by
\begin{align*}
&~ q_{\bgamma}(\balpha,\bm,\brho) \\
=&~ \sum_{\tau \in S_{2^P}} \sgn(\tau) \sum_{r_1,\ldots,r_{2^P} \in \{0,\ldots,L\}} \\
&\quad\quad\quad\quad\prod_{j=1}^{2^P} \frac{m_{r_j} + \rho_{r_j}}{r_j!} \Big[\delta_{r_j,0} + (1 - \delta_{r_j,0})\Big\{\sum_{i_1,\ldots,i_{r_j} \in [P]} \sum_{\substack{l_1,\ldots,l_{r_j} \in [L] \\ \sum_{r'=1}^{r_j} l_{r'} = \gamma_j}} \prod_{r'=1}^{r_j} \tau(j)_{i_{r'}} p_{i_{r'}l_{r'}}(\balpha,\bm)\Big\} \Big] \, ,
\end{align*}
which concludes the proof of the claim.
\end{proof}

Combining the above claims we obtain a bound on the determinant of $\tilde{\bM}$ in terms of the $\hat\bu$ dynamics.
\begin{claim}\label{claim:boundontildedetintermsofq}
Suppose that for some $\bgamma \in \{0,\ldots,L^2\}^{2^P}$, we have $q_{\bgamma}(\balpha,\bm,\brho) \neq 0$. Then there is a small enough constant $c > 0$ depending on $K,L,P,\balpha,\bm,\bgamma$, such that for all $T_1 \leq c$,
\begin{align*}
\E[\det(\tilde{\bM})^2] &\geq c(T_1)^{2\|\bgamma\|_1}.
\end{align*}
\end{claim}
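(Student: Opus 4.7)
The plan is to combine the three preceding claims in a straightforward chain. The idea is that Claim \ref{claim:anticoncentration-in-expectation} reduces lower-bounding $\E[\det(\tilde{\bM})^2]$ to lower-bounding a single coefficient $|\tilde{h}_{\bgamma}|^2$, Claim \ref{claim:hathpoly} factors out a $(T_1)^{\|\bgamma\|_1}$ from $\hat{h}_{\bgamma}$ and identifies $q_{\bgamma}$ as its ``leading coefficient'', and Claim \ref{claim:tildehapproxhath} shows that the discrepancy $|\tilde{h}_{\bgamma} - \hat{h}_{\bgamma}|$ is of strictly smaller order $(T_1)^{\|\bgamma\|_1+1}$. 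Thus when $T_1$ is chosen small the $\hat{h}_{\bgamma}$ term dominates and the desired bound follows.

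Concretely, I would proceed as follows. First, by Claim \ref{claim:anticoncentration-in-expectation} applied to the single index $\bgamma$, there is a constant $c_1 > 0$ depending only on $L,P$ such that $\E[\det(\tilde{\bM})^2] \geq c_1 |\tilde{h}_{\bgamma}|^2$. Second, by Claim \ref{claim:hathpoly}, $|\hat{h}_{\bgamma}| = (T_1)^{\|\bgamma\|_1} |q_{\bgamma}(\balpha,\bm,\brho)|$, and by Claim \ref{claim:tildehapproxhath} there is a constant $C$ depending only on $K,L,P$ such that $|\tilde{h}_{\bgamma} - \hat{h}_{\bgamma}| \leq C (T_1)^{\|\bgamma\|_1 + 1}$. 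Combining these by the reverse triangle inequality yields
\begin{equation*}
|\tilde{h}_{\bgamma}| \;\geq\; |\hat{h}_{\bgamma}| - |\tilde{h}_{\bgamma} - \hat{h}_{\bgamma}| \;\geq\; (T_1)^{\|\bgamma\|_1} \bigl( |q_{\bgamma}(\balpha,\bm,\brho)| - C T_1 \bigr).
\end{equation*}

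Third, since by hypothesis $q_{\bgamma}(\balpha,\bm,\brho) \neq 0$, we may set $c \leq |q_{\bgamma}(\balpha,\bm,\brho)| / (2C)$, so that for every $T_1 \leq c$ we have $|q_{\bgamma}(\balpha,\bm,\brho)| - C T_1 \geq |q_{\bgamma}(\balpha,\bm,\brho)| / 2$. Squaring and plugging into the anticoncentration bound gives
\begin{equation*}
\E[\det(\tilde{\bM})^2] \;\geq\; c_1 |\tilde{h}_{\bgamma}|^2 \;\geq\; \frac{c_1 |q_{\bgamma}(\balpha,\bm,\brho)|^2}{4} \cdot (T_1)^{2\|\bgamma\|_1},
\end{equation*}
which is exactly the claimed bound after possibly shrinking $c$ further so that $c \leq c_1 |q_{\bgamma}(\balpha,\bm,\brho)|^2 / 4$. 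Since $q_{\bgamma}$ depends only on $\balpha,\bm,\brho$ (and through their inputs on $K,L,P$), the constant $c$ depends only on $K,L,P,\balpha,\bm,\brho,\bgamma$, as required.

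There is no genuine obstacle here; the proof is a routine combination of the three preceding claims, and the only care required is tracking which constants depend on which parameters so that $c$ is an absolute (i.e., $d$-independent) constant. The real work lies upstream, in establishing that $q_{\bgamma} \neq 0$ for some $\bgamma$ (handled later by the Vandermonde/perturbation argument) and in the anticoncentration inequality of Claim \ref{claim:anticoncentration-in-expectation}.
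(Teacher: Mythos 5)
Your proof is correct and takes essentially the same approach as the paper: both arguments combine Claims~\ref{claim:anticoncentration-in-expectation}, \ref{claim:tildehapproxhath}, and \ref{claim:hathpoly}, then choose $T_1 \leq |q_{\bgamma}(\balpha,\bm,\brho)|/(2C)$ so that the discrepancy $|\tilde{h}_{\bgamma}-\hat{h}_{\bgamma}|$ is dominated by the term $(T_1)^{\|\bgamma\|_1}|q_{\bgamma}|$. The paper states the combined bound as a sum over all $\bgamma$ (with what appears to be a $\min/\max$ typo in the intermediate display) and then restricts to the fixed $\bgamma$; you restrict to that $\bgamma$ from the start and make the reverse triangle inequality explicit, which is cleaner. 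You also correctly note that $c$ must in fact depend on $\brho$ as well (through $q_{\bgamma}(\balpha,\bm,\brho)$), a dependence the claim's statement omits.
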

\begin{proof}
By combining Claims~\ref{claim:anticoncentration-in-expectation}, \ref{claim:tildehapproxhath}, and \ref{claim:hathpoly}, we know that there is a large enough constant $C > 0$ and small enough constant $c > 0$ such that
\begin{align*}
\E[\det(\tilde{\bM})^2] \geq c \sum_{\bgamma \in \{0,\ldots,L^2\}^{2^P}} |\min(0,(T_1)^{\|\bgamma\|_1}q_{\bgamma}(\balpha,\bm,\brho) - C(T_1)^{\|\bgamma\|_1 + 1})|^2.
\end{align*}
Choosing $c > 0$ smaller than $|q_{\bgamma}(\balpha,\bm,\brho)| / (2C)$ concludes the claim.
\end{proof}

We conclude by combining all of the above claims to get the result of this subsection:
\begin{lemma}\label{lem:finalhatsuffbound}
Suppose that for some $\bgamma \in \{0,\ldots,L^2\}^{2P}$ such that $\|\bgamma\|_1 < L/2$ we have $q_{\bgamma}(\balpha,\bm,\brho) \neq 0$. Then there is a small enough constant $c > 0$ depending on $K,L,P,\balpha,\bm,\bgamma,\brho$ such that for all $T_1 \leq c$ we have
\begin{align*}
\lambda_{\mathrm{min}}(\bK^{T_1}) \geq c(T_1)^{2\|\bgamma\|_1}.
\end{align*}
\end{lemma}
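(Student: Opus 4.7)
The proof will be a direct chaining of the estimates already established in Sections~\ref{ssec:capprox-one}--\ref{ssec:reductiontosimplified}, so the work is essentially bookkeeping: I need to verify that the $O((T_1)^{2\|\bgamma\|_1})$ main term coming from the simplified dynamics $\hat\bu^t$ genuinely dominates all the error terms I have accumulated. First I would invoke Claim~\ref{claim:Kbound} to pass from $\bK^{T_1}$ to the polynomial-approximated kernel $\tilde\bK^{T_1}$, incurring only an additive loss of $C(T_1)^L$ in the minimum eigenvalue. Then I would apply Claim~\ref{claim:tildeKbound} to bound $\lambda_{\min}(\tilde\bK^{T_1})$ from below by $c\,\E_{\ba}[\det(\tilde\bM)^2]$, which reduces the problem to lower-bounding the expected squared determinant of the feature matrix $\tilde\bM$.

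Next I would apply Claim~\ref{claim:boundontildedetintermsofq}, which is the core estimate: the hypothesis $q_{\bgamma}(\balpha,\bm,\brho) \neq 0$ of the present lemma is exactly what is needed, and it yields
\begin{equation*}
\E_{\ba \sim \mu_a^{\otimes 2^P}}\bigl[\det(\tilde\bM)^2\bigr] \;\geq\; c\,(T_1)^{2\|\bgamma\|_1}
\end{equation*}
for all $T_1$ smaller than some constant depending on the specified parameters. Chaining these three estimates gives
\begin{equation*}
\lambda_{\min}(\bK^{T_1}) \;\geq\; c\,(T_1)^{2\|\bgamma\|_1} - C\,(T_1)^L.
\end{equation*}

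The only remaining observation is that the hypothesis $\|\bgamma\|_1 < L/2$ guarantees $2\|\bgamma\|_1 < L$, so that the correction term $C(T_1)^L$ is of strictly higher order in $T_1$ than the main term $c(T_1)^{2\|\bgamma\|_1}$. Thus, shrinking the constant $c$ once more and choosing $T_1$ below some threshold depending only on $K,L,P,\balpha,\bm,\bgamma,\brho$, the correction is absorbed (e.g., bounded by half of the main term), and we obtain $\lambda_{\min}(\bK^{T_1}) \geq \tfrac{c}{2}(T_1)^{2\|\bgamma\|_1}$, which proves the lemma after renaming the constant. I do not expect any real obstacle here — the delicate work (polynomial approximation, the switch from $\tilde\bu$ to $\hat\bu$, and the anticoncentration that gives Claim~\ref{claim:anticoncentration-in-expectation}) has already been done in the preceding subsections; the present lemma is just the final assembly, and the constraint $\|\bgamma\|_1 < L/2$ is precisely calibrated so that the truncation error from Claim~\ref{claim:Kbound} is negligible relative to the main contribution.
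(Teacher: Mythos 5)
Your proof is correct and matches the paper's argument exactly: the paper's own proof of Lemma~\ref{lem:finalhatsuffbound} is precisely the chaining of Claims~\ref{claim:Kbound}, \ref{claim:tildeKbound}, and \ref{claim:boundontildedetintermsofq}, with the hypothesis $\|\bgamma\|_1 < L/2$ ensuring the $C(T_1)^L$ truncation error is absorbed into the $c(T_1)^{2\|\bgamma\|_1}$ main term. You have simply spelled out the absorption step that the paper states is immediate.
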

\begin{proof}
This is immediate by combining Claims~\ref{claim:Kbound}, \ref{claim:tildeKbound} and \ref{claim:boundontildedetintermsofq}.
\end{proof}

\subsection{Proving learnability of generic MSP functions, Theorem~\ref{thm:genericmspsuffrestated}}\label{ssec:mspconclusion}

Here we give the final technical step to proving that generic MSP functions are learnable. The proof idea is to use Lemma~\ref{lem:finalhatsuffbound} to lower-bound the minimum eigenvalue of the kernel matrix $\bK^{T_1}$. By Lemma~\ref{lem:finalhatsuffbound}, it suffices to prove that for any minimal MSP structure $\cS \subseteq 2^{[P]}$, if we plug in $\alpha_S = 0$ for all $S \not\in \cS$ the determinant $\det(\hat\bM)$ almost surely is a non-zero polynomial in $t$ with nonzero low-order terms. In other words, the main technical lemma that remains to be proved is the following.

\begin{lemma}\label{lem:mainmspclaim}
Let $\cS \subseteq 2^{[P]}$ be any MSP set structure on $P$ variables. Then there are constants $l_{\cS}$ and $L_{\cS}$ depending only $\cS$ such that if we take the truncation to the dynamics to be $L \geq L_{\cS}$ then $\det(\hat\bM) \mid_{(\alpha_S)_{S \not\in \cS} = \bzero}$ is a polynomial in $t,\ba,\{\balpha\}_{S \in \cS},\bm,\brho$ that has a nonzero term with degree $l_{\cS}$ in $t$.
\end{lemma}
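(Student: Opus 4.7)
I propose to adapt the proof of Lemma~\ref{lem:nonzero-poly-discrete-msp}---the discrete-time analog of this statement---to the continuous-time setting with activation perturbation. The argument has four main stages.

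\textbf{Stages 1--2 (Minimal MSP; distinct neuron-input polynomials).} Specializing $\alpha_S = 0$ for $S$ outside a minimal MSP substructure only further restricts $\det(\hat\bM)$, so I may assume $\cS = \{S_1, \ldots, S_P\}$ with $i \in S_i \subseteq [i]$ for each $i \in [P]$. Let $\bar y_\bz(a, t) := \langle\hat\bu^t(a),\bz\rangle = \sum_{i \in [P]} z_i \sum_{l=1}^L a^l t^l p_{il}(\balpha,\bm)$ via Lemma~\ref{lem:hatburecurrence}, and define $o_1 = 1$ and $o_i = 1 + \sum_{i' \in S_i \sm \{i\}} o_{i'}$. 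Tracking the continuous recurrence yields the analog of Claims~\ref{claim:oi-def} and \ref{claim:oi-monomial-distinct}: namely $p_{il} \equiv 0$ for $l < o_i$, and $p_{i,o_i}$ is a nonzero monomial in $\balpha$ having $\alpha_{S_i}$ as a factor; the constraint $S_i \subseteq [i]$ forces these $P$ monomials $\{p_{i,o_i}\}_{i \in [P]}$ to be pairwise non-proportional. Hence for any $\bz \neq \bz'$, choosing $i^*$ minimizing $o_i$ over $\{i : z_i \neq z'_i\}$, the coefficient of $t^{o_{i^*}} a^{o_{i^*}}$ in $\bar y_\bz - \bar y_{\bz'}$ is a nontrivial $\Z$-linear combination of distinct nonzero monomials in $\balpha$, hence nonzero. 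Thus the $2^P$ polynomials $\{\bar y_\bz\}_\bz$ are pairwise distinct over $\Q(\balpha,\bm)$.

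\textbf{Stage 3 (Reducing to a power activation via perturbation).} Because $\brho$ has $2^{8P}+1$ free coordinates, I specialize $\rho_r := r!\binom{L}{r} c^{L-r} - m_r$ for some fixed $c \neq 0$ and every $r \in \{0,\ldots,L\}$ (valid whenever $L \leq 2^{8P}$), so that $\hat\sigma_{pert}(y) = (c+y)^L$. I then pick generic $\balpha_*,\bm_*,t_*$ so that, by Stage~2, the $2^P$ polynomials $\{c + \bar y_\bz(a,t_*)\}_\bz$ are non-constant in $a$ (since each $\bar y_\bz$ has $a$-degree $o_i \geq 1$) and pairwise non-proportional: any proportionality $c + \bar y_\bz = \lambda(c + \bar y_{\bz'})$ evaluated at $a = 0$ forces $\lambda = 1$ because $\bar y_\bz(0,t_*) = 0$, whereupon $\bar y_\bz \equiv \bar y_{\bz'}$ contradicts Stage~2.

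\textbf{Stage 4 (Newman--Slater and a Wronskian identity).} Setting $L_\cS := 8(2^P)^2$ and taking $L_\cS \leq L \leq 2^{8P}$, Proposition~\ref{prop:newmanslater} implies that $\{(c + \bar y_\bz(a,t_*))^L\}_\bz = \{\hat\sigma_{pert}(\bar y_\bz(a,t_*))\}_\bz$ are $\C$-linearly independent as polynomials in $a$; their Wronskian
\begin{equation*}
W(a) := \det\Big(\partial_a^{k-1}\,\hat\sigma_{pert}(\bar y_\bz(a,t_*))\Big)_{\bz \in \{+1,-1\}^P,\, k \in [2^P]}
\end{equation*}
is therefore a nonzero polynomial in $a$. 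Mirroring the closing step in the proof of Lemma~\ref{lem:nonzero-poly-discrete-msp}, the standard identity
\begin{equation*}
W(a) = \partial_{a_2}\,\partial_{a_3}^2 \cdots \partial_{a_{2^P}}^{2^P-1}\,\det(\hat\bM)\Big\vert_{a_1 = \cdots = a_{2^P} = a}
\end{equation*}
implies $\det(\hat\bM) \not\equiv 0$ as a polynomial in $(t,\ba,\balpha,\bm,\brho)$. Since $\det(\hat\bM)$ has $t$-degree at most $2^P L^2$ and since contributions involving $\rho_r$ carry $t$-degree at least $r \cdot o_{\min}$, the smallest $l$ with $[t^l]\det(\hat\bM) \not\equiv 0$ stabilizes once $L \geq L_\cS$, yielding a constant $l_\cS$ depending only on $\cS$.

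\textbf{Main obstacle.} The crux is Stage~2: verifying via the continuous-time recurrence of Lemma~\ref{lem:hatburecurrence} that the leading monomials $\{p_{i,o_i}\}_{i \in [P]}$ are pairwise non-proportional. This parallels Claim~\ref{claim:oi-monomial-distinct} in the discrete case, but the continuous recurrence produces fractional coefficients (from integration in $t$) rather than integer increments, so extra bookkeeping is needed to track which $\alpha_{S_i}$-factors appear in each $p_{i,o_i}$ and to rule out accidental cancellations. Stages~3 and~4 are largely mechanical once Stage~2 is in place; the use of the activation perturbation (to collapse $\hat\sigma_{pert}$ onto a pure power) is precisely what allows the Newman--Slater linear-independence argument from the discrete proof to go through in this continuous, non-polynomial-activation setting.
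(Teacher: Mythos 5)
You take a structurally similar route overall (reduce to a minimal MSP structure, establish distinctness of the leading monomials $p_{i,o_i}$, exploit the random perturbation of the activation, invoke an algebraic-independence argument). Your Stages~1--2 essentially reproduce Claims~\ref{claim:pioirecurrence} and \ref{claim:qzminusqzprime} and are sound. But Stages~3--4 take a different path from the paper: you specialize $\brho$ so that $\hat\sigma_{pert}(y) = (c+y)^L$ and then apply Newman--Slater (Proposition~\ref{prop:newmanslater}), whereas the paper substitutes $\rho_r = -m_r + \nu^r$ so that $\hat\sigma_{pert}$ agrees, up to the relevant $T_1$-degree, with $\exp(\nu\,\cdot)$, and then extracts the highest-$\nu$-degree coefficient of the resulting Wronskian-type determinant, which is the Vandermonde in $\{\partial_a q_\bz\}_\bz$.

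The genuine gap is in Stage~4. Newman--Slater shows only that the Wronskian---and hence $\det(\hat\bM)$---is not identically zero; it does \emph{not} supply a bound, independent of $L$, on the lowest nonzero $T_1$-degree $l_\cS$. You bound the total $T_1$-degree of $\det(\hat\bM)$ by $2^P L^2$, but the smallest nonzero coefficient may sit anywhere up to that bound, which grows with $L$. The lemma and its downstream use (Lemma~\ref{lem:finalhatsuffbound}, Theorem~\ref{thm:genericmspsuffrestated}) require $l_\cS < L/2$; with your bound one would need $2^P L^2 < L/2$, which never holds. Your appeal to stabilization of the smallest $l$ presupposes $l^*(L) \le L$ so that higher-truncation corrections of order $O(T_1^{L+1})$ cannot disturb the $T_1^{l^*(L)}$ coefficient, but with $L_{\cS} \approx 8\cdot 2^{2P}$ (Newman--Slater threshold) your a priori bound gives only $l^*(L_{\cS}) \lesssim 2^{5P} \gg L_{\cS}$, so the persistence step breaks. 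The paper's exponential substitution is designed precisely to fix this: the $\nu^{\binom{2^P}{2}}$-coefficient of $\det(\bR)$ is a Vandermonde whose lowest nonzero $T_1$-degree is controlled at $\binom{2^P}{2}\cdot 2^{P-1}\le 2^{3P}$ by Claim~\ref{claim:qzminusqzprime}, yielding $l_\cS \le 2^{3P}$ independently of $L$.

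A secondary point: you flag Stage~2 as the ``main obstacle,'' but the $o_i$ bookkeeping and the pairwise non-proportionality of $\{p_{i,o_i}\}$ carry over from the discrete recurrence with only cosmetic changes---the rational constants introduced by integration in $t$ cannot cause cancellations because the leading monomials are already separated by their degree in $\alpha_{S_i}$. The real difficulty is the degree tracking in Stage~4. Newman--Slater is the right tool in the \emph{discrete} proof (Lemma~\ref{lem:nonzero-poly-discrete-msp}) because there the conclusion needed is merely $\det(\bN)\not\equiv 0$, with the quantitative control coming separately from polynomial anti-concentration (Lemma~\ref{lem:poly-anticoncentration-and-bar-hat-comparison}); in the continuous setting, the $t$-degree bound must be produced by Lemma~\ref{lem:mainmspclaim} itself, and your proposal does not produce it.
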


Before we show this lemma, let us see how it implies the main theorem.
\begin{proof}[Proof of Theorem~\ref{thm:genericmspsuffrestated}]
Let $L_{\cS}$ and $l_{\cS}$ be as in Lemma~\ref{lem:mainmspclaim}.
Choose the approximation parameter $L = \max(L_{\cS},2l_{\cS}+1)$ for defining the dynamics $\hat\bu$. We know that $\det(\hatbM) \mid_{(\alpha_S)_{S \not\in S} = \bzero}$ is a polynomial in $t$, $\ba$, $\{\balpha\}_{S \in \cS}$, $\bm$, and $\rho$ that has a nonzero term with degree $l_{\cS}$ in $t$. Therefore, almost surely over plugging in the activation perturbation $\brho = [\rho_0,\ldots,\rho_{2^{8P}}] \sim \mathrm{Unif}[-\tau_{pert},\tau_{pert}]^{\otimes 2^{8P}}$, the generic Fourier coefficients on the MSP set structure $(\alpha_S)_{S \in \cS} \sim \mathrm{Unif}[-1,1]^{\otimes |\cS|}$, and the zero Fourier coefficients outside the MSP set structure $(\alpha_S)_{S \not\in \cS} = 0$, we must have that $\det(\hatbM)$ is a polynomial in $t$ with a nonzero term of degree $ l_{\cS}$.

Since $\cS \subseteq 2^{[P]}$, $L$ and $l_{\cS}$ are upper-bounded by a constant $C$ that depends only on $P$. So by Lemma~\ref{lem:finalhatsuffbound}, we conclude that almost surely over $\brho$ and $\balpha$ there is a constant $c > 0$ depending only on $K,P,\brho,\balpha$ such that we have $\lambda_{\mathrm{min}}(\bK^{T_1}) \geq c(T_1)^{2l_{\cS}}$ as long as $T_1 \leq c$.  In particular, choosing $T_1 = c$, then $\lambda_{\mathrm{min}}(\bK^{T_1}) \geq c^2$.

For $t \geq T_1$, let $\bg_t = (g_t(\bz))_{\bz \in \{+1,-1\}^P}$ denote the residual vector where $g_t(\bz) = h_*(\bz) - \fNN^{pert}(\bz;\bar\rho_{t})$. Here $\fNN^{pert}$ is $\fNN$ but with the activation $\sigma$ replaced by the perturbed activation $\sigma_{pert}$ that is used in Phase 2. Recall that during Phase 2 the dynamics are linear since we are training the second layer, and are governed by kernel $\bK^{T_1}$. We have following bound on the norm of the residuals for $t \geq T_1$: $$\|\bg_t\|_2^2 \leq e^{-\lambda_{\mathrm{min}}(\bK^{T_1})(t-T_1)}\|\bg_{T_1}\|^2.$$ Choose $T_1 = c$, and $T_2 = T_1 + \log(\|\bg_{T_1}\|^2 / \eps) / c^2$ to achieve error $\eps > 0$. Since $\|\bg_{T_1}\|^2 \leq 2^P (\|h_*\|^2 + \|\fNN^{pert}(\cdot;\bar\rho_{T_1})\|^2) \leq K$, we have that $T_1$ and $T_2$ are constants depending on $K,P,\balpha,\brho$. This proves strong $O(d)$-SGD learnability (with the variation that the activation function is perturbed at time $T_1$) almost surely over the Fourier coefficients $\balpha$ and the perturbation $\brho$.
\end{proof}

\subsection{Proof of Lemma~\ref{lem:mainmspclaim}}
\label{sec:mainmspclaim}

It only remains to show Lemma~\ref{lem:mainmspclaim}. To show this lemma, we will use the fact from Claim~\ref{claim:hathpoly} that $\det(\hat\bM)$ is a polynomial in all relevant parameters: $t,\ba,\balpha,\bm,\brho$.
\begin{claim}
There is a large enough integer $D$ depending on $L,P$ such that $\det(\hatbM)$ is a polynomial of degree at most $D$ in $\ba,\bm,\balpha,\brho$, and $t$.

\end{claim}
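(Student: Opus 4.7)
The goal is to track the degrees of the relevant polynomials as they are composed through the definition of $\hat\bM$. The strategy is a straightforward four-step propagation of degree bounds, with each step depending only on $L$ and $P$.

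First, I would establish by induction on $l$ that each $p_{il}(\balpha,\bm)$ from Lemma~\ref{lem:hatburecurrence} is a polynomial whose total degree is bounded by some $D_1 = D_1(L,P)$. For $l=1$, $p_{i1} = \alpha_{\{i\}}m_1$ has total degree $2$. For the inductive step, the recurrence expresses $p_{il}$ as a finite sum (the sums range over sets/indices in a domain of size bounded by a function of $L,P$) of terms of the form $m_{r+1}\alpha_S \prod_{r'=1}^r p_{i_{r'}l_{r'}}$ with $r\le L-1$ and $l_{r'}<l$, so each term has total degree at most $2 + \sum_{r'} D_1(l_{r'})$, and taking $D_1(l) = 2 + (L-1)\max_{l'<l} D_1(l')$ gives a uniform bound depending only on $L,P$.

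Next, by Lemma~\ref{lem:hatburecurrence}, $\hat Q_{il}^t = t^l p_{il}(\balpha,\bm)$, so each $\hat Q_{il}^t$ is a polynomial of degree at most $L$ in $t$ and at most $D_1$ in $(\balpha,\bm)$, hence total degree at most $D_1 + L$ in $(\balpha,\bm,t)$. Consequently $\hat\bu_i^{T_1}(a_j) = \sum_{l\in[L]} a_j^l \hat Q_{il}^{T_1}$ is a polynomial in the variables $(a_j,\balpha,\bm,t)$ of total degree at most $D_2 := L + D_1 + L$. Since $\bz \in \{+1,-1\}^P$ is a fixed point (not a variable), $\langle \hat\bu^{T_1}(a_j),\bz\rangle = \sum_i z_i \hat\bu_i^{T_1}(a_j)$ has the same degree bound.

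Then each entry of $\hat\bM$ has the form
\begin{equation*}
\hat M(\bz,j) = \sum_{r=0}^L \frac{m_r + \rho_r}{r!}\,\langle \hat\bu^{T_1}(a_j),\bz\rangle^r,
\end{equation*}
which, as a polynomial in $(a_j,\balpha,\bm,\brho,t)$, has total degree at most $D_3 := 1 + L\cdot D_2$ (the factor $m_r+\rho_r$ contributes degree $1$, and the $r$-th power contributes at most $rD_2 \le LD_2$). Note that only the variable $a_j$ appears in row $\bz$ and column $j$; different columns involve distinct $a_j$'s.

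Finally, $\det(\hat\bM) = \sum_{\tau \in S_{2^P}} \operatorname{sgn}(\tau)\prod_{j=1}^{2^P} \hat M(\tau(j),j)$ is a sum of products of $2^P$ entries, each a polynomial of total degree at most $D_3$ in the listed variables. Therefore $\det(\hat\bM)$ is a polynomial of total degree at most $D := 2^P \cdot D_3$ in $(\ba,\bm,\balpha,\brho,t)$, and $D$ depends only on $L$ and $P$. The main obstacle (if any) is bookkeeping the degree recurrence for $p_{il}$ cleanly; everything else is routine propagation of polynomial-degree bounds through addition, multiplication, substitution, and the Leibniz formula for the determinant.
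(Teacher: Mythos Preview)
Your proposal is correct and follows essentially the same route as the paper: both arguments trace polynomiality through the construction of $\hat\bM$ from the recursively defined $p_{il}$. The paper's proof is a one-liner that invokes Claim~\ref{claim:hathpoly} (which already expresses $\det(\hat\bM)=\sum_{\bgamma} t^{\|\bgamma\|_1}\ba^{\bgamma}q_{\bgamma}(\balpha,\bm,\brho)$ with each $q_{\bgamma}$ a polynomial over a finite index set), whereas you give a self-contained degree-tracking argument; the content is the same.
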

\begin{proof}
This is by writing $\det(\hat\bM) = \sum_{\bgamma} t^{\|\bgamma\|_1} \ba^{\bgamma} q_{\bgamma}(\balpha,\bm,\brho)$ where each $q_{\bgamma}$ is a polynomial, as proved in Claim~\ref{claim:hathpoly}.
\end{proof}

To study this polynomial, we first reduce to studying ``minimal'' MSP set structures, defined as follows.
\begin{definition}
We say that $\cS = \{S_1,\ldots,S_P\}$ is a \textit{minimal} MSP set structure if the sets can be ordered such that for each $i \in [P]$ we have $S_i \subset [i]$ and $i \in S_i$.
\end{definition}

The following claim shows that it is sufficient to restrict our attention to minimal MSP set structures.
\begin{claim}
Suppose that for every $P$ there are constants $l_P,L_{P,0}$ depending only on $P$ such that for any $L > L_{P,0}$, and every minimal MSP set structure $\cS \subseteq 2^{[P]}$, the polynomial
$\det(\hatbM) \mid_{(\alpha_S)_{S \not\in \cS} = \bzero}$ has a nonzero term with degree at most $l_P$ in $t$.

Then, for any $L > L_{P,0}$ and MSP set structure $\cS \subseteq 2^{[P]}$, the polynomial $\det(\hatbM) \mid_{(\alpha_S)_{S \not\in \cS'} = \bzero}$ has a nonzero term with degree at most $l_P$ in $t$.
\end{claim}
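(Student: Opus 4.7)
The plan is to reduce a general MSP set structure $\cS \subseteq 2^{[P]}$ to a minimal one via a coordinate permutation, and then leverage the elementary fact that specializing free parameters of a polynomial cannot create new nonzero terms. I would first note that the lemma is vacuous (indeed false) unless $\bigcup_{S \in \cS} S = [P]$: otherwise the symmetry of the dimension-free dynamics forces $\hat{\bu}_i^t \equiv 0$ for every $i$ outside this union (as follows by a straightforward induction on $l$ in the recurrence of Lemma~\ref{lem:hatburecurrence}), which makes two rows of $\hat{\bM}$ identical and hence $\det(\hat{\bM}) \equiv 0$. Such degenerate $\cS$ can be handled at the level of Theorem~\ref{thm:genericmspsuffrestated} by reducing to the corresponding MSP problem on $|\bigcup_{S \in \cS} S| < P$ variables. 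I therefore assume throughout that $\cS$ covers $[P]$.

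The first step is to extract a minimal MSP substructure from $\cS$. I would order $\cS = \{S_1,\ldots,S_m\}$ so that the MSP property $|S_i \setminus \bigcup_{i'<i} S_{i'}| \leq 1$ holds, and let $i_1 < \cdots < i_P$ be the (exactly $P$) indices at which a new coordinate is introduced; write $j_k$ for the coordinate added at $S_{i_k}$. Fix any permutation $\pi \in \mathfrak{S}_P$ with $\pi(j_k) = k$ for all $k$, and set $\cS^* := \{\pi(S_{i_k}) : k \in [P]\} \subseteq \pi(\cS)$. By construction $\pi(S_{i_k}) \subseteq [k]$ with $k \in \pi(S_{i_k})$, so $\cS^*$ is a minimal MSP set structure on $[P]$.

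The second step is to apply the hypothesis to $\cS^*$ and then track specialization. By hypothesis, for any $L > L_{P,0}$ the polynomial $\det(\hat{\bM}) \mid_{(\alpha_S)_{S \notin \cS^*} = \bzero}$ has a nonzero term of degree at most $l_P$ in $t$. Since $\cS^* \subseteq \pi(\cS)$, this polynomial is the further specialization of $\det(\hat{\bM}) \mid_{(\alpha_S)_{S \notin \pi(\cS)} = \bzero}$ obtained by additionally setting $(\alpha_S)_{S \in \pi(\cS) \setminus \cS^*} = \bzero$. Writing $\det(\hat{\bM}) \mid_{(\alpha_S)_{S \notin \pi(\cS)} = \bzero} = \sum_d t^d Q_d$, each $Q_d$ is a polynomial in $(\ba, \{\alpha_S\}_{S \in \pi(\cS)}, \bm, \brho)$, and additional specialization can only kill $Q_d$'s, never produce new nonzero ones. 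Hence at least one $Q_d$ with $d \leq l_P$ is not identically zero, giving the desired nonzero low-degree $t$-term for $\pi(\cS)$. Finally, I would undo the permutation: relabeling $z_1,\ldots,z_P$ by $\pi$ permutes the rows of $\hat{\bM}$ via the induced action on $\{+1,-1\}^P$ and simultaneously relabels the entries of $\hat{\bu}^t$, so $\det(\hat{\bM})$ changes only by $\sgn(\pi)$ and the nonzero $t^d$ coefficient is preserved.

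The chief subtlety to keep clean is the specialization-monotonicity step: it requires viewing $\det(\hat{\bM})$ as a polynomial in the joint tuple $(t,\ba,\balpha,\bm,\brho)$ and recognizing that each degree-$d$-in-$t$ coefficient $Q_d$ is itself a polynomial in the other variables, so that setting a subset of the $\alpha_S$'s to zero cannot create any new terms. Once that viewpoint is adopted, the remainder of the argument is a direct unwinding of the MSP definition together with a sign-tracking computation under coordinate permutations.
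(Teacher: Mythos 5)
Your proof follows the paper's argument (permute so that a minimal MSP substructure $\cS^*\subseteq\pi(\cS)$ exists, apply the hypothesis to $\cS^*$, and observe that specializing additional $\alpha_S$'s to zero can only kill monomials of $\det(\hat{\bM})$), just spelling out the permutation and sign bookkeeping that the paper leaves tacit. Your preliminary observation is also well-taken: when $\bigcup_{S\in\cS}S\subsetneq[P]$ some $\hat u_i^t\equiv 0$ (hence two rows of $\hat{\bM}$ coincide and $\det(\hat{\bM})\equiv 0$), and moreover no permutation can produce a minimal MSP substructure of $\cS$ since any such structure on $[P]$ has exactly $P$ sets covering $[P]$; so the claim is false as stated unless one first reduces, as you suggest, to the effective dimension $|\bigcup_{S\in\cS}S|$ --- an assumption the paper never makes explicit.
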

\begin{proof}
For any MSP set structure $\cS' \subseteq 2^{[P]}$, up to a permutation of the variables there is a minimal MSP set structure $\cS \subseteq 2^{[P]}$ such that $\cS \subseteq \cS'$. Since $\det(\hatbM)\mid_{(\alpha_S)_{S \not\in \cS} = \bzero}$ has a nonzero term with degree at most $l_P$, so does $\det(\hatbM) \mid_{(\alpha_S)_{S \not\in \cS'} = \bzero}$, because the former polynomial can be constructed from the latter by additionally setting $(\alpha_S)_{S \in \cS' \sm \cS} = 0$, which could only zero out monomials.
\end{proof}

Because of the above claim, for the remainder of this section, we fix a minimal MSP set structure $\cS = \{S_1,\ldots,S_P\}$. Let us analyze the behavior of the dynamics of $\hat{\bu}$ on a function $h_*(\bz) = \sum_{S \subseteq [P]} \alpha_S \chi_S(\bz)$ with this structure, i.e., with $\alpha_S = 0$ for all $S \not\in \cS$. Let us explicitly compute the leading order terms of the weights $\hat{\bu}_i^t$ using the recurrence relations for the simplified dynamics. Recall that $\hatbu_i^t(a) = \sum_{l=1}^L a^l t^l p_{il}(\balpha,\bm)$.

\begin{claim}\label{claim:pioirecurrence}
Suppose that $L > 2^P$. For each $i \in [P]$, define
\begin{align*}
o_i = 1 + \sum_{i' \in S_i \sm \{i\}} o_{i'}.
\end{align*}
We have
$p_{il}(\balpha,\bm) \mid_{(\alpha_S)_{S \not\in \cS} = \bzero} \equiv 0$ for all $l < o_i$, and for $l = o_i$ we have
$$p_{io_i}(\balpha,\bm) \mid_{(\alpha_S)_{S \not\in \cS} = \bzero} =  \frac{\alpha_{S_i} m_{|S_i|}}{o_i}\prod_{i' \in S_i \sm \{i\}} p_{i'o_{i'}},$$ with the convention that a product over an empty set is $1$ and a sum over an empty set is $0$.
\end{claim}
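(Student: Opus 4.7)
The plan is to prove the claim by strong induction on $l \geq 1$, expanding the recurrence for $p_{il}$ from Lemma~\ref{lem:hatburecurrence} after the substitution $\alpha_S = 0$ for $S \notin \cS$. For the base case $l = 1$, we have $p_{i1}\mid_\cS = \alpha_{\{i\}} m_1 \mid_\cS$, which is nonzero iff $\{i\} \in \cS$. Since $\cS$ is minimal (each $S_j \subseteq [j]$, $j \in S_j$), the singleton $\{i\}$ lies in $\cS$ only when $S_i = \{i\}$, in which case $o_i = 1$ and the displayed formula reduces to $\alpha_{\{i\}} m_1$ with the empty product equal to $1$. Otherwise $|S_i| \geq 2$ forces $o_i \geq 2 > l$, consistent with vanishing.

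For the inductive step with $l \geq 2$, I would use the recurrence
\[
p_{il}\mid_\cS \;=\; \frac{1}{l}\sum_{S\in\cS}\sum_{r,\mathbf{i},\mathbf{l}}\mathbf{1}\bigl(\{i\}\oplus S \oplus \{i_1\}\oplus\cdots\oplus\{i_r\}=\emptyset\bigr)\,\frac{m_{r+1}}{r!}\,\alpha_S\prod_{r'=1}^r p_{i_{r'}l_{r'}}\mid_\cS\,,
\]
restricted to $S \in \cS$. By the inductive hypothesis, any nonzero summand requires $l_{r'} \geq o_{i_{r'}}$ for each $r'$. Combined with $\sum_{r'} l_{r'} = l-1$, this yields $l - 1 \geq \sum_{r'} o_{i_{r'}}$, so it suffices to minimize $\sum_{r'} o_{i_{r'}}$ over admissible configurations.

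The core combinatorial step, which is the main obstacle, is showing that this minimum equals exactly $o_i - 1$, achieved uniquely when $S = S_i$, $r = |S_i|-1$, and $(i_1,\ldots,i_r)$ is a permutation of $S_i \setminus \{i\}$ with each $l_{r'} = o_{i_{r'}}$. Letting $T_{\mathrm{odd}}$ denote the set of indices appearing an odd number of times in $(i_1,\ldots,i_r)$, the XOR constraint reads $\{i\} \oplus T_{\mathrm{odd}} = S_j$ for some $j$, and the cost $\sum_{r'} o_{i_{r'}}$ is minimized (for a fixed $T_{\mathrm{odd}}$) by taking each element of $T_{\mathrm{odd}}$ exactly once, giving $\sum_{i' \in T_{\mathrm{odd}}} o_{i'}$. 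I would then case-split: (i) $j = i$ gives $T_{\mathrm{odd}} = S_i \setminus \{i\}$ with cost $o_i - 1$; (ii) $j \neq i$ with $i \in S_j$ gives $T_{\mathrm{odd}} = S_j \setminus \{i\} \ni j$, hence cost $\geq o_j$, and since $i \in S_j \setminus \{j\}$ the recursive definition of $o_j$ yields $o_j \geq 1 + o_i > o_i - 1$; (iii) $j \neq i$ with $i \notin S_j$ gives $T_{\mathrm{odd}} = S_j \cup \{i\}$ with cost $\geq o_i + o_j > o_i - 1$. This proves $p_{il}\mid_\cS \equiv 0$ for $l < o_i$.

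Finally, at $l = o_i$, the surviving configurations are exactly the $(|S_i|-1)! = r!$ permutations of $S_i \setminus \{i\}$, each contributing the same product. The prefactor telescopes as $\frac{1}{o_i}\cdot \frac{m_{|S_i|}}{(|S_i|-1)!}\cdot (|S_i|-1)! = \frac{m_{|S_i|}}{o_i}$, yielding the stated formula
\[
p_{io_i}\mid_\cS \;=\; \frac{\alpha_{S_i}\,m_{|S_i|}}{o_i}\prod_{i'\in S_i\setminus\{i\}} p_{i'o_{i'}}\mid_\cS \,.
\]
The hypothesis $L > 2^P$ is used only to guarantee that the indices $r \leq |S_i| - 1$ and $l_{r'} = o_{i_{r'}}$ (which are at most $2^{P-1}$ by a trivial induction on $o_i$) fall within the summation range of the recurrence. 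The rest is pure bookkeeping once the combinatorial minimization in cases (ii)--(iii) is in place.
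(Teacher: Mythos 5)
Your proof is correct and takes essentially the same approach as the paper's: induction on $l$ using the recurrence of Lemma~\ref{lem:hatburecurrence}, with the inductive hypothesis forcing $l_{r'} \geq o_{i_{r'}}$ and a cost-minimization argument (driven by the recursive definition of $o_i$ together with the XOR constraint) showing $\min \sum_{r'} o_{i_{r'}} = o_i - 1$, attained exactly when $S = S_i$ and $(i_1,\ldots,i_r)$ is a permutation of $S_i \sm \{i\}$. The paper organizes its case analysis slightly differently (it first disposes of the sub-case in which some $o_{i_{r'}} \geq o_i$, then argues $S = S_i$ by contradiction with the minimality of $\cS$), but the underlying combinatorics and the final prefactor computation are identical to yours.
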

\begin{proof}
We prove this by induction on $l$ using the recurrence relations for $p_{il}$ derived in Lemma~\ref{lem:hatburecurrence}. For simplicity, we write $p_{il} = p_{il}((\alpha_S)_{S \in \cS},\bm) = p_{il}(\balpha,\bm) \mid_{(\alpha_S)_{S \not\in S} = \bzero}$. First consider the base case of $l = 1$. For any $i$ such that $o_i = 1$, we have $S_i = \{i\}$. Therefore, from the base case of the recurrence relations, we have $p_{io_i} = p_{i1} = t \alpha_{\{i\}} m_1$. On the other hand, if $o_i > 1$, then $S_i \neq \{i\}$. By the minimality of the MSP structure we have $\{i\} \not\in \cS$ so $\alpha_{\{i\}} = 0$. Therefore $p_{i1} = t \alpha_{\{i\}} m_1 = 0$.

For the inductive step, suppose $l \geq 2$ and that the result is true for $l' \in \{0,\ldots,l-1\}$. Now consider any $S \in \cS$, any $1 \leq r \leq L$ and any $(i_1,\ldots,i_r) \in [P]^r$ such that $\{i\} \oplus S \oplus \{i_1\} \dots \oplus \{i_r\} = \emptyset$. Consider also any $l_1,\ldots,l_r \in [L-1]^r$ such that $\sum_{r'} l_{r'} = l - 1$. Each of these corresponds to a possible contribution to $p_{il}$ in the recurrence relation of Lemma~\ref{lem:hatburecurrence}. Suppose that $l \leq o_i$.

\textit{Case 1}: Suppose there is $i' \in \{i_1,\ldots,i_r\}$ such that $o_{i'} \geq o_i$. Without loss of generality take $i' = i_1$. But since $l_1,\ldots,l_r \leq l-1 < o_i \leq o_{i'}$, we have $p_{i_1l_1} = p_{i'o_{i'}} = 0$ by the inductive hypothesis, so the terms in case 1 do not contribute.

\textit{Case 2}: Suppose for all $r' \in [r]$ we have $o_{i_{r'}} < o_i$. Then $i \in S$ since otherwise $i \in \{i\} \oplus S \oplus \{i_1\} \oplus \dots \oplus \{i_r\}$ and of course $o_{i} \geq o_{i}$. If $S = S_{i'}$ for some $i' > i$, then we have $i' \in S$. And, as a consequence $i' \in \{i_1,\ldots,i_r\}$, because otherwise $i' \in \{i\} \oplus S \oplus \{i_1\} \oplus \dots \oplus \{i_r\}$ However, $o_{i'} > o_i$ since $i \in S_{i'}$, so this is a contradiction. We conclude that $S = S_i$, and so $S_i \sm \{i\} = \{i_1,\ldots,i_r\}$. Since $\sum_{i' \in S_i \sm \{i\}} = o_i - 1$ and $\sum_{r'=1}^r i_{r'} = l-1 \leq o_i$, we conclude that either \textit{Case a}: there is some $r'$ such that $l_{r'} < o_{i_{r'}}$, or \textit{Case b}: $l_{r'} = o_{i_{r'}}$ for all $r' \in [r]$. In Case a, we have $p_{i_{r'}l_{r'}} = 0$ by the inductive hypothesis, so the term does not contribute to $p_{il}$. Case b occurs if and only if $l = o_i$ and $i_1,\ldots,i_r$ are a permutation of $S_i \sm \{i\}$. There are exactly $(|S_i| - 1)!$ such terms, so the recurrence relation for $p_{il}$ holds.
\end{proof}

For any $\bz \in \{+1,-1\}^P$, define the multivariable polynomial 
\begin{align*}
q_{\bz}(a,T_1,\balpha,\bm) = \<\hat\bu^{T_1}(a),\bz\> = \sum_{l=1}^L \sum_{i=1}^P (a T_1)^{l} p_{il}(\balpha,\bm).
\end{align*}

\begin{claim}\label{claim:qzminusqzprime}
There is a constant $L_0$ depending on $P$ such that for large enough truncation $L > L_0$, for any $\bz \neq \bz' \in \{+1,-1\}^P$, $\pd{}{a} (q_{\bz} - q_{\bz'})$ has a nonzero term of degree at most $2^{P-1}$ in $T_1$.
\end{claim}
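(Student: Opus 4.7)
\medskip

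\noindent\textbf{Proof plan for Claim~\ref{claim:qzminusqzprime}.}

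The plan is to identify the lowest-order term of $q_{\bz} - q_{\bz'}$ in the variable $T_1$, using the leading-order analysis of the weights $p_{il}$ from Claim~\ref{claim:pioirecurrence}, and then show that differentiating in $a$ produces a nonzero coefficient by exploiting the fact that for a minimal MSP structure the sets $S_i$ have distinct maximum elements.

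First, I would pick the truncation $L_0 = 2^P$ (so Claim~\ref{claim:pioirecurrence} applies) and set $I = \{i \in [P] : z_i \neq z'_i\}$, which is nonempty since $\bz \neq \bz'$. Letting $o^* = \min_{i \in I} o_i$ and $I^* = \{i \in I : o_i = o^*\}$, Claim~\ref{claim:pioirecurrence} tells us that, after setting $\alpha_S = 0$ for $S \notin \cS$, the polynomial $p_{il}$ vanishes for $l < o_i$. Therefore, writing $q_{\bz}(a, T_1, \balpha, \bm) = \sum_{i, l} z_i (aT_1)^l p_{il}$, the lowest-order term in $T_1$ of $q_{\bz} - q_{\bz'}$ is
\[
a^{o^*} T_1^{o^*} \sum_{i \in I^*} (z_i - z'_i)\, p_{io_i}(\balpha, \bm),
\]
and differentiating in $a$ yields $o^* a^{o^*-1} T_1^{o^*} \sum_{i \in I^*}(z_i - z'_i) p_{io_i}$. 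A straightforward induction using $o_i = 1 + \sum_{i' \in S_i \setminus \{i\}} o_{i'}$ and $S_i \subseteq [i]$ gives $o_i \leq 2^{i-1} \leq 2^{P-1}$, so it only remains to show that the above linear combination of $p_{io_i}$'s is a nonzero polynomial in $(\balpha,\bm)$.

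The main step is then the following ``distinguishing monomial'' argument. Let $i^{**} = \max I^*$ and consider the Fourier coefficient $\alpha_{S_{i^{**}}}$ as a formal variable. Unrolling the recursion in Claim~\ref{claim:pioirecurrence}, $p_{jo_j}$ is a polynomial in $\{\alpha_{S_{j'}} : j' \leq j\} \cup \{m_r\}$, since the recursion at index $j$ only calls $p_{i' o_{i'}}$ for $i' \in S_j \setminus \{j\} \subset [j-1]$. Combined with the fact that, in a minimal MSP structure, each $S_j$ has $j$ as its maximum element (so $S_j \neq S_{i^{**}}$ whenever $j < i^{**}$), this shows that $\alpha_{S_{i^{**}}}$ does not appear in $p_{i'o_{i'}}$ for any $i' \in I^* \setminus \{i^{**}\}$ (all such $i'$ satisfy $i' < i^{**}$, for if $i' > i^{**}$ and $i^{**} \in S_{i'}$ then $o_{i'} > o_{i^{**}}$, a contradiction). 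Hence the coefficient of $\alpha_{S_{i^{**}}}$ in $\sum_{i \in I^*}(z_i - z'_i) p_{io_i}$ equals
\[
(z_{i^{**}} - z'_{i^{**}}) \cdot \frac{m_{|S_{i^{**}}|}}{o_{i^{**}}} \prod_{i' \in S_{i^{**}} \setminus \{i^{**}\}} p_{i'o_{i'}}(\balpha,\bm),
\]
which is nonzero because $i^{**} \in I^*$ forces $z_{i^{**}} - z'_{i^{**}} = \pm 2$, and each $p_{i'o_{i'}}$ is a nonzero polynomial by an easy induction on the recursion depth (its unique lowest-degree monomial in $\balpha$ is nonzero with a coefficient that is a nonzero monomial in $\bm$).

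I anticipate the main obstacle to be the bookkeeping in the distinguishing-monomial argument: one must correctly verify that $\alpha_{S_{i^{**}}}$ cannot appear in any $p_{i'o_{i'}}$ for $i' \in I^*, i' \neq i^{**}$, and this requires carefully using both the minimality assumption on $\cS$ (so that $S_j$ is uniquely identified by its maximum $j$) and the fact that the defining recurrence for $p_{jo_j}$ only involves indices strictly less than $j$. Once this is in place, the bound $o^* \leq 2^{P-1}$ and the nonvanishing of the leading coefficient complete the proof, with $L_{\cS} := L_0 = 2^P$.
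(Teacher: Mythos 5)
Your proof is correct and follows essentially the same approach as the paper: identify the lowest-order coefficient in $T_1$, restrict to $i \in I^*$ (the paper chooses $i$ minimizing $o_i$ with ties broken to larger $i$, which is your $i^{**}$), and observe that $\alpha_{S_{i^{**}}}$ appears linearly in $p_{i^{**}o_{i^{**}}}$ but in no $p_{i'o_{i'}}$ with $i' < i^{**}$ because of the recursion structure and the minimality of $\cS$. One small remark: the parenthetical justifying $i' < i^{**}$ via ``if $i' > i^{**}$ and $i^{**} \in S_{i'}$ then $o_{i'} > o_{i^{**}}$'' is unnecessary and slightly off-target---the inequality $i' < i^{**}$ is immediate from $i^{**} = \max I^*$; the conclusion is still right.
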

\begin{proof}
Let us take a constant $L_0 = 2^P$. Then the low-order solutions to the recursion from Claim~\ref{claim:pioirecurrence} are valid. There must be an index $i \in [P]$ such that $z_i \neq z'_i$. Choose $i \in \{i' : z_{i'} \neq z'_{i'}\}$ such that $o_i$ is minimized, breaking ties in favor larger $i$. Consider the terms of $\pd{}{a} (q_{\bz} - q_{\bz'})$ which are of degree $o_i$ in $T_1$. The degree $o_i$ part is equal to
 \begin{align*}
 [T_1^{o_{i}}]\pd{}{a} (q_{\bz} - q_{\bz'}) =  \sum_{i'=1}^P (z_{i'} - z'_{i'}) o_i a^{o_i - 1} p_{i'o_i}(\balpha,\bm) = \sum_{\substack{i'=1 \\ z_{i'} \neq z'_{i'}}}^P (z_{i'} - z'_{i'}) o_i a^{o_i - 1} p_{i'o_i}(\balpha,\bm).
 \end{align*}
Notice that if $z_{i'} \neq z'_{i'}$, then have $o_{i'} \geq o_i$ by the choice of $i$. And if $o_{i'} > o_i$ then $p_{i'o_i} \equiv 0$ by Claim~\ref{claim:pioirecurrence}. So
\begin{align*}
[T_1^{o_{i}}]\pd{}{a} (q_{\bz} - q_{\bz'}) = \sum_{\substack{i'=1 \\ o_{i'} = o_i}}^P (z_{i'} - z'_{i'}) o_i a^{o_i - 1} p_{i'o_{i'}}(\balpha,\bm).
\end{align*}
By the recurrence relations for $p_{i'o_{i'}}$ in Claim~\ref{claim:pioirecurrence}, one can see that $p_{io_i}$ is a monomial with degree 1 in $\alpha_{S_i}$. On the other hand, for all $i' < i$, the polynomial $p_{i'o_{i'}}$ does not depend on $\alpha_{S_i}$. Therefore $[T_1^{o_{i}}]\pd{}{a} (q_{\bz} - q_{\bz'})$ is a nonzero polynomial. So $\pd{}{a} (q_{\bz} - q_{\bz'})$ has a nonzero degree $o_i$ term in $T_1$. One can prove using the recurrence relation of Claim~\ref{claim:pioirecurrence} inductively on $i$ that $o_i \leq 2^{i-1}$.
\end{proof}

Now consider the following matrix $\bN \in \R^{2^P \times 2^P}$ indexed by $\bz \in \{+1,-1\}^P$ and $j \in [2^P]$, and depending on some indeterminate scaling factor $\nu \in \R$,
\begin{align*}
N_{\bz,j} = \pd{^{j-1}}{a^{j-1}} \exp(\nu q_{\bz})
\end{align*}

We prove that $\det(\bN)$ has a low-order non-zero term in the analytic expansion of $T_1$ at $0$. This is an auxiliary result that will allow us to prove the corresponding result for $\det(\hat\bM)$.

\begin{claim}\label{claim:detnnonzeroterm}
There is a constant $L_0$ depending on $\cS \subseteq 2^{[P]}$ such that for large enough $L > L_0$, there exists $l \leq 2^{3P}$ where $$\frac{\partial^l}{(\partial T_1)^l} \det(\bN) \mid_{T_1 = 0}$$ equals a nonzero polynomial in $\nu,a,\balpha,\bm$. 
\end{claim}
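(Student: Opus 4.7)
\medskip

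\noindent\textbf{Proof proposal.} My plan is to factor the exponentials out of each row of $\bN$ via the Fa\`a di Bruno / Bell-polynomial identity and reduce the problem to a Vandermonde determinant in the derivatives $\partial q_{\bz}/\partial a$. Let $B_n$ denote the $n$th complete Bell polynomial. For each $\bz \in \{+1,-1\}^P$ and $j \in [2^P]$,
\[
\frac{\partial^{j-1}}{\partial a^{j-1}} \exp(\nu q_{\bz}) = \exp(\nu q_{\bz}) \cdot B_{j-1}\!\big(\nu q_{\bz}^{(1)}, \ldots, \nu q_{\bz}^{(j-1)}\big),
\]
where $q_{\bz}^{(k)} := \partial^k q_{\bz}/\partial a^k$; the coefficient of the highest power of $\nu$ in $B_{j-1}$, which comes from the partition of $\{1,\ldots,j-1\}$ into singletons, is $(q_{\bz}^{(1)})^{j-1}$. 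Factoring $\exp(\nu q_{\bz})$ out of the row indexed by $\bz$ gives
\[
\det(\bN) = \exp\!\Big(\nu \sum_{\bz} q_{\bz}\Big) \cdot D(T_1, a, \nu, \balpha, \bm),
\]
where $D := \det[B_{j-1}(\nu q_{\bz}^{(1)}, \ldots, \nu q_{\bz}^{(j-1)})]_{\bz, j}$ is a genuine polynomial in all its arguments.

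Since $q_{\bz}|_{T_1 = 0} \equiv 0$ (each $q_\bz$ starts at order $T_1$), the exponential prefactor has Taylor expansion $1 + O(T_1)$ in $T_1$. Writing $D = \sum_{l \geq 0} T_1^l D_l$ with $D_l$ polynomial in $\nu, a, \balpha, \bm$, if $l^\star$ is the smallest index with $D_{l^\star} \not\equiv 0$, then convolving with $1 + O(T_1)$ yields $[T_1^{l^\star}]\det(\bN) = D_{l^\star}$, since all lower-order $T_1$-coefficients of $D$ vanish. It therefore suffices to exhibit some $l^\star \leq 2^{3P}$ at which $D$ has a nonzero coefficient in $T_1$.

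To produce such an $l^\star$, I isolate the top $\nu$-coefficient of $D$. By multilinearity of the determinant,
\[
[\nu^{\binom{2^P}{2}}] D = \det\!\big[(q_{\bz}^{(1)})^{j-1}\big]_{\bz, j},
\]
which is a Vandermonde determinant in the $2^P$ quantities $q_{\bz}^{(1)}$, equal up to sign to $\prod_{\bz < \bz'}(q_{\bz'}^{(1)} - q_{\bz}^{(1)})$ under any fixed total ordering of $\{+1,-1\}^P$. By Claim~\ref{claim:qzminusqzprime}, each factor $q_{\bz'}^{(1)} - q_{\bz}^{(1)} = \partial(q_{\bz'}-q_{\bz})/\partial a$ is a polynomial in $T_1$ whose lowest nonzero $T_1$-term sits at $T_1$-degree at most $2^{P-1}$ and has coefficient a nonzero element of the integral domain $\R[a, \balpha, \bm, \nu]$. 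In an integral domain the lowest nonzero coefficient of a product of polynomials in $T_1$ is the product of the lowest nonzero coefficients, hence is itself nonzero; consequently the Vandermonde product has a nonzero leading $T_1$-coefficient at $T_1$-degree at most
\[
\binom{2^P}{2}\cdot 2^{P-1} \;\leq\; 2^{2P-1}\cdot 2^{P-1} \;=\; 2^{3P-2} \;\leq\; 2^{3P}.
\]

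Hence $[\nu^{\binom{2^P}{2}}] D_{l^\star} \not\equiv 0$ for some $l^\star \leq 2^{3P}$, so $D_{l^\star}$ is a nonzero polynomial in $\nu, a, \balpha, \bm$ and therefore $[T_1^{l^\star}] \det(\bN) \not\equiv 0$. Equivalently, $\frac{\partial^{l^\star}}{\partial T_1^{l^\star}} \det(\bN)\big|_{T_1 = 0} = l^\star! \cdot D_{l^\star}$ is the required nonzero polynomial, and the constant $L_0$ can be taken to be whatever threshold is required by Claim~\ref{claim:qzminusqzprime}. The only substantive input is Claim~\ref{claim:qzminusqzprime} itself, which is already established; everything else is a standard Wronskian/Vandermonde manipulation combined with the integral-domain argument, so I do not anticipate any serious obstacle.
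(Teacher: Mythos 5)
Your proof is correct and follows the same route as the paper's: factor the exponential out of each row (the paper defines the cofactor matrix $\bR$ recursively, which is precisely your Bell-polynomial expansion), extract the top $\nu$-coefficient to obtain the Vandermonde determinant in $q_{\bz}^{(1)}$, and then use Claim~\ref{claim:qzminusqzprime} together with the integral-domain property of $\R[a,\balpha,\bm]$ to get a nonzero low-degree $T_1$-coefficient. The only (superficial) difference is that you invoke Fa\`a di Bruno explicitly rather than writing the recursion for $R_{\bz,j}$, and you spell out the lowest-nonzero-coefficient argument slightly more carefully than the paper does; the substance is identical.
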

\begin{proof}
By the chain rule we may write $\bN_{\bz,j} = \exp(\nu q_{\bz}) R_{\bz,j}$, for a function  $R_{\bz,j}(a,T_1,\balpha,\bm)$ defined inductively on $j$ as $R_{\bz,1}(a,T_1,\balpha,\bm) \equiv 1$, and $$R_{\bz,j+1} = R_{\bz,j} \pd{}{a} \nu q_{\bz} + \pd{}{a}R_{\bz,j}.$$ So $\det(\bN) = \left(\prod_{\bz \in \{+1,-1\}^P} \exp(\nu q_{\bz})\right) \det(\bR)$, where $\bR$ is the matrix with entries $R_{\bz,j}$.
Since each $R_{\bz,j}$ is a polynomial of degree $j-1$ in $\nu$, $\det(\bR)$ is a polynomial of degree at most $\sum_{j=1}^{2^P} j-1 = (2^P-1)(2^P)/2 = \binom{2^P}{2}$ in $\nu$. Let us consider the part of $\det(\bR)$ that has degree $\binom{2^P}{2}$ in $\nu$. This must come from the degree $j-1$ part of each $R_{\bz,j}$, which can inductively be shown to be $[\nu^{j-1}]R_{\bz,j} = (\pd{}{a} q_{\bz})^{j-1}$. So $[\nu^{\binom{2^P}{2}}]\det(\bR) = \det(\hat\bR)$, where $\hat\bR$ is the matrix with entries
\begin{align*}
\hat R_{\bz,j} = \Big(\pd{}{a} q_{\bz}\Big)^{j-1}.
\end{align*}
This matrix is Vandermonde, so its determinant is (up to a factor of $+1$ or $-1$):
\begin{align*}
\det(\hat{\bR}) = \prod_{\bz \neq \bz' \in \{+1,-1\}^P} \Big(\pd{}{a} (q_{\bz} - q_{\bz'})\Big).
\end{align*}
From Claim~\ref{claim:qzminusqzprime}, we know that for each distinct $\bz,\bz'$, we have that $(\pd{}{a} (q_{\bz} - q_{\bz'}))$ has a nonzero term of degree at most $2^{P-1}$ in $T_1$. Therefore $\det(\hat{\bR})$ has a nonzero term of degree at most $\binom{2^{P}}{2} 2^{P-1} \leq 2^{3P}$ in $T_1$. In particular, we have proved that $\det(\bR)$ is a polynomial in $\nu,a,T_1,\balpha,\bm$ that has a nonzero term of degree at most $2^{3P}$ in $T_1$. Let $0 \leq l \leq 2^{3P}$ be the smallest $l$ such that $[T_1^l] \det(\bR) \not\equiv 0$. Then we have
\begin{align*}
\frac{\partial^l}{(\partial T_1)^l} \det(\bN) \mid_{T_1 = 0} &= \frac{\partial^l}{(\partial T_1)^l}  \det(\bR)
\prod_{\bz} \exp(\nu q_{\bz}) \mid_{T_1 = 0} \\
&= \left(\frac{\partial^l}{(\partial T_1)^l}  \det(\bR)\right)
\prod_{\bz} \exp(\nu q_{\bz}) \mid_{T_1 = 0} \\
&= \left(\frac{\partial^l}{(\partial T_1)^l}  \det(\bR)\right) \mid_{T_1 = 0},
\end{align*}
since $q_{\bz} \mid_{T_1 = 0} \equiv 1$, since $T_1$ divides the polynomial $q_{\bz}$ by its definition.

\end{proof}

Now consider the following matrix $\hat{\bN} \in \R^{2^P \times 2^P}$. We will eventually compare the determinant of $\hat{\bN}$ to that of $\bN$. Each entry of $\hat{\bN}$ is a polynomial in $a,\balpha,\bm,\brho$
\begin{align*}
\hat{N}_{\bz,j} = \pd{^{j-1}}{a^{j-1}} \hat\sigma_{pert}(q_{\bz}).
\end{align*}

Let us prove that $\det(\hat\bN)$ has a low-order nonzero term in $T_1$ by comparing it to $\det(\bN)$.
\begin{claim}
For any $\cS$ there is large enough truncation parameter $L_0$, such that for $L > L_0$ there exists $l \leq 2^{3P}$ with
$\frac{\partial^l}{(\partial T_1)^l} \det(\hat{\bN}) \mid_{T_1 = 0} \not\equiv 0$.
\end{claim}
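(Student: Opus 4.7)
The plan is to reduce this to the previous Claim~\ref{claim:detnnonzeroterm} about $\det(\bN)$ by exploiting the freedom in the perturbation $\brho$. Since $\hat\sigma_{pert}(s) = \sum_{r=0}^{L} \tfrac{m_r+\rho_r}{r!} s^r$ depends polynomially on $\brho$, and each $q_{\bz}$ is a polynomial in $a,T_1,\balpha,\bm$, the quantity $\det(\hat\bN)$ is a polynomial in $a, T_1, \balpha, \bm, \brho$. To show it has a nonzero term of low degree in $T_1$, it suffices to exhibit a polynomial specialization of $\brho$ under which the corresponding coefficient is nonzero.

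Choose $L_0 \geq 2^{8P}$ (so that $\brho$ has free entries up to index $L$) and also $L_0 > 2^{3P}$. Introduce a new indeterminate $\nu$ and perform the substitution $\rho_r \mapsto \nu^r - m_r$ for all $0 \leq r \leq L$. Under this substitution, the perturbed activation becomes the degree-$L$ Taylor truncation of $\exp(\nu s)$:
\[
\hat\sigma_{pert}(s) \;=\; \sum_{r=0}^{L} \frac{\nu^r}{r!}\, s^r.
\]
By the definition of $q_{\bz}$, each monomial of $q_{\bz}$ contains a factor of $T_1$, so $T_1 \mid q_{\bz}$ as a polynomial. Consequently the tail
\[
\exp(\nu q_{\bz}) - \hat\sigma_{pert}(q_{\bz}) \;=\; \sum_{r>L} \frac{\nu^r q_{\bz}^r}{r!}
\]
is, viewed as a formal power series in $T_1$, divisible by $T_1^{L+1}$. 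Applying the derivatives $\pd{^{j-1}}{a^{j-1}}$ (which act only in $a$, not in $T_1$) preserves this divisibility, so each entry of $\hat\bN$ differs from the corresponding entry of $\bN$ by an element of $T_1^{L+1} \cdot \R[[T_1]][a,\nu,\balpha,\bm]$.

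Since the determinant is polynomial in the entries of the matrix, this entrywise agreement modulo $T_1^{L+1}$ gives
\[
\det(\hat\bN)\big|_{\rho_r = \nu^r - m_r} \;\equiv\; \det(\bN) \pmod{T_1^{L+1}}.
\]
By Claim~\ref{claim:detnnonzeroterm}, there is some $l \leq 2^{3P} < L+1$ such that $\frac{\partial^l}{(\partial T_1)^l}\det(\bN)\big|_{T_1 = 0}$ is a nonzero polynomial in $\nu, a, \balpha, \bm$. The congruence above forces $\frac{\partial^l}{(\partial T_1)^l}\det(\hat\bN)\big|_{T_1 = 0, \rho_r = \nu^r - m_r}$ to equal this same nonzero polynomial. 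Since this specialization is itself polynomial in $\nu$, the unspecialized coefficient $\frac{\partial^l}{(\partial T_1)^l}\det(\hat\bN)\big|_{T_1 = 0}$ is a nonzero polynomial in $a, \balpha, \bm, \brho$, which is exactly the claim.

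The main work is bookkeeping: making sure the substitution $\rho_r \mapsto \nu^r - m_r$ is admissible (which needs $L \geq 2^{8P}$ so enough $\rho_r$'s are free), and verifying that the exponential-tail error is pushed past the degree threshold $2^{3P}$ in $T_1$ (which needs $L > 2^{3P}$). The conceptual core is the observation that the $T_1$-divisibility of each $q_{\bz}$ converts the Taylor remainder of $\exp$ into a high order of vanishing in $T_1$, so for low-order $T_1$ analysis the polynomial activation $\hat\sigma_{pert}$ is indistinguishable from $\exp(\nu \cdot)$ for a suitable choice of $\brho$.
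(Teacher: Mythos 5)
Your proposal matches the paper's own proof: the key move in both is the substitution $\rho_r \mapsto \nu^r - m_r$, which converts $\hat\sigma_{pert}$ into a truncated exponential so that $\det(\hat\bN)$ agrees with $\det(\bN)$ modulo a high power of $T_1$, allowing you to pull back the nonvanishing from Claim~\ref{claim:detnnonzeroterm}. One small inaccuracy in your bookkeeping: in the paper's setup the perturbation vector $\brho$ only has free entries $\rho_0,\ldots,\rho_{2^{8P}}$, independently of $L$, so the substitution applies only for $r \le 2^{8P}$. After substituting, $\hat\sigma_{pert}(s) = \sum_{r=0}^{2^{8P}} \frac{\nu^r}{r!}s^r + \sum_{r=2^{8P}+1}^{L}\frac{m_r}{r!}s^r$, not the full degree-$L$ truncation of $\exp(\nu s)$; the discrepancy $\exp(\nu q_{\bz}) - \hat\sigma_{pert}(q_{\bz})$ is therefore divisible only by $T_1^{2^{8P}+1}$ rather than $T_1^{L+1}$. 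Since $2^{8P}+1 > 2^{3P}$ this divisibility is still more than enough, so your congruence argument goes through unchanged once the exponent is corrected.
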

\begin{proof}
Suppose that we were to make the substitution $\rho_r = -m_r + \nu^r$ for each $r \in \{0,\ldots,2^{8P}\}$. Then we would get $\hat{N}_{\bz,j} = \sum_{r=0}^{2^{8P}} \frac{\nu^r}{r!} (q_{\bz}(a,T_1,\balpha,\bm))^r + \sum_{r=2^{8P}+1}^{L} \frac{m_r}{r!} (q_{\bz}(a,T_1,\balpha,\bm))^r$. Then since $T_1$ divides $q_{\bz}$ and $\sum_{r=0}^{2^{8P}} \frac{\nu^r}{r!} s^r$ is the first few order expansion of $\exp(\nu s)$, for any $l \leq 2^{3P}$, we have
\begin{align*}
\frac{\partial^l}{(\partial T_1)^l} \det(\hat{\bN}) \mid_{T_1 = 0} &= \frac{\partial^l}{(\partial T_1)^l} \sum_{\tau \in S_{2^P}} \sgn(\tau) \prod_{j=1}^{2^P} \hat{\bN}_{\tau(j),j} \mid_{T_1 = 0} \\
&= \frac{\partial^l}{(\partial T_1)^l} \prod_{j=1}^{2^P}  \left(\sum_{r=0}^{2^{8P}} \frac{\nu^r}{r!} (q_{\tau(j)}(a,T_1,\balpha,\bm))\right) \mid_{T_1 = 0} \\
&= \frac{\partial^l}{(\partial T_1)^l} \prod_{j=1}^{2^P}  \left(\exp(\nu q_{\tau(j)}(a,T_1,\balpha,\bm))\right) \mid_{T_1 = 0} \\
&= \frac{\partial^l}{(\partial T_1)^l} \det(\bN) \mid_{T_1 = 0}.
\end{align*}
Recall that by Claim~\ref{claim:detnnonzeroterm}, there is a $l \leq 2^{3P}$ such that $\frac{\partial^l}{(\partial T_1)^l} \det(\bN) \mid_{T_1 = 0}$ is a nonzero polynomial. Since we have derived the above by substituting $\rho_r = -m_r + \nu^r$, we must have that without substituting we have $\frac{\partial^l}{(\partial T_1)^l} \det(\hat{\bN}) \mid_{T_1 = 0}$ is a nonzero polynomial in $a,\balpha,\bm,\brho$.
\end{proof}

Furthermore, $\det(\hat\bN)$ is related to $\det(\hat\bM)$.
\begin{claim}
$\det(\hat\bN) = \frac{\partial}{\partial a_2} \frac{\partial^2}{(\partial a_3)^2} \dots \frac{\partial^{2^P-1}}{(\partial a_{2^P})^{2^P-1}} \det(\hat\bM) \mid_{a_1 = a_2 = \dots a_{2^P} = a}$.
\end{claim}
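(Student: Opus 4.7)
The plan is to recognize this identity as a Wronskian-style reformulation and prove it by direct Leibniz expansion of $\det(\hat\bM)$. The crucial structural observation is that writing $f_\bz(a) := \hat\sigma_{pert}(\<\hat\bu^{T_1}(a), \bz\>)$, the entry $\hat M(\bz,j) = f_\bz(a_j)$ depends on the $2^P$ variables $a_1, \ldots, a_{2^P}$ \emph{only through the single variable $a_j$} in column $j$. In contrast, $\hat N(\bz,j) = f_\bz^{(j-1)}(a)$ is the matrix of successive derivatives of $f_\bz$ at a common point $a$. Since $\partial^{j-1}/\partial a^{j-1}\,\hat\sigma_{pert}(q_\bz)$ agrees, after substituting the definition $q_\bz(a,T_1,\balpha,\bm) = \<\hat\bu^{T_1}(a),\bz\>$, with $f_\bz^{(j-1)}(a)$, this is exactly the matrix of the right-hand side.

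First I would apply the Leibniz formula to obtain
\begin{equation*}
\det(\hat\bM) \;=\; \sum_{\tau \in S_{2^P}} \sgn(\tau)\, \prod_{j=1}^{2^P} f_{\tau(j)}(a_j).
\end{equation*}
Because $a_j$ appears only in the $j$-th factor of the product, the differential operator $\prod_{j=2}^{2^P} \partial^{j-1}/\partial a_j^{j-1}$ distributes factorwise through each summand, giving
\begin{equation*}
\frac{\partial}{\partial a_2}\frac{\partial^{2}}{\partial a_3^{2}}\cdots\frac{\partial^{2^P-1}}{\partial a_{2^P}^{2^P-1}} \det(\hat\bM) \;=\; \sum_{\tau \in S_{2^P}} \sgn(\tau)\, \prod_{j=1}^{2^P} f_{\tau(j)}^{(j-1)}(a_j).
\end{equation*}
Finally I would specialize $a_1 = a_2 = \cdots = a_{2^P} = a$, so the right-hand side becomes $\sum_\tau \sgn(\tau) \prod_j f_{\tau(j)}^{(j-1)}(a)$, and recognize this as the Leibniz expansion of the determinant of the matrix with $(\bz,j)$-entry $f_\bz^{(j-1)}(a) = \hat N(\bz,j)$.

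There is essentially no obstacle here: the identity is a direct consequence of the product structure of the Leibniz formula combined with the fact that the columns of $\hat\bM$ are functionally separated in the variables $a_j$. The only point to verify carefully is that no cross-terms arise when applying the differential operator (which is immediate from column-separability), and that the substitution $a_j = a$ reproduces exactly the entries of $\hat\bN$. The whole proof should fit in a single display manipulation.
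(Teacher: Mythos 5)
Your proof is correct and follows precisely the same route as the paper: Leibniz expansion of $\det(\hat\bM)$, factorwise distribution of the differential operator using the fact that $a_j$ appears only in column $j$, and then specialization $a_1=\dots=a_{2^P}=a$ to recover $\det(\hat\bN)$. You state the specialization step more explicitly than the paper does, but there is no substantive difference.
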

\begin{proof}
By linearity of the derivative,
\begin{align*}
\frac{\partial}{\partial a_2} \frac{\partial^2}{(\partial a_3)^2} \dots \frac{\partial^{2^P-1}}{(\partial a_{2^P})^{2^P-1}} \det(\hat\bM) &= \sum_{\tau \in S_{2^P}} \mathrm{sgn}(\tau) \frac{\partial}{\partial a_2} \frac{\partial^2}{(\partial a_3)^2} \dots \frac{\partial^{2^P-1}}{(\partial a_{2^P})^{2^P-1}} \prod_{j=1}^{2^P} \hat\sigma_{pert}(q_{\tau(j)}(a_j)) \\
&= \sum_{\tau \in S_{2^P}} \mathrm{sgn}(\tau) \prod_{j=1}^{2^P} \frac{\partial ^{j-1}}{(\partial a_j)^{j-1}} \hat\sigma_{pert}(q_{\tau(j)}(a_j)) \\
&= \det(\hat\bN).
\end{align*}
\end{proof}

Combining the above two claims allows us to conclude that there is a nonzero term in $\det(\hat\bM)$ that has low degree in $T_1$. This concludes the proof of the lemma, which implies the theorem.
\begin{proof}[Proof of Lemma~\ref{lem:mainmspclaim}]
By the above two claims, there is $l \leq 2^{3P}$ such that $$\frac{\partial^l}{(\partial T_1)^l} \frac{\partial}{\partial a_2} \frac{\partial^2}{(\partial a_3)^2} \dots \frac{\partial^{2^P-1}}{(\partial a_{2^P})^{2^P-1}} \det(\hat\bM) \mid_{a_1 = a_2 = \dots a_{2^P} = a, T_1 = 0} \not\equiv 0.$$
This implies that $\det(\hat\bM)$ has a nonzero term of degree $l \leq 2^{3P}$ in $T_1$.
\end{proof}

\clearpage

\section{Explicit sample-complexity bounds in all parameters}\label{sec:sample-complexity-explicit}

In this paper, we focused on the dependence of the sample complexity on the ambient dimension $d$. In particular, our main result shows that MSP is a necessary and nearly sufficient condition for a sparse function $h_*$ to be learnable in $n = C(\eps,\delta,h_*) d$ samples in the mean-field scaling (i.e., to achieve test error $\eps$ with probability $1 - \delta$). While this was not our goal, we note for the interested reader that our proof techniques provide explicit dependencies of the sample size in all parameters $\eps,\delta,h_*$. In this appendix, we gather these fully explicit sample-complexity bounds and leave for future work the task of improving them.

\subsection{Vanilla staircase functions}

As a first bound, let us naively use the propagation-of-chaos bound comparison between the dimension-free dynamics and batch-SGD \eqref{eq:bSGD} presented in Theorem \ref{thm:bSGD-to-DF}.

\begin{proposition}\label{prop:vanilla_explicit_propChaos}
Consider learning a vanilla staircase:
\[
h_* (\bz) = \alpha_{\{1\}} z_1 + \alpha_{\{1,2\}} z_1z_2 + \alpha_{\{1,2,3\}} z_1z_2z_3 + \ldots + \alpha_{\{1,\ldots,P\}} z_1 z_2 \cdots z_P\, ,
\]
such that $\sum_{j \in [P]} | \alpha_{\{1, \ldots , j\}}| \leq 1$ and denote $\alpha_* := \min_{j \in [P]} | \alpha_{\{1, \ldots , j\}}| $. Then there exist a constant $K >0$ that only depends on the activation $\sigma$ and a numerical constant $C>0$, such that the dynamics described in Section \ref{ssec:vanillaproof} reaches $\eps > 0 $ test error with probability at least $1 - 2^{-2^{CP}}$, with 
\[
n = C e^{ \log(1/\eps)^C ( K/\alpha_*)^{2^{CP}}} d\, , \qquad N = C e^{ \log(1/\eps)^C ( K/\alpha_*)^{2^{CP}}} \, . 
\]
\end{proposition}

\begin{proof}[Proof of Proposition \ref{prop:vanilla_explicit_propChaos}]
The limiting mean-field dynamics verify at $T_2$: 
\[
\| f_* - \hat f_{\NN} (\cdot; \rho_{T_2} ) \|_{L^2}^2 \leq \eps/4\, .
\]
With $\bw^0 = \bzero$ initialization, the mean-field and dimension free dynamics are the same and we can use the comparison bound between batch-SGD and the mean-field dynamics in Proposition \ref{prop:general-PDE-SGD-bound}, which yields that there exists a numerical constant $C>0$ such that if 
\begin{equation}\label{eq:Nn_req}
N = C K e^{K T_2^3} / \eps \, , \qquad \eta/b =  e^{- KT_2^3} \eps/ (CK) \, ,
\end{equation}
then $\| \hat f_{\NN} (\cdot ; \bTheta^{k_2} ) - \hat f_{\NN} (\cdot ; \rho_{T_2} ) \|_{L^2}^2 \leq \eps/4$ with probability at least $1 - 1/N$. Note that we choose $T_2 = \log ( K / \eps)/ \lambda_{\min} (\bK^{T_1})$, and $T_1 = \lambda_{\min} (\bM) /2$. By Lemma \ref{lem:staircase_lambda_min}, we have $\lambda_{\min} (\bM)  \geq 2^{-2^{-CP}}$. Furthermore, in Section \ref{app:outline_staircase}, we showed that
\[
\lambda_{\min} (\bK^{T_1}) \geq \{ \min_{S} D_S^2 \} \lambda_{\min} (\bM) /2 \, ,
\]
where 
\[
 \min_{S} D_S \geq (1/K) \cdot \prod_{s \in [P]} \nu_k (T_1) \geq (\alpha_* / K)^{2^{CP}} \, .
\]
Injecting this lower bound on $\lambda_{\min} (\bK^{T_1})$ in Eq.~\eqref{eq:Nn_req} yields the bounds in the proposition.
\end{proof}

This first bound uses a worst case bound that depends exponentially on the training time, which scales as $e^{1/\lambda_{\min} (\bK^{T_1})} \leq e^{e^{e^{P}}}$ because of Phase 2 of linear training. A more careful analysis of Phase 2 yields an error that scales as $1/\lambda_{\min} (\bK^{T_1}) \leq e^{e^P}$ (see Section \ref{ssec:linear-sgd-discrete}). This results in the following improved bound:

\begin{proposition}\label{prop:vanilla_explicit_SGD}
Follow the same setup as in Proposition \ref{prop:vanilla_explicit_propChaos}. Then there exist a constant $K >0$ that only depends on the activation $\sigma$ and a numerical constant $C>0$, such that $h_*$ is strongly $O(d)$-learnable with the following dependency on $n$ and $N$:
\[
n = \Big( \frac{K}{\alpha_*} \Big)^{2^{CP}}  \frac{\log(1/\eps)}{\eps^5} \log (1/\delta) d\, , \qquad N = \Big( \frac{K}{\alpha_*} \Big)^{2^{CP}}  \frac{\log(1/\delta)^2}{\eps^3} \, . 
\]
\end{proposition}

\begin{proof}[Proof of Proposition \ref{prop:vanilla_explicit_SGD}]
This follows from applying Lemma \ref{lem:reduce-to-cond-discrete} presented in Section \ref{ssec:reducetolambdamin-discrete}, using Lemma \ref{lem:K_to_M} and the lower bound on the kernel matrix provided in the proof of Proposition \ref{prop:vanilla_explicit_SGD}.
\end{proof}

\subsubsection{Technical lemma}

\begin{lemma}\label{lem:staircase_lambda_min}
Consider $\bM = ( \E_{a} [ a^{i + j - 2}] )_{i,j \in [2^P]} \in \R^{2^P \times 2^P}$ with $a \sim \Unif ([+1,-1])$, i.e., $M_{ij} = \frac{1}{i+j - 1}\delta_{i+j \equiv 0[2]}$. Then there exists $C>0$ independent of $P$ such that
\[
\lambda_{\min} (\bM) \geq 2^{ - 2^{CP}} \, .
\]
\end{lemma}

\begin{proof}[Proof of Lemma \ref{lem:staircase_lambda_min}]
We follow a similar argument as in the proof of Lemma \ref{lem:polynomial-anticoncentration}. First, note that
\[
\lambda_{\min} (\bM)  = \inf_{\| \bu \|_2 = 1} \bu^\sT \bM \bu = \inf_{\| \bu \|_2 = 1} \E_{a} [ h(a ; \bu )^2 ]\, , 
\]
where $h(a ; \bu ) = \sum_{j = 0}^{2^P - 1} u_j a^j$. Consider $P_l$ the degree-$l$ Legendre polynomial on $[-1,1]$ and denote
\[
P_l (z) = \sum_{j = 0}^l p_{l,j} z^j \, .
\]
The polynomial $h(a ; \bu )$ can be decomposed in this basis as
\[
h(a ; \bu ) = \sum_{l = 0}^{2^D- 1} g_l P_l (z)\, .
\]
In particular, we have $u_j = \sum_{l\geq j} g_l p_{l,j}$. Therefore, 
\[
\begin{aligned}
1=\| \bu \|_2^2 =&~ \sum_{j =0}^{D-1} \Big( \sum_{l \geq j} g_l p_{l,j} \Big)^2 \\
\leq&~ 2^{2^P} \cdot \Big\{ \max_{j,l = 0 , \ldots , 2^P - 1} p_{l,j}^2 \Big\}\cdot  \sum_{j,l = 0}^{2^P-1} g_l^2 \leq  2^{2^{CP}} \cdot \E_{a} [ h(a ; \bu )^2 ] \, ,
\end{aligned}
\]
where we used that $|p_{l,l-2k}| = 2^{-l}{{l}\choose{k}} {{2l - 2k}\choose{l}} \leq 2^{Cl}$ for some $C>0$ and $|p_{l,l-2k - 1}| = 0$ otherwise. This concludes the proof.
\end{proof}

\subsection{Merged-staircase functions in the smoothed complexity model}

Similarly, the tighter analysis of second-layer training in Lemmas~\ref{lem:K_to_M} and \ref{lem:reduce-to-cond-discrete} can be used to provide a complexity bound for learning MSP functions under a smoothed model of complexity. This corresponds to making the dependencies of Theorem~\ref{thm:discrete-msp} on parameters other than $d$ tighter.

\begin{proposition}\label{prop:msp_explicit}
Let $h_* : \{+1,-1\}^P \to \R$ be any function normalized so that $\max_{\bz} |h_*(\bz)| \leq 1/P$ and $\cS = \{S : \hat{h}_*(S) \neq 0\}$ is an MSP set structure. Then for any $0 < \mu < 1$, there is a function $\tilde{h} : \{+1,-1\}^P \to \R$ such that
\begin{align*}
\tilde{h}(\bz) = h_*(\bz) + \sum_{S \in \cS} c_S \chi_S(\bz)
\end{align*}
for some $c_S$ such that $|c_S| \leq \mu$, and such that $f_*(\bx) = \tilde{h}(\bz)$ can be learned by SGD to $\epsilon$ error in $n = d \cdot \poly(2^{2^{2^{O(P)}}}, (1/\mu)^{2^{2^{O(P)}}}, 1/\epsilon)$ samples with a neural network of width $N = \poly(2^{2^{2^{O(P)}}}, (1/\mu)^{2^{2^{O(P)}}}, 1/\epsilon)$ neurons.
\end{proposition}
\begin{proof}
The proof is the same as the proof of Theorem~\ref{thm:discrete-msp}. The main difference is that we lower-bound $\lambda_{\mathrm{min}}(\bK^{k_1})$ explicitly and apply Lemmas~\ref{lem:K_to_M} and \ref{lem:reduce-to-cond-discrete} as in Proposition~\ref{prop:vanilla_explicit_SGD}. Let $\balpha = (\alpha_S)_{S \in 2^{[P]}}$, where $\alpha_S = \E_{\bz}[\tilde{h}(\bz)\chi_S(\bz)]$. Also consider activation function $\sigma(u) = \sum_{i=0}^L \frac{m_i}{i!} u^i$ for $L = 2^{8P}$. By Lemmas~\ref{lem:k-min-from-det-m} and \ref{lem:poly-anticoncentration-and-bar-hat-comparison} we know that we can bound the minimum eigenvalue of the expected kernel $\lambda_{\min}(\bK^{k_1})$ in terms of the coefficients of $\det(\bM(\bzeta,\balpha,\bm))$, viewed as a polynomial in $\bzeta$. Here, $\bM(\bzeta,\bxi,\brho)$ is the $2^P \times 2^P$ matrix given by the recurrence relations in Lemma~\ref{lem:pki-baru-discrete} and the definition in \eqref{eq:bM-def-MSP}. Notice that by construction, for any $\bz \in \{+1,-1\}^P$ and $j \in [2^P]$ we have that
\begin{align*}
(L!)^{L+1} \cdot M_{\bz,j}(\bzeta,\bxi,\brho)
\end{align*}
is a polynomial in $\bzeta,\bxi,\brho$ with integral coefficients and has degree at most $(L!)^{O(L+1)} = 2^{2^{O(P)}}$. Therefore the polynomial $N_{\bz,j}(\bzeta,\bphi,\brho)$ constructed in \eqref{eq:N-construction-discrete-msp} is such that $$(L!)^{(L+1)2^P} \det(\bN(\bzeta,\bphi,\brho))$$ has integral coefficients in $\bzeta,\bphi,\brho$ and degree at most $(L!)^{O((L+1)2^P)} = 2^{2^{O(P)}}$. Let $\kappa = \min(\mu, 1/L^2)$, and choose $\bm \sim \Unif[-\kappa, \kappa]^{\otimes L}$ and $c_S \sim \Unif[-\kappa, \kappa]$ for each $S \in \cS$. By the polynomial anti-concentration of Lemma~\ref{lem:not-shifted-polynomial-anticoncentration} and a Markov bound, with probability at least $9/10$ we must have that $\det(\bM(\bzeta,\balpha,\bm))^2$ has some monomial of $\bzeta$ with coefficient at least $\kappa^{2^{2^{O(P)}}}$. This implies by \eqref{eq:Kmin-bound-smoothed-msp} that
$$\lambda_{\mathrm{min}}(\bK^{k_1}) \geq (1/2^{2^{O(P)}}) \eta^{2^{2^{O(P)}}} \kappa^{2^{2^{O(P)}}}.$$
Taking learning rate $\eta = 2^{-\Theta(P)}$ which is small enough, we obtain the result that 
$$\lambda_{\mathrm{min}}(\bK^{k_1}) \geq \min(\mu^{2^{2^{O(P)}}}, 1/2^{2^{2^{O(P)}}}).$$
The result when we combine with Lemmas~\ref{lem:K_to_M} and \ref{lem:reduce-to-cond-discrete}.
\end{proof}

\subsection{Merged-staircase functions when including all degree-1 monomials}\label{ssec:msp-one-included}

We conjecture that the optimal dependence on $P$ for learning vanilla staircase functions with unregularized SGD by two-layer neural networks should be on the order of $\exp(O(P))$, but the results of Propositions~\ref{prop:vanilla_explicit_SGD} and \ref{prop:msp_explicit} have $P$ dependence on the order of at least $\exp(\exp(\Omega(P))$ and $\exp(\exp\exp(\Omega(P)))$, respectively. Therefore, we focus here on improving our understanding of the $P$ dependence. We prove in Proposition~\ref{prop:perturb-linear-deterministic-cond} that SGD can succeed with $\exp(O(P))$ sample complexity dependence, but our result has two qualifications:
\begin{itemize}
    \item \textbf{Stronger non-degeneracy assumption}. We assume that all of the {\em degree-1} terms of $h_*$ -- i.e., $\hat{h}_*(\{1\}),\ldots,\hat{h}_*(\{P\})$ -- are nonzero and are sufficiently distinct. This is in contrast to Propositions~\ref{prop:vanilla_explicit_SGD} and \ref{prop:msp_explicit}, which did not make this assumption.
    \item \textbf{Ad hoc activation function}. We use an ad hoc activation function. This is in contrast with Propositions~\ref{prop:vanilla_explicit_SGD} and \ref{prop:msp_explicit}, which apply to ``most'' activation functions.
\end{itemize}
Because of the stronger non-degeneracy assumption, the neural network can learn even when we only train the first layer for $k_1 = 1$ step and then train the second layer for a sufficiently large number of steps, $k_2$.

\subsubsection{The bump and gradient bump functions}\label{sssec:bump-gradbump} We define the $\sigma_{\mathrm{bump}}$ and $\sigma_{\mathrm{gradbump}}$ functions that are used to construct our ad hoc activation function.
For any $\alpha < \beta$ and $\gamma < (\beta - \alpha) / 2$, define the ``bump'' function:
\begin{align*}
\sigma_{\mathrm{bump}}(u;\alpha,\beta,\gamma) = \begin{cases}1, &  u \in [\alpha+\gamma,\beta-\gamma] \\
0, & u \not\in [\alpha,\beta] \\
(6(u-\alpha)^2/\gamma^2 - 15(u-\alpha)/\gamma+10)(u-\alpha)^3 /\gamma^3, & u \in [\alpha, \alpha+\gamma] \\
(6(\beta-u)^2 / \gamma^2 - 15(\beta-u)/\gamma+10)(\beta-u)^3 /\gamma^3, & u \in [\beta-\gamma,\beta].
\end{cases}
\end{align*}
This function is twice-differentiable since $\sigma_{\mathrm{bump}}(\alpha) = \sigma_{\mathrm{bump}}'(\alpha) = \sigma''_b(\alpha) = \sigma_{\mathrm{bump}}(\beta) = \sigma_{\mathrm{bump}}'(\beta) = \sigma_{\mathrm{bump}}''(\beta) = 0$ and it also satisfies $$\|\sigma_{\mathrm{bump}}\|_{\infty} \leq 1, \quad \|\sigma'_b\|_{\infty} \leq \frac{2}{\gamma}, \quad \mbox{ and } \quad \|\sigma''_b\|_{\infty} \leq \frac{6}{\gamma^2}.$$ If we take $\gamma$ small relative to $\beta - \alpha$, then this function is effectively an indicator on the set $[\alpha,\beta]$. See Figure~\ref{fig:bump_function} for an example.
\begin{figure}
    \centering
    \includegraphics[scale=0.5]{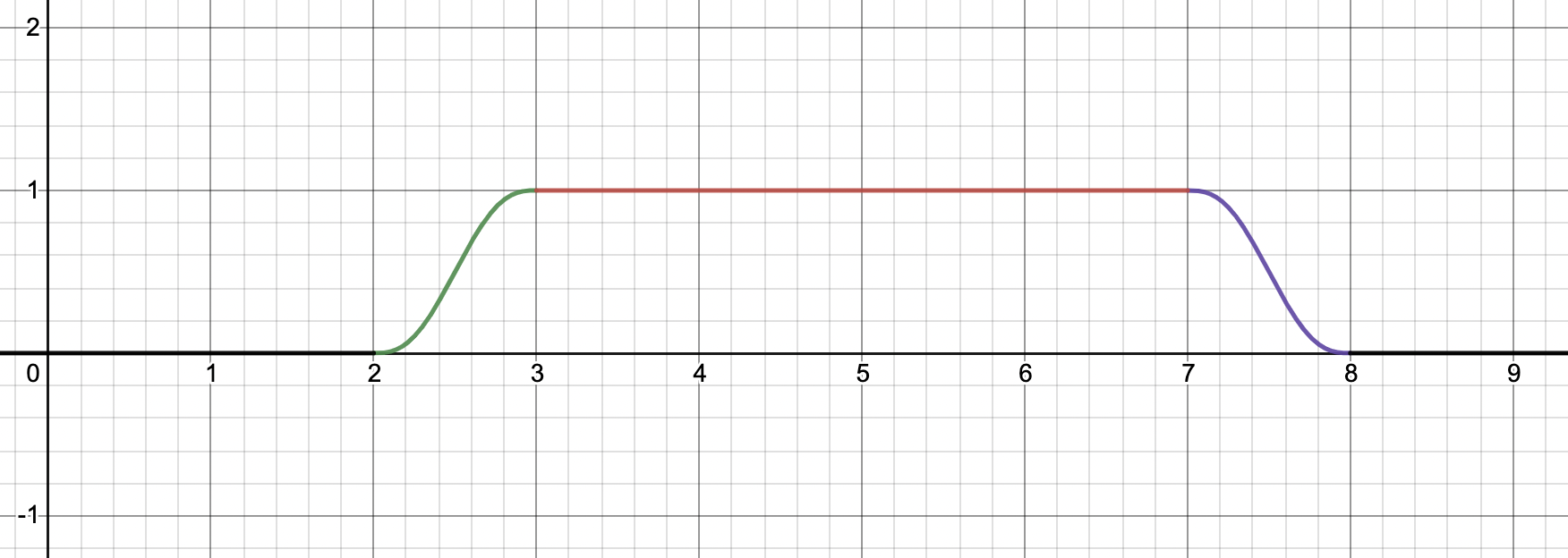}
    \caption{The bump function $\sigma_{\mathrm{bump}}(u;\alpha,\beta,\gamma)$ with $\alpha = 2,\beta = 8,\gamma = 1$.}
    \label{fig:bump_function}
\end{figure}

Also define the ``gradient bump'' function:
\begin{align*}
\sigma_{\mathrm{gradbump}}(u;\kappa) = \begin{cases}
u, & u \in [-1,1] \\
-9u^{5}-68u^{4}-198u^{3}-276u^{2}-184u-48, & u \in [-2,-1] \\
-9u^{5}+68u^{4}-198u^{3}+276u^{2}-184u+48, & u \in [1, 2] \\
0, & u \not\in [-2,2].
\end{cases}
\end{align*}
This function is twice-differentiable since the pieces agree up to second derivative. Furthermore, it satisfies $$\sigma_{\mathrm{gradbump}}(0) = 0, \quad \sigma_{\mathrm{gradbump}}'(0) = 1,$$ and it and its first and second derivatives are bounded by $O(1)$. We will use this function to ensure that our activation function has nonzero gradient at zero. See Figure~\ref{fig:gradient_bump_function}.
\begin{figure}
    \centering
    \includegraphics[scale=0.5]{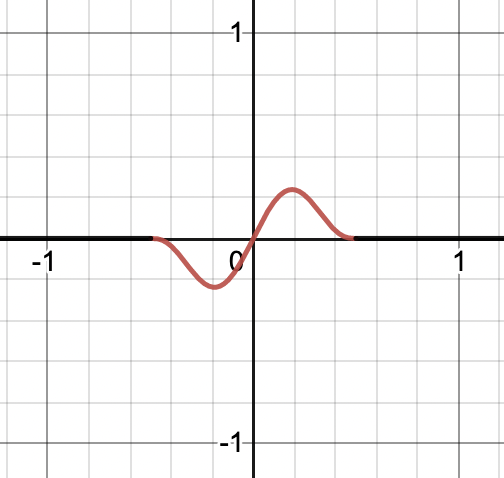}
    \caption{The gradient bump function $\sigma_{\mathrm{gradbump}}(u)$.}
    \label{fig:gradient_bump_function}
\end{figure}

\subsubsection{Statement of Proposition~\ref{prop:perturb-linear-deterministic-cond} and Corollary~\ref{cor:one-msp}}

\begin{proposition}\label{prop:perturb-linear-deterministic-cond}
Let $h_* : \{+1,-1\}^P \to \R$, normalized for convenience so that $\max_{\bz} |h_*(\bz)| \leq 1/P$. Define $\bc = [\hat{h}(\{1\}),\ldots, \hat{h}(\{P\})]$ be the vector of Fourier coefficients of degree 1, and suppose that $\bc$ satisfies the following conditions for some $0 < \mu < 1$.
\begin{itemize}
\item Bounded away from zero: for all $\bz \in \{+1,-1\}^P$,
\begin{align}\label{eq:cz-lower-bound}
|\<\bc, \bz\>| \geq \mu\,. 
\end{align}
\item Diverse: for any $\bz \neq \bz' \in \{+1,-1\}^P$,
\begin{align}\label{eq:cz-mult-different}
\frac{\<\bc,\bz\>}{\<\bc,\bz'\>} \not\in (1-\mu, 1+\mu)\,.
\end{align}
\end{itemize}

Then $f_*(\bx) = h_*(\bz)$ is SGD-learnable to any error $0 < \eps < 1$ with probability at least $1 - \delta$ in $d \cdot \poly(2^P \log(1/\delta) / (\mu \eps))$ samples on a network with $N = \poly(2^P \log(1/\delta) / (\mu \eps))$ neurons.
Furthermore, this SGD learnability is with initialization $\rho_0 = \Unif[-1,1] \otimes \delta_{\bzero}$ and activation function
\begin{align*}
\sigma(t) = \sigma_{\mathrm{gradbump}}(u) + \gamma^2 \sigma_{\mathrm{bump}}(u;\alpha,\beta,\gamma),
\end{align*}
where $\gamma = \mu / 2^{8P}$, $\alpha = \mu / 2^{4P}$, $\beta = \alpha + \mu / 2^{4P}$.
\end{proposition}

The following result shows that the condition of Proposition~\ref{prop:perturb-linear-deterministic-cond} is true under a smoothed complexity model, where we perturb any function $h_*$ slightly on its degree-1 Fourier coefficients.
\begin{corollary}\label{cor:one-msp}
For any $h : \{+1,-1\}^P \to \R$ with the normalization $\max_{\bz} |h_*(\bz)| \leq 1/(2P)$ and any $0 < \mu' < 1$, there is a $\tilde{h} : \{+1,-1\}^P \to \R$ satisfying the conditions of Proposition~\ref{prop:perturb-linear-deterministic-cond} with $\mu = \mu' / 2^{O(P)}$ and $\tilde{h}(\bz) = h(\bz) + \<\bdelta, \bz\>$ and $\|\bdelta\|_{\infty} \leq \mu'$.
\end{corollary}

\subsubsection{Proof of Proposition~\ref{prop:perturb-linear-deterministic-cond} and Corollary~\ref{cor:one-msp}}

\begin{proof}[Proof of Proposition~\ref{prop:perturb-linear-deterministic-cond}]
Initialize with $\rho_0 = \Unif[-1,1] \otimes \delta_{\bzero}$. Let the regularization parameter on the second layer be $\lambda = \eps \cdot \poly(\mu / 2^{P})$ for some large enough polynomial. Train the first layer with one step of \eqref{eq:bSGD}, i.e., take $k_1 = 1$ with learning rate $\eta = 1$. Train the second layer for $k_2 \geq \poly(2^P \log(1/\delta) / (\eps \mu))$ steps for a large enough polynomial. Let the number of neurons be $N \geq \poly(2^P \log(1/\delta) / (\eps \mu))$ and let the batch size be any $b \geq \poly(2^P \log(1/\delta) / (\eps \mu)) \log(N) d$ for large enough polynomials.

\paragraph{Computing 1 step of dynamics} Let us analyze the \eqref{eq:d-DF-PDE} dynamics for $k_1 = 1$ step. Since $\barbu^0(a) = \bzero$, and $\sigma(0) = 0$, $\sigma'(0) = 1$, there is a simple formula for $\barbu^1(a)$:
\begin{align*}
\barbu^1(a) &= \barbu^0(a)-a \E_{\bz}[(\fNN(\bz;\bar\rho_0) - h_*(\bz)) \sigma'(\<\barbu^0(a),\bz\>)\bz] \\
&= a \E_{\bz}[ h_*(\bz) \sigma'(0)\bz] \\
&= a \E_{\bz}[ h_*(\bz)\bz] \\
&= a \bc,
\end{align*}
where $\bc = [\hat{h}_*(\{1\}),\ldots, \hat{h}_*(\{P\})]$. By the choice of activation, and since $|a| \leq 1$ and $|\<\bc,\bz\>| \leq 1$, we have
\begin{align}\label{eq:plug-into-ad-hoc-activation}
\sigma(\<\barbu^1(a), \bz\>) = a \<\bc, \bz\> + \gamma^2 \sigma_{\mathrm{bump}}(a \<\bc, \bz\>; \alpha,\beta,\gamma),
\end{align}

\paragraph{Definition of events $E_{\bz,j}$ and $F_j$} Write $\bu_j^k = \bu(a_j^0)$ to denote the first-layer weight corresponding to neuron $j$. For any $\bz \in \{+1,-1\}^P$ and $j \in [N]$, define the event
$$E_{\bz,j} = \{a_j^0 \<\bc, \bz\> \in [\alpha+\gamma,\beta-\gamma]\}.$$

Notice that if event $E_{\bz,j}$ holds, then by \eqref{eq:cz-mult-different}, for any $\bz' \neq \bz$ we have $$a_j^0 \<\bc, \bz'\> \not\in [\beta(1-\mu), \alpha(1+\mu)] \supset [\alpha,\beta].$$ So by \eqref{eq:plug-into-ad-hoc-activation} and the definition of $\sigma_{\mathrm{bump}}$, under event $E_{\bz,j}$, we have
\begin{align}
\sigma(\<\barbu_j^1,\bz\>) = a_j^0 \<\bc, \bz\> + \gamma^2, \quad  \mbox{ and } \quad  \sigma(\<\barbu_j^1,\bz'\>) = a_j^0 \<\bc, \bz'\> \mbox{ for all } \bz \neq \bz'.
\end{align}

Also, for each $j \in [N]$ define the event $F_{j}$, which is
\begin{align*}
F_j = \{a_j^0 \geq 1/2\}
\end{align*}

Since $\min |\<\bc,\bz\>| > 2\alpha$ for all $\bc$, under event $F_j$ we have $a_{j}^0 \<\bc,\bz\> \not\in [\alpha,\beta]$. So by \eqref{eq:plug-into-ad-hoc-activation}, under event $F_j$
\begin{align}
\sigma(\<\barbu_j^1,\bz\>) = a_j^0 \<\bc, \bz\> \mbox{ for all $\bz$}.
\end{align}

\paragraph{Lower-bounding the event probabilities}

Recall that $a_j^0 \sim \mathrm{Unif}[-1,1]$. By \eqref{eq:cz-lower-bound}, we know that $\min_{\bz} |\<\bc, \bz\>| \geq \mu > \beta - \gamma$, so for any $\bz, j$
\begin{align}\label{eq:low-bound-prob-in-interval}
\P[E_{\bz,j}] = \frac{\beta - \alpha - 2 \gamma}{2 |\<\bc,\bz\>|} \geq \mu / 2^{4P + 1}.
\end{align}
And clearly since $a_j^0 \sim \mathrm{Unif}[-1,1]$,
\begin{align}
\P[F_j] = 1/4.
\end{align}

Let $E_{good}$ be the event that
\begin{align*}
E_{good} = \{\frac{\min_{\bz} |\{j : E_{\bz,j} \mbox{ holds}\}|}{N} \geq \mu / 2^{4P+2}\} \cap \{\frac{|\{j : F_j \mbox{ holds}\}|}{N} \geq 1/5\}.
\end{align*}

By a Hoeffding bound and a union bound
$$
\P[E_{good}] \geq 1 - \delta, 
$$
as long as $$N \geq 2^{\Omega(P)} \log(1/\delta).$$

\paragraph{Lower-bounding the empirical kernel eigenvalues}
Note that $\sigma$ satisfies $\|\sigma\|_{\infty},\|\sigma'\|_{\infty},\|\sigma''\|_{\infty} \leq O(1)$. Therefore, by Lemma~\ref{lem:reduce-to-cond-discrete}, the final loss is bounded in terms of 
$\lambda_{\min}(\bM \bM^{\top})$, where $\bM$ is the feature matrix matrix at iteration $k_1 = 1$ with entries
$$\bM_{\bz,j} = \frac{1}{\sqrt{N}} \sum_{j=1}^N \sigma(\<\barbu_j^{k_1},\bz\>).$$

Under event $E_{good}$, let us prove that the minimum eigenvalue of the empirical kernel $\bM \bM^{\top}$ is lower-bounded by $\lambda_{\mathrm{\min}}(\bM\bM^{\top}) \geq \mu^5 2^{-100P}$. Consider any test function $\phi : \{+1,-1\}^P \to \R$. Suppose by contradiction that \begin{align}\label{eq:ad-hoc-contradiction}\E_{\bz,\bz'}[\phi(\bz) \{\bM \bM^{\top}\}_{\bz,\bz'} \phi(\bz')] \leq \mu^52^{-100P}\E_{\bz}[\phi(\bz)^2].\end{align} Under event $E_{good}$, we have
\begin{align*}
\E_{\bz,\bz'}[\phi(\bz) \{\bM \bM^{\top}\}_{\bz,\bz'} \phi(\bz')] &\geq \frac{1}{N} \sum_{j : F_j \mbox{holds}} \E_{\bz,\bz'}[\phi(\bz) \sigma(\<\barbu_j^1, \bz\>) \sigma(\<\barbu_j^1, \bz'\>) \phi(\bz')] \\
&= \frac{1}{N} \sum_{j : F_j \mbox{holds}} \E_{\bz,\bz'}[\phi(\bz) (a_j^0)^2 \<\bc,\bz\>\<\bc,\bz'\> \phi(\bz')] \\
&\geq \frac{1}{20} \E_{\bz}[\phi(\bz) \<\bc,\bz\>]^2.
\end{align*}
In order to avoid contradiction with \eqref{eq:ad-hoc-contradiction} it follows that
\begin{align}\label{eq:ad-hoc-linear-upper-bound}
\E_{\bz}[\phi(\bz) \<\bc,\bz\>]^2 \leq 20 \mu^5 \cdot 2^{-100P}\E_{\bz}[\phi(\bz)^2]
\end{align}
Similarly, for any $\bz'' \in \{+1,-1\}^P$
\begin{align*}
\E_{\bz,\bz'}&[\phi(\bz) \{\bM \bM^{\top}\}_{\bz,\bz'} \phi(\bz')] \geq \frac{1}{N} \sum_{j : E_{\bz'',j} \mbox{holds}} \E_{\bz,\bz'}[\phi(\bz) \sigma(\<\barbu_j^1, \bz\>) \sigma(\<\barbu_j^1, \bz'\>) \phi(\bz')] \\
&\geq \frac{1}{N} \sum_{j : E_{\bz'',j} \mbox{holds}} \E_{\bz,\bz'}[\phi(\bz) (a_j^0 \<\bc,\bz\> + \gamma^2 1(\bz = \bz'')) (a_j^0 \<\bc,\bz'\> + \gamma^2 1(\bz' = \bz'')) \phi(\bz')] \\
&\geq \frac{1}{N} \sum_{j : E_{\bz'',j} \mbox{holds}} \gamma^4 \phi(\bz'') \phi(\bz'') - 2\gamma^2|\E_{\bz}[\phi(\bz)\<\bc,\bz\>\phi(\bz'')]| - \E_{\bz}[\phi(\bz) \<\bc,\bz\>]^2 \\
&\geq \mu 2^{-4P-2} (\gamma^4 \phi(\bz'') \phi(\bz'') - 2\gamma^2|\E_{\bz}[\phi(\bz)\<\bc,\bz\>\phi(\bz'')]| - \E_{\bz}[\phi(\bz) \<\bc,\bz\>]^2) \\
&\geq \mu 2^{-4P-2} (\gamma^4 \phi(\bz'')^2 - 2\gamma^2|\phi(\bz'')|\sqrt{20 \cdot \mu^52^{-100P} \E_{\bz}[\phi(\bz)^2]} - 20 \cdot  \mu^52^{-100P} \E_{\bz}[\phi(\bz)^2]),
\end{align*}
where in the last line we use \eqref{eq:ad-hoc-linear-upper-bound}.
So in order to avoid contradiction with \eqref{eq:ad-hoc-contradiction} we conclude that
\begin{align*}
\phi(\bz'')^2 \leq \mu^5\frac{2^{-40P}}{\mu \gamma^4}\E_{\bz}[\phi(\bz)^2].
\end{align*}
However, since $\mu^5 2^{-40P} / (\mu \gamma^4) = 2^{-8P} < 1$, we get a contradiction by taking $\bz'' = \arg\max_{\bz} \phi(\bz)^2$. Therefore, we conclude that $\lambda_{\mathrm{min}}(\{\bM \bM^{\top}\}_{\bz,\bz'}) \geq \mu^5 2^{-100P}$. Plugging this into the guarantees for the linear regression of the last layer (Lemma~\ref{lem:reduce-to-cond-discrete}) concludes the proof.

\end{proof}

\begin{proof}[Proof of Corollary~\ref{cor:one-msp}]
We prove that such a $\bdelta$ exists by the probabilistic method. Let $\bc = \E_{\bz}[h(\bz)\bz]$ and let $\tilde{\bc} = \E_{\bz}[\tilde{h}(\bz)\bz] = \bc + \bdelta$. If we take random $\bdelta \sim \Unif([0,\mu'])^{\otimes P}$ then for any distinct pair $\bz,\bz'$ such that without loss of generality $z_1 = 1 \neq -1 = z'_1$, we have \begin{align*}\P_{\bdelta}[|\<\tilde{\bc}, \bz - \bz'\>| \leq \mu' 2^{-3P}] &= \P_{\bdelta}[|\<\bc,\bz-\bz'\> + \<\bdelta,\bz-\bz'\>| \leq \mu' 2^{-3P}] \\
&= \P_{\bdelta}[\delta (z_1 - z_1') \in \<\bc,\bz'-\bz\> + \<\bdelta,\bz'-\bz\> + [-\mu' 2^{-3P},\mu' 2^{-3P}]] \\
&= \P_{\bdelta}[2\delta_1 \in \<\bc,\bz'-\bz\> + \<\bdelta_{-1},\bz'_{-1}-\bz_{-1}\> + [-\mu' 2^{-3P},\mu' 2^{-3P}]] \\
&\leq 2^{-3P+1}.\end{align*}
Similarly, for any $\bz$,
\begin{align*}
\P_{\bdelta}[|\<\tilde{\bc},\bz\>| \leq \mu' 2^{-3P}] \leq 2^{-3P+1}.
\end{align*}

Therefore, taking a union bound over the $\binom{2^P}{2} < 2^{2P-1}$ distinct pairs $\bz,\bz'$, we have $$\P_{\bdelta}[|\<\tilde{\bc},\bz\>| \geq \mu' 2^{-3P} \mbox{ and } |\<\tilde{\bc},\bz - \bz'\>| \geq \mu' 2^{-3P} \mbox{ for all } \bz \neq \bz'] > 1 - (2^{2P-1} + 2^P)(2^{-3P+1}) \geq 0,$$ so by the probabilistic method a deterministic choice of $\bdelta$ satisfying $|\<\tilde{\bc}, \bz\>| \geq \mu' 2^{-3P}$ and $|\<\tilde{\bc}, \bz\> - \<\tilde{\bc}, \bz\>| \geq \mu' 2^{-3P}$ for all $\bz \neq \bz'$ exists. These conditions are sufficient to satisfy the conditions of Proposition~\ref{prop:perturb-linear-deterministic-cond} with $\mu = \mu' / 2^{O(P)}$.
\end{proof}

\clearpage

\section{Lower bounds on learning with linear methods}

We recall the general definition for linear methods from Section \ref{sec:linear-methods} in the main text. Given a Hilbert space $(\cH , \< \cdot, \cdot \>_{\cH} )$, a feature map $\psi : \{+1 , -1 \}^d \to \cH$, an empirical loss function $L:\R^{2n} \to  \R \cup {\infty}$ and a regularization parameter $\lambda >0$, a linear method construct from data points $(y_i , \bx_i )_{i \in [n]}$ a prediction model $\hf( \bx) := \< \hba , \psi (\bx ) \>_{\cH}$ where $\hba \in \cH$ is obtained by minimizing a regularized empirical risk functional 
\begin{equation}\label{eq:linear_method_bis}
\hba = \argmin_{\ba \in \cH} \Big[ L \big((y_i, \< \ba , \psi (\bx_i) \>)_{i\in [n]}\big) + \lambda \| \ba \|_{\cH}^2 \Big]\, .
\end{equation}
Recall that we denote $q = \dim  (\cH)$.

\begin{example}
Popular examples of linear methods include
\begin{itemize}
    \item[(a)] \textit{Random Feature models:} take $(\bw_i )_{i \in [N]} \sim_{iid} \tau$, $\bw_i \in \cV$, and an activation $\phi : \cX \times \cV \to \R$, then the Hilbert space and the feature map are defined by $\cH = \spn \{ \phi (\cdot ; \bw_i ) : i \in [N] \}$ and $\psi ( \bx) = ( \phi( \bx ; \bw_1) , \ldots , \phi ( \bx ; \bw_N ) )$. For generic examples, we have $q = \dim (\cH) =N$ almost surely.

    \item[(b)] \textit{Kernel methods:} take $\cH$ a reproducing kernel Hilbert space (RKHS) with reproducing kernel $K : \cX \times \cX \to \R$. There exists a Hilbert space $(\cF , \< \cdot, \cdot \>_{\cF})$ (the feature space) and a feature map $\psi : \cX \to \cF$ such that $K ( \bx_1 , \bx_2 ) = \< \psi(\bx_1) , \psi (\bx_2) \>_{\cF}$ and $\cH = \{ \< \ba , \psi (\cdot ) \>_{\cF}: \ba \in \cF \}$. We have typically $q = \dim (\cH) = \infty$.
\end{itemize}
Ridge regression corresponds to taking the functional: $L\big((y_i, \hf_i )_{i\in[n]}\big) = \frac{1}{n} \sum_{i\in [n]} \big(y_i - \hf_i \big)^2 $.
\end{example}

We will be interested in providing lower bounds on the number of samples necessary to learn some classes of functions for any linear methods. We first present the following general dimension-based (see discussion bellow) approximation lower bound that is a slight variation of \cite{hsu2021approximation,hsudimension,kamath_dim}; it improves on \cite{hsu2021approximation,hsudimension} for target functions that are not (almost) orthogonal, and it uses the operator norm of the gram matrix rather than its min-eigenvalue as in \cite{kamath_dim}.

\begin{proposition}[Dimension lower bound]\label{prop:dimension_lower_bound} 
Let $\cR$ be a Hilbert space with inner product denoted by $\< \cdot, \cdot \>_\cR$. Fix $\cF = \{f_1 , \ldots , f_M \} \subset \cR$ a set of target functions with $\| f_i \|_{\cR}^2 = \< f_i , f_i \>_{\cR} = 1 $ for all $i \in [M]$. Let $\cT$ be a (potentially random) finite-dimensional subspace of $\cR$, with $r = \E_{\cT} [ \dim (\cT) ] < \infty$.

Define the average\footnote{This is a  lower-bound on the worst-case approximation error considered in \cite{kamath_dim}.} approximation error of the target functions $\cF$ by the subspace $\cT$ 
\[
\eps := \frac{1}{M} \sum_{i\in [M]} \E_{\cT} \Big[ \inf_{g \in \cT} \| g - f_i \|_{\cR}^2 \Big] \, ,
\]
and  $\bG = ( \< f_i , f_j \>_{\cR} )_{ij \in [M]}$ the Gram matrix associated to the $f_i$'s. Then
\begin{equation}\label{eq:dim_r_app}
    r \geq \frac{M}{\| \bG \|_{\op}} (1 - \eps) \, .
\end{equation}
\end{proposition}

Note that the results in \cite{hsu2021approximation,hsudimension} are simply obtained by using 
\[
\| \bG \|_{\op} \leq \| \id \|_{\op} + \| \bG - \id \|_{\op} \leq 1 + \| \bG - \id \|_F = 1 + \sqrt{ \sum_{i\neq j} \< f_i , f_j \>^2_{\cR} }\, .
\]

\begin{proof}[Proof of Proposition \ref{prop:dimension_lower_bound}]
In the proofs in \cite{hsu2021approximation,hsudimension}, we simply replace the Boas-Bellman inequality by (for any $g\in \cR$)
\begin{equation}\label{eq:altern_Boas}
\begin{aligned}
\sum_{i \in [M]} \< g , f_i \>_{\cR}^2 =&~ \Big\< g , \sum_{i \in [M]}  \< g, f_i \>_{\cR} f_i  \Big\>_{\cR} \\
\leq &~ \| g \|_{\cR} \Big( \sum_{ij \in [M]} \< g, f_i \>_{\cR} \< g, f_j \>_{\cR} \< f_i, f_j \>_{\cR} \Big)^{1/2} \\
= &~ \| g \|_{\cR} \big( \bb^\sT \bG \bb \big)^{1/2} \leq \| g \|_{\cR} \| \bG \|_{\op}^{1/2} \| \bb \|_2 \, ,
\end{aligned}
\end{equation}
where we denoted $\bb = (\< g, f_1 \>_{\cR} , \ldots , \< g, f_M \>_{\cR} )$. Noticing that $\| \bb \|_2^2$ is equal to the left-hand side of Eq.~\eqref{eq:altern_Boas}, we get
\[
\sum_{i \in [M]} \< g , f_i \>_{\cR}^2 \leq \| \bG \|_{\op} \| g \|_{\cR}^2 \, ,
\]
which together with $\|g\|_{\cR} \leq \frac{1}{M} \sum_{i=1}^M ||f_i||_{\cR} \leq 1$ yields the improved bound \eqref{eq:dim_r_app}.
\end{proof}

Let us explain how to derive lower-bounds on the performance of linear methods using Proposition \ref{prop:dimension_lower_bound}. Consider $\cR = L^2 (\cX)$ and $\cT$ the space of functions $f = \< \ba , \psi ( \cdot ) \>_{\cF}$ with $\ba \in \spn \{ \psi (\bx_i) : i \in [n] \}$. We can consider $\cT$ random or fixed conditional on $\psi$ (e.g., random feature map) and the $\bx_i$'s. We always have $r \leq \min (p,n)$. Consider learning a set of $M$ functions $\cF = \{ f_1 , \ldots , f_M \}$ with the linear estimator obtained by \eqref{eq:linear_method_bis}. From the above discussion, we must have that the estimator $\hf \in \cT$ and the generalization error is lower bounded by the approximation error $\| f_i - \hf \|_{L^2}^2 \geq \inf_{g \in \cT} \| f_i - g \|_{L^2}^2$. Therefore $\eps$ lower bound the average generalization error over learning $\cF$. Therefore, Proposition \ref{prop:dimension_lower_bound} implies the following: if the average generalization error over $\cF$ is less than $\eps$, then we must have
\begin{equation}\label{eq:dimBound1}
\min(n,q) \geq \frac{M}{\| \bG \|_{\op}} (1 - \eps) \, .
\end{equation}
This bound is a dimension lower bound in the sense that it does not assume anything about the statistical model (e.g., the $\bx_i$ can be arbitrary and do not have to be independent), only that the estimator lies in a $\min (n,q)$-dimensional subspace $\cT$: this subspace can be a good approximation of $M$ orthogonal functions only if $\min (n,q) \geq \Omega(M)$.

To get Proposition \ref{prop:dimens-lower-bound} in the main text, we make the following two modifications of the bound in Proposition \ref{prop:dimension_lower_bound}.
In Eq.~\eqref{eq:dimBound1}, we upper bound $\| \bG \|_{\op} \leq \| \bG \|_{1,\infty} = \max_{i \in [M]} \sum_{j \in [M]} | \< f_i , f_j \>_{\cR} | $. Second, some linear subspaces $\Omega \subseteq \cR$ are harder to fit for linear methods (see for example \cite{ghorbani2021linearized,mei2021generalization}). For instance, vanilla staircase functions of large degree contain monomials of large degree that have a large dimension lower-bound, but the overall staircase functions do not have a large dimension lower-bound per se. 
We next present a corollary that applies to any decomposition $\cR=\Omega \oplus \Omega^{\perp}$, and distinguishes the error incurred on each of the two orthogonal subspaces. Denote $\proj_{\Omega}$ and $\proj_{\Omega^{\perp}} = \id - \proj_{\Omega}$ the orthogonal projections onto $\Omega$ and $\Omega^\perp$ respectively.

\begin{corollary}\label{cor:dimension_lower_bound} 
Let $\cR$ be a Hilbert space with inner product denoted by $\< \cdot, \cdot \>_\cR$ and $\Omega$ a linear subspace of $\cR$. Fix $\cF = \{f_1 , \ldots , f_M \} \subset \cR$ a set of target functions with $\| \proj_{\Omega} f \|_{\cR}^2 = 1$ for all $f \in \cF$. Define $\E_{\cF}$ the expectation over $f \sim \Unif (\cF)$. Let $\cT$ be a (potentially random) finite-dimensional subspace of $\cR$, with $r = \E_{\cT} [ \dim (\cT \cap \Omega) ] < \infty$.

Define the average approximation error on $\Omega$ of the target functions $\cF$ by the subspace $\cT$ 
\[
\eps  :=\frac{1}{M} \sum_{i\in [M]} \E_{\cT} \Big[ \inf_{g \in \cT} \| \proj_{\Omega} (g - f_i) \|_{\cR}^2 \Big] \, ,
\]
Then
\begin{equation}
    r \geq \frac{1 - \eps}{\max_{i \in [M]}\frac{1}{M} \sum_{j \in [M]}  | \< f_i , \proj_{\Omega} f_j \>_{\cR} |  }\, .
\end{equation}
\end{corollary}

This is a direct consequence of Proposition \ref{prop:dimension_lower_bound} whith $\cR$ and $\cT$ replaced by $\Omega$ and $\cT \cap \Omega$, and the target functions by $\cF ' = \{ \proj_{\Omega} f_1 , \ldots , \proj_{\Omega} f_M \}$. Proposition \ref{prop:dimens-lower-bound} in the main text is simply Corollary \ref{cor:dimension_lower_bound} rewritten in the context of linear methods.

Consider a set  of target functions $\cF$ such that $\| \proj_{\Omega} f \|_{L^2}^2 = 1 - \kappa $ and $\| (\id - \proj_{\Omega} ) f \|_{L^2}^2 = \kappa $ for any $f \in \cF$. If the averaged generalization error is less than $1 - u$, we can take $\eps = (1- u)/(1 - \kappa)$ and get
\begin{equation}\label{eq:dimBound}
\min(n,q) \geq \frac{u - \kappa}{\max_{i \in [M]}\frac{1}{M} \sum_{j \in [M]}  | \< f_i , \proj_{\Omega} f_j \>_{\cR} |  } \, .
\end{equation}

Let us apply this bound to the examples described in the main text. We take $\cX = \{+1 , -1 \}^d$. First consider $\Omega$ the span of all degree $k$ monomials and a target function $f_*$ such that $\| \proj_{\Omega} f_* \|_{L^2}^2 = \| \proj_{\Omega^\perp} f_* \|_{L^2}^2 = \frac{1}{2}$, and $\proj_{\Omega} f_*$ is supported on $m$ monomials $\{S_1 , \ldots , S_m \}$, with $S_i \subseteq [d]$, $|S_i| = k$:
\[
\proj_{\Omega} f_*  ( \bx) =  \sum_{ i \in [m]} \alpha_{S_i} \chi_{S_i} ( \bx) \, .
\]
We consider the class of functions 
\[
\cF_* = \Big\{ f_* ( \tau ( \bx) )  : \tau \in \Pi (d) \Big\} \, .
\]
$\cF_*$ is the smallest class of functions containing $f_*$ that is invariant under a permutation of the input coordinates. The generalization error $\E_{\cF_*} \big[ \| f - \hf \|_{L^2}^2 \big] $ corresponds to the test error with uniform prior distribution over all permutation of the input space. Note that any method that is equivariant with respect to permutations (e.g., kernel methods with inner-product kernel) will have the same generalization error $\E_{\cT} \big[ \| f\circ \tau - \hf \|_{L^2}^2 \big] $ for any $\tau \in \Pi (d)$.

Applying Eq.~\eqref{eq:dimBound}, we obtain the following lower bound:

\begin{proposition}\label{prop:F1}
For any linear method, in order to get an average generalization error over $\cF_*$ that is smaller than $1/2 \cdot (1 - \eta)$, we must have
\[
\min(n,q) \geq \frac{\eta}{m}{{d}\choose{k}} \, .
\]
\end{proposition}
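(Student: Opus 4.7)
\textbf{Plan for the proof of Proposition \ref{prop:F1}.} The plan is to apply Proposition \ref{prop:dimens-lower-bound} to the class $\cF_*$ with $\Omega$ equal to the span of degree-$k$ monomials on $\{+1,-1\}^d$. Since by hypothesis $\|\proj_\Omega f\|_{L^2}^2 = \|\proj_\Omega^\perp f\|_{L^2}^2 = 1/2$ for every $f \in \cF_*$, we take $\kappa = 1/2$. The assumption that the average prediction error is at most $\tfrac{1}{2}(1-\eta)$ rewrites as $\oR_n(\cF_*) \le 1-\eta'$ with $\eta' = \tfrac{1}{2} + \tfrac{\eta}{2}$, so that $\eta'-\kappa = \eta/2$. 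After these translations, everything reduces to upper bounding the inner-product quantity
\[
D := \max_{i \in [M]} \frac{1}{M} \sum_{j \in [M]} \big\vert \< f_i, \proj_{\Omega} f_j \>_{L^2} \big\vert,
\]
and then plugging into the bound of Proposition \ref{prop:dimens-lower-bound}.

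To control $D$, I would first exploit the permutation-invariance of $\cF_*$: since $\cF_* = \{f_*\circ\tau : \tau \in \Pi(d)\}$, the inner sum $\frac{1}{M}\sum_j |\<f_i,\proj_\Omega f_j\>|$ does not depend on $i$, and it equals
\[
\frac{1}{d!}\sum_{\tau\in\Pi(d)} \big\vert \<\proj_\Omega f_*, \proj_\Omega(f_*\circ\tau)\>_{L^2} \big\vert
\]
(with each distinct function in $\cF_*$ counted the same number of times, so the multiplicities cancel). Expanding $\proj_\Omega f_* = \sum_{l \in [m]} \alpha_{S_l}\chi_{S_l}$, we get $\proj_\Omega(f_*\circ\tau) = \sum_{l}\alpha_{S_l}\chi_{\tau(S_l)}$, whence
\[
\big\vert \<\proj_\Omega f_*, \proj_\Omega(f_*\circ\tau)\>\big\vert \le \sum_{l,l' \in [m]} |\alpha_{S_l}\alpha_{S_{l'}}|\,\ind\{\tau(S_l) = S_{l'}\}.
\]
Averaging over $\tau \in \Pi(d)$ and using that exactly $k!(d-k)!$ permutations map a given $k$-subset to another, the indicator averages to $1/\binom{d}{k}$.

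Combining these observations gives
\[
D \le \frac{1}{\binom{d}{k}}\Big(\sum_{l\in [m]} |\alpha_{S_l}|\Big)^{2} \le \frac{m}{\binom{d}{k}}\sum_{l\in [m]} \alpha_{S_l}^2 = \frac{m/2}{\binom{d}{k}},
\]
where the middle step is Cauchy--Schwarz and the last step uses $\sum_l \alpha_{S_l}^2 = \|\proj_\Omega f_*\|_{L^2}^2 = 1/2$. Proposition \ref{prop:dimens-lower-bound} then yields
\[
\min(n,q) \;\ge\; \frac{\eta' - \kappa}{D} \;\ge\; \frac{\eta/2}{m/(2\binom{d}{k})} \;=\; \frac{\eta}{m}\binom{d}{k},
\]
which is the claim. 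The only delicate point is the careful accounting of multiplicities in the sum over $\cF_*$ when $f_*$ has a non-trivial stabilizer under coordinate permutations, but this does not affect the bound since every distinct $f\in\cF_*$ is counted with the same multiplicity $|\mathrm{Stab}(f_*)|$, which cancels. No true obstacle is expected; the proof is essentially a careful bookkeeping on top of the ready-made Proposition \ref{prop:dimens-lower-bound}.
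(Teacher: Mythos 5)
Your proof is correct and follows essentially the same route as the paper: apply Proposition~\ref{prop:dimens-lower-bound} with $\Omega$ the span of degree-$k$ monomials and $\kappa=1/2$, compute the averaged inner product by noting that a uniformly random permutation maps one $k$-subset to another with probability $k!(d-k)!/d! = 1/\binom{d}{k}$, and finish with Cauchy--Schwarz to get $m/(2\binom{d}{k})$. Your write-up is actually a bit more careful than the paper's (the paper tacitly drops the absolute value before expanding, which is fine only after an implicit triangle inequality, and it doesn't spell out the stabilizer/multiplicity bookkeeping), but the key steps and the resulting bound are identical.
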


\begin{proof}[Proof of Proposition \ref{prop:F1}]
Fix $\tau_1 \in \Pi (d)$. We have
\[
\begin{aligned}
\E_{\tau_2} \big[ | \<  f_* \circ \tau_1 , \proj_{\Omega} f_* \circ \tau_2 \>_{L^2 } | \big] = &~ \sum_{ij \in [m]} \alpha_{S_i} \alpha_{S_j} \E_{\tau_2} \big[ \< \chi_{S_i} \circ \tau_1 , \chi_{S_j} \circ \tau_2 \>_{L^2} \big] \\
=&~  \sum_{ij \in [m]} \alpha_{S_i} \alpha_{S_j} \cdot \frac{k! (d - k)!}{d!}\, \\
\leq&~ m \frac{k! (d - k)!}{d!}\sum_{i \in [m]} \alpha_{S_i}^2  = \frac{m}{2}\frac{k! (d - k)!}{d!} \, .
\end{aligned}
\]
We can then apply Eq.~\eqref{eq:dimBound} with $u := 1/2 \cdot (1 + \eta)$ and $\kappa = 1/2$.
\end{proof}

Proposition \ref{prop:F1} shows that for $k$ fixed, $n = \Omega_d (d^k / m)$ samples are necessary to learn $\cF_*$. 

As a second example, consider the vanilla staircase function of degree $P$: 
\[
f_P ( \bx) = \frac{1}{\sqrt{P}} \sum_{i = 1}^P x_1 \cdots x_i \, ,
\]
and the function class of all staircase functions of degree $P$:
\[
\cF_P = \Big\{ f_P ( \tau ( \bx) )  : \tau \in \Pi (d) \Big\} \, .
\]

\begin{proposition}\label{prop:FP}
Let $P  \leq d/ 2$ and $\eta \in (0,1)$. For any linear method, in order to get an average generalization error over $\cF_P$ that is smaller than $1 - \eta$, we must have
\[
\min(n,q) \geq \frac{\eta}{2}{{d}\choose{\lfloor \frac{\eta P}{2} \rfloor}} \, .
\]
\end{proposition}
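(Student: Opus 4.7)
}

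The plan is to apply Proposition~\ref{prop:dimens-lower-bound} with the subspace $\Omega = \Omega_k \subseteq L^2(\{+1,-1\}^d)$ spanned by all monomials $\chi_S$ with $|S| \geq k$, where $k = \lfloor \eta P/2 \rfloor$. This is the natural choice because a linear method has no trouble fitting the low-degree portion of a staircase, so the hardness must come from the high-degree monomials appearing in $f_P$; by projecting them out of the residual we can isolate the genuinely hard part of the problem while sacrificing only a controlled amount of signal.

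First I would verify the normalization. Since the monomials $\chi_{[1]}, \chi_{[2]}, \ldots, \chi_{[P]}$ appearing in $f_P$ are orthonormal in $L^2$, we have $\|f\|_{L^2}^2 = 1$ for every $f \in \cF_P$, and $\|\proj_{\Omega_k} f_P\|_{L^2}^2 = (P-k+1)/P$, so the value of $\kappa$ in Proposition~\ref{prop:dimens-lower-bound} is $\kappa = (k-1)/P \leq \eta/2$. This is the step where the choice $k = \lfloor \eta P/2 \rfloor$ is calibrated so that $\eta - \kappa \geq \eta/2$ in the numerator of the dimension bound.

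Next I would evaluate the denominator. For any two permutations $\tau_1, \tau_2 \in \Pi(d)$, orthonormality of Fourier monomials gives
\[
\< f_P \circ \tau_1, \proj_{\Omega_k}(f_P \circ \tau_2)\>_{L^2} = \frac{1}{P}\sum_{i=k}^{P}\ind\big[\tau_1([i]) = \tau_2([i])\big],
\]
where $\tau([i])$ denotes the image set $\{\tau(1),\ldots,\tau(i)\}$. In particular this inner product is nonnegative, so the absolute value is redundant, and by permutation-invariance of $\cF_P$ the quantity $\frac{1}{M}\sum_{j}|\< f_i, \proj_{\Omega_k} f_j\>|$ is independent of $i$ and equals the expectation over $\tau_2 \sim \mathrm{Unif}(\Pi(d))$ with $\tau_1$ fixed. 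A direct count gives $\P[\tau_2([i]) = \tau_1([i])] = 1/\binom{d}{i}$, so the denominator equals $\frac{1}{P}\sum_{i=k}^{P}\binom{d}{i}^{-1}$. Using $P \leq d/2$, the binomial coefficients $\binom{d}{i}$ are nondecreasing on $i \in [k,P]$, hence this average is at most $\binom{d}{k}^{-1}$.

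Combining these two computations in Proposition~\ref{prop:dimens-lower-bound} yields
\[
\min(n,q) \geq \frac{\eta - \kappa}{\binom{d}{k}^{-1}} \geq \frac{\eta}{2}\binom{d}{\lfloor \eta P/2\rfloor},
\]
as desired. The only step requiring genuine care is the denominator computation: one must keep track that inner products between permuted staircases can only be nonzero when the sets $\tau_1([i])$ and $\tau_2([j])$ coincide (which forces $i=j$), and that the resulting sum can be controlled via monotonicity of $\binom{d}{i}$ under the hypothesis $P \leq d/2$. The rest is bookkeeping.
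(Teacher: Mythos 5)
Your proof is correct and follows essentially the same route as the paper's: you project onto monomials of degree at least $k=\lfloor \eta P/2\rfloor$, compute $\kappa=(k-1)/P$, and bound the average Gram entry via $\P[\tau_2([i])=\tau_1([i])]=1/\binom{d}{i}$ and monotonicity of $\binom{d}{i}$ on $i\leq d/2$. The only difference is that you spell out the monotonicity step that the paper leaves implicit when bounding $\frac{1}{P}\sum_{i=k}^P\binom{d}{i}^{-1}\leq \binom{d}{k}^{-1}$; the substance is identical.
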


\begin{proof}[Proof of Proposition \ref{prop:FP}]
Denote now $S_i = \{1, 2 , \ldots , i \}$ and $\proj_{\ell}$ the projection on every monomials of degree at least $\ell$. Notice that $\| \proj_{\ell} f_P \|_{L^2}^2 = 1 - \frac{\ell -1}{P} $ and $\| (\id - \proj_{\ell} ) f_P \|_{L^2}^2 = \frac{\ell -1}{P} $.

Fix $\tau_1 \in \Pi (d)$. We have
\[
\begin{aligned}
\E_{\tau_2} \big[ | \<  f_P \circ \tau_1 , \proj_{\ell} f_P \circ \tau_2 \>_{L^2 } | \big] =&~ \frac{1}{P} \sum_{i = \ell}^P \E_{\tau_2} \big[ \< \chi_{S_i} \circ \tau_1 , \chi_{S_i} \circ \tau_2 \>_{L^2} \big] \\
=&~ \frac{1}{P } \sum_{i = \ell}^P \frac{i! (d - i)!}{d!} \leq  \frac{1}{{{d}\choose{\ell}}}\, .
\end{aligned}
\]
The proposition follows by taking $\ell = \lfloor \frac{\eta P}{2 } \rfloor$ and applying Eq.~\eqref{eq:dimBound} with $\proj_{\ell}$, $u = \eta$ and $\kappa = \frac{\ell -1 }{P}$.
\end{proof}

In our case, we are interested in $P = \omega_d(1)$. Letting $\eta$ decay at moderate rate, such as $\eta =1 / \sqrt{P}$ in Proposition \ref{prop:FP}, we get the following superpolynomial lower bound on the number of samples $n \geq d^{\omega_d ( 1 )}$.

\clearpage

\section{Technical results}

In this appendix, we gather a few technical results needed to prove the main results in this paper.

\subsection{Bound between batch-SGD and discrete mean-field dynamics} 
\label{app:discrete-prop-of-chaos}

While the results in \cite{mei2019mean} are written for continuous-time dynamics, we note that their proof can be easily adapted to the discrete-time regime, as described in Appendix \ref{app:strong-discrete}. More precisely, following the notations in \cite{mei2019mean}, we compare the solution $(\bTheta^k)_{k \in \naturals}$ of batch-SGD:
\begin{equation}\label{eq:disc-dyn1}
    \btheta_j^{k+1} = \btheta_j^k + \frac{1}{b} \sum_{i \in [b]} \{ y_{ki} - \hat f_{\NN} ( \bx_{ki} ; \bTheta^k ) \} \cdot \bH_k  \nabla_{\btheta} \sigma_* ( \bx_{ki} ; \btheta )  - \bH_k \bLambda \btheta_j^k\, ,
\end{equation}
to the solution $(\rho_k)_{k \in \naturals}$ of the discrete mean-field dynamics:
\begin{equation}\label{eq:disc-dyn2}
    \btheta^{k+1} = \btheta^k + E_{(y,\bx)} \big[ \{ y - \hat f_{\NN} ( \bx ; \rho_k ) \}  \bH_k  \nabla_{\btheta} \sigma_* ( \bx ; \btheta ) \big]  - \bH_k \bLambda \btheta^k\, .
\end{equation}

We consider the same assumptions as \cite[Theorem 1]{mei2019mean}, with the difference that $\rA 1$ is replaced by $\rA1' : \| \bLambda \|_{\op} , \| \bH_k \|_{\op} \leq K$.

\begin{proposition}[Discrete propagation-of-chaos]\label{prop:discrete_prop} Assume that conditions $\rA1',\rA2$-$\rA4$ in \cite{mei2019mean} hold and let $k_0 \in \naturals$. There exists a constant $K$ depending only on the constants in the assumptions (in particular independent of $d,k_0$) such that:

\begin{itemize}
    \item[(A)] \emph{Fixed second-layer:} \[
    \sup_{k = 0, \ldots , k_0 } \big\Vert \hat f_{\NN} (\cdot; \bTheta^k ) - \hat f_{\NN} (\cdot; \rho_{k} ) \big\Vert_{L^2} \leq K e^{K k_0} \left\{ \frac{\sqrt{\log N} + z}{\sqrt{N}} +  \frac{\sqrt{d+\log N}+z}{\sqrt{b}}  \right\} \, ,
    \]
    with probability at least $1 - e^{-z^2}$.
    
        \item[(B)] \emph{Training both layers:} \[
    \sup_{k = 0, \ldots , k_0 } \big\Vert \hat f_{\NN} (\cdot; \bTheta^k ) - \hat f_{\NN} (\cdot; \rho_{k} ) \big\Vert_{L^2} \leq K e^{e^{K k_0}} \left\{ \frac{\sqrt{\log N} + z}{\sqrt{N}} +  \frac{\sqrt{d+\log N}+z}{\sqrt{b}}  \right\} \, ,
    \]
    with probability at least $1 - e^{-z^2}$.
\end{itemize}

\end{proposition}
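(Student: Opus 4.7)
The plan is to adapt the three-step coupling argument of \cite{mei2019mean} to the discrete-time regime. The key tool is a coupling between three systems on the same initialization $(\btheta_j^0)_{j\in[N]}\sim_{iid}\rho_0$: (i) the batch-SGD iterates $(\btheta_j^k)_{j\in[N]}$; (ii) an intermediate $N$-particle ``noiseless'' dynamics $(\tilde\btheta_j^k)_{j\in[N]}$ that replaces the mini-batch average by the full expectation $\E_{(\bx,y)}$ but still couples the particles through the empirical measure $\hat\rho_k^{(N)}=N^{-1}\sum_j \delta_{\tilde\btheta_j^k}$; and (iii) the iid system $(\bar\btheta_j^k)_{j\in[N]}$ where each particle evolves independently according to the discrete mean-field recurrence with its own law $\rho_k$. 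The goal is to bound the maximum particle displacement $\Delta_k:=\max_j\|\btheta_j^k-\bar\btheta_j^k\|$, then translate this into an $L^2$ bound on $\hf_{\NN}(\cdot;\bTheta^k)-\hf_{\NN}(\cdot;\rho_k)$ via Lipschitzness of $\sigma$.

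The first step compares (i) to (ii). Conditional on $\bTheta^k$, the batch term $\frac{1}{b}\sum_{i\in[b]}\{y_{ki}-\hf_{\NN}(\bx_{ki};\bTheta^k)\}\nabla_{\btheta}\sigma_*(\bx_{ki};\btheta_j^k)$ is an unbiased estimator of the population drift, and by Assumption $\rA 2$ each summand is sub-Gaussian in $(\bx,y)$. Azuma-Hoeffding on $\R^{d+1}$ combined with a union bound over the $N$ particles yields per-step deviation at most $K\sqrt{(d+\log N)/b}+Kz/\sqrt{b}$ with probability $1-e^{-z^2}$; this mirrors Appendix C.5 of \cite{mei2019mean} with the $1/\sqrt{n}$ factor replaced by $1/\sqrt{b}$. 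The second step compares (ii) to (iii): for iid $\bar\btheta_j^k\sim\rho_k$ the empirical neural network $\hf_{\NN}(\cdot;\hat\rho_k^{(N)})-\hf_{\NN}(\cdot;\rho_k)$ is sub-Gaussian with variance $O(1/N)$, so the drift mismatch induced by the particle approximation has size $O(\sqrt{\log N/N})$ per step after a union bound. In both steps the per-step error is pushed forward by discrete Grönwall: writing $L_k$ for the Lipschitz constant of the time-$k$ drift map, one obtains $\Delta_{k+1}\le(1+K L_k)\Delta_k+\text{(per-step noise)}$, so accumulation over $k_0$ steps gives $\Delta_{k_0}\lesssim\exp(K\sum_{k<k_0}L_k)\cdot[\sqrt{\log N/N}+\sqrt{(d+\log N)/b}]$.

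Case (A) (fixed second layer) gives $L_k\le K$ uniformly because the $a_j^k=a_j^0$ are bounded by Assumption $\rA 2$, so $\sum_{k<k_0}L_k\le Kk_0$ and one recovers the single-exponential rate $e^{Kk_0}$. Case (B) is where the main obstacle lies: once both layers are trained, the particle-level bounds on $|a_j^k|$ and $\|\bw_j^k\|$ themselves must be controlled. Mimicking the discrete analogues of Lemmas~\ref{lem:bound_evoluation_a} and \ref{lem:bound_thirdMoment_evolution} (using the fact that the population risk $R(\rho_k)$ is bounded and that each step changes $(a,\bw)$ by at most $K(1+\|\bw\|)$), one gets $\max_j(|a_j^k|\vee\|\bw_j^k\|)\le Ke^{Kk_0}$ by discrete Grönwall, which then forces $L_k\le Ke^{Kk_0}$. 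Plugging this into the outer Grönwall yields the double-exponential rate $e^{e^{Kk_0}}$. The most delicate point is that with $\eta=\Theta(1)$ there is no smoothing between successive iterations as in the continuous case, so one must (a) show a priori boundedness of the iterates before running the coupling argument, and (b) apply the concentration inequalities uniformly over the (random) trajectory by working on the good event where the a priori bounds hold, rather than in expectation.
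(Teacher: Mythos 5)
Your outline --- the three-level coupling of batch-SGD, the noiseless $N$-particle system, and the iid mean-field system, with per-step sub-Gaussian concentration and discrete Gr\"onwall accumulation, and for case (B) an a priori bound $\|a^k\|_\infty \le Ke^{Kk_0}$ that then forces $L_k \le Ke^{Kk_0}$ and hence the double-exponential rate --- is essentially the same approach the paper takes (which is itself a brief pointer to Appendices B and C of [mei2019mean] with exactly the two modifications you isolate: dropping the discrete-to-continuous comparison, and controlling $a^k$).

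One thing in your case (B) would fail if pursued literally: you write that the a priori bound ``uses the fact that the population risk $R(\rho_k)$ is bounded.'' In discrete time with $\eta = \Theta(1)$ this is \emph{not} available for free --- the monotonicity $R(\rho_t)\le R(\rho_0)$ used in Lemma~\ref{lem:bound_evoluation_a} is a continuous gradient-flow property, and with $O(1)$ step sizes gradient descent can increase the risk. Moreover bounding $R(\rho_k)$ first would be circular, since you need $\|a^k\|_\infty$ to bound $\hf_{\NN}(\cdot;\rho_k)$ and hence the risk. The paper's argument bypasses $R(\rho_k)$ entirely: it bounds the residual in sup-norm by
\[
\|\hg_k\|_\infty \le \|f_*\|_\infty + \|\hf_{\NN}(\cdot;\rho_k)\|_\infty \le K + K\|a^k\|_\infty,
\]
plugs this into the $a$-update to get $|a^{k+1}| \le |a^k| + K(1+\|a^k\|_\infty)$, and concludes $\|a^k\|_\infty \le Ke^{Kk}$ by discrete Gr\"onwall. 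Your second justification (``each step changes $(a,\bw)$ by at most $K(1+\|\bw\|)$'') has the right shape but the wrong variable: since $\sigma$ is bounded, the quantity that can cause blow-up of the residual and hence of the step is $\|a\|_\infty$, not $\|\bw\|$. Replace $R(\rho_k)$ bounded with the sup-norm residual bound in $\|a^k\|_\infty$, and the rest of your argument goes through as written.
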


\begin{proof}[Proof of Proposition \ref{prop:discrete_prop}]
The proof of this proposition follows by adapting the proof of \cite[Theorem 1]{mei2019mean} to the discrete setting described above (see also Appendix \ref{app:proof-MF-approx}). In particular, part (A) (fixed second layer coefficients) follows from the Appendix B in \cite{mei2019mean}: the comparison between discrete and continuous gradient is not needed anymore, and the only difference is in the first part of Proposition 16 in \cite{mei2019mean}, which can simply be rewritten by noting that
\[
\begin{aligned}
&~ \| \obtheta_i^{k+1} - \ubtheta_i^{k+1} \|_2^2 \\
\leq&~ | \<  \obtheta_i^{k+1} - \ubtheta_i^{k+1},  \obtheta_i^{k} - \ubtheta_i^{k}\>| + | \< \obtheta_i^{k+1} - \ubtheta_i^{k+1}, (\obtheta_i^{k+1} - \obtheta_i^{k}) -  (\ubtheta_i^{k+1} -\ubtheta_i^{k}) \> | \\
\leq&~ \| \obtheta_i^{k+1} - \ubtheta_i^{k+1} \|_2 \|  \obtheta_i^{k} - \ubtheta_i^{k}\|_2 + | \< \obtheta_i^{k+1} - \ubtheta_i^{k+1}, (\obtheta_i^{k+1} - \obtheta_i^{k}) -  (\ubtheta_i^{k+1} -\ubtheta_i^{k}) \> |\, ,
\end{aligned}
\]
and the rest of the proof follows similarly.

For part (B), the main difference comes from bounding $a^k$: we have
\[
|a^{k+1}| \leq | a^k | +  \E_{\bx} \big[ | \hg_k ( \bx) | | \sigma ( \bx; \bw^k ) | \big] \leq | a^k | + K + K \| a^k \|_{\infty}\, ,
\]
where we denoted $\| a^k \|_{\infty} = \sup_{a \in \text{supp}(\rho_k)} | a |$ and $\hg_k ( \bx)  = f_* (\bx) - \hf_{\NN} (\bz ; \rho_k)$ and used that by assumption $\| f_*\|_\infty, \| \sigma\|_{\infty} \leq K$ and $\| \hf_{\NN} \|_{\infty} \leq \int |a | \| \sigma \|_{\infty} \rho_k (\de \btheta) \leq K $. We can then use the discrete Gr\"onwall inequality to get $\| a^t \|_{\infty} \leq Ke^{Kk_0}$. This explains the worse dependency (double exponential) in $k_0$ in the bound, than for continuous time, where one can use properties of continuous gradient flows to get a bound on $a^t$ linear in time. With this modification, the rest of the proof follow by adapting Appendix C in \cite{mei2019mean}, where we can assume that the activation function $\sigma_* (\bx ; \btheta)$ is bounded by $Ke^{Kk_0}$. 
\end{proof}

\subsection{Discrete-time analysis of SGD on second layer}\label{ssec:linear-sgd-discrete}

In this section, we provide a tighter analysis for the training of the second-layer weights by SGD using the bias-variance decomposition, instead of a propagation-of-chaos argument. This technical tool is used in Section \ref{sec:sample-complexity-explicit} to provide tighter sample-complexity bounds. Indeed, the second-layer weights are trained during a time $T = \log (1 / \eps) / \lambda_{\min}$, where $\lambda_{\min}$ is the minimum eigenvalue of the kernel matrix associated to the neural network after training the first layer weights (see for example Section \ref{app:outline_staircase}). If we naively use the propagation-of-chaos comparison between the dimension-free dynamics and batch-SGD \eqref{eq:bSGD} in Theorem~\ref{thm:bSGD-to-DF}, we need
\[
e^{K_1T^7} \left\{ \sqrt{\frac{P + \log(d)}{d}} + \sqrt{\frac{\log N}{N}} +  \sqrt{\frac{d+\log N}{b}}  \sqrt{ \eta} \right\} \leq \eps \, ,
\]
and therefore we need the sample size to scale as $n = d e^{\log (1 / \eps)^7 / \lambda_{\min}^7 }$. In this section, we consider instead a direct analysis of \eqref{eq:bSGD} on the second layer weights and we obtain the tighter requirement  $n = d/(\eps \lambda_{\min})^C$ for some constant $C>0$.

\paragraph{Training setup:} We assume that the weights of the first layer are fixed at some $\bW^0 \in \R^{N \times d}$. We train the weights $\ba$ of the second layer and obtain a discrete-time dynamics $\ba^k$ on these weights. Namely, given initial weights $\bW^0 \in \R^{N \times d}$ and $\ba^0 \in \R^N$, we train $\ba^k$ with the batch-SGD dynamics \eqref{eq:bSGD}, with the step size $\eta > 0$ and regularization parameter $\lambda^a = \lambda > 0$ on $\ba$.
\begin{align*}
\bw_j^{k+1} &= \bw_j^0, \\
a_j^{k+1} &= a_j^{k} + \eta  \Big[ \frac{1}{b} \sum_{i\in [b]} \{ y_{ki} - \hat f_{\NN} ( \bx_{ki} ; \bTheta^k ) \}  \sigma  ( \< \bw_j^k , \bx_{ki} \> ) \bx_{ki} \Big] - \eta \lambda a_j^k \,, \nonumber
\end{align*}
where at each time step $k$ we take fresh i.i.d. data samples $\{(y_{ki},\bx_{ki})\}_{i \in [b]}$.
In the analysis, it is convenient to define the feature map which depends on $\bW^0$ as
$$\phi(\bx) = \frac{1}{\sqrt{N}} [\sigma(\<\bw_1^0,\bx\>) \dots \sigma(\<\bw_N^0,\bx\>)] \in \R^N.$$
With this notation, the neural network function while only the second layer is being trained can be written as:
\begin{align*}
\fNN(\bx;\ba,\bW^0) = \frac{1}{\sqrt{N}}\<\ba, \phi(\bx)\>,
\end{align*}
This is simply a linear function in $\ba$ with a fixed feature map $\phi$ depending on $\bW^0$. Therefore, the training of $\ba$ can be studied with classical ideas for analyzing linear methods. Namely, define the regularized loss\footnote{The $1/N$ scaling in the regularization is needed because each neuron is scaled as $1/N$ in the expression for $\fNN$.}:
$$\ell(\ba) = \frac{1}{2}\EE_{\bx}\Big[(h_* (\bz) - \frac{1}{\sqrt{N}}\langle\ba, \phi(\bx)\rangle)^2 \Big] + \frac{\lambda}{2 N} \|\ba\|^2.$$

The main result of this subsection is that if there exists a low-norm ``certificate'' $\ba^{cert}$ that achieves small loss, then batch-SGD will achieve a loss that is approximately upper-bounded by the loss at $\ba^{cert}$, after a short number of iterations.
This is a key ingredient in our more quantitative bounds on the sample complexity in Section~\ref{sec:sample-complexity-explicit}.

\begin{lemma}[Suffices to prove existence of certificate]\label{lem:cert-suffices-sgd}
Let $B_1,B_2 > 0$ be such that $|y| \leq \sqrt{B_1}$ and $\|\phi(\bx)\|^2 \leq B_2$ almost surely. For any step size $0 < \eta \leq 1/(B_2 + \lambda)$, any $\delta > 0$, and any $\ba^{cert} \in \R^N$, and any time step $k$, with probability at least $1 - \delta$
\begin{align*}
\ell(\ba^k) &\leq \ell(\ba^{cert}) + 2(B_2 + \lambda) \Big((1-\eta\lambda)^{2k} \Big(\frac{\|\ba^0\|^2}{N} + \frac{2\ell(\ba^{cert})}{\lambda}\Big) \\
&\quad\quad\quad\quad\qquad\qquad\qquad\qquad + \left(\frac{9B_1B_2}{\lambda^2 b N} +  \frac{18(B_2 + \lambda)^2 \ell(\ba^{cert})}{\lambda^3 b}\right)\log(1/\delta)\Big).
\end{align*}
\end{lemma}

\begin{remark}[Relation of Lemma~\ref{lem:cert-suffices-sgd} to prior work]
The proof of this lemma relies on the well-known bias-variance decomposition idea in for analyzing least-squares regression with SGD (e.g., \cite{bach2013non,jain2017markov}). However, we wish to prove a statement about the final iterate of SGD, and most works analyze averaged iterates rather than the final iterate. Standard bounds for last-iterate SGD (e.g., \cite{shamir2013stochastic,jain2019making}) do not apply because they assume Lipschitzness of the loss function, which cannot be assumed because we use the squared loss. Furthermore, the bound of \cite{jain2018parallelizing} for last-iterate batch-SGD holds in expectation, rather than with high probability. Nevertheless, our proof is a straightforward modification of \cite{jain2018parallelizing}, making stronger assumptions and obtaining a suboptimal rate in order to get a simpler proof.
\end{remark}

We first prove the following lemma, where $\ba^*$ is the minimizer of $\ell$, guaranteed to be unique by strong convexity:

\begin{lemma}\label{lem:distgapsgd}
Under the same conditions as Lemma~\ref{lem:cert-suffices-sgd}, for any time step $k > 0$, with probability at least $1 - \delta$ the following bound holds:
$$\|\ba^k- \ba^*\| \leq (1 - \eta \lambda)^k \|\ba^0 - \ba^*\| + \frac{3(\sqrt{B_1B_2} + (B_2 + \lambda)\|\ba^*\|)\sqrt{\log(1/\delta)}}{\lambda \sqrt{b}}$$
\end{lemma}
\begin{proof}

Define $\bH = \frac{1}{N}\E[\phi(\bx) \otimes \phi(\bx)] + \frac{\lambda}{N} \id$ and $\bv = \frac{1}{\sqrt{N}}\E[\phi(\bx) y]$. The loss at $\ba$ can be rewritten as:
$$\ell(\ba) = \frac{1}{2}\E[y^2] - \<\bv, \ba\>  + \frac{1}{2} \<\ba \otimes \ba, \bH\>.$$

Define the gap to optimality $\bw^k = \ba^k - \ba^*$. We track the evolution of $\bw^k$. Let $\bP^k = \id - \eta (\frac{1}{b} \sum_{i=1}^b \phi(\bx_{ki}) \otimes \phi(\bx_{ki}) + \lambda \id)$. Note that $0 \preceq \bP^k \preceq (1 - \eta \lambda \id)$, since $\|\phi(\bx_{ki})\|^2 \leq B_2$, and $\eta \leq 1/(B_2 + \lambda)$. We have
\begin{align*}
\bw^{k+1} &= \bP^k \ba^k + \frac{\eta}{b} \sum_{i=1}^b y_{ki} \phi(\bx_{ki}) - \ba^* \\
&= \bP^k \bw^k - \frac{\eta}{b} \sum_{i=1}^b y_{ki} \phi(\bx_{ki}) + \eta(\frac{1}{b} \sum_{i=1}^b \phi(\bx_{ki}) \otimes \phi(\bx_{ki}) + \lambda \id) \ba^* \\
&= \bP^k \bw^k + \eta \bzeta^k,
\end{align*}
where for any $k \geq 0$, we have the noise vector
\begin{align*}
\bzeta^k = \frac{1}{b} \bxi^{ki} \mbox{ and } \bxi^{ki} = y_{ki} \phi(\bx_{ki}) - (\phi(\bx_{ki}) \otimes \phi(\bx_{ki}) + \lambda \id) \ba^*
\end{align*}
Recursively expanding this, we have the well-known ``bias-variance'' decomposition $$\bw^k = \Big(\prod_{l=0}^{k-1} \bP^l\Big) \bw^0 + \eta \sum_{j=0}^{k-1} \Big(\prod_{l=1}^j \bP^{l} \Big) \bzeta^j.$$
The first term tracks how close $\ba^t$ would be to $\ba^0$ if there were no noise, and the second term controls how much error the noise in the batch-SGD contributes if we had started at the optimal solution $\ba^*$.

By the spectral norm bound on $\bP^l$ and the triangle inequality, we have
\begin{align}\|\bw^k\| \leq \underbrace{(1 - \eta\lambda)^k \|\bw^0\|}_{\text{(Term 1)}} + \underbrace{\eta \sum_{j=0}^{k-1} (1- \eta \lambda)^{k-1-j} \|\bzeta^j\|}_{\text{(Term 2)}}.\label{ineq:wtbound}\end{align}

The first term is already essentially in the form that we want. Let us bound the second term:
\begin{claim}\label{claim:bxibound}
For any time step $j$, and $i \in [b]$, $\|\bxi^{ji}\| \leq B_3 := \sqrt{B_1B_2} + (B_2 + \lambda)\|\ba^*\|$ almost surely.
\end{claim}
\begin{proof}
By triangle inequality and the almost-sure bounds on $y_{ji}$ and $\phi(\bx_{ji})$.
\end{proof}
\begin{claim}
For any time-step $j$, $\EE[\|\bzeta^j\|] \leq B_3 / \sqrt{b}$.
\end{claim}
\begin{proof}
By the first-order optimality conditions on $\ba^*$, we have $\bH \ba^* = \bv$, so $\E[\bxi^{ji}] = \bv - \bH \ba^* = \bzero$. Furthermore, $\bxi^{ji}$ and $\bxi^{ji'}$ are independent for all $i \neq i' \in [b]$.
By Cauchy-Schwarz, and the bound from Claim~\ref{claim:bxibound}, 
$$\E[\|\bzeta^j\|] \leq \sqrt{\E[\|\bzeta^j\|^2]} = \sqrt{\E\Big[\frac{1}{b^2} \sum_{i,i'=1}^b \bxi^{ji} \cdot \bxi^{ji'}\Big]} = \sqrt{\frac{1}{b}\E[ \|\bxi^{j1}\|^2]} \leq B_3 / \sqrt{b}.$$
\end{proof}
\begin{claim}
The second term in \cref{ineq:wtbound} is bounded with probability $1 - \delta$:
$$\P\Big[\text{(Term 2)} \geq \frac{3B_3\sqrt{\log(1/\delta)}}{\lambda\sqrt{b}}\Big] \leq \delta$$
\end{claim}
\begin{proof}
Group the samples $\{(\by_{ji},\bx_{ji})\}_{j \in \{0,\ldots,k-1\},i \in [b]}$ into groups where $i$ is the same, letting $$Z_i = ((\by_{0,i},\bx_{0,i}),(\by_{1,i},\bx_{1,i}),\ldots,(\by_{k-1,i},\bx_{k-1,i}))$$ denote the collection of $i$th samples at all time-steps, for all $i \in [b]$. We can write the term that we want to bound as a function of these samples as $$\text{(Term 2)} = h(Z_1,\ldots,Z_b) = \sum_{j=0}^{k-1} (1-\eta \lambda)^{k-1-j} \|\bzeta^j\|.$$
By the previous claim, the expectation is bounded by
$$\E[h(Z_1,\ldots,Z_b)] \leq \frac{B_3}{\lambda \sqrt{b}}.$$ We bound deviation from the expectation using McDiarmid's inequality. Note that if we replace $Z_1$ with a new independent draw $\tilde{Z}_1 = ((\tilde{\by}_{0,i},\tilde{\bx}_{0,i}),(\tilde{\by}_{1,i},\tilde{\bx}_{1,i}),\ldots,(\tilde{\by}_{k-1,i},\tilde{\bx}_{k-1,i}))$, then by triangle inequality almost surely
\begin{align*}
|h(Z_1,Z_2,\ldots,Z_b) - h(\tilde{Z}_1,Z_2,\ldots,Z_b)| \leq \eta \sum_{j=0}^{k-1} (1 - \eta \lambda)^{k-1} \frac{1}{b} \|\bxi^{j,1} - \tilde{\bxi}^{j,1}\| \leq \frac{2 B_3}{b\lambda}.
\end{align*}
By symmetry, the same is true of replacing any $Z_i$ with an independent draw $\tilde{Z}_i$. Furthermore, $Z_1,\ldots,Z_b$ are independent, so by McDiarmid's inequality
\begin{align*}
\P\Big[h(Z_1,\ldots,Z_b) \geq \E[h(Z_1,\ldots,Z_b)] + \frac{2B_3 \sqrt{\log(1/\delta)}}{\lambda \sqrt{b}}\Big] \leq \delta.
\end{align*}
Combining the above bounds yields the claim.
\end{proof}
The above claim combined with \cref{ineq:wtbound} proves the lemma.
\end{proof}

\begin{lemma}\label{lem:lossgapsgd}
Under the same conditions as Lemma~\ref{lem:distgapsgd}, for any $\delta > 0$, with probability $1 - \delta$ we have
$$\ell(\ba^k) \leq \ell(\ba^*) + 
\frac{(B_2 + \lambda)}{2N} \left((1-\eta\lambda)^{k} \|\ba^0 - \ba^*\| + \frac{3(\sqrt{B_1B_2} + (B_2 + \lambda)\|\ba^*\|)\sqrt{\log(1/\delta)}}{\lambda \sqrt{b}}\right)^2$$
\end{lemma}
\begin{proof}
The excess loss at $\ba$ equals
\begin{align*}\ell(\ba) - \ell(\ba^*) &= \frac{1}{2} \<\ba \otimes \ba, \bH\> - \frac{1}{2} \<\ba^* \otimes \ba^*, \bH\>-\<\bv,\ba - \ba^*\> \\
&= \frac{1}{2} \<(\ba - \ba^*) \otimes (\ba - \ba^*), \bH\> + \<(\ba - \ba^*) \otimes \ba^*, \bH\> - \<\bv,\ba - \ba^*\>  \\
&= \frac{1}{2} \<(\ba - \ba^*) \otimes (\ba - \ba^*), \bH\> + \<\bH \ba^* - \bv,\ba - \ba^*\> \\
&= \frac{1}{2} \<(\ba - \ba^*) \otimes (\ba - \ba^*), \bH\> ,
\end{align*}
where the last step is by the first-order optimality condition $\bH \ba^* = \bv$. We conclude by using the bound on $\|\ba^k - \ba^*\|$ from Lemma~\ref{lem:distgapsgd}, and the fact that $\|\bH\| \leq (B_2 + \lambda) / N$.
\end{proof}

We may now prove the main result of this subsection: i.e., that it suffices to prove that there is a certificate $\ba^{cert}$ achieving low loss.
\begin{proof}[Proof of Lemma~\ref{lem:cert-suffices-sgd}]
Because of the quadratic regularization term and the optimality of $\ba^*$,
\begin{align*}
\frac{\lambda}{2N} \|\ba^*\|^2 \leq \ell(\ba^*) \leq \ell(\ba^{cert}),
\end{align*}
so $$\|\ba^*\|^2 \leq \frac{2N}{\lambda}\ell(\ba^{cert}).$$
Plugging this into the bound from Lemma~\ref{lem:lossgapsgd}, with probability $1 - \delta$,
\begin{align*}
\ell(\ba^k) &\leq \ell(\ba^*) + \frac{(B_2 + \lambda)}{N} \left(2(1-\eta\lambda)^{2k} (\|\ba^0\|^2 + \|\ba^*\|^2) + \frac{18(B_1B_2 + (B_2 + \lambda)^2\|\ba^*\|^2)\log(1/\delta)}{\lambda^2 b}\right) \\
&\leq \ell(\ba^{cert}) + 2(B_2 + \lambda) \Big((1-\eta\lambda)^{2k} (\frac{\|\ba^0\|^2}{N} + \frac{2\ell(\ba^{cert})}{\lambda}) \\
&\quad\quad\quad\quad\qquad\qquad\qquad\qquad + \left(\frac{9B_1B_2}{\lambda^2 b N} +  \frac{18(B_2 + \lambda)^2 \ell(\ba^{cert})}{\lambda^3 b}\right)\log(1/\delta)\Big).
\end{align*}
\end{proof}

\subsection{Bounding the loss reduces to lower bounding the eigenvalues of the kernel matrix}\label{ssec:reducetolambdamin-discrete}

We give a lemma that bounds the final loss of batch-SGD on the second layer weights, after an initial phase of training the first layer weights. In Phase 1, we train $\bW$ for $k_1$ iterations while keeping $\ba$ fixed. In Phase 2, we train $\ba$ for $k_2-k_1$ iterations, while keeping $\bW$ fixed.
We reduce the problem of bounding the final loss $R(\ba^{k_2},\bW^{k_1})$ to the problem of lower bounding the kernel matrix associated with the neural network constructed by the dimension-free dynamics during phase 1. We will use this result to provide explicit sample-complexity bounds in Section \ref{sec:sample-complexity-explicit}.

\paragraph{Training setup} Let $\bTheta^k = (\btheta_1^k,\ldots,\btheta_N^k)$ denote the parameters on step $k$ of batch-SGD training. For $j \in [N]$ we independently draw $$\btheta^0_j = (a_j^0,\bw_j^0)  \sim \mathrm{Unif}([-1,1]) \times \delta_{\bzero}.$$
We train $\bTheta$ with \eqref{eq:bSGD} with regularization $\lambda^w = 0$ and $\lambda^a = \lambda > 0$ which is a parameter to be set. Let $\eta_1,\eta_2 > 0$ be the step sizes during phase 1 and phase 2 respectively.
\begin{itemize}
    \item In Phase 1, for time step $k \in \{0,\ldots,k_1 - 1\}$, we update according to
    \begin{align*}
    \bw_j^{k+1} = &~ \bw_j^{k} + \eta_1 a_j^k \Big[ \frac{1}{b} \sum_{i\in [b]} \{ y_{ki} - \hat f_{\NN} ( \bx_{ki} ; \bTheta^k ) \}  \sigma ' ( \< \bw_j^k , \bx_{ki} \> ) \bx_{ki} \Big] \,  \\
    a_j^{k+1} = &~ a_j^k = a_j^0 .\nonumber
    \end{align*}
    \item In Phase 2, for time step $k \in \{k_1,\ldots,k_2 - 1\}$, we update according to
    \begin{align*}
    \bw_j^{k+1} &= \bw_j^{k_1} \\
    a_j^{k+1} &= a_j^{k} + \eta_2  \Big[ \frac{1}{b} \sum_{i\in [b]} \{ y_{ki} - \hat f_{\NN} ( \bx_{ki} ; \bTheta^k ) \}  \sigma  ( \< \bw_j^k , \bx_{ki} \> ) \bx_{ki} \Big] - \eta \lambda a_j^k \,. \nonumber
    \end{align*}
\end{itemize}

\paragraph{Reduction to analyzing the limiting mean-field dynamics} For $k \in \{0,\ldots,k_1\}$, consider either $\rho_k$  or $\rho_{k_1 \eta_1}$ the limiting dynamics in the discrete-time or in the continuous time setting, and let $\bar{\bTheta}^k = (\barbtheta_1^k,\ldots,\barbtheta_N^k)$ or $\bar{\bTheta}^{t} = (\barbtheta_1^t,\ldots,\barbtheta_N^t)$ be the parameters if we had trained with the limiting discrete or continuous-time mean-field dynamics, initialized at $\barbTheta^0 = \bTheta^0$. If we denote $\bar\bw_j^k = (\bar\bu_j^k , \bar \bv_j^k)$ and $\bar\bw_j^t = (\bar\bu_j^t , \bar \bv_j^t)$, recall that the discrete mean-field dynamics is given by (note that with $\bar\bw^0 = \bzero$ initialization, the mean-field PDE and dimension free dynamics are the same)
\begin{align*}
\bar\bu_j^{k+1} &= \bar\bu_j^k + \eta_1 \bar a_j^0 \EE_{\bx}[\{\fNN(\bz;\bar\rho_k) - h_*(\bz)\}\sigma'(\<\bar\bu_j^k,\bz\>)\bx] \,   \\
\bar\bv_j^{k+1} &= \bar\bv_j^k = \bzero \, , \nonumber 
\end{align*}
while for the continuous time mean-field dynamics
\begin{align*}
\frac{\de}{\de t}\bar\bu_j^{t} &= \bar a_j^0 \EE_{\bx}[\{\fNN(\bz;\bar\rho_t) - h_*(\bz)\}\sigma'(\<\bar\bu_j^t,\bz\>)\bx] \,   \\
\bar\bv_j^{t} &= \bzero \, . \nonumber 
\end{align*}
For convenience, denote $T_1 = k_1$ or $T_1 = \eta k_1$, and $\bar\bu^{T_1}_j$ the weights at the end of phase 1. Denote $\err = \max_{j \in [N]} \| \bw_j^{T_1} - \bar \bw_j^{T_1} \|_2$. The propagation-of-chaos argument in Propositions \ref{prop:general-PDE-SGD-bound} and \ref{prop:discrete_prop} yields the following bound with probability $1 - 1/N$ (when training only the first layer-weights): 
\begin{equation}\label{eq:propChaos}
 \err \leq K e^{K T_1} \left\{  \sqrt{\frac{\log N}{N}} +  \sqrt{\frac{d+\log N}{b}} \right\}\, .
\end{equation}

We define $\bK^{T_1} \in \R^{2^P \times 2^P}$ the kernel at the end of phase 1, with entries
\[
K^{T_1} (\bz , \bz ') = \int \sigma ( \< \bar\bu , \bz\>) \sigma ( \< \bar\bu , \bz '\>) \bar\rho_{T_1} (\de \bar \bu) \, .
\]
We further define the following matrix $\bM$ associated with the features computed by the mean-field dynamics from initialization $\bTheta_0$. $\bM \in \R^{2^P \times N}$ is indexed by $\bz \in \{+1,-1\}^P$ and $j \in [N]$, and has entries
$$M_{\bz,j} = \frac{1}{\sqrt{N}} \sigma(\<\barbu_j^{T_1},\bz\>).$$
We call this the mean-field feature matrix (MF feature matrix, for short). Because $\ba^0_j \in \R^N$ are iid with $\bar \rho_0$, the $\bar\bu^{T_1}_j$ are iid distributed with respect to $\bar\rho_{T_1}$ and $\bM \bM$ is a random approximation of $\bK^{T_1}$. Indeed, we have the following bound:
\begin{lemma}\label{lem:K_to_M}
There exists $C>0$ such that if $N \geq 2^{CP}  \log (1/\delta)^2 / \lambda_{\min} ( \bK^{T_1} )^2$, then with probability at least $1 - \delta$, we have
\[
\lambda_{\min } ( \bM \bM^\sT ) \geq \lambda_{\min} (\bK^{T_1} ) / 2 \, .
\]
\end{lemma}

\begin{proof}[Proof of Lemma \ref{lem:K_to_M}]
Note that $\bM = [ \bm_1 , \ldots , \bm_N ]$ has iid columns with covariance $N \E [ \bm_j \bm_j^\sT ]  = \bK^{T_1}$. Furthermore $\| \bm_j \|_2^2 \leq K 2^P$. The result follows by applying \cite[Theorem 4.44]{vershynin2010introduction}.
\end{proof}

We prove a bound on the risk $\bTheta^{k_2}$ found using batch-SGD in terms of the minimum singular value of $\bM$.

\begin{lemma}\label{lem:reduce-to-cond-discrete}[Sufficient to prove $\bM$ is well-conditioned]
There is a universal constant $C > 0$ such that for any $0 < \delta < 1, 0 < \eps < 1$, any $k_1$, if we pick hyperparameters
\begin{align*}\lambda &= c_1\eps \quad \mbox{ for }\quad  c_1 = \frac{\min(1,\lambda_{\mathrm{min}}(\bM\bM^T)^2)}{2^{2P+2}K^6} \\
\eta &\leq 1 / (2K^2) \\
N &\geq c_2 / \eps^3, \quad \mbox{ for } c_2 = (CK^2)^{3k_1} \log(1/\delta)^2 / c_1^3 \\
b &\geq c_3 \log(N) d / \eps^4, \quad \mbox{ for } c_3 = (4K^2+c_1)^2(CK^2)^{2k_1} \log(1 / \delta) / c_1^3 \\
k_2 &\geq k_1 + c_4 \log(1/\eps) / (\eta \eps), \quad \mbox{ for } c_4 = \log(100(4K^2 + c_1)) / c_1,
\end{align*}
then, with probability at least $1 - \delta$, the final loss is bounded by
$$R(\ba^{k_2},\bW^{k_1}) \leq \eps.$$
\end{lemma}

Let us first prove that if $\bM \bM^T$ is well-conditioned then a low-error certificate exists. As in Section~\ref{ssec:linear-sgd-discrete}, let $$\ell(\ba) = R(\ba,\bW^{k_1}) + \frac{\lambda}{2N}\|\ba\|^2.$$
\begin{lemma}\label{lem:cert-exists}
There is a universal constant $C > 0$ such that for any $\delta > 0$ and mini-batch size $b \geq \max(d, C\log(1/\delta))$, with probability at least $1 - \delta$, there is $\ba^{cert} \in \R^N$ such that
$$\ell(\ba^{cert}) \leq (\err + \lambda) 2^{2P} K^6 / \lambda_{\mathrm{min}}(\bM\bM^T)^2,$$
where $\err = \err(k_1,b,N,\delta)$ is the error bound for convergence to the limiting discrete or continuous-time mean-field dynamics.
\end{lemma}
\begin{proof}
Let $\bbeta \in \R^{2^P}$ be the vector of values of $h_*$, indexed by $\bz \in \{+1,-1\}^P$:
$$\bbeta_{\bz} = h_*(\bz).$$
We construct a certificate for the least-squares training in Phase 2. Let $$\ba^{cert} = \sqrt{N}\bM^T (\bM \bM^T)^{-1} \bbeta.$$ We bound the norm of $\ba^{cert}$ in terms of the minimum singular value of $\bM$:
\begin{align}
\|\ba^{cert}\| &\leq \sqrt{N} \|\bM\| \|\bbeta\|  / \lambda_{\mathrm{min}}(\bM \bM^T) \nonumber \\
&\leq 2^PK^2 \sqrt{N}  / \lambda_{\mathrm{min}}(\bM \bM^T), \label{eq:certnormbound}
\end{align}
for a constant $K$ since $\|\bM\| \leq \sqrt{2^P}\|\sigma\|_{\infty} \max_{j} |a_j^{k_1}| \leq \sqrt{2^P}K$, and $\|\beta\| \leq \sqrt{2^P} \|h_*\|_{\infty} \leq K$. We bound the risk given by using $\ba^{cert}$ as a certificate. By the triangle inequality,
\begin{align}
R(\ba^{cert},\bW^{k_1}) = \frac{1}{2}\EE_{\bx}[(f_*(\bx) - \frac{1}{N} \sum_{j=1}^{N} a^{cert}_j \sigma(\<\bw_j^{k_1},\bx\>))^2] \leq \Upsilon_1 + \Upsilon_2,\label{eq:splitintoupsilons}
\end{align}
where
\begin{align*}
\Upsilon_1 &= \EE_{\bx}[(f_*(\bx) - \frac{1}{N} \sum_{j=1}^{N} a^{cert}_j \sigma(\<\barbw_j^{k_1},\bx\>))^2] \\
\Upsilon_2 &= \EE_{\bx}[(\frac{1}{N} \sum_{j=1}^{N} a^{cert}_j (\sigma(\<\bw_j^{k_1},\bx\>) - \sigma(\<\barbw_j^{k_1},\bx\>)))^2].
\end{align*}
We first bound $\Upsilon_1$, using the fact that that $f_*(\bx) = h_*(
\bz)$ and $\frac{1}{\sqrt{N}}\sigma(\<\barbw^{k_1}_j,\bx\>) = \frac{1}{\sqrt{N}}\sigma(\<\barbu^{k_1}_j,\bz\>) = M_{\bz,j}$:
\begin{align}
\Upsilon_1 &= \EE_{\bx=(\bz,\br)} [(h_*(\bz) - \frac{1}{\sqrt{N}}\sum_{j=1}^{N} a^{cert}_j M_{\bz,j})^2] = \frac{1}{2^P}\|\bbeta - \frac{1}{\sqrt{N}}\bM \ba^{cert}\|^2 = 0.\label{eq:upsilon1bound}
\end{align}
For $\Upsilon_2$, use (a) Jensen's inequality, (b) $K$-Lipschitzness of $\sigma$, and (c) linearity of expectation:
\begin{align*}
\Upsilon_2 &\overset{(a)}{\leq} \frac{1}{N} \sum_{j=1}^{N} \EE_{\bx}[(a^{cert}_j( \sigma(\<\bw_j^{k_1},\bx\>) - \sigma(\<\barbw_j^{k_1},\bx\>)))^2] \\
&\overset{(b)}{\leq} \frac{1}{N} \sum_{j=1}^{N} K^2|a^{cert}_j|^2 \EE_{\bx}[\<\bw_j^{k_1} - \barbw_j^{k_1},\bx\>^2] \\
&\overset{(c)}{=} \frac{1}{N} \sum_{j=1}^{N} K^2|a^{cert}_j|^2 \|\bw_j^{k_1} - \barbw_j^{k_1}\|_2^2
\end{align*}
Finally, by the propagation of chaos bound in Eq.~\eqref{eq:propChaos}, at time $T_1$ with probability at least $1 - \delta$,
$$\max_{j\in [N]} \|\bw_j^{k_1} - \barbw_j^{k_1}\|^2 \leq \err := \err(k_1,b,N,\delta) = e^{K T_1}(\sqrt{\log(N / \delta)/N} + \sqrt{d \log(N / \delta)/b}),$$
where $C > 0$ is some universal constant. So
\begin{align}
\Upsilon_2 &\leq K^2 \|\ba^{cert}\|^2 (\err) / N.\label{eq:upsilon2bound}
\end{align}

Combining \cref{eq:certnormbound,eq:splitintoupsilons,eq:upsilon1bound,eq:upsilon2bound}, with probability at least $1 - \delta$,
\begin{align*}\ell(\ba^{cert}) &= R(\ba^{cert},\bW^{k_1}) + \frac{1}{2N}\lambda \|\ba^{cert}\|^2 \\
&\leq K^2 \|\ba^{cert}\|^2 (\err) /N + \frac{1}{2N} \lambda \|\ba^{cert}\|^2 \\
&\leq (\err + \lambda) 2^{2P} K^6 / \lambda_{\mathrm{min}}(\bM\bM^T)^2
\end{align*}
\end{proof}

Now we may prove the main result of this subsection.

\begin{proof}[Proof of Lemma~\ref{lem:reduce-to-cond-discrete}]
By the choice of hyperparameters $N$ and $b$, we have the following error bound between the batch-SGD dynamics on the first layer and the infinite-width, population limit,
\begin{align}\err(T_1,N,b,\delta/2) &= 
e^{KT_1}(\sqrt{\log(N / \delta)/N} + \sqrt{d \log(N/\delta)/b}) \\
&\leq \eps / (2^{2P+1} K^6 \lambda_{\mathrm{min}}(\bM\bM^T)^2)
\end{align}

By Lemma~\ref{lem:cert-exists}, with probability $1 - \delta/2$ over the choice of the initialization $\bTheta^0$ and the samples $\{(\bx_{ki},y_i)\}_{k \in \{0,\ldots,k_1-1\}, i\in [b]}$ there exists a certificate $\ba^{cert}$ with loss:
\begin{align*}
\ell(\ba^{cert}) &\leq (\err + \lambda) 2^{2P} K^6 / \lambda_{\mathrm{min}}(\bM\bM^T)^2 \leq \eps / 2.
\end{align*}

Let us conclude by applying Lemma~\ref{lem:cert-suffices-sgd}, which shows that with probability $1 - \delta/2$ the final error satisfies the following bound in terms of $\ell(\ba^{cert})$:
\begin{align*}
\ell(\ba^{k_2}) &\leq \ell(\ba^{cert}) + 2(B_2 + \lambda) \Big((1-\eta\lambda)^{2(k_2-k_1)} (\frac{\|\ba^0\|^2}{N} + \frac{2\ell(\ba^{cert})}{\lambda}) \\
&\quad\quad\quad\quad\qquad\qquad\qquad\qquad + \left(\frac{9B_1B_2}{\lambda^2 b N} +  \frac{18(B_2 + \lambda)^2 \ell(\ba^{cert})}{\lambda^3 b}\right)\log(2/\delta)\Big).
\end{align*}

Here we take $B_1 = 4K^2 \geq |y_{ki}|^2$ almost surely, and $B_2 = K^2$, since 
\[
\max_{\bx} \|\phi(\bx)\|^2 = \max_{\bx} \frac{1}{N} \sum_{i=1}^N |\sigma(\<\bw_i^{k_1},\bx\>)|^2 \leq K^2 
\]
almost surely. Also, $\|\ba^0\|^2 \leq N$. Finally, $(1 - \eta \lambda)^{2(k_2-k_1)} \leq \eps/(100(4K^2 + c_1 \eps))$. So with probability $1 - \delta/2$,
\begin{align*}
\ell(\ba^{k_2}) &\leq \ell(\ba^{cert}) + 2(4K^2 + c_1 \eps)((1-\eta \lambda)^{2(k_2-k_1)} (1 + 2 / c_1) + (\frac{36K^4}{c_1^2 \eps^2 b N} + \frac{18(4K^2 + c_1 \eps)^2 \eps}{c_1^3 \eps^3 b}) \log(2/\delta)) \\
&\leq \ell(\ba^{cert}) + \eps / 2 \leq \eps.
\end{align*}
    
We conclude by noting $R(\ba^{k_2},\bW^{k_1}) \leq \ell(\ba^{k_2})$.
\end{proof}

\subsection{Anti-concentration of polynomials}\label{sec:polynomial-anticoncentration}

We prove the technical lemma that polynomials anti-concentrate when evaluated at random inputs. Concretely, we lower-bound the variance of the polynomial evaluated at a random input based on the sum of the magnitudes of its coefficients. Our bound is crude, but suffices for our purposes. We remark that anti-concentration bounds for polynomials in terms of their variance (and other moments) are a well-studied subject. For instance, the seminal paper \cite{carbery2001distributional} bounds the probability that a polynomial of random variables lies in an interval in terms of the variance (or other moments) of the polynomial. In contrast, we bound the variance based on the sum of magnitudes of the polynomial's coefficients.

\begin{lemma}[Polynomial anticoncentration]\label{lem:polynomial-anticoncentration}
For any integers $D,m>0$, there exists a constant $c > 0$ such that the following hold. For any polynomial $h : \R^m \to \R$ of the form:  $$h(\bz) = \sum_{\balpha \in \{0,\ldots,D\}^m} h_{\balpha} \prod_{l \in [m]} z_l^{\alpha_l}\, ,$$ we have
$$\E_{\bu \sim \Unif([-1,1]^{\otimes m})}[h(\bu)^2] \geq c \left(\sum_{\balpha} |h_{\balpha}|\right)^2\, .$$
\end{lemma}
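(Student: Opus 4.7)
The plan is to use a standard norm-equivalence argument on a finite-dimensional vector space. Let $V$ denote the space of polynomials $h : \R^m \to \R$ whose individual degrees in each variable are at most $D$. Then $V$ is a real vector space of dimension $(D+1)^m$, and every $h \in V$ is uniquely determined by its coefficient vector $(h_{\balpha})_{\balpha \in \{0,\ldots,D\}^m}$.

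On $V$, I would define two maps: $N_1(h) = \sum_{\balpha} |h_{\balpha}|$ and $N_2(h) = \E_{\bu \sim \Unif([-1,1]^{\otimes m})}[h(\bu)^2]^{1/2}$. The first is immediately a norm (it is the $\ell^1$ norm of the coordinates in the monomial basis). For $N_2$, the only nontrivial point to check is positive-definiteness: if $N_2(h) = 0$ then $h$ vanishes almost everywhere on $[-1,1]^m$, and since $h$ is a polynomial on a set of positive Lebesgue measure this forces $h \equiv 0$ as an element of $V$. Homogeneity and the triangle inequality (the latter by Minkowski) are immediate, so $N_2$ is also a norm on $V$.

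Because $V$ is finite-dimensional, any two norms on $V$ are equivalent. Concretely, I would consider the infimum
\begin{equation*}
c := \inf \{ N_2(h)^2 : h \in V,\ N_1(h) = 1 \}\, .
\end{equation*}
The unit $N_1$-sphere $\{h \in V : N_1(h) = 1\}$ is compact in $V$ (closed and bounded in a finite-dimensional normed space), and $N_2^2$ is continuous on $V$, so the infimum is attained. Since $N_2$ is a norm, the minimizer has $N_2 > 0$, hence $c > 0$. By homogeneity ($N_1$ and $N_2$ are both $1$-homogeneous), this gives $N_2(h)^2 \geq c\, N_1(h)^2$ for every $h \in V$, which is exactly the claim.

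There is no real obstacle here; the only thing to be careful about is making sure that $N_2$ is genuinely a norm (not merely a seminorm), which is why the uniform measure on $[-1,1]^m$ is used: it has full support in $[-1,1]^m$, a set of positive Lebesgue measure, so polynomials vanishing on it vanish identically. If one wanted a fully explicit constant $c = c(D,m)$, the cleanest route would be to expand $h$ in the tensor-product Legendre basis on $[-1,1]^m$ (which is $L^2$-orthogonal under $\Unif([-1,1]^{\otimes m})$), use Parseval to write $N_2(h)^2$ as a weighted sum of squared Legendre coefficients, and then apply Cauchy--Schwarz together with the explicit change-of-basis matrix from monomials to Legendre polynomials; but the compactness argument above already suffices for the statement as written.
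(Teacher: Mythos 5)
Your proof is correct, but it takes a genuinely different route from the paper's. You use the soft, structural observation that $N_1(h)=\sum_{\balpha}|h_{\balpha}|$ and $N_2(h)=\E_{\bu}[h(\bu)^2]^{1/2}$ are both norms on the finite-dimensional space $V$ of polynomials of individual degree at most $D$, and then invoke the equivalence of norms (via compactness of the $N_1$-unit sphere and continuity of $N_2^2$). The one subtle point, which you correctly address, is that $N_2$ is a genuine norm and not merely a seminorm, because a polynomial vanishing a.e.\ on $[-1,1]^m$ vanishes identically. The paper instead proceeds fully explicitly: it expands $h$ in the tensor-product Legendre basis $\{P_{\balpha}\}$ orthonormal in $L^2(\Unif[-1,1]^{\otimes m})$, computes $\E_{\bu}[h(\bu)^2]=\sum_{\balpha} g_{\balpha}^2$ by orthogonality, upper-bounds $\sum_{\bbeta}|h_{\bbeta}|$ by $C\sum_{\balpha}|g_{\balpha}|$ using the finite change-of-basis matrix, and finishes with Cauchy--Schwarz to pass from $(\sum|g_{\balpha}|)^2$ to $(D+1)^m\sum g_{\balpha}^2$. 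The trade-off is exactly the one you name in your closing sentence: your argument is shorter and cleaner but purely existential (yields no explicit $c$), whereas the paper's Legendre computation is slightly longer but produces an effective constant depending only on $D$ and $m$ through the Legendre change-of-basis coefficients. Both are correct; for this lemma either suffices, since the paper only needs \emph{some} positive constant depending on $D,m$.
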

\begin{proof}
Define $M = \sum_{\balpha} |h_{\balpha}|$ and let $\bu \sim \Unif([-1,1]^{\otimes m})$.
Let us lower bound $\EE_{\bu}[h(\bu)^2]$. For this purpose, we decompose $h(\bz)$ in the multivariate Legendre polynomial basis, which is defined as follows. Let $P_l : \R \to \R$ denote the degree-$l$ Legendre polynomial in one-dimension. In particular, they satisfy the orthogonality relations: for any $k,l \in \Z_{\geq 0}$,
$$\EE_{u \sim \Unif([-1,1])}[P_k(u)P_l(u)] = \delta_{kl}.$$
For any $\balpha \in \Z_{\geq 0}^m$, the multivariate Legendre polynomial is then given by
$$P_{\balpha}(\bz) = \prod_{l \in [m]} P_{\alpha_l}(z_l).$$
These polynomials inherit the orthogonality relations over the multivariate uniform distribution, i.e., for any $\balpha,\bbeta \in \Z_{\geq 0}^m$,
$$\EE_{\bu}[P_{\balpha}(\bu)P_{\bbeta}(\bu)] = \delta_{\balpha\bbeta}.$$

The polynomials $\{P_{\balpha}\}_{\balpha \in \{0,\ldots,D\}^m}$ therefore form an orthonormal basis over the multivariate polynomials whose degree in each variable is bounded by $D$. Writing $h(\bz)$ in this basis, we get
$$h(\bz) = \sum_{\balpha \in \{0,\ldots,D\}^m} g_{\balpha} P_{\balpha}(\bz) \, ,$$
for some coefficients $g_{\balpha} \in \R$. For each multivariate Legendre polynomial we also write its expansion
$$P_{\balpha}(z) = \sum_{\bbeta \in \{0,\ldots,D\}^m} p_{\balpha,\bbeta} \prod_{l \in [m]} z_l^{\beta_l}\, ,$$
for some coefficients $p_{\balpha,\bbeta} \in \R$. Therefore, for any $\bbeta$, we have $$h_{\bbeta} = \sum_{\balpha \in \{0,\ldots,D\}^m} g_{\balpha} p_{\balpha,\bbeta}\, ,$$ and so
\begin{equation}\label{eq:M-upp-bound}
\begin{aligned}
M = \sum_{\bbeta} |h_{\bbeta}| \leq&~ \sum_{\bbeta,\balpha \in \{0,\ldots,D\}^m} |g_{\balpha} p_{\balpha,\bbeta}| \\
\leq&~ (D+1)^m \max_{\balpha',\bbeta' \in \{0,\ldots,D\}^m} |p_{\balpha',\bbeta'}| \cdot \sum_{\balpha} |g_{\balpha}| \leq C \sum_{\balpha} |g_{\balpha}|,
\end{aligned}
\end{equation}
for some constant $0 < C < \infty$ depending on $m,D$.

Therefore,
\begin{align*}\EE_{\bu}[h(\bu)^2] &= \sum_{\balpha,\balpha' \in \{0,\ldots,D\}^m} \EE_{\bu}[g_{\balpha} g_{\balpha'} P_{\balpha}(\bu)P_{\balpha'}(\bu)] &\mbox{(linearity of expectation)}\\
&= \sum_{\balpha \in \{0,\ldots,D\}^m} (g_{\balpha})^2 &\mbox{(orthogonality relations)} \\
&\geq \frac{1}{(D+1)^m}\left(\sum_{\balpha \in \{0,\ldots,D\}^m} |g_{\balpha}|\right)^2 &\mbox{(Cauchy-Schwarz)} \\
&\geq \frac{1}{(D+1)^m} (M/C)^2 &\mbox{(by \cref{eq:M-upp-bound})}
\end{align*}

\end{proof}

We will also use the following corollary:
\begin{lemma}[Polynomial anticoncentration for shifted input distribution]\label{lem:not-shifted-polynomial-anticoncentration}
For any integers $m,D >0$ and constant $C >0$, there exists a constant $c > 0$ such that the following hold. For any $0 < \rho < 1$, any $\bw \in \R^m$ with $\|\bw\|_{\infty} \leq C$, and any polynomial $h : \R^m \to \R$ of the form
$$h(\bz) = \sum_{\balpha \in \{0,\ldots,D\}^m} h_{\balpha} \prod_{l \in [m]} z_l^{\alpha_l}\, ,$$ we have, writing $\|\balpha\|_1 = \sum_{i \in [m]} \alpha_i$,
$$\E_{\bu \sim \Unif([-\rho,\rho]^{\otimes m})}[h(\bw + \bu)^2] \geq c\left(\sum_{\balpha} |h_{\balpha}| \rho^{\|\balpha\|_1}\right)^2.$$
\end{lemma}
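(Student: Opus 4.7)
The natural strategy is to reduce the shifted/rescaled statement to the unshifted Lemma~\ref{lem:polynomial-anticoncentration} via a linear change of variables. I would substitute $\bu = \rho \bv$ with $\bv \sim \Unif([-1,1]^{\otimes m})$, and introduce the polynomial $\tilde h(\bv) := h(\bw + \rho \bv)$, which has individual degree at most $D$ in each of the $m$ coordinates. Then
\[
\E_{\bu \sim \Unif([-\rho,\rho]^{\otimes m})}[h(\bw+\bu)^2] \;=\; \E_{\bv \sim \Unif([-1,1]^{\otimes m})}[\tilde h(\bv)^2] \;\ge\; c_0\Big(\sum_{\bbeta \in \{0,\dots,D\}^m}|\tilde h_{\bbeta}|\Big)^2,
\]
where the last inequality is Lemma~\ref{lem:polynomial-anticoncentration} applied to $\tilde h$, with constant $c_0=c_0(D,m)>0$.

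A direct multinomial expansion of $(w_l+\rho v_l)^{\alpha_l}$ gives the coefficients of $\tilde h$ in closed form:
\[
\tilde h_{\bbeta} \;=\; \rho^{\|\bbeta\|_1}\sum_{\balpha \ge \bbeta}\binom{\balpha}{\bbeta}\,\bw^{\balpha - \bbeta}\,h_{\balpha},
\]
so the proof reduces to the purely algebraic comparison
\[
\sum_{\bbeta}|\tilde h_{\bbeta}| \;\ge\; c_1\sum_{\balpha}|h_{\balpha}|\,\rho^{\|\balpha\|_1},
\]
with $c_1=c_1(C,D,m)>0$ uniform in $\rho\in(0,1)$ and $\|\bw\|_{\infty}\le C$. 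A convenient way to attack this is via the rescaling $\hat h_{\balpha}:=\rho^{\|\balpha\|_1}h_{\balpha}$, which are precisely the monomial coefficients of $H(\bxi):=h(\rho\bxi)$, so that $\tilde h(\bv)=H(\bw/\rho+\bv)$; the desired inequality becomes a norm-equivalence-type statement relating the $\ell^1$-norm of the monomial coefficients of $H$ to those of its shift.

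I expect this algebraic comparison to be the main obstacle, because $\|\bw/\rho\|_{\infty}$ is unbounded as $\rho\downarrow 0$, so naive finite-dimensional norm equivalence gives a constant blowing up with $\rho^{-1}$. The plan to overcome it is to start from the triangular inversion
\[
h_{\balpha} \;=\; \sum_{\bbeta \ge \balpha}\binom{\bbeta}{\balpha}(-\bw)^{\bbeta-\balpha}\rho^{-\|\bbeta\|_1}\tilde h_{\bbeta},
\]
multiply by $\rho^{\|\balpha\|_1}$, and exploit the fact that the factor $\rho^{-\|\bbeta\|_1}$ produced by the inversion is paired with a matching $\rho^{\|\balpha\|_1}$ coming from the weighting, leaving behind only powers of $\bw$ (bounded by $C^{\|\bbeta-\balpha\|_1}\le C^{mD}$) and binomial coefficients (bounded in terms of $D,m$); summing carefully should then yield the constant $c_1(C,D,m)$. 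If the combinatorics of this pairing fails to give a $\rho$-uniform bound, the same reduction produces instead the cleanest correct statement, with the right-hand side replaced by the Taylor coefficients $h^{(\balpha)}(\bw)/\balpha!$: in that case $\tilde h_{\bbeta}=\rho^{\|\bbeta\|_1}h^{(\bbeta)}(\bw)/\bbeta!$ and the bound $\E_{\bu}[h(\bw+\bu)^2]\ge c_0\big(\sum_{\bbeta}\rho^{\|\bbeta\|_1}|h^{(\bbeta)}(\bw)|/\bbeta!\big)^2$ follows with no further work.
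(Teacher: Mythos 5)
You follow the same reduction as the paper: substitute $\bu = \rho\bv$, apply Lemma~\ref{lem:polynomial-anticoncentration} to $\tilde h(\bv) := h(\bw + \rho\bv)$, and then try to lower-bound $\sum_{\bbeta}|\tilde h_{\bbeta}|$ by $c_1\sum_{\balpha}|h_{\balpha}|\rho^{\|\balpha\|_1}$. Your algebra is correct, and your suspicion that this last comparison fails to hold with a $\rho$-uniform constant is also correct: \emph{the lemma is false as stated} for $\bw \neq \bzero$.

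A one-dimensional counterexample: take $m = D = C = 1$, $w = 1$, $h(z) = 1 - z$. Then $\sum_{\alpha}|h_{\alpha}|\rho^{\alpha} = 1 + \rho$, while $h(w + u) = -u$ gives $\E_{u \sim \Unif[-\rho,\rho]}[h(w + u)^2] = \rho^2/3$. No $c > 0$ independent of $\rho$ can satisfy $\rho^2/3 \geq c(1+\rho)^2$ for all $0 < \rho < 1$. The mechanism is exactly the one you flagged: in the inversion $h_{\balpha} = \sum_{\bbeta \geq \balpha} \binom{\bbeta}{\balpha}(-\bw)^{\bbeta - \balpha}\rho^{-\|\bbeta\|_1}\tilde h_{\bbeta}$, multiplying through by $\rho^{\|\balpha\|_1}$ leaves a factor $\rho^{\|\balpha\|_1 - \|\bbeta\|_1}$ with $\bbeta \geq \balpha$, which diverges as $\rho \downarrow 0$; the $\rho$-powers point the wrong way. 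The paper's own proof contains the corresponding error: the final display pairs $|q_{\balpha,\bbeta}|$ with $\rho^{\|\balpha\|_1}$ (the index of the inner sum), but the weight that actually belongs in the outer sum over $\bbeta$ is $\rho^{\|\bbeta\|_1}$. Since $q_{\balpha,\bbeta} \neq 0$ forces $\bbeta \leq \balpha$ and $\rho < 1$, swapping $\rho^{\|\bbeta\|_1}$ for $\rho^{\|\balpha\|_1}$ decreases the quantity and is therefore not a valid step in an upper bound; the true quantity $|q_{\balpha,\bbeta}|\rho^{\|\bbeta\|_1}$ scales like $\rho^{\|\bbeta\|_1 - \|\balpha\|_1}$ and is unbounded.

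Your proposed salvage is the right one. When $\bw = \bzero$ the lemma holds trivially, since then $\tilde h_{\bbeta} = \rho^{\|\bbeta\|_1} h_{\bbeta}$ exactly, and that is the only case the paper actually needs: Lemma~\ref{lem:not-shifted-polynomial-anticoncentration} is invoked once, in the proof of Lemma~\ref{lem:poly-anticoncentration-and-bar-hat-comparison}, with input $\eta\ba$ for $\ba \sim \Unif[-1,1]^{\otimes 2^P}$, i.e.\ $\bw = \bzero$ and $\rho = \eta$, so the error does not propagate. For general $\bw$, the correct statement you identified, $\E_{\bu}[h(\bw + \bu)^2] \geq c_0 \bigl(\sum_{\bbeta}\rho^{\|\bbeta\|_1} |h^{(\bbeta)}(\bw)|/\bbeta!\bigr)^2$, follows immediately from Lemma~\ref{lem:polynomial-anticoncentration} applied to $\tilde h$, since $\tilde h_{\bbeta} = \rho^{\|\bbeta\|_1} h^{(\bbeta)}(\bw)/\bbeta!$.
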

\begin{proof}
Fix $0 < \rho < 1$ and $\bw \in \R^m$ such that $\|\bw\|_{\infty} \leq C$.
Write $$g(\bz) = h(\bw + \rho \bz) = \sum_{\balpha \in \{0,\ldots,D\}^m} g_{\balpha} \prod_{l \in [m]} (z_l)^{\alpha_l}.$$
Then
$$h(\bz) = g((\bz - \bw)/\rho) = \sum_{\balpha} \sum_{\bbeta} g_{\balpha} q_{\balpha,\bbeta} \prod_{l \in [m]} (z_l)^{\beta_l},$$
where we denoted
$$q_{\balpha}(\bz) = \prod_{l \in [m]} ((z_l - w_l)/\rho)^{\alpha_l} = \sum_{\bbeta} q_{\balpha,\bbeta} \prod_{l \in [m]} (z_l)^{\beta_l}.$$

We have the following easy bound $$|q_{\balpha,\bbeta}| \leq \left((C+1)/\rho\right)^{\|\balpha\|_1}\, ,$$
so that we have the upper bound \begin{align*}\sum_{\balpha} |h_{\balpha}| \rho^{\|\balpha\|_1} =&~ \sum_{\bbeta} |\sum_{\balpha} g_{\balpha} q_{\balpha,\bbeta}| \rho^{\|\balpha\|_1}\\
\leq&~ \sum_{\bbeta} \sum_{\balpha} |g_{\balpha}| |q_{\balpha,\bbeta}| \rho^{\|\balpha\|_1} \leq (C+1)^{m D}(D+1)^m \sum_{\balpha} |g_{\balpha}|\, . \end{align*}
  We deduce that 
  $$\sum_{\balpha} |g_{\balpha}| \geq \sum_{\balpha} |h_{\balpha}| \rho^{\|\balpha\|_1} (C+1)^{-m D} (D+1)^{-m}.$$
The lemma follows by noting that $ h(\bw + \bu)$, where 
$\bu \sim \Unif([-\rho,\rho])$, is equal in distribution to $ g(\bv)$, where $\bv \sim \Unif([-1,1])$, and applying Lemma~\ref{lem:polynomial-anticoncentration}.
\end{proof}

\end{document}